\DeclareSymbolFont{rsfs}{U}{rsfs}{m}{n}
\DeclareSymbolFontAlphabet{\mathscrsfs}{rsfs}
\def\cuP{\mathscrsfs{P}}
\def\cuP{\mathscrsfs{P}}
\def\boldf{{\boldsymbol f}}
\def\beps{{\boldsymbol \eps}}
\def\bs{{\boldsymbol s}}
\def\bl{{\boldsymbol l}}
\def\bm{{\boldsymbol m}}
\def\bC{{\boldsymbol C}}
\def\bell{{\boldsymbol \ell}}
\def\bsh{{\boldsymbol h}}
\def\bPsi{{\boldsymbol \Psi}}
\def\bS{{\boldsymbol S}}
\def\bX{{\boldsymbol X}}
\def\bE{{\boldsymbol E}}
\def\bY{{\boldsymbol Y}}
\def\bF{{\boldsymbol F}}
\def\bkappa{{\boldsymbol \kappa}}
\def\blambda{{\boldsymbol \lambda}}
\def\bG{{\boldsymbol G}}
\def\bu{{\boldsymbol u}}
\def\bw{{\boldsymbol w}}
\def\bd{{\boldsymbol d}}
\def\bx{{\boldsymbol x}}
\def\btau{{\boldsymbol \tau}}
\def\by{{\boldsymbol y}}
\def\bW{{\boldsymbol W}}
\def\bR{{\boldsymbol R}}
\def\ba{{\boldsymbol a}}
\def\bU{{\boldsymbol U}}
\def\bT{{\boldsymbol T}}
\def\bV{{\boldsymbol V}}
\def\bM{{\boldsymbol M}}
\def\bZ{{\boldsymbol Z}}
\def\ba{{\boldsymbol a}}
\def\bi{{\boldsymbol i}}
\def\bj{{\boldsymbol j}}
\def\bk{{\boldsymbol k}}
\def\bbeta{{\boldsymbol \beta}}
\def\bDelta{{\boldsymbol \Delta}}
\def\bB{{\boldsymbol B}}
\def\bA{{\boldsymbol A}}
\def\be{{\boldsymbol e}}
\def\bu{{\boldsymbol u}}
\def\ba{{\boldsymbol a}}
\def\bA{{\boldsymbol A}}
\def\bD{{\boldsymbol D}}
\def\bb{{\boldsymbol b}}
\def\bv{{\boldsymbol v}}
\def\bxi{{\boldsymbol \xi}}
\def\btheta{{\boldsymbol \theta}}
\def\bTheta{{\boldsymbol \Theta}}
\def\balpha{{\boldsymbol \alpha}}
\def\bG{{\boldsymbol G}}
\def\bb{{\boldsymbol b}}
\def\boldf{{\boldsymbol f}}
\def\bh{{\boldsymbol h}}
\def\by{{\boldsymbol y}}
\def\bH{{\boldsymbol H}}
\def\bdelta{{\boldsymbol\delta}}
\def\bz{{\boldsymbol z}}
\def\bh{{\boldsymbol h}}
\def\bDelta{{\boldsymbol \Delta}}
\def\bt{{\boldsymbol t}}
\def\bm{{\boldsymbol m}}
\def\bl{{\boldsymbol l}}
\def\bone{{\boldsymbol 1}}
\def\btwo{{\boldsymbol 2}}
\def\bp{{\boldsymbol p}}
\def\bq{{\boldsymbol q}}
\def\bone{{\boldsymbol 1}}
\def\obtheta{\overline {\boldsymbol \theta}}
\def\obTheta{\overline {\boldsymbol \Theta}}
\def\obx{\overline {\boldsymbol x}}
\def\oby{\overline {\boldsymbol y}}
\def\of{\overline f}
\def\ox{\overline x}
\def\ha{\hat{a}}
\def\hba{\hat{\boldsymbol a}}
\def\K{\mathbb{K}}
\def\T{\mathbb{T}}
\def\cL{\mathcal{L}}
\def\H{\mathbb{H}}
\def\cF{{\mathcal F}}
\def\cH{{\mathcal H}}
\def\cC{{\mathcal C}}
\def\cQ{{\mathcal Q}}
\def\cP{\mathcal{P}}
\def\cE{{\mathcal E}}
\def\cT{{\mathcal T}}
\def\cE{{\mathcal E}}
\def\cV{{\mathcal V}}
\def\cI{{\mathcal I}}
\def\cR{{\mathcal R}}
\def\cS{{\mathcal S}}
\def\posint{\mathbb{Z}_{\geq 0}}
\def\eff{{\rm eff}}
\def\de{{\rm d}}
\def\Unif{{\sf Unif}}
\def\normal{{\sf N}}
\def\PS{{\rm PS}}
\def\ps{{\rm ps}}
\def\hf{\hat{f}}
\def\tcT{\widetilde{\mathcal T}}
\def\sk{{\rm sk}}
\def\endd{{\rm end}}
\def\proj{{\mathsf P}}
\def\He{{\rm He}}
\def\seff{\mbox{\tiny\rm eff}}
\def\Trace{{\rm Tr}}
\def\diag{{\rm diag}}
\def\Coeff{{\rm Coeff}}
\def\bfone{{\boldsymbol 1}}
\def\M{{\sf M}}
\def\RF{{\sf RF}}
\def\NT{{\sf NT}}
\def\NN{{\sf NN}}
\def\reals{{\mathbb R}}
\def\integers{{\mathbb Z}}
\def\naturals{{\mathbb N}}
\def\de{{\rm d}}
\def\Unif{{\rm Unif}}
\def\proj{{\mathsf P}}
\def\He{{\rm He}}
\def\normal{{\sf N}}
\def\Coeff{{\rm Coeff}}
\def\diag{\text{{\rm diag}}}
\def\rp{\right)}
\def\lp{\left(}
\def\KR{{\rm KRR}}
\def\KRR{{\rm KRR}}
\def\RF{{\rm RF}}
\def\NT{{\rm NT}}
\def\NN{{\rm NN}}
\def\pq{{(q)}}
\def\pqp{{(q')}}
\def\pqq{{(qq)}}
\def\pqqp{{(qq')}}
\def\cl{{\rm cl}}
\def\cG{{\mathcal G}}
\def\cC{{\mathcal C}}
\def\cQ{{\mathcal Q}}
\colorlet{linkequation}{blue}
\newcommand\blfootnote[1]{%
  \begingroup
  \renewcommand\thefootnote{}\footnote{#1}%
  \addtocounter{footnote}{-1}%
  \endgroup
}
\begin{document}

\title{When Do Neural Networks Outperform Kernel Methods?}

\author{Behrooz Ghorbani\thanks{Department of Electrical Engineering,
    Stanford University}, \;\;Song Mei\thanks{Department of Statistics, University of California, Berkeley},\;\; Theodor Misiakiewicz\thanks{Department of
    Statistics, Stanford University}, \;\;
  Andrea Montanari\footnotemark[1] \footnotemark[3] \thanks{Google Research, Brain Team}}

\maketitle

\begin{abstract}
  For a certain scaling of the initialization of stochastic gradient
  descent (SGD), wide neural networks (NN) have been shown to be
  well approximated by reproducing kernel Hilbert space (RKHS)  methods.
  Recent empirical work showed that, for some classification tasks, RKHS methods
  can replace NNs without a large loss in performance.
  On the other hand, two-layers NNs are known to encode richer smoothness classes than RKHS
  and  we know of special examples for which SGD-trained NN provably outperform RKHS. This is true even in the
  wide network limit, for a different scaling of the  initialization.

  How can we reconcile the above claims? For which tasks do NNs outperform RKHS?
  If covariates are nearly isotropic, RKHS methods
  suffer from the curse of dimensionality, while NNs can
  overcome it by learning the best low-dimensional representation.
  Here we show that this curse of dimensionality becomes milder if
  the covariates display the same low-dimensional structure as the target function,
  and we precisely characterize this tradeoff. Building on these results, we present the spiked covariates model
   that can capture in a unified framework both behaviors observed in earlier work.

  We hypothesize that such a latent low-dimensional structure is present in
  image classification. We test numerically this hypothesis by showing that
  specific perturbations of the training distribution degrade the performances of RKHS methods
  much more significantly than NNs.
\end{abstract}

\tableofcontents

\section{Introduction}
In supervised learning  we are given data $\{(y_i,\bx_i)\}_{i\le n}\sim_{iid}\P\in \cuP(\reals\times \reals^d)$,
with $\bx_i\in\reals^d$ a covariate vector and $y_i\in\reals$ the corresponding label, and would like to learn a function
$f:\reals^d\to\reals$ to predict future labels. In many applications, state-of-the-art systems use multi-layer neural networks
(NN).
The simplest such model is provided by two-layers fully-connected networks:\blfootnote{The code used to produce our results can be accessed at \url{https://github.com/bGhorbani/linearized_neural_networks}.}
\begin{equation}\label{eq:NN}
  \cF_{\NN}^N := \Big\{ \hat f_{\NN}(\bx; \bb,\bW) = \sum_{i=1}^N b_i\sigma(\< \bw_i, \bx\> ): ~~
                b_i \in \R,\, \bw_i\in \R^d,\; \forall i \in [N] \Big\}\, .
  \end{equation}
 $\cF_{\NN}^N$ is a non-linearly parametrized class of functions: while nonlinearity poses a
  challenge to theoreticians, it is often claimed to be crucial in order to learn rich representation of the data.
  Recent efforts to understand NN  have put the spotlight on two linearizations of $\cF_{\NN}^N$,
  the random features \cite{rahimi2008random} and the neural tangent \cite{jacot2018neural}  classes 
\begin{equation}
  \cF_{\RF}^N(\bW) := \Big\{\, \hat f_{\RF}(\bx; \ba; \bW) = \sum_{i=1}^N a_i \sigma(\< \bw_i, \bx\> ): ~~ a_i \in \R, \forall i \in [N] \,\Big\}, \label{eqn:RF_model}
\end{equation}
\begin{equation}
\cF_{\NT}^N(\bW) := \Big\{ \, \hat f_{\NT}(\bx;  \bS, \bW) = \sum_{i=1}^N \<\bs_i , \bx\> \sigma' (\< \bw_i, \bx\>): ~~ \bs_i \in \R^d, \forall i \in [N] \,\Big\}\, .\label{eqn:NT_model}
\end{equation}
  $\cF_{\RF}^N(\bW)$ and   $\cF_{\NT}^N(\bW)$  are linear classes of functions,
  depending on the realization of the input-layer weights $\bW=(\bw_i)_{i\le N}$
  (which are chosen randomly).
  The relation between   NN and these two linear  classes is given by the first-order Taylor expansion: 
  $\hat f_{\NN}(\bx; \bb+\eps\ba,\bW+\eps\bS) -\hat f_{\NN}(\bx; \bb,\bW)  =
  \eps \hat f_{\RF}(\bx; \ba;  \bW) +\eps \hat f_{\NT}(\bx; \bS(\bb); \bW) +O(\eps^2)$, where $ \bS(\bb) = (b_i\bs_i)_{i\le N}$.
  A number of recent papers show that, if weights and SGD updates are suitably scaled, and the network is sufficiently
  wide ($N$ sufficiently large), then SGD converges to a function $\hat f_{\NN}$ that is approximately in
  $\cF_{\RF}^{N}(\bW)+\cF_{\NT}^{N}(\bW)$, with $\bW$ determined by the SGD initialization \cite{jacot2018neural,du2018gradient,du2019gradient,allen2018convergence,zou2018stochastic,oymak2020towards}. This was termed the `lazy regime' in \cite{chizat2019lazy}.

  Does this linear theory convincingly explain the successes of neural networks? Can the performances of NN
  be achieved by the simpler NT or RF models? Is there any fundamental difference between the two classes RF and NT?
  If the weights $(\bw_i)_{i\le N}$ are i.i.d. draws from a  distribution $\nu$ on $\reals^d$,
  the spaces  $\cF_{\RF}^N(\bW)$,  $\cF_{\NT}^N(\bW)$ can be thought as  finite-dimensional approximations
  of a certain RKHS:
  \begin{equation}
    \cH(h) := \cl\Big(\Big\{\,f(\bx) = \sum_{i=1}^Nc_i\, h(\bx,\bx_i):\;\; c_i\in\reals,\, \bx_i\in\reals^d,\,  N\in\naturals\,\Big\}\Big)\, ,
  \end{equation}
  where $\cl(\,\cdot\,)$ denotes closure. From this
  point of view, $\RF$ and $\NT$ differ in that they correspond to slightly different choices of the kernel:
  $h_{\RF}(\bx_1,\bx_2) := \int \sigma(\<\bw,\bx_1\>)
  \sigma(\<\bw,\bx_2\>)\nu(\de\bw)$ versus $h_{\NT}(\bx_1,\bx_2) := \<\bx_1,\bx_2\>
  \int \sigma'(\bw^{\sT}\bx_1) \sigma'(\bw^{\sT}\bx_2)\nu(\de\bw)$. 
  Multi-layer fully-connected $\NN$s in the lazy regime can be viewed as randomized approximations to RKHS as well,
  with some changes in the kernel $h$. 
This motivates analogous questions for $\cH(h)$: can the performances of $\NN$
  be achieved by RKHS methods? 

Recent work addressed the separation between NN and RKHS from several points of view, without providing a unified answer. Some empirical studies on various datasets showed that networks can be replaced by suitable kernels with limited drop in performances \cite{Arora2020Harnessing, li2019enhanced, lee2019wide, novak2019bayesian, lee2018deep, de2018gaussian, garriga2018deep, shankar2020neural}. At least two studies reported a larger gap for convolutional networks and the corresponding kernels \cite{arora2019exact, geiger2019disentangling}.
  On the other hand, theoretical analysis provided a number of separation examples, i.e. target functions $f_*$
  that can be  represented and possibly efficiently learnt using neural networks, but not in the corresponding RKHS
  \cite{yehudai2019power,bach2017breaking,ghorbani2019linearized,ghorbani2019limitations,allen2019can,allen2020backward}.
  For instance, if the target is a single neuron $f_*(\bx) = \sigma(\<\bw_*,\bx\>)$, then training a neural network with one
  hidden neuron learns the target efficiently from approximately $d\log d$ samples \cite{mei2018landscape},
  while the corresponding RKHS has test  error bounded
  away from zero for every sample size polynomial in $d$ \cite{yehudai2019power,ghorbani2019linearized}.
  Further even in the infinite width limit, it is known that two-layers neural networks can actually capture a richer class of functions than the
  associated RKHS, provided SGD training is scaled differently from the lazy regime \cite{mei2018mean,chizat2018global,rotskoff2018neural,sirignano2018mean,chizat2020implicit}.

  Can we reconcile empirical and theoretical results?

  \subsection{Overview}

  In this paper we introduce a stylized scenario -- which we will refer to as the spiked covariates model -- that can explain the above
  seemingly divergent observations in a unified framework. The spiked covariates model is based on two building blocks:
  $(1)$~Target functions depending on low-dimensional projections; $(2)$~Approximately low-dimensional covariates.

  \emph{$(1)$~Target functions depending on low-dimensional projections.}
  We investigate the hypothesis that NNs
  are more efficient at learning target functions that depend on low-dimensional projections of the data (the signal covariates).
  Formally, we consider target functions $f_*:\reals^d\to\reals$ of the form  $f_*(\bx) = \varphi(\bU^{\sT}\bx)$, where $\bU\in\reals^{d\times d_0}$
  is a semi-orthogonal matrix, $d_0\ll d$, and $\varphi:\reals^{d_0}\to\reals$ is a suitably smooth function. 
  This model captures an important property of certain applications. For instance, the labels in an image classification
  problem do not depend equally on the whole Fourier spectrum of the image, but predominantly on the low-frequency components.
  
  As for the example of a single neuron $f_*(\bx) = \sigma(\<\bw_*,\bx\>)$, we expect RKHS to suffer from a curse of dimensionality
  in learning functions of low-dimensional projections. Indeed, this is well understood in low dimension or
  for isotropic covariates \cite{bach2017breaking,ghorbani2019linearized}.

  \begin{figure}
    \centering
\includegraphics[width=\linewidth]{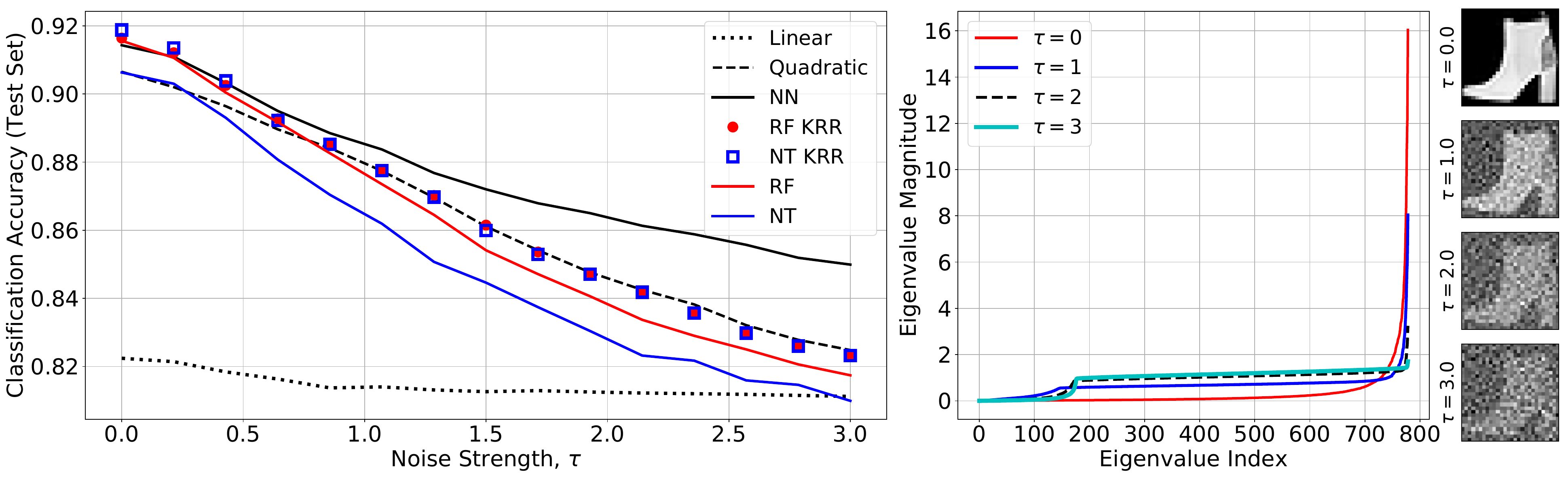}
    \caption{Test accuracy on FMNIST images perturbed by adding noise to the high-frequency Fourier components of the images (see examples on the right). Left: comparison of the accuracy of various methods
      as a function of the added noise. Center: eigenvalues of the empirical covariance of the images. As the noise increases, the images distribution becomes more isotropic.}\label{fig:FMNIST}
\end{figure}
  \emph{$(2)$~Approximately low-dimensional covariates}.  RKHS behave well on certain image classification tasks
  \cite{arora2019exact, li2019enhanced, novak2019bayesian}, and this seems to contradict the previous point. 
  However, the example of image classification naturally brings up another important property of real data that
  helps to clarify this puzzle.  Not only we expect the target function $f_*(\bx)$ to
  depend predominantly on the low-frequency components of image $\bx$, but the image $\bx$ itself to have most of its spectrum concentrated on
  low-frequency components (linear denoising algorithms exploit this very observation).
  
  More specifically, we consider the case in which $\bx=\bU\bz_1+\bU^{\perp}\bz_2$, where $\bU \in \reals^{d \times d_0}$, $\bU^\perp \in \reals^{d \times (d - d_0)}$, and $[\bU|\bU^{\perp}]\in\reals^{d\times d}$ is an orthogonal matrix. Moreover, we assume \linebreak[4] $\bz_1 \sim\Unif(\S^{d_0 - 1}(r_1\sqrt{d_0}))$, $\bz_2 \sim \Unif( \S^{d - d_0 - 1}(r_2\sqrt{d-d_0}))$, and $r_1^2\ge r_2^2$. We find that, 
  if $r_1/ r_2$ (which we will denote later as the covariates signal-to-noise ratio) is sufficiently large, then the curse of dimensionality becomes milder for RKHS methods. We characterize precisely how the performance of these methods depend on the covariate signal-to-noise ratio $r_1/r_2$, the signal dimension $d_0$, and the ambient dimension $d$. 

Notice that the spiked covariate model is highly stylized. For instance, while we expect real images to have a latent low-dimensional structure,
this is best modeled in a nonlinear fashion (e.g. sparsity in wavelet domain \cite{donoho1995adapting}).
Nevertheless the spiked covariate model captures the two basic mechanisms, and provides useful qualitative predictions. As an illustration, consider adding
noise to the high-frequency components of images in a classification task. This will make the distribution of $\bx$ more isotropic,
and --according to our theory-- deteriorate the performances of RKHS methods. On the other hand, NN should be less sensitive to this perturbation.
(Notice that noise is added both to train and test samples.)
In Figure \ref{fig:FMNIST} we carry out such an experiment using Fashion MNIST (FMNIST) data ($d=784$, $n=60000$, 10 classes).
We compare two-layers NN with the RF and NT models. We choose the architectures of NN, NT, RF
as to match the number of parameters: namely we used $N=4096$ for NN
and NT and $N=321126$ for RF. We also fit the corresponding RKHS models (corresponding to $N=\infty$)
using kernel ridge regression (KRR), and two simple polynomial models: $f_\ell(\bx) =\sum_{k=0}^\ell \<\bB_k,\bx^{\otimes k}\>$,
for $\ell \in \{1,2\}$. In the unperturbed dataset, all of these approaches have comparable accuracies (except the linear fit). As noise is added, RF, NT, and RKHS methods deteriorate rapidly. While the accuracy of NN decreases as well, it significantly outperforms other methods.

\subsection{Notations and outline}

Throughout the paper, we use bold lowercase letters $\{\bx, \by, \bz, \ldots\}$ to denote vectors and bold uppercase letters $\{\bA, \bB, \bC, \ldots\}$ to denote matrices. We denote by $\S^{d-1}(r) = \{ \bx \in \R^d: \| \bx \|_2 = r \}$ the set of $d$-dimensional vectors with radius $r$ and $\Unif(\S^{d-1}(r))$ be the uniform probability distribution on $\S^{d-1}(r)$. Further, we let $\normal(\mu, \tau^2)$ be the Gaussian distribution with mean $\mu$ and variance $\tau^2$. 

Let $O_d(\, \cdot \, )$  (respectively $o_d (\, \cdot \,)$, $\Omega_d(\, \cdot \,)$, $\omega_d(\, \cdot \,)$) denote the standard big-O (respectively little-o, big-Omega, little-omega) notation, where the subscript $d$ emphasizes the asymptotic variable. 
We denote by $o_{d,\P} (\, \cdot \,)$ the little-o in probability notation: $h_1 (d) = o_{d,\P} ( h_2(d) )$, if $h_1 (d) / h_2 (d)$ converges to $0$ in probability.

In section \ref{sec:Rigorous}, we introduce the spiked covariates model and characterize the performance of KRR, RF, NT, and NN models.
Section \ref{sec:Experiments} presents numerical experiments with real and synthetic data.  Section
\ref{sec:Discussion} discusses our results in the context of earlier work.

  \section{Rigorous results for kernel methods and NT, RF NN expansions}
  \label{sec:Rigorous}
  
  \subsection{The spiked covariates model}
  \label{sec:Model}
 
Let $d_0= \lfloor d^\eta \rfloor$ for some $\eta \in (0, 1)$. Let $\bU\in\reals^{d\times d_0}$ and $\bU^{\perp}\in\reals^{d\times (d-d_0)}$ be such that $[\bU|\bU^{\perp}]$ is an orthogonal matrix. We denote the subspace spanned by the columns of $\bU$ by $\cV \subseteq \R^d$ which we will refer to as the signal subspace, and the subspace spanned by the columns of $\bU^\perp$ by $\cV^\perp \subseteq \R^d$ which we will refer to as the noise subspace. In the case $\eta\in (0,1)$, the signal dimension $d_0 = \dim(\cV)$ is much smaller than the ambient dimension $d$. Our model for the covariate vector $\bx_i$ is
\[
\bx_{i} = \bU\bz_{0,i}+\bU^{\perp}\bz_{1,i}, \;\;\;\;\; ( \bz_{0,i} , \bz_{1,i}) \sim \Unif(\S^{d_0-1}(r\sqrt{d_0})) \otimes \Unif(\S^{d-d_0-1}(\sqrt{d-d_0})).
\]

We call $\bz_{0,i}$ the signal covariates, $\bz_{1,i}$ the noise covariates, and $r$ the covariates signal-to-noise ratio (or covariates SNR). We will take $r >1$, so that the variance of the signal covariates $\bz_{0,i}$ is larger than that of the noise covariates $\bz_{1,i}$. In high dimension, this model is --for many purposes-- similar to an anisotropic Gaussian model $\bx_i \sim \normal(0,(r^2-1)\bU\bU^{\sT}+\id)$. As shown below, the effect of anisotropy on RKHS methods is significant only if the covariate SNR $r$ is polynomially large in $d$. We shall therefore set $r=d^{\kappa/2}$ for a constant $\kappa>0$.

We are given i.i.d. pairs $(y_i,\bx_i)_{1 \le i \le n}$, where $y_i=f_*(\bx_i)+\eps_i$, and $\eps_i\sim\normal(0,\tau^2)$ is independent of
$\bx_i$. The function $f_*$ only depends on the projection of $\bx_i$ onto the signal subspace $\cV$ (i.e. on the signal covariates $\bz_{0,i}$): $f_*(\bx_i) = \varphi(\bU^{\sT}\bx_i)$, with $\varphi\in L^2(\S^{d_0-1}(r\sqrt{d_0}))$.

For the RF and NT models, we will assume that input layer weights to be
i.i.d. $\bw_i\sim\Unif(\S^{d-1}(1))$. For our purposes, this is essentially the same as 
$w_{ij} \sim \normal(0,1/d)$ independently, but slightly more convenient technically.

We will consider a more general model in  Appendix \ref{sec:GeneralApp}, in which the distribution of $\bx_i$
takes a more general product-of-uniforms form, and we assume a general $f_* \in L^2$.

\subsection{A sharp characterization of RKHS methods}

Given   $h: [-1, 1] \to  \reals$,  consider the rotationally invariant kernel $K_d(\bx_1,\bx_2) = h(\<\bx_1, \bx_2\>/d)$. This class includes the kernels that are obtained by taking the wide limit of the RF and NT models (here expectation is with respect to
$(G_1,G_2)\sim\normal(0,\id_2)$)
\begin{align*}
  h_{\RF}(t) := \E\{\sigma(G_1)\sigma(tG_1+\sqrt{1-t^2} G_2)\}\, ,\;\;\;
  h_{\NT}(t) := t\E\{\sigma'(G_1)\sigma'(tG_1+\sqrt{1-t^2} G_2)\}. 
\end{align*}
(These formulae correspond to $\bw_i\sim\normal(0,\id_d)$, but similar formulae hold for $\bw_i\sim\Unif(\S^{d-1}(\sqrt{d}))$.)
 This correspondence holds beyond two-layers networks: under i.i.d. Gaussian initialization, the NT kernel for an arbitrary number of fully-connected layers is rotationally invariant (see the proof of Proposition 2 of \cite{jacot2018neural}), and hence is covered by the present analysis.

Any RKHS method with kernel $h$ outputs a model of the form $\hf(\bx;\ba) = \sum_{i \le n} a_i h(\< \bx, \bx_i\>/d)$,
with RKHS norm given by $\|\hf(\,\cdot\,;\ba)\|_h^2 =\sum_{i,j\le n} h(\< \bx_i, \bx_j\>/d) a_ia_j$.
We consider kernel ridge regression (KRR) on the dataset $\{(y_i, \bx_i)\}_{i \le n}$ with regularization parameter $\lambda$, namely:
\begin{align*}
\hba(\lambda) := \arg\min_{\ba\in\reals^N} \Big\{\sum_{i=1}^n\big(y_i-\hf(\bx_i;\ba)\big)^2+ \lambda \|\hf(\,\cdot\,;\ba)\|_h^2\Big\} =(\bH + \lambda \id_n)^{-1} \by,
\end{align*}
where $\bH = (H_{ij})_{ij \in [n]}$, with $H_{ij} = h(\<\bx_i, \bx_j\>/d)$.
We denote the prediction error of KRR by 
\[
R_\KR(f_*, \lambda) = \E_\bx\Big[ \Big(f_*(\bx) - \by^\sT (\bH + \lambda \id_n)^{-1} \bh(\bx) \Big)^2 \Big],
\]
where $\bh(\bx) = (h(\< \bx, \bx_1\> / d), \ldots, h(\<\bx, \bx_n\> / d))^\sT$. 

Recall that we assume the target function $f_*(\bx_i) = \varphi(\bU^{\sT}\bx_i)$. We denote $\proj_{\le k}: L^2 \to L^2$ to be the projection operator onto the space of degree $k$ orthogonal polynomials, and $\proj_{> k} = \id - \proj_{\le k}$.
Our next theorem shows that the impact of the low-dimensional latent structure on the generalization error
of KRR is characterized by a certain \emph{`effective dimension'}, $d_{\seff}$.
\begin{theorem}\label{thm:bound_KRR}
Let $h \in C^{\infty}([-1, 1])$. Let $\ell \in \integers_{\ge 0}$ be a fixed integer. We assume that $h^{(k)} (0) > 0$ for all $k \leq \ell$, and assume that there exists a $k > \ell$ such that $h^{(k)} (0)>0$.  
(Recall that $h$ is positive semidefinite whence $h^{(k)}(0) \ge 0$ for all $k$.)

Define the effective dimension $d_{\seff} = \max\{ d_0, d / r^2 \} =
d^{\max(1-\kappa,\eta)}$. If $\omega_d(d_{\seff}^{\ell} \log(
d_{\seff} ) )\le n \le d_{\seff}^{\ell+1 - \delta}$ for some $\delta
>0$, then for any regularization parameter $\lambda = O_d(1)$, the
prediction error of KRR with kernel $h$ is 
\begin{align}
\Big|  R_{\KR }(f_*; \lambda) -\| \proj_{> \ell} f_* \|_{L^2}^2 \Big| \leq o_{d, \P}(1) \cdot ( \|  f_* \|_{L^2}^2 + \tau^2 )\, .
\end{align}
\end{theorem}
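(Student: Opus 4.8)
The plan is to establish a \emph{polynomial-regression equivalence} for KRR with the rotationally invariant kernel $K_d$, in the spirit of existing high-dimensional analyses of kernel methods \cite{ghorbani2019linearized}, but carried out over the \emph{product} of spheres $\S^{d_0-1}(r\sqrt{d_0})\times\S^{d-d_0-1}(\sqrt{d-d_0})$. First I would fix the natural orthogonal basis of $L^2$ of the product measure, $\bigoplus_{j_0,j_1\ge0}V^{(0)}_{j_0}\otimes V^{(1)}_{j_1}$, where $V^{(0)}_{j_0}$ and $V^{(1)}_{j_1}$ are the degree-$j_0$ and degree-$j_1$ spherical harmonics on the two spheres, of dimensions $B^{(0)}_{j_0}\asymp_d d_0^{j_0}$ and $B^{(1)}_{j_1}\asymp_d d^{j_1}$. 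Since $\langle\bx,\bx'\rangle/d=(r^2d_0/d)\,u+((d-d_0)/d)\,v$ where $u$ and $v$ are the normalized inner products on the two spheres, the kernel $K_d$ is diagonal in this basis: $K_d=\sum_{j_0,j_1}\mu_{j_0,j_1}\proj_{(j_0,j_1)}$.

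The second step is to determine the scaling of $\mu_{j_0,j_1}$ by Taylor-expanding $h$ at $0$ and using the Gegenbauer/Hermite expansions of $u^{k_0}$ and $v^{k_1}$ on the two spheres; for degrees with $h^{(j_0+j_1)}(0)>0$ this gives $\mu_{j_0,j_1}\asymp_d(r^2/d)^{j_0}d^{-j_1}=d^{-[(1-\kappa)j_0+j_1]}$. The hypotheses on $h$ supply positivity at all degrees $\le\ell$, and the assumption that some $h^{(k)}(0)>0$ with $k>\ell$ guarantees the higher-degree part of $K_d$ is genuinely present. The key bookkeeping is that $f_*$ only charges blocks with $j_1=0$; among the blocks whose eigenvalue is at least of order $\mu_{j_0,0}$ the ones relevant to $f_*$ are exactly $\{(j_0',0):j_0'\le j_0\}$, of total dimension $\asymp_d d_0^{j_0}=d^{\eta j_0}$; hence ``learning'' the block $(j_0,0)$ from $n$ samples requires both that its eigenvalue sit above the self-induced ridge, $n\gg\mu_{j_0,0}^{-1}\asymp_d d^{(1-\kappa)j_0}$, and that the empirical Gram on those harmonics be well conditioned, $n\gg d^{\eta j_0}$ up to the logarithmic factor coming from matrix Chernoff (the origin of the $\log d_{\seff}$). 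Both requirements are summarized by $n\gg d_{\seff}^{j_0}$ with $d_{\seff}=d^{\max(1-\kappa,\eta)}$, so the learned blocks are exactly $\{(j_0,0):j_0\le\ell\}$ precisely when $d_{\seff}^\ell\log d_{\seff}\ll n\ll d_{\seff}^{\ell+1}$.

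To make this rigorous I would prove a concentration statement for the empirical kernel matrix $\bH$: writing $\bH=\bPsi\bD\bPsi^{\sT}+\kappa_{\seff}\id_n+\bDelta$, with $\bPsi\in\reals^{n\times D}$ ($D\asymp_d d_{\seff}^\ell$) collecting the learned harmonics evaluated on the sample, $\bD=\diag(\mu_{j_0,0})$, $\kappa_{\seff}\asymp_d1$ the effective ridge produced by the unlearned blocks, and $\bDelta$ a remainder, one shows $\bPsi^{\sT}\bPsi$ is well conditioned (matrix Chernoff, using $n\gg d_{\seff}^\ell\log d_{\seff}$) and $\|\bDelta\|_{{\rm op}}=o_{d,\P}(1)$ (incoherence of the degree-$(\ell+1)$ harmonics, which forces $n\le d_{\seff}^{\ell+1-\delta}$; degrees $\ge\ell+2$ are smaller still, and the degree-$(\ell+1)$ block needs separate handling). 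Then inserting $\by=(f_*(\bx_i)+\eps_i)_{i\le n}$ into $\hba=(\bH+\lambda\id_n)^{-1}\by$ and doing the bias/variance split: the component of $f_*$ in the learned blocks, $\proj_{\le\ell}f_*$, is recovered up to $o_{d,\P}(1)\|f_*\|_{L^2}$ since the relevant Gram is well conditioned; the component $\proj_{>\ell}f_*$ lies in blocks where the predictor is $o_{d,\P}(1)$ in $L^2$, contributing exactly $\|\proj_{>\ell}f_*\|_{L^2}^2$; and the noise contributes variance $O_d(\tau^2 D/n)=o_d(\tau^2)$. This gives $\hat f_{\KRR}=\proj_{\le\ell}f_*+o_{d,\P}(1)$ in $L^2$, hence the two-sided bound.

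The hard part is the two high-dimensional estimates on the product sphere. One is a product-form hypercontractivity/Gegenbauer moment bound, uniform in $d$, needed both for the lower conditioning of $\bPsi^{\sT}\bPsi$ and for the operator-norm control of the degree-$>\ell$ remainder all the way up to $n=d_{\seff}^{\ell+1-\delta}$. The other is getting the eigenvalue asymptotics $\mu_{j_0,j_1}$ sharp and uniform in $(j_0,j_1)$, in particular checking that the two-variable Taylor expansion of $h$ at the origin is dominated by its leading monomial even when the anisotropy factor $r^2d_0/d$ is polynomially large in $d$ (the regime $\kappa>1-\eta$, where $d_{\seff}=d_0$ and the binding constraint for learnability switches from eigenvalue size to harmonic-space dimension). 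The rest — Gegenbauer identities, the bias/variance algebra, and specializing from the general product-of-uniforms model of Appendix~\ref{sec:GeneralApp} — is routine once these two inputs are available.
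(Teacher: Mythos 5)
Your proposal follows essentially the same route as the paper's proof: pass to the product of spheres, diagonalize the rotationally invariant kernel in the tensor spherical-harmonics basis, get the Gegenbauer eigenvalue scaling $\lambda_{(j_0,j_1)}\asymp d^{-(1-\kappa)j_0-j_1}$ by a Rodrigues/Taylor argument, decompose $\bH$ into the learned low-degree Gram part plus an order-one effective ridge plus a small remainder, and then run the bias/variance split on $\hba=(\bH+\lambda\id_n)^{-1}\by$. The paper does precisely this through Lemma~\ref{lem:bound_gegenbauer_KRR}, Lemma~\ref{lem:control_cQc_terms}, Lemma~\ref{lem:concentration_YY} (matrix Bernstein for $\bY_{\cQ}^\sT\bY_{\cQ}/n$), and the $T_1,\ldots,T_5$ decomposition in Section~\ref{sec:proof_RFK_bound}; the general tensor framework of Appendix~\ref{sec:GeneralApp} is the ``product-of-uniforms'' model you point to.

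The one place where your plan diverges from the paper is the technical input for the operator-norm control of the high-degree remainder all the way up to $n\le d_{\seff}^{\ell+1-\delta}$: you propose a ``product-form hypercontractivity/Gegenbauer moment bound,'' whereas the paper proves Proposition~\ref{prop:Delta_bound_aniso} and Corollary~\ref{coro:Delta_bound_aniso_unif} by the trace-moment (graph skeletonization) method inherited from \cite{ghorbani2019linearized}. Both routes aim to show $\|\bW_{\bk}-\id_n\|_{\op}=o_{d,\P}(1)$ uniformly over the tail degrees; the trace-moment method has the advantage of handling the $e^{A_d\sqrt{\log d}}$ log-correction sharply and of not requiring any hypercontractive estimates on the non-Gaussian product measure, which would themselves need to be re-derived on anisotropic spheres. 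So your overall architecture is right, and the two remaining ``hard parts'' you identify are exactly the two lemmas the paper isolates (Lemma~\ref{lem:bound_gegenbauer_KRR} for the eigenvalues, Corollary~\ref{coro:Delta_bound_aniso_unif} for the off-diagonal Gram), with only the proof strategy for the second differing.
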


Remarkably, the effective dimension $d_{\seff}=d^{\max(1-\kappa,\eta)}$ depends both on the signal dimension $\dim(\cV) = d^{\eta}$ and on the covariate SNR $r = d^{\kappa/2}$. Sample size $n = d_{\seff}^{\ell}$ is necessary to learn a degree $\ell$ polynomial. If we fix $\eta \in (0, 1)$ and take $\kappa = 0+$, we get $d_{\seff}\approx d$: this corresponds to almost isotropic $\bx_i$.
We thus recover  \cite[Theorem 4]{ghorbani2019linearized}.  If instead $\kappa > 1 - \eta$,
then most variance of $\bx_i$  falls in the signal subspace $\cV$, and we get $d_{\seff} = d^\eta = \dim(\cV)$: the test error
is effectively the same as if we had oracle knowledge of the signal subspace $\cV$
and performed KRR on signal covariates $\bz_{0, i} = \bU^\sT \bx_i$. Theorem \ref{thm:bound_KRR} describes
the transition between these two regimes.

\subsection{RF and NT models}

How do the results of the previous section generalize to finite-width approximations of the RKHS? In particular, how do the RF and NT models behave at finite $N$? In order to simplify the picture, we focus here on the approximation error. Equivalently, we assume the sample size to be $n=\infty$ and consider the minimum population risk for $\M \in \{ \RF, \NT\}$
\begin{align}
R_{\M, N}(f_*; \bW) := \inf_{\hat{f}\in\cF^N_{\M}(\bW)}\E\big\{\big[f_*(\bx)-\hat f(\bx)\big]^2\big\}\, .
\end{align}
The next two theorems characterize the asymptotics of the approximation error for RF and NT models. We give generalizations of these statements to other settings and under weaker assumptions in Appendix \ref{sec:GeneralApp}. 
\begin{theorem}[Approximation error for RF]\label{thm:RF}
  Assume $\sigma\in C^{\infty}(\reals)$, with $k$-th derivative  $\sigma^{(k)}(x)^2\le c_{0, k} e^{c_{1, k}x^2/2}$
  for some $c_{0, k} > 0$, $c_{1, k} < 1$, and  all $x\in\reals$ and all $k$. Define its $k$-th Hermite coefficient $\mu_k(\sigma):=\E_{G \sim \normal(0, 1)}[\sigma(G)\He_k(G)]$. Let $\ell\in\integers_{\ge 0}$ be a fixed integer, and assume $\mu_k(\sigma)\neq 0$ for all $k\le \ell$. 
Define $d_{\seff}= d^{\max(1-\kappa,\eta)}$. If $d_{\seff}^{\ell + \delta}\le N \le d_{\seff}^{\ell+1 - \delta}$ for some $\delta > 0$ independent of $N,d$, then
\begin{align}
\big| R_{\RF, N}(f_*;\bW) - \| \proj_{> \ell} f_* \|_{L^2}^2 \big| \leq o_{d, \P}(1) \cdot  \| \proj_{> \ell} f_* \|_{L^2}\|  f_* \|_{L^2}\, .
\end{align}
\end{theorem}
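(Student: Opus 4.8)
The plan is to transfer the harmonic–analysis strategy developed for isotropic covariates in \cite{ghorbani2019linearized} to the present product–of–spheres geometry; the new element is a careful bookkeeping of the random–features kernel spectrum across the two subspaces $\cV$ and $\cV^\perp$, which is what produces $d_{\seff}=d^{\max(1-\kappa,\eta)}$. First I would introduce the orthogonal decomposition $L^2=\bigoplus_{k_0,k_1\ge 0}V_{k_0,k_1}$, where $V_{k_0,k_1}$ is the tensor product of the degree-$k_0$ spherical harmonics on $\S^{d_0-1}(r\sqrt{d_0})$ with the degree-$k_1$ spherical harmonics on $\S^{d-d_0-1}(\sqrt{d-d_0})$. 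Since $f_*(\bx)=\varphi(\bU^\sT\bx)$ depends only on the $\cV$-component $\bz_0$ of $\bx$, it lies in $\bigoplus_{k_0}V_{k_0,0}$, and $\proj_{\le\ell}f_*$, $\proj_{>\ell}f_*$ are its projections onto $\bigoplus_{k_0\le\ell}V_{k_0,0}$ and $\bigoplus_{k_0>\ell}V_{k_0,0}$. Writing $\bw=\bU\bw_{0}+\bU^\perp\bw_{1}$ so that $\langle\bw,\bx\rangle=\langle\bw_{0},\bz_{0}\rangle+\langle\bw_{1},\bz_{1}\rangle$, with $\|\bw_{0}\|^2\approx d_0/d$, $\|\bw_{1}\|^2\approx 1$, I would compute the eigenvalue $\xi_{k_0,k_1}$ of the population feature operator $\K=\E_{\bw}[\sigma(\langle\bw,\cdot\rangle)\otimes\sigma(\langle\bw,\cdot\rangle)]$ on $V_{k_0,k_1}$ by expanding $\sigma$ in Hermite polynomials along $\cV^\perp$ and Gegenbauer polynomials along $\cV$ (this being carried out separately in the two regimes $\kappa<1-\eta$ and $\kappa>1-\eta$, where $\langle\bw_0,\bz_0\rangle$ is vanishing, resp. diverging). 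This yields that $V_{k_0,k_1}$ has ``effective degree'' $(1-\kappa)k_0+k_1$ while $\dim V_{k_0,k_1}\asymp d^{\eta k_0+k_1}$; a short linear-programming computation over $\{(1-\kappa)k_0+k_1\le\gamma\}$ then shows $\#\{(k_0,k_1):\xi_{k_0,k_1}\gtrsim\xi_{k,0}\}\asymp d_{\seff}^{\,k}$, i.e. $d_{\seff}$ is exactly the ``resolution per degree'' of $\cF_{\RF}^N(\bW)$.

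For the upper bound $R_{\RF,N}(f_*;\bW)\le\|\proj_{>\ell}f_*\|_{L^2}^2+o(1)$, I would fix a downward-closed collection $\Lambda$ of pairs containing $\{(k_0,0):k_0\le\ell\}$ with $\dim V_\Lambda:=\dim\bigoplus_{(k_0,k_1)\in\Lambda}V_{k_0,k_1}\le N^{1-\delta'}$, possible since $N\ge d_{\seff}^{\ell+\delta}$. A matrix–concentration argument — bounding moments of $\|\proj_{V_\Lambda}\sigma(\langle\bw,\cdot\rangle)\|_{L^2}$ via hypercontractivity on the product of spheres — shows $\tfrac1N\sum_i\proj_{V_\Lambda}\psi_i\otimes\proj_{V_\Lambda}\psi_i$ is $o(1)$-close in operator norm to $\K|_{V_\Lambda}$, whose least eigenvalue is $\gtrsim\xi_{\ell,0}>0$ (here we use $\mu_k(\sigma)\ne 0$ for $k\le\ell$). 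Hence $\proj_{\le\ell}f_*$ is realized by some $\hat f=\sum_i a_i\psi_i\in\cF_{\RF}^N(\bW)$ with $\|a\|_2^2\lesssim\|\proj_{\le\ell}f_*\|_{L^2}^2/(N\xi_{\ell,0})$, and the ``leakage'' $\sum_i a_i\proj_{V_\Lambda^\perp}\psi_i$ onto higher subspaces has expected squared norm $\asymp\|a\|_2^2\sum_{(k_0,k_1)\notin\Lambda}\xi_{k_0,k_1}\dim V_{k_0,k_1}=o(1)$, by the choice of $\Lambda$ and $N\le d_{\seff}^{\ell+1-\delta}$. Adding the unavoidable $\|\proj_{>\ell}f_*\|_{L^2}^2$ and bounding the cross-terms by Cauchy–Schwarz (producing the factor $\|\proj_{>\ell}f_*\|_{L^2}\|f_*\|_{L^2}$) gives the upper bound.

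For the lower bound $R_{\RF,N}(f_*;\bW)\ge\|\proj_{>\ell}f_*\|_{L^2}^2-o(1)$, I would use $R_{\RF,N}=\|f_*\|_{L^2}^2-v^\sT M^{+}v$ with $M_{ij}=\langle\psi_i,\psi_j\rangle$, $v_i=\langle\psi_i,f_*\rangle$, split $v=v^{\le\ell}+v^{>\ell}$ according to $\proj_{\le\ell}f_*,\proj_{>\ell}f_*$, bound $(v^{\le\ell})^\sT M^{+}v^{\le\ell}\le\|\proj_{\le\ell}f_*\|_{L^2}^2$, and show $(v^{>\ell})^\sT M^{+}v^{>\ell}=o(1)$. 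The latter says no $g\in\cF_{\RF}^N(\bW)$ correlates with $\proj_{>\ell}f_*$ by more than $o(1)\|g\|_{L^2}$; since $\E\|v^{>\ell}\|^2\le N\,\xi_{\ell+1,0}\,\|\proj_{>\ell}f_*\|_{L^2}^2$, it then suffices to control the conditioning of $M$ on the span of the high-effective-degree feature components, which follows from the same concentration estimates together with the spectral count $\#\{\xi_{k_0,k_1}\gtrsim\xi_{\ell+1,0}\}\asymp d_{\seff}^{\ell+1}\gg N$. The passage from expectations to the high-probability $o_{d,\P}(1)$ statement is handled by the concentration inequalities used along the way.

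The main obstacle is this last step in the regime $\kappa<1-\eta$, where $N$ can exceed $\dim V_{\ell+1,0}$: then the $N$ random features can already span the whole degree-$(\ell+1)$ harmonic subspace of $\cV$, so the naive dimension count (``$\proj_{>\ell}f_*$ lives in too high-dimensional a space to be captured'') is unavailable, and one must argue \emph{quantitatively} that synthesizing a degree-$(\ell+1)$ harmonic out of the polynomially small high-effective-degree components of the features forces the coefficients $\|a\|_2$ to be so large that the resulting function has prohibitive $L^2$ norm. Making this precise requires lower-bounding the smallest relevant eigenvalue of the random Gram matrix $M$ — matrix concentration sharp enough to reach well below the bulk of the spectrum — together with the spectral bookkeeping above. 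This is exactly the analysis systematized for the general model in Appendix~\ref{sec:GeneralApp}, from which Theorem~\ref{thm:RF} follows by specialization.
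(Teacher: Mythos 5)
Your plan is structurally the same as the paper's (Theorem~\ref{thm:RF_lower_upper_bound_aniso} together with the reduction at the end of Appendix~\ref{sec:GeneralApp}): a bi-graded spherical-harmonics decomposition $L^{2}=\bigoplus_{(k_0,k_1)}V_{k_0,k_1}$, an eigenvalue $\xi_{k_0,k_1}\asymp d^{-(1-\kappa)k_0-k_1}$ for the population feature operator, the identity $R_{\RF}=\|f_*\|^2-\bV^{\sT}\bU^{-1}\bV$ with the split $\bV=\bV_{\cQ}+\bV_{\cQ^{c}}$, an $\E\|\bV_{\cQ^{c}}\|^2$ bound (your Proposition is essentially the paper's Proposition~\ref{prop:bound_V_RFK_NU}), and a bounded inverse of $\bU$ (the paper's Proposition~\ref{prop:kernel_lower_bound_RFK_NU}).

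Two places where your execution differs are worth noting. For the upper bound you propose proving that the empirical feature covariance $\tfrac1N\sum_i\proj_{V_\Lambda}\psi_i\otimes\proj_{V_\Lambda}\psi_i$ concentrates in operator norm around $\K|_{V_\Lambda}$, and then solving for coefficients plus bounding a ``leakage'' term. That is feasible, but it requires hypercontractive moment bounds on $\|\proj_{V_\Lambda}\psi_i\|$ and a matrix-Bernstein argument, none of which the paper actually uses: in Section~\ref{sec:proof_RFK_upper_aniso} the coefficients are chosen \emph{explicitly} as $a_i^{*}=N^{-1}\alpha_{\btau_i}(\obtheta_i)$ with $\alpha_{\btau}=\K_{\btau,\btau}^{-1}\T_{\btau}\overline f$, the off-diagonal second-moment term vanishes \emph{exactly} by the operator identity $\T^{*}\K^{-1}\K\K^{-1}\T=\id|_{V_\cQ}$, and only the diagonal contribution $N^{-1}\E[\alpha_\btau(\obtheta)^2 K_{\btau,\btau}(\obtheta,\obtheta)]\lesssim d^{\gamma}/N$ needs to be bounded, after which Markov's inequality closes the argument. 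This is cleaner and avoids any empirical-covariance concentration. Second, your description of the hard step of the lower bound (``control the conditioning of $M$ on the span of the high-effective-degree feature components'') hints at the right obstacle but is vaguer than what the paper actually does. In Proposition~\ref{prop:kernel_lower_bound_RFK_NU} one fixes a \emph{single} Gegenbauer degree $m>\gamma/\eta_{q_\xi}$ on the dominant sphere with $\mu_m(\sigma)\neq 0$, writes $\bU=\hat\bU+\bar\bU$ with $\hat\bU\succeq 0$ and $\bar\bU$ living entirely in that one Gegenbauer channel, and uses Proposition~\ref{prop:Delta_bound} (the $N\ll d_1^{m-\delta}$ Gram-concentration bound for a single Gegenbauer polynomial) to conclude $\bar\bU\approx c\,\id_N$, hence $\lambda_{\min}(\bU)\gtrsim 1$. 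This one-degree isolation is the device you were reaching for with your ``spectral count'' remark; stated precisely, it is the crux of the lower bound in the regime $\kappa<1-\eta$ where $N$ may overwhelm $\dim V_{\ell+1,0}$.
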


\begin{theorem}[Approximation error for NT]\label{thm:NT}
Assume $\sigma\in C^{\infty}(\reals)$, with $k$-th derivative $\sigma^{(k)}(x)^2\le c_{0, k} e^{c_{1, k} x^2/2}$, for   some $c_{0, k} > 0$, $c_{1, k} < 1$, and all $x\in\reals$ and all $k$. Let $\ell \in \integers_{\ge 0}$, and assume $\mu_k(\sigma)\neq 0$ for all $k\le \ell+1$. Further assume that, for all $L\in\integers_{\ge 0}$, there exist $k_1,k_2$ with $L<k_1< k_2$, such that $\mu_{k_1}(\sigma') \neq 0$, $\mu_{k_2}(\sigma') \neq 0$, and $\mu_{k_1}(x^2\sigma')/\mu_{k_1}(\sigma')\neq \mu_{k_2}(x^2\sigma')/\mu_{k_2}(\sigma')$. 
Define $d_{\seff}= d^{\max(1-\kappa,\eta)}$. If $d_{\seff}^{\ell + \delta}\le N\le d_{\seff}^{\ell+1 - \delta}$ for some $\delta > 0$ independent of $N,d$, then 
\begin{align}
\big| R_{\NT, N}(f_*;\bW) - \| \proj_{> \ell + 1} f_* \|_{L^2}^2 \big| \leq o_{d, \P}(1) \cdot \| \proj_{> \ell + 1} f_* \|_{L^2}\|  f_* \|_{L^2}\, .
\end{align}
\end{theorem}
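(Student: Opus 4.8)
The plan is to reduce the computation of the NT approximation error to harmonic analysis on the product of spheres $\S^{d_0-1}(r\sqrt{d_0})\times\S^{d-d_0-1}(\sqrt{d-d_0})$ on which $(\bz_0,\bz_1)$ lives, and then to control the span of the $N$ random NT features degree by degree. Decomposing $L^2$ of the product sphere into tensor products of spherical harmonics on the two factors, and using that $f_*(\bx)=\varphi(\bz_0)$ depends only on harmonics of the first (radius $r\sqrt{d_0}$) factor, write $f_*=\proj_{\le\ell+1}f_*+\proj_{>\ell+1}f_*$ as in the statement. Since $R_{\NT,N}(f_*;\bW)=\|f_*\|_{L^2}^2-\|\proj_{\cF_{\NT}^N(\bW)}f_*\|_{L^2}^2$, it suffices to prove the two complementary estimates: \textbf{(A)} $\big\|\proj_{\cF_{\NT}^N(\bW)}\proj_{\le\ell+1}f_*-\proj_{\le\ell+1}f_*\big\|_{L^2}=o_{d,\P}(1)\,\|f_*\|_{L^2}$, i.e.\ the NT span approximately contains the degree-$\le\ell+1$ content of $f_*$; and \textbf{(B)} $\big\|\proj_{\cF_{\NT}^N(\bW)}\proj_{>\ell+1}f_*\big\|_{L^2}=o_{d,\P}(1)\,\|\proj_{>\ell+1}f_*\|_{L^2}$, i.e.\ the NT span is nearly orthogonal to the degree-$>\ell+1$ content of $f_*$. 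Expanding $\|f_*-\proj_{\cF_{\NT}^N(\bW)}f_*\|_{L^2}^2$ by harmonic degree and applying Cauchy--Schwarz to the cross terms then yields the stated inequality.

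For \textbf{(A)}, the mechanism is that the NT feature $\langle\bs,\bx\rangle\,\sigma'(\langle\bw,\bx\rangle)$ carries one more usable degree than the RF feature $\sigma(\langle\bw,\bx\rangle)$ of Theorem~\ref{thm:RF}: writing $\langle\bs,\bx\rangle=\sum_j s_j x_j$ and expanding $\sigma'(\langle\bw,\bx\rangle)$ in Gegenbauer polynomials of $\langle\bw,\bx\rangle$, multiplication by the degree-one factor $x_j$ shifts harmonic degree by $\pm1$, so $N$ random NT features reach degree $\ell+1$ precisely where $N$ random RF features reach degree $\ell$ --- hence the residual $\proj_{>\ell+1}f_*$ rather than $\proj_{>\ell}f_*$. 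To turn this into a proof I would: (i) analyze the population second-moment operator of the $\reals^d$-valued map $\bx\mapsto\bx\,\sigma'(\langle\bw,\bx\rangle)$ restricted to harmonic degrees $\le\ell+1$, and lower bound its action on the directions producing degree-$\le\ell+1$ harmonics of $\bz_0$; this is where all the Hermite hypotheses enter --- $\mu_k(\sigma)\neq0$ for $k\le\ell+1$ and $\mu_k(\sigma')\neq0$ keep the equal-degree contributions nonzero, while the non-proportionality condition $\mu_{k_1}(x^2\sigma')/\mu_{k_1}(\sigma')\neq\mu_{k_2}(x^2\sigma')/\mu_{k_2}(\sigma')$ is exactly what rules out the degeneracy in which the new degree-$(\ell+1)$ directions created by the $x_j$-multiplication collapse onto a strictly smaller subspace; (ii) show the $N$-sample empirical second-moment operator concentrates around this population object on the relevant subspace, via a trace/moment computation together with the Gegenbauer asymptotics on the product sphere under the anisotropic scaling $r=d^{\kappa/2}$, $d_0=d^\eta$ --- this is the computation that produces the exponent $\max(1-\kappa,\eta)$ in $d_{\seff}$, and it needs $N\ge d_{\seff}^{\ell+\delta}$; and (iii) bound the degree-$>\ell+1$ leakage of the particular combination of features realizing $\proj_{\le\ell+1}f_*$, which is negligible because $N\le d_{\seff}^{\ell+1-\delta}$ keeps the degree-$(\ell+1)$-and-higher Gegenbauer mass of the features below the resolution threshold.

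For \textbf{(B)}, the argument is driven by the randomness of $\bW$. The projected features $\{\proj_{>\ell+1}(x_j\sigma'(\langle\bw_i,\cdot\rangle))\}_{i\le N,\,j\le d}$ span a subspace that, while not low-dimensional, is generated by only $N$ independent draws $\bw_i$ and is therefore ``spread out'' inside the much larger degree-$>\ell+1$ part of $L^2$; since $f_*$ is fixed, a second-moment computation over $\bW$ bounds $\E_{\bW}\big\|\proj_{\cF_{\NT}^N(\bW)}\proj_{>\ell+1}f_*\big\|_{L^2}^2$ by $o(1)\,\|\proj_{>\ell+1}f_*\|_{L^2}^2$ --- the crucial input being that for $N\le d_{\seff}^{\ell+1-\delta}$ the $N$ random (structured) features cannot collectively align with a fixed degree-$>\ell+1$ function --- and a concentration-in-$\bW$ step upgrades this to the high-probability statement \textbf{(B)}.

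I expect the main obstacle to be step (ii) of \textbf{(A)}: because the feature map $\bx\mapsto\bx\,\sigma'(\langle\bw,\bx\rangle)$ is $\reals^d$-valued, the relevant second-moment object is an operator on vector-valued functions, and establishing both its spectral gap on the target subspace and the concentration of its empirical counterpart over the \emph{anisotropic} product-of-spheres model with a polynomially large radius $r=d^{\kappa/2}$ requires substantially more delicate Gegenbauer/harmonic-expansion estimates than in the isotropic RF case --- indeed this is exactly the step that outputs the exponent $\max(1-\kappa,\eta)$. A second genuine difficulty, specific to NT, is checking that the stated Hermite conditions force the population operator to be strictly positive on \emph{all} of the degree-$\le\ell+1$ target subspace rather than only on a codimension-$O(1)$ piece: this is the role of the non-proportionality hypothesis $\mu_{k_1}(x^2\sigma')/\mu_{k_1}(\sigma')\neq\mu_{k_2}(x^2\sigma')/\mu_{k_2}(\sigma')$, and handling it cleanly seems to require tracking two adjacent harmonic degrees simultaneously. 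Both difficulties are treated in the general setting of Appendix~\ref{sec:GeneralApp}, from which Theorem~\ref{thm:NT} follows as a special case.
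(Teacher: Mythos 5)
Your decomposition of the risk into (A) low-degree content captured and (B) high-degree content orthogonal, and your reduction to harmonic analysis on the anisotropic product sphere via Gegenbauer expansions, is exactly the strategy used in the paper (Appendix~\ref{sec:proof_NTK_lower_aniso} proves (B) and Appendix~\ref{sec:proof_NTK_upper_aniso} proves (A), for a more general product-of-spheres model); the intuition that multiplication by the degree-one factor shifts harmonic degree by $\pm 1$, and that Hermite coefficients $\mu_k(\sigma')$ control the nontrivial Gegenbauer mass at each degree, is also correct. There is, however, a genuine gap in your treatment of (B), and a related misplacement of the non-proportionality hypothesis.

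For (B) you propose ``a second-moment computation over $\bW$'' to bound $\E_\bW\big\|\proj_{\cF^N_{\NT}(\bW)}\proj_{>\ell+1}f_*\big\|_{L^2}^2$ directly. But $\big\|\proj_{\cF^N_{\NT}(\bW)}g\big\|_{L^2}^2 = \bV_g^\sT\bU^{-1}\bV_g$ where $\bV_g$ is the feature/target correlation vector and $\bU$ is the (random, $Nd\times Nd$) Gram matrix of the $\reals^d$-valued NT features. A second-moment bound on $\bV_g$ alone is not enough: one must also prove a \emph{lower bound} on $\lambda_{\min}(\bU)$, uniformly over the random $\bW$. In the paper this is the hardest and most structurally novel step (Proposition~\ref{prop:kernel_lower_bound_NTK_NU}). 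It is here --- and not in your claim (A) --- that the non-proportionality hypothesis $\mu_{k_1}(x^2\sigma')/\mu_{k_1}(\sigma')\neq\mu_{k_2}(x^2\sigma')/\mu_{k_2}(\sigma')$ enters: the argument modifies $\sigma'$ at two Gegenbauer levels $l_1<l_2$, yielding a positive-semidefinite $\bar\bU\preceq\bU$ whose diagonal blocks $\bar\bU_{ii}$ converge (after normalization) to explicit functionals $F_1(\bdelta),F_2(\bdelta)$ of the perturbation $\bdelta=(\delta_1,\delta_2)$, and the non-proportionality is precisely what guarantees $\nabla F_1(\bzero)$ and $\nabla F_2(\bzero)$ are linearly independent so one can choose $\bdelta$ with $F_1(\bdelta)>0$ and $F_2(\bdelta)>0$ simultaneously (cf.~Step 5 of Proposition~\ref{prop:kernel_lower_bound_NTK_NU}). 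Without this spectral-gap step, neither your (A) nor your (B) is closeable: in (A) the empirical operator concentration you describe must be around an \emph{invertible} population operator, and in (B) the correlation bound must be paired with $\|\bU^{-1}\|_\op=O_{d,\P}(1)$.

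Conversely, for your claim (A) --- the paper's upper bound --- the non-proportionality condition is \emph{not} needed; the paper's proof (Appendix~\ref{sec:proof_NTK_upper_aniso}) gives an explicit feature-coefficient construction using the inverse $\K_{\btau,\btau}^{-1}$ of the \emph{limiting} kernel operator, whose eigenvalues on the target subspace are nonzero as soon as $\mu_k(\sigma')\neq 0$ for the relevant $k$'s; no degeneracy-breaking perturbation is needed because one does not lower-bound the random Gram matrix in that half of the argument. So the hard difficulty you correctly anticipate (``tracking two adjacent harmonic degrees simultaneously'' under the non-proportionality condition) belongs to (B), not (A).
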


Here, the definitions of effective dimension $d_{\seff}$ is the same as in Theorem \ref{thm:bound_KRR}. While for the test error of KRR as in Theorem \ref{thm:bound_KRR}, the effective dimension controls the sample complexity $n$ in learning a degree $\ell$ polynomial, in the present case it controls the number of neurons $N$ that is necessary to approximate a degree $\ell$ polynomial. In the case of RF, the latter happens as soon as $N \gg d^{\ell}_{\seff}$, while for NT it happens as soon as $N \gg d^{\ell-1}_{\seff}$. If we take $\eta \in (0, 1)$ and $\kappa = 0+$, the above theorems, again, recover Theorem 1 and 2 of \cite{ghorbani2019linearized}. 

Notice that NT has higher approximation power than RF \emph{in terms of the number of neurons}. This is expected, since NT models contain $Nd$ instead of $N$ parameters. On the other hand, NT has less power \emph{in terms of number of parameters}: to fit a degree $\ell + 1$ polynomial, the parameter complexity for NT is $N d = d_{\seff}^\ell d$ while the parameter complexity for RF is $N = d_{\seff}^{\ell+1} \ll d_{\seff}^\ell d$. While the NT model has $p = Nd$ parameters, only $p_{\seff}^\NT = Nd_{\seff}$ of them appear to matter. We will refer to $p_{\seff}^\NT \equiv Nd_{\seff}$ as the \emph{effective number of parameters} of NT models.

Finally, it is natural to ask what are the behaviors of RF and NT models at finite sample size. Denote by $R_{\M,N,n}(f_*;\bW)$ the corresponding test error (assuming for instance ridge regression, with the optimal regularization $\lambda$). Of course the minimum population risk provides a lower bound: $R_{\M,N,n}(f_*;\bW)\ge R_{\M,N}(f_*;\bW)$. Moreover, we conjecture that the risk is minimized at infinite $N$, $R_{\M,N,n}(f_*;\bW)\gtrsim R_{n}(f_*;h_{\M})$. Altogether this implies the lower bound $R_{\M,N,n}(f_*;\bW)\gtrsim \max(R_{\M,N}(f_*;\bW), R_{n}(f_*;h_{\M}))$. We also conjecture that this lower bound is tight, up to terms vanishing as $N,n,d\to\infty$.

Namely (focusing on NT models), if $Nd_{\seff} \lesssim n$, and $d_{\seff}^{\ell_1} \lesssim N d_{\seff} \lesssim d_{\seff}^{\ell_1+1}$ then the approximation error dominates and $R_{\M,N,n}(f_*;\bW) = \|\proj_{>\ell_1}f_*\|_{L^2}^2+o_{d, \P}(1)\|f_*\|_{L^2}^2$. If on the other hand $Nd_{\seff} \gtrsim n$, and $d_{\seff}^{\ell_2} \lesssim n \lesssim d_{\seff}^{\ell_2+1}$ then the generalization error dominates and $R_{\M,N,n}(f_*;\bW) = \|\proj_{>\ell_2}f_*\|_{L^2}^2+o_{d, \P}(1)\|f_*\|_{L^2}^2$.

\begin{figure}
\includegraphics[width=0.95\linewidth]{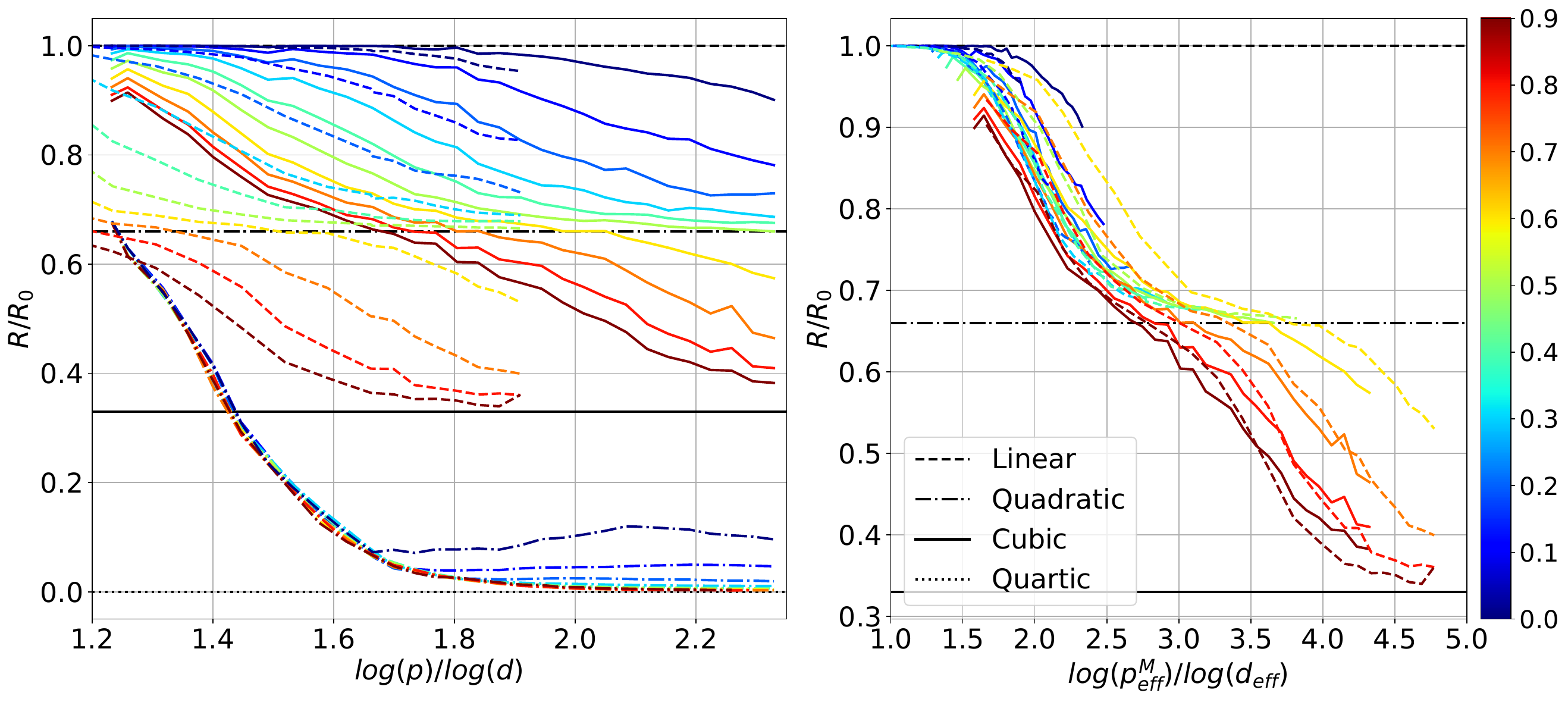}
\caption{Finite-width two-layers NN and their linearizations RF and NT. Models are trained on $2^{20}$ training observations drawn i.i.d from the distribution of Section \ref{sec:Model}. Continuous lines: NT; dashed lines: RF; dot-dashed: NN. Various curves (colors) refer to values of the exponent $\kappa$ (larger $\kappa$ corresponds to stronger low-dimensional component). Right frame: curves for RF and NT as a function of the rescaled quantity $\log(p_{\seff}^\M)/\log(d_{\seff})$.} \label{fig:FiniteWidth}
\end{figure}

\subsection{Neural network models}

Consider the approximation error for NNs
\begin{align}
R_{\NN, N}(f_*) := \inf_{\hat{f}\in \cF^N_{\NN}}\E\big\{\big[f_*(\bx)-\hat f(\bx)\big]^2\big\}. 
\end{align}
Since $\eps^{-1} [ \sigma(\< \bw_i + \eps \ba_i, \bx\>) - \sigma(\< \bw_i, \bx\>)]
\stackrel{\eps\to 0}{\longrightarrow}\< \ba_i, \bx\> \sigma'(\< \bw_i, \bx\>)$,  we have $\cup_{\bW} \cF_{\NT}^{N/2}(\bW) \subseteq {\rm cl}(\cF^N_{\NN})$, and $R_{\NN, N}(f_*) \le \inf_{\bW} R_{\NT, N/2}(f_*, \bW)$. By choosing $\overline \bW = (\bar \bw_i)_{i \le N}$, with $\bar \bw_i = \bU \bar \bv_{i}$ (see Section \ref{sec:Model} for definition of $\bU$), we obtain that $\cF_{\NT}^N(\overline \bW)$  contains all functions of
the form $\bar f(\bU^\sT \bx)$, where $\bar f$ is in the class of functions $\cF_{\NT}^N (\overline \bV)$ on $\reals^{d_0}$.
Hence if $f_*(\bx) = \vphi(\bU^{\sT}\bx)$, $R_{\NN, N}(f_*) $  is at most the error of approximating $\vphi(\bz)$ on the small sphere $\bz \sim \Unif(\S^{d_0 - 1})$
within the class $\cF_{\NT}^N (\overline \bV)$. As a consequence, by Theorem \ref{thm:NT}, if $d_0^{\ell + \delta}\le N\le d_0^{\ell+1 - \delta}$ for some $\delta > 0$, then $R_{\NN, N}(f_*) \le R_{\NT, N / 2}(f_*, \overline \bW) \le (1 + o_{d, \P}(1)) \cdot \| \proj_{> \ell + 1} f_* \|_{L^2}^2$. 

\begin{theorem}[Approximation error for NN]\label{thm:NN}
Assume that $\sigma\in C^{\infty}(\reals)$ satisfies the same assumptions as in Theorem \ref{thm:NT}.  Further assume that $\sup_{x \in \R} \vert \sigma''(x)\vert < \infty$. If $d_0^{\ell + \delta}\le N\le d_0^{\ell+1 - \delta}$ for some $\delta > 0$ independent of $N,d$, then the approximation error of NN models (\ref{eqn:NT_model}) is
\begin{align}
 R_{\NN, N}(f_*) \le (1 + o_d(1)) \cdot \| \proj_{> \ell + 1} f_* \|_{L^2}^2. 
\end{align}
Moreover, the quantity $R_{\NN, N}(f_*)$ is independent of $\kappa \ge 0$. 
\end{theorem}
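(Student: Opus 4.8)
The plan is to have the NN class imitate a neural-tangent model whose first-layer weights are deliberately placed inside the latent subspace $\cV$; this reduces the approximation problem to NT approximation in ambient dimension $d_0$, where no low-dimensional structure is available and the effective dimension is simply $d_0$, after which Theorem~\ref{thm:NT} finishes the job. Concretely, the first step is to make the (already-sketched) inclusion $\bigcup_{\bW}\cF_{\NT}^{\lfloor N/2\rfloor}(\bW)\subseteq\cl(\cF_{\NN}^{N})$ quantitative in $L^2(\P_\bx)$: given $\hat f_{\NT}(\bx;\bS,\bW)=\sum_{i\le\lfloor N/2\rfloor}\langle\bs_i,\bx\rangle\,\sigma'(\langle\bw_i,\bx\rangle)$, I would approximate each summand by the symmetric finite difference $\tfrac1{2\eps}\big[\sigma(\langle\bw_i+\eps\bs_i,\bx\rangle)-\sigma(\langle\bw_i-\eps\bs_i,\bx\rangle)\big]$, which uses two hidden units; a second-order Taylor expansion bounds the error by $\tfrac{\eps}{2}\,\sup_x|\sigma''(x)|\,\langle\bs_i,\bx\rangle^2$, and since $\bx$ is supported on a product of spheres, hence bounded, this vanishes in $L^2(\P_\bx)$ as $\eps\downarrow0$. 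Summing the $\lfloor N/2\rfloor$ units gives $R_{\NN,N}(f_*)\le R_{\NT,\lfloor N/2\rfloor}(f_*;\bW)$ for \emph{every} $\bW$ — this is the one place the hypothesis $\sup_x|\sigma''(x)|<\infty$ is used.

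Next I would choose $\overline\bW=(\bar\bw_i)_{i\le\lfloor N/2\rfloor}$ with $\bar\bw_i=\bU\bar\bv_i$ and $\bar\bv_i\in\S^{d_0-1}(1)$ i.i.d.\ uniform. Then $\langle\bar\bw_i,\bx\rangle=\langle\bar\bv_i,\bz_0\rangle$ with $\bz_0=\bU^\sT\bx$, so restricting the tangent coefficients to $\bs_i\in\cV$ makes $\hat f_{\NT}(\,\cdot\,;\bS,\overline\bW)$ a function of $\bz_0$ alone, namely an NT model $\sum_i\langle\bar\bs_i,\bz_0\rangle\sigma'(\langle\bar\bv_i,\bz_0\rangle)$ in dimension $d_0$. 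Since $f_*(\bx)=\varphi(\bz_0)$ with $\bz_0\sim\Unif(\S^{d_0-1}(r\sqrt{d_0}))$, minimizing over $\bS$ (a fortiori over $\bs_i\in\cV$) yields $R_{\NN,N}(f_*)\le R_{\NT,\lfloor N/2\rfloor}(f_*;\overline\bW)\le R^{(d_0)}_{\NT,\lfloor N/2\rfloor}(\varphi;\overline\bV)$, the last quantity being the NT approximation error for target $\varphi$ in the model of Section~\ref{sec:Model} with ambient dimension $d_0$ and \emph{full} latent subspace ($\eta=1$). In that model $d_{\seff}=d_0^{\max(1-\kappa',1)}=d_0$ regardless of the anisotropy, which is exactly why the final bound involves $d_0$ and not the full effective dimension $d^{\max(1-\kappa,\eta)}$. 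The assumptions on $\sigma$ are those of Theorem~\ref{thm:NT}, and for $d$ large the window $d_0^{\ell+\delta}\le N\le d_0^{\ell+1-\delta}$ implies $d_0^{\ell+\delta/2}\le\lfloor N/2\rfloor\le d_0^{\ell+1-\delta}$, so Theorem~\ref{thm:NT} applies and gives, with probability tending to $1$ over $\overline\bV$, the bound $R^{(d_0)}_{\NT,\lfloor N/2\rfloor}(\varphi;\overline\bV)\le(1+o_{d,\P}(1))\|\proj_{>\ell+1}\varphi\|_{L^2}^2$. Because $f_*$ depends on $\bx$ only through $\bz_0$, its polynomial-degree decomposition is inherited from the spherical-harmonic decomposition of $\varphi$, so $\|\proj_{>\ell+1}f_*\|_{L^2}=\|\proj_{>\ell+1}\varphi\|_{L^2}$; and since $R_{\NN,N}(f_*)$ is deterministic while the displayed bound holds with probability $\to1$, the $o_{d,\P}(1)$ may be replaced by $o_d(1)$, yielding $R_{\NN,N}(f_*)\le(1+o_d(1))\|\proj_{>\ell+1}f_*\|_{L^2}^2$.

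For the last assertion — that $R_{\NN,N}(f_*)$ does not depend on $\kappa$ — I would use an exact reparametrization rather than the bound above. Let $\Psi_r:\reals^d\to\reals^d$ scale the $\cV$-component by $r=d^{\kappa/2}$ and fix the $\cV^\perp$-component; it is a symmetric linear bijection that pushes the $\kappa=0$ law of $\bx$ onto the $\kappa$-law. For any $\hat f(\bx)=\sum_ib_i\sigma(\langle\bw_i,\bx\rangle)\in\cF_{\NN}^N$ one has $\hat f(\Psi_r\bx)=\sum_ib_i\sigma(\langle\Psi_r\bw_i,\bx\rangle)\in\cF_{\NN}^N$ and $f_*(\Psi_r\bx)=\varphi(r\,\bU^\sT\bx)$; since $\bw_i\mapsto\Psi_r\bw_i$ is a bijection of $\reals^d$, the infimum defining $R_{\NN,N}$ for target $\varphi(\bU^\sT\cdot)$ in the $\kappa$-model equals the one for target $\varphi(r\,\bU^\sT\cdot)$ in the $\kappa=0$ model. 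As $\varphi\mapsto\varphi(r\,\cdot)$ is an $L^2$-isometry of the corresponding spheres commuting with the orthogonal group — hence preserving the full degree decomposition — the two instances coincide under the natural identification of the two $L^2$ spaces, so $R_{\NN,N}(f_*)$ is $\kappa$-independent.

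The argument is essentially a reduction, so there is no deep obstacle. The point requiring the most care is the middle step: verifying that the auxiliary $d_0$-dimensional problem (weights forced into $\cV$, a generic $L^2$ target $\varphi$, anisotropy inherited from $r$) is genuinely an instance to which Theorem~\ref{thm:NT} applies — in particular that its effective dimension collapses to $d_0$ — and tracking the $\lfloor N/2\rfloor$-versus-$N$ and $o_{d_0,\P}$-versus-$o_d$ conversions carefully enough that the final statement is deterministic and uniform over $\kappa\ge0$.
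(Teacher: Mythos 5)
Your proposal follows the paper's own proof essentially step for step: (i) reduce NN to NT via a finite-difference/Taylor argument controlled by $\sup_x|\sigma''(x)|$ (you use the symmetric difference where the paper uses the forward one — both give $R_{\NN,N}(f_*)\le R_{\NT,\lfloor N/2\rfloor}(f_*;\bW)$ at the cost of a factor $2$ in width); (ii) force the first-layer weights into the latent subspace $\cV$ and restrict the tangent coefficients to $\cV$, so the problem collapses to an NT approximation of $\varphi$ on $\S^{d_0-1}(r\sqrt{d_0})$ with effective dimension $d_0$, to which Theorem~\ref{thm:NT} applies; (iii) establish $\kappa$-independence by the exact linear change of variables $\bw\mapsto\Psi_r\bw$, which is also the paper's argument. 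The only differences are cosmetic (your $\bar\bv_i\sim\Unif(\S^{d_0-1}(1))$ versus the paper's $\Unif(\S^{d_0-1}(r^{-1}))$ is a harmless rescaling absorbed by the infimum over weights), so this is a correct proof by the same route.
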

As a consequence of Theorem \ref{thm:NT} and \ref{thm:NN}, there is a separation between NN and (uniformly sampled) NT models when $d_{\seff} \neq d_0$, i.e., $\kappa < 1 - \eta$. As $\kappa$ increases, the gap between NN and NT becomes smaller and smaller until $\kappa = 1 - \eta$.

\section{Further numerical experiments}
\label{sec:Experiments}

We carried out extensive numerical experiments on synthetic data to check our predictions for RF, NT, RKHS methods at finite sample size $n$, dimension $d$, and width $N$. We simulated two-layers fully-connected NN in the same context in order to compare their behavior to the behavior of the previous models. Finally, we carried out numerical experiments on FMNIST and CIFAR-10 data to test whether our qualitative predictions apply to image  datasets. Throughout we use ReLU activations.

In Figure \ref{fig:FiniteWidth} we investigate the approximation error of RF, NT, and NN models. We generate data $(y_i, \bx_i)_{i \ge 1}$ according to the model of Section \ref{sec:Model}, in $d=1024$ dimensions, with a latent space dimension $d_0 = 16$, hence $\eta=2/5$. The per-coordinate variance in the latent space is $r^2 = d^\kappa$, with $\kappa \in \{0.0, \dots,0.9\}$. Labels are obtained by $y_i=f_*(\bx_i)=\varphi(\bU^{\sT}\bx_i)$ where $\varphi:\reals^{d_0}\to\reals$ is a degree-4 polynomial, without a linear component.
Since we are interested in the minimum population risk, we use a large sample size $n=2^{20}$: we expect the approximation
error to dominate in this regime. (See Appendix \ref{app:exp_details}  for further details.)

We plot the normalized risk  $R_{\RF,N}(f_*, \bW)/R_0$, $R_{\NT,N}(f_*, \bW)/R_0$, $R_{\NN,N}(f_*)/R_0$, $R_0:=\|f_*\|_{L^2}^2$,
for various widths $N$. These are compared with the error of the best polynomial approximation of degrees $\ell=1$ to $3$ (which correspond to $\|\proj_{>\ell}f_*\|^2_{L^2}/\|f_*\|_{L^2}^2$).  As expected, as the number of parameters increases,
the approximation error of each function class decreases. NN provides much better approximations than any of the linear classes, and RF is superior to NT \emph{given the same number of parameters}. This is captured by Theorems \ref{thm:RF} and \ref{thm:NT}: to fit a degree $\ell + 1$ polynomial, the parameter complexity for NT is $N d = d_{\seff}^\ell d$ while  for RF it is $N = d_{\seff}^{\ell+1} \ll d_{\seff}^\ell d$. We denote the effective number of parameters for NT by $p_{\seff}^{\NT} = Nd_{\seff}$ and the effective number of parameter for RF by $p_{\seff}^{\RF} = N$. The right plot reports the same data, but we rescale the x-axis to be $\log(p_{\seff}^\M)/\log(d_{\seff})$. As predicted by the asymptotic theory of Theorems \ref{thm:RF} and \ref{thm:NT}, various curves for NT and RF tend to collapse on this scale.
Finally, the approximation error of RF and NT depends strongly on $\kappa$: larger $\kappa$ leads to smaller effective dimension and hence smaller approximation error. In contrast, the error of NN, besides being smaller in absolute terms, is
much less sensitive to $\kappa$.

\begin{figure}
    \includegraphics[width=0.95\linewidth]{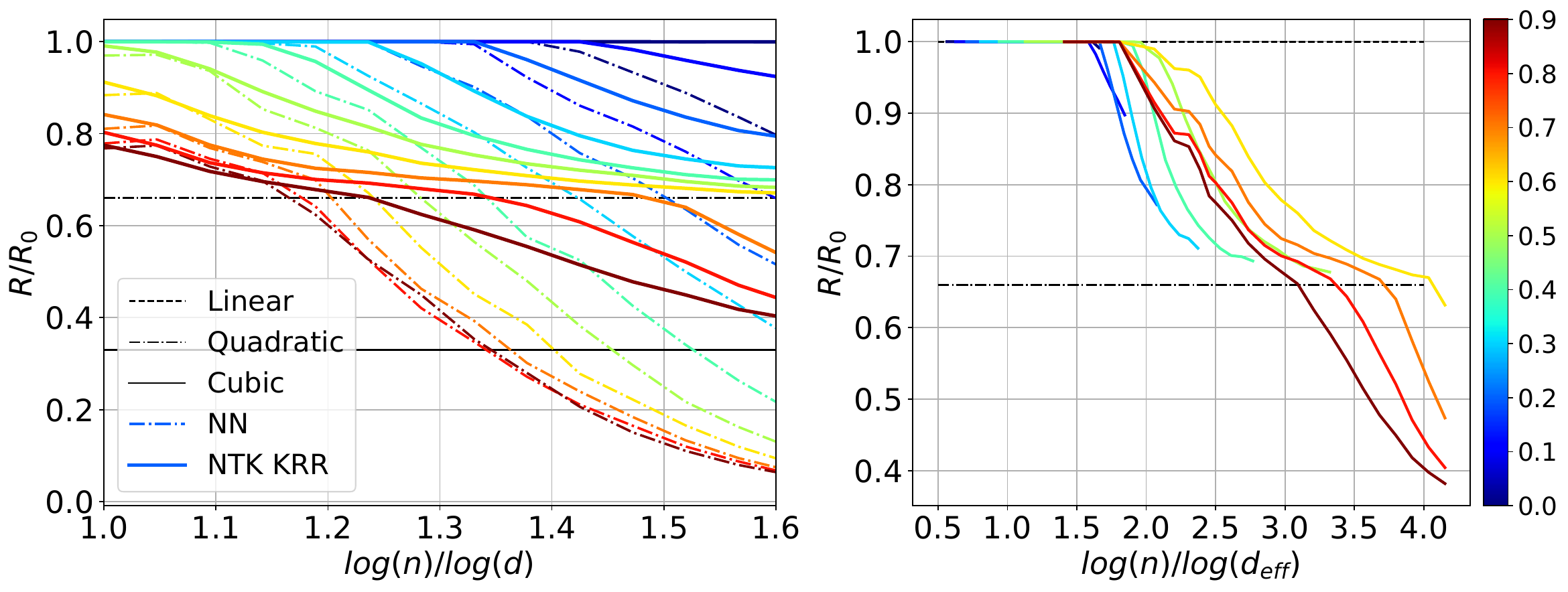}
    \caption{Left: Comparison of the test error of NN (dot-dashed) and NTK KRR (solid) on the distribution of the Section \ref{sec:Model}.
      Various curves (colors) refer to values of the exponent $\kappa$. Right: KRR test error as a function of the number of observations adjusted by the effective dimension.
      Horizontal lines correspond to the best polynomial approximation.}\label{fig:KRR}
    \vspace{-0.25cm}
  \end{figure}

  In Fig.~\ref{fig:KRR} we compare the test error of NN (with $N=4096$) and KRR for the NT kernel (corresponding to
  the $N\to\infty$ limit in the lazy regime), for the same data distribution as in the previous figure.
  We observe that the test error of KRR is substantially larger than the one of NN, and deteriorates rapidly as $\kappa$
  gets smaller (the effective dimension gets larger). In the right frame we plot the test error as a function of
  $\log(n)/\log(d_{\seff})$: we observe that the curves obtained for different $\kappa$ approximately collapse,
  confirming that $d_{\seff}$ is indeed the right dimension parameter controlling the sample complexity.
  Notice that also the error of NN deteriorates as $\kappa$ gets
  smaller, although not so rapidly: this behavior deserves further investigation.
  Notice also
  that the KRR error crosses the level of best degree-$\ell$ polynomial approximation roughly at $\log(n)/\log(d_{\seff})\approx \ell$.
  
\begin{figure}
\centering
\includegraphics[width=\linewidth]{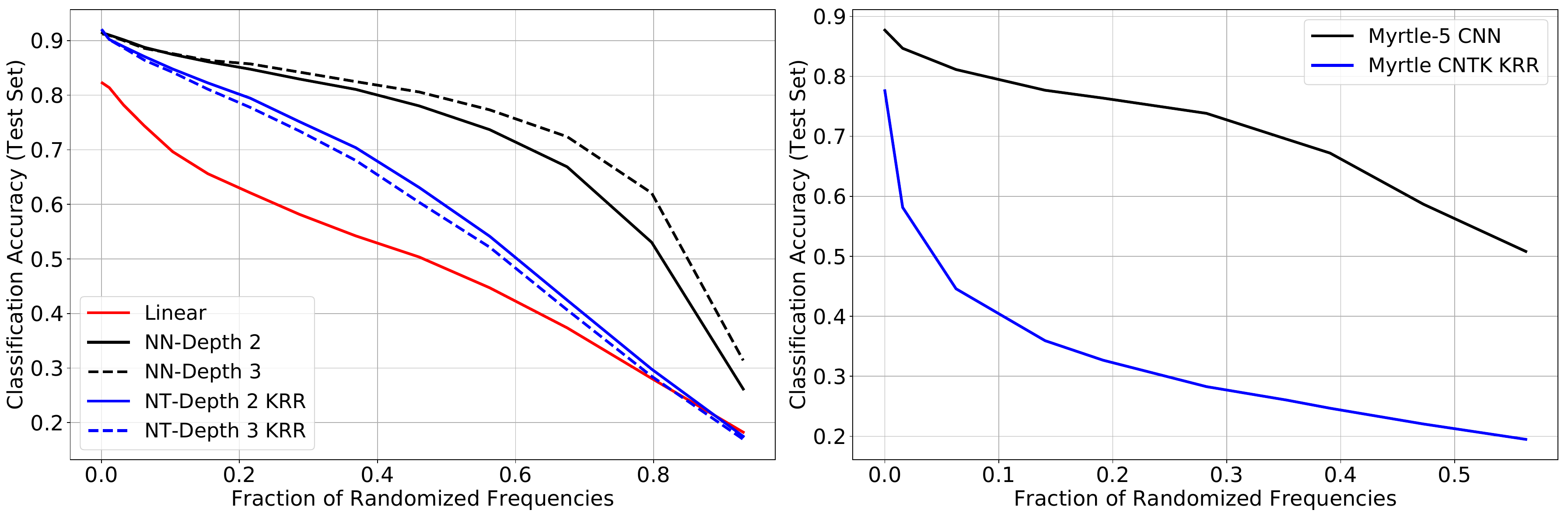}
\caption{Compartison between multilayer NNs and the corresponding NT models under perturtbations in frequency domain.
  Left: Fully connected networks on FMNIST data. Right: Comparison of CNN and CNTK KRR classification accuracy on CIFAR-10. We progressively replace the lowest frequencies of each image with Gaussian noise with matching covariance structure. Right: Accuracy for FMNIST. }\label{fig:FMNIST-Shuffled}
\end{figure}

The basic qualitative insight of our work can be summarized as follows. Kernel methods are effective when a
low-dimensional structure in the target function is aligned with a low-dimensional structure in the covariates.
In image data, both the target function and the covariates are dominated by the low-frequency subspace.
In Figure \ref{fig:FMNIST} we tested this hypothesis by removing the low-dimensional structure of the covariate vectors:
we simply added noise to the high-frequency part of the image. In  Figure \ref{fig:FMNIST-Shuffled} we try the opposite, by
removing the component of the target function that is localized on low-frequency modes.
We decompose each images into a low-frequency and a high-frequency part. We leave the high-frequency part unchanged, and
replace the low-frequency part by Gaussian noise with the first two moments matching the empirical moments
of the data.

In the left frame, we consider FMNIST data and compare
fully-connected NNs with $2$ or $3$ layers  (and $N=4096$ nodes at each hidden layer) with the corresponding
NT KRR model (infinite width). In the right frame, we use  CIFAR-10 data and compare a Myrtle-5 network
(a lightweight convolutional architecture \cite{DavidPage2018, shankar2020neural})
with the corresponding NT KRR. We observe the same behavior as in Figure \ref{fig:FMNIST}. While for the original data
NT is comparable to  NN, as the proportion of perturbed Fourier modes increases, the performance of NT deteriorates
much more rapidly than the one of  NN.

\section{Discussion}
\label{sec:Discussion}
  
The limitations of linear methods ---such as KRR--- in high dimension are well
understood in the context of nonparametric function estimation.
For instance, a basic result in this area establishes that
estimating a Sobolev function $f_*$ in $d$ dimensions  with mean square error $\eps$
requires roughly $\eps^{-2-d/\alpha}$ samples, with $\alpha$ the smoothness parameter \cite{tsybakov2008introduction}.
This behavior is achieved by kernel smoothing and by KRR: however these methods are not expected 
to be adaptive when $f_*(\bx)$ only depends on a low-dimensional projection of $\bx$,
i.e. $f_*(\bx) = \varphi(\bU^{\sT}\bx)$ for an unknown $\bU\in\reals^{d_0\times d}$,
$d_0\ll d$. On the contrary, fully-trained NN can overcome this problem \cite{bach2017breaking}.

However, these classical statistical results have  some limitations.
First, they focus on the low-dimensional regime: $d$ is fixed,
while the sample size $n$ diverges. This is probably unrealistic for many machine learning applications,
in which $d$ is at least of the order of a few hundreds.
Second, classical lower bounds are typically established for the minimax risk, and hence
they do not necessarily apply to specific functions.

To bridge these gaps, we developed a sharp characterization of the test error in
the high-dimensional regime in which both $d$ and $n$ diverge,
while being polynomially related. This characterization holds for any target function $f_*$, and expresses the limiting test error
in terms of the polynomial decomposition. We also present analogous results for finite-width RF and NT models.

Our analysis is analogous and generalizes the recent results of \cite{ghorbani2019linearized}.
However, while \cite{ghorbani2019linearized} assumed the covariates $\bx_i$
to be uniformly distributed over the sphere $\S^{d-1}(\sqrt{d})$, we introduced and analyzed a more general model in which the covariates mostly
lie in the signal subspace with dimension $d_0\ll d$, and the target function is also dependent on that subspace.
In fact our results follow as special cases of a more general model discussed in Appendix \ref{sec:GeneralApp}.

Depending on the relation between signal dimension $d_0$, ambient dimension $d$, and the covariate signal-to-noise ratio $r$,
the model presents a continuum of different behaviors. At one extreme, the covariates are fully $d$-dimensional,
and RKHS methods are highly suboptimal compared to NN. At the other, covariates are close to $d_0$-dimensional
and RKHS methods are instead  more competitive with NN.

Finally, the Fourier decomposition of images is a simple proxy for the decomposition of the covariate vector $\bx$ into
its low-dimensional dominant component (low frequency) and high-dimensional component (high frequency) \cite{yin2019fourier}.

\section*{Acknowledgements}
This work was partially supported by the NSF grants CCF-1714305, IIS-1741162, DMS-1418362, DMS-1407813 and by the ONR grant
N00014-18-1-2729.

\bibliographystyle{amsalpha}
\bibliography{rfntk_anisotropic.bbl}

\newpage

\appendix
\counterwithin{figure}{section}
\counterwithin{table}{section}

\section{Details of numerical experiments} \label{app:exp_details}
\subsection{General training details}

All models studied in the paper are trained with squared loss and $\ell_2$ regularization. For multi-class datasets such as FMNIST, one-hot encoded labels are used for training. All models discussed in the paper use ReLU non-linearity. Fully-connected models are initialized according to mean-field parameterization \cite{nguyen2019mean, nguyen2020rigorous, mei2018mean}. All neural networks are optimized with SGD with $0.9$ momentum. The learning-rate evolves according to the cosine rule 
\begin{align} \label{eqn:cosine_rule}
lr_t = lr_0 \max((1 + \cos(\frac{t \pi}{T})), \frac{1}{15})
\end{align}
where $lr_0 = 10^{-3}$ and $T = 750$ is the total number of training epochs. To ensure the stability of the optimization for wide models, we use $15$ linear warm-up epochs in the beginning. 

When $N \gg 1$, training $\RF$ and $\NT$ with SGD is unstable (unless extremely small learning-rates are used). This makes the optimization prohibitively slow for large datasets. To avoid this issue, instead of SGD, we use conjugate gradient method (CG) for optimizing $\RF$ and $\NT$. Since these two models are strongly convex \footnote{Note that all models are trained with $\ell_2$ regularization.}, the optimizer is unique. Hence, using CG will not introduce any artifacts in the results. 

In order to use CG, we first implement a function to perform Hessian-vector products in TensorFlow \cite{abadi2016tensorflow}. The function handle is then passed to scipy.sparse.cg  for CG. Our Hessian-vector product code uses tensor manipulation utilities implemented by \cite{ghorbani2019investigation}. 

Unfortunately, scipy.sparse.cg does not support one-hot encoded labels. To avoid running CG for each class separately, when the labels are one-hot encoded, we use Adam optimizer \cite{kingma2014adam} instead. When using Adam, the learning-rate still evolves as \eqref{eqn:cosine_rule} with $lr_0 = 10^{-5}$. The batch-size is fixed at $10^4$ to encourage fast convergence to the minimum. 

For $\NN$, $\RF$ and $\NT$, the training is primary done in TensorFlow (v1.12) \cite{abadi2016tensorflow}. For KRR, we generate the kernel matrix first and directly fit the model in regular python. The kernels associated with two-layer models are calculated analytically. For deeper models, the kernels are computed using neural-tangents library in JAX \cite{jax2018github, neuraltangents2020}.

\subsection{Synthetic data experiments} \label{subsec:synthetic_data_exp}

The synthetic data follows the distribution outlined in the main text. In particular,
\begin{align} \label{eqn:data_dist_1}
\bx_i=(\bu_i, \bz_i), \qquad y_i=\vphi(\bu_i), \qquad \bu_i \in \R^{d_0}, \bz_i \in \R^{d - d_0},
\end{align}
where $\bu_i$ and $\bz_i$ are drawn i.i.d from the hyper-spheres with radii $r\sqrt{d_0}$ and $\sqrt{d}$ respectively. We choose
\begin{align} \label{eqn:data_dist_2}
r=d^{\kappa/2},\qquad d_0 = d^{\eta},
\end{align}
where $d$ is fixed to be $1024$ and $\eta=\frac{2}{5}$. We change $\kappa$ in the interval $\{0,…,0.9\}$. For each value of $\kappa$ we generate $2^{20}$ training and $10^4$ test observations.\footnote{Strictly speaking, the model outlined in the main text requires $\bz_i$ to be generated from the hyper-sphere of radius $\sqrt{d - d_0}$.  In order to work with round numbers, in our experiments we use $\sqrt{d}$ instead of $\sqrt{d - d_0}$. The numerical difference between these two choices is negligible.}

The function $\vphi$ is the sum of three orthogonal components $\{\vphi_i\}_{i=1}^3$ with $\Vert \vphi_i \Vert_2 = 1$. To be more specific, 
\begin{align} \label{eqn:data_dist_3}
\vphi_i(\bx) \propto \sum_{j=1}^{d_0 - i} \alpha^{(i)}_j \prod_{k=j}^{j + i} \bx_k, \qquad \alpha_j^{(i)} \overset{i.i.d}{\sim} \exp(1).
\end{align}
This choice of $\vphi_i$ guarantees that each $\vphi_i$ is in the span of degree $i + 1$ spherical harmonics. 

In the experiments presented in Figure \ref{fig:FiniteWidth}, for $\NN$ and $\NT$, the number of hidden units $N$ takes $30$ geometrically spaced values in the interval $[5, 10^4]$. $\NN$ models are trained using SGD with momentum $0.9$ (the learning-rate evolution is described above). We use batch-size of $512$ for the warm-up epochs and batch-size of $1024$ for the rest of the training. For $\RF$, $N$ takes $24$ geometrically spaced values in the interval $[100, 711680]$. The limit $N = 711680$ corresponds to the largest model size we are computationally able to train at this scale. All models are trained with $\ell_2$ regularization. The $\ell_2$ regularization grids used for these experiments are presented in Table \ref{table:synth_hyper_params}. In all our experiments, we choose the $\ell_2$ regularization parameter that yields the best test performance.\footnote{Due to the large size of the test set, choosing these hyper-parameter based on the test set performance has a negligible over-fitting effect. In addition, in studying the approximation error overfitting is not relevant.} In total, we train approximately $10000$ different models just for this subset of experiments.
 
In Figure \ref{fig:KRR} of the main text, we compared the generalization performance of NTK KRR with $\NN$. We use the same training and test data as above to perform this analysis. The number of training data points, $n$, takes $24$ different values ranging from $50$ to $10^5$. The number of test data points is always fixed at $10^4$. 

\begin{table}[h]
\begin{center}
\caption{Hyper-parameter details for synthetic data experiments. \label{table:synth_hyper_params}}

\begin{tabular}{ lll } 
\toprule
Experiment & Model & $\ell_2$ Regularization grid \\
\hline
\hline
\multirow{3}{*}{Approximation error (Fig \ref{fig:FiniteWidth})} & $\NN$ & $\{10^{\alpha_i}\}_{i=1}^{20}$, $\alpha_i$ uniformly spaced in $[-8,-4]$ \\ \cline{2-3}
& $\NT$ & $\{10^{\alpha_i}\}_{i=1}^{10}$, $\alpha_i$ uniformly spaced in $[-4, 2]$ \\ \cline{2-3}
& $\RF$ & $\{10^{\alpha_i}\}_{i=1}^{10}$, $\alpha_i$ uniformly spaced in $[-5, 2]$  \\  \midrule

\hline
\multirow{2}{*}{Generalization error (Fig \ref{fig:KRR})} & $\NN$ & $\{10^{\alpha_i}\}_{i=1}^{25}$, $\alpha_i$ uniformly spaced in $[-8, -2]$ \\ \cline{2-3}
& $\NT$ KRR & $\{10^{\alpha_i}\}_{i=1}^{10}$, $\alpha_i$ uniformly spaced in $[0, 6]$ \\ \cline{2-3}
\bottomrule
\end{tabular}
\end{center}
\end{table}

\subsection{High-frequency noise experiment on FMNIST}
In effort to make the distribution of the covariates more isotropic, in this experiment, we add high-frequency noise to both the training and test data.

Let $\bx \in \R^{k \times k}$ be an image. We first remove the global average of the image and then add high-frequency Gaussian noise to $\bx$ in the following manner:
\begin{enumerate}
\item We convert $\bx$ to frequency domain via Discrete Cosine Transform (DCT II-orthogonal to be precise). We denote the representation of the image in the frequency domain $\tilde{\bx} \in \R^{k \times k}$. 

\item We choose a filter $\bF\in \{0, 1\}^{k \times k}$. $\bF$ determines on which frequencies the noise should be added. The noise matrix $\tilde{\bZ}$ is defined as $\bZ \bigodot \bF$ where $\bZ \in R^{k\times k}$ has i.i.d $\normal(0, 1)$ entries. 
\item We define $\tilde{\bx}_{noisy} = \tilde{\bx} + \tau (\Vert \tilde{\bx} \Vert / \Vert \tilde{\bZ} \Vert) \tilde{\bZ}$. The constant $\tau$ controls the noise magnitude. 
\item We perform Inverse Discrete Cosine Transform (DCT III-orthogonal) on $\tilde{\bx}_{noisy}$ to convert the image to pixel domain. We denote the noisy image in the pixel domain as $\bx_{noisy}$.
\item Finally, we normalize the $\bx_{noisy}$ so that it has norm $\sqrt{d}$.
\end{enumerate}

In the frequency domain, a grayscale image is represented by a matrix $\tilde{\bx} \in \R^{k \times k}$. Qualitatively speaking, elements $(\tilde{\bx})_{i, j}$ with small values of $i$ and $j$ correspond to the low-frequency component of the image and elements with large indices correspond to high-frequency components. The matrix $\bF$ is chosen such that  no noise is added to low frequencies. Specifically, we choose 
\begin{align} \label{eqn:label_formula}
\bF_{i, j} =
 \left\{
\begin{array}{cc}
1 & \mbox{if } (k - i) ^2 + (k - j)^2 \leq (k - 1)^2 \\
0 & \mbox{otherwise}
\end{array}
\right.
\end{align}

This choice of $\bF$ mirrors the average frequency domain representation of FMNIST images (see  Figure \ref{fig:filter} for a comparison). Figure \ref{fig:FMNIST_eigs} shows the eigenvalues of the empirical covariance of the dataset for various noise levels. As discussed in the main text, the distribution of the covariates becomes more isotropic as more and more high-frequency noise is added to the images.

Figure \ref{fig:FMNIST_HF_Perf} shows the normalized squared loss and the classification accuracy of the models as more and more high-frequency noise is added to the data. The normalization factor $R_0 = 0.9$ corresponds to the risk achievable by the (trivial) predictor $\bigg[\hat{y}_{j} (\bx)\bigg]_{1 \leq j\leq 10} = 0.1$.

\begin{figure}[h]
\centering
	\includegraphics[width=\linewidth]{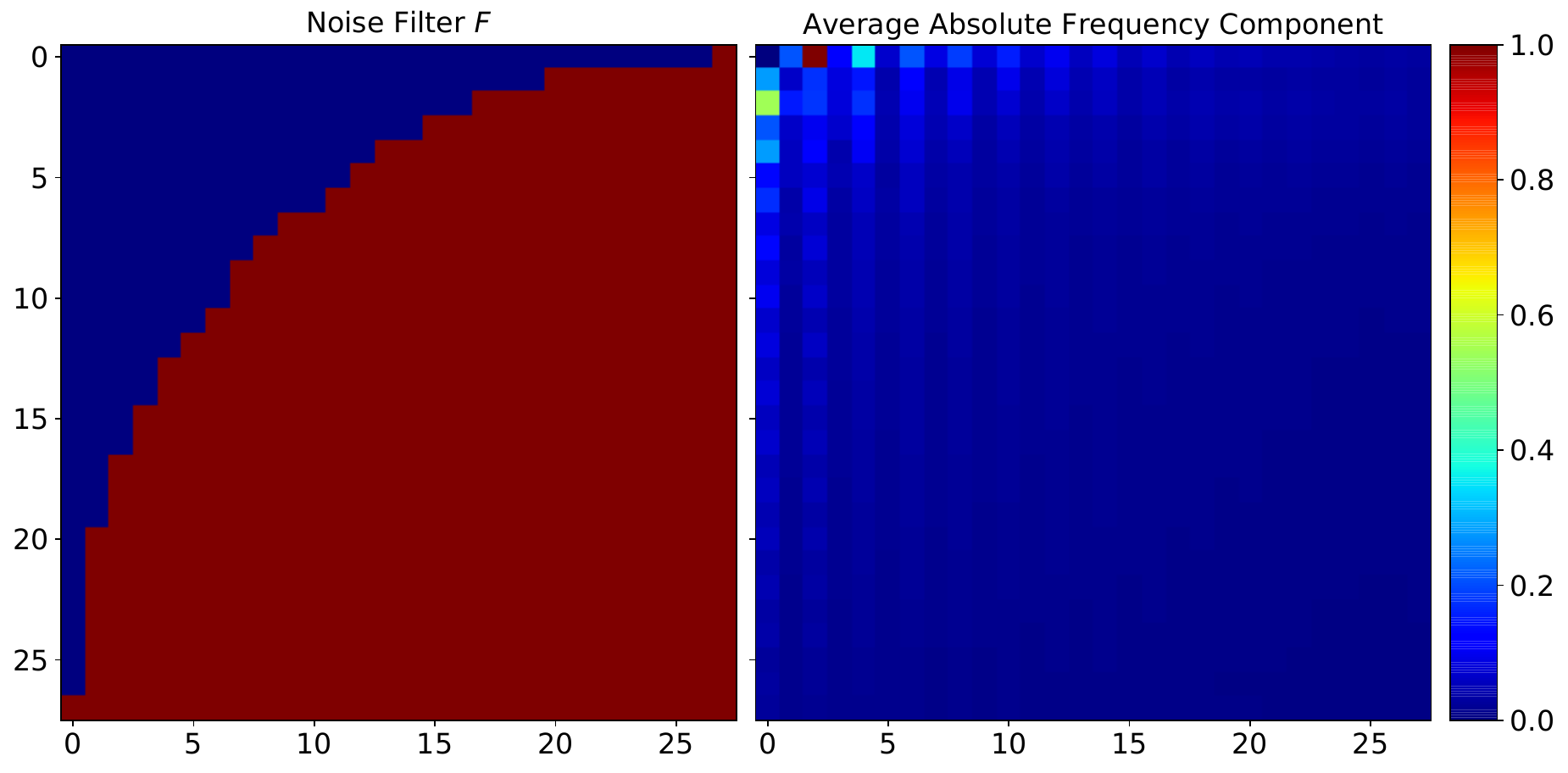}
\caption{Left frame: the pictorial representation of the filter matrix $\bF$ used for the FMNIST experiments. The matrix entries with value zero are represented by color blue while the entries with value one are represented by red. Coordinates on top left-hand side correspond to lower frequency components while coordinates closer to bottom right-hand side represent the high-frequency directions. Right frame: the absolute value of the frequency components of FMNIST images averaged over the training data. The projection of the dataset into the low-frequency region chosen by the filter retains over $95\%$ of the variation in the data. }\label{fig:filter}
\end{figure}

\begin{figure}[h]
\centering
	\includegraphics[width=.8\linewidth]{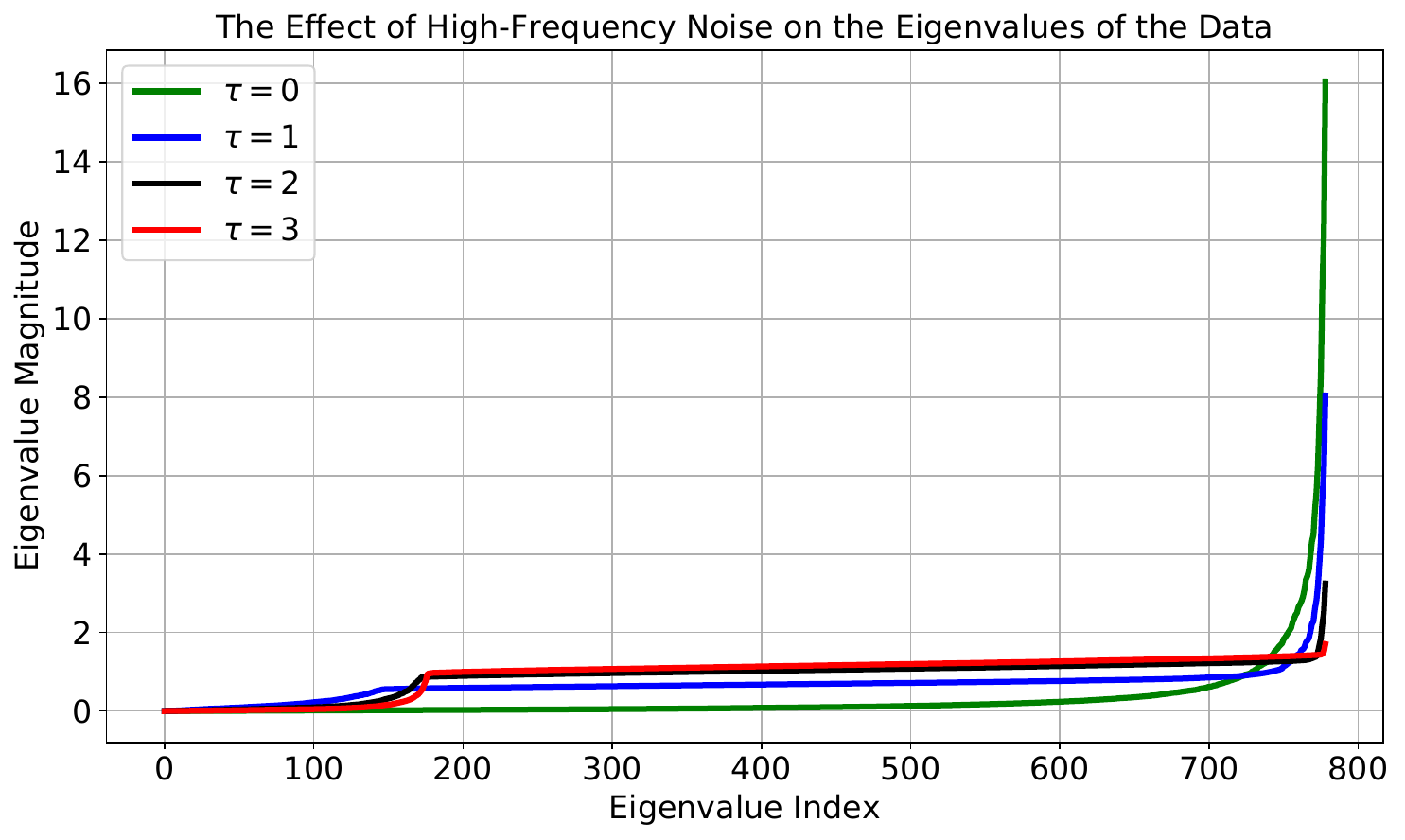}
\caption{The eigenvalues of the empirical covariance matrix of the FMNIST training data. As the noise intensity increases, the distribution of the eigenvalues becomes more isotropic. Note that due to the conservative choice of the filter $\bF$, noise is not added to all of the low-variance directions. These left-out directions corresponds to the small eigenvalues appearing in the left-hand side of the plot.}\label{fig:FMNIST_eigs}
\end{figure}

\begin{figure}[h]
\centering
	\includegraphics[width=\linewidth]{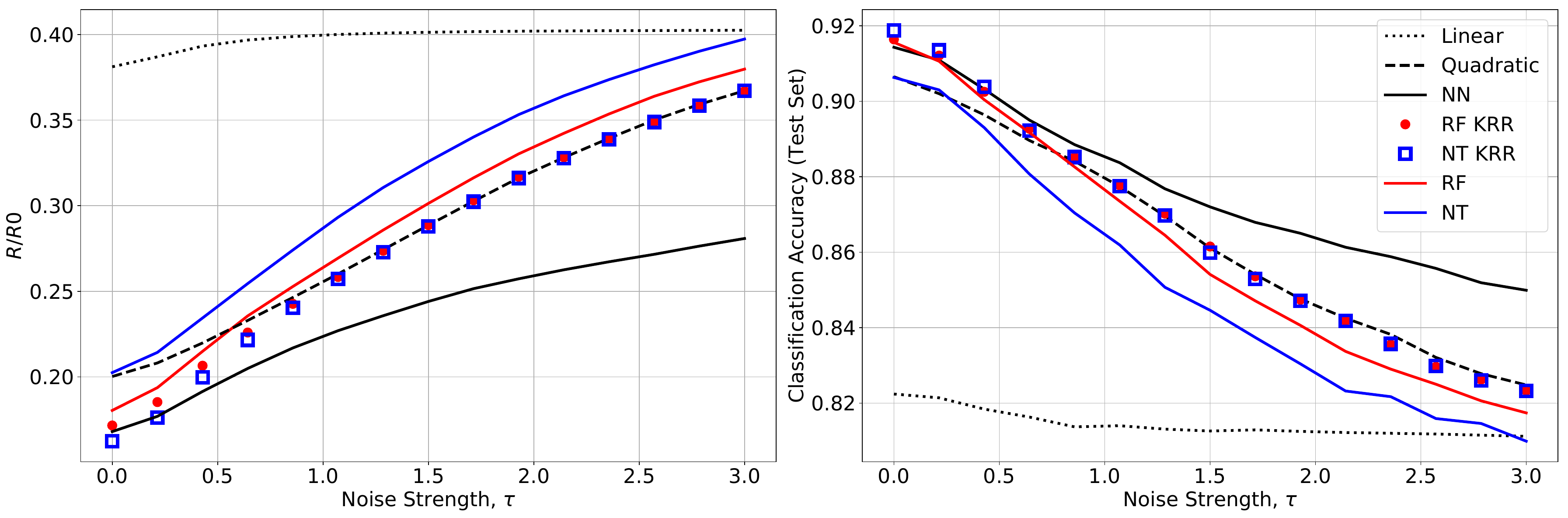}
\caption{The normalized test squared error (left) and the test accuracy (right) of the models trained and evaluated on FMNIST data with high-frequency noise.  }\label{fig:FMNIST_HF_Perf}
\end{figure}

\begin{figure}[h]
\centering
	\includegraphics[width=0.49\linewidth]{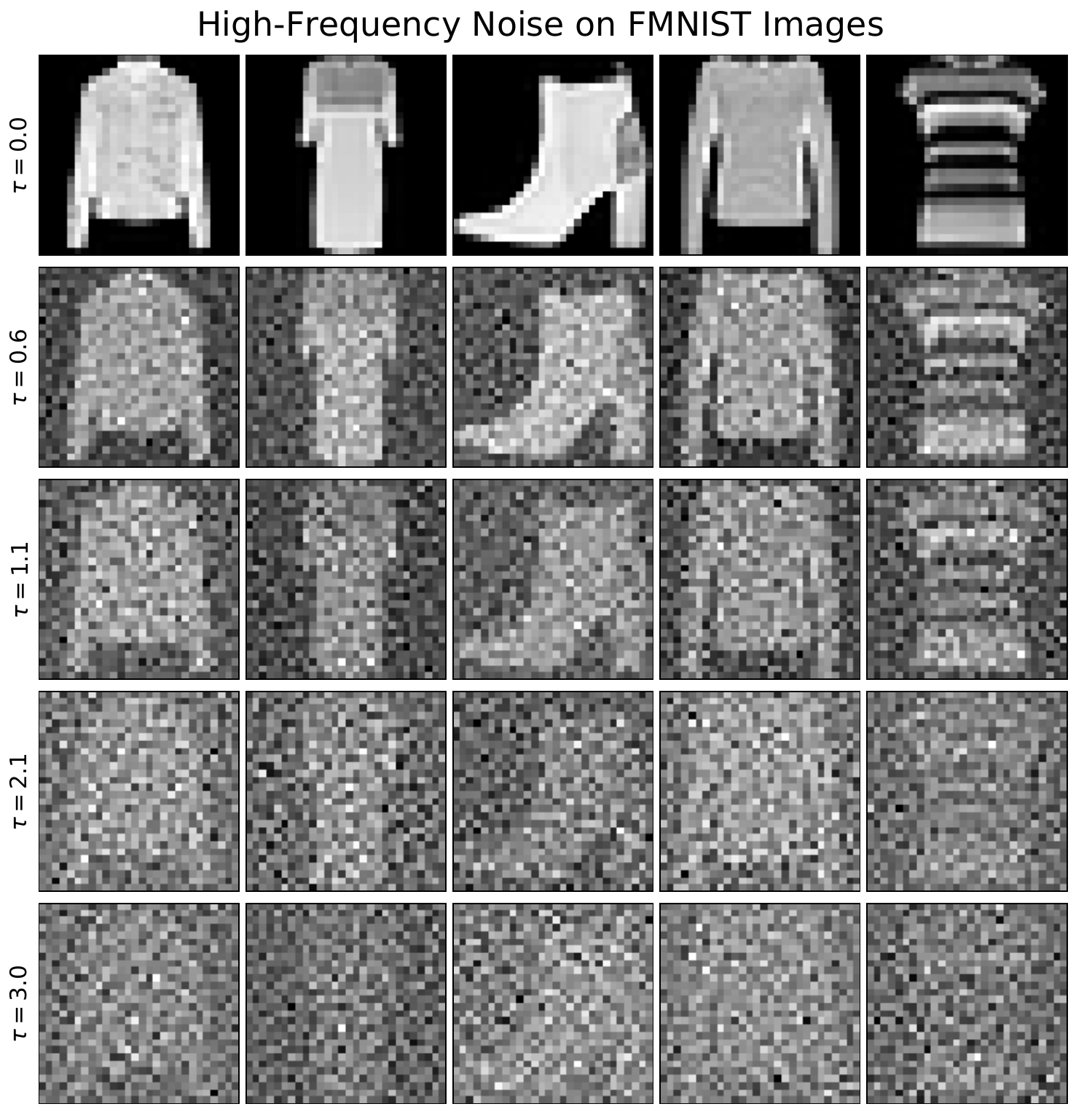}
	\includegraphics[width=0.49\linewidth]{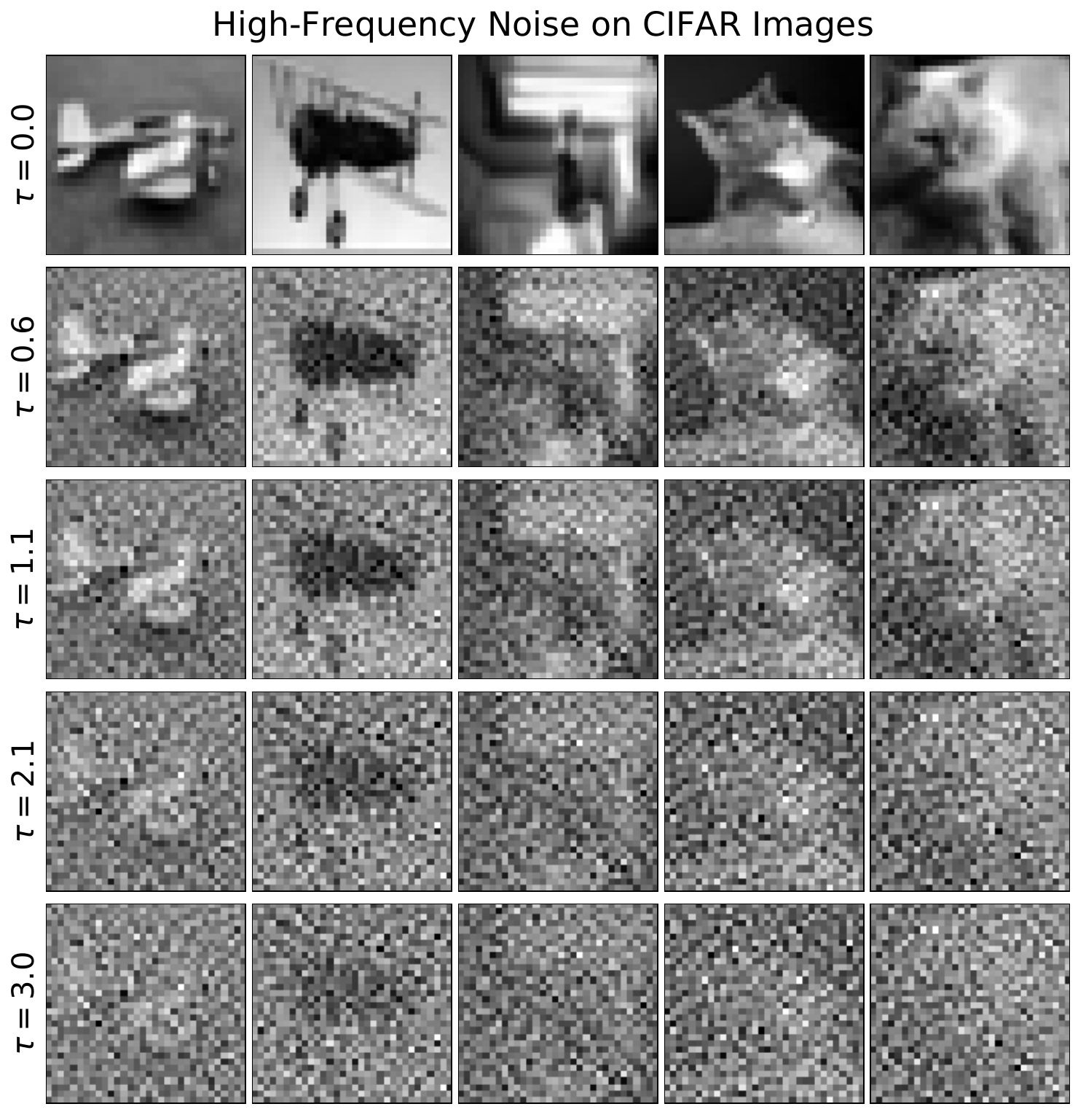}
\caption{Left: FMNIST images with various high-frequency noise levels. Right: CIFAR-2 images with various levels of high-frequency Gaussian noise. The images are converted to grayscale to make the covariate dimension manageable.}\label{fig:HF_images}
\end{figure}

\subsubsection{Experiment hyper-parameters}
For $\NT$ and $\NN$, the number of hidden units $N = 4096$. For $\RF$, we fix $N = 321126$. These hyper-parameter choices ensure that the models have approximately the same number of trainable parameters. $\NN$ is trained with SGD with $0.9$ momentum and learning-rate described by \eqref{eqn:cosine_rule}. The batch-size for the warm-up epochs is $500$. After the warm-up stage is over, we use batch-size of $1000$ to train the network. Since CG is not available in this setting, $\NT$ and $\RF$ are optimized using Adam for $T = 750$ epochs with batch-size of $10^4$. The $\ell_2$ regularization grids used for training these models are listed in Table \ref{table:RidgeRegs}.

\subsection{High-frequency noise experiment on CIFAR-2}
We perform a similar experiment on a subset of CIFAR-10. We choose two classes (airplane and cat) from the ten classes of CIFAR-10. This choice provides us with $10^4$ training and $2000$ test data points. Given that the number of training observations is not very large, we reduce the covariate dimension by converting the images to grayscale. This transformation reduces the covariate dimension to $d = 1024$. 

Figure \ref{fig:CIFAR2_noisy_perf} demonstrates the evolution of the model performances as the noise intensity increases. In the noiseless regime ($\tau = 0$), all models have comparable performances. However, as the noise level increases, the performance gap between $\NN$ and RKHS methods widens. For reference, the accuracy gap between $\NN$ and $\NT$ KRR is only $0.6 \%$ at $\tau = 0$. However, at $\tau = 3$, this gap increases to $4.5 \%$. The normalization factor $R_0 = 0.25$ corresponds to the risk achievable by the trivial estimator $\hat{y}(\bx) = 0.5$.

 \begin{figure}[h]
\centering
	\includegraphics[width=\linewidth]{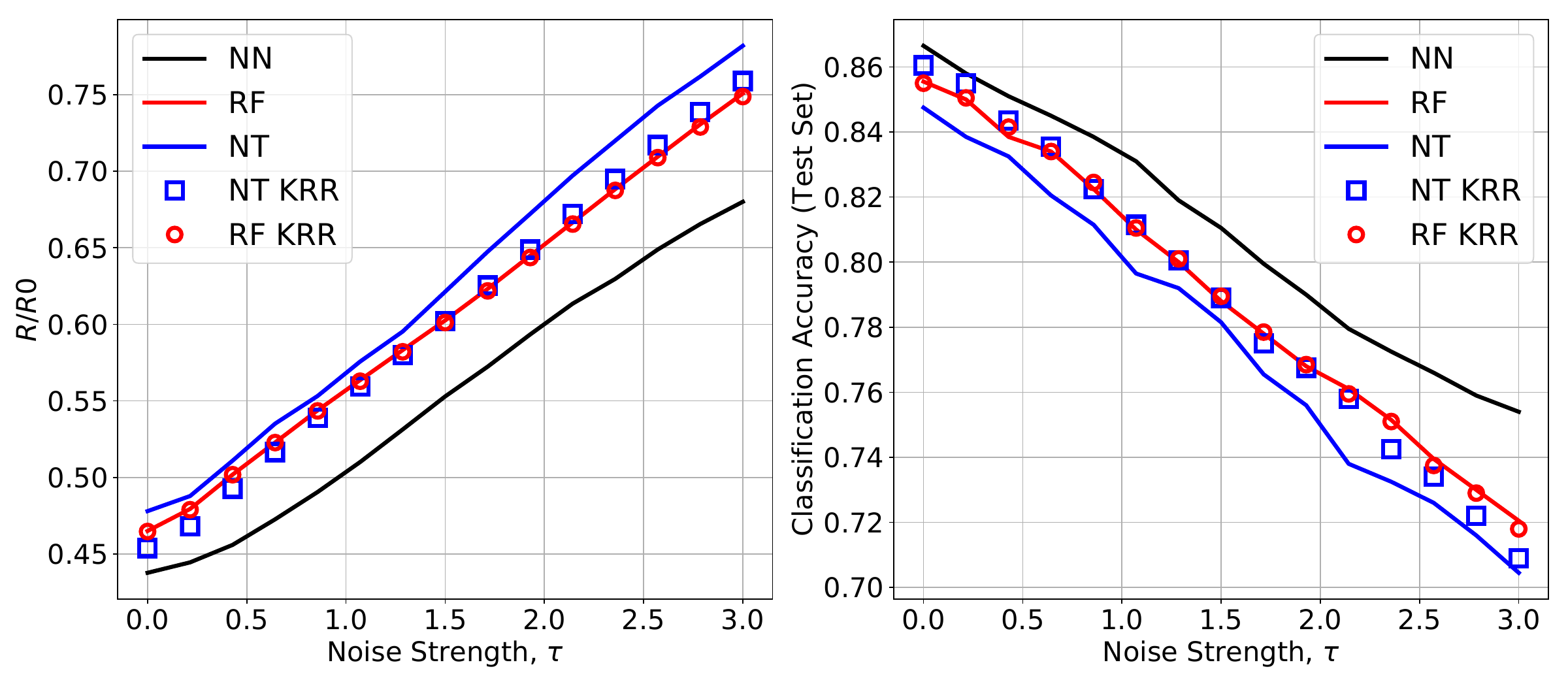}
\caption{Normalized test squared error (left) and test classification accuracy (right) of the models on noisy CIFAR-2. As the noisy intensity increases, the performance gap between $\NN$ and RKHS methods widens. For reference, the accuracy gap between $\NN$ and $\NT$ KRR is only $0.6 \%$ at $\tau = 0$. However, at $\tau = 3$, this gap increases to $4.5 \%$. For finite-width models, $N$ is chosen such that the number of trainable parameters is approximately equal across the models.  For $\NN$ and $\NT$, $N = 4096$ and for $\RF$, $N = 4.2 \times 10^6$. We use the noise filter described in \eqref{eqn:label_formula}.}\label{fig:CIFAR2_noisy_perf}
\end{figure}

\subsubsection{Experiment hyper-parameters}
For $\NT$ and $\NN$, the number of hidden units $N = 4096$. For $\RF$, we fix $N = 4.2 \times 10^6$. These hyper-parameter choices ensure that the models have approximately the same number of trainable parameters. $\NN$ is trained with SGD with $0.9$ momentum and learning-rate described by \eqref{eqn:cosine_rule}. The batch-size is fixed at $250$. $\NT$ is optimized via CG with $750$ maximum iterations. The $\ell_2$ regularization grids used for training these models are listed in Table \ref{table:RidgeRegs}.

\begin{table}[h]
\begin{center}
\caption{Details of regularization parameters used for high-frequency noise experiments. \label{table:RidgeRegs}}

\begin{tabular}{ lll } 
\toprule
Dataset & Model & $\ell_2$ Regularization grid \\
\hline
\hline
\multirow{5}{*}{FMNIST} & $\NN$ & $\{10^{\alpha_i}\}_{i=1}^{20}$, $\alpha_i$ uniformly spaced in $[-6, -2]$ \\ \cline{2-3}
& $\NT$ & $\{10^{\alpha_i}\}_{i=1}^{20}$, $\alpha_i$ uniformly spaced in $[-5, 3]$ \\ \cline{2-3}
& $\RF$ & $\{10^{\alpha_i}\}_{i=1}^{20}$, $\alpha_i$ uniformly spaced in $[-5, 3]$  \\ \cline{2-3}
& $\NT$ KRR & $\{10^{\alpha_i}\}_{i=1}^{20}$, $\alpha_i$ uniformly spaced in $[-1, 5]$ \\ \cline{2-3}
& $\RF$ KRR & $\{10^{\alpha_i}\}_{i=1}^{20}$, $\alpha_i$ uniformly spaced in $[-1, 5]$ \\ \hline
\hline
\multirow{5}{*}{CIFAR-2} & $\NN$ & $\{10^{\alpha_i}\}_{i=1}^{20}$, $\alpha_i$ uniformly spaced in $[-6, -2]$ \\ \cline{2-3}
& $\NT$ & $\{10^{\alpha_i}\}_{i=1}^{20}$, $\alpha_i$ uniformly spaced in $[-4, 4]$ \\ \cline{2-3}
& $\RF$ & $\{10^{\alpha_i}\}_{i=1}^{40}$, $\alpha_i$ uniformly spaced in $[-2, 10]$ \\ \cline{2-3}
& $\NT$ KRR & $\{10^{\alpha_i}\}_{i=1}^{20}$, $\alpha_i$ uniformly spaced in $[-2, 4]$ \\ \cline{2-3}
& $\RF$ KRR & $\{10^{\alpha_i}\}_{i=1}^{20}$, $\alpha_i$ uniformly spaced in $[-2, 4]$ \\
\bottomrule
\end{tabular}
\end{center}
\end{table}

\newpage
\subsection{Low-frequency noise experiments on FMNIST}
To examine the ability of NN and RKHS methods in learning the information in low-variance components of the covariates, we replace the low-frequency components of the image with Gaussian noise. To be specific, we follow the following steps to generate the noisy datasets:

\begin{enumerate}
\item We normalize all images to have mean zero and norm $\sqrt{d}$. 
\item Let $\mathcal{D}_{train}$ denote the set of training images in the DCT-frequency domain. We compute the mean $\mu$ and the covariance $\Sigma$ of the elements of $\mathcal{D}_{train}$.
\item We fix a threshold $\alpha \in \N$ where $1 \leq \alpha \leq k$.
\item Let $\bx$ be an image in the dataset (test or train). We denote the representation of $\bx$ in the frequency domain with $\tilde{\bx}$. For each image, we draw a noise matrix $\bz \sim \cN(\mu, \Sigma)$. We have
\begin{align*}
\big[\tilde{\bx}_{noisy}\big]_{i, j} = 
\left\{
\begin{array}{cc}
(\bz)_{i, j} & \mbox{if } i,j \leq \alpha \\
\tilde{\bx}_{i, j} & \mbox{otherwise}
\end{array}
\right.
\end{align*}
\item We perform IDCT on $\tilde{\bx}_{noisy}$ to get the noisy image $\bx_{noisy}$.
\end{enumerate}
The fraction of the frequencies replaced by noise is $\alpha^2 / k^2$. 

 \begin{figure}[h]
\centering
	\includegraphics[width=\linewidth]{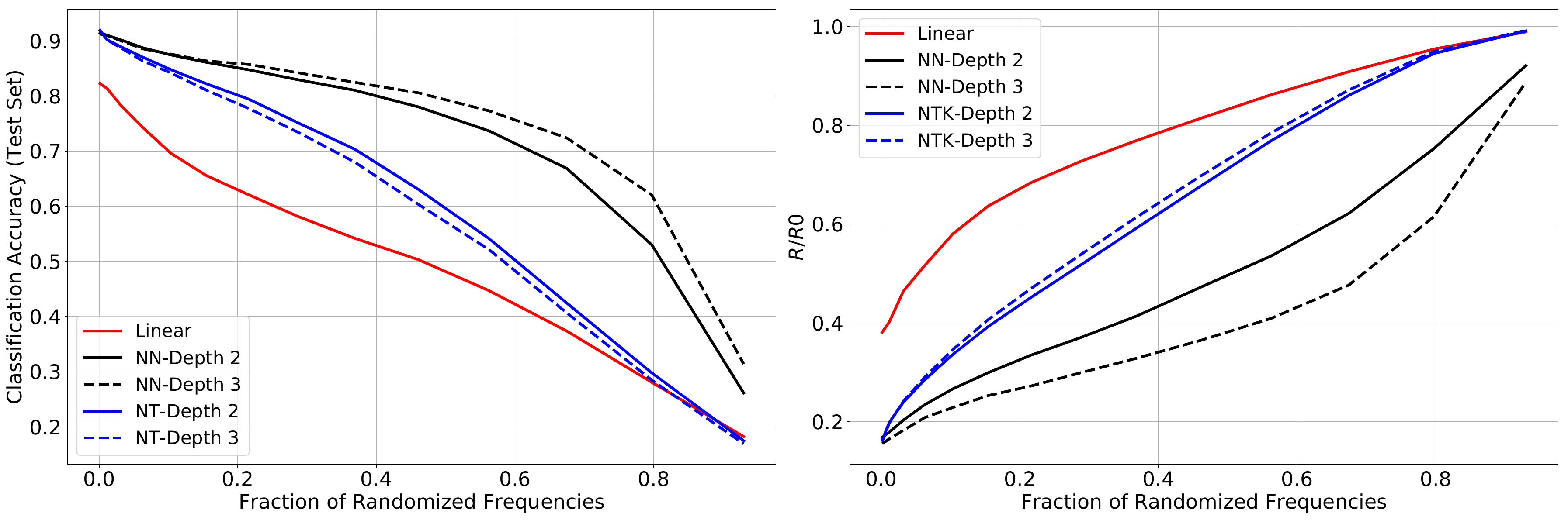}
\caption{Normalized test squared error (left) and test classification accuracy (right) of the models on FMNIST with low-frequency Gaussian noise.}\label{fig:FMNIST_LFG}
\end{figure}

\subsubsection{Experiment hyper-parameters}
For neural networks trained for this experiment, we fix the number of hidden units per-layer to $N = 4096$. This corresponds to approximately $3.2 \times 10^6$ trainable parameters for two-layer networks and $2 \times 10^7$ trainable parameters for three-layer networks. Both models are trained using SGD with momentum with learning rate described by $\eqref{eqn:cosine_rule}$ (with $lr_0 = 10^{-3}$). For the warm-up epochs, we use batch-size of $500$. We increase the batch-size to $1000$ after the warm-up stage. The regularization grids used for training our models are presented in Table \ref{table:LFGaussian}.

\subsection{Low-frequency noise experiments on CIFAR-10}

To test whether our insights are valid for convolutional models, we repeat the same experiment for CNNs trained on CIFAR-10. The noisy data is generated as follows:
\begin{enumerate}
\item Let $\mathcal{D}_{train}$ denote the set of training images in the DCT-frequency domain. Note that CIFAR-10 images have $3$ channels. To convert the images to frequency domain, we apply two-dimensional Discrete Cosine Transform (DCT-II orthogonal) to each channel separately. We compute the mean $\mu$ and the covariance $\Sigma$ of the elements of $\mathcal{D}_{train}$.
\item We fix a threshold $\alpha \in \N$ where $1 \leq \alpha \leq 32$.
\item Let $\bx \in \R^{32 \times 32 \times 3}$ be an image in the dataset (test or train). We denote the representation of $\bx$ in the DCT-frequency domain with $\tilde{\bx}  \in \R^{32 \times 32 \times 3}$. For each image, we draw a noise matrix $\bz \sim \cN(\mu, \Sigma)$. We have
\begin{align*}
\big[\tilde{\bx}_{noisy}\big]_{i, j, k} = 
\left\{
\begin{array}{cc}
(\bz)_{i, j, k} & \mbox{if } i,j \leq \alpha \\
\tilde{\bx}_{i, j, k} & \mbox{otherwise}
\end{array}
\right.
\end{align*}
\item We perform IDCT on $\tilde{\bx}_{noisy}$ to get the noisy image $\bx_{noisy}$.
\item We normalize the noisy data to have zero per-channel mean and unit per-channel standard deviation. The normalization statistics are computed using only the training data.
\end{enumerate}

We use Myrtle-5 architecture for our analysis. The Myrtle family is a collection of simple light-weight high-performance purely convolutional models. The simplicity of these models coupled with their good performance makes them a natural candidate for our analysis. Figure \ref{fig:myrtle} describes the details of this architecture. We fix the number of channels in all convolutional layer to be $N = 512$. This corresponds to approximately $7 \times 10^6$ parameters. Similar to the fully-connected networks, our convolutional models are also optimized via SGD with $0.9$ momentum (learning rate evolves as \eqref{eqn:cosine_rule} with $lr_0 = 0.1$ and $T=70$). We fix the batch-size to $128$. To keep the experimental setting as simple as possible, we do not use any data augmentation for training the network. 

 \begin{figure}[h]
\centering
	\includegraphics[width=\linewidth]{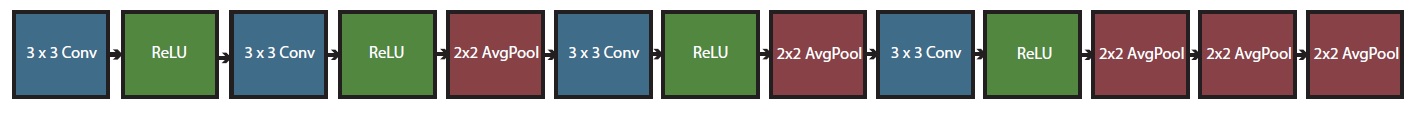}
\caption{Details of Myrtle-5 architecture. The network only uses convolutions and average pooling. In particular, we do not use any batch-normalization \cite{ioffe2015batch} layers in this network. The figure is borrowed from \cite{shankar2020neural}.}\label{fig:myrtle}
\end{figure}

 \begin{figure}[h]
\centering
	\includegraphics[width=\linewidth]{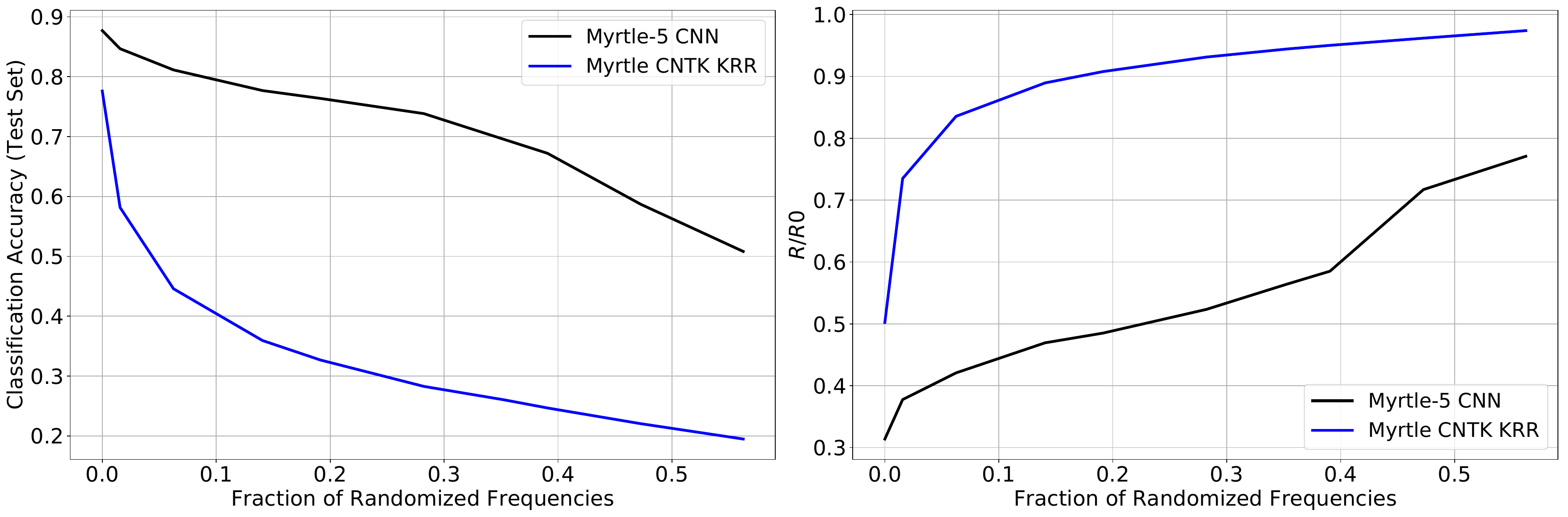}
\caption{Performance of Myrtle-5 and KRR with convolutional neural tangent kernel (CNTK) on noisy CIFAR-10. CNTK is generated from the Myrtle-5 architecture using neural-tangents JAX library. When no noise is present in the data, the CNN achieves $87.7\%$ and the CNTK achieves $77.6\%$ classification accuracy. After randomizing only $1.5\%$ of the frequencies (corresponding to $\alpha = 4$) CNTK classification performance falls to $58.2\%$ while the CNN retains $84.7\%$ accuracy.}\label{fig:CNN_perf}
\end{figure}

\begin{figure}[h]
\centering
	\includegraphics[width=0.49\linewidth]{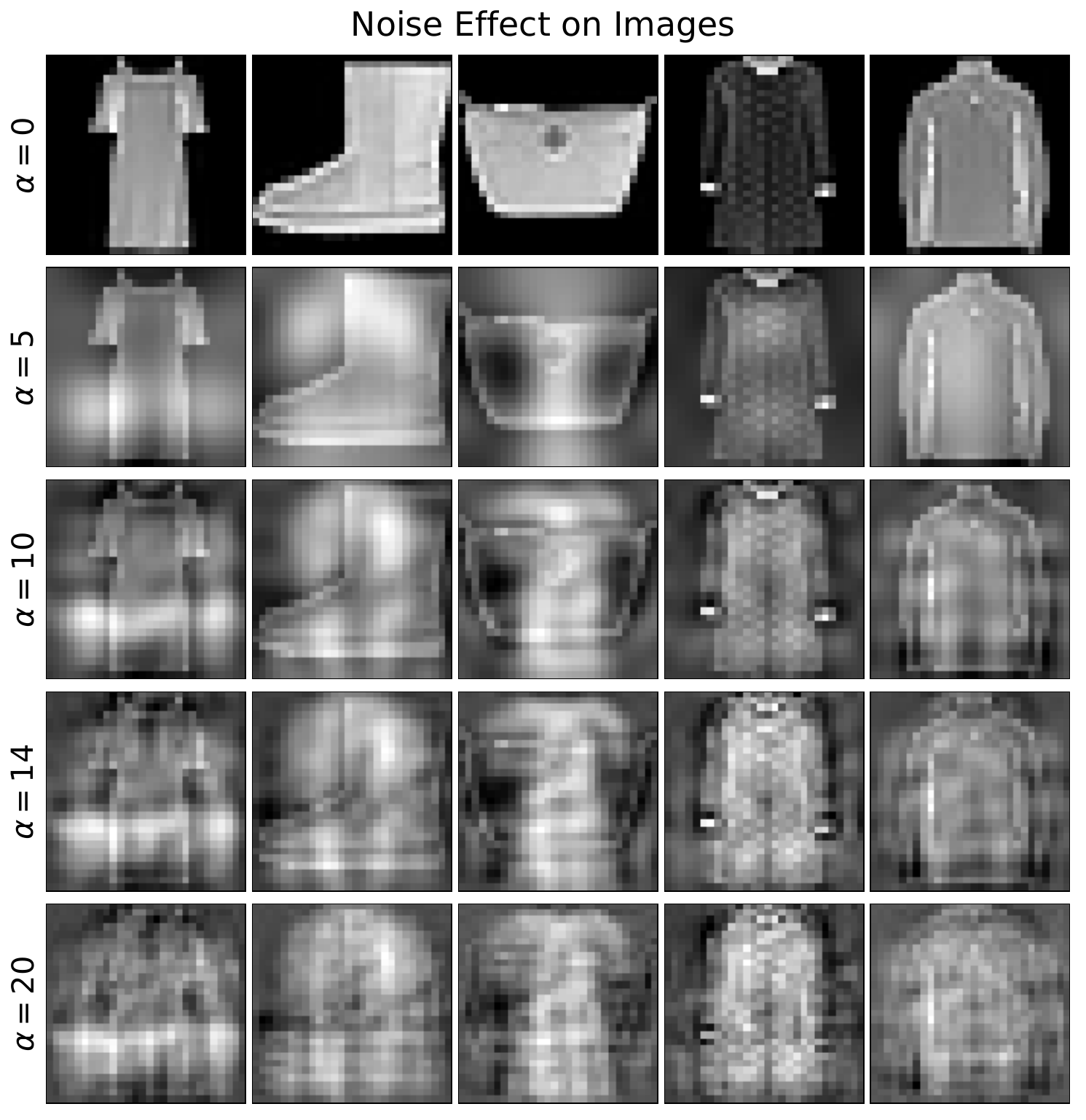}
	\includegraphics[width=0.49\linewidth]{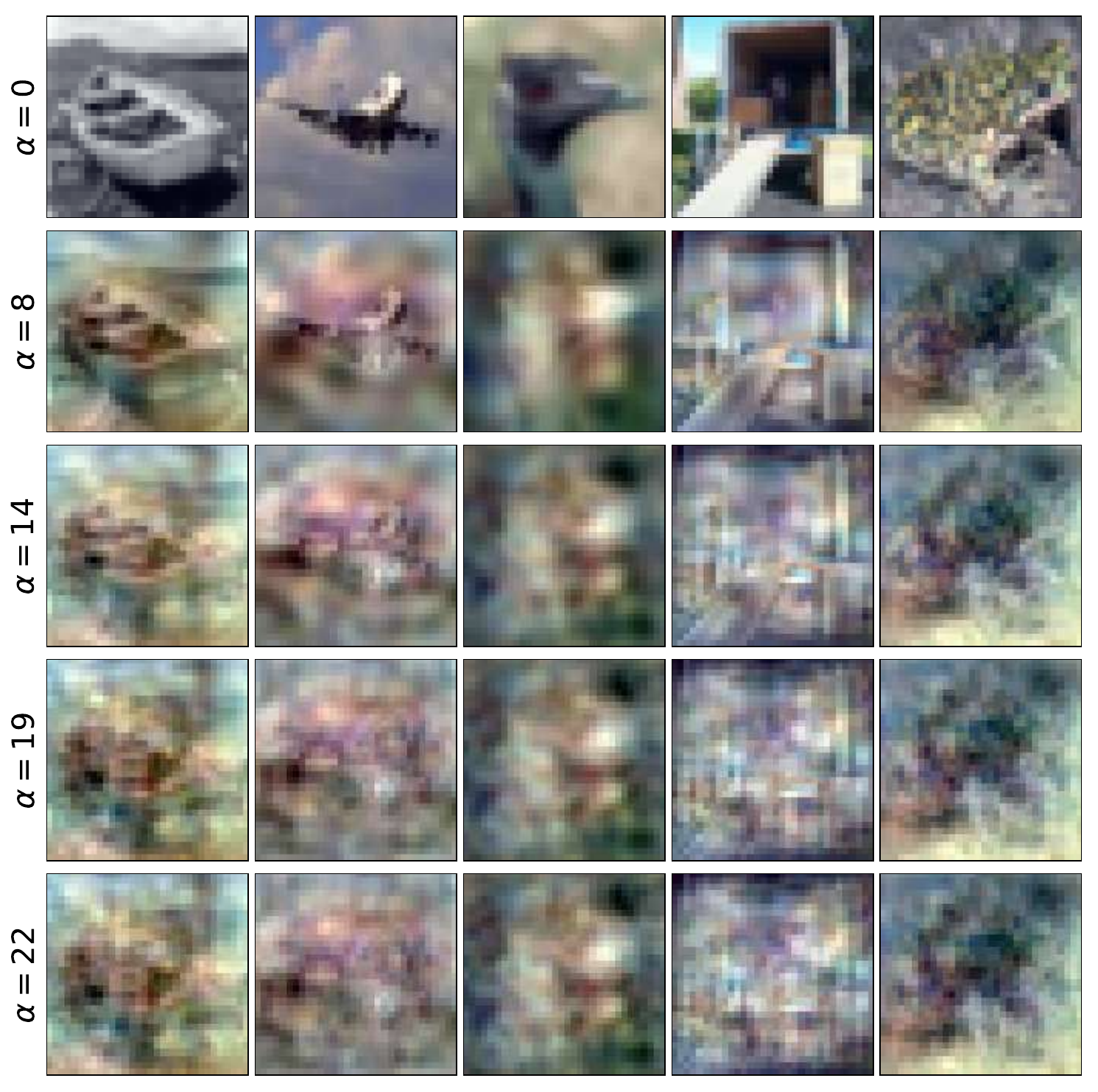}
\caption{The effect of low-frequency noise for various cut-off thresholds, $\alpha$. The left panel corresponds to the noisy FMNIST images and the right panel corresponds to CIFAR-10 images. In order to plot CIFAR-10 images, we rescale them to the interval $[0, 1]$.}\label{fig:LF_images}
\end{figure}

\begin{table}[h]
\begin{center}
\caption{Details of regularization parameters used for low-frequency noise experiments. \label{table:LFGaussian}}
\begin{tabular}{ lll } 
\toprule
Dataset & Model & $\ell_2$ Regularization grid \\
\hline
\hline
\multirow{5}{*}{FMNIST} & $\NN$ depth 2 & $\{10^{\alpha_i}\}_{i=1}^{20}$, $\alpha_i$ uniformly spaced in $[-6, -2]$ \\ \cline{2-3}
& $\NN$ depth 3 & $\{10^{\alpha_i}\}_{i=1}^{10}$, $\alpha_i$ uniformly spaced in $[-7, -5]$ \\ \cline{2-3}
& NTK KRR depth 2 & $\{10^{\alpha_i}\}_{i=1}^{20}$, $\alpha_i$ uniformly spaced in $[-1, 5]$  \\ \cline{2-3}
& NTK KRR depth 3 & $\{10^{\alpha_i}\}_{i=1}^{20}$, $\alpha_i$ uniformly spaced in $[-4, 3]$ \\ \cline{2-3}
& Linear model & $\{10^{\alpha_i}\}_{i=1}^{30}$, $\alpha_i$ uniformly spaced in $[-1, 5]$ \\ \hline
\hline
\multirow{2}{*}{CIFAR-10} & Myrtle-5 & $\{10^{\alpha_i}\}_{i=1}^{10}$, $\alpha_i$ uniformly spaced in $[-5, -2]$ \\ \cline{2-3}
& KRR (Myrtle-5 NTK) & $\{10^{\alpha_i}\}_{i=1}^{20}$, $\alpha_i$ uniformly spaced in $[-6, 1]$ \\ \hline
\bottomrule
\end{tabular}
\end{center}
\end{table}

\section{Technical background on function spaces on the sphere}
\label{sec:Background}

\subsection{Functional spaces over the sphere}

For $d \ge 1$, we let $\S^{d-1}(r) = \{\bx \in \R^{d}: \| \bx \|_2 = r\}$ denote the sphere with radius $r$ in $\reals^d$.
We will mostly work with the sphere of radius $\sqrt d$, $\S^{d-1}(\sqrt{d})$ and will denote by $\mu_{d-1}$ the uniform probability measure on $\S^{d-1}(\sqrt d)$. 
All functions in the following are assumed to be elements of $ L^2(\S^{d-1}(\sqrt d) ,\mu_{d-1})$, with scalar product and norm denoted as $\<\,\cdot\,,\,\cdot\,\>_{L^2}$
and $\|\,\cdot\,\|_{L^2}$:
\begin{align}
\<f,g\>_{L^2} \equiv \int_{\S^{d-1}(\sqrt d)} f(\bx) \, g(\bx)\, \mu_{d-1}(\de \bx)\,.
\end{align}

For $\ell\in\integers_{\ge 0}$, let $\tilde{V}_{d,\ell}$ be the space of homogeneous harmonic polynomials of degree $\ell$ on $\reals^d$ (i.e. homogeneous
polynomials $q(\bx)$ satisfying $\Delta q(\bx) = 0$), and denote by $V_{d,\ell}$ the linear space of functions obtained by restricting the polynomials in $\tilde{V}_{d,\ell}$
to $\S^{d-1}(\sqrt d)$. With these definitions, we have the following orthogonal decomposition
\begin{align}
L^2(\S^{d-1}(\sqrt d) ,\mu_{d-1}) = \bigoplus_{\ell=0}^{\infty} V_{d,\ell}\, . \label{eq:SpinDecomposition}
\end{align}
The dimension of each subspace is given by
\begin{align}
\dim(V_{d,\ell}) = B(d, \ell) = \frac{2 \ell + d - 2}{\ell} { \ell + d - 3 \choose \ell - 1} \, .
\end{align}
For each $\ell\in \integers_{\ge 0}$, the spherical harmonics $\{ Y_{\ell, j}^{(d)}\}_{1\le j \in \le B(d, \ell)}$ form an orthonormal basis of $V_{d,\ell}$:
\[
\<Y^{(d)}_{ki}, Y^{(d)}_{sj}\>_{L^2} = \delta_{ij} \delta_{ks}.
\]
Note that our convention is different from the more standard one, that defines the spherical harmonics as functions on $\S^{d-1}(1)$.
It is immediate to pass from one convention to the other by a simple scaling. We will drop the superscript $d$ and write $Y_{\ell, j} = Y_{\ell, j}^{(d)}$ whenever clear from the context.

We denote by $\proj_k$  the orthogonal projections to $V_{d,k}$ in $L^2(\S^{d-1}(\sqrt d),\mu_{d-1})$. This can be written in terms of spherical harmonics as
\begin{align}
\proj_k f(\bx) \equiv& \sum_{l=1}^{B(d, k)} \< f, Y_{kl}\>_{L^2} Y_{kl}(\bx). 
\end{align}
We also define
$\proj_{\le \ell}\equiv \sum_{k =0}^\ell \proj_k$, $\proj_{>\ell} \equiv \id -\proj_{\le \ell} = \sum_{k =\ell+1}^\infty \proj_k$,
and $\proj_{<\ell}\equiv \proj_{\le \ell-1}$, $\proj_{\ge \ell}\equiv \proj_{>\ell-1}$.

\subsection{Gegenbauer polynomials}
\label{sec:Gegenbauer}

The $\ell$-th Gegenbauer polynomial $Q_\ell^{(d)}$ is a polynomial of degree $\ell$. Consistently
with our convention for spherical harmonics, we view $Q_\ell^{(d)}$ as a function $Q_{\ell}^{(d)}: [-d,d]\to \reals$. The set $\{ Q_\ell^{(d)}\}_{\ell\ge 0}$
forms an orthogonal basis on $L^2([-d,d],\tilde\mu^1_{d-1})$, where $\tilde\mu^1_{d-1}$ is the distribution of $\sqrt{d}\<\bx,\be_1\>$ when $\bx\sim \mu_{d-1}$,
satisfying the normalization condition:
\begin{equation}
\begin{aligned}
\int_{-d}^d Q^{(d)}_k(t) \, Q^{(d)}_j(t ) \, \de \tilde\mu^1_{d-1}  = &  \frac{w_{d-2}}{d w_{d-1}  }  \int_{-d}^d  Q^{(d)}_k(t) \, Q^{(d)}_j(t ) \, \Big( 1 - \frac{t^2}{d^2} \Big)^{(d-3)/2} \, dt  \\  
\label{eq:GegenbauerNormalization}
= & \frac{1}{B(d,k)}\, \delta_{jk}   \, ,
\end{aligned}
\end{equation}
where we denoted $w_{d-1} = \frac{2 \pi^{d/2} }{\Gamma (d/2)}$ the surface area of the sphere $\S^{d-1} (1)$. In particular, these polynomials are normalized so that  $Q_\ell^{(d)}(d) = 1$. 

Gegenbauer polynomials are directly related to spherical harmonics as follows. Fix $\bv\in\S^{d-1}(\sqrt{d})$ and 
consider the subspace of  $V_{\ell}$ formed by all functions that are invariant under rotations in $\reals^d$ that keep $\bv$ unchanged.
It is not hard to see that this subspace has dimension one, and coincides with the span of the function $Q_{\ell}^{(d)}(\<\bv,\,\cdot\,\>)$.

We will use the following properties of Gegenbauer polynomials
\begin{enumerate}
\item For $\bx, \by \in \S^{d-1}(\sqrt d)$
\begin{align}
\< Q_j^{(d)}(\< \bx, \cdot\>), Q_k^{(d)}(\< \by, \cdot\>) \>_{L^2} = \frac{1}{B(d,k)}\delta_{jk}  Q_k^{(d)}(\< \bx, \by\>).  \label{eq:ProductGegenbauer}
\end{align}
\item For $\bx, \by \in \S^{d-1}(\sqrt d)$
\begin{align}
Q_k^{(d)}(\< \bx, \by\> ) = \frac{1}{B(d, k)} \sum_{i =1}^{ B(d, k)} Y_{ki}^{(d)}(\bx) Y_{ki}^{(d)}(\by). \label{eq:GegenbauerHarmonics}
\end{align}
\item Recurrence formula 
\begin{align}
\frac{t}{d}\,  Q_k^{(d)}(t) = \frac{k}{2k + d - 2} Q_{k-1}^{(d)}(t) + \frac{k + d - 2}{2k + d - 2} Q_{k+1}^{(d)}(t). \label{eq:RecursionG}
\end{align}
\item Rodrigues formula
\begin{align}
Q_k^{(d)}(t) = (-1/2)^k d^k \frac{\Gamma((d - 1)/2)}{\Gamma(k + (d - 1)/2)} \Big( 1 -  \frac{t^2}{d^2} \Big)^{(3-d)/2} \Big( \frac{\de }{\de t}\Big)^k \Big(1 - \frac{t^2}{d^2} \Big)^{k + (d-3)/2}. 
\label{eq:Rogrigues_formula}
\end{align}
\end{enumerate}
Note in particular that property 2 implies that --up to a constant-- $Q_k^{(d)}(\< \bx, \by\> )$ is a representation of the projector onto 
the subspace of degree -$k$ spherical harmonics
\begin{align}
(\proj_k f)(\bx) = B(d,k) \int_{\S^{d-1}(\sqrt{d})} \, Q_k^{(d)}(\< \bx, \by\> )\,  f(\by)\, \mu_{d-1}(\de\by)\, .\label{eq:ProjectorGegenbauer}
\end{align}

\subsection{Hermite polynomials}

The Hermite polynomials $\{\He_k\}_{k\ge 0}$ form an orthogonal basis of $L^2(\reals,\gamma)$, where $\gamma(\de x) = e^{-x^2/2}\de x/\sqrt{2\pi}$ 
is the standard Gaussian measure, and $\He_k$ has degree $k$. We will follow the classical normalization (here and below, expectation is with respect to
$G\sim\normal(0,1)$):
\begin{align}
\E\big\{\He_j(G) \,\He_k(G)\big\} = k!\, \delta_{jk}\, .
\end{align}
As a consequence, for any function $g\in L^2(\reals,\gamma)$, we have the decomposition
\begin{align}
g(x) = \sum_{k=0}^{\infty}\frac{\mu_k(g)}{k!}\, \He_k(x)\, ,\;\;\;\;\;\; \mu_k(g) \equiv \E\big\{g(G)\, \He_k(G)\}\, .
\end{align}

Notice that for functions $g$ that are $k$-weakly differentiable with $g^{(k)}$ the $k$-th weak derivative, we have
\begin{equation} \label{eq:weak_derivative_hermite_coefficient}
\mu_k (g) = \E_{G} [ g^{(k)} (G) ].
\end{equation}

The Hermite polynomials can be obtained as high-dimensional limits of the Gegenbauer polynomials introduced in the previous section. Indeed, 
the Gegenbauer polynomials are constructed by Gram-Schmidt orthogonalization of the monomials $\{x^k\}_{k\ge 0}$ with respect to the measure 
$\tilde\mu^1_{d-1}$, while Hermite polynomial are obtained by Gram-Schmidt orthogonalization with respect to $\gamma$. Since $\tilde\mu^1_{d-1}\Rightarrow \gamma$
(here $\Rightarrow$ denotes weak convergence),
it is immediate to show that, for any fixed integer $k$, 
\begin{align}
\lim_{d \to \infty} \Coeff\{ Q_k^{(d)}( \sqrt d x) \, B(d, k)^{1/2} \} = \Coeff\left\{ \frac{1}{(k!)^{1/2}}\,\He_k(x) \right\}\, .\label{eq:Gegen-to-Hermite}
\end{align}
Here and below, for $P$ a polynomial, $\Coeff\{ P(x) \}$ is  the vector of the coefficients of $P$.

\subsection{Tensor product of spherical harmonics}

We will consider in this paper the product space 
\begin{equation}\label{eq:def_PS}
\PS^{\bd} \equiv \prod_{q = 1}^Q \S^{d_q - 1} \lp \sqrt{d_q} \rp,
\end{equation}
and the uniform measure on $\PS^\bd$, denoted $\mu_{\bd} \equiv \mu_{d_1 - 1} \otimes \ldots \otimes \mu_{d_Q -1} = \bigotimes_{q \in [Q]} \mu_{d_q - 1}$, where we recall $\mu_{d_q - 1} \equiv \Unif ( \S^{d_q-1} (\sqrt{d_q}))$. We consider the functional space of $L^2 (\PS^{\bd}, \mu_{\bd})$ with scalar product and norm denoted as $\< \cdot , \cdot \>_{L^2}$ and $\| \cdot \|_{L^2}$:
\[
\< f , g \>_{L^2} \equiv  \int_{\PS^{\bd}} f( \obx ) g ( \obx ) \, \mu_{\bd } ( \de \obx ).
\] 
For $\bell = ( \ell_1 , \ldots , \ell_Q ) \in \integers_{\ge 0}^Q$, let $\tilde{V}^{\bd}_{\bell} \equiv \tilde{V}_{d_1,\ell_1 } \otimes \ldots \otimes \tilde{V}_{d_Q,\ell_Q} $ be the span of tensor products of $Q$ homogeneous harmonic polynomials, respectively of degree $\ell_q$ on $\reals^{d_q}$ in variable $\obx_q$. Denote by $V^{\bd}_{\bell}$ the linear space of functions obtained by restricting the polynomials in $\tilde{V}^{\bd}_{\bell}$ to $\PS^{\bd}$. With these definitions, we have the following orthogonal decomposition
\begin{align}
L^2(\PS^{\bd}  ,\mu_{\bd} ) = \bigoplus_{\bell \in \Z_{\geq 0 }^Q } V^{\bd}_{\bell}\, . \label{eq:ProductSpinDecomposition}
\end{align}
The dimension of each subspace is given by
\[
B(\bd , \bell ) \equiv {\rm dim} ( V^{\bd}_{\bell} ) = \prod_{q = 1}^Q B( d_q , \ell_q) ,
\]
where we recall
\[
B(d, \ell) = \frac{2 \ell + d - 2}{\ell} { \ell + d - 3 \choose \ell - 1} \, .
\]

We recall that for each $\ell \in \Z_{\geq 0}$, the spherical harmonics $\{ Y_{\ell j}^{(d)}   \}_{ j \in [ B(d, \ell)]}$ form an orthonormal basis of $V^{(d)}_{\ell}$ on $\S^{d-1} (\sqrt{d})$. Similarly, for each $\bell \in  \Z_{\geq 0 }^Q$, the tensor product of spherical harmonics $\{ Y_{\bell,\bs}^{\bd} \}_{\bs \in [B(\bd , \bell)]}$ form an orthonormal basis of $V^{\bd}_{\bell}$, where $\bs = (s_1 , \ldots , s_Q ) \in [B(\bd , \bell)]$ signify $s_q \in [B(d_q , \ell_q)]$ for $q = 1 , \ldots , Q$ and
\[
 Y_{\bell,\bs}^{\bd} \equiv Y^{(d_1)}_{\ell_1, s_1} \otimes Y^{(d_2)}_{\ell_2, s_2} \otimes \ldots \otimes Y^{(d_Q)}_{\ell_Q, s_Q} = \bigotimes_{q=1}^Q Y^{(d_q)}_{\ell_q, s_q}.
\]
We have the following orthonormalization property
\[
\< Y^{\bd}_{\bell , \bs} , Y^{\bd}_{\bell'  , \bs ' } \>_{L^2} = \prod_{q=1}^Q \Big\< Y_{\ell_q s_q}^{(d_q)} , Y_{\ell_q ' s_q ' }^{(d_q)} \Big\>_{L^2 \lp \S^{d_q - 1} ( \sqrt{d_q} ) \rp} =\prod_{q=1}^Q \delta_{\ell_q ,\ell_q'} \delta_{s_q , s_q '}  = \delta_{\bell , \bell'} \delta_{\bs , \bs '}.
\]
We denote by $\proj_{\bk}$  the orthogonal projections on $V^{\bd}_{\bk}$ in $L^2(\PS^{\bd}  ,\mu_{\bd})$. This can be written in terms of spherical harmonics as
\begin{align}
\proj_{\bk} f(\obx) \equiv  \sum_{\bs \in [B(\bd , \bk)]}  \< f, Y^{\bd}_{\bk,\bs} \>_{L^2}  Y^{\bd}_{\bk,\bs}  ( \obx). 
\end{align}
We will denote for any $\cQ \subset \Z_{\geq 0 }^Q$, $\proj_{\cQ}$ the orthogonal projection on $\bigoplus_{\bk \in \cQ} V^{\bd}_{\bk}$, given by
\[
\proj_{\cQ} = \sum_{\bk \in \cQ} \proj_{\bk}.
\]
Similarly, the projection on $\cQ^c$, the complementary of the set $\cQ$ in $\Z_{\geq 0 }^Q$, is given by
\[
\proj_{\cQ^c} = \sum_{\bk \not\in \cQ} \proj_{\bk}.
\]

\subsection{Tensor product of Gegenbauer polynomials}

We recall that $\Tilde \mu^1_{d-1}$ denotes the distribution of $\sqrt{d}\<\bx,\be_d \>$ when $\bx \sim \Unif (\S^{d-1} (\sqrt{d}))$. We consider similarly the projection of $\PS^\bd$ on one coordinate per sphere. We define
\begin{equation}\label{eq:def_psd_mu1_d}
\ps^\bd \equiv \prod_{q = 1}^Q  [ -d_q , d_q ], \qquad \Tilde \mu^1_{\bd} \equiv \Tilde \mu^1_{d_1 -1 } \otimes \ldots \otimes \Tilde \mu^1_{d_Q -1 }  = \bigotimes_{q = 1}^Q  \Tilde \mu^1_{d_q -1 },
\end{equation}
and consider $L^2(\ps^\bd , \Tilde \mu^1_{\bd} )$. 

Recall that the Gegenbauer polynomials $\lb Q^{(d)}_k \rb_{k \geq 0}$ form an orthogonal basis of $L^2 ( [-d , d] , \Tilde \mu^1_{d-1})$. 

Define for each $\bk \in \Z_{\geq 0 }^Q$, the tensor product of Gegenbauer polynomials
\begin{equation}\label{def:prod_gegenbauer}
Q^{\bd}_{\bk} \equiv Q^{(d_1)}_{k_1} \otimes \ldots \otimes Q^{(d_q)}_{k_Q} = \bigotimes_{q = 1}^Q  Q^{(d_q)}_{k_q}.
\end{equation} 
We will use the following properties of the tensor product of Gegenbauer polynomials:

\begin{lemma}[Properties of products of Gegenbauer]\label{lem:prod_gegenbauer_prop}
Consider the tensor product of Gegenbauer polynomials $\lb Q^{\bd}_{\bk} \rb_{\bk \in \posint^Q}$ defined in Eq.~\eqref{def:prod_gegenbauer}. Then

\begin{itemize}
\item[$(a)$] The set $\lb Q^{\bd}_{\bk} \rb_{\bk \in \posint^Q}$ forms an orthogonal basis on $L^2(\ps^\bd , \Tilde \mu^1_{\bd})$, satisfying the normalization condition: for any $\bk , \bk' \in \posint^Q$,
\begin{equation}\label{eq:normalization_prod_gegenbauer}
\bla Q^{\bd}_{\bk } , Q^{\bd}_{\bk'} \bra_{L^2 (\ps^\bd)} =  \frac{1}{B(\bd,\bk)}\, \delta_{\bk , \bk'} 
\end{equation}

\item[$(b)$]  For $\obx = ( \obx^{(1)} , \ldots , \obx^{(Q)})$ and $\oby  = ( \oby^{(1)}  , \ldots , \oby^{(Q)}  ) \in \PS^{\bd}$, and $\bk , \bk' \in \posint^Q$,
\begin{equation}
\begin{aligned}
& \Big\< Q^{\bd}_{\bk }  \lp \lb \<\obx^{(q)} ,  \cdot \> \rb_{q\in[Q]} \rp, Q^{\bd}_{\bk'} \lp \lb  \<\oby^{(q)} ,  \cdot \> \rb_{q\in[Q]} \rp \Big\>_{L^2 \lp \PS^\bd \rp} \\
= &   \frac{1}{B(\bd,\bk)}\, \delta_{\bk , \bk'}  \, Q^{\bd}_{\bk }  \lp \lb \<\obx^{(q)} ,  \oby^{(q)} \> \rb_{q\in[Q]} \rp \,.
\end{aligned}
\label{eq:ProductProdGegenbauer}
\end{equation}

\item[$(c)$] For $\obx = ( \obx^{(1)} , \ldots , \obx^{(Q)})$ and $\oby  = ( \oby^{(1)}  , \ldots , \oby^{(Q)}  ) \in \PS^{\bd}$, and $\bk  \in \posint^Q$,
\begin{align}
Q^{\bd}_{\bk }  \lp \lb \<\obx^{(q)} ,  \oby^{(q)} \> \rb_{q\in[Q]} \rp  =  \frac{1}{B(\bd ,\bk )} \sum_{\bs \in B(\bd , \bk) }  Y^{\bd}_{\bk,\bs} ( \obx ) Y^{\bd}_{\bk,\bs} ( \oby ) . \label{eq:ProdGegenbauerHarmonics}
\end{align}

\end{itemize}

\end{lemma}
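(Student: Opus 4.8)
The plan is to reduce each of the three claims to its one-sphere analogue, already recorded in the excerpt, using that the reference measures factor as products: $\Tilde\mu^1_{\bd} = \bigotimes_{q=1}^Q \Tilde\mu^1_{d_q-1}$ on $\ps^\bd$ and $\mu_{\bd} = \bigotimes_{q=1}^Q \mu_{d_q-1}$ on $\PS^\bd$. The one observation that drives everything is that the functions in the statement are pure tensors: $Q^{\bd}_{\bk} = \bigotimes_{q=1}^Q Q^{(d_q)}_{k_q}$ by \eqref{def:prod_gegenbauer}, and, as an element of $L^2(\PS^\bd)$, the map $\obz \mapsto Q^{\bd}_{\bk}\big(\{\<\obx^{(q)},\obz^{(q)}\>\}_{q\in[Q]}\big)$ equals $\bigotimes_{q=1}^Q Q^{(d_q)}_{k_q}(\<\obx^{(q)},\cdot\>)$. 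Since the $L^2$-inner product against a product measure of two pure tensors factors into the product of the factorwise inner products, each identity to be proved splits into $Q$ identities on single spheres.

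For part $(a)$, orthogonality and the normalization constant are immediate: factoring the inner product and applying \eqref{eq:GegenbauerNormalization} to each factor,
\begin{align*}
\< Q^{\bd}_{\bk}, Q^{\bd}_{\bk'}\>_{L^2(\ps^\bd)} &= \prod_{q=1}^Q \< Q^{(d_q)}_{k_q}, Q^{(d_q)}_{k_q'}\>_{L^2([-d_q,d_q],\Tilde\mu^1_{d_q-1})} \\
&= \prod_{q=1}^Q \frac{1}{B(d_q,k_q)}\,\delta_{k_q,k_q'} = \frac{1}{B(\bd,\bk)}\,\delta_{\bk,\bk'}.
\end{align*}
For completeness of $\{Q^{\bd}_{\bk}\}_{\bk\in\posint^Q}$ in $L^2(\ps^\bd,\Tilde\mu^1_{\bd})$ I would invoke the standard fact that, for a finite product of measure spaces, the tensor products of orthogonal bases of the factors form an orthogonal basis of the $L^2$ of the product measure (a routine consequence of density of simple functions together with a monotone-class argument), applied with the single-sphere Gegenbauer bases $\{Q^{(d_q)}_j\}_{j\ge 0}$ of $L^2([-d_q,d_q],\Tilde\mu^1_{d_q-1})$.

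Parts $(b)$ and $(c)$ run through the same machine. For $(b)$, the pure-tensor observation followed by \eqref{eq:ProductGegenbauer} applied factorwise gives
\begin{align*}
\Big\< Q^{\bd}_{\bk}\big(\{\<\obx^{(q)},\cdot\>\}_q\big), Q^{\bd}_{\bk'}\big(\{\<\oby^{(q)},\cdot\>\}_q\big)\Big\>_{L^2(\PS^\bd)} &= \prod_{q=1}^Q \frac{\delta_{k_q,k_q'}}{B(d_q,k_q)}\, Q^{(d_q)}_{k_q}(\<\obx^{(q)},\oby^{(q)}\>) \\
&= \frac{\delta_{\bk,\bk'}}{B(\bd,\bk)}\, Q^{\bd}_{\bk}\big(\{\<\obx^{(q)},\oby^{(q)}\>\}_q\big).
\end{align*}
For $(c)$, applying \eqref{eq:GegenbauerHarmonics} to each $Q^{(d_q)}_{k_q}(\<\obx^{(q)},\oby^{(q)}\>)$, expanding the product of $Q$ sums over $s_q\in[B(d_q,k_q)]$ into a single sum over $\bs=(s_1,\dots,s_Q)\in[B(\bd,\bk)]$, and using $\prod_{q=1}^Q Y^{(d_q)}_{k_q,s_q}(\obx^{(q)}) = Y^{\bd}_{\bk,\bs}(\obx)$ (the definition of the tensor spherical harmonics) together with $\prod_{q=1}^Q B(d_q,k_q) = B(\bd,\bk)$, yields the claimed formula.

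I do not expect a genuine obstacle: the entire argument is multiplicativity over the $Q$ factors plus the single-sphere identities already available. The two points that merit a little care are the completeness claim in $(a)$ --- i.e. that finite tensor products of orthogonal bases stay a basis for the product measure --- and, more prosaically, the index bookkeeping in $(c)$, where one must identify $[B(\bd,\bk)]$ with $\prod_{q}[B(d_q,k_q)]$ when converting the product of sums into a sum over multi-indices.
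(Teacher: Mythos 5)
Your proof is correct and follows essentially the same route as the paper: factor the product-measure inner product into the $Q$ single-sphere factors and apply the one-sphere identities \eqref{eq:GegenbauerNormalization}, \eqref{eq:ProductGegenbauer}, \eqref{eq:GegenbauerHarmonics} coordinatewise. The only small difference is that you explicitly flag the completeness claim in part $(a)$, which the paper's proof leaves implicit; this is a reasonable addition rather than a departure in method.
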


Notice that Lemma \ref{lem:prod_gegenbauer_prop}.(c) implies that $Q_{\bk }^{\bd}$ is (up to a constant) a representation of the projector onto the subspace $V^{\bd}_{\bk}$
\[
[ \proj_{\bk} f  ] ( \obx ) = B( \bd , \bk ) \int_{\PS^\bd} Q^{\bd}_{\bk }  \lp \lb \<\obx^{(q)} ,  \oby^{(q)} \> \rb_{q\in[Q]} \rp f( \oby) \mu_{\bd} ( \de \oby ) \, .
\]

\begin{proof}[Proof of Lemma \ref{lem:prod_gegenbauer_prop}]

Part $(a)$ comes from the normalization property \eqref{eq:GegenbauerNormalization} of Gegenbauer polynomials,
\[
\begin{aligned}
\bla Q^{\bd}_{\bk } , Q^{\bd}_{\bk'} \bra_{L^2 (\ps^\bd)} = & \Big\< Q^{\bd}_{\bk }  \lp \lb \sqrt{d_q} \<\be_q ,  \cdot \> \rb_{q\in[Q]} \rp, Q^{\bd}_{\bk'} \lp \lb \sqrt{d_q} \<\be_q ,  \cdot \> \rb_{q\in[Q]} \rp \Big\>_{L^2 \lp \PS^\bd \rp} \\
= & \prod_{q = 1}^Q  \Big\< Q_{k_q}^{(d_q)} \lp \sqrt{d_q}\< \be_q, \cdot \> \rp, Q_{k_q'}^{(d_q)} \lp \sqrt{d_q}\< \be_q, \cdot \> \rp \Big\>_{L^2\lp \S^{ d_q - 1} \lp\sqrt{d_q}\rp \rp }\\
 =& \prod_{q = 1}^Q  \frac{1}{B(d_q , k_q)} \delta_{k_q , k_q'} \\
 = &  \frac{1}{B(\bd,\bk)}\, \delta_{\bk , \bk'}  \, ,
\end{aligned}
\]
where the $\lb \be_q \rb_{q \in [Q]}$ are unit vectors in $\R^{d_q}$ respectively. 

Part $(b)$ comes from Eq.~\eqref{eq:ProductGegenbauer},
\[
\begin{aligned}
& \Big\< Q^{\bd}_{\bk }  \lp \lb \<\obx^{(q)} ,  \cdot \> \rb_{q\in[Q]} \rp, Q^{\bd}_{\bk'} \lp \lb  \<\oby^{(q)} ,  \cdot \> \rb_{q\in[Q]} \rp \Big\>_{L^2 \lp \PS^\bd \rp} \\
= & \prod_{q = 1}^Q  \Big\< Q_{k_q}^{(d_q)} \lp \< \obx^{(q)}, \cdot \> \rp, Q_{k_q'}^{(d_q)} \lp \< \oby^{(q)} , \cdot \> \rp \Big\>_{L^2\lp \S^{ d_q - 1} \lp\sqrt{d_q}\rp \rp } \\
=&  \prod_{q = 1}^Q  \frac{1}{B(d_q , k_q)} \delta_{k_q , k_q'}  Q^{(d_q)}_{k_q} \lp \< \obx^{(q)} , \oby^{(q)} \> \rp \\
=&   \frac{1}{B(\bd,\bk)}\, \delta_{\bk , \bk'}  \, Q^{\bd}_{\bk }  \lp \lb \<\obx^{(q)} ,  \oby^{(q)} \> \rb_{q\in[Q]} \rp \,,
\end{aligned}
\]
while part $(c)$ is a direct consequence of Eq.~\eqref{eq:GegenbauerHarmonics}.

\end{proof}

\subsection{Notations}
Throughout the proofs, $O_d(\, \cdot \, )$  (resp. $o_d (\, \cdot \,)$) denotes the standard big-O (resp. little-o) notation, where the subscript $d$ emphasizes the asymptotic variable. We denote $O_{d,\P} (\, \cdot \,)$ (resp. $o_{d,\P} (\, \cdot \,)$) the big-O (resp. little-o) in probability notation: $h_1 (d) = O_{d,\P} ( h_2(d) )$ if for any $\eps > 0$, there exists $C_\eps > 0 $ and $d_\eps \in \Z_{>0}$, such that
\[
\begin{aligned}
\P ( |h_1 (d) / h_2 (d) | > C_{\eps}  ) \le \eps, \qquad \forall d \ge d_{\eps},
\end{aligned}
\]
and respectively: $h_1 (d) = o_{d,\P} ( h_2(d) )$, if $h_1 (d) / h_2 (d)$ converges to $0$ in probability.

We will occasionally hide logarithmic factors  using the  $\Tilde O_d (\, \cdot\, )$ notation (resp. $\Tilde o_d (\, \cdot \, )$): $h_1(d) = \tilde O_d(h_2(d))$ if there exists a constant $C$ 
such that $h_1(d) \le C(\log d)^C h_2(d)$. Similarly, we will denote $\Tilde O_{d,\P} (\, \cdot\, )$ (resp. $\Tilde o_{d,\P} (\, \cdot \, )$) when considering the big-O in probability notation up to a logarithmic factor.

Furthermore, $f = \omega_d (g)$ will denote $f(d)/g(d) \to \infty$.

\section{General framework and main theorems}
\label{sec:GeneralApp}

In this section, define a more general model than the model considered in the main text. In the general model, we will assume the covariate vectors will follow a product of uniform distributions on the sphere, and assume a target function in $L^2$ space. We establish more general versions of Theorems \ref{thm:bound_KRR}, \ref{thm:RF}, \ref{thm:NT} on the two-spheres cases in the main text as Theorems \ref{thm:KRR_lower_upper_bound_aniso}, \ref{thm:RF_lower_upper_bound_aniso}, \ref{thm:NT_lower_upper_bound_aniso}. We will prove Theorems \ref{thm:KRR_lower_upper_bound_aniso}, \ref{thm:RF_lower_upper_bound_aniso}, \ref{thm:NT_lower_upper_bound_aniso} in the following sections. At the end of this section, we will show that Theorems \ref{thm:KRR_lower_upper_bound_aniso}, \ref{thm:RF_lower_upper_bound_aniso}, \ref{thm:NT_lower_upper_bound_aniso} will imply Theorems \ref{thm:bound_KRR}, \ref{thm:RF}, \ref{thm:NT} in the main text.

\subsection{Setup on the product of spheres}

Assume that the data $\bx$ lies on the product of $Q$ spheres,
\[
\bx = \lp \bx^{(1)} , \ldots , \bx^{(Q)} \rp \in \prod_{q \in [Q]} \S^{d_q - 1} ( r_q ),
\] 
where $d_q = d^{\eta_q}$ and $r_q = d^{(\eta_q + \kappa_q)/2}$. Let $\bd = (d_1 , \ldots , d_q) = (d^{\eta_1} , \ldots, d^{\eta_q})$ and $\bkappa =( \kappa_1 , \ldots , \kappa_Q)$, where $\eta_q > 0 $ and $\kappa_q \geq 0$ for $q = 1 , \ldots , Q$. We will denote this space
\begin{equation}\label{def:PS_data}
\PS^\bd_\bkappa = \prod_{q \in [Q]} \S^{d_q - 1} ( r_q ).
\end{equation}
Furthermore, assume that the data is generated following the uniform distribution on $\PS^\bd_{\bkappa}$, i.e.
\begin{equation}\label{def:mu_dist_data}
\bx \, \, \simiid \, \, \Unif( \PS^\bd_{\bkappa} ) =\bigotimes_{q \in [Q]} \Unif \lp \S^{d_q-1} ( r_q) \rp \equiv \mu_{\bd}^\bkappa .
\end{equation}
We have $\bx \in \R^D$ and $\| \bx \|_2 = R$ where $D = d^{\eta_1} + \ldots +  d^{\eta_Q}$ and $R = (d^{\eta_1 + \kappa_1} + \ldots + d^{\eta_Q + \kappa_Q})^{1/2}$. 

We will make the following assumption that will simplify the proofs. Denote 
\begin{equation}\label{def:xi}
\xi \equiv \max_{q \in [Q]} \lb \eta_q + \kappa_q \rb,
\end{equation}
then $\xi$ is attained on only one of the sphere, whose coordinate will be denoted $q_{\xi}$, i.e. $\xi = \eta_{q_\xi} + \kappa_{q_\xi}$ and $\eta_q + \kappa_q < \xi$ for $q \neq q_\xi$.

Let $\sigma : \R \to \R$ be an activation function and $(\bw_i)_{i \in [N]} \sim_{iid} \Unif(\S^{D-1})$ the weights. We introduce the random feature function class
\[
\cF_{\RF}(\bW) = \Big\{ \hat f_{\RF}(\bx; \ba) = \sum_{i=1}^N a_i \sigma(\< \bw_i, \bx\> \sqrt{D} / R): ~~ a_i \in \R, \forall i \in [N] \Big\},
\]
and the neural tangent function class
\[
\cF_{\NT}(\bW) = \Big\{ \hat f_{\RF}(\bx; \ba) = \sum_{i=1}^N \<\ba_i , \bx\> \sigma' (\< \bw_i, \bx\> \sqrt{D} / R): ~~ \ba_i \in \R^D, \forall i \in [N] \Big\}. 
\]
We will denote $\btheta_i = \sqrt{D} \bw_i$. Notice that the normalization in the definition of the function class insures that the scalar product $\< \bx , \btheta_i \> / R$ is of order $1$. This corresponds to normalizing the data.

We consider the approximation of $f$ by functions in function classes $\cF_{\RF}(\bTheta)$ and $\cF_{\NT}(\bTheta)$.

\subsection{Reparametrization}

Recall $(\btheta_i)_{i \in [N]} \sim \Unif(\S^{D-1} (\sqrt{D}))$ independently. We decompose $\btheta_i = ( \btheta_i^{(1)}, \ldots, \btheta_i^{(Q)} )$ into $Q$ sections corresponding to the $d_q$ coordinates associated to the $q$-th sphere. Let us consider the following reparametrization of $(\btheta_i)_{i \in [N]} \simiid \Unif(\S^{D-1}(\sqrt D))$:
\[
\lbp \obtheta^{(1)}_{i}, \ldots , \obtheta^{(Q)}_{i}, \tau_{i}^{(1)} , \ldots , \tau_i^{(Q)} \rbp ,
\]
 where
\[
\obtheta_{i}^{(q)} \equiv \sqrt{d_q} \btheta_{i}^{(q)}/\| \btheta_{i}^{(q)} \|_2 , \qquad \tau_i^{(q)} \equiv \| \btheta_{i}^{(q)} \|_2/\sqrt{d_q}, \qquad \text{ for } q = 1 , \ldots , Q .
\]
Hence 
\[
\btheta_i = \lbp \tau_i^{(1)} \cdot \obtheta_{i}^{(1)}, \ldots, \tau_i^{(Q)} \cdot  \overline \btheta_{i}^{(Q)} \rbp.
\]
It is easy to check that the variables $(\obtheta^{(1)} , \ldots , \obtheta^{(Q)})$ are independent and independent of \linebreak[4] $(\tau_{i}^{(1)} , \ldots , \tau_i^{(Q)})$, and verify
\[
\obtheta^{(q)}_i \sim \Unif ( \S^{d_q - 1} ( \sqrt{d_q} )), \qquad \tau_i^{(q)} \sim d_q^{-1/2} \sqrt{{\rm Beta} \lp \frac{d_q}{2}, \frac{D - d_q}{2} \rp }, \qquad \text{ for } q = 1 , \ldots , Q .
\]
We will denote $\obtheta_i \equiv ( \obtheta^{(1)}_i , \ldots ,\obtheta^{(Q)}_i )$ and $\btau_i \equiv ( \tau^{(1)}_i , \ldots , \tau^{(Q)}_i)$. With these notations, we have
\[
\obtheta_i \in \prod_{q \in [Q]} \S^{d_q - 1} ( \sqrt{d_q} ) \equiv \PS^\bd,
\]
 where $\PS^\bd$ is the `normalized space of product of spheres', and
 \[
 \lp \obtheta_i \rp_{i \in [N]} \,\, \simiid \,\, \bigotimes_{q \in [Q]} \Unif(\S^{d_q-1}(\sqrt{d_q}))  \equiv \mu_{\bd}.
 \]

Similarly, we will denote the rescaled data $\obx \in \PS^\bd$,
 \[
\obx = \lp \obx^{(1)} , \ldots , \obx^{(Q)} \rp \sim \bigotimes_{q \in [Q] } \Unif(\S^{d_q-1}(\sqrt{d_q})) ,
\] 
obtained by taking $\obx^{(q)} = \sqrt{d_q} \bx^{(q)} / r_q = d^{- \kappa_q / 2} \bx^\pq$ for each $q \in [Q]$.

The proof will proceed as follows: first, noticing that $\tau^{(q)}$ concentrates around $1$ for every $q = 1 , \ldots , Q$, we will restrict ourselves without loss of generality to the following high probability event
\[
 \cP_{d,N,\eps} \equiv \Big\lbrace \bTheta \Big\vert \tau_i^{(q)} \in [1-\eps , 1+ \eps], \forall i \in [N]  , \forall q \in [Q] \Big\rbrace \subset  \S^{D-1} (\sqrt{D})^{N},
\]
where $\eps >0$ will be chosen sufficiently small. Then, we rewrite the activation function
\[
 \sigma ( \< \cdot , \cdot \> /R ) : \S^{D-1} ( \sqrt{D}) \times \PS^\bd_\bkappa \to \R,
 \]
 as a function, for a random $\btau$ (but close to $(1, \ldots ,1)$)
\[
\sigma_{\bd,\btau} : \PS^{\bd} \times \PS^\bd \to \R,
\]
given for $\btheta = (\obtheta, \btau)$ by
\[
  \sigma_{\bd, \btau} \lp \lb \< \obtheta^\pq , \obx^\pq \> / \sqrt{d_q} \rb_{q\in[Q]} \rp =  \sigma \lp \sum_{q\in[Q]}
  \frac{\tau^\pq r_q}{R} \cdot \frac{\< \obtheta^\pq , \obx^\pq \> }{ \sqrt{d_q}} \rp.
\]
We can therefore apply the algebra of tensor product of spherical harmonics and use the machinery developed in \cite{ghorbani2019linearized}.

\subsection{Notations}

Recall the definitions $\bd = ( d_1 , \ldots , d_q )$, $\bkappa =( \kappa_1 , \ldots , \kappa_Q)$, $d_q = d^{\eta_q}$, $r_q = d^{(\eta_q + \kappa_q)/2}$, $D = d^{\eta_1} + \ldots +  d^{\eta_Q}$ and $R = (d^{\eta_1 + \kappa_1} + \ldots + d^{\eta_Q + \kappa_Q})^{1/2}$. Let us denote $\xi = \max_{q \in [Q]} \lb \eta_q + \kappa_q \rb$ and $q_{\xi} =\arg\min_{q \in [Q]} \lb \eta_q + \kappa_q \rb$.

Recall that $(\btheta_i)_{i \in [N]} \sim \Unif(\S^{D-1} (\sqrt{D}))$ independently. Let $\bTheta = (\btheta_1, \ldots, \btheta_N)$. We denote $\E_\btheta$ to be the expectation operator with respect to $\btheta \sim \Unif(\S^{D-1}(\sqrt D))$ and $\E_{\bTheta}$ the expectation operator with respect to $\bTheta = (\btheta_1 , \ldots , \btheta_N ) \sim \Unif ( \S^{D-1} ( \sqrt{D} ) )^{\otimes N}$. 

We will denote $\E_{\obtheta}$ the expectation operator with respect to $\obtheta \equiv (\obtheta^{(1)} , \ldots, \obtheta^{(Q)}) \sim \mu_{\bd}$, $\E_{\obTheta}$ the expectation operator with respect to $\obTheta = ( \obtheta_1 , \ldots , \obtheta_N)$, and $\E_{\btau}$ the expectation operator with respect to $\btau$ (we recall $\btau \equiv ( \tau^{(1)} , \ldots , \tau^{(Q)})$) or $(\btau_1 , \ldots , \btau_N)$ (where the $\btau_i $ are independent) depending on the context. In particular, notice that $\E_{\btheta} = \E_{\btau} \E_{\obtheta}$ and $\E_{\bTheta} = \E_{\btau} \E_{\obTheta}$.

We will denote $\E_{\bTheta_{\eps}}$ the expectation operator with respect to $\bTheta = ( \btheta_1 , \ldots , \btheta_N)$ restricted to $ \cP_{d,N,\eps} $ and $\E_{\btau_{\eps}}$ the expectation operator with respect to $\btau$ restricted to $[1-\eps , 1+\eps]^Q$. Notice that $\E_{\bTheta_{\eps}} = \E_{\btau_\eps} \E_{\obTheta}$.

Let $\E_\bx$ to be the expectation operator with respect to $\bx \sim \mu^\bd_\bkappa$, and $\E_{\obx}$ the expectation operator with respect to $\obx \sim \mu_{\bd}$.

\subsection{Generalization error of kernel ridge regression}

We consider the Kernel Ridge Regression solution $\ha_i$, namely
\[
\hba = (\bH + \lambda \id_n)^{-1} \by,
\]
where the kernel matrix $\bH = (H_{ij})_{ij \in [n]}$ is assumed to be given by
\[
H_{ij} = \bar h_d(\<\bx_i, \bx_j\>/R^2) = \E_{\obtheta \sim \Unif ( \PS^\bd ) } [ \sigma ( \< \obtheta , \bx \> / R ) \sigma ( \< \obtheta , \by  \> /R ) ] ,
\]
and $\by = (y_1, \ldots, y_n)^\sT = \boldf + \beps$, with 
\[
\begin{aligned}
\boldf & = (f_d(\bx_1), \ldots, f_d(\bx_n))^\sT, \\
\beps & = (\eps_1, \ldots, \eps_n)^\sT. 
\end{aligned}
\]
The prediction function at location $\bx$ gives
\[
\hat f_\lambda(\bx) = \by^\sT (\bH + \lambda \id_n)^{-1} \bh(\bx),
\]
with 
\[
\bh(\bx) = [ \bar h_d(\< \bx, \bx_1\>/R^2), \ldots,\bar h_d(\< \bx, \bx_n\>/R^2)]^\sT. 
\]
The test error of empirical kernel ridge regression is defined as
\[
\begin{aligned}
R_\KR(f_d, \bX, \lambda) \equiv& \E_\bx\Big[ \Big(f_d(\bx) - \by^\sT (\bH + \lambda \id_n)^{-1} \bh(\bx) \Big)^2 \Big]. 
\end{aligned}
\]

We define the set $\overline{\cQ}_\KRR ( \gamma )  \subseteq \posint^Q$ as follows (recall that $\xi\equiv\max_{q\in[Q]}(\eta_q+\kappa_q)$):
\begin{align}
\overline{\cQ}_\KRR ( \gamma ) =& \blb \bk \in \posint^Q \Big\vert \sum_{q = 1}^Q (\xi - \kappa_q) k_q \leq \gamma \brb , \label{eq:def_overcQ_KRR}
\end{align}
and the function $m: \R_{\geq 0} \to \R_{\geq 0}$ which at $\gamma$ associates
\[
m ( \gamma) = \min_{\bk \not\in \overline{\cQ}_\KRR ( \gamma ) }  \sum_{q \in [Q]} ( \xi - \kappa_q ) k_q .
\]
Notice that by definition $m(\gamma) >\gamma$.

We consider sequences of problems indexed by the integer $d$, and we view the problem parameters
(in particular, the dimensions $d_q$, the radii $r_q$, the kernel $h_d$, and so on) as functions of $d$.
\begin{assumption}\label{ass:activation_lower_upper_KRR_aniso}
Let $\{ h_d \}_{d \geq 1}$ be a sequence of functions $h_d : [-1,1] \to \R$ such that $H_d ( \bx_1 , \bx_2 ) = h_d ( \< \bx_1 , \bx_2 \> / d )$ is a positive semidefinite kernel.
\begin{itemize}

\item[(a)] For $\gamma > 0$ (which is specified in the theorem), we denote $L = \max_{q \in [Q]} \lceil  \gamma / \eta_q \rceil$. We assume that $h_d$ is $L$-weakly differentiable. We assume that for $0 \leq k \leq  L$, the $k$-th weak derivative verifies almost surely $h_d^{(k)} (u) \leq C$ for some constants $C>0$ independent of $d$. Furthermore, we assume there exists $k > L $ such that $h_d^{(k)} (0) \geq c >0$ with $c$ independent of $d$.

\item[(b)] For $\gamma > 0$ (which is specified in the theorem), we define 
\[
\overline K = \max_{\bk \in \overline{\cQ}_\KRR ( \gamma )} | \bk |.
\]
We assume that $\sigma$ verifies for $k \leq \overline{K}$, $h_d^{(k)} (0) \geq c$, with $c>0$ independent of $d$. 

\end{itemize}
\end{assumption}

\begin{theorem}[Risk of the \KRR\, model]\label{thm:KRR_lower_upper_bound_aniso}
  Let $\{ f_d \in L^2(\PS^\bd_\bkappa , \mu_{\bd}^\bkappa )\}_{d \ge 1}$ be a sequence of functions. Assume $w_d ( d^\gamma \log d ) \leq n \leq O_d( d^{m(\gamma) - \delta})$ for some $\gamma >0$ and $\delta > 0$. Let $\{ h_d \}_{d \geq 1}$ be a sequence of functions that satisfies Assumption \ref{ass:activation_lower_upper_KRR_aniso} at level $\gamma$. Let $\bX = (\bx_i)_{i \in [n]}$ with $(\bx_i )_{ i \in [n]} \sim \Unif (\PS^\bd_{\bkappa})$ independently, and $y_i = f_d (\bx_i) + \eps_i$ and $\eps_i \sim_{iid} \normal ( 0 ,\tau^2 )$ for some $\tau^2 \geq 0$. Then for any $\eps>0$,
  and for any $\lambda = O_d(1)$, with high probability we have
\begin{align}
\Big \vert R_{\KR}(f_{d}, \bX , \lambda )  - \| \proj_{\cQ^c} f_d \|_{L^2}^2 \Big \vert &\le \eps ( \| f_d \|_{L^2}^2 + \tau^2 )\, .
\label{eq:KRR_bound}
\end{align}
\end{theorem}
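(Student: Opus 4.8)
The plan is to show that, in the stated scaling regime, kernel ridge regression with $h_d$ is asymptotically equivalent to \emph{polynomial regression onto the eigenspaces indexed by} $\cQ := \overline{\cQ}_\KRR(\gamma)$: the KRR predictor $\hat f_\lambda$ will satisfy $\|\hat f_\lambda - \proj_\cQ f_d\|_{L^2}^2 = o_{d,\P}(1)\,(\|f_d\|_{L^2}^2+\tau^2)$ uniformly over $\lambda = O_d(1)$, which by the triangle inequality together with $R_{\KRR}(f_d,\bX,\lambda) = \|f_d - \hat f_\lambda\|_{L^2}^2$ gives $\big|R_{\KRR}(f_d,\bX,\lambda) - \|\proj_{\cQ^c}f_d\|_{L^2}^2\big| \le \eps(\|f_d\|_{L^2}^2+\tau^2)$, i.e.\ \eqref{eq:KRR_bound}.

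First I would pass to the normalized product of spheres via $\obx^{(q)} = d^{-\kappa_q/2}\bx^{(q)} \in \S^{d_q-1}(\sqrt{d_q})$ and expand the population kernel $\bar h_d$ --- which on $\PS^\bd$ depends on a pair $\obx,\oby$ only through $\big(\langle\obx^{(q)},\oby^{(q)}\rangle\big)_{q\in[Q]}$ --- in the tensor-product Gegenbauer basis of Lemma~\ref{lem:prod_gegenbauer_prop}, so that $\bH = \sum_{\bk\in\posint^Q}\lambda_{\bd,\bk}\,\bPsi_\bk\bPsi_\bk^\sT$ with $\bPsi_\bk\in\reals^{n\times B(\bd,\bk)}$ stacking the degree-$\bk$ spherical harmonics at the (normalized) samples. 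Using the Rodrigues/recurrence formulas and Assumption~\ref{ass:activation_lower_upper_KRR_aniso} --- and the fact that the $q$-th inner product enters with weight $r_q^2/R^2 \asymp d^{\eta_q+\kappa_q-\xi}$, which is $\Theta_d(1)$ only for $q = q_\xi$ --- one extracts the two scalings that drive everything: $\lambda_{\bd,\bk}B(\bd,\bk) \asymp_d d^{-\sum_q(\xi-\eta_q-\kappa_q)k_q}$ and hence $n\lambda_{\bd,\bk} \asymp_d n\,d^{-\sum_q(\xi-\kappa_q)k_q}$. Thus (i) $V^\bd_\bk$ has $n\lambda_{\bd,\bk}\gg 1$ exactly when $\bk\in\cQ$, while $\bk\notin\cQ$ forces $n\lambda_{\bd,\bk}\lesssim n\,d^{-m(\gamma)} = O_d(d^{-\delta})\to 0$; (ii) $\dim V_\cQ := \dim\bigoplus_{\bk\in\cQ}V^\bd_\bk \le d^\gamma$; and (iii) the ``high-degree diagonal'' $\psi_d := \sum_{\bk\notin\cQ}\lambda_{\bd,\bk}B(\bd,\bk)$ --- which is exactly the (constant) diagonal of $\bH_{>} := \sum_{\bk\notin\cQ}\lambda_{\bd,\bk}\bPsi_\bk\bPsi_\bk^\sT$ --- stays bounded away from $0$ (the high-order positivity $h_d^{(k)}(0)\ge c$ for some $k > L$ in Assumption~\ref{ass:activation_lower_upper_KRR_aniso}(a) forces the term $\bk = k\,\be_{q_\xi}\notin\cQ$ to contribute $\gtrsim c/k!$) and from $\infty$ (by the low-order derivative bounds). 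In particular the \emph{effective ridge} $\rho_d := \psi_d + \lambda$ is $\Theta_d(1)$, even in the ridgeless case $\lambda = 0$.

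The argument then rests on two concentration facts, both via hypercontractivity on $\PS^\bd$: \textbf{(a)} a \emph{Gram-matrix estimate}, $\|\tfrac1n\bPsi_\cQ^\sT\bPsi_\cQ - \id_{\dim V_\cQ}\|_{\rm op} = o_{d,\P}(1)$ for $\bPsi_\cQ := [\bPsi_\bk]_{\bk\in\cQ}$ --- this is where $n\ge\omega_d(d^\gamma\log d)$ is used (matrix Chernoff for $\sum_i\bY_\cQ(\obx_i)\bY_\cQ(\obx_i)^\sT$, whose dimension is $\le d^\gamma$); \textbf{(b)} a \emph{kernel-matrix estimate}, $\|\bH_{>} - \psi_d\,\id_n\|_{\rm op} = o_{d,\P}(1)$. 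Granting these, $\bH + \lambda\id_n = \bPsi_\cQ\bD_\cQ\bPsi_\cQ^\sT + \rho_d\id_n + o_{d,\P}(1)$ with $\bD_\cQ$ the (block-)diagonal of eigenvalues $\lambda_{\bd,\bk}$, $\bk\in\cQ$, and a Woodbury/Sherman--Morrison rewriting of $\hat f_\lambda(\bx) = \by^\sT(\bH+\lambda\id_n)^{-1}\bh(\bx)$ --- using $\bh(\bx) = \sum_\bk\lambda_{\bd,\bk}\bPsi_\bk\bY_\bk(\bx)$ and the split $\by = \bPsi_\cQ\bbeta^\star_\cQ + \boldf_{>} + \beps$, where $\bbeta^\star_\cQ$ are the harmonic coefficients of $\proj_\cQ f_d$ --- collapses $\hat f_\lambda$ onto $\proj_\cQ f_d$ up to three error terms: a finite-sample term governed by $(\id + \tfrac{\rho_d}{n}\bD_\cQ^{-1})^{-1}$, which is $\id + o_{d,\P}(1)$ because $\tfrac{\rho_d}{n}\|\bD_\cQ^{-1}\|_{\rm op} \lesssim d^\gamma/n = o_d(1)$; an aliasing term from $\boldf_{>}$ of squared $L^2$-norm $O_\P\big(n\,\tau_d^2/\rho_d^2\big)\|\proj_{\cQ^c}f_d\|_{L^2}^2$ with $\tau_d^2 := \sum_{\bk\notin\cQ}\lambda_{\bd,\bk}^2 B(\bd,\bk)\lesssim d^{-m(\gamma)}$, hence $o_{d,\P}(1)\|f_d\|_{L^2}^2$ since $n\le O_d(d^{m(\gamma)-\delta})$ (this is where the upper bound on $n$ enters); and a variance term from $\beps$ of squared $L^2$-norm $O_\P\big(\dim V_\cQ/n + n\,\tau_d^2\big)\tau^2 = o_{d,\P}(1)\,\tau^2$. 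Since moreover $\proj_{\cQ^c}\hat f_\lambda$ has $L^2$-norm $o_{d,\P}(1)(\|f_d\|_{L^2}+\tau)$, collecting the terms yields the desired $L^2$-approximation of $\proj_\cQ f_d$, hence the theorem.

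I expect the crux to be the kernel-matrix estimate \textbf{(b)}: a crude Frobenius bound gives only $\|\bH_{>}-\psi_d\id_n\|_{\rm op}\le\|\bH_{>}-\psi_d\id_n\|_F\lesssim n\,\tau_d\asymp n\,d^{-m(\gamma)/2}$, which need not be $o_d(1)$ when $m(\gamma)$ exceeds $\gamma$ only slightly; one genuinely needs an operator-norm bound via a moment-method/graph-expansion estimate of $\E\,\Trace\big[(\bH_{>}-\psi_d\id_n)^{2p}\big]$ adapted to the tensor structure of $\PS^\bd$ and to the decay of $\lambda_{\bd,\bk}B(\bd,\bk)$, together with a split of $\bH_{>}$ into a moderate-degree block ($|\bk|\le L$) treated this way and a high-degree tail controlled by the uniform-derivative hypothesis in Assumption~\ref{ass:activation_lower_upper_KRR_aniso}(a). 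The remaining work is bookkeeping: proving the Gegenbauer-coefficient asymptotics $\lambda_{\bd,\bk}B(\bd,\bk)\asymp_d d^{-\sum_q(\xi-\eta_q-\kappa_q)k_q}$ uniformly over the relevant $\bk$ for every kernel meeting the assumption, verifying convergence of the $\bk$-sums defining $\psi_d$ and $\tau_d^2$ (using the Hermite-type coefficient prefactors $\mu_{|\bk|}^2/\prod_q k_q!$), and checking that all $o_{d,\P}(1)$ estimates are uniform over $\lambda\in[0,C]$ for any fixed $C$.
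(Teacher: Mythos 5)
Your proposal is correct and follows essentially the same route as the paper's proof: the tensor-Gegenbauer decomposition of $\bar h_d$, the two concentration inputs---a matrix-Bernstein/Chernoff estimate for the Gram matrix $\bY_{\cQ}^{\sT}\bY_{\cQ}/n$ (paper's Lemma~\ref{lem:concentration_YY}) and a moment-method bound for the high-degree Gegenbauer Gram matrices (Proposition~\ref{prop:Delta_bound_aniso} and Corollary~\ref{coro:Delta_bound_aniso_unif}), with exactly the moderate-degree/high-degree split into $\cR$ and $\cS$ that you anticipate---and the resulting effective-ridge picture $\bH=\bY_{\cQ}\bD_{\cQ}\bY_{\cQ}^{\sT}+\kappa_h(\id_n+o_{d,\P}(1))$ of Lemma~\ref{lem:control_cQc_terms}. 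The only cosmetic difference is bookkeeping: you aim to show $\|\hat f_\lambda-\proj_{\cQ}f_d\|_{L^2}=o_{d,\P}(1)(\|f_d\|_{L^2}+\tau)$ directly via a Woodbury rewriting, while the paper expands the risk as $\|f_d\|_{L^2}^2-2T_1+T_2+T_3-2T_4+2T_5$ and controls each term with the same three key lemmas (\ref{lem:key_H_U_H_bound}, \ref{lem:lem_for_error_bound_R11}, \ref{lem:bound_gegenbauer_KRR}); the two routes are interchangeable and yield the same error budget.
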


See Section \ref{sec:proof_RFK_bound} for the proof of this Theorem.

\subsection{Approximation error of the random features model}

We consider the minimum population error for the random features model
\[
R_{\RF}(f_{d}, \bW)  = \inf_{ f \in \cF_{\RF} ( \bW )} \E \big[ ( f_* ( \bx ) - f ( \bx ) )^2 \big].
\]

Let us define the sets:
\begin{align}
\cQ_\RF ( \gamma ) =& \blb \bk \in \posint^Q \Big\vert \sum_{q = 1}^Q (\xi - \kappa_q) k_q < \gamma \brb , \label{eq:def_cQ_RF} \\
\overline{\cQ}_\RF ( \gamma ) =& \blb \bk \in \posint^Q \Big\vert \sum_{q = 1}^Q (\xi - \kappa_q) k_q \leq \gamma \brb . \label{eq:def_overcQ_RF}
\end{align}

\begin{assumption}\label{ass:activation_lower_upper_RF_aniso}
Let $\sigma$ be an activation function. 
\begin{itemize}
\item[(a)] There exists constants $c_0,c_1$, with $c_0 > 0$ and $c_1 < 1$ such that the activation function $\sigma$ verifies $\sigma (u)^2 \leq c_0 \exp ( c_1 u^2 / 2)$ almost surely for $u \in \R$.

\item[(b)] For $\gamma > 0$ (which is specified in the theorem), we denote $L = \max_{q \in [Q]} \lceil  \gamma / \eta_q \rceil$. We assume that $\sigma$ is $L$-weakly differentiable. Define 
\[
K = \min_{\bk \in \cQ_\RF ( \gamma )^c } | \bk |.
\]
We assume that for $K  \leq k \leq  L$, the $k$-th weak derivative verifies almost surely $\sigma^{(k)} (u)^2 \leq c_0 \exp ( c_1 u^2 / 2)$ for some constants $c_0>0$ and $c_1 < 1$.

Furthermore we will assume that $\sigma$ is not a degree-$\lfloor \gamma / \eta_{q_\xi} \rfloor$ polynomial where we recall that $q_\xi$ corresponds to the unique $\arg \min_{q \in [Q]} \lb \eta_q + \kappa_q \rb$.

\item[(c)] For $\gamma > 0$ (which is specified in the theorem), we define 
\[
\overline K = \max_{\bk \in \overline{\cQ}_\RF ( \gamma )} | \bk |.
\]
We assume that $\sigma$ verifies for $k \leq \overline{K}$, $\mu_k (\sigma) \neq 0$. Furthermore we assume that for $k \leq \overline{K}$, the $k$-th weak derivative verifies almost surely $\sigma^{(k)} (u)^2 \leq c_0 \exp ( c_1 u^2 / 2)$ for some constants $c_0>0$ and $c_1 < 1$.

\end{itemize}
\end{assumption}

Assumption \ref{ass:activation_lower_upper_RF_aniso}.(a) implies that $\sigma \in  L^2(\R , \gamma)$ where $\gamma (\de x) = e^{-x^2 / 2} \de x /\sqrt{2\pi} $ is the standard Gaussian measure. We recall the Hermite decomposition of $\sigma$, 
\begin{equation}
\sigma (x) = \sum_{k =0}^\infty \frac{\mu_k (\sigma) }{k!} \He_k (x) , \qquad \mu_k ( \sigma) \equiv \E_{G \sim \normal (0,1)} [ \sigma(G) \He_k (G)].
\end{equation}

\begin{theorem}[Risk of the \RF\, model]\label{thm:RF_lower_upper_bound_aniso}
Let $\{ f_d \in L^2(\PS^\bd_\bkappa , \mu_{\bd}^\bkappa )\}_{d \ge 1}$ be a sequence of functions. Let $\bW = (\bw_i)_{i \in [N]}$ with $(\bw_i)_{i \in [N]} \sim \Unif(\S^{D-1})$ independently. We have the following results. 
\begin{itemize}
\item[(a)] Assume $N \le o_d(d^{\gamma })$ for a fixed $\gamma > 0$. Let $\sigma$ satisfy Assumptions \ref{ass:activation_lower_upper_RF_aniso}.(a) and \ref{ass:activation_lower_upper_RF_aniso}.(b) at level $\gamma$. Then, for any $\eps > 0$, the following holds with high probability: 
\begin{align}
\Big \vert R_{\RF}(f_{d}, \bW) - R_{\RF}(\proj_{\cQ} f_d, \bW) - \| \proj_{\cQ^c} f_d \|_{L^2}^2 \Big \vert &\le \eps \| f_d \|_{L^2} \| \proj_{\cQ^c} f_d \|_{L^2}\, ,
\label{eq:RF_lower_bound}
\end{align}
where $\cQ \equiv \cQ_\RF (\gamma)$ is defined in Equation~\eqref{eq:def_cQ_RF}.
\item[(b)] Assume $N \ge w_d(d^{\gamma})$ for some positive constant $\gamma> 0$, and $ \sigma$ satisfy Assumptions \ref{ass:activation_lower_upper_RF_aniso}.(a) and \ref{ass:activation_lower_upper_RF_aniso}.(c) at level $\gamma$. Then for any $\eps > 0$, the following holds with high probability: 
\begin{align}\label{eqn:RF_upper_bound}
0 \le R_{\RF}(\proj_{\cQ} f_d, \bW) \le \eps \| \proj_{\cQ} f_d \|_{L^2}^2 \, ,
\end{align}
where $\cQ \equiv \overline{\cQ}_\RF (\gamma)$ is defined in Equation~\eqref{eq:def_overcQ_RF}.
\end{itemize}
\end{theorem}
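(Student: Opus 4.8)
\emph{Overall strategy.} The plan is to follow the route of \cite{ghorbani2019linearized}: reduce to the \emph{normalized} product of spheres, diagonalize the random-features kernel using tensor products of Gegenbauer polynomials, and then prove two random-matrix estimates for the feature Gram matrix. Using the reparametrization $\btheta_i=(\tau_i^{(1)}\obtheta_i^{(1)},\dots,\tau_i^{(Q)}\obtheta_i^{(Q)})$, I would first condition on the event $\cP_{d,N,\eps}$ that $\tau_i^{(q)}\in[1-\eps,1+\eps]$ for all $i,q$. Since $(\tau_i^{(q)})^2=\|\btheta_i^{(q)}\|_2^2/d_q$ is a rescaled $\mathrm{Beta}(d_q/2,(D-d_q)/2)$ variable with mean $1$ and sub-exponential concentration and $N$ is polynomial in $d$, a union bound shows $\cP_{d,N,\eps}$ holds with high probability, so it suffices to argue on this event. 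On $\cP_{d,N,\eps}$, after rescaling the data to $\obx\in\PS^\bd$, the feature $\sigma(\langle\bw_i,\cdot\rangle\sqrt D/R)$ becomes $\sigma_{\bd,\btau_i}(\{\langle\obtheta_i^{(q)},\obx^{(q)}\rangle/\sqrt{d_q}\}_q)$, a function of $\obx$ through the $Q$ inner products only, so the whole analysis runs inside the tensor-product harmonic calculus of Section~\ref{sec:Background}.

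\emph{Step: spectrum of the RF kernel.} Writing $c_q=\tau^{(q)}r_q/R$ (so $\sum_q c_q^2\approx 1$) and combining the multinomial identity $\He_m(\sum_q c_q v_q)=\sum_{|\bk|=m}\frac{m!}{\bk!}\prod_q c_q^{k_q}\He_{k_q}(v_q)$ with the Gegenbauer-to-Hermite limit \eqref{eq:Gegen-to-Hermite} and Lemma~\ref{lem:prod_gegenbauer_prop}, I would compute the eigenvalue $\lambda_\bk$ of the RF kernel on $V^\bd_\bk$ and obtain, for fixed $\bk$,
\[
\lambda_\bk\ \asymp\ \prod_{q=1}^Q\frac{(r_q/R)^{2k_q}}{B(d_q,k_q)}\ \asymp\ d^{-\sum_{q}(\xi-\kappa_q)k_q},
\]
with a prefactor proportional to $\mu_{|\bk|}(\sigma)^2$, nonzero precisely by the Hermite-coefficient hypotheses in Assumption~\ref{ass:activation_lower_upper_RF_aniso}. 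Since $R\ge r_q$, one has $\lambda_\bk^{-1}\ge B(\bd,\bk)$, so the quantity deciding whether $N$ features ``resolve'' the block $V^\bd_\bk$ is $N$ versus $\lambda_\bk^{-1}\asymp d^{\sum_q(\xi-\kappa_q)k_q}$ — exactly the criterion defining $\cQ_\RF(\gamma)$ and $\overline\cQ_\RF(\gamma)$.

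\emph{Step: random-matrix estimates, and the main obstacle.} With the spectrum in hand, both parts reduce to controlling the $N\times N$ feature Gram matrix $\bV=(\langle\psi_i,\psi_j\rangle_{L^2})_{ij}$, split as $\bV=\bV_\cQ+\bV_{\cQ^c}$ according to degrees in and out of $\cQ$. For part~(a) ($N\le o_d(d^\gamma)$, $\cQ=\cQ_\RF(\gamma)$): the high-degree block is trace-negligible, $\Trace(\bV_{\cQ^c})=N\sum_{\bk\notin\cQ}\lambda_\bk=o_d(d^\gamma)\cdot O_d(d^{-\gamma})=o_d(1)$ by the previous step; combined with a lower bound on the low-degree feature matrix (non-degeneracy of $\bV_\cQ$ on its range), and writing the optimal fit as $\hat f=\boldsymbol\psi^\sT\bV^+\bu$ with $\bu_i=\langle f_d,\psi_i\rangle$ and decomposing $\bu=\bu_\cQ+\bu_{\cQ^c}$, the identity $R_{\RF}(f_d,\bW)-R_{\RF}(\proj_\cQ f_d,\bW)-\|\proj_{\cQ^c}f_d\|_{L^2}^2 = -2\bu_\cQ^\sT\bV^+\bu_{\cQ^c}-\bu_{\cQ^c}^\sT\bV^+\bu_{\cQ^c}$ together with Cauchy-Schwarz in the $\bV^+$ semi-inner-product yields \eqref{eq:RF_lower_bound}, the cross-terms being bounded by $\eps\|f_d\|_{L^2}\|\proj_{\cQ^c}f_d\|_{L^2}$. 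For part~(b) ($N\ge\omega_d(d^\gamma)$, $\cQ=\overline\cQ_\RF(\gamma)$): since $\sum_{\bk\in\cQ}B(\bd,\bk)\ll N$ and $\lambda_\bk^{-1}\le d^{\gamma}$ up to constants for $\bk\in\cQ$, I would show the $N\times(\sum_{\bk\in\cQ}B(\bd,\bk))$ matrix of harmonic coefficients of the features, after rescaling column blocks by $\lambda_\bk^{-1/2}$, is well conditioned with high probability (smallest singular value $\ge c\sqrt N$), which directly places $\proj_\cQ f_d$ within $\eps\|\proj_\cQ f_d\|_{L^2}$ of $\cF_{\RF}(\bW)$. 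The delicate point — the main obstacle — is precisely these random-matrix statements: the trace/operator-norm bound on $\bV_{\cQ^c}$ and, more seriously, the conditioning of the low-degree feature matrix uniformly over the whole window for $N$. This requires sharp control of the Gegenbauer coefficients of $\sigma_{\bd,\btau}$ in non-leading degrees (whence the weak-differentiability and the ``$\sigma$ not a degree-$\lfloor\gamma/\eta_{q_\xi}\rfloor$ polynomial'' conditions), hypercontractive moment bounds on each sphere factor of $\PS^\bd$, and control of the dependence on the random scales $\btau_i$, handled via the restriction to $\cP_{d,N,\eps}$ and Lipschitz continuity of $\sigma_{\bd,\btau}$ in $\btau$.
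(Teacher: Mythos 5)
For part~(a) your risk decomposition and the Cauchy--Schwarz plan coincide with the paper's, and you correctly reduce to two facts: $\| \bu_{\cQ^c}\|_2 / \|\proj_{\cQ^c} f_d\|_{L^2} = o_{d,\P}(1)$ and $\lambda_{\min}(\bV) \ge c$ with high probability. But your explanation of the second fact is wrong. You attribute the lower bound on $\bV$ to ``a lower bound on the low-degree feature matrix,'' yet $\bV_\cQ$ has rank at most $\sum_{\bk\in\cQ} B(\bd,\bk)$, which can be much smaller than $N$, so it cannot by itself make the full $N\times N$ matrix invertible; in the regime where it can't, the kernel lower bound must come from the \emph{high-degree} part. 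Your companion trace estimate also doesn't hold: $\Trace(\bV_{\cQ^c}) = \sum_{i\le N} \|\proj_{\cQ^c}\sigma(\langle\btheta_i,\cdot\rangle/R)\|^2_{L^2} = \Theta(N)$, not $o_d(1)$, since $\sum_{\bk\notin\cQ} \lambda_\bk^\bd(\sigma_{\bd,\btau_i})^2 B(\bd,\bk)$ is of order one. In the paper (Proposition~\ref{prop:kernel_lower_bound_RFK_NU}), the lower bound on $\bU$ is obtained by isolating a \emph{single} Gegenbauer component of degree $m > \gamma/\eta_{q_\xi}$ in the $q_\xi$-coordinate, chosen so that $\mu_m(\sigma)\neq 0$ (this is precisely where the ``$\sigma$ is not a degree-$\lfloor\gamma/\eta_{q_\xi}\rfloor$ polynomial'' clause of Assumption~\ref{ass:activation_lower_upper_RF_aniso}.(b) is used), and then invoking $\|\bW_m - \id_N\|_{\op} = o_{d,\P}(1)$ from Proposition~\ref{prop:Delta_bound} because $N = o_d(d_1^m)$; the prefactor $B(d_1,m)\lambda_\bm^\bd(\sigma_{\bd,\btau_i})^2$ is then uniformly bounded away from $0$ on $\cP_{d,N,\eps_0}$ by Lemma~\ref{lem:convergence_Gegenbauer_coeff_0_l}. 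The ``low-degree block'' plays no role in this lower bound, and your description inverts the mechanism.

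For part~(b) you propose a genuinely different route from the paper: a conditioning/concentration argument for the rescaled feature matrix with smallest singular value of order $\sqrt N$. The paper instead builds, on the event $\cP_{d,N,\eps_0}$, the explicit randomized coefficients $a_i^* = N^{-1}\alpha_{\btau_i}(\obtheta_i)$ with $\alpha_\btau = \K_{\btau,\btau}^{-1}\T_\btau \overline{f}_d$ and directly computes $\E_{\bTheta_{\eps_0}}[R_\RF]$, using the operator identity $\T_\btau^*\K_{\btau,\btau}^{-1}\T_\btau = \id\vert_{V^\bd_\cQ}$ so that the linear and off-diagonal quadratic contributions telescope exactly; only the diagonal quadratic remains, giving $\E_{\bTheta_{\eps_0}}[R_\RF]\le C (d^\gamma/N)\|\proj_\cQ f_d\|_{L^2}^2$, and Markov finishes. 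Besides being more elementary, this route sidesteps a gap in your plan: even if the $\cQ$-components of the features form a well-conditioned $N\times B$ matrix, any linear combination $\sum_i a_i\sigma(\langle\btheta_i,\cdot\rangle/R)$ carries a contamination term $\sum_i a_i\,\proj_{\cQ^c}\sigma(\langle\btheta_i,\cdot\rangle/R)\in V^\bd_{\cQ^c}$ which must also be shown to vanish for the minimizing $\ba$ (controlling both $\|\ba\|_2$ and the $\cQ^c$-Gram matrix), an estimate you do not address. The paper's Monte Carlo construction handles it automatically because the diagonal term already contains the full $L^2$ norm of $\sigma$.
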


See Section \ref{sec:proof_RFK_lower_aniso} for the proof of the lower bound \eqref{eq:RF_lower_bound}, and Section \ref{sec:proof_RFK_upper_aniso} for the proof of the upper bound \eqref{eqn:RF_upper_bound}. 

\begin{remark}
This theorems shows that for each $\gamma \not\in  (\xi - \kappa_1) \posint+ \ldots + (\xi - \kappa_Q) \posint$, we can decompose our functional space as
\[
L^2 ( \PS^\bd_{\bkappa} , \mu_\bd^\bkappa ) = \cF (\bbeta, \bkappa , \gamma) \oplus \cF^c (\bbeta , \bkappa , \gamma),
\] 
where
\[
\begin{aligned}
\cF (\bbeta, \bkappa , \gamma) = &\bigoplus_{\bk\in \cQ_{\RF} (\gamma) } \bV^\bd_\bk, \\
 \cF^c  (\bbeta, \bkappa , \gamma) =& \bigoplus_{\bk \not\in \cQ_{\RF} (\gamma) } \bV^\bd_\bk,
\end{aligned}
\]
such that for $N = d^{\gamma}$, $\RF$ model fits the subspace of low degree polynomials $\cF (\bbeta, \bkappa , \gamma)$ and cannot fit $ \cF^c  (\bbeta, \bkappa , \gamma)$, i.e.
\[
R_{\RF} ( f_d , \bW)\approx \| \proj_{Q_{\RF} (\gamma)^c} f_d \|_{L^2}^2.
\]
\end{remark}

\begin{remark}
In other words, we can fit a polynomial of degree $\bk \in \posint^Q$, if and only if
\[
d^{(\xi - \kappa_1) k_1 } \cdot \ldots \cdot d^{(\xi - \kappa_Q) k_Q} = d_{1,\eff}^{k_1} \ldots d_{Q,\eff}^{k_Q} = o_d(N).
\]
Each subspace has therefore an effective dimension $d_{q,\eff} \equiv d^{\xi - \kappa_q} = d_q^{(\xi - \kappa_q) / \eta_q} \asymp D^{(\xi - \kappa_q) / \max_{q \in [Q]} \eta_q}$. This can be understood intuitively as follows,
\[
\sigma \lp \< \btheta , \bx \> /R \rp = \sigma \lp \sum_{q \in [Q]} \< \btheta^\pq , \bx^\pq \> / R \rp.
\]
The term $q_\xi$ (recall that $q_{\xi}=\arg\max_q(\eta_q+\kappa_q)$ and $\xi = \eta_{q_\xi} + \kappa_{q_\xi}$) verifies $\< \btheta^{(q_\xi)} , \bx^{(q_\xi)} \> / R  = \Theta_d (1)$ and has the same effective dimension $d_{q_\xi,\eff} = d^{\eta_{q_\xi}}$ has in the uniform case restricted to the sphere $\S^{d^{\eta_q}-1} ( \sqrt{d^{\eta_q}})$ (the scaling of the sphere do not matter because of the global normalization factor $R^{-1}$). However, for $ \eta_q + \kappa_q < \xi$, we have $\< \btheta^\pq , \bx^\pq \> / R  = \Theta_d (d^{(\eta_q + \kappa_q - \xi)/2})$ and we will need $d^{\xi - \kappa_q - \eta_q}$ more neurons to capture the dependency on the $q$-th sphere coordinates. The effective dimension is therefore given by $d_{q,\eff} = d_q \cdot d^{\xi - \kappa_q - \eta_q} = d^{\xi - \kappa_q}$.
\end{remark}

\subsection{Approximation error of the neural tangent model}

We consider the minimum population error for the random features model
\[
R_{\NT}(f_{d}, \bW)  = \inf_{ f \in \cF_{\NT} ( \bW )} \E \big[ ( f_* ( \bx ) - f ( \bx ) )^2 \big].
\]
For $\bk \in \posint^Q$,  we denote by $S(\bk)\subseteq [Q]$ the subset of indices $q\in [Q]$ such that $k_q > 0$.

 We define the sets
\begin{align}
\cQ_\NT ( \gamma ) =& \blb \bk \in \posint^Q \Big\vert \sum_{q = 1}^Q (\xi - \kappa_q) k_q < \gamma + \lbp \xi - \min_{q \in S(\bk)} \kappa_q \rbp \brb , \label{eq:def_cQ_NT} \\
\overline{\cQ}_\NT ( \gamma ) =& \blb \bk \in \posint^Q \Big\vert \sum_{q = 1}^Q (\xi - \kappa_q) k_q \leq \gamma + \lbp \xi - \min_{q \in S(\bk)} \kappa_q \rbp \brb . \label{eq:def_overcQ_NT}
\end{align}

\begin{assumption}\label{ass:activation_lower_upper_NT_aniso}
Let $\sigma:\R \to \R$ be an activation function. 
\begin{itemize}
\item[(a)] The activation function $\sigma$ is weakly differentiable with weak derivative $\sigma'$. There exists constants $c_0,c_1$, with $c_0 > 0$ and $c_1 < 1$ such that the activation function $\sigma$ verifies $\sigma' (u)^2 \leq c_0 \exp ( c_1 u^2 / 2)$  almost surely for $u \in \R$.

\item[(b)] For $\gamma > 0$ (which is specified in the theorem), we denote $L = \max_{q \in [Q]} \lceil  \gamma / \eta_q \rceil$. We assume that $\sigma'$ is $L$-weakly differentiable. Define 
\[
K = \min_{\bk \in \cQ_\NT ( \gamma )^c} | \bk |.
\]
We assume that for $K - 1 \leq k \leq  L$, the $k$-th weak derivative verifies almost surely $\sigma^{(k+1)} (u)^2 \leq c_0 \exp ( c_1 u^2 / 2)$ for some constants $c_0>0$ and $c_1 < 1$.

Furthermore, we assume that $\sigma'$ verifies a non-degeneracy condition. Recall that $\mu_k(h) \equiv \E_{G\sim\normal(0,1)}[h(G)\He_k(G)]$ denote the $k$-th coefficient of the Hermite expansion of 
$h\in L_2(\reals,\gamma)$ (with $\gamma$ the standard Gaussian measure). Then there exists $k_1,k_2\ge 2L+7 [ \max_{q \in [Q]} \xi / \eta_q ]$ such that $\mu_{k_1}(\sigma') ,\mu_{k_2}(\sigma')\neq 0$ and
\begin{align}
\frac{\mu_{k_1}(x^2\sigma')}{\mu_{k_1}(\sigma')}\neq \frac{\mu_{k_2}(x^2\sigma')}{\mu_{k_2}(\sigma')} \, .
\end{align}

\item[(c)] For $\gamma > 0$ (which is specified in the theorem), we define
\[
\overline K = \max_{\bk \in \overline{\cQ}_\NT ( \gamma )} | \bk |.
\]
We assume that $\sigma$ verifies for $k \leq \overline{K} +1$, $\mu_k (\sigma ' ) = \mu_{k+1} (\sigma) \neq 0$. Furthermore we assume that for $k \leq \overline{K}+1$ , the $k$-th weak derivative verifies almost surely $\sigma^{(k+1)} (u)^2 \leq c_0 \exp ( c_1 u^2 / 2)$ for some constants $c_0>0$ and $c_1 < 1$.

\end{itemize}
\end{assumption}

Assumption \ref{ass:activation_lower_upper_NT_aniso}.(a) implies that $\sigma' \in  L^2(\R , \gamma)$ where $\gamma (\de x) = e^{-x^2 / 2} \de x /\sqrt{2\pi} $ is the standard Gaussian measure. We recall the Hermite decomposition of $\sigma'$:
\begin{equation}
\sigma' (x) = \sum_{k =0}^\infty \frac{\mu_k (\sigma') }{k!} \He_k (x) , \qquad \mu_k ( \sigma') \equiv \E_{G \sim \normal (0,1)} [ \sigma'(G) \He_k (G)].
\end{equation}

In the Assumption \ref{ass:activation_lower_upper_NT_aniso}.(b), it is useful to notice that the Hermite coefficients of $x^2\sigma'(x)$
can be computed from the ones of $\sigma'(x)$ using the relation $\mu_k (x^2 \sigma ' )  = \mu_{k+2} (\sigma') + [1+2k] \mu_k(\sigma') + k(k-1) \mu_{k-2} (\sigma') $.

\begin{theorem}[Risk of the \NT\, model]\label{thm:NT_lower_upper_bound_aniso}
Let $\{ f_d \in L^2(\PS^\bd_\bkappa , \mu_{\bd}^\bkappa )\}_{d \ge 1}$ be a sequence of functions. Let $\bW = (\bw_i)_{i \in [N]}$ with $(\bw_i)_{i \in [N]} \sim \Unif(\S^{D-1})$ independently. We have the following results. 
\begin{itemize}
\item[(a)] Assume $N \le o_d(d^{\gamma })$ for a fixed $\gamma > 0$. Let $\sigma$ satisfy Assumptions \ref{ass:activation_lower_upper_NT_aniso}.(a) and \ref{ass:activation_lower_upper_NT_aniso}.(b) at level $\gamma$. Then, for any $\eps > 0$, the following holds with high probability: 
\begin{align}
\Big \vert R_{\NT}(f_{d}, \bW) - R_{\NT}(\proj_{\cQ} f_d, \bW) - \| \proj_{\cQ^c} f_d \|_{L^2}^2 \Big \vert &\le \eps \| f_d \|_{L^2} \| \proj_{\cQ^c} f_d \|_{L^2}\, ,
\label{eq:NT_lower_bound}
\end{align}
where $\cQ \equiv \cQ_\NT (\gamma)$ is defined in Equation~\eqref{eq:def_cQ_NT}.
\item[(b)] Assume $N \ge w_d(d^{\gamma})$ for some positive constant $\gamma> 0$, and $ \sigma$ satisfy Assumptions \ref{ass:activation_lower_upper_NT_aniso}.(a) and \ref{ass:activation_lower_upper_NT_aniso}.(c) at level $\gamma$. Then for any $\eps > 0$, the following holds with high probability: 
\begin{align}\label{eqn:NT_upper_bound}
0 \le R_{\NT}(\proj_{\cQ} f_d, \bW) \le \eps \| \proj_{\cQ} f_d \|_{L^2}^2 \, ,
\end{align}
where $\cQ \equiv \overline{\cQ}_\NT (\gamma)$ is defined in Equation~\eqref{eq:def_overcQ_NT}.
\end{itemize}
\end{theorem}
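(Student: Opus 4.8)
The strategy mirrors that of Theorem~\ref{thm:RF_lower_upper_bound_aniso} (and the neural-tangent analysis in \cite{ghorbani2019linearized}), the new difficulty being the multiplicative linear factor $\langle\ba,\bx\rangle$ in the feature map. First I would reduce, exactly as in the RF case, to the reparametrized setting: conditioning on the high-probability event $\cP_{d,N,\eps}$ (on which all $\tau_i^{(q)}\in[1-\eps,1+\eps]$) and using that $R_\NT$ depends continuously on the kernel, it suffices to prove both bounds for the perturbed activation $\sigma'_{\bd,\btau}$ acting on $\PS^\bd\times\PS^\bd$, uniformly over $\btau\in[1-\eps,1+\eps]^Q$. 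I would also use that $R_\NT(f_d,\bW)=\|\proj_{\cF^\perp}f_d\|_{L^2}^2$ where $\cF=\cF_\NT(\bW)$ and $\cF^\perp$ is its $L^2$-orthogonal complement, so both statements reduce to comparing the subspace $\cF_\NT(\bW)$ with the low-degree subspace $\bigoplus_{\bk\in\cQ}V^\bd_\bk$.

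The crux is the spectral description of the NT feature map. Expanding $\sigma'_{\bd,\btau}$ in tensor products of Gegenbauer polynomials $Q^\bd_{\bk'}$ of $\{\langle\obtheta^{(q)},\obx^{(q)}\rangle/\sqrt{d_q}\}_{q\in[Q]}$, each coefficient at multidegree $\bk'$ carries the geometric prefactor $\prod_q(\tau^{(q)}r_q/R)^{k'_q}\asymp\prod_q d^{(\eta_q+\kappa_q-\xi)k'_q/2}$. Multiplying by a linear coordinate $x^{(q_0)}_j$ and applying the Gegenbauer recurrence~\eqref{eq:RecursionG} shows that the multidegree-$\bk$ component of $x^{(q_0)}_j\sigma'_{\bd,\btau}$ is a linear combination of the multidegree-$(\bk\pm\be_{q_0})$ Gegenbauer coefficients of $\sigma'_{\bd,\btau}$. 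Collecting these over $i$, I would obtain a block-diagonal-plus-remainder description of the $ND\times ND$ population feature-covariance (and of the $L^2$ kernel operator $\Phi\Phi^*$) over multidegrees $\bk$, with the $\bk$-block of magnitude $\asymp\big(\prod_q d^{-(\xi-\kappa_q)k_q}\big)\cdot\max_{q_0\in S(\bk)}d^{\xi-\kappa_{q_0}}$ — precisely the exponent separating $\cQ_\NT(\gamma)$ from its complement — and establishing that this $\bk$-block is nondegenerate requires translating the nonvanishing and non-cancellation of the Hermite coefficients $\mu_k(\sigma')$ and $\mu_k(x^2\sigma')$ into lower bounds on the relevant finite-$d$ Gegenbauer coefficients through~\eqref{eq:Gegen-to-Hermite}.

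Given this structure, the two bounds follow as in the RF case. For part~(a), with $N=o_d(d^\gamma)$ and $\cQ=\cQ_\NT(\gamma)$, the high-degree part of the population feature-covariance has operator norm $o_d(1)$ after the natural normalization, and a truncation plus matrix-Bernstein estimate (using Assumption~\ref{ass:activation_lower_upper_NT_aniso}.(b) for the tails of $\sigma^{(k)}$) transfers this to the empirical operator, giving $\sup_{g\in\cF_\NT(\bW)}\|\proj_{\cQ^c}g\|_{L^2}/\|g\|_{L^2}=o_{d,\P}(1)$; since the risk-minimizer $g^\star$ over the finite-dimensional $\cF_\NT(\bW)$ satisfies $\|g^\star\|_{L^2}\le2\|f_d\|_{L^2}$, hence $\|\proj_{\cQ^c}g^\star\|_{L^2}\le\eps\|f_d\|_{L^2}$, expanding $\|f_d-g^\star\|_{L^2}^2$ gives $R_\NT(f_d,\bW)=\|\proj_{\cQ^c}f_d\|_{L^2}^2+R_\NT(\proj_\cQ f_d,\bW)+O(\eps\|f_d\|_{L^2}\|\proj_{\cQ^c}f_d\|_{L^2})$, which is~\eqref{eq:NT_lower_bound} after absorbing the constant. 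For part~(b), with $N=\omega_d(d^\gamma)$ and $\cQ=\overline{\cQ}_\NT(\gamma)$ (a finite set, since $\xi-\kappa_q\ge\eta_q>0$), it suffices to $L^2$-approximate each basis harmonic $Y^\bd_{\bk,\bs}$, $\bk\in\overline{\cQ}_\NT(\gamma)$, up to $o_{d,\P}(1)$: fixing $q_0\in S(\bk)$ with $\kappa_{q_0}=\min_{q\in S(\bk)}\kappa_q$, restricting each $\ba_i$ to the $q_0$-block and solving the resulting least-squares problem, the population feature-covariance restricted to $V^\bd_\bk$ has smallest eigenvalue bounded below (this is where Assumption~\ref{ass:activation_lower_upper_NT_aniso}.(c) rules out cancellation of the two Gegenbauer coefficients entering the block), matrix concentration gives a comparable empirical lower bound once $N=\omega_d(d^\gamma)$, and a standard least-squares estimate then yields the approximation; summing over the finitely many $(\bk,\bs)$ gives~\eqref{eqn:NT_upper_bound}.

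The main obstacle is the step above on the NT feature's spectral structure: whereas for RF the multidegree-$\bk$ part of the feature map is governed by a single Gegenbauer coefficient of $\sigma$, here multiplication by a linear coordinate couples adjacent degrees via~\eqref{eq:RecursionG}, so the $\bk$-block is a superposition of the $\bk\pm\be_{q_0}$ Gegenbauer coefficients of $\sigma'$ whose geometric prefactors differ by exactly the effective-dimension gain $d^{\xi-\kappa_{q_0}}$ (at the Hermite level this is why $x^2\sigma'$ appears, through $\mu_k(x^2\sigma')=\mu_{k+2}(\sigma')+(1+2k)\mu_k(\sigma')+k(k-1)\mu_{k-2}(\sigma')$). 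Guaranteeing that this superposition neither vanishes nor cancels — for infinitely many, and for all sufficiently large, multidegrees — is precisely the role of the condition $\mu_{k_1}(x^2\sigma')/\mu_{k_1}(\sigma')\neq\mu_{k_2}(x^2\sigma')/\mu_{k_2}(\sigma')$ with $k_1,k_2$ large in Assumption~\ref{ass:activation_lower_upper_NT_aniso}, and carrying this through the finite-$d$ Gegenbauer asymptotics~\eqref{eq:Gegen-to-Hermite} with explicit error control is the technical heart of the proof.
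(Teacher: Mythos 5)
Your high-level picture of the NT feature map — the multidegree Gegenbauer decomposition, the degree-coupling via the recurrence~\eqref{eq:RecursionG}, the role of the $\mu_k(x^2\sigma')/\mu_k(\sigma')$ non-cancellation hypothesis, and the one-block-of-$\ba$ trick in part~(b) via Theorem~\ref{thm:upper_bound_NT_q} — is all correct and aligned with the paper.

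The problem is the pivot step in part~(a). You assert
$\sup_{g\in\cF_\NT(\bW)}\|\proj_{\cQ^c}g\|_{L^2}/\|g\|_{L^2}=o_{d,\P}(1)$,
i.e.\ that the NT function class is asymptotically contained in $\bigoplus_{\bk\in\cQ}V^\bd_\bk$. This is false: already a \emph{single} NT feature $g(\bx)=x_j\,\sigma'(\langle\bw,\bx\rangle)$ has a constant fraction of its $L^2$ mass on high degrees. Indeed $\lambda^\bd_{\bk}(\sigma')^2 B(\bd,\bk)\to\mu_{|\bk|}(\sigma')^2/\bk!$ at each fixed~$\bk$ (with the effective-dimension prefactor cancelling in the relevant normalization), so unless $\sigma'$ is a low-degree polynomial the tail $\sum_{\bk\in\cQ^c}\mu_{|\bk|}(\sigma')^2/\bk!$ is a fixed positive fraction of $\|\sigma'\|^2_{L^2(\gamma)}$; multiplication by $x_j$ shifts each degree by $\pm 1$ and does not suppress this. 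So the ratio $\|\proj_{\cQ^c}g\|/\|g\|$ is $\Theta_d(1)$, not $o_{d,\P}(1)$, and everything downstream in your outline (the bound $\|\proj_{\cQ^c}g^\star\|\le\eps\|f_d\|$ and the expansion of $\|f_d-g^\star\|^2$) collapses. The statement the paper actually proves, and the one that is both true and sufficient, is $f_d$-dependent:
$\bV_{\cQ^c}^\sT\bU^{-1}\bV_{\cQ^c}=o_{d,\P}(\|\proj_{\cQ^c}f_d\|_{L^2}^2)$,
where $\bV_{\cQ^c}\in\R^{ND}$ stacks $\E_\bx\big[[\proj_{\cQ^c}f_d](\bx)\,\sigma'(\langle\bw_i,\bx\rangle)\,\bx\big]$. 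This holds because (Proposition~\ref{prop:expected_V_NTK_NU}, Lemma~\ref{lem:upper_bound_A}) $\E\|\bV_{\cQ^c}\|_2^2\le N\cdot o_d(d^{-\gamma})\|\proj_{\cQ^c}f_d\|^2_{L^2}$ and $\bU\succeq c\,\id$ (Proposition~\ref{prop:kernel_lower_bound_NTK_NU}). The crucial point you are missing is that the $\Theta(1)$ high-degree mass of a single feature is spread over $\Theta(B(\bd,\bk))\gg 1$ harmonics, so its overlap with the \emph{particular} direction $\proj_{\cQ^c}f_d$ is vanishingly small even though the raw projected mass is not. Replacing your uniform-containment lemma by this $f_d$-weighted cross-covariance estimate is not a cosmetic change — it is the actual content of part~(a) and the place where the $N\le o_d(d^\gamma)$ hypothesis and the definition of $\cQ_\NT(\gamma)$ enter. (In part~(b), also note that while $\overline\cQ_\NT(\gamma)$ is a finite set of $\bk$-labels, the set of pairs $(\bk,\bs)$ has cardinality $\Theta(d^\gamma)$, so you cannot approximate each $Y^\bd_{\bk,\bs}$ separately and ``sum''; you need the one-shot least-squares/limiting-kernel construction that the paper uses, which gives the global $O(d^\gamma/N)$ bound directly.)
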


See Section \ref{sec:proof_NTK_lower_aniso} for the proof of lower bound, and Section \ref{sec:proof_NTK_upper_aniso} for the proof of upper bound. 

\begin{remark}
This theorems shows that each for each $\gamma>0$ such that $\cQ_{\NT} (\gamma)^c \cap \overline{\cQ}_{\NT} (\gamma) = \emptyset$, we can decompose our functional space as
\[
L^2 ( \PS^\bd_{\bkappa} , \mu_\bd^\bkappa ) = \cF (\bbeta, \bkappa , \gamma) \oplus \cF^c (\bbeta , \bkappa , \gamma),
\] 
where
\[
\begin{aligned}
\cF (\bbeta, \bkappa , \gamma) = &\bigoplus_{\bk\in \cQ_{\NT} (\gamma) } \bV^\bd_\bk, \\
 \cF^c  (\bbeta, \bkappa , \gamma) =& \bigoplus_{\bk \not\in \cQ_{\NT} (\gamma) } \bV^\bd_\bk,
\end{aligned}
\]
such that for $N = d^{\gamma}$, $\NT$ model fits the subspace of low degree polynomials $\cF (\bbeta, \bkappa , \gamma)$ and cannot fit $ \cF^c  (\bbeta, \bkappa , \gamma)$ at all, i.e.
\[
R_{\NT} ( f_d , \bW)\approx \| \proj_{\cQ_{\NT} (\gamma)^c} f_d \|_{L^2}^2.
\]
\end{remark}

\begin{remark}
In other words, we can fit a polynomial of degree $\bk \in \posint^Q$, if and only if
\[
d^{(\xi - \kappa_1) k_1 } \cdot \ldots \cdot d^{(\xi - \kappa_Q) k_Q} = d_{1,\eff}^{k_1} \ldots d_{Q,\eff}^{k_Q} = o_d(d^\beta N),
\]
where $\beta = \xi - \min_{q \in S(\bk)} \kappa_q $.
\end{remark}

\subsection{Connecting to the theorems in the main text}

 Let us connect the above general results to the two-spheres setting described in the main text. We consider two spheres with $\eta_1 = \eta$, $\kappa_1 = \kappa$ for the first sphere, and $\eta_2 = 1$, $\kappa_2 = 0$ for the second sphere. We have $\xi = \max ( \eta + \kappa , 1 )$.
 
 Let $w_d ( d^\gamma \log d ) \leq n \leq O_d ( d^{\gamma + \delta})$ with $\delta>0$ constant sufficiently  small, then by Theorem \ref{thm:KRR_lower_upper_bound_aniso} the function subspace learned by KRR is given by the polynomials of degree $k_1$ in the first sphere coordinates and $k_2$ in the second sphere with 
 \[
 \max ( \eta , 1 - \kappa) k_1 + \max ( \eta + \kappa , 1 ) k_2  < \gamma.
 \]
 We consider functions that only depend on the first sphere, i.e., $k_2 = 0$ and denote $d_\eff = d^{ \max ( \eta , 1 - \kappa)}$. Then the subspace of approximation is given by the $k$ polynomials in the first sphere such that $d_\eff^k \leq d^\gamma$. Furthermore, one can check that the Assumptions listed in Theorem \ref{thm:bound_KRR} in the main text verifies Assumption \ref{ass:activation_lower_upper_KRR_aniso}.
 
 Similarly, for $w_d ( d^\gamma ) \leq N \leq O_d ( d^{\gamma + \delta})$ with $\delta>0$ constant sufficiently small, Theorem \ref{thm:RF_lower_upper_bound_aniso} implies that the $\RF$ models can only approximate $k$ polynomials in the first sphere such that $d_\eff^k \leq d^\gamma$. Furthermore, Assumptions listed in Theorem \ref{thm:RF} in the main text verifies Assumption \ref{ass:activation_lower_upper_RF_aniso}.
 
 In the case of $\NT$, we only consider $\bk = (k_1 , 0)$ and $S(\bk) = \{1 \}$. We get $\min_{q \in S(\bk)} \kappa_q = \kappa$. The subspace approximated is given by the $k$ polynomials in the first sphere such that $d_\eff^k \leq d^\gamma d_\eff$. Furthermore, Assumptions listed in Theorem \ref{thm:NT} in the main text verifies Assumption \ref{ass:activation_lower_upper_NT_aniso}.

\section{Proof of Theorem \ref{thm:KRR_lower_upper_bound_aniso}}
\label{sec:proof_RFK_bound}

The proof follows closely the proof of \cite[Theorem 4]{ghorbani2019linearized}.

\subsection{Preliminaries}

Let us rewrite the kernel functions $ \{ h_d \}_{d \geq 1}$ as functions on the product of normalized spheres: for $\bx = \lb \bx^\pq \rb_{q\in[Q]}$ and $\by  =  \lb \by^\pq \rb_{q\in[Q]} \in \PS^\bd_\bkappa$:
\begin{equation}\label{eq:def_sigma_d_tau}
\begin{aligned}
h_d( \< \by , \bx \> / R^2 ) = & h_d \lp \sum_{q \in [Q]}   (r_q^2/R^2 \sqrt{d_q} ) \cdot  \< \oby^\pq , \obx^\pq \> / \sqrt{d_q} \rp \\
\equiv & h_{\bd} \lp \lb \< \oby^\pq , \obx^\pq \> / \sqrt{d_q} \rb_{q \in [Q]} \rp.
\end{aligned}
\end{equation}
Consider the expansion of $h_{\bd}$ in terms of tensor product of Gegenbauer polynomials. We have
\[
h_d (\<\by , \bx \>/ R^2 ) =  \sum_{\bk \in \posint^Q} \lambda^{\bd}_{\bk} (h_{\bd}  ) B(\bd,\bk) Q^{\bd}_{\bk} \lp \lb \< \oby^\pq , \obx^\pq \> \rb_{q \in [Q]} \rp, 
\]
where
\[
\lambda^{\bd}_{\bk} (h_{\bd} ) =  \E_{\obx} \Big[ h_{\bd} \lp \ox_1^{(1)} , \ldots , \ox^{(Q)}_1 \rp  Q^{\bd}_{\bk} \Big( \sqrt{d_1}\ox_1^{(1)} , \ldots , \sqrt{d_Q} \ox_1^{(Q)} \Big)  \Big],
\]
where the expectation is taken over $\obx = (\obx^{(1)} , \ldots , \obx^{(Q)} ) \sim \mu_{\bd}$.

\begin{lemma} \label{lem:bound_gegenbauer_KRR}
Let $\{ h_d \}_{d\geq 1}$ be a sequence of kernel functions that satisfies Assumption \ref{ass:activation_lower_upper_KRR_aniso}. Assume $w_d ( d^\gamma ) \leq n \leq o_d (d^{m(\gamma)} )$ for some $\gamma >0$. Consider $\cQ = \overline{\cQ}_{\KR} (\gamma)$ as defined in Eq.~\eqref{eq:def_overcQ_KRR}. Then there exists constants $c,C > 0$ such that for $d$ large enough,
\[
\begin{aligned}
 \max_{\bk \not\in \cQ}  \lambda^{\bd}_{\bk} ( h_{\bd}  )  \leq &  C d^{-m ( \gamma) }, \\
 \min_{\bk \in \cQ} \lambda^{\bd}_{\bk} ( h_{\bd }  ) \geq & c d^{- \gamma}.
 \end{aligned}
\]
\end{lemma}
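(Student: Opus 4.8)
The argument would parallel the Gegenbauer--coefficient estimates in the isotropic analysis of \cite{ghorbani2019linearized}, the new ingredient being the bookkeeping of the anisotropic radii $r_q$. First I would record an integral representation of $\lambda^{\bd}_{\bk}(h_{\bd})$: writing $t_q=\<\oby^{(q)},\obx^{(q)}\>$, the kernel $h_{\bd}$ is a function of the single weighted combination $\sum_{q\in[Q]}c_q t_q$ with $c_q=r_q^2/(R^2 d_q)$ by \eqref{eq:def_sigma_d_tau}, and since $\xi$ is attained only at $q_\xi$ one has $R^2=d^{\xi}(1+o_d(1))$, hence $c_q=d^{\kappa_q-\xi}(1+o_d(1))$. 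Plugging the Rodrigues formula \eqref{eq:Rogrigues_formula} for each factor $Q^{(d_q)}_{k_q}$ into the definition of $\lambda^{\bd}_{\bk}(h_{\bd})$, the factor $(1-t_q^2/d_q^2)^{(3-d_q)/2}$ cancels the Gegenbauer weight, and performing $k_q$ integrations by parts in $t_q$ (boundary terms vanish since $(1-t_q^2/d_q^2)^{k_q+(d_q-3)/2}$ has a zero of order $\ge k_q$ at $t_q=\pm d_q$) moves the derivatives onto $h_{\bd}$. When $|\bk|\le L$ the kernel $h_d$ is $|\bk|$-weakly differentiable with bounded derivatives (Assumption \ref{ass:activation_lower_upper_KRR_aniso}.(a)), and using $\partial_{t_1}^{k_1}\!\cdots\partial_{t_Q}^{k_Q}h_{\bd}=\big(\prod_q c_q^{k_q}\big)h_d^{(|\bk|)}\big(\sum_q c_q t_q\big)$, this would give the exact identity
\[
\lambda^{\bd}_{\bk}(h_{\bd})=\Lambda^{\bd}_{\bk}\cdot\E\Big[h_d^{(|\bk|)}\!\Big(\textstyle\sum_q c_q t_q\Big)\textstyle\prod_q\big(1-t_q^2/d_q^2\big)^{k_q}\Big],\qquad \Lambda^{\bd}_{\bk}=\prod_{q\in[Q]}\Big(\tfrac{1}{2}\Big)^{k_q}d_q^{k_q}c_q^{k_q}\frac{\Gamma((d_q-1)/2)}{\Gamma(k_q+(d_q-1)/2)},
\]
the $t_q$ being independent with marginal law $\Tilde\mu^1_{d_q-1}$.

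Next I would extract the asymptotics of the two factors. From $\Gamma((d_q-1)/2)/\Gamma(k_q+(d_q-1)/2)=\prod_{j=0}^{k_q-1}((d_q-1)/2+j)^{-1}=(2/d_q)^{k_q}(1+O_d(1/d_q))$ one gets $\Lambda^{\bd}_{\bk}=\prod_q c_q^{k_q}(1+o_d(1))=d^{-\sum_q(\xi-\kappa_q)k_q}(1+o_d(1))$. The variable $\sum_q c_q t_q$ has mean $0$ and variance $\sum_q c_q^2 d_q\to0$, so it concentrates at $0$; together with the boundedness of $h_d^{(|\bk|)}$ and the continuity of $h_d^{(|\bk|-1)}$ (a consequence of $h_d$ being positive semidefinite and having a bounded $|\bk|$-th weak derivative), this would show the expectation equals $h_d^{(|\bk|)}(0)+o_d(1)$. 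Hence for every fixed $\bk$ with $|\bk|\le L$,
\[
\lambda^{\bd}_{\bk}(h_{\bd})=h_d^{(|\bk|)}(0)\,d^{-\sum_q(\xi-\kappa_q)k_q}\,(1+o_d(1)).
\]

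With this in hand the two estimates over low-degree indices would follow. Both $\cQ=\overline{\cQ}_{\KR}(\gamma)$ and $\{\bk\notin\cQ:|\bk|\le L\}$ are finite (if $\bk\in\cQ$ then $|\bk|\le\gamma/\min_q\eta_q\le L$, using $\xi-\kappa_q\ge\eta_q$). For $\bk\in\cQ$ one has $|\bk|\le\overline K$, so $h_d^{(|\bk|)}(0)\ge c$ by Assumption \ref{ass:activation_lower_upper_KRR_aniso}.(b) and $\sum_q(\xi-\kappa_q)k_q\le\gamma$, whence $\lambda^{\bd}_{\bk}(h_{\bd})\ge\tfrac{c}{2}d^{-\gamma}$ for $d$ large; minimizing over the finite set $\cQ$ gives the lower bound. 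For $\bk\notin\cQ$ with $|\bk|\le L$ one has $0\le h_d^{(|\bk|)}(0)\le C$ (nonnegativity from positive semidefiniteness, boundedness from Assumption \ref{ass:activation_lower_upper_KRR_aniso}.(a)) and $\sum_q(\xi-\kappa_q)k_q\ge m(\gamma)$ by the definition of $m(\gamma)$, so $\lambda^{\bd}_{\bk}(h_{\bd})\le 2C\,d^{-m(\gamma)}$, uniformly over this finite set.

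The remaining and hardest case is $|\bk|>L$ (all such $\bk$ automatically lie outside $\cQ$). Here I would first subtract from $h_d$ its degree-$(L-1)$ Taylor polynomial $P_d$ at $0$: since $P_d(\sum_q c_q t_q)$ has total degree $\le L-1$ in the $t_q$, its tensor--Gegenbauer coefficients vanish for $|\bk|\ge L$, so $\lambda^{\bd}_{\bk}(h_{\bd})=\lambda^{\bd}_{\bk}(g_{\bd})$ with $g_d=h_d-P_d$ satisfying $g_d^{(j)}(0)=0$ for $j\le L-1$ and $\|g_d^{(L)}\|_\infty\le C$. Then I would run the Rodrigues/integration-by-parts of the first step but move only $L$ derivatives onto $g_{\bd}$, distributing them over coordinates so as to maximize the resulting decay, and bound the leftover integrals of the high-order derivatives of the Gegenbauer weights uniformly; this should yield $|\lambda^{\bd}_{\bk}(h_{\bd})|\le C\,d^{-m(\gamma)}$ for all $\bk$ with $|\bk|>L$. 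The main obstacle is precisely this last step: controlling \emph{every} high-degree Gegenbauer coefficient uniformly when $h_d$ has only finitely many bounded weak derivatives, i.e. verifying both that $L$ integrations by parts always extract decay at least $d^{-m(\gamma)}$ and that the residual weight-derivative integrals stay $O_d(1)$. This is exactly the finite-smoothness estimate used in \cite{ghorbani2019linearized}, and transporting it to the anisotropic product-of-spheres model (with the weights $\xi-\kappa_q$ playing the role of the uniform degree count) is the technically involved part.
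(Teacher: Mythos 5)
Your treatment of the low-degree coefficients $|\bk|\le L$ matches the paper's: the Rodrigues integral representation you write down is precisely Lemma~\ref{lem:formula_gegenbauer_prod_gegenbauer}, the asymptotics of the prefactor combine with a dominated-convergence/concentration argument (the adaptation of Lemma~\ref{lem:convergence_proba_Gegenbauer_coeff} that the paper explicitly mentions) to yield $\lambda^{\bd}_{\bk}(h_{\bd}) = h_d^{(|\bk|)}(0)\cdot d^{-\sum_q(\xi-\kappa_q)k_q}(1+o_d(1))$, and both bounds over $\{|\bk|\le L\}$ follow from Assumption~\ref{ass:activation_lower_upper_KRR_aniso} exactly as you argue. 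This is the paper's first step verbatim.

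For $|\bk|>L$ you and the paper genuinely diverge, and you rightly flag your own route -- Taylor subtraction, a partial Rodrigues integration by parts, and a residual estimate for the high-order weight derivatives -- as the step you cannot close. The paper never needs it. Because $h_d$ is positive semidefinite, each tensor-Gegenbauer coefficient $\lambda^{\bd}_{\bk}(h_{\bd})$ is nonnegative, and Parseval for the expansion $h_{\bd}=\sum_{\bk}\lambda^{\bd}_{\bk}(h_{\bd})B(\bd,\bk)Q^{\bd}_{\bk}$, namely $\sum_{\bk}\lambda^{\bd}_{\bk}(h_{\bd})^2\,B(\bd,\bk)=\|h_{\bd}\|_{L^2}^2\le C$, already gives $\lambda^{\bd}_{\bk}(h_{\bd})^2\le C/B(\bd,\bk)$ for \emph{every} $\bk$, with no smoothness assumption whatsoever. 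One can strengthen this using nonnegativity via the trace identity $\sum_{\bk}\lambda^{\bd}_{\bk}(h_{\bd})B(\bd,\bk)=h_{\bd}(d_1,\ldots,d_Q)\le C$ (Assumption~\ref{ass:activation_lower_upper_KRR_aniso}.(a) at $k=0$), which yields the sharper $\lambda^{\bd}_{\bk}(h_{\bd})\le C/B(\bd,\bk)$. Either way, since $B(\bd,\bk)\asymp\prod_q d^{\eta_q k_q}$ and $\eta_q\ge\gamma/L$ by the choice of $L$, the coefficients decay rapidly once $|\bk|>L$; this is exactly the mechanism of Eq.~\eqref{eq:prelem_bound_2_RF} in Lemma~\ref{lem:bound_gegenbauer_RF_LB}, which is what the paper tells you to adapt. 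No truncation of $h_d$, no partial Rodrigues, and no residual weight-derivative integrals enter the argument; the piece you were dreading is a one-line Parseval (or trace) observation.
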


\begin{proof}[Proof of Lemma \ref{lem:bound_gegenbauer_KRR}]
Notice that by Lemma \ref{lem:formula_gegenbauer_prod_gegenbauer},
\[
\lambda_{\bk}^\bd ( h_{\bd} ) = \lp \prod_{q \in [Q]} \alpha^{k_q}_q \rp \cdot   R(\bd,\bk) \cdot \E_{\obx } \left[ \lp \prod_{q \in [Q]} \lp 1 - \frac{(\ox_1^\pq)^2}{d_q} \rp^{k_q} \rp \cdot h_d^{(|\bk|)} \lp \sum_{q \in [Q]}  \alpha_q \ox_1^\pq \rp  \right],
\]
where $\alpha_q = d_q^{-1/2} r_q^2 / R^2 = (1 + o_d(1)) d^{\eta_q/2 + \kappa_q - \xi}$. By Assumption \ref{ass:activation_lower_upper_KRR_aniso}.$(a)$, we have
\[
\lambda_{\bk}^\bd ( h_{\bd} )  B ( \bd , \bk) \leq C  \prod_{q \in [Q]}d^{( \kappa_q - \xi) k_q}.
\]
Furthermore, by Assumption \ref{ass:activation_lower_upper_KRR_aniso}.$(b)$ and dominated convergence,
\[
\E_{\obx } \left[ \lp \prod_{q \in [Q]} \lp 1 - \frac{(\ox_1^\pq)^2}{d_q} \rp^{k_q} \rp \cdot h_d^{(|\bk|)} \lp \sum_{q \in [Q]}  \alpha_q \ox_1^\pq \rp  \right] \to h_d^{(|\bk|)} ( 0 ) \geq  c > 0,
\]
for $k \geq \overline{K}$. The lemma then follows from the same proof as in Lemma \ref{lem:bound_gegenbauer_RF_LB} and Lemma \ref{lem:bound_gegenbauer_RF_UB}, where we adapt the proofs of Lemma \ref{lem:convergence_proba_Gegenbauer_coeff} and \ref{lem:convergence_Gegenbauer_coeff_0_l} to $h_\bd$.
\end{proof}

\subsection{Proof of Theorem \ref{thm:KRR_lower_upper_bound_aniso}}

\noindent
\textbf{Step 1. Rewrite the $\by$, $\bE$, $\bH$, $\bM$ matrices.} 

The test error of empirical kernel ridge regression gives
\[
\begin{aligned}
R_{\KR}(f_d, \bX, \lambda) \equiv& \E_\bx\Big[ \Big( f_d(\bx) - \by^\sT (\bH + \lambda \id_n)^{-1} \bh(\bx)) \Big)^2 \Big]\\
=& \E_\bx [ f_d(\bx)^2] - 2 \by^\sT (\bH + \lambda \id_n)^{-1} \bE + \by^\sT (\bH + \lambda \id_n)^{-1} \bM (\bH + \lambda \id_n)^{-1} \by,
\end{aligned}
\]
where $\bE = (E_1, \ldots, E_n)^\sT$ and $\bM = (M_{ij})_{ij \in [n]}$, with
\[
\begin{aligned}
E_i =& \E_\bx[f_d(\bx) h_d(\< \bx, \bx_i\>/d)], \\
M_{ij} =& \E_{\bx}[h_d(\< \bx_i, \bx\>/d) h_d(\< \bx_j, \bx\>/d)]. 
\end{aligned}
\]

Let $B = \sum_{\bk \in \cQ} B(\bd, \bk)$. Define for any $\bk \in \posint^Q$,
\[
\begin{aligned}
\bD_{\bk} =&  \lambda^\bd_{\bk} (h_{\bd}) \id_{B(\bd,\bk)}, \\
\bY_{\bk} =& (Y^{\bd}_{\bk,\bs}(\obx_i))_{i \in [n], \bs \in [B(\bd, \bk)] } \in \R^{n \times B(\bd, \bk)},\\
\blambda_{\bk} =& ( \lambda^\bd_{\bk,\bs}(f_d))_{\bs \in [B(\bd,\bk)]}^\sT \in \R^{B(\bd, \bk)}, \\
\bD_{\cQ} =& \diag \lp \lp \lambda^{\bd}_{\bk} (h_{\bd} ) \id_{B(\bd, \bk)} \rp_{\bk \in \cQ} \rp  \in \R^{B \times B}\\
\bY_{\cQ} =& (\bY_{\bk} )_{\bk \in \cQ} \in \R^{n \times B}, \\
\blambda_{\cQ} =& \lp \lp \blambda_{\bk}^\sT\rp_{\bk \in \cQ} \rp^\sT \in \R^{B}. 
\end{aligned}
\]
Let the spherical harmonics decomposition of $f_{d}$ be 
\[
f_{d}(\bx) = \sum_{\bk \in \posint^Q} \sum_{\bs \in [B(\bd , \bk)] } \lambda^{\bd}_{\bk,\bs}(f_d) Y^{\bd}_{\bk,\bs} (\obx) , 
\]
and the Gegenbauer decomposition of $h_{\bd}$ be 
\[
h_{\bd} \lp \ox_1^{(1)} , \ldots , \ox_1^{(Q)} \rp = \sum_{ \bk \in \posint^Q } \lambda^{\bd}_{\bk} (h_{\bd} ) B(\bd, \bk) Q_{\bk}^{\bd} \lp \sqrt{d_1} \ox_1^{(1)} , \ldots , \sqrt{d_Q} \ox_1^{(Q)} \rp . 
\]

We write the decompositions of vectors $\boldf$, $\bE$, $\bH$, and $\bM$. We have 
\[
\begin{aligned}
\boldf =& \bY_{\cQ} \blambda_{\cQ} + \sum_{\bk \in \cQ^c} \bY_{\bk} \blambda_{\bk}, \\
\bE =&\bY_{\cQ} \bD_{\cQ} \blambda_{\cQ} + \sum_{\bk \in \cQ^c} \bY_{\bk} \bD_{\bk} \blambda_{\bk} ,\\
\bH =& \bY_{\cQ} \bD_{\cQ} \bY_{\cQ}^\sT + \sum_{\bk\in \cQ^c} \bY_{\bk} \bD_{\bk} \bY_{\bk}^\sT , \\
\bM =& \bY_{\cQ} \bD_{\cQ}^2 \bY_{\cQ}^\sT + \sum_{\bk  \in \cQ^c} \bY_{\bk} \bD_{\bk}^2 \bY_{\bk}^\sT . 
\end{aligned}
\]
From Lemma \ref{lem:control_cQc_terms}, we can rewrite
\[
\begin{aligned}
\bH =& \bY_{\cQ} \bD_{\cQ} \bY_{\cQ}^\sT + \kappa_h (\id_n + \bDelta_h), \\
\bM =& \bY_{\cQ} \bD_{\cQ}^2 \bY_{\cQ}^\sT + \kappa_u  \bDelta_u, 
\end{aligned}
\]
where $\kappa_h = \Theta_d (1)$, $\kappa_u = O_d ( d^{-m ( \gamma )})$, $\| \bDelta_h \|_{\op} = o_{d,\P} (1)$ and $\| \bDelta_u \|_{\op} = O_{d,\P} (1)$.

\noindent
\textbf{Step 2. Decompose the risk}

The rest of the proof follows closely from \cite[Theorem 4]{ghorbani2019linearized}. We decompose the risk as follows
\[
\begin{aligned}
R_{\KR}(f_d, \bX, \lambda) =& \| f_d \|_{L^2}^2  - 2 T_1 + T_2 +  T_3 - 2 T_4 + 2 T_5. 
\end{aligned}
\]
where
\[
\begin{aligned}
T_1 =& \boldf^\sT (\bH + \lambda \id_n)^{-1} \bE, \\
T_2 =&  \boldf^\sT (\bH + \lambda \id_n)^{-1} \bM (\bH + \lambda \id_n)^{-1} \boldf, \\
T_3 =& \beps^\sT (\bH + \lambda \id_n)^{-1} \bM (\bH + \lambda \id_n)^{-1} \beps,\\ 
T_4 =& \beps^\sT (\bH + \lambda \id_n)^{-1} \bE,\\
T_5 =& \beps^\sT (\bH + \lambda \id_n)^{-1} \bM (\bH + \lambda \id_n)^{-1} \boldf. 
\end{aligned}
\]
Further, we denote $\boldf_{\cQ}$, $\boldf_{\cQ^c}$, $\bE_{\cQ}$, and $\bE_{\cQ^c}$, 
\[
\begin{aligned}
\boldf_{\cQ} =&  \bY_{\cQ} \blambda_{\cQ},& \bE_{\cQ} =&  \bY_{\cQ} \bD_{\cQ} \blambda_{\cQ}, \\
\boldf_{\cQ^c} =&  \sum_{\bk \in \cQ^c} \bY_\bk \blambda_\bk,& \bE_{\cQ^c} =&  \sum_{\bk \in \cQ^c} \bY_\bk \bD_\bk \blambda_\bk. 
\end{aligned}
\]

\noindent
\textbf{Step 3. Term $T_2$}

Note we have 
\[
T_2 = T_{21}  + T_{22} + T_{23}, 
\]
where
\[
\begin{aligned}
T_{21} =&\boldf_{\cQ}^\sT (\bH + \lambda \id_n)^{-1} \bM (\bH + \lambda \id_n)^{-1} \boldf_{\cQ},\\
T_{22} =& 2 \boldf_{\cQ}^\sT (\bH + \lambda \id_n)^{-1} \bM (\bH + \lambda \id_n)^{-1} \boldf_{\cQ^c},\\
T_{23} =&\boldf_{\cQ^c}^\sT (\bH + \lambda \id_n)^{-1} \bM (\bH + \lambda \id_n)^{-1} \boldf_{\cQ^c}.\\
\end{aligned}
\]
By Lemma \ref{lem:key_H_U_H_bound}, we have 
\begin{equation}\label{eqn:H_U_H_bound}
\| n (\bH + \lambda \id_n)^{-1} \bM (\bH + \lambda \id_n)^{-1} - \bY_{\cQ} \bY_{\cQ}^\sT / n\|_{\op} = o_{d, \P}(1),
\end{equation}
hence
\[
\begin{aligned}
T_{21} = & \blambda_{\cQ} \bY_{\cQ}^\sT (\bH + \lambda \id_n)^{-1} \bM (\bH + \lambda \id_n)^{-1} \bY_{\cQ} \blambda_{\cQ} \\
= & \blambda_{\cQ}^\sT \bY_{\cQ}^\sT \bY_{\cQ} \bY_{\cQ}^\sT \bY_{\cQ} \blambda_{\cQ} / n^2 + [\| \bY_{\cQ} \blambda_{\cQ} \|_2^2  / n] \cdot o_{d, \P}(1). 
\end{aligned}
\]
By Lemma \ref{lem:concentration_YY}, we have (with $\| \bDelta \|_2 = o_{d, \P}(1)$)
\[
\begin{aligned}
\blambda_{\cQ}^\sT \bY_{\cQ}^\sT \bY_{\cQ} \bY_{\cQ}^\sT\bY_{\cQ} \blambda_{\cQ} / n^2 =& \blambda_{\cQ}^\sT(\id_B + \bDelta)^2 \blambda_{\cQ}  = \| \blambda_{\cQ} \|_2^2 (1 + o_{d, \P}(1)). 
\end{aligned}
\]
Moreover, we have 
\[
\| \bY_{\cQ} \blambda_{\cQ} \|_2^2 / n = \blambda_{\cQ}^\sT(\id_B + \bDelta) \blambda_{\cQ}  = \| \blambda_{\cQ} \|_2^2 (1 + o_{d, \P}(1)). 
\]
As a result, we have
\begin{align}\label{eqn:term_R21}
T_{21} =& \| \blambda_{\cQ} \|_{2}^2 (1 + o_{d, \P}(1)) = \| \proj_{\cQ} f_d \|_{L^2}^2 (1 + o_{d, \P}(1)). 
\end{align}

By Eq. (\ref{eqn:H_U_H_bound}) again, we have 
\[
\begin{aligned}
T_{23} =& \Big(\sum_{\bk \in \cQ^c} \blambda_{\bk}^\sT \bY_\bk^\sT \Big) (\bH + \lambda \id_n)^{-1} \bM (\bH + \lambda \id_n)^{-1}\Big(\sum_{\bk \in \cQ^c} \bY_\bk \blambda_\bk \Big)\\
=&  \Big(\sum_{\bk \in \cQ^c} \blambda_\bk^\sT \bY_\bk^\sT \Big) \bY_\cQ \bY_\cQ^\sT \Big(\sum_{\bk \in \cQ^c} \bY_\bk \blambda_\bk \Big) / n^2 +  \Big[\Big\|\sum_{\bk \in \cQ^c} \bY_\bk \blambda_\bk \Big\|_2^2 / n\Big] \cdot o_{d, \P}(1).
\end{aligned}
\]
By Lemma \ref{lem:YYYY_expectation}, we have 
\[
\begin{aligned}
\E\Big[\Big(\sum_{\bk \in \cQ^c} \blambda_\bk^\sT \bY_\bk^\sT \Big) \bY_{\cQ} \bY_{\cQ}^\sT \Big(\sum_{\bk \in \cQ^c} \bY_\bk \blambda_\bk \Big) \Big] /n^2 = &\sum_{\bu,\bv \in \cQ^c} \blambda_\bu^\sT\{ \E [(  \bY_\bu^\sT  \bY_{\cQ} \bY_{\cQ}^\sT  \bY_\bv ) ] /n^2 \} \blambda_\bv 
\\
=& \frac{B}{n} \sum_{\bk \in \cQ^c} \| \blambda_\bk \|_2^2. \\
\end{aligned}
\]
Moreover
\[
\E \Big[\Big\|\sum_{\bk \in \cQ^c} \bY_\bk \blambda_\bk \Big\|_2^2 / n\Big] = \sum_{\bk \in \cQ^c} \| \blambda_\bk \|_2^2 = \| \proj_{\cQ^c} f_d \|_{L^2}^2. 
\]
This gives
\begin{align}\label{eqn:term_R23}
T_{23} = o_{d, \P}(1) \cdot \| \proj_{\cQ^c} f_d \|_{L^2}^2. 
\end{align}
Using Cauchy Schwarz inequality for $T_{22}$, we get
\begin{align}\label{eqn:term_R22}
T_{22} \le 2 (T_{21} T_{23})^{1/2} = o_{d, \P}( 1) \cdot \| \proj_{\cQ} f_d \|_{L^2}  \| \proj_{\cQ^c} f_d \|_{L^2} .  
\end{align}
As a result, combining Eqs. (\ref{eqn:term_R21}), (\ref{eqn:term_R22}) and (\ref{eqn:term_R23}), we have 
\begin{equation}\label{eqn:KRR_term_T2}
T_2 =  \| \proj_{\cQ} f_d \|_{L^2}^2 + o_{d, \P}(1) \cdot \| f_d \|_{L^2}^2. 
\end{equation}

\noindent
\textbf{Step 4. Term $T_{1}$. }
Note we have 
\[
T_1 = T_{11} + T_{12} + T_{13}, 
\]
where
\[
\begin{aligned}
T_{11} =& \boldf_{\cQ}^\sT (\bH + \lambda \id_n)^{-1} \bE_{\cQ}, \\
T_{12} =& \boldf_{\cQ^c}^\sT (\bH + \lambda \id_n)^{-1} \bE_{\cQ}, \\
T_{13} =& \boldf^\sT (\bH + \lambda \id_n)^{-1} \bE_{\cQ^c}. \\
\end{aligned}
\]
By Lemma \ref{lem:lem_for_error_bound_R11}, we have 
\[
\| \bY_{\cQ}^\sT( \bH + \lambda \id_n)^{-1}\bY_{\cQ} \bD_{\cQ} - \id_B \|_{\op} = o_{d, \P}(1). 
\]
so that 
\begin{equation}\label{eqn:term_R11}
T_{11} = \blambda_{\cQ}^\sT  \bY_{\cQ}^\sT( \bH + \lambda \id_n)^{-1}\bY_{\cQ} \bD_{\cQ} \blambda_{\cQ} = \| \blambda_{\cQ} \|_2^2 (1 + o_{d, \P}(1)) = \| \proj_{\cQ} f_d \|_2^2 (1 + o_{d, \P}(1)). 
\end{equation}

Using Cauchy Schwarz inequality for $T_{12}$, and by the expression of $\bM = \bY_{\cQ} \bD_{\cQ}^2 \bY_{\cQ}^\sT + \kappa_u \bDelta_u $ with $\| \bDelta_u \|_{\op} = O_{d, \P}(1)$ and $\kappa_u = O_{d} ( d^{- m(\lambda)} )$, we get with high probability
\begin{equation}\label{eqn:term_R12}
\begin{aligned}
\vert T_{12} \vert =& \Big\vert \sum_{\bk \in \cQ^c} \blambda_\bk^\sT \bY_\bk^\sT (\bH + \lambda \id_n)^{-1} \bY_{\cQ} \bD_{\cQ}  \blambda_{\cQ} \Big\vert \\
\le& \Big\| \sum_{\bk \in \cQ^c} \blambda_\bk^\sT \bY_\bk^\sT (\bH + \lambda \id_n)^{-1} \bY_{\cQ} \bD_{\cQ} \Big\|_2 \| \blambda_{\cQ} \|_2 \\
=& \Big[ \Big( \sum_{\bk \in \cQ^c} \blambda_\bk^\sT \bY_\bk^\sT \Big) (\bH + \lambda \id_n)^{-1} \bY_{\cQ} \bD_{\cQ}^2 \bY_{\cQ}^\sT (\bH + \lambda \id_n)^{-1} \Big( \sum_{\bk \in \cQ^c} \blambda_\bk^\sT \bY_\bk^\sT \Big)\Big]^{1/2} \| \blambda_{\cQ} \|_2\\
\le & \Big[ \Big( \sum_{\bk \in \cQ^c } \blambda_\bk^\sT \bY_\bk^\sT \Big) (\bH + \lambda \id_n)^{-1} \bM (\bH + \lambda \id_n)^{-1} \Big( \sum_{\bk \in \cQ^c} \blambda_\bk^\sT \bY_\bk^\sT \Big) \Big]^{1/2} \| \blambda_{\cQ} \|_2\\
=&  T_{23}^{1/2} \| \blambda_{\cQ} \|_2 = o_{d, \P}(1) \cdot \| \proj_{\cQ} f_d \|_{L^2} \| \proj_{\cQ^c} f_d \|_{L^2}.
\end{aligned}
\end{equation}

For term $T_{13}$, we have 
\[
\begin{aligned}
\vert T_{13} \vert =&  \vert \boldf^\sT (\bH + \lambda \id_n)^{-1} \bE_{\cQ^c}\vert \le  \| \boldf \|_2 \| (\bH + \lambda \id_n)^{-1} \|_{\op} \| \bE_{\cQ^c} \|_2. 
\end{aligned}
\]
Note we have $\E[\| \boldf \|_2^2] = n \| f_d \|_{L^2}^2$, and $\| (\bH + \lambda \id_n)^{-1} \|_{\op} \le 2/(\kappa_h + \lambda)$ with high probability, and 
\[
\E [ \| \bE_{\cQ^c} \|_2^2 ] = n \sum_{\bk \in \cQ^c} \lambda^\bd_\bk(h_\bd)^2 \| \proj_{\bk} f_d \|_{L^2}^2 \le n \Big[ \max_{\bk \in \cQ^c} \lambda^\bd_\bk(h_\bd)^2\Big] \| \proj_{\cQ^c} f_d \|_{L^2}^2. 
\]
As a result, we have
\begin{equation}\label{eqn:term_R13}
\begin{aligned}
\vert T_{13} \vert \le& O_d(1) \cdot  \| \proj_{\cQ^c} f_d \|_{L^2} \| f_d \|_{L^2} \Big[n^2 \max_{\bk \in \cQ^c} \lambda^\bd_\bk(h_\bd)^2 \Big]^{1/2} / (\kappa_h + \lambda) \\
=& o_{d, \P}(1) \cdot \| \proj_{\cQ^c} f_d \|_{L^2} \| f_d \|_{L^2},
\end{aligned}
\end{equation}
where the last equality used the fact that $ n \le O_d(d^{m(\gamma) - \delta})$ and Lemma \ref{lem:bound_gegenbauer_KRR}. Combining Eqs. (\ref{eqn:term_R11}), (\ref{eqn:term_R12}) and (\ref{eqn:term_R13}), we get 
\begin{equation}\label{eqn:KRR_term_T1}
T_1 =  \| \proj_{\cQ} f_d \|_{L^2}^2 + o_{d, \P}(1) \cdot \| f_d \|_{L^2}^2. 
\end{equation}

\noindent
\textbf{Step 5. Terms $T_3, T_4$ and $T_5$. } By Lemma \ref{lem:key_H_U_H_bound} again, we have 
\[
\begin{aligned}
\E_\beps[T_3] / \tau^2 =& \tr((\bH + \lambda \id_n)^{-1} \bM (\bH + \lambda \id_n)^{-1}) = \tr(\bY_{\cQ} \bY_{\cQ}^\sT / n^2) + o_{d, \P}(1), 
\end{aligned}
\]
By Lemma \ref{lem:concentration_YY}, we have 
\[
\tr(\bY_{\cQ} \bY_{\cQ}^\sT / n^2) = \tr(\bY_{\cQ}^\sT \bY_{\cQ}) / n^2 = n B / n^2 + o_{d, \P}(1) = o_{d, \P}(1). 
\]
This gives
\begin{align}\label{eqn:term_varT3}
T_3 = o_{d, \P}(1) \cdot \tau^2. 
\end{align}

Let us consider $T_4$ term:
\[
\begin{aligned}
\E_{\beps} [T_4^2 ]/\tau^2 = & \E_{\beps} [ \beps^\sT (\bH + \lambda \id_n)^{-1} \bE \bE^\sT  (\bH + \lambda \id_n)^{-1} \beps ]/\tau^2 \\
= & \bE^\sT  (\bH + \lambda \id_n)^{-2 }  \bE.
\end{aligned}
\]
For any integer $L$, denote $\cL \equiv [0,L]^Q \cap \posint^Q$, and $\bY_{\cL} = ( \bY_{\bk} )_{\bk \in \cL}$ and $\bD_{\cL} = ( \bD_{\bk})_{\bk \in \cL}$. Then notice that by Lemma \ref{lem:concentration_YY}, Lemma \ref{lem:key_H_U_H_bound} and the definition of $\bM$, we get  
\[
\begin{aligned}
\| \bD_{\cL} \bY_{\cL }^{\sT} (\bH + \lambda \id_n)^{-2 } \bY_{\cL } \bD_{\cL} \|_{\op} = & \| (\bH + \lambda \id_n)^{-1  }\bY_{\cL } \bD_{\cL}^2  \bY_{\cL }^{\sT} (\bH + \lambda \id_n)^{-1  } \|_{\op} \\
\leq& \| (\bH + \lambda \id_n )^{-1} \bM ( \bH + \lambda \id_n )^{-1}  \|_{\op}.\\
\leq &  \| \bY_{\cQ} \bY_{\cQ}^{\sT} / n \|_{\op}/n + o_{\P,d} (1) \cdot / n \\
 = & o_{d,\P} (1)
\end{aligned}
\]
Therefore, 
\[ 
\begin{aligned}
\bE^\sT  (\bH + \lambda \id_n)^{-2 }  \bE = & \lim_{L \to \infty} \bE^\sT_{\cL}  (\bH + \lambda \id_n)^{-2 }  \bE_{\cL} \\
= & \lim_{L \to \infty} \blambda^\sT_{\cL}  [\bD_{\cL} \bY_{\cL }^{\sT} (\bH + \lambda \id_n)^{-2 } \bY_{\cL} \bD_{\cL}] \blambda_{\cL} \\
\leq & \| (\bH + \lambda \id_n )^{-1} \bM ( \bH + \lambda \id_n )^{-1}  \|_{\op} \cdot \lim_{L \to \infty} \| \blambda_{\cL} \|_2^2 \\
\leq & o_{d,\P}(1) \cdot \| f_d \|_{L^2}^2,
\end{aligned}
\]
which gives
\begin{align}\label{eqn:term_varT4}
T_4 = o_{d, \P}(1) \cdot \tau \| f_d \|_{L^2} = o_{d,\P} (1) \cdot ( \tau^2 +  \| f_d \|_{L^2}^2 ). 
\end{align}
We decompose $T_5$ using $\boldf = \boldf_{\cQ} + \boldf_{\cQ^c}$,
\[
T_5 = T_{51} + T_{52},
\]
where
\[
\begin{aligned}
T_{51} = & \beps^\sT (\bH + \lambda \id_n )^{-1} \bM ( \bH + \lambda \id_n )^{-1}  \boldf_{\cQ}, \\
T_{52} = & \beps^\sT (\bH + \lambda \id_n )^{-1} \bM ( \bH + \lambda \id_n )^{-1}  \boldf_{\cQ^c} .
\end{aligned}
\]
First notice that
\[
\begin{aligned}
\| \bM^{1/2} ( \bH + \lambda \id_n)^{-2} \bM^{1/2} \|_{\op} =& \| (\bH + \lambda \id_n)^{-1} \bM (\bH + \lambda \id_n)^{-1} \|_{\op} =  o_{d,\P} (1) .
\end{aligned}
\]
Then by Lemma \ref{lem:key_H_U_H_bound}, we get
\[
\begin{aligned}
\E_{\beps} [T_{51}^2 ]/\tau^2 = & \E_{\beps} [ \beps^\sT (\bH + \lambda \id_n)^{-1} \bM (\bH + \lambda \id_n)^{-1} \boldf_{\cQ} \boldf_{\cQ}^\sT (\bH + \lambda \id_n)^{-1} \bM (\bH + \lambda \id_n)^{-1} \beps ]/\tau^2 \\
= & \boldf^\sT_{\cQ} [ (\bH + \lambda \id_n)^{-1} \bM (\bH + \lambda \id_n)^{-1} ]^2 \boldf_{\cQ} \\
\leq &  \| \bM^{1/2} ( \bH + \lambda \id_n)^{-2} \bM^{1/2} \|_{\op} \| \bM^{1/2} ( \bH + \lambda \id_n)^{-1 } \boldf_{\cQ} \|_{2}^2 \\
= &  o_{d,\P}(1)  \cdot T_{21} \\
= &  o_{d,\P} (1)  \cdot \| \proj_{\cQ } f_d \|_{L^2}^2.
\end{aligned}
\]
Similarly, we get 
\[
\begin{aligned}
\E_{\beps} [T_{52}^2 ]/\tau^2   = &   o_{d,\P} (1) \cdot T_{23} 
= &  o_{d,\P}  (1) \cdot \| \proj_{\cQ^c } f_d \|_{L^2}^2.
\end{aligned}
\]
By Markov's inequality, we deduce that
\begin{align}\label{eqn:term_varT5}
T_5 = o_{d, \P}(1) \cdot \tau ( \| \proj_{\cQ} f_d \|_{L^2} + \| \proj_{\cQ^c} f_d \|_{L^2} )= o_{d,\P} (1) \cdot ( \tau^2 +  \| f_d \|_{L^2}^2 ). 
\end{align}

\noindent
\textbf{Step 6. Finish the proof. }

Combining Eqs. (\ref{eqn:KRR_term_T1}), (\ref{eqn:KRR_term_T2}), (\ref{eqn:term_varT3}), (\ref{eqn:term_varT4}) and (\ref{eqn:term_varT5}), we have
\[
\begin{aligned}
R_{\KR}( f_d, \bX, \lambda) = & \| f_d \|_{L^2}^2 - 2 T_{1} + T_{2} + T_3 - 2 T_4 + 2T_5 \\
=& \| \proj_{\cQ^c} f_d \|_{L^2}^2 + o_{d, \P}(1) \cdot (\| f_d \|_{L^2}^2 + \tau^2),
\end{aligned}
\]
which concludes the proof.

\subsection{Auxiliary results}

\begin{lemma}\label{lem:concentration_YY}
Let $\{ Y_{\bk,\bs}^\bd \}_{\bk \in \posint^Q, \bs \in [B(\bd, \bk)]}$ be the collection of tensor product of spherical harmonics on $\PS^\bd$. Let $(\obx_i)_{i \in [n]} \sim_{iid} \Unif(\PS^\bd)$. Denote
\[
\begin{aligned}
\bY_{\bk} = (Y_{\bk,\bs}^\bd(\obx_i))_{i \in [n], \bs \in [B(\bd, \bk)]} \in \R^{n \times B(\bd, \bk)}. 
\end{aligned}
\]
Assume that $n\geq w_d (d^\gamma \log d)$ and consider
\[
\cR =  \Big\{ \bk \in \posint^Q \Big\vert \sum_{q \in [Q]} \eta_q k_q < m( \gamma ) \Big\}.
\]
Denote $A = \sum_{\bk \in \cR} B(\bd, \bk)$ and
\[
\bY_{\cR} = ( \bY_\bk )_{\bk \in \cR}  \in \R^{n \times A}. 
\]
Then we have 
\[
\bY_{\cR}^\sT \bY_{\cR} / n  =  \id_A + \bDelta, 
\]
with $\bDelta \in \R^{A \times A}$ and $\E[\| \bDelta \|_{\op}] = o_{d}(1)$. 
\end{lemma}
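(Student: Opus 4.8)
The plan is to view $\tfrac1n\bY_\cR^\sT\bY_\cR$ as an empirical average of i.i.d.\ rank-one matrices and control it with the matrix Bernstein inequality. Write $\bz_i \equiv (Y^\bd_{\bk,\bs}(\obx_i))_{\bk\in\cR,\, \bs\in[B(\bd,\bk)]} \in \R^A$, so that $\tfrac1n\bY_\cR^\sT\bY_\cR = \tfrac1n\sum_{i=1}^n \bz_i\bz_i^\sT$ and $\bDelta = \tfrac1n\sum_{i=1}^n (\bz_i\bz_i^\sT - \id_A)$ is a sum of $n$ independent self-adjoint matrices; each summand has mean zero because the orthonormality relation $\<Y^\bd_{\bk,\bs}, Y^\bd_{\bk',\bs'}\>_{L^2} = \delta_{\bk\bk'}\delta_{\bs\bs'}$ gives $\E[\bz_i\bz_i^\sT] = \id_A$.

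The single structural ingredient beyond orthonormality is a \emph{deterministic} bound on $\|\bz_i\|_2$. First I would note that, by Lemma~\ref{lem:prod_gegenbauer_prop}.(c) with $\obx = \oby = \obx_i$, using $\|\obx_i^{(q)}\|_2^2 = d_q$ and the normalization $Q^{(d_q)}_{k_q}(d_q) = 1$,
\[
\sum_{\bs \in [B(\bd,\bk)]} Y^\bd_{\bk,\bs}(\obx_i)^2 = B(\bd,\bk)\, Q^\bd_{\bk}\big(\{\<\obx_i^{(q)}, \obx_i^{(q)}\>\}_{q\in[Q]}\big) = B(\bd,\bk),
\]
hence $\|\bz_i\|_2^2 = \sum_{\bk\in\cR} B(\bd,\bk) = A$ almost surely. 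This makes the Bernstein parameters immediate: setting $\bX_i := \tfrac1n(\bz_i\bz_i^\sT - \id_A)$, one has $\|\bX_i\|_{\op} \le (A+1)/n =: R$, and since $(\bz_i\bz_i^\sT)^2 = \|\bz_i\|_2^2\,\bz_i\bz_i^\sT = A\,\bz_i\bz_i^\sT$ we get $\E[(\bz_i\bz_i^\sT - \id_A)^2] = (A-1)\id_A$, so $\big\|\sum_{i=1}^n \E[\bX_i^2]\big\|_{\op} = (A-1)/n =: v$.

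By the matrix Bernstein inequality in expectation form,
\[
\E\big[\|\bDelta\|_{\op}\big] \le \sqrt{2v\log(2A)} + \tfrac13 R\log(2A) = \sqrt{\tfrac{2(A-1)\log(2A)}{n}} + \tfrac{(A+1)\log(2A)}{3n},
\]
so everything reduces to the estimate $A\log A = o_d(n)$. Since $B(\bd,\bk) = \prod_{q} B(d_q,k_q) = \Theta_d\big(d^{\sum_q \eta_q k_q}\big)$ and $\cR$ is a finite set, $A = \Theta_d\big(d^{\beta^*}\big)$ with $\beta^* = \max_{\bk\in\cR}\sum_q \eta_q k_q < m(\gamma)$, and one concludes $d^{\beta^*}\log d = o_d(n)$ from the standing assumption relating $n$ to $d^\gamma$ (and, in the applications, to $d^{m(\gamma)}$). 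This last counting step — comparing $\beta^* = \max_{\bk\in\cR}\sum_q\eta_q k_q$ with $\log_d n$ — is the only real point to verify; the probabilistic content is the routine matrix Bernstein estimate above. (One could instead bound $\E[\|\bDelta\|_{\op}^{2p}]$ via a trace/moment computation, but this is strictly more work and buys nothing extra here.)
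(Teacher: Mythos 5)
Your proof is correct and follows essentially the same route as the paper's: both view $\tfrac1n\bY_\cR^\sT\bY_\cR$ as an empirical average of the i.i.d.\ rank-one matrices $\bz_i\bz_i^\sT$, both use the Gegenbauer addition formula (Lemma~\ref{lem:prod_gegenbauer_prop}.(c) with $Q_\bk^\bd(d_1,\ldots,d_Q)=1$) to get the exact identity $\|\bz_i\|_2^2 = A$, both compute $\E[(\bz_i\bz_i^\sT-\id_A)^2]=(A-1)\id_A$, and both invoke matrix Bernstein. The only cosmetic difference is that you apply the expectation form of Bernstein directly, whereas the paper writes the tail bound and integrates it; these are interchangeable. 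You also correctly isolate the one step that is not pure bookkeeping — the counting claim $A\log A = o_d(n)$, which the paper settles in the same terse way by noting $A \le C\max_{\bk\in\cR}d^{\sum_q\eta_q k_q}$ and invoking the definitions of $m(\gamma)$ and $\cR$.
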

\begin{proof}[Proof of Lemma \ref{lem:concentration_YY}. ] 

Let $\bPsi = \bY_{\cR}^\sT \bY_{\cR} /n \in \R^{A \times A}$. We can rewrite $\bPsi$ as
\[
\bPsi = \frac{1}{n} \sum_{i=1}^n \bsh_i \bsh_i^\sT, 
\]
where $\bsh_i = (Y_{\bk,\bs}^\bd ( \obx_i) )_{\bk \in \cR, \bs \in [B(\bd,\bk)]} \in \R^A$. We use matrix Bernstein inequality. Denote $\bX_i = \bsh_i \bsh_i - \id_A \in \R^{A \times A}$. Then we have $\E[\bX_i] = \bzero$, and 
\[
\begin{aligned}
\| \bX_i \|_{\op} \le \| \bsh_i \|_2^2 + 1 = & \sum_{\bk \in \cR} \sum_{\bs \in [B(\bd, \bk)]}Y_{\bk, \bs}^\bd (\obx_i)^2 + 1 \\
=& \sum_{\bk \in \cR} B(\bd, \bk) Q_{\bk}^\bd  \lp \lb \< \obx^\pq_i , \obx^\pq_i \> \rb_{q \in [Q]} \rp + 1 = A + 1,
\end{aligned}
\]
where we use formula \eqref{eq:GegenbauerHarmonics} and the normalization $Q_{\bk}^\bd (d_1 , \ldots , d_Q ) = 1$. Denote $V = \| \sum_{i = 1}^n \E[\bX_i^2] \|_{\op}$. Then we have 
\[
V  = n \| \E[(\bsh_i \bsh_i^\sT - \id_A)^2] \|_{\op} = n \| \E[ \bsh_i \bsh_i^\sT \bsh_i \bsh_i^\sT - 2 \bsh_i \bsh_i^\sT + \id_A ] \|_{\op} = n \| (A-1) \id_A\|_{\op} = n (A-1), 
\]
where we used $\bsh_i^\sT \bsh_i  = \| \bsh_i \|_2^2 = A$ and $\E [  \bsh_i (\obx_i)  \bsh_i^\sT ( \obx_i) ] = ( \E [ Y^\bd_{\bk,\bs} (\obx_i) Y^\bd_{\bk ' , \bs '} (\obx_i) ] )_{\bk \bs , \bk ' \bs '} = \id_A$. 
As a result, we have for any $t > 0$, 
\begin{equation}\label{eq:tail_bound}
\begin{aligned}
\P( \| \bPsi - \id_A \|_{\op} \ge t ) \le & A \exp\{ - n^2 t^2 / [2 n (A - 1)  + 2 (A + 1)n t/ 3] \} \\
 \le & \exp\{ - (n/A) t^2 / [10  (1 + t)] + \log A \}. 
\end{aligned}
\end{equation}
Notice that there exists $C>0$ such that $A \leq C\max_{\bk \in \cR} \prod_{q \in[Q]} d^{\eta_q k_q} \leq C d^{\gamma}$ (by definition of $m(\gamma)$ and $\cR$) and therefore $n \geq w_d(A \log A)$. Integrating the tail bound \eqref{eq:tail_bound} proves the lemma.
\end{proof}

\begin{lemma}\label{lem:control_cQc_terms}
Let $\sigma$ be an activation function satisfying Assumption \ref{ass:activation_lower_upper_KRR_aniso}. Let $w_d(d^\gamma \log d ) \leq n \leq O_d ( d^{m(\gamma) - \delta} )$ for some $\gamma > 0$ and $\delta> 0$. Then there exists sequences $\kappa_h$ and $\kappa_u$ such that
\begin{align}
\bH = & \sum_{\bk \in \posint^Q} \bY_\bk \bD_\bk \bY_{\bk}^\sT = \bY_{\cQ} \bD_{\cQ} \bY_{\cQ}^\sT + \kappa_h ( \id_n + \bDelta_h ), \label{eq:decomposition_H} \\
\bM = & \sum_{\bk \in \posint^Q} \bY_\bk \bD_\bk^2 \bY_{\bk}^\sT = \bY_{\cQ} \bD_{\cQ}^2 \bY_{\cQ}^\sT + \kappa_m  \bDelta_m , \label{eq:decomposition_M}
\end{align}
where $\kappa_h = \Theta_d (1)$, $\kappa_m = O_d ( d^{-m ( \gamma )})$, $\| \bDelta_h \|_{\op} = o_{d,\P} (1)$ and $\| \bDelta_m \|_{\op} = O_{d,\P} (1)$.
\end{lemma}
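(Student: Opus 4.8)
The plan is to peel off the finite-rank ``low-frequency'' block and show that what remains is, up to a negligible perturbation, proportional to $\id_n$. Set
\[
\bH_{>}\equiv\bH-\bY_{\cQ}\bD_{\cQ}\bY_{\cQ}^\sT=\sum_{\bk\notin\cQ}\bY_{\bk}\bD_{\bk}\bY_{\bk}^\sT,\qquad \bM_{>}\equiv\bM-\bY_{\cQ}\bD_{\cQ}^2\bY_{\cQ}^\sT=\sum_{\bk\notin\cQ}\bY_{\bk}\bD_{\bk}^2\bY_{\bk}^\sT.
\]
By the addition formula \eqref{eq:ProdGegenbauerHarmonics}, $\bY_{\bk}\bY_{\bk}^\sT=B(\bd,\bk)\big(Q^{\bd}_{\bk}(\{\<\obx^{(q)}_i,\obx^{(q)}_j\>\}_{q\in[Q]})\big)_{i,j\in[n]}$, so $\bH_{>}$ and $\bM_{>}$ are positive semidefinite Gram matrices with \emph{constant} diagonal, $(\bH_{>})_{ii}=\kappa_h:=\sum_{\bk\notin\cQ}\lambda^{\bd}_{\bk}(h_{\bd})B(\bd,\bk)$ and $(\bM_{>})_{ii}=\sum_{\bk\notin\cQ}\lambda^{\bd}_{\bk}(h_{\bd})^2B(\bd,\bk)$ (using $Q^{\bd}_{\bk}\equiv1$ at the self-point). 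Since this second diagonal value may be much smaller than $d^{-m(\gamma)}$, I would simply take $\kappa_m:=d^{-m(\gamma)}$, $\bDelta_m:=d^{m(\gamma)}\bM_{>}$, and $\bDelta_h:=\kappa_h^{-1}(\bH_{>}-\kappa_h\id_n)$; the lemma then reduces to the three estimates $\kappa_h=\Theta_d(1)$, $\|\bH_{>}-\kappa_h\id_n\|_{\op}=o_{d,\P}(1)$, and $\|\bM_{>}\|_{\op}=O_{d,\P}(d^{-m(\gamma)})$.

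The order of $\kappa_h$ follows from the Gegenbauer-coefficient estimates behind Lemma \ref{lem:bound_gegenbauer_KRR} together with positivity. The upper bound $\kappa_h\le\sum_{\bk}\lambda^{\bd}_{\bk}(h_{\bd})B(\bd,\bk)=(\bH)_{11}=O_d(1)$ holds because a positive semidefinite kernel has nonnegative Gegenbauer coefficients and the kernel is bounded. For the lower bound, Assumption~\ref{ass:activation_lower_upper_KRR_aniso}(a) provides an integer $k>L$ with $h_{\bd}^{(k)}(0)\ge c>0$; the multi-index $\bk^{\star}:=k\,\be_{q_\xi}$ (all weight on the dominant sphere) satisfies $\sum_q(\xi-\kappa_q)k^{\star}_q=\eta_{q_\xi}k>\eta_{q_\xi}L\ge\gamma$, hence $\bk^{\star}\notin\cQ$, and its coefficient $\lambda^{\bd}_{\bk^{\star}}(h_{\bd})B(\bd,\bk^{\star})$ is $\Theta_d(1)$ — it converges to the Taylor coefficient $h_{\bd}^{(k)}(0)/k!$, exactly as in the rotationally invariant case, because on the $q_\xi$-sphere the effective dimension equals the actual one — and this term alone bounds $\kappa_h$ below by a positive constant. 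The same bound $\lambda^{\bd}_{\bk}(h_{\bd})\le Cd^{-m(\gamma)}$ on $\cQ^c$ gives $\sum_{\bk\notin\cQ}\lambda^{\bd}_{\bk}(h_{\bd})^2B(\bd,\bk)\le Cd^{-m(\gamma)}(\bH)_{11}=O_d(d^{-m(\gamma)})$, consistent with the stated order.

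The heart of the proof is the operator-norm control of $\bH_{>}-\kappa_h\id_n$ and of $\bM_{>}$, which I would obtain by splitting each sum at the fixed degree $L=\max_q\lceil\gamma/\eta_q\rceil$. For the moderate block $\sum_{\bk\notin\cQ,\,|\bk|\le L}$ (finitely many terms) one applies whichever regime of the Matrix Bernstein argument of Lemma~\ref{lem:concentration_YY} is relevant — $\bY_{\bk}^\sT\bY_{\bk}/n=\id+o_{d,\P}(1)$ when $B(\bd,\bk)\ll n$, and its transposed form $\bY_{\bk}\bY_{\bk}^\sT/B(\bd,\bk)=\id_n+o_{d,\P}(1)$ when $n\ll B(\bd,\bk)$ — to conclude $\|\bY_{\bk}\bY_{\bk}^\sT-B(\bd,\bk)\id_n\|_{\op}=O_{d,\P}(\sqrt{n\,B(\bd,\bk)})$ and $\|\bY_{\bk}\bY_{\bk}^\sT\|_{\op}=O_{d,\P}(\max(n,B(\bd,\bk)))$; then, using $\lambda^{\bd}_{\bk}(h_{\bd})B(\bd,\bk)=O_d(1)$, $\lambda^{\bd}_{\bk}(h_{\bd})\le Cd^{-m(\gamma)}$ for $\bk\notin\cQ$, and the polynomial gap $n\le O_d(d^{m(\gamma)-\delta})$, the moderate block contributes $O_{d,\P}(d^{-\delta/2})=o_{d,\P}(1)$ to $\|\bH_{>}-\kappa_h\id_n\|_{\op}$ and $O_{d,\P}(d^{-m(\gamma)})$ to $\|\bM_{>}\|_{\op}$. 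The high-degree tail $\sum_{|\bk|>L}$ is handled by the crude trace bound $\|\sum_{|\bk|>L}\bY_{\bk}\bD_{\bk}^p\bY_{\bk}^\sT\|_{\op}\le n\sum_{|\bk|>L}\lambda^{\bd}_{\bk}(h_{\bd})^pB(\bd,\bk)$ ($p\in\{1,2\}$), where the $L$-weak differentiability of $h_{\bd}$ in Assumption~\ref{ass:activation_lower_upper_KRR_aniso}(a) forces $\lambda^{\bd}_{\bk}(h_{\bd})B(\bd,\bk)\lesssim d^{-L}$ and $B(\bd,\bk)\gtrsim d^{(L+1)\min_q\eta_q}$ for $|\bk|>L$, making the geometrically convergent tail $o_{d,\P}(1)$, resp. $o_{d,\P}(d^{-m(\gamma)})$, once $L$ is as large as stipulated. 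The main obstacle is precisely this bookkeeping: the threshold multi-indices just outside $\cQ$ carry full-size Gegenbauer mass $\lambda^{\bd}_{\bk}(h_{\bd})B(\bd,\bk)=\Theta_d(1)$ while having dimension $B(\bd,\bk)\gg n$, so the crude trace or Frobenius bounds on $\|\bY_{\bk}\bY_{\bk}^\sT-B(\bd,\bk)\id_n\|_{\op}$ are too lossy and one genuinely needs the sharp $O_{d,\P}(\sqrt{n\,B(\bd,\bk)})$ estimate together with $n\ll B(\bd,\bk)$; this is exactly what the hypotheses $L=\max_q\lceil\gamma/\eta_q\rceil$ and $n\le O_d(d^{m(\gamma)-\delta})$ are designed to guarantee.
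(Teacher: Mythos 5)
There is a genuine gap in your treatment of the high-degree tail, and it is the heart of the lemma. You try to bound $\sum_{|\bk|>L}\bY_\bk\bD_\bk\bY_\bk^\sT$ by the crude estimate $\|\,\cdot\,\|_{\op}\le\tr(\,\cdot\,)=n\sum_{|\bk|>L}\lambda^\bd_\bk(h_\bd)B(\bd,\bk)$, and you assert that each summand satisfies $\lambda^\bd_\bk(h_\bd)B(\bd,\bk)\lesssim d^{-L}$. Both steps fail. The second claim is false: since a positive-definite kernel on the sphere has $h_\bd(1)=\sum_\bk\lambda^\bd_\bk(h_\bd)B(\bd,\bk)=O_d(1)$ and the moderate block $\cR\cap\cQ^c$ contributes only $O_d(d^{-m(\gamma)})$ per term over finitely many terms, the high-degree tail carries essentially \emph{all} of $\kappa_h=\Theta_d(1)$; indeed your own witness $\bk^\star=k\,\be_{q_\xi}$ with $k>L$ has $\lim\inf_d\lambda^\bd_{\bk^\star}(h_\bd)B(\bd,\bk^\star)>0$, directly contradicting the claimed $d^{-L}$ decay. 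Consequently the first step gives $\tr(\sum_{|\bk|>L}\bY_\bk\bD_\bk\bY_\bk^\sT)=n\kappa_h(1+o_d(1))=\Theta_d(n)$: a trace bound on the uncentered tail is off by a factor $n$ and cannot yield the required $\|\bDelta_h\|_{\op}=o_{d,\P}(1)$. What the paper does instead is the one thing your scheme omits: it partitions $\cQ^c$ not at $|\bk|=L$ but at $\sum_q\eta_q k_q=m(\gamma)$ (the set $\cS$), precisely so that on the high block $B(\bd,\bk)\gtrsim d^{m(\gamma)}\gg n$, and then invokes Corollary~\ref{coro:Delta_bound_aniso_unif} (built on the trace-of-$2p$-th-power moment method of Proposition~\ref{prop:Delta_bound_aniso}) to get $\sup_{\bk\in\cS}\|\bY_\bk\bY_\bk^\sT/B(\bd,\bk)-\id_n\|_{\op}=o_{d,\P}(1)$ \emph{uniformly} and only then sums the centered increments multiplied by the nonnegative weights $\lambda^\bd_\bk(h_\bd)B(\bd,\bk)$. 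Without that uniform centered concentration the statement simply does not follow.

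Two secondary inaccuracies are worth flagging although they do not break the conclusion. The centered moderate-block bound $\|\bY_\bk\bY_\bk^\sT-B(\bd,\bk)\id_n\|_{\op}=O_{d,\P}(\sqrt{n B(\bd,\bk)})$ is wrong in the regime $B(\bd,\bk)\ll n$ that actually occurs there: $\bY_\bk\bY_\bk^\sT$ has rank $\le B(\bd,\bk)\ll n$ with nonzero eigenvalues near $n$, so the centered operator norm is $\Theta_{d,\P}(n)$, not $\sqrt{nB}$. One does not need a centered bound on this block at all; the paper uses the uncentered $\|\bY_\bk\bY_\bk^\sT\|_{\op}=O_{d,\P}(n)$ from Lemma~\ref{lem:concentration_YY} and absorbs each term whole into $\bDelta_h$ via $n\lambda^\bd_\bk(h_\bd)=O_{d,\P}(nd^{-m(\gamma)})=o_{d,\P}(1)$, which is where the discrete gap encoded in $n\le O_d(d^{m(\gamma)-\delta})$ is used. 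Finally, the boundary multi-indices ``just outside $\cQ$'' that you call out as the obstacle in fact have $\lambda^\bd_\bk(h_\bd)B(\bd,\bk)\lesssim d^{-m(\gamma)}$, not $\Theta_d(1)$; the terms carrying $\Theta_d(1)$ Gegenbauer mass live at $\sum\eta_qk_q\ge m(\gamma)$, and those are exactly the ones for which a moment-method concentration of the Gram matrix is indispensable.
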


\begin{proof}[Proof of Lemma \ref{lem:control_cQc_terms}]
Define
\[
\begin{aligned}
\cR = & \Big\{ \bk \in \posint^Q \Big\vert \sum_{q \in [Q]} \eta_q k_q < m( \gamma ) \Big\} ,\\
\cS = & \Big\{ \bk \in \posint^Q \Big\vert \sum_{q \in [Q]} \eta_q k_q \geq m( \gamma ) \Big\} ,
\end{aligned}
\]
such that $\cR \cup \cS = \posint^Q$. The proof comes from bounding the eigenvalues of the matrix $\bY_{\bk} \bY_{\bk}^\sT$ for $\bk \in \cR$ and $\bk \in \cS$ separately.  From Corollary \ref{coro:Delta_bound_aniso_unif}, we have
\[
\sup_{\bk \in \cS} \| \bY_{\bk} \bY_{\bk}^\sT / B(\bd , \bk) - \id_n \|_{\op}  = o_{d,\P} (1).
\]
Hence, we can write
\begin{equation} \label{eq:first_bound_decompo_H}
\sum_{\bk \in \cS}  \bY_\bk \bD_\bk \bY_{\bk}^\sT = \kappa_h (\id_n + \bDelta_{h,1}),
\end{equation}
with $\kappa_h = \sum_{\bk \in \cS} \lambda^\bd_{\bk} (h_{\bd} ) B(\bd,\bk)  = O_d (1)$. From Assumption \ref{ass:activation_lower_upper_KRR_aniso}.$(b)$ and a proof similar to Lemma \ref{lem:convergence_Gegenbauer_coeff_0_l}, there exists $\bk = ( 0, \ldots , k , \ldots , 0)$ (for $k >L$ at position $q_\xi$) such that $\lim\inf_{d \to \infty}  \lambda^\bd_{\bk} (h_{\bd} ) B(\bd,\bk) >0$. Hence, $\kappa_h = \Theta_{d} (1)$.

From Lemma \ref{lem:concentration_YY} we have for $\bk \in \cR \cap \cQ^c$,
\[
\bY^\sT_{\bk} \bY_{\bk}  / n = \id_{B(\bd , \bk)} + \bDelta,
\]
with $\| \bDelta \|_{\op} = o_{d,\P}(1)$. We deduce that $\| \bY_{\bk} \bY^\sT_{\bk} \|_{\op} = O_{d,\P} (n)$. Hence, 
\[
\| \bY_{\bk} \bD_{\bk} \bY^\sT_{\bk} \|_{\op} = O_{d,\P} ( n \lambda_{\bk}^\bd (h_{\bd} ) ) = o_{d,\P} (1), \] where we used Lemma \ref{lem:bound_gegenbauer_KRR}. We deduce that
\begin{equation} \label{eq:second_bound_decompo_H}
\sum_{\bk \in \cR \cap \cQ^c}  \bY_\bk \bD_\bk \bY_{\bk}^\sT = \kappa_h  \bDelta_{h,2},
\end{equation}
with $\| \bDelta_{h,2} \|_{\op} = o_{d,\P}(1)$ where we used $\kappa_h^{-1} = O_{d} (1)$. Combining Eqs.~\eqref{eq:first_bound_decompo_H} and \eqref{eq:second_bound_decompo_H} yields Eq.~\eqref{eq:decomposition_H}.

Similarly, we get
\[
\sum_{\bk \in \cQ^c} \bY_{\bk} \bD_{\bk}^2 \bY_{\bk}^\sT = \sum_{\bk \in \cR \cap \cQ^c} [\lambda_{\bk}^\bd (h_{\bd} )^2 n  ] \bY_{\bk} \bY_{\bk}^\sT/  n +  \sum_{\bk \in \cS} [\lambda_{\bk}^\bd (h_{\bd} )^2 B (\bd,\bk) ]  \bY_{\bk} \bY_{\bk}^\sT/ B(\bd , \bk) .
\]
Using Lemma \ref{lem:bound_gegenbauer_KRR}, we have $\lambda_{\bk}^\bd (h_{\bd} )^2 n \leq C d^{-2m(\gamma)} n = O_{d,\P} (d^{-m(\gamma)} )$ and $\lambda_{\bk}^\bd (h_{\bd} )^2 B (\bd,\bk) \leq C \lambda_{\bk}^\bd (h_{\bd} ) \leq C' d^{-m(\gamma)} $. Hence Eq.~\eqref{eq:decomposition_M} is verified with
\[
\kappa_m = \sum_{\bk \in \cR \cap \cQ^c} \lambda_{\bk}^\bd (h_{\bd} )^2 n    +  \sum_{\bk \in \cS} \lambda_{\bk}^\bd (h_{\bd} )^2 B (\bd,\bk) .
\]
\end{proof}

\begin{lemma}\label{lem:YYYY_expectation}
Let $\{ Y^\bd_{\bk,\bs} \}_{\bk \in \posint^Q, \bs \in [B(\bd, \bk)]}$ be the collection of product of spherical harmonics on \linebreak[4] $L^2(\PS^\bd , \mu_{\bd} )$. Let $(\obx_i)_{i \in [n]} \sim_{iid} \Unif(\PS^\bd)$. Denote 
\[
\begin{aligned}
\bY_\bk = (Y^\bd_{\bk ,\bs}(\obx_i))_{i \in [n], \bs \in [B(\bd, \bk)]} \in \R^{n \times B(\bd, \bk)}. 
\end{aligned}
\]
Then for $\bu , \bv , \bt \in \posint^Q$ and $\bu \neq \bv$, we have 
\[
\E[ \bY_{\bu}^\sT  \bY_{\bt} \bY_{\bt}^\sT  \bY_{\bv} ] = \bzero. 
\]
For $\bu , \bt \in \posint^Q$, we have 
\[
\E[ \bY_{\bu}^\sT  \bY_{\bt} \bY_{\bt}^\sT  \bY_{\bu} ] = [ B(\bd, \bt) n + n(n-1) \delta_{\bu , \bt} ] \id_{B(\bd, \bu)}. 
\]
\end{lemma}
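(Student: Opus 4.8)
The plan is a direct entrywise second-moment computation. I would first write the $(\bs,\bs')$ entry of the matrix $\bY_\bu^\sT \bY_\bt \bY_\bt^\sT \bY_\bv \in \R^{B(\bd,\bu)\times B(\bd,\bv)}$ as a sum over sample indices and over the inner harmonic index,
\[
\big(\bY_\bu^\sT \bY_\bt \bY_\bt^\sT \bY_\bv\big)_{\bs\bs'} = \sum_{i,j\in[n]}\sum_{\bp\in[B(\bd,\bt)]} Y^\bd_{\bu,\bs}(\obx_i)\,Y^\bd_{\bt,\bp}(\obx_i)\,Y^\bd_{\bt,\bp}(\obx_j)\,Y^\bd_{\bv,\bs'}(\obx_j),
\]
and then take expectations, splitting the sum over $(i,j)$ into the diagonal part $i=j$ and the off-diagonal part $i\neq j$ (using that the $\obx_i$ are i.i.d.).

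For the diagonal part, I would collapse the inner sum using the product-Gegenbauer addition formula of Lemma \ref{lem:prod_gegenbauer_prop}.(c): evaluated at $\obx=\oby=\obx_i$ it gives $\sum_{\bp} Y^\bd_{\bt,\bp}(\obx_i)^2 = B(\bd,\bt)\,Q^\bd_\bt(\{\<\obx^{(q)}_i,\obx^{(q)}_i\>\}_{q\in[Q]})$, and since $\<\obx^{(q)}_i,\obx^{(q)}_i\> = d_q$ on $\PS^\bd$ together with the normalization $Q^{(d_q)}_{t_q}(d_q)=1$, this equals the constant $B(\bd,\bt)$. What remains is $B(\bd,\bt)\sum_{i\in[n]}\E_{\obx}[Y^\bd_{\bu,\bs}(\obx_i)Y^\bd_{\bv,\bs'}(\obx_i)]$, which by orthonormality of the tensor-product spherical harmonics equals $B(\bd,\bt)\,n\,\delta_{\bu,\bv}\delta_{\bs,\bs'}$.

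For the off-diagonal part, independence of $\obx_i$ and $\obx_j$ factors each summand as $\E[Y^\bd_{\bu,\bs}(\obx_i)Y^\bd_{\bt,\bp}(\obx_i)]\cdot\E[Y^\bd_{\bt,\bp}(\obx_j)Y^\bd_{\bv,\bs'}(\obx_j)]$, which by orthonormality equals $\delta_{\bu,\bt}\delta_{\bs,\bp}\cdot\delta_{\bt,\bv}\delta_{\bp,\bs'}$; summing over $\bp$ and over the $n(n-1)$ ordered pairs $i\neq j$ gives $n(n-1)\,\delta_{\bu,\bt}\delta_{\bt,\bv}\delta_{\bs,\bs'}$. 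Adding the two contributions, the $(\bs,\bs')$ entry equals $B(\bd,\bt)n\,\delta_{\bu,\bv}\delta_{\bs,\bs'} + n(n-1)\,\delta_{\bu,\bt}\delta_{\bt,\bv}\delta_{\bs,\bs'}$; this vanishes whenever $\bu\neq\bv$ (both terms carry a factor forcing $\bu=\bv$), so $\E[\bY_\bu^\sT\bY_\bt\bY_\bt^\sT\bY_\bv]=\bzero$, and when $\bu=\bv$ it equals $\big[B(\bd,\bt)n+n(n-1)\delta_{\bu,\bt}\big]\delta_{\bs,\bs'}$, i.e. $\E[\bY_\bu^\sT\bY_\bt\bY_\bt^\sT\bY_\bu]=\big[B(\bd,\bt)n+n(n-1)\delta_{\bu,\bt}\big]\id_{B(\bd,\bu)}$, which are the two asserted identities.

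This argument has no genuine difficulty; the only points to handle with care are applying Lemma \ref{lem:prod_gegenbauer_prop}.(c) with the correct normalization (remembering that $Q^\bd_\bt$ evaluated at the ``diagonal'' point $(d_1,\ldots,d_Q)$ is $1$), and keeping the Kronecker deltas straight so that the diagonal term (which carries $\delta_{\bu,\bv}$) and the off-diagonal term (which forces $\bu=\bt=\bv$) combine to yield the vanishing statement in the case $\bu\neq\bv$.
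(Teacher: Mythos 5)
Your proposal is correct and follows essentially the same approach as the paper: write the $(\bs,\bs')$ entry, split into diagonal $i=j$ and off-diagonal $i\neq j$ contributions, collapse the inner sum via the addition formula $\sum_{\bm}Y^\bd_{\bt,\bm}(\obx_i)^2=B(\bd,\bt)$, and finish with orthonormality. Nothing to flag.
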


\begin{proof}
We have
\begin{equation}\label{eqn:YYYY_expectation}
\begin{aligned}
& \E[ \bY_{\bu}^\sT  \bY_{\bt} \bY_{\bt}^\sT  \bY_{\bv}] \\
=& \sum_{i,j\in[n]} \sum_{\bm \in [B(\bd, \bt )]} (\E[ Y^\bd_{\bu , \bp}(\obx_i) \Big(Y^\bd_{\bt ,  \bm}(\obx_i) Y^\bd_{\bt , \bm}(\obx_j)\Big) Y^\bd_{\bv , \bq}(\obx_j) ] )_{\bp \in [B(\bd, \bu)], \bq \in [B(\bd, \bv)]}\\
=& \sum_{i\in[n]} \Big(\E \Big[ Y^\bd_{\bu , \bp}(\obx_i)  \Big(\sum_{\bm \in [B(\bd, \bt)]} Y^\bd_{\bt , \bm}(\obx_i) Y^\bd_{\bt , \bm}(\obx_i)\Big) Y^\bd_{\bv , \bq}(\obx_i) \Big] \Big)_{\bp \in [B(\bd, \bu)], \bq \in [B(\bd, \bv)]}\\
& + \sum_{i \neq j\in[n]} \sum_{\bm \in [B(\bd, \bt)]} (\E[ Y^\bd_{\bu , \bp}(\obx_i) Y^\bd_{\bt , \bm}(\obx_i) Y^\bd_{\bt , \bm}(\obx_j) Y^\bd_{\bv , \bq}(\obx_j) ] )_{\bp \in [B(\bd, \bu)], \bq \in [B(\bd, \bv)]}\\
=& B(\bd, \bt)  \sum_{i\in[n]} (\E[ Y^\bd_{\bu , \bp}(\obx_i) Y^\bd_{\bv , \bq}(\obx_i)] )_{\bp \in [B(\bd, \bu)], \bq \in [B(\bd, \bv)]} \\
& + \sum_{i \neq j\in[n]}  \sum_{\bm \in [B(\bd, \bt)]} ( \delta_{\bu , \bt} \delta_{\bp , \bm} \delta_{\bt , \bv} \delta_{\bq , \bm})_{\bp \in [B(\bd, \bu)], \bq \in [B(\bd, \bv)]}\\
=& ( B(\bd, \bt) n \delta_{\bu , \bv} \delta_{\bp , \bq} + n(n-1) \delta_{\bu , \bt} \delta_{\bt , \bv} \delta_{\bp , \bq} )_{\bp \in [B(\bd, \bu)], \bq \in [B(\bd, \bv)]}. 
\end{aligned}
\end{equation}
This proves the lemma. 
\end{proof}

\begin{lemma}\label{lem:key_H_U_H_bound}
Let $\sigma$ be an activation function satisfying Assumption \ref{ass:activation_lower_upper_KRR_aniso}. Assume $\omega_d(d^{\gamma} \log d) \le n \le O_d(d^{m(\gamma) - \delta })$ for some $\gamma > 0$ and $\delta >0$. We have 
\[
\| n (\bH + \lambda \id_n)^{-1} \bM (\bH + \lambda \id_n)^{-1} - \bY_{\cQ} \bY_{\cQ}^\sT / n \|_{\op} = o_{d, \P}(1). 
\]
\end{lemma}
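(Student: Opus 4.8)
The plan is to feed the structural decompositions of Lemma~\ref{lem:control_cQc_terms} into the resolvent, using Lemma~\ref{lem:concentration_YY} for $\bY_{\cQ}$ and the Gegenbauer lower bound of Lemma~\ref{lem:bound_gegenbauer_KRR}. Write $\bA := (\kappa_h+\lambda)\id_n+\kappa_h\bDelta_h$, so that $\bH+\lambda\id_n = \bY_{\cQ}\bD_{\cQ}\bY_{\cQ}^\sT+\bA$ with $\bY_{\cQ}\bD_{\cQ}\bY_{\cQ}^\sT\succeq 0$ (as $\bD_{\cQ}\succ 0$); on the high-probability event $\{\|\bDelta_h\|_{\op}\le 1/2\}$ one has $\bA\succeq \tfrac12(\kappa_h+\lambda)\id_n$, and since $\kappa_h=\Theta_d(1)$, $\lambda\ge 0$ this gives $\|(\bH+\lambda\id_n)^{-1}\|_{\op}=O_{d,\P}(1)$ and $\bA^{-1}=\tfrac1{\kappa_h+\lambda}(\id_n+\bE)$ with $\|\bE\|_{\op}=o_{d,\P}(1)$. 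The first reduction discards the $\kappa_m\bDelta_m$ part of $\bM$: its contribution to $n(\bH+\lambda\id_n)^{-1}\bM(\bH+\lambda\id_n)^{-1}$ has operator norm $\le n\kappa_m\|(\bH+\lambda\id_n)^{-1}\|_{\op}^2\|\bDelta_m\|_{\op}=O_{d,\P}(n\,d^{-m(\gamma)})=o_{d,\P}(1)$ since $n\le O_d(d^{m(\gamma)-\delta})$. Since $\eta_q\le\xi-\kappa_q$ for all $q$, $\cQ=\overline{\cQ}_\KRR(\gamma)$ sits inside the index set of Lemma~\ref{lem:concentration_YY}, whence $\bY_{\cQ}^\sT\bY_{\cQ}=n(\id_B+\bDelta)$ with $\|\bDelta\|_{\op}=o_{d,\P}(1)$ and in particular $\|\bY_{\cQ}\|_{\op}^2=O_{d,\P}(n)$.

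Next I would apply the push-through identity $(\bA+\bY_{\cQ}\bD_{\cQ}\bY_{\cQ}^\sT)^{-1}\bY_{\cQ}=\bA^{-1}\bY_{\cQ}(\id_B+\bD_{\cQ}\boldsymbol{P})^{-1}$ with $\boldsymbol{P}:=\bY_{\cQ}^\sT\bA^{-1}\bY_{\cQ}$, so that the remaining term is $n\,\bA^{-1}\bY_{\cQ}\,(\id_B+\bD_{\cQ}\boldsymbol{P})^{-1}\bD_{\cQ}^2(\id_B+\boldsymbol{P}\bD_{\cQ})^{-1}\,\bY_{\cQ}^\sT\bA^{-1}$. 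Setting $\boldsymbol{K}:=\bD_{\cQ}^{1/2}\boldsymbol{P}\bD_{\cQ}^{1/2}$ (symmetric, PSD) and using $(\id_B+\bD_{\cQ}\boldsymbol{P})^{-1}=\bD_{\cQ}^{1/2}(\id_B+\boldsymbol{K})^{-1}\bD_{\cQ}^{-1/2}$ together with its transpose, the middle three-factor product collapses to $\boldsymbol{\Gamma}^2$, where $\boldsymbol{\Gamma}:=\bD_{\cQ}^{1/2}(\id_B+\boldsymbol{K})^{-1}\bD_{\cQ}^{1/2}$. From $\bA^{-1}=\tfrac1{\kappa_h+\lambda}(\id_n+\bE)$ and the bound on $\bY_{\cQ}^\sT\bY_{\cQ}$, standard operator-norm bookkeeping gives $\boldsymbol{P}=\tfrac n{\kappa_h+\lambda}(\id_B+\bDelta_P)$ with $\|\bDelta_P\|_{\op}=o_{d,\P}(1)$; conjugating by $\bD_{\cQ}^{1/2}$ yields the two-sided sandwich $(1-o_{d,\P}(1))\boldsymbol{K}_0\preceq\boldsymbol{K}\preceq(1+o_{d,\P}(1))\boldsymbol{K}_0$, where $\boldsymbol{K}_0:=\tfrac n{\kappa_h+\lambda}\bD_{\cQ}$.

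The heart of the argument is to compare $\boldsymbol{\Gamma}$ with $\boldsymbol{\Gamma}_0:=\bD_{\cQ}^{1/2}(\id_B+\boldsymbol{K}_0)^{-1}\bD_{\cQ}^{1/2}$. Since $\bD_{\cQ}=\diag(\lambda^{\bd}_{\bk}\id_{B(\bd,\bk)})_{\bk\in\cQ}$ is block-scalar, $\boldsymbol{K}_0$ commutes with it and $\boldsymbol{\Gamma}_0$ acts on block $\bk$ by the scalar $\lambda^{\bd}_{\bk}/(1+\tfrac n{\kappa_h+\lambda}\lambda^{\bd}_{\bk})=\tfrac{\kappa_h+\lambda}{n}\cdot t_{\bk}/(1+t_{\bk})$ with $t_{\bk}=\tfrac n{\kappa_h+\lambda}\lambda^{\bd}_{\bk}\ge \tfrac{c\,n}{(\kappa_h+\lambda)d^{\gamma}}=\omega_d(1)$ uniformly over $\bk\in\cQ$, using $\lambda^{\bd}_{\bk}\ge c\,d^{-\gamma}$ (Lemma~\ref{lem:bound_gegenbauer_KRR}) and $n=\omega_d(d^{\gamma}\log d)$; hence $\boldsymbol{\Gamma}_0=\tfrac{\kappa_h+\lambda}{n}(\id_B+\bDelta_0)$ with $\|\bDelta_0\|_{\op}=o_d(1)$, so $\boldsymbol{\Gamma}_0^2=\tfrac{(\kappa_h+\lambda)^2}{n^2}\id_B+o_d(n^{-2})$. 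For the discrepancy, the resolvent identity with $\boldsymbol{K}_0-\boldsymbol{K}=-\tfrac n{\kappa_h+\lambda}\bD_{\cQ}^{1/2}\bDelta_P\bD_{\cQ}^{1/2}$ gives $\boldsymbol{\Gamma}-\boldsymbol{\Gamma}_0=-\tfrac n{\kappa_h+\lambda}\boldsymbol{\Gamma}\,\bDelta_P\,\boldsymbol{\Gamma}_0$; combining $\|\boldsymbol{\Gamma}_0\|_{\op}\le\tfrac{\kappa_h+\lambda}{n}$ with the lower sandwich $\boldsymbol{K}\succeq(1-o_{d,\P}(1))\boldsymbol{K}_0$ (which forces $\|\boldsymbol{\Gamma}\|_{\op}\le(1+o_{d,\P}(1))\tfrac{\kappa_h+\lambda}{n}$) yields $\|\boldsymbol{\Gamma}-\boldsymbol{\Gamma}_0\|_{\op}=o_{d,\P}(n^{-1})$ and therefore $\|\boldsymbol{\Gamma}^2-\tfrac{(\kappa_h+\lambda)^2}{n^2}\id_B\|_{\op}=o_{d,\P}(n^{-2})$. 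Substituting back, $n\,\bA^{-1}\bY_{\cQ}\boldsymbol{\Gamma}^2\bY_{\cQ}^\sT\bA^{-1}=\tfrac{(\kappa_h+\lambda)^2}{n}\bA^{-1}\bY_{\cQ}\bY_{\cQ}^\sT\bA^{-1}+o_{d,\P}(1)$ (the error killed by $\|\bA^{-1}\|_{\op}=O_{d,\P}(1)$ and $\|\bY_{\cQ}\|_{\op}^2=O_{d,\P}(n)$), and inserting $\bA^{-1}=\tfrac1{\kappa_h+\lambda}(\id_n+\bE)$ with $\|\bY_{\cQ}\bY_{\cQ}^\sT\|_{\op}=O_{d,\P}(n)$, $\|\bE\|_{\op}=o_{d,\P}(1)$, this equals $\tfrac1n\bY_{\cQ}\bY_{\cQ}^\sT+o_{d,\P}(1)$, which is the claim.

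The main obstacle — and the reason the block-scalar structure of $\bD_{\cQ}$ is essential — is that the eigenvalues of $\bD_{\cQ}$ range over $[\Theta(d^{-\gamma}),\Theta(1)]$, so any estimate that applies $\|\bD_{\cQ}\|_{\op}$ and $\|\bD_{\cQ}^{-1}\|_{\op}$ separately loses a factor $d^{\gamma}$ and is too weak given only $n=\omega_d(d^{\gamma}\log d)$. The fix is to keep $\bD_{\cQ}$ grouped inside $\boldsymbol{K}_0=\tfrac n{\kappa_h+\lambda}\bD_{\cQ}$ and to exploit that $\boldsymbol{K}$ and $\boldsymbol{K}_0$ are two-sided (multiplicatively) close, so that the cancellation $t_{\bk}/(1+t_{\bk})\to 1$ is uniform across all blocks $\bk\in\cQ$.
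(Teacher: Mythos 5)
Your argument is correct and is the same route the paper takes: after applying Lemma~\ref{lem:control_cQc_terms} and discarding the $\kappa_m\bDelta_m$ contribution, you reduce to a resolvent-push-through computation whose controlling condition is $\min_{\bk\in\cQ} n\lambda^\bd_\bk(h_\bd)/(\kappa_h+\lambda)=\omega_d(1)$, exactly what the paper records. The only difference is that the paper compresses the entire Woodbury/push-through bookkeeping into a citation of \cite[Lemma 13]{ghorbani2019linearized}, whereas you carry it out explicitly (via $\boldsymbol{P}$, $\boldsymbol{K}$, $\boldsymbol{\Gamma}$, and the uniform $t_{\bk}/(1+t_{\bk})\to 1$ argument), correctly keeping $\bD_\cQ$ grouped so that the spread of its eigenvalues does not enter the estimates — this is precisely the content of the cited lemma, re-derived in a self-contained way.
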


\begin{proof}[Proof of Lemma \ref{lem:key_H_U_H_bound}]
Denote 
\begin{align}
\bY_\bk = (Y^\bd_{\bk , \bs}(\obx_i))_{i \in [n], \bs \in [B(\bd, \bk)]} \in \R^{n \times B(\bd, \bk)}.
\end{align}
Denote $B = \sum_{\bk \in \cQ} B(\bd, \bk)$, and
\[
\bY_{\cQ} =( \bY_{\bk} )_{\bk \in \cQ} \in \R^{n \times B},
\]
and 
\[
\bD_{\cQ} = \diag( ( \lambda^\bd_{\bk} (h_{\bd} ) \id_{B(\bd, \bk)} )_{\bk \in \cQ} )  \in \R^{B \times B}. 
\]
From Lemma \ref{lem:control_cQc_terms}, we have 
\[
\begin{aligned}
& n (\bH + \lambda \id_n)^{-1} \bM (\bH + \lambda \id_n)^{-1} \\
=& n (\bY_{\cQ} \bD_{\cQ} \bY_{\cQ}^\sT + (\kappa_h + \lambda) \id_n + \kappa_h \bDelta_h)^{-1} (\bY_{\cQ} \bD_{\cQ}^2 \bY_{\cQ}^\sT + \kappa_m  \bDelta_m ) (\bY_{\cQ} \bD_{\cQ} \bY_{\cQ}^\sT + (\kappa_h + \lambda) \id_n + \kappa_h \bDelta_h)^{-1} \\
=& T_1 + T_2, 
\end{aligned}
\]
where $\| \bDelta_h \|_{\op} = o_{d, \P}(1)$, $\| \bDelta_u \|_{\op} = O_{d,\P} (1)$ and $\kappa_m = O_d (d^{- m(\gamma)} )$, and
\[
\begin{aligned}
T_1 =&  n   \kappa_m (\bY_{\cQ} \bD_{\cQ} \bY_{\cQ}^\sT + (\kappa_h + \lambda) \id_n + \kappa_h \bDelta_h)^{-1}   \bDelta_m (\bY_{\cQ} \bD_{\cQ} \bY_{\cQ}^\sT + (\kappa_h + \lambda) \id_n + \kappa_h \bDelta_h)^{-1}, \\
T_2 =& n (\bY_{\cQ} \bD_{\cQ} \bY_{\cQ}^\sT + (\kappa_h + \lambda) \id_n + \kappa_h \bDelta_h)^{-1} \bY_{\cQ} \bD_{\cQ}^2 \bY_{\cQ}^\sT  (\bY_{\cQ} \bD_{\cQ} \bY_{\cQ}^\sT + (\kappa_h + \lambda) \id_n + \kappa_h \bDelta_h)^{-1}.
\end{aligned}
\] 
Then, we can use the same proof as in \cite[Lemma 13]{ghorbani2019linearized} to bound $\| T_1 \|_{\op}$ (recall $n = O_d(d^{m(\gamma) - \delta})$)
\[
\begin{aligned}
\| T_1 \|_{\op}  \le 2 n \kappa_m / (\kappa_h + \lambda)^2 \| \bDelta_m \|_{\op} = o_{d,\P} (1),
\end{aligned}
\]
and $\| T_2 - \bY_{\cQ} \bY_{\cQ}^\sT / n \|_{\op} = o_{d,\P} (1)$, where we only need to check that
\[
\lambda_{\min} ( \bD_{\cQ} / [ (\kappa_h + \lambda) /n ] ) = \min_{\bk \in \cQ} [ n \lambda^\bd_{\bk} (h_{\bd} )] / (\kappa_h + \lambda) = w_d (1),
\]
which directly follows from Lemma \ref{lem:bound_gegenbauer_KRR}.
\end{proof}

\begin{lemma}\label{lem:lem_for_error_bound_R11}
Let $\sigma$ be an activation function satisfying Assumption \ref{ass:activation_lower_upper_KRR_aniso}. Assume $\omega_d(d^{\gamma} \log d) \le n \le O_d(d^{m(\gamma) - \delta })$ for some $\gamma > 0$ and $\delta>0$. We have 
\[
\| \bY_{\cQ}^\sT( \bH + \lambda \id_n)^{-1}\bY_{\cQ} \bD_{\cQ} - \id_B \|_{\op} = o_{d, \P}(1). 
\]
\end{lemma}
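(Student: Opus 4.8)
The plan is to reduce the claim to a spectral gap estimate via the Woodbury identity. First I would invoke Lemma~\ref{lem:control_cQc_terms} to write $\bH + \lambda\id_n = \bY_{\cQ}\bD_{\cQ}\bY_{\cQ}^\sT + \bA$, where $\bA := (\kappa_h + \lambda)\id_n + \kappa_h\bDelta_h$ with $\kappa_h = \Theta_d(1)$ and $\|\bDelta_h\|_{\op} = o_{d,\P}(1)$. Working on the high-probability event where $\|\bDelta_h\|_{\op}$ is small (together with the events of Lemmas~\ref{lem:bound_gegenbauer_KRR} and~\ref{lem:concentration_YY}), $\bA$ is invertible and $\bA^{-1} = (\kappa_h+\lambda)^{-1}(\id_n + \widetilde\bDelta)$ with $\|\widetilde\bDelta\|_{\op} = o_{d,\P}(1)$. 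Since $\bD_{\cQ}$ is a positive diagonal matrix (the entries $\lambda^\bd_\bk(h_\bd)$ are positive by Lemma~\ref{lem:bound_gegenbauer_KRR}), Woodbury applies with $\bP := \bY_{\cQ}^\sT\bA^{-1}\bY_{\cQ}$ and, using the elementary identity $(\bD_{\cQ}^{-1}+\bP)^{-1}\bP = \id_B - (\bD_{\cQ}^{-1}+\bP)^{-1}\bD_{\cQ}^{-1}$, gives
\[
\bY_{\cQ}^\sT(\bH+\lambda\id_n)^{-1}\bY_{\cQ}\,\bD_{\cQ} = \bP(\bD_{\cQ}^{-1}+\bP)^{-1} = \id_B - \bD_{\cQ}^{-1}(\bD_{\cQ}^{-1}+\bP)^{-1},
\]
so the lemma is equivalent to $\|\bD_{\cQ}^{-1}(\bD_{\cQ}^{-1}+\bP)^{-1}\|_{\op} = o_{d,\P}(1)$.

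Next I would bound the two factors separately. For the first, Lemma~\ref{lem:bound_gegenbauer_KRR} gives $\min_{\bk\in\cQ}\lambda^\bd_\bk(h_\bd) \ge c\,d^{-\gamma}$, hence $\|\bD_{\cQ}^{-1}\|_{\op} \le c^{-1}d^\gamma$. For the second, I would show $\bP$ is comparable to $\tfrac{n}{\kappa_h+\lambda}\id_B$: write $\bP = (\kappa_h+\lambda)^{-1}(\bY_{\cQ}^\sT\bY_{\cQ} + \bY_{\cQ}^\sT\widetilde\bDelta\bY_{\cQ})$ and apply Lemma~\ref{lem:concentration_YY} to the sub-collection $\{\bk\in\cQ\}$ --- which is contained in the index set of that lemma since $\xi-\kappa_q \ge \eta_q$ for all $q$ and $\gamma < m(\gamma)$ --- to get $\bY_{\cQ}^\sT\bY_{\cQ}/n = \id_B + \bDelta$ with $\|\bDelta\|_{\op} = o_{d,\P}(1)$; in particular $\|\bY_{\cQ}\|_{\op}^2 = \|\bY_{\cQ}^\sT\bY_{\cQ}\|_{\op} = O_{d,\P}(n)$, whence $\|\bY_{\cQ}^\sT\widetilde\bDelta\bY_{\cQ}\|_{\op} \le \|\widetilde\bDelta\|_{\op}\|\bY_{\cQ}\|_{\op}^2 = o_{d,\P}(n)$. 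Therefore $\bP \succeq \tfrac{n}{2(\kappa_h+\lambda)}\id_B$ on the good event, and since $\bD_{\cQ}^{-1}\succeq 0$ also $\bD_{\cQ}^{-1}+\bP \succeq \tfrac{n}{2(\kappa_h+\lambda)}\id_B$, so $\|(\bD_{\cQ}^{-1}+\bP)^{-1}\|_{\op} = O_{d,\P}(1/n)$. Multiplying, $\|\bD_{\cQ}^{-1}(\bD_{\cQ}^{-1}+\bP)^{-1}\|_{\op} = O_{d,\P}(d^\gamma/n) = o_{d,\P}(1)$ because $n\ge\omega_d(d^\gamma\log d)$.

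The genuinely nontrivial inputs are the estimate $\bY_{\cQ}^\sT\bY_{\cQ}/n = \id_B + o_{d,\P}(1)$ and the well-conditioning of the residual block $\bA$ of $\bH$ (a bounded multiple of the identity plus a vanishing perturbation); both are already supplied by Lemmas~\ref{lem:concentration_YY} and~\ref{lem:control_cQc_terms}, so once these are granted the rest is linear-algebra bookkeeping. The only point requiring care is to carry out all manipulations on a single high-probability event on which $\|\bDelta_h\|_{\op}$, $\|\widetilde\bDelta\|_{\op}$, $\|\bDelta\|_{\op}$ and $\|\bY_{\cQ}^\sT\widetilde\bDelta\bY_{\cQ}\|_{\op}/n$ are all, say, $\le 1/4$, so that $\bA$, $\bP$ and $\bD_{\cQ}^{-1}+\bP$ are simultaneously invertible and all the displayed bounds hold at once; note also that $\kappa_h+\lambda = \Theta_d(1)$ thanks to $\kappa_h=\Theta_d(1)$ and $0\le\lambda = O_d(1)$, which is what makes $\bA^{-1}$ and $(\bD_{\cQ}^{-1}+\bP)^{-1}$ behave as claimed.
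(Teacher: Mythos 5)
Your proof is correct and takes essentially the same route as the paper's: the paper simply cites Lemma 14 of \cite{ghorbani2019linearized} and checks its hypothesis $\lambda_{\min}(\bD_{\cQ}/[(\kappa_h+\lambda)/n]) = \omega_d(1)$, which is exactly the $O_{d,\P}(d^\gamma/n)=o_{d,\P}(1)$ estimate your Woodbury argument terminates in. Yours is a self-contained unpacking of that black-boxed lemma, using the same inputs (Lemmas \ref{lem:control_cQc_terms}, \ref{lem:concentration_YY}, \ref{lem:bound_gegenbauer_KRR}) and all steps check out, including the observation that $\overline{\cQ}_{\KRR}(\gamma)$ is contained in the index set of Lemma \ref{lem:concentration_YY} because $\xi-\kappa_q\ge\eta_q$ and $\gamma<m(\gamma)$.
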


\begin{proof}[Proof of Lemma \ref{lem:lem_for_error_bound_R11}]~

This lemma can be deduced directly from \cite[Lemma 14]{ghorbani2019linearized}, by noticing that 
\[
\lambda_{\min} ( \bD_{\cQ} / [ (\kappa_h + \lambda) /n ] = \min_{\bk \in \cQ} [ n \lambda^\bd_{\bk} (h_{\bd} ) ] / (\kappa_h + \lambda) = \omega_d (1),
\]
from Lemma \ref{lem:bound_gegenbauer_KRR}.
\end{proof}

\section{Proof of Theorem \ref{thm:RF_lower_upper_bound_aniso}.(a): lower bound for the \RF\,model}
\label{sec:proof_RFK_lower_aniso}

\subsection{Preliminaries}

In the theorems, we show our results in high probability with respect to $\bTheta$. Hence, in the proof we will restrict the sample space to the high probability event $\cP_{\eps} \equiv \cP_{d,N,\eps}$ for $\eps > 0$ small enough, where
\begin{equation}\label{eq:def_P_eps}
 \cP_{d,N,\eps} \equiv \Big\lbrace \bTheta \Big\vert \tau_i^\pq \in [1-\eps , 1+ \eps], \forall i \in [N],\forall q \in [Q] \Big\rbrace \subset \Big( \S^{D-1} (\sqrt{D}) \Big)^{\otimes N}.
\end{equation}
We will denote $\E_{\btau_{\eps}}$ the expectation over $\btau$ restricted to $\tau^\pq \in [1-\eps, 1+ \eps]$ for all $q\in [Q]$, and $\E_{\bTheta_{\eps}}$ the expectation over $\bTheta$ restricted to the event $\cP_{\eps}$.

\begin{lemma}\label{lem:bound_proba_cPeps}
Assume $N = o( d^{\gamma})$ for some $\gamma > 0$. We have for any fixed $\eps > 0$, 
\[
\P ( \cP_{\eps}^c ) = o_d (1).
\]
\end{lemma}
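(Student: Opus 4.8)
The plan is a routine union bound combined with chi-squared concentration, so I do not expect a genuine obstacle; the only point worth a little care is tracking how the tail constant degrades with $\eps$. First I would reduce to a one-coordinate concentration statement. Recalling the reparametrization, each $\tau_i^{(q)} = \|\btheta_i^{(q)}\|_2/\sqrt{d_q}$ where $\btheta_i\sim\Unif(\S^{D-1}(\sqrt D))$ and $\btheta_i^{(q)}$ is the block of coordinates attached to the $q$-th sphere. Writing $\btheta_i \overset{d}{=} \sqrt D\,\bg_i/\|\bg_i\|_2$ with $\bg_i\sim\normal(0,\id_D)$ and decomposing $\bg_i = (\bg_i^{(1)},\dots,\bg_i^{(Q)})$ into blocks of sizes $d_1,\dots,d_Q$, one has
\[
\big(\tau_i^{(q)}\big)^2 = \frac{\|\btheta_i^{(q)}\|_2^2}{d_q} = \frac{D}{d_q}\cdot\frac{\|\bg_i^{(q)}\|_2^2}{\|\bg_i\|_2^2},\qquad \|\bg_i^{(q)}\|_2^2\sim\chi^2_{d_q},\quad \|\bg_i\|_2^2\sim\chi^2_D .
\]

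Next I would invoke the standard sub-exponential tail bound for chi-squared variables: there is an absolute constant $c>0$ so that for all $t\in(0,1/2)$,
\[
\P\big(\big|\|\bg_i^{(q)}\|_2^2 - d_q\big| > t\,d_q\big)\le 2e^{-c\,t^2 d_q},\qquad \P\big(\big|\|\bg_i\|_2^2 - D\big| > t\,D\big)\le 2e^{-c\,t^2 D}.
\]
On the intersection of the two favourable events (no independence is needed, only a union bound), $(\tau_i^{(q)})^2\in[\tfrac{1-t}{1+t},\tfrac{1+t}{1-t}]$; choosing $t = t(\eps)\in(0,1/2)$ small enough that this interval is contained in $[(1-\eps)^2,(1+\eps)^2]$ forces $\tau_i^{(q)}\in[1-\eps,1+\eps]$. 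Since $D\ge d_q = d^{\eta_q}\ge d^{\eta_{\min}}$ with $\eta_{\min}\equiv\min_{q\in[Q]}\eta_q>0$, this gives the single-coordinate estimate $\P(|\tau_i^{(q)}-1|>\eps)\le 4\,e^{-c\,t(\eps)^2 d^{\eta_{\min}}}$.

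Finally I would take a union bound over $i\in[N]$ and $q\in[Q]$:
\[
\P\big(\cP_{d,N,\eps}^c\big)\le N\,Q\cdot 4\,e^{-c\,t(\eps)^2 d^{\eta_{\min}}}.
\]
Since $Q$ is a fixed constant, $N = o_d(d^\gamma)$, and $e^{-c' d^{\eta_{\min}}}$ decays faster than any negative power of $d$ (because $\eta_{\min}>0$), the right-hand side is $o_d(1)$, which is the claim. The constant $t(\eps)$ worsens as $\eps\downarrow 0$, but for each fixed $\eps$ it is a fixed positive number, so this causes no trouble; as indicated, there is no real obstacle in this lemma.
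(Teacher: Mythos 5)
Your proof is correct and follows essentially the same route as the paper: express $\tau_i^{(q)}$ as the ratio of chi-squared norms via a Gaussian representation of the uniform spherical law, apply sub-exponential (chi-squared) concentration to each factor, and conclude by a union bound over $i\in[N]$, $q\in[Q]$ together with $N=o_d(d^\gamma)$. The paper packages the single-coordinate tail estimate as a separate Lemma (the $\tau^{(q)}$ tail bound), whereas you re-derive it inline with slightly different bookkeeping ($t(\eps)$ chosen small rather than tracking $\eps$ through explicitly), but the substance is identical.
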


\begin{proof}[Proof of Lemma \ref{lem:bound_proba_cPeps}]
The tail inequality in Lemma \ref{lem:tail_tau} and the assumption $N = o( d^{\gamma})$  imply that there exists some constants $C,c >0$ such that
\[
\P ( \cP_{d,N,\eps}^c ) \leq  \sum_{q \in [Q]} N \P ( | \tau^\pq -1| > \eps ) \leq  \sum_{q \in [Q]} C \exp (\gamma \log (d) - c d^{\eta_q} \eps ) = o_d (1).
\]
\end{proof}

We consider the activation function $\sigma : \R \to \R$. Let $\btheta \sim \S^{D-1} (\sqrt{D})$ and $\bx = \lb \bx^\pq \rb_{q\in[Q]} \in \PS^\bd_\bkappa$. We introduce the function $\sigma_{\bd,\btau} : \ps^\bd \to \R$ such that 
\begin{equation}\label{eq:def_sigma_d_tau}
\begin{aligned}
\sigma ( \< \btheta , \bx \> / R ) = & \sigma \lp \sum_{q \in [Q]}  \tau^\pq \cdot (r_q/R) \cdot  \< \obtheta^\pq , \obx^\pq \> / \sqrt{d_q} \rp \\
\equiv & \sigma_{\bd,\btau} \lp \lb \< \obtheta^\pq , \obx^\pq \> / \sqrt{d_q} \rb_{q \in [Q]} \rp.
\end{aligned}
\end{equation}
Consider the expansion of $\sigma_{\bd,\btau}$ in terms of tensor product of Gegenbauer polynomials. We have
\begin{equation}\label{eq:gegenbauer_decomposition_sigma_d_tau}
\sigma (\<\btheta , \bx \>/ R ) =  \sum_{\bk \in \posint^Q} \lambda^{\bd}_{\bk} (\sigma_{\bd,\btau}  ) B(\bd,\bk) Q^{\bd}_{\bk} \lp \lb \< \obtheta^\pq , \obx^\pq \> \rb_{q \in [Q]} \rp, 
\end{equation}
where
\[
\lambda^{\bd}_{\bk} (\sigma_{\bd,\btau} ) =  \E_{\obx} \Big[ \sigma_{\bd,\btau} \lp \ox_1^{(1)} , \ldots , \ox^{(Q)}_1 \rp  Q^{\bd}_{\bk} \Big( \sqrt{d_1}\ox_1^{(1)} , \ldots , \sqrt{d_Q} \ox_1^{(Q)} \Big)  \Big],
\]
where the expectation is taken over $\obx = (\obx^{(1)} , \ldots , \obx^{(Q)} ) \sim \mu_{\bd}$.

\begin{lemma} \label{lem:bound_gegenbauer_RF_LB}
Let $\sigma$ be an activation function that satisfies Assumptions \ref{ass:activation_lower_upper_RF_aniso}.(a) and \ref{ass:activation_lower_upper_RF_aniso}.(b). Consider $N \leq o_d( d^{\gamma })$ and $\cQ = \cQ_{\RF} (\gamma)$ as defined in Theorem \ref{thm:RF_lower_upper_bound_aniso}.(a). Then there exists $\eps_0 > 0 $ and $d_0$ and a constant $C > 0$ such that for $d \ge d_0$ and $\btau \in [1- \eps_0 , 1+ \eps_0]^Q$,
\[
 \max_{\bk \not\in \cQ}  \lambda^{\bd}_{\bk} ( \sigma_{\bd,\btau}  )^2  \leq  C d^{-\gamma}.
\]
\end{lemma}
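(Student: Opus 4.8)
The plan is to run the isotropic argument of \cite{ghorbani2019linearized} in the tensorised geometry, splitting $\bk\notin\cQ$ according to whether $|\bk|\le L$ or $|\bk|>L$, where $L=\max_q\lceil\gamma/\eta_q\rceil$ is the weak-differentiability order posited in Assumption~\ref{ass:activation_lower_upper_RF_aniso}.(b). First I would fix $\eps_0>0$ so small that $c_1(1+\eps_0)^2<1$ (possible since $c_1<1$), fix $d_0$ large enough that $d^{\eta_{\min}}\ge 2$ for $d\ge d_0$ (with $\eta_{\min}:=\min_q\eta_q$), and fix $\btau\in[1-\eps_0,1+\eps_0]^Q$. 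Writing $\beta_q:=\tau^{(q)}r_q/R$ for the weight attached to the $q$-th sphere in $\sigma(\<\btheta,\bx\>/R)=\sigma\big(\sum_q\beta_q\,\<\obtheta^{(q)},\obx^{(q)}\>/\sqrt{d_q}\big)$, one has $\beta_q^2=\Theta_d(d^{\eta_q+\kappa_q-\xi})$ uniformly over such $\btau$, and, since $\sum_q r_q^2/R^2=1$, also $\sum_q\beta_q^2\le(1+\eps_0)^2<1/c_1$.

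\textbf{A uniform bound for every $\bk$.}
The first step is that $\lambda^{\bd}_{\bk}(\sigma_{\bd,\btau})^2\,B(\bd,\bk)\le C_0$ for \emph{all} $\bk$, uniformly in $d\ge d_0$ and $\btau$. By \eqref{eq:gegenbauer_decomposition_sigma_d_tau} together with \eqref{eq:normalization_prod_gegenbauer}, \eqref{eq:ProductProdGegenbauer} and $Q^{\bd}_{\bk}(\{d_q\})=1$, for any fixed $\obtheta$ this quantity equals $\|\proj_{\bk}[\,\bx\mapsto\sigma(\<\btheta,\bx\>/R)\,]\|_{L^2(\mu_\bd)}^2\le\E_{\obx}\big[\sigma\big(\textstyle\sum_q\beta_q\<\hat\bv_q,\obx^{(q)}\>\big)^2\big]$ for suitable unit vectors $\hat\bv_q$; by Assumption~\ref{ass:activation_lower_upper_RF_aniso}.(a) this is at most $c_0\,\E_{\obx}[e^{c_1 Z^2/2}]$ with $Z:=\sum_q\beta_q\<\hat\bv_q,\obx^{(q)}\>$, and since the $\<\hat\bv_q,\obx^{(q)}\>$ are independent and sub-Gaussian with variance proxy $1+o_d(1)$, the variable $Z$ has proxy at most $(1+\eps_0)^2(1+o_d(1))$, so $c_1(1+\eps_0)^2<1$ keeps the moment bounded. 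In particular $\lambda^{\bd}_{\bk}(\sigma_{\bd,\btau})^2\le C_0/B(\bd,\bk)$.

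\textbf{High degree.}
For $|\bk|>L$ I would use only dimension counts. Since $k\mapsto B(d,k)$ is non-decreasing for $d\ge 2$, $B(\bd,\bk)=\prod_q B(d_q,k_q)$ is non-decreasing in each coordinate, so $\inf_{|\bk|>L}B(\bd,\bk)=\min_{|\bk|=L+1}B(\bd,\bk)$, a minimum over a finite, $d$-independent set. For each such $\bk$, $B(\bd,\bk)=(1+o_d(1))\prod_q d^{\eta_q k_q}/k_q!\ge c\,d^{\sum_q\eta_q k_q}\ge c\,d^{\eta_{\min}(L+1)}$, and $L\ge\lceil\gamma/\eta_{\min}\rceil$ forces $\eta_{\min}(L+1)\ge\gamma+\eta_{\min}>\gamma$. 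Combined with the previous step, $\sup_{|\bk|>L}\lambda^{\bd}_{\bk}(\sigma_{\bd,\btau})^2\le C_0/(c\,d^{\gamma+\eta_{\min}})=o_d(d^{-\gamma})$.

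\textbf{Low degree, and the main obstacle.}
It remains to control the \emph{finitely many} $\bk\notin\cQ$ with $K\le|\bk|\le L$ (where $K=\min_{\bk\notin\cQ}|\bk|$, so $|\bk|\ge K$ automatically). Here $\sigma$ is $|\bk|$-weakly differentiable, so the Rodrigues-type identity of Lemma~\ref{lem:formula_gegenbauer_prod_gegenbauer} applies:
\[
\lambda^{\bd}_{\bk}(\sigma_{\bd,\btau})=\Big(\prod_q\beta_q^{k_q}\Big)R(\bd,\bk)\,\E_{\obx}\Big[\Big(\prod_q\big(1-(\ox_1^{(q)})^2/d_q\big)^{k_q}\Big)\sigma^{(|\bk|)}\big(\textstyle\sum_q\beta_q\ox_1^{(q)}\big)\Big],
\]
with $R(\bd,\bk)^2\le C/B(\bd,\bk)$. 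The product factor lies in $[0,1]$, and Assumption~\ref{ass:activation_lower_upper_RF_aniso}.(b) (applicable since $K\le|\bk|\le L$) gives $|\sigma^{(|\bk|)}(u)|\le\sqrt{c_0}\,e^{c_1 u^2/4}$, so the expectation is bounded exactly as in the second step; absorbing $\prod_q k_q!$ into the constant (only finitely many $\bk$) yields
\[
\lambda^{\bd}_{\bk}(\sigma_{\bd,\btau})^2\le C'\,\frac{\prod_q\beta_q^{2k_q}}{B(\bd,\bk)}\le C''\,d^{\sum_q(\eta_q+\kappa_q-\xi)k_q-\sum_q\eta_q k_q}=C''\,d^{-\sum_q(\xi-\kappa_q)k_q}\le C''\,d^{-\gamma},
\]
since $\bk\notin\cQ$ is precisely $\sum_q(\xi-\kappa_q)k_q\ge\gamma$; everything is uniform over these $\bk$ and over $\btau$. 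Taking the maximum of the two regimes proves the lemma. The genuinely delicate step is the uniformity over the \emph{infinitely many} high-degree multi-indices: on a single sphere it is automatic because $B(d,k)\ge B(d,L+1)\asymp d^{L+1}\gg d^{\gamma}$, but on a product of spheres a multi-index of large total degree can still give a small $B(\bd,\bk)$ (the degree spread thinly over many spheres, or some $k_q$ exceeding its sphere dimension), and the fix -- collapsing $\sup_{|\bk|>L}$ onto the finite set $\{|\bk|=L+1\}$ by coordinatewise monotonicity of $B$ -- is exactly where the choice $L=\max_q\lceil\gamma/\eta_q\rceil$ (equivalently $\eta_{\min}(L+1)>\gamma$) is used. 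A secondary point is the choice of $\eps_0$ with $c_1(1+\eps_0)^2<1$, which is where Assumption~\ref{ass:activation_lower_upper_RF_aniso}.(a) (the condition $c_1<1$) and the sub-Gaussian concentration of linear functionals on the sphere enter.
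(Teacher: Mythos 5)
Your proof is correct and follows the same overall strategy as the paper's: a uniform $L^2$ bound giving $B(\bd,\bk)\lambda^{\bd}_{\bk}(\sigma_{\bd,\btau})^2\le C_0$ for all $\bk$, then a case split at $|\bk|=L$, with the Rodrigues identity of Lemma~\ref{lem:formula_gegenbauer_prod_gegenbauer} feeding the low-degree case and a dimension count feeding the high-degree case. The one place you genuinely depart from the paper is the high-degree tail, and you do so for good reason: the paper writes $B(\bd,\bk)^{-1}\le C'\prod_q d^{-\eta_q k_q}$ uniformly over $|\bk|>L$, but that bound fails once $k_q$ is of order $d_q$ or larger (for fixed $d$, $B(d,k)$ grows only polynomially in $k$, not like $d^k$). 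Your fix — using the coordinatewise monotonicity of $k\mapsto B(d,k)$ (Lemma~\ref{lem:non_decreasing_N}) to collapse $\inf_{|\bk|>L}B(\bd,\bk)$ onto the \emph{finite} slice $\{|\bk|=L+1\}$, where each $k_q\le L+1$ is bounded and $B(\bd,\bl)\asymp d^{\sum_q\eta_q l_q}\ge d^{\eta_{\min}(L+1)}>d^{\gamma}$ is valid — is exactly what is needed to make the step uniform over all of $|\bk|>L$, and is a cleaner, correct version of the paper's argument. For the low-degree case you unwind Lemma~\ref{lem:convergence_proba_Gegenbauer_coeff} back to the Rodrigues formula and bound the expectation directly by the sub-Gaussian growth condition rather than invoking the coupling lemma; since only an upper bound is needed here this is equivalent and somewhat more economical. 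Your choice of $\eps_0$ with $c_1(1+\eps_0)^2<1$ and the observation that $\sum_q\beta_q^2\le(1+\eps_0)^2$ are the right hypotheses to make the Gaussian-moment bound go through uniformly.
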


\begin{proof}[Proof of Lemma \ref{lem:bound_gegenbauer_RF_LB}]
Notice that by Assumption \ref{ass:activation_lower_upper_RF_aniso}.$(b)$ we can apply Lemma \ref{lem:convergence_proba_Gegenbauer_coeff} to any $\bk \in \cQ^c$ such that $| \bk | = k_1 + \ldots + k_Q \leq L$. In particular, there exists $C >0$, $\eps_0' > 0$ and $d_0'$ such that for any $\bk \in \cQ^c$ with $| \bk| \leq L$, $d \ge d_0'$ and $ \btau \in [ 1 - \eps_0 ' , 1+ \eps_0' ]^Q$,
\[
\lp \prod_{q \in [Q] } d^{(\xi - \eta_q - \kappa_q)k_q} \rp B(\bd,\bk) \lambda^{\bd}_{\bk} ( \sigma_{\bd,\btau}  )^2 \leq C < \infty,
\]
Furthermore, using that $B(\bd,\bk) = \Theta ( d_1^{k_1} d_2^{k_2} \ldots d_Q^{k_Q})$, there exists $C ' >0$ such that for $\bk \in \cQ^c$ with $| \bk| \leq L$,
\begin{equation}\label{eq:prelem_bound_1_RF}
\lambda^{\bd}_{\bk} ( \sigma_{\bd,\btau}  )^2 \leq C' \prod_{q \in [Q] } d^{(\eta_q + \kappa_q - \xi)k_q} d_q^{- k_q} = C' \prod_{q \in [Q] } d^{(\kappa_q - \xi)k_q}  \le C' d^{-\gamma},
\end{equation}
where we used in the last inequality $\bk \not\in \cQ_{\RF} (\gamma)$ implies $ (\xi - \kappa_1)k_1 + \ldots + (\xi - \kappa_Q) k_Q \geq \gamma$ by definition.

Furthermore, from Assumption \ref{ass:activation_lower_upper_RF_aniso} and Lemma \ref{lem:coupling_convergence_Gaussian}.$(b)$, there exists $\eps_0'' > 0$, $d_0''$ and $C  < \infty$, such that 
\[
\sup_{d \ge d_0''} \, \sup_{\btau \in [1 - \eps_0'', 1+ \eps_0'']^Q } \E_{\overline \bx} \lsb \sigma_{\bd, \btau} \lp \lb \<\bw^\pq,\obx^\pq\> \rb_{q \in [Q]} \rp^2  \rsb < C.
\]
From the Gegenbauer decomposition \eqref{eq:gegenbauer_decomposition_sigma_d_tau}, this implies that for any $\bk \in \posint^Q$, $d \leq d_0''$ and $\btau \in [1- \eps_0'' , 1+ \eps_0'']^Q$,
\[
B( \bd , \bk ) \lambda^\bd_{\bk} (\sigma_{\bd,\btau})^2 \leq C.
\]
In particular, for $| \bk | = k_1 + \ldots + k_Q > L =  \max_{q \in [Q]} \lceil  \gamma / \eta_q \rceil$, we have
\begin{equation}\label{eq:prelem_bound_2_RF}
\lambda^\bd_{\bk} (\sigma_{\bd,\btau})^2 \leq \frac{C}{B(\bd , \bk)} \leq C' \prod_{q \in [Q]} d^{-\eta_q k_q} \leq C' \prod_{q \in [Q]} d^{-\gamma k_q / L} \leq C' d^{-\gamma}.
\end{equation}
Combining Eqs~\eqref{eq:prelem_bound_1_RF} and \eqref{eq:prelem_bound_2_RF} yields the result.
\end{proof}

\subsection{Proof of Theorem \ref{thm:RF_lower_upper_bound_aniso}.(a): Outline}

Let $\cQ \equiv \cQ_{\RF} (\gamma)$ as defined in Theorem \ref{thm:RF_lower_upper_bound_aniso}.(a) and $\bTheta = \sqrt{D} \bW$ such that $\btheta_i = \sqrt{D} \bw_i \sim_{iid} \Unif( \S^{D-1} (\sqrt{D}))$.

Define the random vectors $\bV = (V_1, \ldots, V_N)^\sT$, $\bV_{\cQ} = (V_{1, \cQ}, \ldots, V_{N, \cQ})^\sT$, $\bV_{\cQ^c} = (V_{1, \cQ^c}, \ldots, V_{N, \cQ^c})^\sT$, with
\begin{align}
V_{i, \cQ} \equiv& \E_{\bx}[[\proj_{\cQ} f_d](\bx) \sigma (\< \btheta_i, \bx\>/R )],\\
V_{i, \cQ^c} \equiv& \E_{\bx}[[\proj_{\cQ^c} f_d](\bx) \sigma (\< \btheta_i, \bx\> /R)],\\
V_i \equiv& \E_{\bx}[f_d(\bx) \sigma (\< \btheta_i, \bx\>/R)] = V_{i, \cQ} + V_{i, \cQ^c}. 
\end{align}
Define the random matrix $\bU = (U_{ij})_{i, j \in [N]}$, with 
\begin{align}
U_{ij} = \E_{\bx}[\sigma (\< \bx, \btheta_i\>/R) \sigma (\< \bx, \btheta_j\>/ R)]. \label{eq:KernelMatrix}
\end{align}
In what follows, we write $R_{\RF}(f_d) = R_{\RF}(f_d,\bW) =  R_{\RF}(f_d,\bTheta/\sqrt{D})$ for the random features risk, omitting the dependence on the weights $\bW = \bTheta/\sqrt{D}$.
By the definition and a simple calculation, we have 
\[
\begin{aligned}
R_{\RF}(f_d) =& \min_{\ba \in \R^N} \Big\{ \E_{\bx}[f_d(\bx)^2] - 2 \< \ba, \bV \> + \< \ba, \bU \ba\> \Big\} = \E_{\bx}[f_d(\bx)^2] - \bV^\sT \bU^{-1} \bV,\\
R_{\RF}(\proj_{\cQ} f_d) =& \min_{\ba \in \R^N} \Big\{ \E_{\bx}[\proj_{\cQ} f_d(\bx)^2] - 2 \< \ba, \bV_{\le \ell} \> + \< \ba, \bU \ba\> \Big\} = \E_{\bx}[\proj_{\cQ} f_d(\bx)^2] - \bV_{\cQ}^\sT \bU^{-1} \bV_{\cQ}. 
\end{aligned}
\]

By orthogonality, we have
\[
\E_{\bx}[f_d(\bx)^2] = \E_{\bx}[[\proj_{\cQ}f_d](\bx)^2] + \E_{\bx}[[\proj_{\cQ^c} f_d](\bx)^2], 
\]
which gives
\begin{equation}\label{eqn:decomposition_risk}
\begin{aligned}
& \Big\vert R_{\RF}(f_d) - R_{\RF}(\proj_{\cQ} f_d) - \E_{\bx}[[\proj_{\cQ^c} f_d](\bx)^2] \Big\vert \\
=& \Big\vert \bV_{\cQ}^\sT \bU^{-1} \bV_{\cQ} - \bV^\sT \bU^{-1} \bV \Big\vert = \Big\vert \bV_{\cQ}^\sT \bU^{-1} \bV_{\cQ} - (\bV_{\cQ} + \bV_{\cQ^c})^\sT \bU^{-1} (\bV_{\cQ} + \bV_{\cQ^c}) \Big\vert\\
=& \Big\vert 2 \bV^\sT \bU^{-1} \bV_{\cQ^c} - \bV_{\cQ^c}^\sT \bU^{-1} \bV_{\cQ^c} \Big\vert \le 2 \| \bU^{-1/2} \bV_{\cQ^c} \|_2 \| \bU^{-1/2} \bV \|_{2} +  \| \bU^{-1} \|_{\op} \| \bV_{\cQ^c}\|_2^2\\
\le&  2 \| \bU^{-1/2} \|_{\op} \| \bV_{\cQ^c} \|_2 \| f_d \|_{L^2}+  \| \bU^{-1} \|_{\op} \| \bV_{\cQ^c}\|_2^2,
\end{aligned}
\end{equation}
where the last inequality used the fact that
\[
0 \le R_{\RF}(f_d) = \| f_d \|_{L^2}^2 - \bV^\sT \bU^{-1} \bV, 
\]
so that
\[
\| \bU^{-1/2} \bV \|_2^2 = \bV^\sT \bU^{-1} \bV \le \| f_d \|_{L^2}^2. 
\]

The Theorem follows from the following two claims 
\begin{align}
\| \bV_{\cQ^c } \|_2 / \| \proj_{\cQ^c } f_d \|_{L^2} =& o_{d,\P}(1), \label{eqn:bound_V_RFK_NU}\\
\| \bU^{-1} \|_{\op} =& O_{d,\P} (1),\label{eqn:bound_inverse_U_RFK_NU}
\end{align}
This is achieved by the Proposition \ref{prop:bound_V_RFK_NU} and \ref{prop:kernel_lower_bound_RFK_NU} stated below.

\begin{proposition}[Expected norm of $\bV$]\label{prop:bound_V_RFK_NU}
Let $\sigma$ be an activation function satisfying Assumptions \ref{ass:activation_lower_upper_RF_aniso}.(a) and \ref{ass:activation_lower_upper_RF_aniso}.(b) for a fixed $\gamma>0$. Denote $\cQ = \cQ_{\RF} (\gamma)$. Let $\eps>0$ and define $\cE_{\cQ^c,\eps}$ by
\[
\cE_{\cQ^c,\eps} \equiv  \E_{\btheta_\eps}[\< \proj_{\cQ^c,0} f_d, \sigma (\<\btheta, \cdot\>/R) \>_{L^2}^2],
\]
where we recall that $\E_{\btheta_\eps} = \E_{\btau_{\eps}} \E_{\obtheta}$ the expectation with respect to $\btau$ restricted to $[1 -\eps , 1 + \eps ]^Q$ and $\obtheta \sim \Unif ( \PS^\bd)$.

Then there exists a constant $C >0$ and $\eps_0 >0$ (depending only on the constants of Assumptions \ref{ass:activation_lower_upper_RF_aniso}.(a) and \ref{ass:activation_lower_upper_RF_aniso}.(b)) such that for $d$ sufficiently large,
\[
\cE_{\cQ^c,\eps_0} \le C  d^{-\gamma  } \cdot \| \proj_{\cQ^c} f_d \|_{L^2}^2 \, .
\]
\end{proposition}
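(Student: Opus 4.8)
The plan is to evaluate $\cE_{\cQ^c,\eps}$ in closed form as a bilinear pairing between the squared tensor‑Gegenbauer coefficients of $\sigma_{\bd,\btau}$ and the squared spherical‑harmonic coefficients of $f_d$, and then invoke Lemma \ref{lem:bound_gegenbauer_RF_LB} to bound the $\sigma_{\bd,\btau}$‑coefficients uniformly over $\bk\notin\cQ$ and over $\btau$ in a small neighbourhood of $(1,\dots,1)$. With that lemma in hand the statement is essentially Parseval bookkeeping in the $\{Y^\bd_{\bk,\bs}\}$ basis; all the genuinely analytic work (the Gegenbauer‑coefficient decay) has already been carried out there.

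First I would pass to the normalized variables $\btheta=(\obtheta,\btau)$ with $\obtheta\in\PS^\bd$, write $\sigma(\<\btheta,\bx\>/R)=\sigma_{\bd,\btau}(\{\<\obtheta^\pq,\obx^\pq\>/\sqrt{d_q}\}_{q\in[Q]})$ and expand it in the tensor product of Gegenbauer polynomials exactly as in \eqref{eq:gegenbauer_decomposition_sigma_d_tau}. The $L^2$‑boundedness of $\sigma_{\bd,\btau}$ used in the proof of Lemma \ref{lem:bound_gegenbauer_RF_LB} (a consequence of Assumption \ref{ass:activation_lower_upper_RF_aniso}.(a)) guarantees that this expansion converges in $L^2$, uniformly for $\btau$ in a fixed neighbourhood of $(1,\dots,1)$, so the manipulations below are legitimate. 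Applying identity \eqref{eq:ProdGegenbauerHarmonics} of Lemma \ref{lem:prod_gegenbauer_prop}.(c), i.e. $B(\bd,\bk)Q^\bd_\bk(\{\<\obtheta^\pq,\obx^\pq\>\}_q)=\sum_{\bs}Y^\bd_{\bk,\bs}(\obtheta)Y^\bd_{\bk,\bs}(\obx)$, this becomes
\[
\sigma(\<\btheta,\bx\>/R)=\sum_{\bk\in\posint^Q}\sum_{\bs\in[B(\bd,\bk)]}\lambda^\bd_\bk(\sigma_{\bd,\btau})\,Y^\bd_{\bk,\bs}(\obtheta)\,Y^\bd_{\bk,\bs}(\obx).
\]

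Next I would plug in the harmonic expansion of $f_d$. Since $\proj_{\cQ^c,0}$ is self‑adjoint and supported on frequencies $\bk\notin\cQ$, orthonormality of $\{Y^\bd_{\bk,\bs}\}$ in $L^2(\mu_\bd)$ (in the $\obx$ variable) gives
\[
\<\proj_{\cQ^c,0}f_d,\sigma(\<\btheta,\cdot\>/R)\>_{L^2}=\sum_{\bk\notin\cQ}\sum_{\bs\in[B(\bd,\bk)]}\lambda^\bd_{\bk,\bs}(\proj_{\cQ^c,0}f_d)\,\lambda^\bd_\bk(\sigma_{\bd,\btau})\,Y^\bd_{\bk,\bs}(\obtheta).
\]
Squaring and taking $\E_\obtheta$ — here $\lambda^\bd_\bk(\sigma_{\bd,\btau})$ does not depend on $\obtheta$, so orthonormality of $\{Y^\bd_{\bk,\bs}(\obtheta)\}$ eliminates all cross terms — yields
\[
\E_\obtheta\big[\<\proj_{\cQ^c,0}f_d,\sigma(\<\btheta,\cdot\>/R)\>_{L^2}^2\big]=\sum_{\bk\notin\cQ}\lambda^\bd_\bk(\sigma_{\bd,\btau})^2\sum_{\bs\in[B(\bd,\bk)]}\lambda^\bd_{\bk,\bs}(\proj_{\cQ^c,0}f_d)^2.
\]
Now I would take $\E_{\btau_{\eps_0}}$, choosing $\eps_0>0$ to be the constant furnished by Lemma \ref{lem:bound_gegenbauer_RF_LB} (which depends only on the constants in Assumptions \ref{ass:activation_lower_upper_RF_aniso}.(a)–(b)), so that $\sup_{\btau\in[1-\eps_0,1+\eps_0]^Q}\max_{\bk\notin\cQ}\lambda^\bd_\bk(\sigma_{\bd,\btau})^2\le Cd^{-\gamma}$ for all large $d$. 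Factoring out this bound and using Parseval,
\[
\cE_{\cQ^c,\eps_0}\le Cd^{-\gamma}\sum_{\bk\notin\cQ}\sum_{\bs}\lambda^\bd_{\bk,\bs}(\proj_{\cQ^c,0}f_d)^2 = Cd^{-\gamma}\,\|\proj_{\cQ^c,0}f_d\|_{L^2}^2\le Cd^{-\gamma}\,\|\proj_{\cQ^c}f_d\|_{L^2}^2,
\]
the last step since $\proj_{\cQ^c,0}$ is a contraction subordinate to $\proj_{\cQ^c}$. This is the claim.

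\textbf{The only point requiring care} is justifying the interchange of the infinite sum over $\bk\in\posint^Q$ with the two expectations $\E_\obtheta$ and $\E_{\btau_{\eps_0}}$. Since every term is nonnegative, this is Tonelli's theorem, and the resulting series is finite because $\sum_{\bk,\bs}\lambda^\bd_{\bk,\bs}(f_d)^2=\|f_d\|_{L^2}^2<\infty$ while the $\sigma_{\bd,\btau}$‑coefficients remain uniformly bounded on $\{\btau\in[1-\eps_0,1+\eps_0]^Q\}$. Beyond that, everything reduces to the coefficient estimate of Lemma \ref{lem:bound_gegenbauer_RF_LB}, so no further obstacle is expected.
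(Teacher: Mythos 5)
Your proposal is correct and follows essentially the same route as the paper's proof: expand $\sigma$ in the tensor-product Gegenbauer basis, translate into the $\{Y^\bd_{\bk,\bs}\}$ basis via Lemma \ref{lem:prod_gegenbauer_prop}.(c), use orthonormality in $\obx$ and then in $\obtheta$ to reduce $\cE_{\cQ^c,\eps_0}$ to $\sum_{\bk\in\cQ^c}\E_{\btau_{\eps_0}}[\lambda^\bd_\bk(\sigma_{\bd,\btau})^2]\|\proj_\bk f_d\|_{L^2}^2$, and then invoke Lemma \ref{lem:bound_gegenbauer_RF_LB} together with Parseval. Your explicit appeal to Tonelli for the interchange is a small additional nicety not spelled out in the paper but is exactly the right justification.
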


\begin{proposition}[Lower bound on the kernel matrix]\label{prop:kernel_lower_bound_RFK_NU}
Assume $N = o_d (d^\gamma)$ for a fixed integer $\gamma > 0$. Let  $(\btheta_i)_{i \in [N]} \sim \Unif(\S^{D-1}(\sqrt D))$ independently, and $\sigma$ be an activation function satisfying Assumption \ref{ass:activation_lower_upper_RF_aniso}.(a). Let $\bU \in \R^{N \times N}$ be the  kernel matrix defined by Eq.~\eqref{eq:KernelMatrix}.
Then there exists a constant $\eps > 0 $ that depends on the activation function $\sigma$, such that
\[
\lambda_{\min}(\bU) \ge \eps,
\]
with high probability as $d \to \infty$. 
\end{proposition}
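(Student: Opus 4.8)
The plan is to show that, on a high‑probability event, $\bU$ dominates in the positive‑semidefinite order a single high‑degree Gegenbauer block, and that this block is by itself well‑conditioned. I work throughout on the event $\cP_{d,N,\eps}$ of \eqref{eq:def_P_eps}, with $\eps>0$ to be fixed small only at the end; by Lemma \ref{lem:bound_proba_cPeps} one has $\P(\cP_{d,N,\eps})=1-o_d(1)$, and on this event $\btau_i\in[1-\eps,1+\eps]^Q$ for all $i\in[N]$. First I would expand $\sigma(\<\btheta_i,\cdot\>/R)$ in the product‑Gegenbauer basis as in \eqref{eq:gegenbauer_decomposition_sigma_d_tau}; using the orthogonality relation of Lemma \ref{lem:prod_gegenbauer_prop}.(b) this gives
\[
U_{ij}=\sum_{\bk\in\posint^Q}\lambda^\bd_{\bk}(\sigma_{\bd,\btau_i})\,\lambda^\bd_{\bk}(\sigma_{\bd,\btau_j})\,B(\bd,\bk)\,Q^\bd_{\bk}\bigl(\{\<\obtheta_i^{(q)},\obtheta_j^{(q)}\>\}_{q\in[Q]}\bigr).
\]
By Lemma \ref{lem:prod_gegenbauer_prop}.(c) the matrix $\bigl(B(\bd,\bk)Q^\bd_{\bk}(\{\<\obtheta_i^{(q)},\obtheta_j^{(q)}\>\})\bigr)_{ij}$ equals the Gram matrix $\bY_{\bk}\bY_{\bk}^\sT$ with $\bY_{\bk}=(Y^\bd_{\bk,\bs}(\obtheta_i))_{i,\bs}$, so each summand $\bU_{\bk}:=\bD_{\bk}\bY_{\bk}\bY_{\bk}^\sT\bD_{\bk}$, with $\bD_{\bk}=\diag((\lambda^\bd_{\bk}(\sigma_{\bd,\btau_i}))_{i\in[N]})$, is positive semidefinite; moreover $\sum_{\bk}\|\bU_{\bk}\|_{\op}\le\sum_{\bk}\Trace(\bU_{\bk})=\sum_{i}U_{ii}<\infty$ uniformly in $d$ (Assumption \ref{ass:activation_lower_upper_RF_aniso}.(a) together with the $O(1)$ bound on $\E_\bx[\sigma(\<\btheta_i,\bx\>/R)^2]$ established as in the proof of Lemma \ref{lem:bound_gegenbauer_RF_LB}). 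Hence $\bU=\sum_{\bk}\bU_{\bk}\succeq\bU_{\bk^\ast}$ for any fixed $\bk^\ast$, and it suffices to bound $\lambda_{\min}(\bU_{\bk^\ast})$ from below for one well‑chosen $\bk^\ast$.

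Next I would choose $\bk^\ast$. Since an activation such as ReLU is not a polynomial, it has nonzero Hermite coefficients $\mu_k(\sigma)$ of arbitrarily large degree; fix $k^\ast$ with $\mu_{k^\ast}(\sigma)\neq0$ and $\eta_{q_\xi}k^\ast>2\gamma$, where $q_\xi$ is as in \eqref{def:xi}, and take $\bk^\ast$ equal to $k^\ast$ in the $q_\xi$‑th coordinate and $0$ elsewhere, so that $B(\bd,\bk^\ast)=B(d_{q_\xi},k^\ast)\asymp d^{\eta_{q_\xi}k^\ast}=\omega_d(N^2)$ (using $N=o_d(d^\gamma)$). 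The off‑diagonal entries of $\bY_{\bk^\ast}\bY_{\bk^\ast}^\sT/B(\bd,\bk^\ast)$ are $Q^\bd_{\bk^\ast}(\{\<\obtheta_i^{(q)},\obtheta_j^{(q)}\>\})$, with second moment $1/B(\bd,\bk^\ast)$ by Lemma \ref{lem:prod_gegenbauer_prop}.(b); since $(\obtheta_i)_{i\le N}\sim_{iid}\Unif(\PS^\bd)$ and $B(\bd,\bk^\ast)=\omega_d(N^2)$, the matrix‑concentration estimate of Corollary \ref{coro:Delta_bound_aniso_unif} (equivalently, a trace‑moment bound in the spirit of Lemma \ref{lem:concentration_YY}) gives $\|\bY_{\bk^\ast}\bY_{\bk^\ast}^\sT/B(\bd,\bk^\ast)-\id_N\|_{\op}=o_{d,\P}(1)$, hence $\lambda_{\min}(\bY_{\bk^\ast}\bY_{\bk^\ast}^\sT)\ge B(\bd,\bk^\ast)(1-o_{d,\P}(1))$.

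It remains to lower bound the diagonal factor $\bD_{\bk^\ast}$. On $\cP_{d,N,\eps}$ each $\btau_i\in[1-\eps,1+\eps]^Q$, and, arguing as in the proof of Lemma \ref{lem:bound_gegenbauer_RF_LB} together with the Gegenbauer‑to‑Hermite limit \eqref{eq:Gegen-to-Hermite}, one checks that for this $\bk^\ast$ — which loads only on the dominant sphere $q_\xi$, for which $r_{q_\xi}/R\to1$ while $r_q/R\to0$ for $q\neq q_\xi$ — one has $B(\bd,\bk^\ast)\lambda^\bd_{\bk^\ast}(\sigma_{\bd,\btau})^2\to\mu_{k^\ast}(\sigma)^2/k^\ast!$ uniformly over $\btau\in[1-\eps,1+\eps]^Q$ once $\eps$ is small enough. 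As $\mu_{k^\ast}(\sigma)\neq0$, for $d$ large and $\eps$ small this forces $\min_{i\in[N]}B(\bd,\bk^\ast)\lambda^\bd_{\bk^\ast}(\sigma_{\bd,\btau_i})^2\ge c_0:=\mu_{k^\ast}(\sigma)^2/(2\,k^\ast!)$. Combining with the previous paragraph, on $\cP_{d,N,\eps}$ and with high probability $\lambda_{\min}(\bU)\ge\lambda_{\min}(\bU_{\bk^\ast})\ge\bigl(\min_i\lambda^\bd_{\bk^\ast}(\sigma_{\bd,\btau_i})^2\bigr)\,\lambda_{\min}(\bY_{\bk^\ast}\bY_{\bk^\ast}^\sT)\ge c_0(1-o_{d,\P}(1))\ge c_0/2$, which is the claim with the $\eps$ of the statement equal to $c_0/2$.

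The main obstacle is the last step: producing a lower bound on the rescaled Gegenbauer coefficient $B(\bd,\bk^\ast)\lambda^\bd_{\bk^\ast}(\sigma_{\bd,\btau})^2$ that is uniform over $i\in[N]$ and over $\btau$ near $\mathbf 1$ and strictly positive in the limit. This is where one genuinely uses non‑polynomiality of $\sigma$, and where the anisotropic geometry matters: $\bk^\ast$ must be supported on the \emph{dominant} sphere $q_\xi$, since only there does $r_q/R$ remain of order one, so that the limiting coefficient is the pure Hermite coefficient $\mu_{k^\ast}(\sigma)$ rather than a quantity that degenerates as the subordinate directions collapse (for $\bk^\ast$ supported elsewhere the coefficient would decay by an extra power of $d$). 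The concentration step is comparatively routine given the machinery already in place.
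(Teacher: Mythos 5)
Your proposal follows essentially the same route as the paper: decompose $\bU$ into a sum of PSD Gegenbauer blocks, keep a single block $\bU_{\bk^\ast}$ supported on the dominant sphere $q_\xi$, drop the rest by positivity, deduce $\bU\succeq\bU_{\bk^\ast}$, then show that $\lambda_{\min}(\bU_{\bk^\ast})$ is bounded below from (i) $\|\bW_{\bk^\ast}-\id_N\|_{\op}=o_{d,\P}(1)$ and (ii) the uniform-in-$\btau$ lower bound on $B(\bd,\bk^\ast)\lambda^\bd_{\bk^\ast}(\sigma_{\bd,\btau})^2$ coming from the Gegenbauer-to-Hermite limit (this is Lemma \ref{lem:convergence_Gegenbauer_coeff_0_l} in the paper) together with $\mu_{k^\ast}(\sigma)\neq0$. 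The paper's construction of $\hat\sigma$, $\bar\sigma$ and the resulting splitting $\bU=\hat\bU+\bar\bU$ is exactly your $\bU=\sum_{\bk\neq\bk^\ast}\bU_{\bk}+\bU_{\bk^\ast}$, and you correctly observe that the index $\bk^\ast$ must load only on the sphere $q_\xi$ for the coefficient to stay $\Theta_d(1)$.

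There is one small but genuine gap in how you pick $k^\ast$. You require $\eta_{q_\xi}k^\ast>2\gamma$ so that $B(\bd,\bk^\ast)=\omega_d(N^2)$, and you justify the Gram-matrix concentration by a second-moment (Frobenius) bound. But Assumption \ref{ass:activation_lower_upper_RF_aniso}.(b) only guarantees that $\sigma$ is not a degree-$\lfloor\gamma/\eta_{q_\xi}\rfloor$ polynomial, i.e.\ that there exists some $k>\gamma/\eta_{q_\xi}$ with $\mu_k(\sigma)\neq0$; it does \emph{not} rule out $\sigma$ being a polynomial of degree lying in $(\gamma/\eta_{q_\xi},\,2\gamma/\eta_{q_\xi}]$, in which case your $k^\ast$ does not exist. (Your aside ``such as ReLU is not a polynomial'' is appealing to a stronger hypothesis than the one actually imposed.) The fix is to invoke the sharper concentration estimate at its real threshold: Proposition \ref{prop:Delta_bound} (equivalently, Corollary \ref{coro:Delta_bound_aniso_unif}) already gives $\|\bW_{\bk^\ast}-\id_N\|_{\op}=o_{d,\P}(1)$ as soon as $N\le d^{\eta_{q_\xi}k^\ast}/e^{A_d\sqrt{\log d}}$, i.e.\ for any $k^\ast$ with $\eta_{q_\xi}k^\ast>\gamma$, which is exactly what the paper does by taking $m=\inf\{k>\lfloor\gamma/\eta_{q_\xi}\rfloor:\mu_k(\sigma)\neq0\}$. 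With this replacement your argument goes through under the stated assumptions. Two cosmetic remarks: the bound $\sum_{\bk}\|\bU_{\bk}\|_{\op}\le\sum_iU_{ii}$ is $O_{d,\P}(N)$, not $O(1)$, but you never need summability — positivity of each block already yields $\bU\succeq\bU_{\bk^\ast}$; and the proposition statement in the paper omits Assumption \ref{ass:activation_lower_upper_RF_aniso}.(b) even though its proof (and yours) uses it.
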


The proofs of these two propositions are provided in the next sections. 

Proposition \ref{prop:bound_V_RFK_NU} shows that there exists $\eps_0 > 0$ such that
\[
\E_{\bTheta_{\eps_0}} [ \| \bV_{\cQ^c} \|_2^2 ] = N \cE_{\cQ^c, \eps_0}  \leq C N d^{-\gamma } \| \proj_{\cQ^c} f_d \|_{L^2}^2.
\]
Hence, by Markov's inequality, we get for any $\eps > 0$,
\[
\begin{aligned}
\P (  \| \bV_{\cQ^c} \|_2 \geq \eps \cdot \| \proj_{\cQ^c} f_d \|_{L^2} ) \leq & \P (  \lbrace \| \bV_{\cQ^c} \|_2 \geq \eps \cdot \| \proj_{\cQ^c} f_d \|_{L^2} \rbrace \cap \cP_{\eps_0} ) + \P ( \cP_{\eps_0}^c ) \\
\leq & \frac{N \cE_{\cQ^c , \eps_0 }}{\eps^2 \| \proj_{\cQ^c} f_d \|_{L^2}^2 }  + o_d (1)\\
\leq & C ' N d^{-\gamma} + o_d(1),
\end{aligned}
\]
where we used Lemma \ref{lem:bound_proba_cPeps}. By assumption, we have $N = o_d ( d^{\gamma})$, hence Eq.~\eqref{eqn:bound_V_RFK_NU} is verified. Furthermore Eq.~\eqref{eqn:bound_inverse_U_RFK_NU} follows simply from Proposition \ref{prop:kernel_lower_bound_RFK_NU}. This proves the theorem.

\subsection{Proof of Proposition \ref{prop:bound_V_RFK_NU}}

We will denote:
\[
\overline f_{d}(\obx) = f_{d}(\bx), 
\]
such that $\of$ is a function on the normalized product of spheres $\PS^\bd$ (Note that we defined $\proj_\bk f_d (\bx) \equiv \proj_{\bk} \of_d (\obx )$ the unambiguous polynomial approximation of $f_d$ with polynomial of degree $\bk$).
We have
\[
\begin{aligned}
V_{i, \cQ^c } =& \E_{\bx} \left[ [\proj_{\cQ^c} f_{d}](\bx) \sigma\lp \sum_{q \in [Q]} \< \bx^\pq, \btheta_{i}^\pq \> / R \rp \right] \\
=& \E_{\obx} \left[ [\proj_{\cQ^c} \of_{d}](\obx) \sigma_{\bd,\btau_i} \lp \lb \< \obx^\pq, \obtheta_{i}^\pq\> / \sqrt{d_q} \rb_{q \in [Q]} \rp \right].
\end{aligned}
\]
We recall the expansion of $\sigma_{\bd,\btau}$ in terms of tensor product of Gegenbauer polynomials
\[
\begin{aligned}
\sigma (\<\btheta , \bx \>/ R ) = & \sum_{\bk \in \posint^Q} \lambda^{\bd}_{\bk} (\sigma_{\bd,\btau}  ) B(\bd,\bk) Q^{\bd}_{\bk} \lp \lb \< \obtheta^\pq , \obx^\pq \> \rb_{q \in [Q]} \rp, \\
\lambda^{\bd}_{\bk} (\sigma_{\bd,\btau} ) = & \E_{\obx} \Big[ \sigma_{\bd,\btau} \lp \ox_1^{(1)} , \ldots , \ox^{(Q)}_1 \rp  Q^{\bd}_{\bk} \Big( \sqrt{d_1}\ox_1^{(1)} , \ldots , \sqrt{d_Q} \ox_1^{(Q)} \Big)  \Big].
\end{aligned}
\]
For any $\bk \in \posint^Q$, the spherical harmonics expansion of $P_{\bk} \overline f_{d}$ gives
\[
P_{\bk} \overline f_{d} ( \obx) =  \sum_{\bs \in [B(\bd , \bk)] }  \lambda^{\bd}_{\bk,\bs} ( \overline f_{d} ) Y^{\bd}_{\bk,\bs} ( \obx ) .
\]  
Using Eq.~\eqref{eq:ProdGegenbauerHarmonics} to get the following property
\begin{equation}
\begin{aligned}
 \E_{\obx} \lbb Q^{\bd}_{\bk '} \lp \lb \< \overline \btheta^\pq , \obx^\pq \> \rb_{q \in [Q]} \rp  Y^{\bd}_{\bk,\bs} ( \obx) \rbb 
=  &\frac{1}{B(\bd , \bk ' )} \sum_{\bs ' \in [B(\bd , \bk)]} Y^{\bd}_{\bk',\bs'} ( \obtheta ) \E_{\obx} \lbb Y^{\bd}_{\bk ' ,\bs' } ( \obx ) Y^{\bd}_{\bk,\bs} ( \obx) \rbb \\
=  & \frac{1}{B(\bd , \bk)}  Y^{\bd}_{\bk,\bs} ( \obtheta) \delta_{\bk,\bk'} \, ,
\end{aligned}\label{eq:prod_gegenbauer_harmonics}
\end{equation}
we get
\[
\begin{aligned}
& \E_{\obx}\lbb [\proj_{\bk} \overline f_{d}] (\obx ) \sigma_{\bd,\btau} \lp \lb \< \overline \btheta^\pq , \obx^\pq \> / \sqrt{d_q} \rb_{q \in [Q]} \rp  \rbb \\
=  & \sum_{\bk' \geq \bzero } \lambda^{\bd}_{\bk'} (\sigma_{\bd,\btau}  ) B(\bd,\bk') \sum_{\bs \in [B(\bd , \bk)] }  \lambda^{\bd}_{\bk,\bs} ( \overline f_{d} ) \E_{\obx} \lbb Y^{\bd}_{\bk,\bs} ( \obx ) Q^{\bd}_{\bk} \lp \lb \< \overline \btheta^\pq , \obx^\pq \> \rb_{q \in [Q]} \rp \rbb \\
=&
 \sum_{\bs \in [B(\bd,\bk)] }  \lambda^{\bd}_{\bk,\bs} ( \overline f_{d} )  \lambda^{\bd}_{\bk} ( \sigma_{\bd,\btau} )  Y^{\bd}_{\bk,\bs} ( \obtheta ) . 
\end{aligned}
\]
Let $\eps_0>0$ be a constant as specified in Lemma \ref{lem:bound_gegenbauer_RF_LB}. We consider 
\begin{equation}\label{eq:bound_cE_dec}
\begin{aligned}
\cE_{\cQ^c, \eps_0 } = &\E_{\obtheta,\btau_{\eps_0}} \lbb \E_{\obx} \lbb [ \proj_{\cQ^c} \of_{d} ] (\obx) \sigma_{\bd,\btau} \lp \lb \< \obtheta^\pq , \obx^\pq \> /\sqrt{d_q} \rb_{q \in [Q]} \rp   \rbb^2 \rbb \\
= & \sum_{\bk,\bk' \in \cQ^c } \E_{\obtheta,\btau_{\eps_0}} \Big[ \E_{\obx} \lbb [ \proj_{\bk} \of_{d}  ] (\obx) \sigma_{\bd,\btau}  \lp \lb \< \obtheta^\pq , \obx^\pq \> /\sqrt{d_q} \rb_{q \in [Q]} \rp \rbb \\
& \phantom{AAAAAA} \times \E_{\oby} \lbb [ \proj_{\bk'} \of_{d} ] (\oby) \sigma_{\bd,\btau}  \lp \lb \< \obtheta^\pq , \oby^\pq \> /\sqrt{d_q} \rb_{q \in [Q]} \rp  \rbb \Big]  \\ 
= &  \sum_{\bk,\bk' \in \cQ^c }\E_{\btau_{\eps_0} } \lbb  \lambda^{\bd}_{\bk} ( \sigma_{\bd,\btau} )  \lambda^{\bd}_{\bk'} ( \sigma_{\bd,\btau} ) \rbb \\
& \phantom{AAAAAA} \times \sum_{\bs \in [B(\bd , \bk)] }  \sum_{\bs' \in [B(\bd , \bk')] } \lambda^{\bd}_{\bk,\bs} ( \overline f_{d} ) \lambda^{\bd}_{\bk',\bs'} ( \overline f_{d} ) \E_{\obtheta} [ Y^{\bd}_{\bk,\bs} ( \obtheta) Y^{\bd}_{\bk',\bs'} ( \obtheta) ] \\
= &  \sum_{\bk \in \cQ^c } \E_{\btau_{\eps_0} } [  \lambda^{\bd}_{\bk} ( \sigma_{\bd,\btau} )^2 ] \sum_{\bs \in [B(\bd , \bk)] }  \lambda^{\bd}_{\bk,\bs} ( \overline f_{d} )^2 \\
\le & \lbb \max_{\bk \in \cQ^c}  \E_{\btau_{\eps_0} } [  \lambda^{\bd}_{\bk} ( \sigma_{\bd,\btau} )^2 ] \rbb \cdot \sum_{\bk \in \cQ^c} \sum_{\bs \in [B(\bd , \bk ) ]  }   \lambda^{\bd}_{\bk,\bs} ( \overline f_{d} )^2 \\
= & \lbb \max_{\bk \in \cQ^c}  \E_{\btau_{\eps_0} } [  \lambda^{\bd}_{\bk} ( \sigma_{\bd,\btau} )^2 ] \rbb \cdot \| \proj_{\cQ^c} \of_{d} \|_{L^2}.
\end{aligned}
\end{equation}

From Lemma \ref{lem:bound_gegenbauer_RF_LB}, there exists a constant $C>0$ such that for $d$ sufficiently large, we have for any $\bk \in \cQ^c$,
\begin{equation}\label{eq:bound_coeff_cE}
\E_{\btau_{\eps_0} } [  \lambda^{\bd}_{\bk} ( \sigma_{\bd,\btau} )^2 ] \leq \sup_{\btau \in [1 - \eps_0 , 1 + \eps_0]^Q } \lambda^{\bd}_{\bk} ( \sigma_{\bd,\btau} )^2 \leq C d^{-\gamma}.
\end{equation}
Combining Eq.~\eqref{eq:bound_cE_dec} and Eq.~\eqref{eq:bound_coeff_cE} yields
\[
\cE_{\cQ^c, \eps_0 } \leq C d^{- \gamma } \cdot \| \proj_{\cQ^c} \of_{d} \|_{L^2}.
\]

\subsection{Proof of Proposition \ref{prop:kernel_lower_bound_RFK_NU}}

\noindent
\textbf{Step 1. Construction of the activation functions $\hat \sigma$, $\bar \sigma$. }

Without loss of generality, we will assume that $q_\xi =1$. From Assumption \ref{ass:activation_lower_upper_RF_aniso}.$(b)$, $\sigma$ is not a degree $\lfloor \gamma / \eta_1 \rfloor$-polynomial. This is equivalent to having $m \geq \lfloor \gamma / \eta_1 \rfloor +1$ such that $\mu_m ( \sigma) \neq 0$. Let us denote
\[
m = \inf \lbrace k \ge \lfloor \gamma / \eta_1 \rfloor + 1| \mu_m ( \sigma ) \neq 0 \rbrace.
\]

Recall the expansion of $\sigma_{\bd,\btau}$  in terms of product of Gegenbauer polynomials
\[
\sigma_{\bd , \btau} \lp \lb \<\obtheta^\pq , \obx^\pq \>/ \sqrt{d_q} \rb_{q \in [Q]} \rp =  \sum_{\bk \in \posint^Q} \lambda^{\bd}_{\bk} (\sigma_{\bd,\btau}  ) B(\bd,\bk) Q^{\bd}_{\bk} \lp \lb \< \overline \btheta^\pq , \obx^\pq \> \rb_{q \in [Q]} \rp, 
\]
where
\[
\lambda^{\bd}_{\bk} (\sigma_{\bd,\btau} ) =  \E_{\obx} \lbb \sigma_{\bd,\btau} \lp \ox_1^{(1)} , \ldots , \ox^{(Q)}_1 \rp Q^{\bd}_{\bk} \lp  \sqrt{d_1} \ox^{(1)}_1 , \ldots , \sqrt{d_Q} \ox^{(Q)}_1 \rp \rbb.
\]
Denoting $\bm = ( m , 0 , \ldots , 0) \in \posint^Q$ and using the Gegenbauer coefficients of $\sigma_{\bd , \btau}$, we define an activation function $\bar \sigma _{\bd,\btau}$ which is a degree $m$ polynomial in $\obx^{(1)}$  and do not depend on $\obx^\pq$ for $q \ge 2$. 
\[
\begin{aligned}
\bar \sigma_{\bd,\btau} \lp \lb \obtheta^\pq , \obx^\pq \> / \sqrt{d_q} \rb_{q \in [Q]} \rp & = \lambda^{\bd}_{\bm} (\sigma_{\bd,\btau}  ) B(\bd,\bm) Q^{\bd}_{\bm} \lp \lb \< \obtheta^\pq , \obx^\pq \> / \sqrt{d_q} \rb_{q \in [Q]} \rp \\
& = \lambda^{\bd}_{\bm} (\sigma_{\bd,\btau}  ) B(d_1,m)  Q^{(d_1)}_{m} ( \sqrt{d_1} \ox^{(1)}_1 ),
\end{aligned}
\]
and an activation function 
\[
\hat \sigma_{\bd,\btau} \lp \lb \obtheta^\pq , \obx^\pq \> / \sqrt{d_q} \rb_{q \in [Q]} \rp =  \sum_{\bk \neq \bm \in \posint^Q } \lambda^{\bd}_{\bk} (\sigma_{\bd,\btau}  ) B(\bd,\bk) Q^{\bd}_{\bk} \lp \lb \< \obtheta^\pq , \obx^\pq \> / \sqrt{d_q} \rb_{q \in [Q]} \rp.
\]

\noindent
\textbf{Step 2. The kernel functions $u_d$, $\hat u_d$ and $\bar u_d$.} 

Let $u_d$, $\hat u_d$ and $\bar u_d$  be defined by
\begin{equation}
\begin{aligned}
&u_{\bd}^{\btau_1 , \btau_2} \lp \lb \<\obtheta_1^\pq , \obtheta_2^\pq \> / \sqrt{d_q} \rb_{q \in [Q]} \rp \\
=& \E_{\bx}[\sigma(\<\btheta_1, \bx\>/R) \sigma(\<\btheta_2, \bx\>/R)]\\
 = & \sum_{\bk \in \posint^Q} \lambda^{\bd}_{\bk} (\sigma_{\bd,\btau_1}  ) \lambda^{\bd}_{\bk} (\sigma_{\bd,\btau_2}  ) B(\bd,\bk) Q^{\bd}_{\bk} \lp \lb\< \obtheta_1^\pq , \obtheta_2^\pq \> / \sqrt{d_q} \rb_{q \in [Q]} \rp \, \\
 \end{aligned}
 \end{equation}
 and
 \begin{equation}
 \begin{aligned}
& \hat u_{\bd}^{\btau_1 , \btau_2} \lp \lb \<\obtheta_1^\pq , \obtheta_2^\pq \> / \sqrt{d_q} \rb_{q \in [Q]} \rp \\
=& \E_{\bx}[\hat \sigma(\<\btheta_1, \bx\>/R) \hat \sigma(\<\btheta_2, \bx\>/R)]\\
 = & \sum_{\bk \neq \bm \in \posint^Q} \lambda^{\bd}_{\bk} (\sigma_{\bd,\btau_1}  ) \lambda^{\bd}_{\bk} (\sigma_{\bd,\btau_2}  ) B(\bd,\bk) Q^{\bd}_{\bk} \lp \lb \<\obtheta_1^\pq , \obtheta_2^\pq \> / \sqrt{d_q} \rb_{q \in [Q]} \rp \, \\
  \end{aligned}
 \end{equation}
and
\begin{equation}
\begin{aligned}
 \bar u_{\bd}^{\btau_1 , \btau_2} \lp \lb \<\obtheta_1^\pq , \obtheta_2^\pq \> / \sqrt{d_q} \rb_{q \in [Q]} \rp =& \E_{\bx}[\bar \sigma(\<\btheta_1, \bx\>/R) \bar \sigma(\<\btheta_2, \bx\>/R)]\\
 = & \lambda^{\bd}_{\bm} (\sigma_{\bd,\btau_1}  ) \lambda^{\bd}_{\bm} (\sigma_{\bd,\btau_2}  ) B(d_1,m) Q^{(d_1)}_{m} ( \< \obtheta^{(1)}_1 , \obtheta^{(1)}_2 \>).
\end{aligned}\label{eq:decomposition_U_bar_gegenbauer}
\end{equation}
We immediately have $u_{\bd}^{\btau_1 , \btau_2} = \hat u_{\bd}^{\btau_1 , \btau_2} + \bar u_{\bd}^{\btau_1 , \btau_2}$. Note that all three correspond to positive semi-definite kernels. 

\noindent
\textbf{Step 3. Analyzing the kernel matrix. }

Let $\bU, \hat \bU, \bar \bU \in \R^{N \times N}$ with 
\[
\begin{aligned}
\bU_{ij} =& u_{\bd}^{\btau_i , \btau_j} \lp \lb\< \obtheta_i^\pq , \obtheta_j^\pq \> / \sqrt{d_q} \rb_{q \in [Q]} \rp, \\
\hat \bU_{ij} =& \hat u_{\bd}^{\btau_i , \btau_j} \lp \lb \<\obtheta_i^\pq , \obtheta_j^\pq \> / \sqrt{d_q} \rb_{q \in [Q]} \rp, \\
\bar \bU_{ij} =& \bar u_{\bd}^{\btau_i , \btau_j} \lp \lb \<\obtheta_i^\pq , \obtheta_j^\pq \> / \sqrt{d_q} \rb_{q \in [Q]} \rp.
\end{aligned}
\]
Since $\hat \bU = \bU - \bar \bU \succeq 0$, we immediately have $\bU \succeq \bar \bU$. In the following, we will lower bound $\bar \bU$. 

By the decomposition of $\bar \bU$ in terms of Gegenbauer polynomials \eqref{eq:decomposition_U_bar_gegenbauer}, we have
\[
\bar \bU =  B(d_1,m)\, \diag \Big( \lambda^{\bd}_{\bm} ( \sigma_{\bd,\btau_i}) \Big) \cdot  \bW_m \cdot  \diag \Big(\lambda^{\bd}_{\bm} ( \sigma_{\bd,\btau_i}) \Big),
\]
where $\bW_m \in \R^{N \times N}$ with $W_{m,ij} =  Q_m^{(d_1)} ( \<\obtheta^{(1)}_i , \obtheta^{(1)}_j \>)$.
From Proposition \ref{prop:Delta_bound} (recalling that by definition of $m > \gamma / \eta_1$, i.e.\,$\gamma < m \eta_1$, we have $N < d^{\eta_1 m - \delta} = d_1^{m - \delta ' }$ for some $\delta >0$), we have 
\[
\| \bW_{m} -  \id_N \|_{\op} = o_{d,\P } (1).
\]
Hence we get
\begin{equation}\label{eq:RF_prop_2_bound_I}
\Big\| \overline \bU   - B(d_1,m) \diag \Big(\lambda^{\bd}_{\bm} ( \sigma_{\bd,\btau_i})^2 \Big) \Big\|_{\op} =  \max_{i \in [N]} \Big\lbrace B(d_1,m)  \lambda^{\bd}_{\bm} ( \sigma_{\bd,\btau_i})^2 \Big\rbrace \cdot o_{d,\P} (1).
\end{equation}
From Assumption \ref{ass:activation_lower_upper_RF_aniso}.$(a)$ and Lemma \ref{lem:convergence_Gegenbauer_coeff_0_l} applied to coefficient $\bm$, as well as the assumption that $\mu_m (\sigma) \neq 0 $, there exists $\eps_0>0$ and $ C,c>0$ such that for $d$ large enough,
\begin{equation}\label{eq:RF_prop_2_bound_II}
\begin{aligned}
&\sup_{\btau \in [1 - \eps_0 , 1+\eps_0]^Q} B(d_1,m)  \lambda^{\bd}_{\bm} ( \sigma_{\bd,\btau})^2 \le C < \infty, \\
&\inf_{\btau \in [1 - \eps_0 , 1+\eps_0]^Q} B(d_1,m)  \lambda^{\bd}_{\bm} ( \sigma_{\bd,\btau})^2 \ge c > 0.
\end{aligned}
\end{equation}
We restrict ourselves to the event $\cP_{\eps_0}$ defined in Eq.~\eqref{eq:def_P_eps}, which happens with high probability (Lemma \ref{lem:bound_proba_cPeps}). Hence from Eqs.~\eqref{eq:RF_prop_2_bound_I} and \eqref{eq:RF_prop_2_bound_II}, we deduce that with high probability
\[
\overline \bU  = B(d_1,m) \diag \Big(\lambda^{\bd}_{\bm} ( \sigma_{\bd,\btau_i})^2 \Big) + o_{d,\P} (1)  \succeq \frac{c}{2} \id_N.
\]
We conclude that with high probability
\[
\bU = \bar \bU + \hat \bU  \succeq \overline \bU \succeq \frac{c}{2} \id_N.
\]

\clearpage

\section{Proof of Theorem \ref{thm:RF_lower_upper_bound_aniso}.(b): upper bound for \RF\,model}
\label{sec:proof_RFK_upper_aniso}

\subsection{Preliminaries}

\begin{lemma} \label{lem:bound_gegenbauer_RF_UB}
Let $\sigma$ be an activation function that satisfies Assumptions \ref{ass:activation_lower_upper_RF_aniso}.(a) and \ref{ass:activation_lower_upper_RF_aniso}.(b). Let $\|\bw^\pq \|_2=1$ be unit vectors of $\R^{d_q}$, for $q = 1 , \ldots , Q$. Fix $\gamma > 0$ and denote $\cQ = \overline \cQ_{\RF} (\gamma)$. Then there exists $\eps_0 > 0 $ and $d_0$ and constants $C, c > 0$ such that for $d \ge d_0$ and $\btau \in [1- \eps_0 , 1+ \eps_0]^Q$,
\begin{align}
\E_{\overline \bx} \lbb \sigma_{\bd, \btau} \lp \lb \< \bw^\pq , \obx^\pq \> \rb_{q \in [Q]} \rp^2 \rbb \le  C < \infty, \label{eq:bound_gegen_RF_UB_1}\\
 \min_{\bk \in \cQ  } \lambda^{\bd}_{k,0} ( \sigma_{d,\tau}  )^2   \ge  c d^{-\gamma} > 0. \label{eq:bound_gegen_RF_UB_2}
\end{align}
\end{lemma}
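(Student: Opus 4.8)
The plan is to establish \eqref{eq:bound_gegen_RF_UB_1} and \eqref{eq:bound_gegen_RF_UB_2} separately; each is the positive counterpart of an inequality already proved (in the opposite direction) inside the proof of Lemma~\ref{lem:bound_gegenbauer_RF_LB}, and both reduce, via the tensor-product Gegenbauer expansion \eqref{eq:gegenbauer_decomposition_sigma_d_tau} of $\sigma_{\bd,\btau}$, to the one-dimensional Gaussian limit — the new ingredient being the non-degeneracy of $\sigma$ (namely $\mu_k(\sigma)\neq0$ for $k\le\overline K$ together with the exponential-growth control on $\sigma^{(k)}$; this is Assumption~\ref{ass:activation_lower_upper_RF_aniso}.(c), under which Theorem~\ref{thm:RF_lower_upper_bound_aniso}.(b) is stated). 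The constant $\eps_0$ will be shrunk finitely many times so that all the estimates below hold simultaneously.

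For \eqref{eq:bound_gegen_RF_UB_1}, I would write $\sigma_{\bd,\btau}(\{\langle\bw^{(q)},\obx^{(q)}\rangle\}_q)=\sigma(u)$ as in \eqref{eq:def_sigma_d_tau}, with $u=\sum_{q\in[Q]}\tau^{(q)}(r_q/R)\langle\bw^{(q)},\obx^{(q)}\rangle$ and $\obx=(\obx^{(1)},\dots,\obx^{(Q)})\sim\mu_{\bd}$ (independent components). Since $\|\bw^{(q)}\|_2=1$ one has $\E[\langle\bw^{(q)},\obx^{(q)}\rangle^2]=1$ and $\sum_q(r_q/R)^2=1$, hence $\E[u^2]\in[(1-\eps_0)^2,(1+\eps_0)^2]$; moreover $u$ is a sum of independent bounded summands whose fluctuation is, as $d\to\infty$, carried by the coordinate $q_\xi$ (the weights $r_q^2/R^2=\Theta_d(d^{\eta_q+\kappa_q-\xi})$ vanish for $q\neq q_\xi$), so $u$ couples to a Gaussian with uniformly bounded variance and uniformly sub-Gaussian tails. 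Thus, for $\eps_0$ small and $d$ large, $u$ is sub-Gaussian with variance proxy $\rho^2$ obeying $c_1\rho^2<1$, and Assumption~\ref{ass:activation_lower_upper_RF_aniso}.(a) yields $\E_{\obx}[\sigma(u)^2]\le c_0\,\E[e^{c_1u^2/2}]\le c_0(1-c_1\rho^2)^{-1/2}=:C$ uniformly over $\btau\in[1-\eps_0,1+\eps_0]^Q$. (This is the bound already used, through Lemma~\ref{lem:coupling_convergence_Gaussian}.(b), in the proof of Lemma~\ref{lem:bound_gegenbauer_RF_LB}, and can be quoted verbatim.)

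For \eqref{eq:bound_gegen_RF_UB_2}, note first that $\cQ=\overline{\cQ}_{\RF}(\gamma)$ is a \emph{finite} set: $\xi-\kappa_q\ge\eta_q>0$ forces $k_q\le\gamma/(\xi-\kappa_q)$, so $|\bk|\le\overline K$ for all $\bk\in\cQ$ and Assumption~\ref{ass:activation_lower_upper_RF_aniso}.(c) applies to each such $\bk$. The plan is, for each fixed $\bk\in\cQ$, to carry out the Rodrigues-formula computation of the Gegenbauer coefficient — the tensor-product analogue of the step used for $h_{\bd}$ in the proof of Lemma~\ref{lem:bound_gegenbauer_KRR}, namely integrating by parts $k_q$ times in each of the $Q$ one-dimensional integrals via \eqref{eq:Rogrigues_formula} — which expresses $\lambda^{\bd}_{\bk}(\sigma_{\bd,\btau})$ as a $d$-explicit prefactor times
\[
\E_{\obx}\Big[\prod_{q\in[Q]}\big(1-(\ox_1^{(q)})^2/d_q\big)^{k_q}\ \sigma^{(|\bk|)}\Big(\sum_{q'\in[Q]}\tau^{(q')}(r_{q'}/R)\,\ox_1^{(q')}\Big)\Big],
\]
and then to let $d\to\infty$: the product tends to $1$, the argument of $\sigma^{(|\bk|)}$ couples to a standard Gaussian $G$ as in the previous step, and the exponential-growth bound on $\sigma^{(|\bk|)}$ from Assumption~\ref{ass:activation_lower_upper_RF_aniso}.(c) supplies a $\btau$- and $d$-uniform integrable majorant, so dominated convergence gives convergence of the expectation to $\E[\sigma^{(|\bk|)}(G)]=\mu_{|\bk|}(\sigma)$ by \eqref{eq:weak_derivative_hermite_coefficient}. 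Packaging the prefactors exactly as in the proof of Lemma~\ref{lem:bound_gegenbauer_RF_LB} — this is what Lemma~\ref{lem:convergence_Gegenbauer_coeff_0_l}, the lower-bound companion of Lemma~\ref{lem:convergence_proba_Gegenbauer_coeff}, is for — gives
\[
\lim_{d\to\infty}\Big(\prod_{q\in[Q]}d^{(\xi-\eta_q-\kappa_q)k_q}\Big)B(\bd,\bk)\,\lambda^{\bd}_{\bk}(\sigma_{\bd,\btau})^2=c_{\bk}\,\mu_{|\bk|}(\sigma)^2>0
\]
uniformly in $\btau\in[1-\eps_0,1+\eps_0]^Q$, with $c_{\bk}>0$. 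Since $B(\bd,\bk)=\Theta_d(\prod_q d^{\eta_q k_q})$, this gives $\lambda^{\bd}_{\bk}(\sigma_{\bd,\btau})^2=\Theta_d(d^{-\sum_q(\xi-\kappa_q)k_q})$; and because $\bk\in\overline{\cQ}_{\RF}(\gamma)$ means $\sum_q(\xi-\kappa_q)k_q\le\gamma$, this is $\ge c_{\bk}'\,d^{-\gamma}$ for $d$ large. Taking $c=\tfrac12\min_{\bk\in\cQ}c_{\bk}'$, with $d_0,\eps_0$ chosen accordingly, proves \eqref{eq:bound_gegen_RF_UB_2}.

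The main obstacle is the \emph{uniformity in $\btau$} over the whole cube $[1-\eps_0,1+\eps_0]^Q$, carried through the $d\to\infty$ limit: one needs a single integrable majorant for $\sigma^{(|\bk|)}$ evaluated at the random, $\btau$-dependent argument $\sum_{q'}\tau^{(q')}(r_{q'}/R)\ox_1^{(q')}$, and for the moment bound a uniform sub-Gaussian control on $u$. Both are handled by the Gaussian-coupling estimate of Lemma~\ref{lem:coupling_convergence_Gaussian} together with the exponential-growth hypotheses of Assumption~\ref{ass:activation_lower_upper_RF_aniso}, so in practice the proof re-runs, essentially verbatim in the product-of-spheres setting, the arguments of Lemmas~\ref{lem:convergence_proba_Gegenbauer_coeff} and \ref{lem:convergence_Gegenbauer_coeff_0_l}; everything outside of that is bookkeeping of powers of $d$ and the finiteness of $\overline{\cQ}_{\RF}(\gamma)$.
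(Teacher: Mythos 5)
Your proof is correct and follows essentially the same route as the paper's: \eqref{eq:bound_gegen_RF_UB_1} via Assumption \ref{ass:activation_lower_upper_RF_aniso}.(a) and Lemma \ref{lem:coupling_convergence_Gaussian}.(b), and \eqref{eq:bound_gegen_RF_UB_2} by applying the Gegenbauer-coefficient convergence result to each $\bk$ in the finite set $\overline{\cQ}_{\RF}(\gamma)$ (where $\mu_{|\bk|}(\sigma)\neq 0$) and then unwinding the $B(\bd,\bk)$ normalization. The only small discrepancy is attribution: the paper invokes Lemma \ref{lem:convergence_proba_Gegenbauer_coeff}, not Lemma \ref{lem:convergence_Gegenbauer_coeff_0_l}, for this step, and you re-derive that lemma's content (Rodrigues formula plus Gaussian coupling plus dominated convergence) rather than citing it, which is harmless but redundant.
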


\begin{proof}[Proof of Lemma \ref{lem:bound_gegenbauer_RF_UB}]
The first inequality comes simply from Assumption \ref{ass:activation_lower_upper_RF_aniso}.$(a)$ and Lemma \ref{lem:coupling_convergence_Gaussian}.$(b)$. For the second inequality, notice that by Assumption \ref{ass:activation_lower_upper_RF_aniso}.$(c)$ we can apply Lemma \ref{lem:convergence_proba_Gegenbauer_coeff} to any $\bk \in \cQ$. Hence (using that $\mu_k (\sigma)^2 >0$ and we can choose $\delta$ sufficiently small), we deduce that there exists $c >0$, $\eps_0 > 0$ and $d_0$ such that for any $d \geq d_0$, $\btau \in [1- \eps_0 , 1+ \eps_0]^Q$ and $\bk \in \cQ$, 
\[
\lp \prod_{q \in [Q] } d^{(\xi - \eta_q - \kappa_q)k_q} \rp B(\bd,\bk) \lambda^{\bd}_{\bk} ( \sigma_{\bd,\btau}  )^2 \geq c > 0.
\]
Furthermore, using that $B(\bd,\bk) = \Theta ( d_1^{k_1} d_2^{k_2} \ldots d_Q^{k_Q})$, there exists $c ' >0$ such that for any $\bk \in \cQ$,
\[
\lambda^{\bd}_{\bk} ( \sigma_{\bd,\btau}  )^2 \geq c '  \prod_{q \in [Q] } d^{(\eta_q + \kappa_q - \xi)k_q} d_q^{k_q} = c '  \prod_{q \in [Q] } d^{(\kappa_q - \xi)k_q}  \ge c d^{-\gamma},
\]
where we used in the last inequality $\bk \in \overline{\cQ}_{\RF} (\gamma)$ implies $ (\xi - \kappa_1)k_1 + \ldots + (\xi - \kappa_Q) k_Q \leq \gamma$ by definition.
\end{proof}

\subsection{Properties of the limiting kernel}

Similarly to the proof of \cite[Theorem 1.$(b)$]{ghorbani2019linearized}, we construct a limiting kernel which is used as a proxy to upper bound the \RF\,risk.

We recall the definition of $\PS^{\bd} = \prod_{q \in [Q]} \S^{d_q - 1} ( \sqrt{d_q})$ and $\mu_\bd = \Unif ( \PS^\bd )$. Let us denote $\mathcal{L} = L^2 ( \PS^{\bd}, \mu_\bd )$. Fix $\btau \in \R_{>0}^Q$ and recall the definition for a given $\btheta = (\obtheta , \btau)$ of $\sigma_{\bd,\btau} ( \lb \< \obtheta^\pq , \cdot \> /\sqrt{d_q} \rb ) \in \mathcal{L} $,
\[
\sigma_{\bd,\btau}   \lp \lb \< \obtheta_q , \overline \bx^\pq \> \rb_{q \in [Q]} \rp = \sigma \lp \sum_{q \in [Q]}  \tau^\pq (r_q / R) \< \obtheta_q , \overline \bx^\pq \> \rp.
\]

Define the operator $\T_{\btau} : \cL \to \cL$, such that for any $g \in \cL$,
\[
\T_{\btau} g ( \obtheta ) =  \E_{\overline \bx} \lbb  \sigma_{\bd,\btau}  \lp \lb \< \obtheta^\pq , \obx^\pq \> /\sqrt{d_q} \rb_{q \in [Q]} \rp g (\obx) \rbb.
\]
It is easy to check that the adjoint operator $\T^*_{\btau} : \cL \to \cL$ verifies $\T^* = \T$ with variables $\obx$ and $\obtheta$ exchanged.

We define the operator $\K_{\btau,\btau'} : \cL \to \cL $ as $\K_{\btau,\btau'} \equiv \T_{\btau} \T_{\btau'}^*$. For $g \in \cL$, we can write
\[
\K_{\btau_1, \btau_2} g ( \obtheta_1 ) = \E_{\obtheta_2 } [ K_{\btau_1, \btau_2} ( \obtheta_1 , \obtheta_2 ) g ( \obtheta_2 ) ],
\]
where 
\[
K_{\btau_1,\btau_2} ( \obtheta_1 , \obtheta_2 ) = \E_{\obx} \lbb \sigma_{\bd,\btau_1}   \lp \lb \< \obtheta^\pq_1 , \obx^\pq \> /\sqrt{d_q} \rb_{q \in [Q]} \rp \sigma_{\bd,\btau_2}   \lp \lb \< \obtheta^\pq_2 , \obx^\pq \> /\sqrt{d_q} \rb_{q \in [Q]} \rp  \rbb .
\]

We recall the decomposition of $\sigma_{\bd,\btau} $ in terms of tensor product of Gegenbauer polynomials
\[
\begin{aligned}
\sigma_{\bd , \btau} \lp \lb \ox^\pq_1  \rb_{q \in [Q]} \rp = & \sum_{\bk \in \posint^Q} \lambda^{\bd}_{\bk} (\sigma_{\bd,\btau}  ) B(\bd,\bk) Q^{\bd}_{\bk} \lp \lb \ox^\pq_1  \rb_{q \in [Q]} \rp, \\
\lambda^{\bd}_{\bk} (\sigma_{\bd,\btau} ) = & \E_{\obx} \lbb \sigma_{\bd,\btau} \lp \lb \ox^\pq_1  \rb_{q \in [Q]} \rp  Q^{\bd}_{\bk} \lp \lb \sqrt{d_q} \ox^\pq_1  \rb_{q \in [Q]} \rp  \rbb.
\end{aligned}
\]
Recall that $\lbrace Y^\bd_{\bk,\bs}  \rbrace_{\bk \in \posint^Q, \bs \in [B(\bd , \bs)]}$ forms an orthonormal basis of $\cL$.
From Eq.~\eqref{eq:prod_gegenbauer_harmonics}, we have for any $\bk \geq 0$ and $\bs \in [B(\bd,\bk)]$,
\[
\begin{aligned}
\T_{\btau} Y^\bd_{\bk,\bs} ( \obtheta) =& \sum_{\bk' \in \posint^Q} \lambda_{\bk'}^{\bd} ( \sigma_{\bd,\btau}) B(\bd , \bk') \E_{\obx} \lbb Q^{\bd}_{\bk'} \lp \lb \< \obtheta^\pq, \obx^\pq \>  \rb_{q \in [Q]} \rp  Y^\bd_{\bk,\bs} (\obx ) \rbb \\
 =  & \lambda_{\bk}^{\bd} ( \sigma_{\bd,\btau})Y^\bd_{\bk,\bs} ( \obtheta),
\end{aligned}
\]
where we used 
\[
\E_{\obx} \lbb Q^{\bd}_{\bk'} \lp \lb \< \obtheta^\pq, \obx^\pq \>  \rb_{q \in [Q]} \rp  Y^\bd_{\bk,\bs} (\obx ) \rbb =\frac{\delta_{\bk , \bk'}}{B (\bd , \bk)} \bY_{\bk, \bs} (\obtheta).
\]
The same equation holds for $\T^*_{\btau}$. Therefore, we directly deduce that
\[
\K_{\btau,\btau'} Y^\bd_{\bk,\bs} ( \obtheta) = ( \T_{\btau} \T^*_{\btau'} ) Y^\bd_{\bk,\bs} ( \obtheta) = \lambda_{\bk}^{\bd} ( \sigma_{\bd,\btau})  \lambda_{\bk}^{\bd} ( \sigma_{\bd,\btau ' }) Y^\bd_{\bk,\bs} ( \obtheta).
\]
We deduce that $\lbrace Y^\bd_{\bk,\bs}  \rbrace_{\bk \in \posint^Q, \bs \in [B(\bd , \bs)]}$ is an orthonormal basis that diagonalizes the operator $\K_{\btau,\btau'} $. 

Let $\eps_0 > 0 $ be defined as in Lemma \ref{lem:bound_gegenbauer_RF_UB}. We will consider $\btau,\btau' \in [1- \eps_0 , 1+ \eps_0]^Q$ and restrict ourselves to the subspace $V^\bd_{\cQ}$. From the choice of $\eps_0$ and for $d$ large enough, the eigenvalues $\lambda_{\bk}^{\bd} ( \sigma_{\bd,\btau})  \lambda_{\bk}^{\bd} ( \sigma_{\bd,\btau ' }) \neq 0 $ for any $\bk \in \cQ$. Hence, the operator $\K_{\btau,\btau'}\vert_{V^\bd_{\cQ}} $ is invertible.

 \subsection{Proof of Theorem \ref{thm:RF_lower_upper_bound_aniso}.(b)}

Without loss of generality, let us assume that $\lbrace f_d \rbrace$ are polynomials contained in $V^\bd_{\cQ}$, i.e.\,$\of_{d} = \proj_{\cQ } \of_{d}$. 

Consider 
\[
\hat{f} (\bx ; \bTheta, \ba ) = \sum_{i = 1}^N a_i \sigma   (\<\btheta_i , \bx \> / R).
\]
Define $\alpha_{\btau} ( \obtheta) \equiv \K_{\btau,\btau}^{-1} \T_{\btau} \of_{d} (\obtheta)$ and choose $a_i^* = N^{-1} \alpha_{\btau_i} ( \obtheta_i)$,  where we denoted $\obtheta_i = ( \obtheta^\pq_i )_{q \in [Q]} $ with $\obtheta^\pq_i = \btheta^\pq_i /\tau^\pq_i \in \S^{d_q-1} (\sqrt{d_q})$ and  $\tau^\pq_i = \| \btheta^\pq_i \|_2 / \sqrt{d_q}$ independent of $\obtheta^\pq_i$.

Let $\eps_0 > 0 $ be defined as in Lemma \ref{lem:bound_gegenbauer_RF_UB} and consider the expectation over $\cP_{\eps_0}$ of the \RF\,risk (in particular, $\ba^* = (a_1^* , \ldots , a_N^*)$ are well defined):
\[
\begin{aligned}
\E_{\bTheta_{\eps_0}} [ R_{\RF}(f_d , \bTheta) ] =& \E_{\bTheta_{\eps_0} } \Big[ \inf_{\ba \in \R^{N}} \E_{\bx} [(f_d (\bx) - \hat f (\bx; \bTheta, \ba))^2] \Big]\\
\leq & \E_{\bTheta_{\eps_0}} \lbb \E_{\bx} \lbb (f_d (\bx) - \hat f (\bx; \bTheta, \ba^* (\bTheta) ))^2 \rbb \rbb.
\end{aligned}
\]
We can expand the squared loss at $\ba^*$ as
\begin{equation}
\begin{aligned}
\E_\bx [ (f_d (\bx) - \hat{f} (\bx; \bTheta, \ba^*) )^2 ] = & \| f_d \|^2_{L^2} -2 \sum_{i=1}^N \E_{\bx} [a_i^* \sigma  ( \< \btheta_i , \bx \> / R  ) f_d (\bx) ] \\
\label{eq:expansion_squared_loss_RF}
& \phantom{AAAAA}+  \sum_{i,j=1}^N \E_{\bx} [ a_i^* a_j^* \sigma  ( \< \btheta_i , \bx \> / R ) \sigma  ( \< \btheta_j , \bx \> / R  ) ].
\end{aligned}
\end{equation}
The second term of the expansion \eqref{eq:expansion_squared_loss_RF} around $\ba^*$ verifies
\begin{equation}\label{eq:upper_bound_RF_NU_term1}
\begin{aligned}
& \E_{\bTheta_{\eps_0}} \lbb \sum_{i=1}^N \E_{\bx} \lbb a_i^* \sigma  ( \< \btheta_i , \bx \> / R  ) f_d (\bx) \rbb \rbb \\
= & \E_{\btau_{\eps_0}} \lbb \E_{\obtheta} \lbb  \alpha_{\btau} ( \obtheta)  \E_{\obx} \lbb  \sigma_{\bd,\btau}  \lp \lb \< \obtheta^\pq , \obx^\pq \> /\sqrt{d_q} \rb_{q \in [Q]} \rp  \of_{d} (\obx) \rbb \rbb\rbb \\
= &  \E_{\btau_{\eps_0} } \lbb \< \K_{\btau,\btau}^{-1} \T_{\btau} \of_{d}  , \T_{\btau} \of_{d} \>_{L^2} \rbb\\
= & \| f_d \|^2_{L^2},
\end{aligned}
\end{equation}
where we used that for each $\btau \in [1-\eps_0,1+\eps_0]^Q$, we have $\T_{\btau}^* \K_{\btau,\btau}^{-1} \T_{\btau} \vert_{V^\bd_{\cQ}} = \id\vert_{V^\bd_{\cQ}}$.

Let us consider the third term in the expansion \eqref{eq:expansion_squared_loss_RF} around $\ba^*$: the non diagonal term verifies
\[
\begin{aligned}
& \E_{\bTheta_{\eps_0}} \lbb \sum_{i\neq j} \E_{\bx} \lbb a_i^*a_j^* \sigma  ( \< \btheta_i , \bx \> / R ) \sigma  ( \< \btheta_j , \bx \> / R ) \rbb \rbb \\
= & \lp 1 - N^{-1} \rp \E_{\btau_{\eps_0}^1 , \btau_{\eps_0}^2,\obtheta_1 , \obtheta_2} \Big[  \alpha_{\btau^1} ( \obtheta_1)   \alpha_{\btau^2} ( \obtheta_2)  \\
& \phantom{AAAAAA} \times E_{\obx} \Big[ \sigma_{\bd,\btau^1}   \lp \lb \< \obtheta^\pq_1 , \obx^\pq \> / \sqrt{d_q} \rb_{q \in [Q]} \rp  \sigma_{\bd,\btau^2}   \lp \lb \< \obtheta^\pq_2 , \obx^\pq \> / \sqrt{d_q} \rb_{q \in [Q]} \rp \Big] \Big]  \\
= & \lp 1 - N^{-1} \rp \E_{\btau_{\eps_0}^1 , \btau_{\eps_0}^2,\obtheta_1 , \obtheta_2} \Big[  \K^{-1}_{\btau^1,\btau^1} \T_{\btau^1} \of_{d} ( \obtheta_1 )   \K_{\btau^1 , \btau^2 } ( \obtheta_1 , \obtheta_2 ) \K^{-1}_{\btau^2,\btau^2} T_{\btau^2} \of_{d} ( \obtheta_2 ) \Big] \\
= &  \lp 1 - N^{-1} \rp \E_{\btau_{\eps_0}^1, \btau_{\eps_0}^2 } \Big[ \< \K_{\btau^1,\btau^1}^{-1} \T_{\btau^1} \of_{d}  , \K_{\btau^1,\btau^2} \K^{-1}_{\btau^2,\btau^2} \T_{\btau^2} \of_{d} \>_{L^2} \Big].
\end{aligned}
\]
For $\bk \in \cQ$ and $\bs \in [B(\bd , \bk)]$ and $\btau^1,\btau^2 \in [1-\eps_0,1+\eps_0]^Q$, we have (for $d$ large enough)
\[
\begin{aligned}
\T_{\btau^1}^* \K^{-1}_{\btau^1,\btau^1} \K_{\btau^1,\btau^2} \K^{-1}_{\btau^2,\btau^2} \T_{\btau^2}  Y^\bd_{\bk,\bs} =&  \Big(\T_{\btau^1}^* \K^{-1}_{\btau^1,\btau^1} \T_{\btau^1} \Big) \cdot \Big( \T_{\btau^2}^* \K^{-1}_{\btau^2,\btau^2} \T_{\btau^2} \Big) \cdot Y^\bd_{\bk,\bs}   =  Y^\bd_{\bk,\bs}.
\end{aligned}
\]
Hence for any $\btau^1,\btau^2 \in [1-\eps_0,1+\eps_0]^Q$, $\T_{\btau^1}^* \K^{-1}_{\btau^1,\btau^1} \K_{\btau^1,\btau^2} \K^{-1}_{\btau^2,\btau^2} \T_{\btau^2}\vert_{V^\bd_{\cQ}}   = \id\vert_{V^\bd_{\cQ}}$. Hence
\begin{equation}\label{eq:upper_bound_RF_NU_term2}
\E_{\bTheta_{\eps_0}} \lbb \sum_{i\neq j} \E_{\bx} \lbb a_i^*a_j^* \sigma  ( \< \btheta_i , \bx \> / R  ) \sigma  ( \< \btheta_j , \bx \> / R ) \rbb \rbb  = \lp 1 - N^{-1} \rp \| f_d \|^2_{L^2}.
\end{equation}
The diagonal term verifies
\[
\begin{aligned}
 & \E_{\bTheta_{\eps_0}} \lbb \sum_{i \in [N]} \E_{\bx} \lbb (a_i^*)^2  \sigma  ( \< \btheta_i , \bx \> / R  )^2 \rbb \rbb \\ = & N^{-1} \E_{\btau_{\eps_0} , \obtheta} \lbb  \alpha_{\btau} ( \obtheta)^2 K_{\btau,\btau} ( \obtheta , \obtheta ) \rbb  \\
\le & N^{-1} \lbb \max_{\obtheta, \btau \in [1- \eps_0, 1+\eps_0]^Q}  K_{\btau,\btau} ( \obtheta , \obtheta ) \rbb \cdot \E_{\btau_{\eps_0}} [ \| \K^{-1}_{\btau,\btau} \T_{\btau} \of_{d} \|^2_{L^2} ].
\end{aligned}
\]
We have by definition of $\K_{\btau,\btau}$
\[
 \sup_{\btau \in [1- \eps_0,1+\eps_0]^Q} K_{\btau,\btau} ( \obtheta , \obtheta ) = \sup_{\btau \in [1- \eps_0,1+\eps_0]^Q}  \| \sigma_{\bd,\btau} \|^2_{L^2} \leq C,
\]
 for $d$ large enough (using Lemma \ref{lem:bound_gegenbauer_RF_UB}). Furthermore
\[
\begin{aligned}
\| \K^{-1}_{\btau,\btau} \T_{\btau} \of_{d} \|^2_{L^2} = & \sum_{\bk \in \cQ} \frac{1}{\lambda^{\bd}_{\bk} ( \sigma_{\bd,\btau} )^2 } \sum_{\bs \in [B(\bd,\bk)]} \lambda^\bd_{\bk,\bs} ( \of_{d})^2 \\
\le & \lbb \max_{\bk \in \cQ}  \frac{1}{\lambda^{\bd}_{\bk} ( \sigma_{\bd,\btau} )^2 }  \rbb \cdot \| \proj_{\cQ} f_d \|^2_{L^2}.
\end{aligned}
\]
From Lemma \ref{lem:bound_gegenbauer_RF_UB}, we get
\[
\E_{\btau_{\eps_0}} [ \| \K^{-1}_{\btau,\btau} \T_{\btau} \of_{d} \|^2_{L^2} ] \leq C d^{\gamma} \cdot \| \proj_{\cQ}  f_d \|^2_{L^2}.
\]
Hence,
\begin{equation}\label{eq:upper_bound_RF_NU_term3}
 \E_{\bTheta_{\eps_0}} \lbb \sum_{i \in [N]} \E_{\bx} \lbb (a_i^*)^2  \sigma  ( \< \btheta_i , \bx \> / R  )^2 \rbb \rbb \leq C \frac{d^{\gamma}}{N} \| \proj_{\cQ}  f_d \|^2_{L^2}.
\end{equation}

Combining Eq.~\eqref{eq:upper_bound_RF_NU_term1}, Eq.~\eqref{eq:upper_bound_RF_NU_term2} and Eq.~\eqref{eq:upper_bound_RF_NU_term3}, we get
\[
\begin{aligned}
& \E_{\bTheta_{\eps_0}} [ R_{\RF}(f_d , \bTheta) ]\\
 \leq & \E_{\bTheta_{\eps_0}} \lbb \E_{\bx} \lbb \lp f_d (\bx) - \hat f (\bx; \bTheta, \ba^* (\bTheta) )\rp^2 \rbb \rbb \\ 
=& \| f_d \|_{L^2}^2 - 2 \| f_d \|_{L^2}^2 + (1 - N^{-1})  \| f_d \|_{L^2}^2 +N^{-1} \E_{\btau_{\eps_0} , \obtheta} \Big[  (\alpha_{\btau} ( \obtheta) )^2 K_{\btau,\btau} ( \obtheta , \obtheta ) \Big]  \\
 \le &C \frac{d^{\gamma}}{N} \| \proj_{\cQ}  f_d \|^2_{L^2}.
\end{aligned}
\]
By Markov's inequality, we get for any $\eps >0$ and $d$ large enough,
\[
\P ( R_{\RF}(f_d , \bTheta) > \eps \cdot \| f_d \|_{L^2}^2 ) \leq \P ( \lbrace R_{\RF}(f_d , \bTheta) > \eps \cdot  \| f_d \|_{L^2}^2 \rbrace \cap \cP_{\eps_0} )  + \P (\cP_{\eps_0}^c ) \leq C '  \frac{d^{\gamma}}{N } + \P (\cP_{\eps_0}^c ).
\]
The assumption $N = \omega_d ( d^{\gamma} )$ and Lemma \ref{lem:bound_proba_cPeps} conclude the proof.

\clearpage

\section{Proof of Theorem \ref{thm:NT_lower_upper_bound_aniso}.(a): lower bound for \NT\,model}
\label{sec:proof_NTK_lower_aniso}

\subsection{Preliminaries}

We consider the activation function $\sigma : \R \to \R$ with weak derivative $\sigma'$. Consider $\sigma_{\bd,\btau}' : \ps^\bd \to \R$ defined as follows
\begin{equation}\label{eq:def_sigma_d_tau_prime}
\begin{aligned}
\sigma '( \< \btheta , \bx \> / R ) = & \sigma ' \lp \sum_{q \in [Q]}  \tau^\pq \cdot (r_q/R) \cdot  \< \obtheta^\pq , \obx^\pq \> / \sqrt{d_q} \rp \\
\equiv & \sigma_{\bd,\btau} ' \lp \lb \< \obtheta^\pq , \obx^\pq \> / \sqrt{d_q} \rb_{q \in [Q]} \rp.
\end{aligned}
\end{equation}
Consider the expansion of $\sigma_{\bd,\btau} '$ in terms of product of Gegenbauer polynomials. We have
\begin{equation}\label{eq:sigma_prime_gegenbauer_expansion}
\sigma ' (\<\btheta , \bx \>/ R ) =  \sum_{\bk \in \posint^Q} \lambda^{\bd}_{\bk} (\sigma_{\bd,\btau} ' ) B(\bd,\bk) Q^{\bd}_{\bk} \lp \lb \< \obtheta^\pq , \obx^\pq \> \rb_{q \in [Q]} \rp, 
\end{equation}
where
\[
\lambda^{\bd}_{\bk} (\sigma_{\bd,\btau} ' ) =  \E_{\obx} \Big[ \sigma_{\bd,\btau} ' \lp \ox_1^{(1)} , \ldots , \ox^{(Q)}_1 \rp  Q^{\bd}_{\bk} \Big( \sqrt{d_1}\ox_1^{(1)} , \ldots , \sqrt{d_Q} \ox_1^{(Q)} \Big)  \Big],
\]
where the expectation is taken over $\obx = (\obx^{(1)} , \ldots , \obx^{(Q)} ) \sim \mu_{\bd}$.

\begin{lemma}\label{lem:upper_bound_A}
Let $\sigma$ be an activation function that satisfies Assumptions \ref{ass:activation_lower_upper_NT_aniso}.$(a)$ and \ref{ass:activation_lower_upper_NT_aniso}.$(b)$. Define for $\bk \in \posint^Q$ and $\btau\in \R_{\ge 0}^Q$, 
\begin{equation}\label{eq:def_A_lemma}
A^{\pq}_{\btau,\bk} = r_q^2 \cdot [ t_{d_q, k_q - 1} \lambda^{\bd}_{\bk_{q-}}(\sigma'_{\bd,\btau})^2 B(\bd, \bk_{q-} ) + s_{d_q, k_q+1} \lambda^{\bd}_{\bk_{q+}}(\sigma'_{\bd,\btau})^2 B(\bd, \bk_{q+} )],
\end{equation}
with $\bk_{q+} = (k_1 , \ldots , k_q+1 , \ldots , k_Q)$ and $\bk_{q-} = (k_1 , \ldots , k_q-1 , \ldots , k_Q)$, and 
\[
s_{d, k} = \frac{k}{2k + d - 2}, \qquad t_{d, k} = \frac{k + d - 2}{2k + d - 2},
\]
with the convention $t_{d,-1} = 0$.
Then there exists constants $\eps_0 >0$ and $C>0$ such that for $d$ large enough, we have for any $\btau \in [1 - \eps_0 , 1+ \eps_0]^Q$ and $\bk \in \cQ_\NT (\gamma)^c$,
\[
\frac{A^{\pq}_{\btau,\bk}}{B(\bd , \bk)}  \leq \begin{dcases}
 C d^\xi d^{- \gamma - (\xi - \min_{q \in S(\bk)} \kappa_q) } & \mbox{ if $k_q > 0$,}\\
 C d^{\eta_q + 2 \kappa_q - \xi} d^{- \gamma - (\xi - \min_{q \in S(\bk)} \kappa_q)} & \mbox{ if $k_q = 0$,}
\end{dcases}
\]
where we recall $S(\bk) \subset [Q]$ is the subset of indices corresponding to the non zero integers $k_q > 0$.
\end{lemma}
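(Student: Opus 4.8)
Here is a plan for proving Lemma \ref{lem:upper_bound_A}.

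The plan is to first rewrite $A^{\pq}_{\btau,\bk}/B(\bd,\bk)$ as a clean combination of the Gegenbauer coefficients of $\sigma_{\bd,\btau}'$ at the neighbouring degrees $\bk_{q-}$ and $\bk_{q+}$, and then to bound those coefficients with exactly the machinery already developed for the $\RF$ model. The elementary ingredient is the Gegenbauer reciprocity $s_{d,k}\,B(d,k)=t_{d,k-1}\,B(d,k-1)$, which follows from the recursion \eqref{eq:RecursionG} (equivalently from $B(d,k)=\tfrac{2k+d-2}{k}\binom{k+d-3}{k-1}$, both sides equalling $\binom{k+d-3}{k-1}$). Applying it in the $q$-th sphere gives $t_{d_q,k_q-1}B(\bd,\bk_{q-})=s_{d_q,k_q}B(\bd,\bk)$ and $s_{d_q,k_q+1}B(\bd,\bk_{q+})=t_{d_q,k_q}B(\bd,\bk)$, so that
\[
\frac{A^{\pq}_{\btau,\bk}}{B(\bd,\bk)}=r_q^2\Big[s_{d_q,k_q}\,\lambda^{\bd}_{\bk_{q-}}(\sigma_{\bd,\btau}')^2+t_{d_q,k_q}\,\lambda^{\bd}_{\bk_{q+}}(\sigma_{\bd,\btau}')^2\Big].
\]
Here $s_{d_q,k_q}=0$ when $k_q=0$ (only the $\bk_{q+}$-term survives then), while in general $s_{d_q,k_q}\le C\min(1,k_q/d_q)$, $t_{d_q,k_q}\le 1$, $r_q^2=d^{\eta_q+\kappa_q}$, and $B(\bd,\bk)\asymp\prod_q d^{\eta_q k_q}$ for bounded $|\bk|$.

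Next I would import the Gegenbauer coefficient estimates for $\sigma_{\bd,\btau}'$. Because Assumption \ref{ass:activation_lower_upper_NT_aniso}.(a),(b) on $\sigma$ impose on $\sigma'$ exactly the conditions required in the $\RF$ analysis, Lemmas \ref{lem:convergence_proba_Gegenbauer_coeff} and \ref{lem:convergence_Gegenbauer_coeff_0_l} (as used in the proofs of Lemmas \ref{lem:bound_gegenbauer_RF_LB} and \ref{lem:bound_gegenbauer_RF_UB}) apply verbatim to $\sigma_{\bd,\btau}'$. This produces $\eps_0>0$, $d_0$ and $C$ such that for $d\ge d_0$ and $\btau\in[1-\eps_0,1+\eps_0]^Q$: (i) for every $\bk'$ with $|\bk'|\le L$, $B(\bd,\bk')\lambda^{\bd}_{\bk'}(\sigma_{\bd,\btau}')^2\le C\prod_q d^{(\eta_q+\kappa_q-\xi)k'_q}$, hence $\lambda^{\bd}_{\bk'}(\sigma_{\bd,\btau}')^2\le C\prod_q d^{-(\xi-\kappa_q)k'_q}$; and (ii) for every $\bk'$ with $|\bk'|>L$, $\lambda^{\bd}_{\bk'}(\sigma_{\bd,\btau}')^2\le C/B(\bd,\bk')$ (from $L^2$-boundedness of $\sigma_{\bd,\btau}'$ via Assumption \ref{ass:activation_lower_upper_NT_aniso}.(a) and Lemma \ref{lem:coupling_convergence_Gaussian}.(b)), which, using $\xi-\kappa_q\ge\eta_q\ge\gamma/L$, in particular gives $\lambda^{\bd}_{\bk'}(\sigma_{\bd,\btau}')^2\le C\prod_q d^{-\eta_q k'_q}\le C\,d^{-\gamma}$.

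Then I would carry out the exponent bookkeeping. Write $\Sigma(\bk)\equiv\sum_q(\xi-\kappa_q)k_q$ and $\beta\equiv\xi-\min_{q'\in S(\bk)}\kappa_{q'}$; by the definition \eqref{eq:def_cQ_NT}, $\bk\notin\cQ_\NT(\gamma)$ means $\Sigma(\bk)\ge\gamma+\beta$. For the $\bk_{q+}$-term in the regime $|\bk_{q+}|\le L$, estimate (i) gives $\lambda^{\bd}_{\bk_{q+}}(\sigma_{\bd,\btau}')^2\le C\,d^{-\Sigma(\bk)-(\xi-\kappa_q)}$, hence $r_q^2 t_{d_q,k_q}\lambda^{\bd}_{\bk_{q+}}(\sigma_{\bd,\btau}')^2\le C\,d^{\eta_q+2\kappa_q-\xi-\Sigma(\bk)}\le C\,d^{\eta_q+2\kappa_q-\xi}\,d^{-\gamma-\beta}$; since $\eta_q+2\kappa_q-\xi\le\kappa_q<\xi$ (using $\eta_q>0$ and $\eta_q+\kappa_q\le\xi$) this is $\le C\,d^{\xi}d^{-\gamma-\beta}$ when $k_q>0$, and already has the claimed form $C\,d^{\eta_q+2\kappa_q-\xi}d^{-\gamma-\beta}$ when $k_q=0$. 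For the $\bk_{q-}$-term (present only if $k_q>0$) in the regime $|\bk_{q-}|\le L$, estimate (i) together with $s_{d_q,k_q}\le C k_q/d_q$ gives $r_q^2 s_{d_q,k_q}\lambda^{\bd}_{\bk_{q-}}(\sigma_{\bd,\btau}')^2\le C\,k_q\,d^{\xi-\Sigma(\bk)}\le C\,k_q\,d^{\xi}d^{-\gamma-\beta}$, where $k_q$ is bounded in this regime. Collecting the two contributions in the two cases $k_q>0$, $k_q=0$ yields the stated bound for all $\bk$ of degree up to roughly $L$.

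The main obstacle is the complementary regime where $\bk_{q+}$ (or $\bk_{q-}$) has degree above $L$, so that only the cruder bound (ii) is available. There one must combine $\lambda^{\bd}_{\bk'}(\sigma_{\bd,\btau}')^2\le C/B(\bd,\bk')$ — which for $|\bk'|>L$ already forces $\sum_q\eta_q k'_q>\gamma$ — with the membership constraint $\Sigma(\bk)\ge\gamma+\beta$ and the structural inequalities $\eta_q\le\xi-\kappa_q$, $\kappa_q<\xi$, while keeping track of the mild combinatorial factors (a $k_q$ from $s_{d_q,k_q}$ and $|\bk|!$-type factors from the ratios $B(\bd,\bk_{q\pm})/B(\bd,\bk)$). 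For degrees large compared with $\max_q d_q$ the factor $1/B(\bd,\bk')$ is super-polynomially small and swamps $r_q^2$, so the bound is trivial; the genuinely delicate case is moderate degree just above $L$, and this is where the bulk of the estimates in the full proof will be spent. Throughout, $\eps_0$ is taken as the minimum of the constants produced above.
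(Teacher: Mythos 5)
Your proposal takes essentially the same route as the paper: decompose $A^{\pq}_{\btau,\bk}/B(\bd,\bk)$ into the $\bk_{q-}$ and $\bk_{q+}$ contributions, bound the Gegenbauer coefficients of $\sigma'_{\bd,\btau}$ via the coupling lemmas already in place for the RF analysis (Lemma \ref{lem:convergence_proba_Gegenbauer_coeff} and, for high degree, the $L^2$ bound from Lemma \ref{lem:coupling_convergence_Gaussian}), and then track exponents against the membership constraint $\Sigma(\bk)\ge\gamma+\beta$. The one genuine improvement over the paper's write-up is your exact Gegenbauer reciprocity $s_{d,k}B(d,k)=t_{d,k-1}B(d,k-1)$, which yields the clean identity $A^{\pq}_{\btau,\bk}/B(\bd,\bk)=r_q^2\big[s_{d_q,k_q}\lambda^\bd_{\bk_{q-}}(\sigma')^2+t_{d_q,k_q}\lambda^\bd_{\bk_{q+}}(\sigma')^2\big]$; the paper reaches the same numbers by quoting the asymptotics $s_{d_q,k_q}\le c\,d^{-\eta_q}$, $t_{d_q,k_q}\le c$, so your version is a sharper and more transparent presentation of the same step. (This also immediately surfaces the typo in the paper's display for the $k_q>0$ case, where the $d^{-\eta_q}$ from $s_{d_q,k_q}$ is omitted in the intermediate expression even though the final exponent is correct.)

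On the high-degree regime you flag as the ``main obstacle'': the paper dispatches it more cursorily than you anticipate. It simply fixes an integer $M$ large enough that $\cQ_\NT(\gamma)\subset[M]^Q$, applies the fine per-component bound for $\bk\in\cQ^c\cap[M]^Q$, and for $\bk\notin[M]^Q$ falls back on $\max\{\lambda^\bd_{\bk}{}^2,\lambda^\bd_{\bk_{q\pm}}{}^2\}\le C\,d^{-(M-1)\min_q\eta_q}$ (since some coordinate of each of $\bk,\bk_{q-},\bk_{q+}$ then exceeds $M-1$, and $B(d,k)$ is nondecreasing in $k$), which after multiplying by $r_q^2\,s,t\le r_q^2$ gives $C\,d^{\eta_q+\kappa_q-(M-1)\min_q\eta_q}$, dominated by the target bound once $M$ is large. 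That is, the control over the moderate-degree case comes from enlarging $M$ rather than from a delicate intermediate estimate; you should simply adopt this split. Note, however, that your careful worry is not baseless: the paper's phrasing ``following the same proof as Lemma \ref{lem:bound_gegenbauer_RF_LB}'' tacitly assumes that all $\bk\in\cQ^c\cap[M]^Q$ (and their neighbours $\bk_{q\pm}$) satisfy $|\bk|\le L$ so that Lemma \ref{lem:convergence_proba_Gegenbauer_coeff} applies; for this to be literally true one either needs $L$ to be chosen at least of order $QM$ (not merely $\max_q\lceil\gamma/\eta_q\rceil$) or one needs an additional argument in the band $L<|\bk|\le QM$, and the paper does not spell this out. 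Your plan correctly identifies the spot where extra care is warranted, even though the paper resolves it by a coarser device than you expect.
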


\begin{proof}[Proof of Lemma \ref{lem:upper_bound_A}]
Let us fix an integer $M$ such that $\cQ \subset [M]^Q$. We will denote $\cQ \equiv \cQ_{\NT} (\gamma)$ for simplicity. Following the same proof as in Lemma \ref{lem:bound_gegenbauer_RF_LB}, there exists $\eps_0>0$, $d_0$ and $C >0$ such that for any $d\geq d_0$ and $\btau \in [1 - \eps_0 , 1+ \eps_0]^Q$, we have for any $\bk \in \cQ^c \cap [M]^Q$,
\[
\begin{aligned}
 \lambda^\bd_{\bk} (\sigma_{\bd,\btau} ')^2 \leq & C d^{-\gamma - (\xi - \min_{q \in S(\bk)}  \kappa_q )}, \\
  \lambda^\bd_{\bk_{q-}} (\sigma_{\bd,\btau}')^2 \leq & C d^{\xi - \kappa_q -\gamma - (\xi - \min_{q \in S(\bk)}  \kappa_q ) }, \\
   \lambda^\bd_{\bk_{q+}} (\sigma_{\bd,\btau} ')^2 \leq & C d^{\kappa_q - \xi -\gamma - (\xi - \min_{q \in S(\bk)}  \kappa_q )},
 \end{aligned}
\]
while for $\bk \not\in [M]^Q$, we get
\[
\max \lb  \lambda^\bd_{\bk} (\sigma_{\bd,\btau} ')^2  ,  \lambda^\bd_{\bk_{q-}} (\sigma_{\bd,\btau} ')^2 , \lambda^\bd_{\bk_{q+}} (\sigma_{\bd,\btau} ')^2 \rb \leq C d^{- (M -1) \min_{q\in[Q]} \eta_q }.
\]
Injecting this bound in the formula \eqref{eq:def_A_lemma} of $\bA^\pq_{\btau,\bk}$, we get for $d\geq d_0$, $\btau \in [1 - \eps_0 , 1+ \eps_0]^Q$ and any $\bk \in \cQ^c \cap [M]^Q$: if $k_q >0$,
\[
\frac{A^{\pq}_{\btau,\bk}}{B(\bd , \bk)} \leq C' d^{\eta_q + \kappa_q} d^{ \xi - \kappa_q -\gamma - (\xi - \min_{q \in S(\bk)}  \kappa_q )} = C ' d^{\xi} d^{- \gamma - (\xi - \min_{q \in S(\bk)} \kappa_q) },
\]
while for $k_q = 0$,
\[
\frac{A^{\pq}_{\btau,\bk}}{B(\bd , \bk)} \leq C' d^{\eta_q + \kappa_q} d^{- \eta_q} d^{ \kappa_q +\eta_q - \xi - \gamma - (\xi - \min_{q \in S(\bk)}  \kappa_q )} = C ' d^{\eta_q + 2 \kappa_q - \xi} d^{- \gamma - (\xi - \min_{q \in S(\bk)} \kappa_q) },
\]
where we used that for $k_q \in [M]$, there exists a constant $c>0$ such that $s_{d_q , k_q} \leq c d^{-\eta_q}$ and $t_{d_q, k_q} \leq c$.
Similarly, we get for $\bk \not\in [M]^Q$
\[
\frac{A^{\pq}_{\btau,\bk}}{B(\bd , \bk)} \leq C'' d^{\kappa_q + \eta_q - (M-1) \min_{q \in [Q]} \eta_q },
\]
where we used that $s_{d_q , k} , t_{d_q , k} \leq 1$ for any $k \in \posint$. Taking $M$ sufficiently large yields the result.

\end{proof}

\subsection{Proof of Theorem \ref{thm:NT_lower_upper_bound_aniso}.(a): Outline}

The structure of the proof for the \NT\, model is the same as for the \RF\, case, however some parts of the proof requires more work.

We define  the random vector $\bV = (\bV_1, \ldots, \bV_N)^\sT\in\reals^{Nd}$, where, for each $j\le N$, $\bV_j\in\reals^D$, and analogously
$\bV_{\cQ} = (\bV_{1, \cQ}, \ldots, \bV_{N,\cQ})^\sT\in\reals^{ND}$, $\bV_{\cQ^c} = (\bV_{1, \cQ^c}, \ldots, \bV_{N, \cQ^c })^\sT\in\reals^{ND}$, as follows
\[
\begin{aligned}
\bV_{i, \cQ} =& \E_{\bx}[[\proj_{\cQ} f_d](\bx) \sigma'(\< \btheta_i, \bx\>/R) \bx],\\
\bV_{i, \cQ^c } =& \E_{\bx}[[\proj_{\cQ^c} f_d](\bx) \sigma'(\< \btheta_i, \bx\>/R) \bx],\\
\bV_i =& \E_{\bx}[f_d(\bx) \sigma'(\< \btheta_i, \bx\>/R) \bx] = \bV_{i, \cQ } + \bV_{i, \cQ^c}. \\
\end{aligned}
\]
We define the random matrix $\bU = (\bU_{ij})_{i, j \in [N]}\in\reals^{ND\times ND}$, where for each $i,j\le N$, $\bU_{ij}\in\reals^{D\times D}$, 
is given by
\begin{align}
\bU_{ij} = \E_{\bx}[\sigma'(\< \bx, \btheta_i\>/R) \sigma'(\< \bx, \btheta_j\>/R) \bx \bx^\sT]. 
\label{eq:NT-Kernel}
\end{align}
Proceeding as for the \RF\, model, we obtain 
\[
\begin{aligned}
& \Big\vert R_{\NT}(f_d) - R_{\NT}(\proj_{\cQ} f_d) - \| \proj_{\cQ^c} f_d \|^2_{L^2} \Big\vert \\
=& \Big\vert \bV_{\cQ}^\sT \bU^{-1} \bV_{\cQ} - \bV^\sT \bU^{-1} \bV \Big\vert = \Big\vert \bV_{\cQ}^\sT \bU^{-1} \bV_{\cQ} - (\bV_{\cQ} + \bV_{\cQ^c})^\sT \bU^{-1} (\bV_{\cQ} + \bV_{\cQ^c}) \Big\vert\\
=& \Big\vert 2 \bV^\sT \bU^{-1} \bV_{\cQ^c} - \bV_{\cQ^c}^\sT \bU^{-1} \bV_{\cQ^c} \Big\vert \\
\le&  2  \| \bU^{-1/2} \bV_{\cQ^c} \|_2 \| f_d \|_{L^2}+  \bV_{\cQ^c}^\sT \bU^{-1} \bV_{\cQ^c}.
\end{aligned}
\]

We claim that we have
\begin{align}
\| \bU^{-1/2} \bV_{\cQ^c} \|_2^2 = \bV_{\cQ^c}^\sT \bU^{-1} \bV_{\cQ^c} =& o_{d,\P}(\| \proj_{\cQ^c} f_d \|_{L^2}^2 ), \label{eqn:bound_VUV_NTK_NU}.
\end{align}

To show this result, we will need the following two propositions.

\begin{proposition}[Expected norm of $\bV$]\label{prop:expected_V_NTK_NU}
Let $\sigma$ be a weakly differentiable activation function with weak derivative $\sigma'$ and $\cQ \subset \posint^Q$. Let $\eps >0$ and define $\cE^\pq_{\cQ^c,\eps}$ by
\[
\begin{aligned}
\cE^\pq_{\cQ^c, \eps} \equiv & \E_{\btheta_\eps}\Big[ \< \E_\bx[ [ \proj_{\cQ^c} f_d] (\bx) \sigma'(\<\btheta, \bx\>/R) \bx^\pq], \E_\bx[ [\proj_{\cQ^c} f_d] (\bx) \sigma'(\<\btheta, \bx\>/R) \bx^\pq] \>\Big], 
\end{aligned}
\]
where the expectation is taken with respect to $\bx = ( \bx^{(1)} , \ldots , \bx^{(Q)} ) \sim \mu_\bd^\bkappa$.
Then, 
\[
\begin{aligned}
\cE^\pq_{\cQ^c,\eps_0} \le \lbb \max_{\bk \in \cQ^c}  B(\bd,\bk)^{-1} \E_{\btau_{\eps_0}} [A_{\btau,\bk}^{\pq}  ]    \rbb \cdot \| \proj_{\cQ^c}  f_{d} \|_{L^2}^2 \, .
\end{aligned}
\]
\end{proposition}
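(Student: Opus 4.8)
The plan is to follow the same strategy as the proof of Proposition~\ref{prop:bound_V_RFK_NU} for the \RF\,model, the only new ingredient being the coordinate vector $\bx^\pq$, which makes multiplication by a coordinate couple spherical harmonics of adjacent degrees. First I would pass to the normalized product of spheres: writing $\bx^\pq=(r_q/\sqrt{d_q})\,\obx^\pq$ with $\obx\sim\mu_\bd$, and using the Gegenbauer expansion~\eqref{eq:sigma_prime_gegenbauer_expansion} of $\sigma'_{\bd,\btau}$ together with the product properties of Lemma~\ref{lem:prod_gegenbauer_prop} (in particular \eqref{eq:ProdGegenbauerHarmonics}) to carry out the expectation over $\obtheta$, the kernel $\E_{\obtheta}[\sigma'_{\bd,\btau}(\cdot)\sigma'_{\bd,\btau}(\cdot)]$ becomes block diagonal in the degree, so that all contributions from distinct degrees decouple and one is left with the exact identity
\[
\cE^\pq_{\cQ^c,\eps}=\frac{r_q^2}{d_q}\,\E_{\btau_\eps}\Big[\sum_{\bm\in\posint^Q}\lambda^\bd_\bm(\sigma'_{\bd,\btau})^2\sum_{j=1}^{d_q}\big\|\proj_\bm\big[[\proj_{\cQ^c}\of_d]\cdot\obx^\pq_j\big]\big\|_{L^2}^2\Big].
\]

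Write $\proj_{\cQ^c}\of_d=\sum_{\bk\in\cQ^c}g_\bk$ with $g_\bk=\proj_\bk\of_d$. Multiplying by $\obx^\pq_j$ changes only the $q$-th degree and only by $\pm1$, so $\proj_\bm\big[[\proj_{\cQ^c}\of_d]\cdot\obx^\pq_j\big]$ receives a contribution only from the degree-raising part of $g_{\bm_{q-}}\cdot\obx^\pq_j$ (when $\bm_{q-}\in\cQ^c$) and from the degree-lowering part of $g_{\bm_{q+}}\cdot\obx^\pq_j$ (when $\bm_{q+}\in\cQ^c$). Expanding the square over $j$ produces the two diagonal terms and a cross term of the form $2\sum_{j}\<a_j^{\dagger}u,\,a_j v\>$ (tensored with a fixed inner product in the remaining coordinates), where $a_j^{\dagger},a_j$ are the degree-raising/lowering operators $u\mapsto\proj_\uparrow(\obx^\pq_j u)$, $v\mapsto\proj_\downarrow(\obx^\pq_j v)$ on the $q$-th sphere, adjoint to one another, $u$ of $q$-degree $m_q-1$ and $v$ of $q$-degree $m_q+1$. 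The crucial observation is that $\sum_{j}\<a_j^\dagger u, a_j v\>=\<u,(\sum_j a_j^2)v\>$, and $\sum_{j=1}^{d_q}a_j^2$ is the degree-lowering-by-two component of multiplication by the constant $\sum_{j=1}^{d_q}(\obx^\pq_j)^2=\|\obx^\pq\|_2^2=d_q$, hence vanishes; so the cross term is $0$. Reindexing the sum over $\bm$ by $\bk$ (where $g_\bk$ plays the role of $g_{\bm_{q+}}$ for $\bm=\bk_{q-}$ and of $g_{\bm_{q-}}$ for $\bm=\bk_{q+}$) therefore turns the identity into a sum of fully decoupled contributions of the $g_\bk$.

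It then remains to compute the diagonal Parseval constants. For $g\in V^\bd_\bk$, the map $g\mapsto(\proj_{\bk_{q-}}[g\cdot\obx^\pq_j])_{j\le d_q}$ is equivariant under rotations of the $q$-th sphere, so $\sum_j\|\proj_{\bk_{q-}}[g\cdot\obx^\pq_j]\|_{L^2}^2$ is proportional to $\|g\|_{L^2}^2$ by Schur's lemma ($V^\bd_\bk$ being irreducible in the $q$-th factor), and the Gegenbauer recurrence~\eqref{eq:RecursionG} together with $B(\bd,\bk)Q^\bd_\bk=\sum_\bs Y^\bd_{\bk,\bs}Y^\bd_{\bk,\bs}$ and a trace computation identifies the constant as $d_q s_{d_q,k_q}=d_q\,t_{d_q,k_q-1}B(\bd,\bk_{q-})/B(\bd,\bk)$; the analogous statement with $t_{d_q,k_q}=s_{d_q,k_q+1}B(\bd,\bk_{q+})/B(\bd,\bk)$ holds for the raising part. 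Substituting, the factor $\frac{r_q^2}{d_q}$ absorbs the $d_q$'s and the bracket becomes exactly $B(\bd,\bk)^{-1}A^\pq_{\btau,\bk}$, giving $\cE^\pq_{\cQ^c,\eps}=\E_{\btau_\eps}\big[\sum_{\bk\in\cQ^c}B(\bd,\bk)^{-1}A^\pq_{\btau,\bk}\,\|g_\bk\|_{L^2}^2\big]$; pulling out $\max_{\bk\in\cQ^c}B(\bd,\bk)^{-1}\E_{\btau_\eps}[A^\pq_{\btau,\bk}]$ and using $\sum_{\bk\in\cQ^c}\|g_\bk\|_{L^2}^2=\|\proj_{\cQ^c}f_d\|_{L^2}^2$ yields the claim. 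The main obstacle is the second paragraph — correctly recognizing that the adjacent-degree cross terms cancel exactly (rather than merely being controlled up to a constant, which would lose a factor) via the constancy of $\sum_j(\obx^\pq_j)^2$ — together with the recurrence bookkeeping in the third paragraph with its precise normalization constants; everything else is routine orthogonality, as in the \RF\,case.
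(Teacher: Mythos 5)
Your proposal is correct, and it re-derives the exact identity
\[
\cE^\pq_{\cQ^c,\eps}=\E_{\btau_\eps}\Big[\sum_{\bk\in\cQ^c}\frac{A^\pq_{\btau,\bk}}{B(\bd,\bk)}\,\|\proj_\bk f_d\|_{L^2}^2\Big]
\]
from which the claimed bound is immediate, as in the paper. The underlying algebra (Gegenbauer recurrence, orthogonality) is the same in both arguments, but you organize it differently. The paper integrates out $\obtheta$ first to form the rotationally invariant kernel
\[
H^\pq_\btau(\obx,\oby)=\E_{\obtheta}\big[\sigma'_{\bd,\btau}(\cdots)\sigma'_{\bd,\btau}(\cdots)\big]\<\bx^\pq,\by^\pq\>,
\]
expands it in tensor-product Gegenbauer polynomials via Lemma \ref{lem:formula_NT_Kernel} (where the coefficients $A^\pq_{\btau,\bk}$ are read off from the recurrence \eqref{eq:RecursionG}), and then simply contracts against the spherical-harmonic decomposition of $\proj_{\cQ^c}\of_d$; since $Q^\bd_\bk(\<\obx,\cdot\>)$ already projects onto $V^\bd_\bk$, the adjacent-degree cross terms never appear. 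You instead leave the $d_q$ coordinates intact, view $\bar x_j^\pq$ as raising/lowering operators between adjacent $V^\bd_\bk$'s, and must then kill the cross terms explicitly by the nice observation $\sum_{j} a_j^2 = \proj_{\downarrow\downarrow}\big(\sum_j(\bar x_j^\pq)^2\,\cdot\big)=d_q\proj_{\downarrow\downarrow}(\id)=0$, after which Schur's lemma and the recurrence give the Parseval constants $d_q\,t_{d_q,k_q-1}B(\bd,\bk_{q-})/B(\bd,\bk)$ and $d_q\,s_{d_q,k_q+1}B(\bd,\bk_{q+})/B(\bd,\bk)$, which reassemble to $B(\bd,\bk)^{-1}A^\pq_{\btau,\bk}$ via the identity $s_{d,k}/B(d,k-1)=t_{d,k-1}/B(d,k)$. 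The paper's route is shorter because forming the kernel first hides the cross-term cancellation in the block-diagonality of $H^\pq_\btau$; your route is slightly more work but makes the raising/lowering structure explicit, and would generalize more directly to a setting where the weight $\bx^\pq$ is replaced by a polynomial of higher degree.
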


\begin{proposition}[Lower bound on the kernel matrix]\label{prop:kernel_lower_bound_NTK_NU}
Let $ N= o_d(d^{\gamma})$ for some $\gamma > 0$, and  $(\btheta_i)_{i \in [N]} \sim \Unif(\S^{D-1}(\sqrt D))$ independently. Let $\sigma$ be an activation that satisfies
 Assumptions \ref{ass:activation_lower_upper_NT_aniso}.$(a)$ and  \ref{ass:activation_lower_upper_NT_aniso}.$(b)$. 
Let $\bU \in \R^{ND \times ND}$ be the  kernel matrix with $i,j$ block $\bU_{ij}\in\reals^{D\times D}$ defined by Eq.~\eqref{eq:NT-Kernel}. Then there exists two matrices $\bD$ and $\bDelta$ such that 
\[
\begin{aligned}
\bU \succeq & \bD + \bDelta,
\end{aligned}
\]
with $\bD = \diag ( \bD_{ii} )$ block diagonal. Furthermore, $\bD$ and $\bDelta$ verifies the following properties:
\begin{enumerate}

\item[$(a)$] $\| \bDelta \|_{\op} = o_{d,\P} ( d^{- \max_{q \in [Q]} \kappa_q} ) $

\item[$(b)$] For each $i \in [N]$, we can decompose the matrix $\bD_{ii}$ into block matrix form $( \bD_{ii}^{qq'} )_{q,q' \in [Q]} \in \R^{DN \times DN}$ with $\bD_{ii}^{qq'} \in \R^{d_q N\times d_{q'} N}$ such that
\begin{itemize}
\item For any $q \in [Q]$, there exists constants $c_q,C_q >0$ such that we have with high probability 
\begin{equation}\label{eq:property_D_diagonal}
 0 < c_q \frac{r_q^2}{d_q} = c_q d^{\kappa_q} \leq \min_{i \in [N]} \lambda_{\min} (\bD_{ii}^{qq}) \leq \max_{i \in [N]} \lambda_{\max} (\bD_{ii}^{qq}) \leq C_q  \frac{r_q^2}{d_q} = C_q d^{\kappa_q} < \infty,
\end{equation}
as $d \to \infty$.
\item For any $q \neq q' \in [Q]$, we have 
\begin{equation}\label{eq:property_D_non_diagonal}
\max_{i \in [N]} \sigma_{\max} ( \bD_{ii}^{qq'} ) = o_{d,\P} (r_q r_{q'} / \sqrt{d_q d_{q'}}).
\end{equation}
\end{itemize}
\end{enumerate}
\end{proposition}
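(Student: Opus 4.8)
The plan is to mirror the proof of Proposition~\ref{prop:kernel_lower_bound_RFK_NU} for $\RF$, together with the corresponding argument for the isotropic $\NT$ kernel in \cite{ghorbani2019linearized}: isolate a positive definite ``core'' of $\bU$ carried by a few high-degree Gegenbauer components of $\sigma'$, but now push the extra factor $\bx\bx^\sT$ through the recursion~\eqref{eq:RecursionG}. Throughout I restrict to the high-probability event $\cP_{\eps_0}$ of~\eqref{eq:def_P_eps} and Lemma~\ref{lem:bound_proba_cPeps}, with $\eps_0$ small enough that the Gegenbauer-coefficient estimates of Lemmas~\ref{lem:convergence_Gegenbauer_coeff_0_l} and~\ref{lem:coupling_convergence_Gaussian} hold uniformly over $\btau_i\in[1-\eps_0,1+\eps_0]^Q$.

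First I would obtain the lower bound by $L^2$-orthogonality. For $i\in[N]$, $q\in[Q]$, view $\phi_i^\pq:\bx\mapsto\sigma'(\<\btheta_i,\bx\>/R)\,\bx^\pq$ as an element of $L^2(\PS^\bd_\bkappa,\mu_\bd^\bkappa)^{d_q}$, and let $\Phi(\bx)\in\R^{ND}$ stack the $\phi_i^\pq(\bx)$. Take the two degrees $k_1,k_2$ of Assumption~\ref{ass:activation_lower_upper_NT_aniso}.(b) (which are required to be sufficiently large), and for each $q\neq q_\xi$ a sufficiently large degree $\bar m_q$ with $\mu_{\bar m_q}(\sigma')\neq0$. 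Let $\Pi^{(q_\xi)}$ project onto vector-valued functions supported on multidegrees $\bk$ with $k_{q_\xi}\in\{k_1-1,k_1+1,k_2-1,k_2+1\}$ and $k_{q'}=0$ for $q'\neq q_\xi$, and for $q\neq q_\xi$ let $\Pi^\pq$ project onto multidegrees with $k_q=1$, $k_{q_\xi}=\bar m_q$, and $k_{q'}=0$ otherwise. Writing $\Pi\Phi$ for the block-projected feature and using $\Phi-\Pi\Phi\perp\Pi\Phi$,
\[
\bU=\E_\bx[\Phi(\bx)\Phi(\bx)^\sT]\ \succeq\ \E_\bx[(\Pi\Phi)(\bx)\,(\Pi\Phi)(\bx)^\sT]=:\tilde\bU .
\]
The ranges of $\Pi^\pq$ for distinct $q$ sit on disjoint nonzero multidegrees, so $\tilde\bU$ is block diagonal across $q$; set $\bD:=\diag_i(\tilde\bU_{ii})$ and $\bDelta:=\tilde\bU-\bD$. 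Then $\bD$ is block diagonal in the neuron index, $\bU\succeq\bD+\bDelta$, and for $q\neq q'$ the sub-block $\bD_{ii}^{qq'}$ pairs functions from orthogonal subspaces, so $\bD_{ii}^{qq'}=0$ and~\eqref{eq:property_D_non_diagonal} is trivial.

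Next I would compute the diagonal sub-blocks. For $q\neq q_\xi$, projecting $\sigma'_{\bd,\btau_i}$ onto degree $0$ in the coordinates other than $q_\xi$ and degree $\bar m_q$ in coordinate $q_\xi$ leaves a single term $\lambda^\bd_{\bar m_q\be_{q_\xi}}(\sigma'_{\bd,\btau_i})\,B(d_{q_\xi},\bar m_q)\,Q^{(d_{q_\xi})}_{\bar m_q}(\<\obtheta_i^{(q_\xi)},\cdot\>)$; multiplying by $\bx^\pq=(r_q/\sqrt{d_q})\obx^\pq$, using independence of $\obx^\pq$ and $\obx^{(q_\xi)}$ together with $\E[\obx^\pq(\obx^\pq)^\sT]=\id_{d_q}$ and $\E[Q^{(d_{q_\xi})}_{\bar m_q}(\<\obtheta_i^{(q_\xi)},\cdot\>)^2]=1/B(d_{q_\xi},\bar m_q)$,
\[
\bD_{ii}^{qq}=\big(1+o_{d,\P}(1)\big)\,\tfrac{r_q^2}{d_q}\,\lambda^\bd_{\bar m_q\be_{q_\xi}}(\sigma'_{\bd,\btau_i})^2\,B(d_{q_\xi},\bar m_q)\,\id_{d_q},
\]
and $B(d_{q_\xi},\bar m_q)\lambda^\bd_{\bar m_q\be_{q_\xi}}(\sigma'_{\bd,\btau})^2$ is bounded above and below by positive constants uniformly over $\btau\in[1-\eps_0,1+\eps_0]^Q$ by Lemmas~\ref{lem:convergence_Gegenbauer_coeff_0_l},~\ref{lem:coupling_convergence_Gaussian} and $\mu_{\bar m_q}(\sigma')\neq0$; since $r_q^2/d_q=d^{\kappa_q}$ this is~\eqref{eq:property_D_diagonal}. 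For $q=q_\xi$, integrating out the other coordinates leaves a function of $\<\obtheta_i^{(q_\xi)},\obx^{(q_\xi)}\>$ whose degree-$k$ coefficient tends (up to normalization) to $\mu_k(\sigma')$; carrying it through $(r_{q_\xi}/\sqrt{d_{q_\xi}})\obx^{(q_\xi)}$ and~\eqref{eq:RecursionG} produces $\bD_{ii}^{q_\xi q_\xi}=d^{\kappa_{q_\xi}}(\alpha_i\id_{d_{q_\xi}}+\beta_i\,\obtheta_i^{(q_\xi)}(\obtheta_i^{(q_\xi)})^\sT/d_{q_\xi})+o_{d,\P}(d^{\kappa_{q_\xi}})$ with $\beta_i\ge0$ and $\alpha_i$ an explicit quadratic form in the triples $(\mu_{k_j-2}(\sigma'),\mu_{k_j}(\sigma'),\mu_{k_j+2}(\sigma'))$, $j=1,2$, weighted by limits of the recursion coefficients $s_{d,k},t_{d,k}$. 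As in the isotropic $\NT$ proof, the nonvanishing of $\mu_{k_1}(\sigma'),\mu_{k_2}(\sigma')$ together with $\mu_{k_1}(x^2\sigma')/\mu_{k_1}(\sigma')\neq\mu_{k_2}(x^2\sigma')/\mu_{k_2}(\sigma')$ --- a genuine constraint on those triples via $\mu_k(x^2\sigma')=\mu_{k+2}(\sigma')+(1+2k)\mu_k(\sigma')+k(k-1)\mu_{k-2}(\sigma')$ --- exclude the degenerate configuration in which $\alpha_i$ could vanish, so $\alpha_i\ge c_{q_\xi}>0$ uniformly; with $\beta_i\ge0$, $c_{q_\xi}d^{\kappa_{q_\xi}}\id\preceq\bD_{ii}^{q_\xi q_\xi}\preceq C_{q_\xi}d^{\kappa_{q_\xi}}\id$. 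This is the $\NT$ (now genuinely two-parameter) analogue of picking a single nonzero $\mu_m(\sigma)$ in Proposition~\ref{prop:kernel_lower_bound_RFK_NU}.

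Finally, for $\bDelta$: for $i\neq j$ its $q$-slot block equals $d^{\kappa_q}$ times an $O_d(1)$ coefficient times the Gram matrix, over the independent points $\obtheta_i^{(q_\xi)}$, of the degree-$\bar m_q$ (resp. degree-$\{k_1\pm1,k_2\pm1\}$) spherical harmonic features. By the matrix-valued version of Proposition~\ref{prop:Delta_bound}, since $N=o_d(d^\gamma)$ and the degrees in Assumption~\ref{ass:activation_lower_upper_NT_aniso}.(b) are chosen large enough, this Gram matrix is $\id$ plus an operator-norm error that can be made $o_{d,\P}(d^{-C})$ for any fixed $C$, in particular $o_{d,\P}(d^{-2\max_q\kappa_q})$; hence $\|\bDelta\|_{\op}\le d^{\max_q\kappa_q}\cdot o_{d,\P}(d^{-2\max_q\kappa_q})=o_{d,\P}(d^{-\max_q\kappa_q})$, which is property (a). The main obstacle is the $q=q_\xi$ diagonal block: tracking exactly which products of Gegenbauer coefficients and recursion weights enter $\alpha_i$, and checking that the ratio hypothesis of Assumption~\ref{ass:activation_lower_upper_NT_aniso}.(b) is precisely what keeps $\alpha_i$ bounded away from $0$ uniformly in $\btau_i$ and $d$ --- the $\NT$ counterpart of the single-coefficient choice in the $\RF$ proof. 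The remaining pieces are bookkeeping on top of the coefficient estimates already used for $\RF$ and the concentration estimate of Proposition~\ref{prop:Delta_bound}, the large-degree requirement in Assumption~\ref{ass:activation_lower_upper_NT_aniso}.(b) being exactly what lets the concentration error beat the $d^{\max_q\kappa_q}$ prefactor.
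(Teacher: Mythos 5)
Your strategy is genuinely different from the paper's and, modulo two fixable issues, sound. The paper does not project features; it subtracts a kernel. It picks two indices $l_1,l_2$ at coordinate $q_\xi$, defines a modified activation $\hat\sigma'$ in which the degree-$l_1,l_2$ Gegenbauer components are damped by factors $(1-\delta_t)$, and sets $\bar\bu=\bu-\hat\bu$. Since $\hat\bu\succeq 0$, $\bU\succeq\bar\bU=\bD+\bDelta$. The price is that $\bar\bu$ is \emph{not} of the form $\E[\psi\psi^\sT]$ (it mixes the removed components against the remaining ones through the $\bx\bx^\sT$ factor), so $\bar\bU_{ii}$ is not automatically PSD: the paper must show that the two functions $F_1(\bdelta)$ (trace $\approx$ tangent eigenvalue, depending only on $\mu_{l_t}$) and $F_2(\bdelta)$ (radial eigenvalue, involving $\mu_{l_t\pm2}$) can simultaneously be made positive, and the ratio hypothesis in Assumption~\ref{ass:activation_lower_upper_NT_aniso}.(b) is exactly the nondegeneracy of $(\nabla F_1(\bzero),\nabla F_2(\bzero))$ needed to choose $\bdelta$. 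Your projection $\bU\succeq\E[\Pi\Phi\,\Pi\Phi^\sT]$ removes that tuning step entirely: $\bD$ is PSD by construction, and you get the cross-$q$ blocks $\bD_{ii}^{qq'}=0$ exactly (the paper only gets $o_{d,\P}$). That is a real simplification.

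Two things in your sketch need correction, however. First, $\beta_i\ge 0$ is false. Working in the $d\to\infty$ Gaussian limit with $l_1,l_2$ at $q_\xi$, your projected block has tangent eigenvalue $\alpha=\sum_t\big[\mu_{l_t-2}^2/(l_t-2)!+\mu_{l_t}^2/l_t!\big]$ and radial eigenvalue $\alpha+\beta=\sum_t\big[((l_t-1)\mu_{l_t-2}+\mu_{l_t})^2/(l_t-1)!+((l_t+1)\mu_{l_t}+\mu_{l_t+2})^2/(l_t+1)!\big]$. For $\mu_{l_t-2}=-\mu_{l_t}/(l_t-1)$ one computes $\beta<0$, so you cannot conclude $\bD_{ii}^{q_\xi q_\xi}\succeq\alpha_i\id$ from $\beta_i\ge 0$. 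What actually holds — and is what you should argue — is that $\alpha_i>0$ follows already from $\mu_{l_1},\mu_{l_2}\neq 0$ (no ratio condition needed: the $\mu_{l_t}^2/l_t!$ terms are there), and $\alpha_i+\beta_i>0$ is where the ratio condition enters: all four squares in $\alpha+\beta$ vanish only if $(l_t-1)\mu_{l_t-2}+\mu_{l_t}=0$ and $(l_t+1)\mu_{l_t}+\mu_{l_t+2}=0$ for both $t$, which forces $\mu_{l_1}(x^2\sigma')=\mu_{l_2}(x^2\sigma')=0$, contradicting $\mu_{l_1}(x^2\sigma')/\mu_{l_1}\neq\mu_{l_2}(x^2\sigma')/\mu_{l_2}$. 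Since both eigenvalues of $\alpha\id+\beta\obtheta\obtheta^\sT/d$ are $\alpha$ and $\alpha+\beta$, that gives the uniform lower bound. Your role for the ratio condition is therefore the radial eigenvalue, not the tangent one.

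Second, the $\|\bDelta\|_{\op}$ step is lighter in your sketch than it can be in reality. For $q\neq q_\xi$ the off-diagonal $(i,j)$ blocks of $\tilde\bU^{qq}$ do factor as $(r_q^2/d_q)\cdot c_{ij}\cdot Q^{(d_{q_\xi})}_{\bar m_q}(\<\obtheta_i^{(q_\xi)},\obtheta_j^{(q_\xi)}\>)\,\id_{d_q}$, and a Frobenius-norm argument (as in the proof of Proposition~\ref{prop:Delta_bound_aniso}, not the bare statement of Proposition~\ref{prop:Delta_bound}) gives the $o_{d,\P}(d^{-\max_q\kappa_q})$ bound once $\bar m_q$ is large. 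But the $q_\xi q_\xi$ off-diagonal blocks are not a scalar Gegenbauer times $\id$: $\Pi^{(q_\xi)}\phi_i^{(q_\xi)}$ is genuinely $\R^{d_{q_\xi}}$-valued and the block $\tilde\bU_{ij}^{q_\xi q_\xi}$ carries the full $\id,\,\obtheta_i\obtheta_j^\sT,\,\obtheta_i\obtheta_i^\sT,\,\obtheta_j\obtheta_j^\sT$ structure of Lemma~\ref{lem:decomposition_U_NT_NU}, so you still need the paper's Step~4 bookkeeping (bound the scalar coefficients $\bar u^{qq}_{a,b}$ via \eqref{eq:u_bar_coeff}–\eqref{eqn:M_matrix_bound}, or an equivalent coordinate-free version) before the Frobenius bound applies. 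There is no off-the-shelf ``matrix-valued Proposition~\ref{prop:Delta_bound}'' in the paper.
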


The proofs of these two propositions are provided in the next sections. 

From Proposition \ref{prop:kernel_lower_bound_NTK_NU}, we can upper bound Eq.~\eqref{eqn:bound_VUV_NTK_NU} as follows
\begin{equation}\label{eq:decomposition_upper_bound_2_terms}
\begin{aligned}
\bV_{\cQ^c}^\sT \bU^{-1} \bV_{\cQ^c}  \preceq & \bV_{\cQ^c}^\sT (\bD + \bDelta)^{-1} \bV_{\cQ^c} =  \bV_{\cQ^c}^\sT \bD^{-1} \bV_{\cQ^c} - \bV_{\cQ^c}^\sT \bD^{-1} \bDelta (\bD + \bDelta )^{-1} \bV_{\cQ^c}. 
\end{aligned}
\end{equation}

Let us fix $\eps_0>0$ as prescribed in Lemma \ref{lem:upper_bound_A}. We decompose the vector $\bV_{i,\cQ^c} =  (\bV^\pq_{i,\cQ^c})_{q \in [Q]}$ where
\[
\begin{aligned}
\bV^\pq_{i, \cQ^c } =& \E_{\bx}[[\proj_{\cQ^c} f_d](\bx) \sigma'(\< \btheta_i, \bx\>/R) \bx^\pq].
\end{aligned}
\]
We denote $\bV^\pq_{\cQ^c} = (\bV^\pq_{1, \cQ^c } , \ldots , \bV^\pq_{N, \cQ^c } ) \in \R^{d_q N}$. 
From Proposition \ref{prop:expected_V_NTK_NU}, we have 
\[
\frac{d_q}{r_q^2} \E_{\bTheta_{\eps_0}} [ \| \bV^\pq_{\cQ^c} \|_2^2 ] \leq  \lbb \max_{\bk \in \cQ^c}  N d^{-\kappa_q} B(\bd,\bk)^{-1} \E_{\btau_{\eps_0}} [A_{\btau,\bk}^{\pq}  ]    \rbb \cdot \| \proj_{\cQ^c}  f_{d} \|_{L^2}^2 \, .
\]
Hence, using the upper bounds on $A_{\btau,\bk}^{\pq}$ in Lemma \ref{lem:upper_bound_A}, we get for $\bk \in \cQ^c$ with $k_q > 0$:
\[
N d^{-\kappa_q} B(\bd,\bk)^{-1} \E_{\tau_{\eps_0}} [A_{\btau,\bk}^{\pq}  ] \leq C N d^{-\kappa_q}  d^{-\gamma + \min_{q \in S(\bk)} \kappa_q} = o_{d} (1),
\]
where we used that $N = o_d(d^\gamma)$ and $\kappa_q \geq \min_{q \in S(\bk)} \kappa_q$ (we have $k_q > 0$ and therefore $q \in S(\bk)$ by definition). Similarly for $\bk \in \cQ^c$ with $k_q = 0$:
\[
N d^{-\kappa_q} B(\bd,\bk)^{-1} \E_{\btau_{\eps_0}} [A_{\btau,\bk}^{\pq}  ] \leq C N d^{\eta_q+\kappa_q - \xi}  d^{-\gamma -(\xi - \min_{q \in S(\bk)} \kappa_q)} = o_{d} (1),
\]
where we used that by definition of $\xi$ we have $ \eta_q + \kappa_q \leq \xi$ and $\min_{q \in S(\bk)} \kappa_q \leq \xi$. We deduce that 
\[
\frac{d_q}{r_q^2} \E_{\bTheta_{\eps_0}} [ \| \bV^\pq_{\cQ^c} \|_2^2 ] = o_d (1) \cdot \| \proj_{\cQ^c}  f_{d} \|_{L^2}^2,
\]
and therefore by Markov's inequality that 
\begin{equation}\label{eq:bound_E_pq_cQc}
\frac{d_q}{r_q^2 }  \| \bV^\pq_{\cQ^c} \|_2^2  = o_{d,\P}(1) \cdot \| \proj_{\cQ^c}  f_{d} \|_{L^2}^2.
\end{equation}

Notice that the properties \eqref{eq:property_D_diagonal} and \eqref{eq:property_D_non_diagonal} imply that there exists $c >0$ such that with high probability $\lambda_{\min} ( \bD) \geq \min_{i \in [N]} \lambda_{\min} ( \bD_{ii}) \geq c$. In particular, we deduce that $\| (\bD + \bDelta )^{-1} \|_{\op} \leq c^{-1}/2$ with high probability. Combining these bounds and Eq.~\eqref{eq:bound_E_pq_cQc} and recalling that $\| \bDelta \|_{\op} = o_{d,\P}(d^{-\max_{q \in [Q]} \kappa_q})$ show that
\begin{equation}\label{eq:VUV_first_bound}
\begin{aligned}
| \bV_{\cQ^c}^\sT \bD^{-1} \bDelta (\bD + \bDelta )^{-1} \bV_{\cQ^c} | \leq & \| \bD^{-1} \|_{\op}  \| (\bD + \bDelta )^{-1} \|_{\op} \sum_{q \in [Q]} \| \bDelta \|_{\op} \| \bV^\pq_{\cQ^c} \|_2^2  \\
= & o_{d,\P} (1) \cdot \| \proj_{\cQ^c} f_d \|_{L^2}^2.
\end{aligned}
\end{equation}

We are now left to show $\bV_{\cQ^c}^\sT \bD^{-1} \bV_{\cQ^c}  = o_{d,\P} (1)$. For each $i \in [N]$, denote $\bB_{ii} = \bD^{-1}_{ii}$ and notice that we can apply Lemma \ref{lem:bound_inv_block_matrix} to $\bB_{ii}$ and get
\[
 \max_{i \in [N]} \| \bB^{qq}_{ii} \|_{\op} = O_{d,\P} \lp \frac{d_q}{ r_q^2} \rp , \qquad  \max_{i \in [N]} \| \bB^{qq'}_{ii} \|_{\op} = o_{d,\P} \lp  \frac{\sqrt{d_q d_{q'}}}{r_q r_{q'}} \rp.
\]
Therefore,
\begin{equation}\label{eq:decomposition_first_term}
\begin{aligned}
\bV_{\cQ^c}^\sT \bD^{-1} \bV_{\cQ^c} =& \sum_{i \in [N]} \sum_{q,q' \in [Q]}  (\bV^\pq_{i,\cQ^c})^\sT \bB_{ii}^{qq'} \bV^\pqp_{i,\cQ^c} \\
\leq & \sum_{q,q' \in [Q]} O_{d,\P} (1) \cdot \lp \frac{d_q}{r_q^2} \| \bV^\pq_{\cQ^c} \|_2^2 \rp^{1/2} \lp \frac{d_{q'}}{r_{q'}^2} \| \bV^\pqp_{\cQ^c} \|_2^2 \rp^{1/2} .
\end{aligned}
\end{equation}

Using Eq.~\eqref{eq:bound_E_pq_cQc} in Eq.~\eqref{eq:decomposition_first_term}, we get 
\begin{equation}\label{eq:VUV_second_bound}
\bV_{\cQ^c }^\sT \bD^{-1} \bV_{\cQ^c}  = o_{d,\P} (1)\cdot \| \proj_{\cQ^c} f_d \|_{L^2}^2.
\end{equation}
Combining Eq.~\eqref{eq:VUV_first_bound} and Eq.~\eqref{eq:VUV_second_bound} yields Eq.~\eqref{eqn:bound_VUV_NTK_NU}. This proves the theorem.

\subsection{Proof of Proposition \ref{prop:expected_V_NTK_NU}}

\begin{proof}[Proof of Proposition \ref{prop:expected_V_NTK_NU}]

Let us consider $\eps_0 > 0$ as prescribed in Lemma \ref{lem:upper_bound_A}. We have for $q \in [Q]$
\[
\begin{aligned}
\cE^\pq_{\cQ^c, \eps_0} =  & \E_{\btheta_\eps}\Big[ \< \E_\bx[ [ \proj_{\cQ^c} f_d] (\bx) \sigma'(\<\btheta, \bx\>/R) \bx^\pq], \E_\by[ [\proj_{\cQ^c} f_d] (\by) \sigma'(\<\btheta, \by\>/R) \by^\pq] \>\Big] \\
= & \E_{\btau_{\eps_0}}\Big[ \E_{\obx, \oby} \Big[[\proj_{\cQ^c} \of_d] (\obx) [\proj_{\cQ^c} \of](\oby) H^\pq_{\btau} ( \obx, \oby) \Big] \Big],
\end{aligned}
\]
where we denoted $H^\pq_\btau$ the kernel given by 
\[
\begin{aligned}
& H^\pq_{\btau} (\obx , \oby) \\
= &  \E_{\obtheta} \lbb \sigma'_{\bd,\btau} \lp \lb \< \obtheta^\pq , \obx^\pq \> / \sqrt{d_q} \rb_{q \in [Q]} \rp  \sigma'_{\bd,\btau} \lp \lb \< \obtheta^\pq , \oby^\pq \> / \sqrt{d_q} \rb_{q \in [Q]} \rp \rbb \< \bx^\pq,  \by^\pq \>, 
\end{aligned}
\]
Then we have
\begin{equation}\label{eqn:expression_Kernel_NTK}
H^\pq_{\btau} (\obx , \oby) = \sum_{\bk \in \posint^Q}  A_{\btau,\bk}^{\pq} Q^{\bd}_{\bk} \lp \lb \< \obx^\pq , \oby^\pq \> \rb_{q \in [Q]} \rp , 
\end{equation}
where $A_{\btau,\bk}^{\pq}$ is given in Lemma \ref{lem:formula_NT_Kernel}. Hence we get
\[
\begin{aligned}
\cE^\pq_{\cQ^c, \eps_0} = &\E_{\btau_{\eps_0}} [ \E_{\obx , \oby} [ P_{\cQ^c} \of_{d} ( \obx) P_{\cQ^c}  \of_{d} (\oby ) H^\pq_{\btau} (\obx, \oby) ] ] \\
= & \sum_{ \bk \in \posint^Q} \E_{\btau_{\eps_0}} [A_{\btau,\bk}^{\pq} ] \E_{\obx , \oby} \Big[ Q^{\bd}_{\bk} \lp \lb \< \obx^\pq , \oby^\pq \> \rb_{q \in [Q]} \rp  [P_{\cQ^c}  \of_{d} ] ( \obx) [ P_{\cQ^c}  \of_{d} ] (\oby ) \Big] .
\end{aligned}
\]
We have
\[
\begin{aligned}
&\E_{\obx , \oby} \Big[ Q^{\bd}_{\bk} \lp \lb \< \obx^\pq , \oby^\pq \> \rb_{q \in [Q]} \rp  [P_{\cQ^c}  \of_{d} ] ( \obx) [ P_{\cQ^c}  \of_{d} ] (\oby ) \Big] \\
=& \sum_{\bl , \bl' \in \cQ^c} \sum_{\bs \in [B(\bd, \bl)] } \sum_{\bs' \in [B(\bd , \bl')]} \lambda^\bd_{\bl,\bs} (\of_d) \lambda^\bd_{\bl' , \bs'} (\of_d ) \E_{\obx , \oby} \Big[ Q^{\bd}_{\bk} \lp \lb \< \obx^\pq , \oby^\pq \> \rb_{q \in [Q]} \rp  Y^\bd_{\bl,\bs} ( \obx) Y^\bd_{\bl' , \bs'} (\oby ) \Big]  \\
= &  \delta_{\bk \in \cQ^c} \sum_{\bs \in [B(\bd , \bk)]} \frac{\lambda^\bd_{\bk,\bs} (\of_d)^2 }{B(\bd,\bk)},
\end{aligned}
\]
where we used in the third line
\[
\begin{aligned}
 & \E_{\obx , \oby} \Big[ Q^{\bd}_{\bk} \lp \lb \< \obx^\pq , \oby^\pq \> \rb_{q \in [Q]} \rp  Y^\bd_{\bl,\bs} ( \obx) Y^\bd_{\bl' , \bs'} (\oby ) \Big] \\
 = &  \E_{\oby} \Big[ \E_{\obx} \Big[ Q^{\bd}_{\bk} \lp \lb \< \obx^\pq , \oby^\pq \> \rb_{q \in [Q]} \rp  Y^\bd_{\bl,\bs} ( \obx) \Big] Y^\bd_{\bl' , \bs'} (\oby ) \Big] \\
 =& \frac{\delta_{\bk , \bl} }{B ( \bd , \bk) }  \E_{\oby} \Big[Y^\bd_{\bk , \bs} ( \oby ) Y^\bd_{\bl' , \bs'} (\oby ) \Big] \\
 = &  \frac{\delta_{\bk , \bl}  \delta_{\bk , \bl'} \delta_{\bs , \bs'} }{B ( \bd , \bk) }.
\end{aligned}
\]
We conclude that
\[
\begin{aligned}
\cE^\pq_{\cQ^c, \eps_0} = & \sum_{\bk \in \posint^Q} \E_{\btau_{\eps_0}} [A_{\btau,\bk}^{\pq}  ]   \sum_{\bs \in [B(\bd , \bk)]} \frac{\lambda^\bd_{\bk,\bs} (\of_d)^2 }{B(\bd,\bk)} \delta_{\bk \in \cQ^c}   \\
= & \sum_{\bk \in \cQ^c}  \frac{\E_{\btau_{\eps_0}} [A_{\btau,\bk}^{\pq}  ] }{B(\bd,\bk)} \| P_{\bk} f_d \|^2_{L^2} \\
\le & \lbb \max_{\bk \in \cQ^c}  B(\bd,\bk)^{-1} \E_{\btau_{\eps_0}} [A_{\btau,\bk}^{\pq}  ]    \rbb \cdot \| \proj_{\cQ^c}  f_{d} \|_{L^2}^2.
\end{aligned}
\]
\end{proof}

\begin{lemma}\label{lem:formula_NT_Kernel}
Let $\sigma$ be a weakly differentiable activation function with weak derivative $\sigma'$. For a fixed $\btau \in \R_{\ge 0}^Q$, define the kernels for $q \in [Q]$,
\[
\begin{aligned}
& H^\pq_{\btau} ( \obx , \oby )\\ =  &\frac{r_q^2}{d_q}   \E_{\obtheta} \lbb \sigma'_{\bd,\btau} \lp \lb \< \obtheta^\pq , \obx^\pq \> / \sqrt{d_q} \rb_{q \in [Q]} \rp  \sigma'_{\bd,\btau} \lp \lb \< \obtheta^\pq , \oby^\pq \> / \sqrt{d_q} \rb_{q \in [Q] } \rp \rbb \< \obx^\pq,  \oby^\pq \>.
\end{aligned}
\]
Then, we have the following decomposition in terms of product of Gegenbauer polynomials,
\[
H^\pq_{\btau} ( \obx , \oby ) = \sum_{\bk \in \posint^2} A^{\pq}_{\btau,\bk}  Q^{\bd}_{\bk} \lp \lb \< \obx^\pq , \oby^\pq \>  \rb_{q \in [Q] } \rp , 
\]
where 
\[
A^{\pq}_{\btau,\bk} = r_q^2 \cdot [ t_{d_q, k_q - 1} \lambda^{\bd}_{\bk_{q-}}(\sigma'_{\bd,\btau})^2 B(\bd, \bk_{q-} ) + s_{d_q, k_q+1} \lambda^{\bd}_{\bk_{q+}}(\sigma'_{\bd,\btau})^2 B(\bd, \bk_{q+} )],
\]
with $\bk_{q+} = (k_1 , \ldots , k_q+1 , \ldots , k_Q)$ and $\bk_{q-} = (k_1 , \ldots , k_q-1 , \ldots , k_Q)$, and 
\[
s_{d, k} = \frac{k}{2k + d - 2}, \qquad t_{d, k} = \frac{k + d - 2}{2k + d - 2},
\]
with the convention $t_{d,-1} = 0$.
\end{lemma}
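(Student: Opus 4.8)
The identity is an exact algebraic computation of the product‑Gegenbauer expansion of $H^\pq_\btau$, so the plan is to carry it out in three stages: (i) expand both copies of $\sigma'_{\bd,\btau}$, (ii) integrate out $\obtheta$ using orthogonality of products of Gegenbauer polynomials, and (iii) absorb the extra factor $\langle\obx^\pq,\oby^\pq\rangle/d_q$ through the three‑term recurrence and re‑index.

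First I would replace each of the two factors $\sigma'(\langle\btheta,\cdot\rangle/R)$ appearing in the definition of $H^\pq_\btau$ by its Gegenbauer expansion \eqref{eq:sigma_prime_gegenbauer_expansion}, namely $\sum_{\bk}\lambda^\bd_{\bk}(\sigma'_{\bd,\btau})B(\bd,\bk)Q^\bd_{\bk}(\{\langle\obtheta^{\pqp},\obx^{\pqp}\rangle\}_{q'\in[Q]})$ for the $\obx$ copy and the analogous series with $\oby$. Taking $\E_{\obtheta}$ of the product and invoking the reproducing property of Lemma~\ref{lem:prod_gegenbauer_prop}.$(b)$ (Eq.~\eqref{eq:ProductProdGegenbauer}), all cross terms $\bk\neq\bk'$ vanish and the diagonal collapses to $\sum_{\bk}\lambda^\bd_{\bk}(\sigma'_{\bd,\btau})^2 B(\bd,\bk)\,Q^\bd_{\bk}(\{\langle\obx^{\pqp},\oby^{\pqp}\rangle\}_{q'\in[Q]})$. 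The only point here that is not purely formal is the interchange of $\E_{\obtheta}$ with the infinite sums, which is justified by $\sigma'_{\bd,\btau}\in L^2(\PS^\bd)$, itself a consequence of the sub‑Gaussian growth bound on $\sigma'$ in Assumption~\ref{ass:activation_lower_upper_NT_aniso}.$(a)$ for the $\btau$ under consideration. Multiplying back by the prefactor $\tfrac{r_q^2}{d_q}\langle\obx^\pq,\oby^\pq\rangle$ then leaves a series in which each term is $r_q^2\lambda^\bd_{\bk}(\sigma'_{\bd,\btau})^2 B(\bd,\bk)$ times $\tfrac{\langle\obx^\pq,\oby^\pq\rangle}{d_q}Q^\bd_{\bk}$.

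Second, since $Q^\bd_{\bk}$ factorizes as $\bigotimes_{q'\in[Q]}Q^{(d_{q'})}_{k_{q'}}$, I would apply the Gegenbauer recurrence \eqref{eq:RecursionG} to the single factor in the $q$‑th variable, with $t=\langle\obx^\pq,\oby^\pq\rangle$ and $d=d_q$, giving $\tfrac{\langle\obx^\pq,\oby^\pq\rangle}{d_q}Q^\bd_{\bk}=s_{d_q,k_q}Q^\bd_{\bk_{q-}}+t_{d_q,k_q}Q^\bd_{\bk_{q+}}$; here $s_{d_q,0}=0$ automatically discards the ill‑defined $Q^{(d_q)}_{-1}$ term when $k_q=0$. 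Then I re‑index the two resulting sums so that each is grouped by the output multi‑index: in the $s$‑part substitute $\bk\mapsto\bk_{q+}$ (so $s_{d_q,k_q}\mapsto s_{d_q,k_q+1}$ and $\lambda^\bd_{\bk}(\sigma'_{\bd,\btau})^2 B(\bd,\bk)\mapsto\lambda^\bd_{\bk_{q+}}(\sigma'_{\bd,\btau})^2 B(\bd,\bk_{q+})$), and in the $t$‑part substitute $\bk\mapsto\bk_{q-}$ (so $t_{d_q,k_q}\mapsto t_{d_q,k_q-1}$, the convention $t_{d,-1}=0$ absorbing the boundary term at $k_q=0$). Collecting the two contributions to each $Q^\bd_{\bk}$ then reproduces exactly $A^\pq_{\btau,\bk}=r_q^2[\,t_{d_q,k_q-1}\lambda^\bd_{\bk_{q-}}(\sigma'_{\bd,\btau})^2 B(\bd,\bk_{q-})+s_{d_q,k_q+1}\lambda^\bd_{\bk_{q+}}(\sigma'_{\bd,\btau})^2 B(\bd,\bk_{q+})\,]$, which is the claim.

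I do not expect a genuine obstacle: this is a bookkeeping computation rather than a difficult argument. The only things requiring care are keeping the fixed index $q$ (the superscript of $H^\pq$ and of the shifts $\bk_{q\pm}$) cleanly separated from the dummy product index $q'\in[Q]$, handling the $k_q=0$ boundary cases through the conventions $s_{d,0}=0$ and $t_{d,-1}=0$, and the single analytic remark about $L^2(\PS^\bd)$‑integrability that legitimizes the term‑by‑term expectation in the first step.
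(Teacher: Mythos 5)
Your argument is correct and follows the same three-step route as the paper's proof: expand both copies of $\sigma'_{\bd,\btau}$ in tensor-product Gegenbauer polynomials, integrate out $\obtheta$ via the reproducing orthogonality of Lemma~\ref{lem:prod_gegenbauer_prop}.$(b)$, then apply the three-term recurrence \eqref{eq:RecursionG} in the $q$-th factor and re-index to read off $A^\pq_{\btau,\bk}$. The only addition is your (reasonable) remark on $L^2$-justification of the term-by-term expectation, which the paper leaves implicit.
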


\begin{proof}[Proof of Lemma \ref{lem:formula_NT_Kernel}]

Recall the decomposition of $\sigma'$ in terms of tensor product of Gegenbauer polynomials,
\[
\begin{aligned}
\sigma' (\<\btheta , \bx \>/ R ) = & \sum_{\bk \in \posint^Q }  \lambda^{\bd}_{\bk} (\sigma_{\bd,\btau} '  ) B(\bd,\bk) Q^{\bd}_{\bk} \lp \lb \< \obtheta^\pq , \obx^\pq \> \rb_{q \in [Q]} \rp , \\
\lambda^{\bd}_{\bk} (\sigma_{\bd,\btau} ' ) = & \E_{\obx} \lbb \sigma_{\bd,\btau} ' \lp \ox_1^{(1)} , \ldots , \ox_1^{(Q)} \rp Q^{\bd}_{\bk} \lp  \sqrt{d_1} \ox_1^{(1)} , \ldots , \sqrt{d_q} \ox_1^{(Q)} \rp  \rbb ,
\end{aligned}
\] 
Injecting this decomposition into the definition of $H^\pq_{\btau}$ yields
\[
\begin{aligned}
 & H^\pq_{\btau} ( \obx , \oby ) \\
 =& \frac{r_q^2}{d_q}   \E_{\obtheta} \lbb \sigma'_{\bd,\btau} \lp \lb \< \obtheta^\pq , \obx^\pq \> / \sqrt{d_q} \rb_{q \in [Q]} \rp  \sigma'_{\bd,\btau} \lp \lb \< \obtheta^\pq , \oby^\pq \> / \sqrt{d_q} \rb_{q \in [Q] } \rp \rbb \< \obx^\pq,  \oby^\pq \>  \\
 = &  \frac{r^2_q}{d_q}  \sum_{\bk,\bk' \in \posint^Q}  \lambda_{\bk}^{\bd} ( \sigma_{\bd,\btau}')   \lambda_{\bk'}^{\bd} ( \sigma_{\bd,\btau}' ) B(\bd , \bk) B(\bd , \bk' ) \times\\
 & \phantom{AAAAAAAAAA} \E_{\overline \btheta} \lbb  Q^{\bd}_{\bk} \lp \lb \< \obtheta^\pq , \obx^\pq \>  \rb_{q \in [Q]} \rp Q^{\bd}_{\bk'} \lp \lb \< \obtheta^\pq , \oby^\pq \>  \rb_{q \in [Q]} \rp \rbb \< \obx^\pq , \oby^\pq \>.
\end{aligned}
\]
Recalling Eq.~\eqref{eq:ProductProdGegenbauer}, we have
\[
\begin{aligned}
\E_{\overline \btheta} \lbb  Q^{\bd}_{\bk} \lp \lb \< \obtheta^\pq , \obx^\pq \>  \rb_{q \in [Q]} \rp Q^{\bd}_{\bk'} \lp \lb \< \obtheta^\pq , \oby^\pq \>  \rb_{q \in [Q]} \rp \rbb  = & \delta_{\bk , \bk'}  \frac{Q^{\bd}_{\bk} \lp \lb \< \obx^\pq , \oby^\pq \>  \rb_{q \in [Q] } \rp  }{B (\bd , \bk)}.
\end{aligned}
\]
Hence,
\[
\begin{aligned}
&  H^\pq_{\btau} ( \obx , \oby ) \\ =& \frac{r_q^2}{d_q} \sum_{\bk \in \posint^Q}  \lambda_{\bk}^{\bd} ( \sigma_{\bd,\btau}')^2 B(\bd,\bk ) Q^\bd_\bk \lp \lb \< \obx^\pq , \oby^\pq \> \rb_{q \in [Q]} \rp \< \obx^\pq , \oby^\pq \> \\
  = & r_q^2 \sum_{\bk \in \posint^Q}  \lambda_{\bk}^{\bd} ( \sigma_{\bd,\btau}')^2 B(\bd,\bk ) \lbb Q^{(d_q)}_{k_q} ( \< \obx^\pq , \oby^\pq \>  )  \< \obx^\pq , \oby^\pq \>/d_q \rbb  \prod_{q' \neq q } Q^\bd_{k_{q'}} ( \< \obx^{(q')} , \oby^{(q')} \> ) . 
\end{aligned}
\]
By the recurrence relationship for Gegenbauer polynomials \eqref{eq:RecursionG}, we have
\[
\frac{t}{d_q } Q^{(d_q)}_{k_q} (t) = s_{d_q , k_q} Q^{(d_q)}_{k_q-1} (t) + t_{d_q, k_q} Q^{(d_q)}_{k_q+1} (t),
\]
where (we use the convention $t_{d_q,-1} = 0$)
\[
s_{d_q , k_q } = \frac{k_q}{2k_q+d_q-2}, \qquad t_{d_q , k_q } = \frac{k_q + d_q - 2}{2k_q + d_q - 2}.
\]
Hence we get,
\[
\begin{aligned}
 & H^\pq_{\btau} ( \obx , \oby )\\
  = & r_q^2 \sum_{\bk \in \posint^Q}  \lambda_{\bk}^{\bd} ( \sigma_{\bd,\btau}')^2 B(\bd,\bk ) \lbb Q^{(d_q)}_{k_q} ( \< \obx^\pq , \oby^\pq \>  )  \< \obx^\pq , \oby^\pq \>/d_q \rbb  \prod_{q' \neq q } Q^\bd_{k_{q'}} ( \< \obx^{(q')} , \oby^{(q')} \> )  \\
 = & r_q^2 \sum_{\bk \in \posint^Q}  \lambda_{\bk}^{\bd} ( \sigma_{\bd,\btau}')^2 B(\bd,\bk) \Big[ s_{d_q , k_q} Q^{(d_q)}_{k_q-1} (\< \obx^\pq ,  \oby^\pq \>  ) + t_{d_q, k_q} Q^{(d_q)}_{k_q+1} (\< \obx^\pq ,  \oby^\pq \>  )  \Big] \\
 & \phantom{AAAAAAAAAAAAAAAAAAAAAAA} \times \prod_{q' \neq q } Q^\bd_{k_{q'}} ( \< \obx^{(q')} , \oby^{(q')} \> )     \\
 =& \sum_{\bk \in \posint^2} A^{\pq}_{\btau,\bk}  Q^{\bd}_{\bk} \lp \lb \< \obx^\pq , \oby^\pq \>  \rb_{q \in [Q] } \rp,
\end{aligned}
\]
where we get by matching the coefficients, 
\[
A^{\pq}_{\btau,\bk} = r_q^2 \cdot [ t_{d_q, k_q - 1} \lambda^{\bd}_{\bk_{q-}}(\sigma'_{\bd,\btau})^2 B(\bd, \bk_{q-} ) + s_{d_q, k_q+1} \lambda^{\bd}_{\bk_{q+}}(\sigma'_{\bd,\btau})^2 B(\bd, \bk_{q+} )],
\]
with $\bk_{q+} = (k_1 , \ldots , k_q+1 , \ldots , k_Q)$ and $\bk_{q-} = (k_1 , \ldots , k_q-1 , \ldots , k_Q)$.
\end{proof}

\subsection{Proof of Proposition \ref{prop:kernel_lower_bound_NTK_NU}}

\subsubsection{Preliminaries}

\begin{lemma}\label{lem:prod_gegenbauer_decomposition}
Let $\psi: \R^Q \to \R$ be a function such that $\psi ( \lb \< \be_q , \cdot \> \rb_{q \in [Q]} ) \in L^2 (\PS^{\bd} , \mu_{\bd}  )$. We will consider for integers $\bi = (i_1 , \ldots , i_Q) \in \posint^Q$, the associated function $\psi^{(\bi)}$ given by:
\[
\psi^{(\bi)} \lp \ox^{(1)}_1 , \ldots ,   \ox^{(Q)}_1 \rp =\lp\ox^{(1)}_1 \rp^{i_1} \ldots \lp\ox^{(Q)}_1 \rp^{i_Q} \psi \lp \ox^{(1)}_1 , \ldots ,   \ox^{(Q)}_1 \rp .
\]
Assume that $\psi^{(\bi)} ( \lb \< \be_q , \cdot \> \rb_{q \in [Q]} )  \in L^2 (\PS^{\bd} , \mu_{\bd}  )$. Let $\lbrace \lambda_{\bk}^{\bd} (\psi) \rbrace_{\bk \in \posint^Q } $ be the coefficients of the expansion of $\psi$ in terms of the product of Gegenbauer polynomials
\[
\begin{aligned}
\psi \lp \ox^{(1)}_1 , \ldots ,   \ox^{(Q)}_1 \rp = & \sum_{\bk \in \posint^Q} \lambda_{\bk}^{\bd} (\psi) B(\bd,\bk)  Q_{\bk}^{\bd} \lp \sqrt{d_1} \ox_1^{(1)}  , \ldots ,  \sqrt{d_Q} \ox_1^{(Q)} \rp , \\
\lambda^{\bd}_{\bk} (\psi ) = & \E_{\obx} \lbb \psi \lp  \ox^{(1)}_1 , \ldots ,   \ox^{(Q)}_1 \rp Q^{\bd}_{\bk} \lp \sqrt{d_1} \ox_1^{(1)}  , \ldots ,  \sqrt{d_Q} \ox_1^{(Q)}  \rp  \rbb.
\end{aligned}
\]
Then we can write 
\[
 \psi^{(\bi)} \lp \ox^{(1)}_1 , \ldots ,   \ox^{(Q)}_1 \rp  = \sum_{\bk \in \posint^Q} \lambda_{\bk}^{\bd,\bi} (\psi) B(\bd,\bk)  Q_{\bk}^{\bd} \lp \sqrt{d_1} \ox_1^{(1)}  , \ldots ,  \sqrt{d_Q} \ox_1^{(Q)} \rp,
\]
where the coefficients $\lambda_{\bk}^{\bd,\bi} (\psi)$ are given recursively: denoting $\bi_{q+} = ( i_1 , \ldots, i_{q}+1, \ldots , i_{Q})$, if $k_q = 0$,
\[
\begin{aligned}
\lambda_{\bk}^{\bd,\bi_{q+}} ( \psi) = \sqrt{d_q} \lambda_{\bk_{q+}}^{\bd,\bi} (\psi),
\end{aligned}
\]
and for $k_q >0$,
\[
\lambda_{\bk}^{\bd,\bi_{q+}} ( \psi)  = \sqrt{d_q}\frac{k_q+d_q-2}{2k_q+d_q-2} \lambda_{\bk_{q+} }^{\bd,\bi} (\psi)  + \sqrt{d_q} \frac{k_q}{2k_q+d_q-2} \lambda_{\bk_{q-} }^{\bd,\bi} (\psi),
\]
where we recall the notations $\bk_{q+} = (k_1 , \ldots , k_q+1 , \ldots , k_Q)$ and $\bk_{q-} = (k_1 , \ldots , k_q-1 , \ldots , k_Q)$.
\end{lemma}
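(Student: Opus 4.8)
The plan is to prove the identity by induction on $|\bi| = i_1 + \cdots + i_Q$, each inductive step being exactly the effect on the product-Gegenbauer coefficients of multiplying by one more factor of a single coordinate $\ox^\pq_1$. Before that, I observe that no integrability issue actually arises: each coordinate $\ox^\pq_1$ is bounded on $\S^{d_q-1}(\sqrt{d_q})$ for fixed $d$, so multiplication by $\ox^\pq_1$ maps $L^2(\PS^\bd,\mu_\bd)$ into itself, and hence every $\psi^{(\bi)}(\{\langle\be_q,\cdot\rangle\}_{q\in[Q]})$ lies in $L^2(\PS^\bd,\mu_\bd)$ as soon as $\psi(\{\langle\be_q,\cdot\rangle\}_{q\in[Q]})$ does. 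All the expansions below are therefore legitimate.

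For the inductive step I would start from the expansion $\psi^{(\bi)}(\cdot) = \sum_{\bk'\in\posint^Q}\lambda^{\bd,\bi}_{\bk'}(\psi)\,B(\bd,\bk')\,Q^\bd_{\bk'}(\cdot)$, note that $\psi^{(\bi_{q+})} = \ox^\pq_1\,\psi^{(\bi)}$ by definition of $\psi^{(\cdot)}$, write $Q^\bd_{\bk'} = \prod_{q'}Q^{(d_{q'})}_{k'_{q'}}$, and apply the three-term recurrence \eqref{eq:RecursionG} to the $q$-th factor, i.e.
\[
\ox^\pq_1\, Q^{(d_q)}_{k_q}(\sqrt{d_q}\,\ox^\pq_1) = \sqrt{d_q}\Big[ s_{d_q,k_q}\, Q^{(d_q)}_{k_q-1}(\sqrt{d_q}\,\ox^\pq_1) + t_{d_q,k_q}\, Q^{(d_q)}_{k_q+1}(\sqrt{d_q}\,\ox^\pq_1) \Big],
\]
with $s_{d,k} = k/(2k+d-2)$ and $t_{d,k} = (k+d-2)/(2k+d-2)$, and with the conventions $s_{d,0} = 0$, $t_{d,-1} = 0$. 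Re-indexing so as to read off the coefficient of each $Q^\bd_\bk$ --- which picks up a contribution from $\bk_{q-}$ through the $t$-term and one from $\bk_{q+}$ through the $s$-term --- yields
\[
\lambda^{\bd,\bi_{q+}}_\bk(\psi)\,B(\bd,\bk) = \sqrt{d_q}\Big[ t_{d_q,k_q-1}\,\lambda^{\bd,\bi}_{\bk_{q-}}(\psi)\,B(\bd,\bk_{q-}) + s_{d_q,k_q+1}\,\lambda^{\bd,\bi}_{\bk_{q+}}(\psi)\,B(\bd,\bk_{q+}) \Big],
\]
the boundary case $k_q = 0$ being absorbed by $t_{d_q,-1} = 0$.

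Finally I would eliminate the dimension ratios. Since $B(\bd,\bk_{q\pm})/B(\bd,\bk) = B(d_q,k_q\pm1)/B(d_q,k_q)$, the explicit formula $B(d,\ell) = \tfrac{2\ell+d-2}{\ell}\binom{\ell+d-3}{\ell-1}$ gives after a short computation $B(d,k+1)/B(d,k) = (2k+d)(k+d-2)/[(k+1)(2k+d-2)]$, and hence the two identities
\[
s_{d,k+1}\cdot\frac{B(d,k+1)}{B(d,k)} = t_{d,k}, \qquad t_{d,k-1}\cdot\frac{B(d,k-1)}{B(d,k)} = s_{d,k}.
\]
Dividing the displayed relation by $B(\bd,\bk)$ and substituting these identities produces exactly the claimed recursion --- namely $\lambda^{\bd,\bi_{q+}}_\bk(\psi) = \sqrt{d_q}\,t_{d_q,k_q}\,\lambda^{\bd,\bi}_{\bk_{q+}}(\psi) + \sqrt{d_q}\,s_{d_q,k_q}\,\lambda^{\bd,\bi}_{\bk_{q-}}(\psi)$ for $k_q > 0$ (with $t_{d_q,k_q} = \tfrac{k_q+d_q-2}{2k_q+d_q-2}$ and $s_{d_q,k_q} = \tfrac{k_q}{2k_q+d_q-2}$), and $\lambda^{\bd,\bi_{q+}}_\bk(\psi) = \sqrt{d_q}\,\lambda^{\bd,\bi}_{\bk_{q+}}(\psi)$ for $k_q = 0$ (using $t_{d_q,0} = 1$) --- and iterating over the coordinates of $\bi$ closes the induction. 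I do not expect a genuine obstacle here: the only points that require care are the bookkeeping in the re-indexing step (in particular the vanishing of the $\bk_{q-}$ contribution at $k_q = 0$) and the elementary verification of the two dimension-ratio identities.
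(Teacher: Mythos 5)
Your argument is correct and is exactly the approach the paper takes (it defers to \cite[Lemma 6]{ghorbani2019linearized} and says ``insert these expressions in the expansion $\ldots$ matching the coefficients yields the result''); you have simply carried out the bookkeeping explicitly, and your two dimension-ratio identities $s_{d,k+1}\,B(d,k+1)/B(d,k)=t_{d,k}$ and $t_{d,k-1}\,B(d,k-1)/B(d,k)=s_{d,k}$ are the right way to turn the re-indexed recurrence on $\lambda^{\bd,\bi}_{\bk}B(\bd,\bk)$ into the stated recurrence on $\lambda^{\bd,\bi}_{\bk}$ alone. No gaps.
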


\begin{proof}[Proof of Lemma \ref{lem:prod_gegenbauer_decomposition}]
We recall the following two formulas for $k\ge 1$ (see Section \ref{sec:Gegenbauer}):
\[
\begin{aligned}
\frac{x}{d} Q^{(d)}_{k} (x) & = \frac{k}{2k +d - 2} Q^{(d)}_{k-1} (x) + \frac{k+d-2}{2k+d-2} Q^{(d)}_{k+1} (x), \\
B(d,k) & = \frac{2k+d-2}{k} \binom{k+d-3}{k-1}. 
\end{aligned}
\]
Furthermore, we have $Q^{(d)}_0 (x) = 1$, $Q^{(d)}_1 (x)  = x/d$ and therefore therefore 
$x Q^{(d)}_0 (x) = d Q^{(d)}_1 (x)$. Similarly to the proof of \cite[Lemma 6]{ghorbani2019linearized}, we insert these expressions in the expansion of the function $\psi$. Matching the coefficients of the expansion yields the result.
\end{proof}

Let $\bu : \S^{D-1} (\sqrt{D} ) \times \S^{D-1} (\sqrt{D}) \to \R^{D \times D}$ be a matrix-valued function defined by
\[
\bu ( \btheta_1 , \btheta_2 ) = \E_{\bx} [ \sigma ' ( \< \btheta_1 , \bx \> /R ) \sigma ' ( \< \btheta_2, \bx \> /R) \bx \bx^\sT ].
\]
We can write this function as a $Q$ by $Q$ block matrix function $\bu = ( \bu^\pqqp )_{q,q' \in [Q]}$, where $\bu^\pqqp : \S^{D-1} (\sqrt{D} ) \times \S^{D-1} (\sqrt{D}) \to \R^{d_q \times d_{q'}}$ are given by
\[
\bu^\pqqp  ( \btheta_1 , \btheta_2 )=  \E_{\bx} [ \sigma ' ( \< \btheta_1 , \bx \> /R ) \sigma ' ( \< \btheta_2, \bx \> /R) \bx^\pq (\bx^\pqp)^\sT ].
\]

We have the following lemma which is a generalization of \cite[Lemma 7]{ghorbani2019linearized}, that shows essentially the same decomposition of the matrix $\bu ( \btheta_1 , \btheta_2)$ as by integration by part if we had $\bx \sim \normal ( 0 , \id)$. 

\begin{lemma}\label{lem:decomposition_U_NT_NU}
For $q \in [Q]$, there exists functions $u_1^\pqq , u_2^\pqq , u_{3,1}^\pqq , u_{3,2}^\pqq :\S^{D-1} (\sqrt{D}) \times \S^{D-1} (\sqrt{D}) \to \R$ such that
\[
\begin{aligned}
\bu^\pqq ( \btheta_1 , \btheta_2 ) = & u^\pqq_1 (\btheta_1 , \btheta_2 ) \id_{d_q} + u^\pqq_2 ( \btheta_1 , \btheta_2 )  [ \btheta^\pq_1 (\btheta^\pq_2 )^\sT +\btheta^\pq_2  (\btheta^\pq_1)^\sT ]\\
& + u^\pqq_{3,1} ( \btheta_1 , \btheta_2 )  \btheta^\pq_1 (\btheta^\pq_1)^\sT +  u^\pqq_{3,2} ( \btheta_1 , \btheta_2 )  \btheta^\pq_2  (\btheta^\pq_2 )^\sT.
\end{aligned}
\]
For $q, q' \in [Q]$, there exists functions $u_{2,1}^\pqqp , u_{2,2}^\pqqp , u_{3,1}^\pqqp , u_{3,2}^\pqqp :\S^{D-1} (\sqrt{D}) \times \S^{D-1} (\sqrt{D}) \to \R$ such that
\[
\begin{aligned}
\bu^\pqqp ( \btheta_1 , \btheta_2 ) = &  u^\pqqp_{2,1} ( \btheta_1 , \btheta_2 )   \btheta^\pq_1 (\btheta^\pqp_2)^\sT + u^\pqqp_{2,2} ( \btheta_1 , \btheta_2 ) \btheta^\pq_2  (\btheta^\pqp_1 )^\sT \\
& + u^\pqqp_{3,1} ( \btheta_1 , \btheta_2 )  \btheta^\pq_1 (\btheta^\pqp_1 )^\sT +  u^\pqqp_{3,2} ( \btheta_1 , \btheta_2 ) \btheta^\pq_2  (\btheta^\pqp_2)^\sT.
\end{aligned}
\]
\end{lemma}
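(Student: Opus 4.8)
The plan is to obtain the decomposition purely from the rotational symmetry of the uniform measure $\mu_\bd^\bkappa$ on the product of spheres, following the strategy of \cite[Lemma 7]{ghorbani2019linearized} while carefully tracking the block structure induced by the $Q$ spheres. Writing $g(s,t)=\sigma'(s/R)\sigma'(t/R)$ and using $\<\btheta_i,\bx\>=\sum_{q''\in[Q]}\<\btheta_i^{(q'')},\bx^{(q'')}\>$, we have
\[
\bu^\pqqp(\btheta_1,\btheta_2)=\E_\bx\big[g(\<\btheta_1,\bx\>,\<\btheta_2,\bx\>)\,\bx^\pq(\bx^\pqp)^\sT\big],
\]
which is well defined by Assumption \ref{ass:activation_lower_upper_NT_aniso}.(a). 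No Gegenbauer expansion is needed for the structural statement itself; only symmetry.

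First I would establish an equivariance property of $\bu^\pqqp$. For $\bO_q\in O(d_q)$, let $\bO=\mathrm{diag}(\id_{d_1},\ldots,\bO_q,\ldots,\id_{d_Q})\in O(D)$. Since $\mu_\bd^\bkappa$ is invariant under $\bx\mapsto\bO\bx$ and $\<\btheta_i,\bO\bx\>=\<\bO^\sT\btheta_i,\bx\>$, the change of variables $\bx\mapsto\bO\bx$ gives, for $q\neq q'$, $\bu^\pqqp(\btheta_1,\btheta_2)=\bO_q\,\bu^\pqqp(\bO^\sT\btheta_1,\bO^\sT\btheta_2)$, and the mirror identity $\bu^\pqqp(\btheta_1,\btheta_2)=\bu^\pqqp(\tilde\bO^\sT\btheta_1,\tilde\bO^\sT\btheta_2)\,\tilde\bO_{q'}^\sT$ when $\tilde\bO$ is supported on the $q'$-th block; for $q=q'$ one gets $\bu^\pqq(\btheta_1,\btheta_2)=\bO_q\,\bu^\pqq(\bO^\sT\btheta_1,\bO^\sT\btheta_2)\,\bO_q^\sT$.

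Next I would specialize $\bO_q$ to the stabilizer of $\{\btheta_1^\pq,\btheta_2^\pq\}$, working on the almost sure event that $\btheta_1^\pq$ and $\btheta_2^\pq$ are linearly independent (the degenerate configurations being handled by continuity, where the claimed functions still exist but lose uniqueness). Then $\bO^\sT\btheta_i=\btheta_i$ and the equivariance identities become genuine invariance identities. For $q\neq q'$ this forces every column of $\bu^\pqqp(\btheta_1,\btheta_2)$ to be a vector fixed by $\mathrm{Stab}(\btheta_1^\pq,\btheta_2^\pq)\cong O(d_q-2)$, hence to lie in $\mathrm{span}(\btheta_1^\pq,\btheta_2^\pq)$, and symmetrically every row to lie in $\mathrm{span}(\btheta_1^\pqp,\btheta_2^\pqp)$; therefore $\bu^\pqqp=\sum_{i,j\in\{1,2\}}c_{ij}\,\btheta_i^\pq(\btheta_j^\pqp)^\sT$, and relabeling $(c_{12},c_{21},c_{11},c_{22})$ as $(u^\pqqp_{2,1},u^\pqqp_{2,2},u^\pqqp_{3,1},u^\pqqp_{3,2})$ yields the second display of the lemma. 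For $q=q'$, $\bu^\pqq(\btheta_1,\btheta_2)$ must commute with $\mathrm{Stab}(\btheta_1^\pq,\btheta_2^\pq)$; decomposing $\reals^{d_q}=\mathrm{span}(\btheta_1^\pq,\btheta_2^\pq)\oplus\mathrm{span}(\btheta_1^\pq,\btheta_2^\pq)^\perp$ and applying Schur's lemma shows that the space of symmetric matrices commuting with $O(d_q-2)$ is spanned by $\id_{d_q}$, $\btheta_1^\pq(\btheta_1^\pq)^\sT$, $\btheta_2^\pq(\btheta_2^\pq)^\sT$ and $\btheta_1^\pq(\btheta_2^\pq)^\sT+\btheta_2^\pq(\btheta_1^\pq)^\sT$, which gives the first display with coefficients $u^\pqq_1,u^\pqq_{3,1},u^\pqq_{3,2},u^\pqq_2$.

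The only delicate point is the representation-theoretic identification of these invariant subspaces — verifying that the four listed matrices (respectively rectangular matrices) really span the full space of $O(d_q-2)$-invariants, which is valid once $d_q-2\ge 2$, a condition that holds here since $d_q=d^{\eta_q}\to\infty$ — together with the bookkeeping of non-generic configurations of $\btheta_i^\pq$; neither is a genuine obstacle, so the statement is a direct extension of \cite[Lemma 7]{ghorbani2019linearized}. If explicit formulas for the scalar coefficients $u^\pqqp_\bullet$ are needed downstream (as in Lemma \ref{lem:formula_NT_Kernel}), they would then be obtained by expanding $\bx^\pq(\bx^\pq)^\sT$ against the tensor-Gegenbauer basis and invoking Lemma \ref{lem:prod_gegenbauer_decomposition}.
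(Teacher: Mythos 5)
Your argument is correct, and it takes a genuinely different route from the paper. The paper proves this lemma constructively: in the rotated coordinate frame \eqref{eq:basis_rotation} where $\btheta_1^\pq$ and $\btheta_2^\pq$ are aligned with the first two coordinate axes, it writes out the entries of $\bu^\pqq$ (respectively $\bu^\pqqp$) explicitly, observes that only the top-left $2\times 2$ block is nontrivial with the rest proportional to the identity (respectively zero), and then matches coefficients against the proposed four-term ansatz, yielding a $4\times 4$ linear system \eqref{eq:forumula_uzz} whose matrix $\bM^\pqq$ is shown to be a.s.\ invertible; the degenerate case $\btheta_1^\pq=\btheta_2^\pq$ is handled as a separate $2\times 2$ system. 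You instead argue by symmetry alone: the per-block $O(d_q)$-equivariance of $\bu^\pqqp$, specialized to the pointwise stabilizer $O(d_q-2)$ of $\{\btheta_1^\pq,\btheta_2^\pq\}$, forces every column into $\mathrm{span}(\btheta_1^\pq,\btheta_2^\pq)$ (and every row into $\mathrm{span}(\btheta_1^\pqp,\btheta_2^\pqp)$) in the off-diagonal case, and forces $\bu^\pqq$ to lie in the four-dimensional space of $O(d_q-2)$-invariant symmetric matrices in the diagonal case, with Schur's lemma on the irreducible standard representation of $O(d_q-2)$ closing the argument (valid once $d_q\ge 4$). The two proofs buy different things: yours is cleaner, dimension-free in spirit, and exposes the structural reason the decomposition exists, while the paper's explicit linear system \eqref{eq:forumula_uzz}--\eqref{eq:u_bar_coeff-Mmatrix} is not merely a proof device but the mechanism by which concrete bounds on the scalar coefficients $u^\pqq_{a,b}$ are extracted in Steps 4--5 of Proposition~\ref{prop:kernel_lower_bound_NTK_NU}; you correctly flag this, though in the paper the explicit coefficients are read off from inverting $\bM^\pqq$ and inserting the Gegenbauer expansions of $\Trace(\bar\bU^\pqq_{ij})$ and the quadratic forms, not from expanding $\bx^\pq(\bx^\pq)^\sT$ directly. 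Your handling of degenerate $\btheta_1^\pq\parallel\btheta_2^\pq$ by ``continuity'' is slightly loose as phrased, but the conclusion is right because when the two vectors coincide the matrix is $O(d_q-1)$-invariant and hence of the form $\alpha\,\id_{d_q}+\beta\,\btheta_1^\pq(\btheta_1^\pq)^\sT$, which manifestly fits the ansatz; only uniqueness of the coefficients is lost, and the lemma asserts only existence.
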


\begin{proof}[Proof of Lemma \ref{lem:decomposition_U_NT_NU}]

Denote $\gamma^\pq = \< \obtheta^\pq_1 , \obtheta^\pq_2 \>/ d_q$. Let us rotate each sphere $q \in [Q]$ such that
\begin{equation}\label{eq:basis_rotation}
\begin{aligned}
\btheta^\pq_1 = & \lp \tau^\pq_1 \sqrt{d_q} , 0 , \ldots , 0 \rp, \\
\btheta^\pq_2 = & \lp \tau^\pq_2 \sqrt{d_q} \gamma^\pq , \tau^\pq_2 \sqrt{d_q} \sqrt{1 - (\gamma^\pq)^2}, 0  , \ldots ,  0\rp.
\end{aligned}
\end{equation}

\noindent \textbf{Step 1:}  $\bu^\pqq$.

Let us start with $\bu^\pqq$. For clarity, we will denote (in the rotated basis~\eqref{eq:basis_rotation})
\[
\begin{aligned}
\alpha_1 =  \< \btheta_1 , \bx \> /R= & \sum_{q \in [Q]} \tau_1^\pq \sqrt{d_q} /R \cdot \ox^\pq_1 , \\
\alpha_2 =  \< \btheta_2 , \bx \> /R = &   \sum_{q \in [Q]}  \lbb \tau_2^\pq \sqrt{d_q} \gamma^\pq /R \cdot \ox^\pq_1   +  \tau_2^\pq \sqrt{d_q} \sqrt{1 - (\gamma^\pq)^2}  / R \cdot \ox^\pq_2   \rbb .
\end{aligned}
\]
Then it is easy to show that we can rewrite
\[
\begin{aligned}
 \bu^\pqq (\btheta_1 , \btheta_2) = \E_{\bx} [ \sigma ' ( \alpha_1 ) \sigma ' ( \alpha_2 ) \bx^\pq (\bx^\pq)^\sT = \begin{bmatrix} 
\bu^\pqq_{1:2,1:2} & \bzero  \\
\bzero & \E_{\bx}[\sigma'(\alpha_1 ) \sigma'(\alpha_2 ) (x^\pq_3)^2]  \id_{d_q-2}
\end{bmatrix},
\end{aligned}
\]
with
\[
\begin{aligned}
\bu^\pqq_{1:2,1:2}  = \begin{bmatrix} 
 \E_{\bx}[\sigma'(\alpha_1 ) \sigma'(\alpha_2 ) (x^\pq_1)^2] & \E_{\bx}[\sigma'(\alpha_1 ) \sigma'(\alpha_2 ) x^\pq_1 x^\pq_2 ]  \\
\E_{\bx}[\sigma'(\alpha_1 ) \sigma'(\alpha_2 )  x^\pq_2  x^\pq_1] & \E_{\bx}[\sigma'(\alpha_1 ) \sigma'(\alpha_2 ) (x_2^\pq)^2]
\end{bmatrix}.
\end{aligned}
\]

\noindent \textbf{Case (a):} $\btheta^\pq_1 \neq \btheta^\pq_2$.

Given any functions $u^\pqq_1 , u^\pqq_2 , u^\pqq_{3,1} , u^\pqq_{3,2} : \S^{D-1} (\sqrt{D}) \times \S^{D-1} (\sqrt{D}) \to \R$, we define
\[
\begin{aligned}
\Tilde  \bu^\pqq ( \btheta_1 , \btheta_2 ) =&  u^\pqq_1 (\btheta_1 , \btheta_2 ) \id_{d_1} + u^\pqq_2 (\btheta_1 , \btheta_2 )  [ \btheta^\pq_1 (\btheta^\pq_2 )^\sT +\btheta^\pq_2  (\btheta^\pq_1)^\sT ] \\
& + u^\pqq_{3,1} (\btheta_1 , \btheta_2 )  \btheta^\pq_1 (\btheta^\pq_1)^\sT +  u^\pqq_{3,2} (\btheta_1 , \btheta_2 )  \btheta^\pq_2  (\btheta^\pq_2 )^\sT.
\end{aligned}
\]
In the rotated basis \eqref{eq:basis_rotation}, we have 
\[
\begin{aligned}
\Tilde  \bu^\pqq (\btheta_1 , \btheta_2) = \begin{bmatrix} 
\Tilde \bu^\pqq_{1:2,1:2} & \bzero  \\
\bzero & u^\pqq_1 (\btheta_1 , \btheta_2 )   \id_{d_q-2}
\end{bmatrix},
\end{aligned}
\]
where (we dropped the dependency on $(\btheta_1 , \btheta_2)$ for clarity)
\[
\begin{aligned}
 \bu^\pqq_{11} =&u^\pqq_1  + 2 \tau^\pq_1 \tau^\pq_2 d_q \gamma^\pq u^\pqq_2  + (\tau^\pq_1)^2 d_q u^\pqq_{3,1} +  (\tau^\pq_2)^2 d_q (\gamma^\pq )^2 u^\pqq_{3,2},  \\
 \bu^\pqq_{12} =&  \tau_1^\pq \tau_2^\pq d_q \sqrt{1 - (\gamma^\pq)^2} u^\pqq_2 + (\tau^\pq_2)^2 d_q \gamma^\pq \sqrt{1 - (\gamma^\pq)^2} \, u^\pqq_{3,2} , \\
\bu^\pqq_{22} =& u^\pqq_1 + (\tau^\pq_2)^2 d_q ( 1 - (\gamma^\pq)^2 ) u^\pqq_{3,2} .
\end{aligned}
\]
We see that $\bu^\pqq$ and $\Tilde \bu^\pqq$ will be equal if and only if we have the following equalities:
\[
\begin{aligned}
\Trace ( \bu^\pqq (\btheta_1 , \btheta_2 )) = & \Trace ( \Tilde \bu^\pqq (\btheta_1 , \btheta_2 )) \\
=&d_q  u^\pqq_1 + 2 \tau^\pq_1 \tau^\pq_2 d_q \gamma^\pq u^\pqq_2 + (\tau^\pq_1)^2 d_q u^\pqq_{3,1} +  (\tau^\pq_2)^2 d_q u^\pqq_{3,2} , \\
\< \btheta^\pq_1 , \bu^\pqq (\btheta_1 , \btheta_2 ) \btheta^\pq_2 \> = & \< \btheta^\pq_1 , \Tilde \bu^\pqq (\btheta_1 , \btheta_2 ) \btheta^\pq_2 \>  \\
=&  \tau_1^\pq \tau_2^\pq d_q \gamma^\pq u^\pqq_1  + (\tau_1^\pq)^2 ( \tau^\pq_2)^2 d_q^2 (1 + (\gamma^\pq)^2) u^\pqq_2 \\
& + (\tau_1^\pq)^3 \tau_2^\pq d_q^2 \gamma^\pq  u^\pqq_{3,1} +  \tau_1^\pq (\tau_2^\pq)^3 d_q^2 \gamma^\pq  u^\pqq_{3,1} ,  \\
\< \btheta^\pq_1 , \bu^\pqq (\btheta_1 , \btheta_2 ) \btheta^\pq_1 \> = & \< \btheta^\pq_1 , \Tilde \bu^\pqq (\btheta_1 , \btheta_2 ) \btheta^\pq_1 \> \\
 =&( \tau_1^\pq)^2 d_q  u^\pqq_1  + 2 (\tau^\pq_1)^3 \tau^\pq_2 d_q^2 \gamma^\pq u^\pqq_2\\
 &  + (\tau^\pq_1)^4 d_q^2 u^\pqq_{3,1} +  ( \tau_1^\pq)^2  (\tau^\pq_2)^2 d_q^2 (\gamma^\pq )^2 u^\pqq_{3,2},  \\
\< \btheta^\pq_2 , \bu^\pqq (\btheta_1 , \btheta_2 ) \btheta^\pq_2 \> = & \< \btheta^\pq_2 , \Tilde \bu^\pqq (\btheta_1 , \btheta_2 ) \btheta^\pq_2 \> \\
 = & (\tau^\pq_2)^2 d_q u^\pqq_1  + 2 \tau_1^\pq (\tau_2^\pq)^3 d^2_q \gamma^\pq u^\pqq_2 \\
 & + (\tau_1^\pq)^2 ( \tau_2^\pq)^2 d_q^2 (\gamma^\pq)^2 u^\pqq_{3,1} + ( \tau^\pq_2)^4 d_q^2 u^\pqq_{3,2}.
\end{aligned}
\]

Hence $\Tilde \bu^\pqq = \bu^\pqq$ if and only if
\begin{equation}\label{eq:forumula_uzz}
\begin{bmatrix} 
u^\pqq_1 \\
u^\pqq_2  \\
u^\pqq_{3,1}   \\
u^\pqq_{3,2}
\end{bmatrix} = d_q^{-1} (\bM^\pqq)^{-1} \times \begin{bmatrix} 
\Trace ( \bu^\pqq (\btheta_1 , \btheta_2 )) \\
\< \btheta^\pq_1 , \bu^\pqq (\btheta_1 , \btheta_2 ) \btheta^\pq_2 \>\\
\< \btheta^\pq_1 , \bu^\pqq (\btheta_1 , \btheta_2 ) \btheta^\pq_1 \> \\
\< \btheta^\pq_2 , \bu^\pqq (\btheta_1 , \btheta_2 ) \btheta^\pq_2 \>
\end{bmatrix} ,
\end{equation}
where
\[
\bM^\pqq = \begin{bmatrix} 
1 & 2 \tau^\pq_1 \tau^\pq_2  \gamma^\pq & (\tau^\pq_1)^2 & (\tau^\pq_2)^2 \\
\tau_1^\pq \tau_2^\pq  \gamma^\pq & (\tau_1^\pq)^2 ( \tau^\pq_2)^2 d_q (1 + (\gamma^\pq)^2) & (\tau_1^\pq)^3 \tau_2^\pq d_q \gamma^\pq  &  \tau_1^\pq (\tau_2^\pq)^3 d_q \gamma^\pq  \\
( \tau_1^\pq)^2   & 2 (\tau^\pq_1)^3 \tau^\pq_2 d_q \gamma^\pq  & (\tau^\pq_1)^4 d_q  & ( \tau_1^\pq)^2  (\tau^\pq_2)^2 d_q (\gamma^\pq )^2  \\
(\tau^\pq_2)^2  &  2 \tau_1^\pq (\tau_2^\pq)^3 d_q \gamma^\pq & (\tau_1^\pq)^2 ( \tau_2^\pq)^2 d_q (\gamma^\pq)^2  & ( \tau^\pq_2)^4 d_q
\end{bmatrix}
\]
is invertible almost surely (for $\tau^\pq_1 , \tau^\pq_2 \neq 0$ and $\gamma^\pq \neq 1$).

\noindent \textbf{Case (b):} $\btheta^\pq_1 = \btheta^\pq_2$.

Similarly, for some fixed $\alpha$ and $\beta$, we define
\[
\Tilde  \bu^\pqq ( \btheta_1 , \btheta_1 ) = \alpha \id_{d_q} + \beta \btheta^\pq_1  (\btheta^\pq_1)^\sT.
\]
Then $\bu^\pqq ( \btheta_1 , \btheta_1 )$ and $\Tilde \bu^\pqq ( \btheta_1 , \btheta_1 )$ are equal if and only if 
\[
\begin{bmatrix} 
\alpha \\
\beta
\end{bmatrix} = d_q^{-1} ( \bM^\pqq_{\|} )^{-1} \times \begin{bmatrix} 
\Trace ( \bu^\pqq ( \btheta_1 , \btheta_1 ) ) \\
\< \btheta_1^\pq ,\bu^\pqq ( \btheta_1 , \btheta_1 ) \btheta_1^\pq \> 
\end{bmatrix},
\]
where 
\[
 \bM^\pqq_{\|}  = \begin{bmatrix} 
1 & (\tau_1^\pq)^2 \\
(\tau_1^\pq)^2 & (\tau_1^\pq)^4 d_q
\end{bmatrix}.
\]

\noindent \textbf{Step 2:}  $\bu^\pqqp$ for $q \neq q'$.

Similarly to the two previous steps, we define for any functions $u^\pqqp_{2,1},u^\pqqp_{2,2},u^\pqqp_{3,1},u^\pqqp_{3,2}:\S^{D-1} (\sqrt{D}) \times \S^{D-1} (\sqrt{D}) \to \R$,
\[
\begin{aligned}
\Tilde \bu^\pqqp ( \btheta_1 , \btheta_2 ) = &  u^\pqqp_{2,1} (\btheta_1, \btheta_2 )   \btheta^\pq_1 (\btheta^\pqp_2)^\sT + u^\pqqp_{2,2} (\btheta_1 , \btheta_2 ) \btheta^\pq_2  (\btheta^\pqp_1 )^\sT\\
& + u^\pqqp_{3,1} (\btheta_1 , \btheta_2 )  \btheta^\pq_1 (\btheta^\pqp_1 )^\sT +  u^\pqqp_{3,2} (\btheta_1 , \btheta_2 )  \btheta^\pq_2  (\btheta^\pqp_2)^\sT.
\end{aligned}
\]
We can rewrite $\Tilde \bu^\pqqp$ as
\[
\begin{aligned}
\Tilde  \bu^\pqqp (\btheta_1 , \btheta_2) = \begin{bmatrix} 
\Tilde \bu^\pqqp_{1:2,1:2} & \bzero  \\
\bzero & \bzero
\end{bmatrix},
\end{aligned}
\]
where
\[
\begin{aligned}
\Tilde \bu^\pqqp_{11} = & u^\pqqp_{2,1} \tau_1^\pq \tau_2^\pqp \gamma^\pqp  + u^\pqqp_{2,2} \tau_2^\pq \tau_1^\pqp \gamma^\pq + u^\pqqp_{3,1} \tau_1^\pq \tau_1^\pqp + u^\pqqp_{3,2} \tau_2^\pq \tau_2^\pqp \gamma^\pq \gamma^\pqp, \\
\Tilde \bu^\pqqp_{12} = &  u^\pqqp_{2,1} \tau_1^\pq \tau_2^\pqp \sqrt{1 - (\gamma^\pqp)^2}  + u^\pqqp_{3,2} \tau_2^\pq \tau_2^\pqp \gamma^\pq \sqrt{1 - (\gamma^\pqp)^2} , \\
\Tilde \bu^\pqqp_{21} = & u^\pqqp_{2,2} \tau_2^\pq \tau_1^\pqp \sqrt{1 - ( \gamma^\pq)^2}  + u^\pqqp_{3,2} \tau_2^\pq \tau_2^\pqp \sqrt{1 - (\gamma^\pq)^2} \gamma^\pqp , \\
\Tilde \bu^\pqqp_{22} = &  u^\pqqp_{3,2} \tau_2^\pq \tau_2^\pqp \sqrt{1 - (\gamma^\pq)^2} \sqrt{1 - (\gamma^\pqp)^2} .
\end{aligned}
\]

\noindent \textbf{Case (a):} $\btheta^\pq_1 \neq \btheta^\pq_2$.

We have equality $\Tilde \bu^\pqqp = \bu^\pqqp$ if and only if 
\[
\begin{bmatrix} 
u^\pqqp_{2,1} \\
u^\pqqp_{2,2}  \\
u^\pqqp_{3,1}   \\
u^\pqqp_{3,2}
\end{bmatrix} = (d_q d_{q'} )^{-1} (\bM^\pqqp)^{-1} \times \begin{bmatrix} 
\< \btheta^\pq_1 , \bu^\pqqp (\btheta_1 , \btheta_2 ) \btheta^\pqp_1 \> \\
\< \btheta^\pq_1 , \bu^\pqqp (\btheta_1 , \btheta_2 ) \btheta^\pqp_2 \>\\
\< \btheta^\pq_2 , \bu^\pqqp (\btheta_1 , \btheta_2 ) \btheta^\pqp_1 \> \\
\< \btheta^\pq_2 , \bu^\pqqp (\btheta_1 , \btheta_2 ) \btheta^\pqp_2 \>
\end{bmatrix} ,
\]
where $\bM^\pqqp$ is given by
\[
 \begin{bmatrix} 
(\tau_1^\pq)^2 \tau_1^\pqp \tau_2^\pqp \gamma^\pqp & \tau^\pq_1 \tau^\pq_2  (\tau^\pqp_1)^2 \gamma^\pq & (\tau^\pq_1)^2 ( \tau^\pqp_1 )^2 & \tau^\pq_1 \tau^\pq_2  \tau^\pqp_1 \tau^\pqp_2 \gamma^\pq \gamma^\pqp \\
(\tau^\pq_1)^2 ( \tau^\pqp_2 )^2 & \tau^\pq_1 \tau^\pq_2  \tau^\pqp_1 \tau^\pqp_2 \gamma^\pq \gamma^\pqp & (\tau_1^\pq)^2 \tau_1^\pqp \tau_2^\pqp \gamma^\pqp & \tau^\pq_1 \tau^\pq_2  (\tau^\pqp_2)^2 \gamma^\pq \\
\tau^\pq_1 \tau^\pq_2  \tau^\pqp_1 \tau^\pqp_2 \gamma^\pq \gamma^\pqp & (\tau^\pq_2)^2 ( \tau^\pqp_1 )^2  & \tau^\pq_1 \tau^\pq_2  (\tau^\pqp_1)^2 \gamma^\pq & (\tau_2^\pq)^2 \tau_1^\pqp \tau_2^\pqp \gamma^\pqp  \\
\tau^\pq_1 \tau^\pq_2  (\tau^\pqp_2)^2 \gamma^\pq & (\tau_2^\pq)^2 \tau_1^\pqp \tau_2^\pqp \gamma^\pqp & \tau^\pq_1 \tau^\pq_2  \tau^\pqp_1 \tau^\pqp_2 \gamma^\pq \gamma^\pqp  & (\tau^\pq_2)^2 ( \tau^\pqp_2 )^2 
\end{bmatrix},
\]
which is invertible almost surely (for $\tau^\pq_1 , \tau^\pq_2 \neq 0$ and $\gamma^\pq \neq 1$).

\noindent \textbf{Case (b):} $\btheta^\pq_1= \btheta^\pq_2$.

It is straightforward to check that
\[
\bu^\pqqp ( \btheta_1 , \btheta_1 ) = \beta \btheta^\pq_1 (\btheta^\pqp_1 )^\sT,
\]
where 
\[
\beta = ( d_q d_{q'} )^{-1} ( \tau^\pq_1 \tau^\pqp_1 )^{-2} \bla \btheta^\pq_1 , \bu^\pqqp (\btheta_1 , \btheta_1) \btheta^\pqp_1 \bra.
\]
\end{proof}

\subsubsection{Proof of Proposition \ref{prop:kernel_lower_bound_NTK_NU}}

\noindent
\textbf{Step 1. Construction of the activation function $\hat \sigma$.}

Recall the definition of $\sigma_{\bd,\btau}$ in Eq.~\eqref{eq:def_sigma_d_tau_prime} and its expansion in terms of tensor product of Gegenbauer polynomials:
\[
\begin{aligned}
\sigma' (\<\btheta , \bx \>/ R ) = & \sum_{\bk \in \posint^Q }  \lambda^{\bd}_{\bk} (\sigma_{\bd,\btau} ' ) B(\bd,\bk) Q^{\bd}_{\bk} \lp \lb \< \obtheta^\pq , \obx^\pq \> \rb_{q \in [Q]} \rp, \\
\lambda^{\bd}_{\bk} (\sigma_{\bd,\btau} ' ) = & \E_{\obx} \lbb \sigma'_{\bd , \btau} \lp \ox^{(1)}_1 , \ldots , \ox^{(Q)}_1 \rp  Q^{\bd}_{\bk} \lp \sqrt{d_1}  \ox^{(1)}_1 , \ldots ,  \sqrt{d_Q} \ox^{(Q)}_1 \rp \rbb.
\end{aligned}
\]
We recall the definition of $q_{\xi} = \arg \max_{q \in[Q]} \lb \eta_q + \kappa_q \rb$. Let  $l_2 > l_1 \ge 2 L + 5$ be two indices that satisfy the conditions of Assumption \ref{ass:activation_lower_upper_NT_aniso}.$(b)$ and we define $\bl_1 = ( 0 ,\ldots , 0, l_1 , 0, \ldots, 0) $ ($l_1$ at position $q_\xi$) and $\bl_2 = ( 0 , \ldots , 0, l_2 , 0, \ldots , 0)$ ($l_2$ at position $q_\xi$). Using the Gegenbauer coefficients of $\sigma'$, we define a new activation function $\hat \sigma'$ by
\begin{align}\label{eqn:hat_sigma_definition}
\hat \sigma' (\<\btheta , \bx \>/ R) =&  \sum_{\bk \in \posint^Q \setminus \lb \bl_1 , \bl_2 \rb } \lambda^{\bd}_{\bk} (\sigma_{\bd,\btau} ' ) B(\bd,\bk) Q^{\bd}_{\bk} \lp \lb \< \obtheta^\pq , \obx^\pq \> \rb_{q \in [Q]} \rp \\
& + \sum_{t = 1,2} (1 -\delta_t)\lambda^{\bd}_{\bl_t} (\sigma_{\bd,\btau} ' )  B(d_{q_{\xi}},l_t) Q^{(d_{q_{\xi}} )}_{l_t} (\< \overline \btheta^{(q_\xi)} , \obx^{(q_{\xi})} \>),
\end{align}
for some $\delta_1, \delta_2$ that we will fix later (with $|\delta_t|\le 1$).

\noindent
\textbf{Step 2. The functions $\bu, \hat \bu$ and $\bar \bu$.}

Let $\bu$ and $\hat \bu$ be the matrix-valued functions associated respectively to $\sigma'$ and $\hat \sigma'$
\begin{align}
\bu ( \btheta_1 , \btheta_2 ) & = \E_{\bx}[\sigma'(\<\btheta_1, \bx\>/ R) \sigma'(\<\btheta_2, \bx\>/R) \bx \bx^\sT]\, , \label{eq:bu} \\
\hat \bu ( \btheta_1 , \btheta_2 ) & = \E_{\bx}[\hat \sigma'(\<\btheta_1, \bx\>/R) \hat \sigma'(\<\btheta_2, \bx\>/R) \bx \bx^\sT]\, .\label{eq:hbu}
\end{align}
From Lemma \ref{lem:decomposition_U_NT_NU}, there exists functions $u^{ab}_1, u^{ab}_{2,1}, u^{ab}_{2,2},u^{ab}_{3,1},u^{ab}_{3,2}$ and $\hat u^{ab}_1, \hat u^{ab}_{2,1}, \hat u^{ab}_{2,2},\hat u^{ab}_{3,1},\hat u^{ab}_{3,2}$ (for $a,b \in [Q]$), which decompose $\bu$ and $\hat \bu$ along $\btheta_1$ and $\btheta_2$ vectors. We define $\bar \bu = \bu - \hat \bu $. Then we have the same decomposition for $\bar u^{ab}_{k,j} = u^{ab}_{k,j}  - \hat u^{ab}_{k,j} $ for $a,b \in [Q], k =1,2,3, j=1,2$.

\noindent
\textbf{Step 3. Construction of the kernel matrices.}

Let $\bU, \hat \bU , \bar \bU \in \R^{ND \times ND}$ with $i,j$-th block (for $i, j \in [N]$) given by
\begin{align}
\bU_{ij} & = \bu ( \btheta_i , \btheta_j )\, , \label{eq:bU}\\
\hat \bU_{ij} & = \hat \bu ( \btheta_i , \btheta_j )\, , \label{eq:hbU}\\
\bar \bU_{ij} & = \bar \bu ( \btheta_i , \btheta_j ) =  \bu ( \btheta_i , \btheta_j ) - \hat \bu ( \btheta_i , \btheta_j )\, . \label{eqn:bar_U_expression}
\end{align}
Note that we have $\bU = \hat \bU + \bar \bU$. By Eq. (\ref{eq:hbU}) and (\ref{eq:hbu}), it is easy to see that $\hat \bU \succeq 0$. Then we have $\bU \succeq \bar \bU$. In the following, we would like to lower bound matrix $\bar \bU$. 

We decompose $\bar \bU  $ as
\[
\bar \bU = \bD + \bDelta,
\]
where $\bD \in \R^{DN \times DN}$ is a block-diagonal matrix, with
\begin{equation}\label{eqn:expression_D_matrix}
\bD = \diag(\bar \bU_{11} , \ldots , \bar \bU_{NN}),
\end{equation}
and $\bDelta \in \R^{DN \times DN}$ is formed by blocks $\bDelta_{ij}\in\reals^{D\times D}$ for $i, j \in [n]$,
defined by
\begin{equation}\label{eqn:expression_Delta_matrix}
\bDelta_{ij} = \begin{cases}
0, & ~~~~ i = j,\\
\bar \bU_{ij}, &~~~~ i \neq j.
\end{cases}
\end{equation}
In the rest of the proof, we will prove that $ \| \bDelta \|_{\op} = o_{d,\P} (d^{- \max_{q \in [Q]} \kappa_q } )$ and the block matrix $\bD$ verifies the properties \eqref{eq:property_D_diagonal} and \eqref{eq:property_D_non_diagonal}.

\noindent
\textbf{Step 4. Prove that $ \| \bDelta \|_{\op} = o_{d,\P} (d^{- \max_{q \in [Q]} \kappa_q } )$.}

We will prove in fact that $\| \bDelta \|^2_{F} = o_{d,\P} (d^{- 2 \max_{q \in [Q]} \kappa_q } )$. For the rest of the proof, we fix $\eps_0 \in (0,1)$ and we restrict ourselves without loss of generality to the set $\cP_{\eps_0}$.

Let us start with $\overline \bu^\pqq$ for $q \in [Q]$. Denoting $\gamma_{ij}^\pq = \< \obtheta^\pq_i , \obtheta^\pq_j \>/d_q < 1$, we get, from Eq.~\eqref{eq:forumula_uzz},
\begin{align}
\begin{bmatrix} 
\bar u^\pqq_1 ( \btheta_i, \btheta_j)  \\
\bar u^\pqq_2 ( \btheta_i, \btheta_j)\\
\bar u^\pqq_{3,1} ( \btheta_i, \btheta_j) \\
\bar u^\pqq_{3,2} ( \btheta_i, \btheta_j)
\end{bmatrix} = \begin{bmatrix} 
u_1 ( \btheta_i, \btheta_j) - \hat u_1 ( \btheta_i, \btheta_j)  \\
u_2 ( \btheta_i, \btheta_j) - \hat u_2 ( \btheta_i, \btheta_j)  \\
u_{3,1} ( \btheta_i, \btheta_j) - \hat u_{3,1} ( \btheta_i, \btheta_j) \\
u_{3,2} ( \btheta_i, \btheta_j) - \hat u_{3,2} ( \btheta_i, \btheta_j) 
\end{bmatrix} 
&= d_q^{-1} (\bM^\pqq_{ij})^{-1} \times \begin{bmatrix} 
\Trace ( \bar \bu^\pqq (\btheta_1 , \btheta_2 )) \\
\< \btheta^\pq_1 , \bar \bu^\pqq (\btheta_1 , \btheta_2 ) \btheta^\pq_2 \>\\
\< \btheta^\pq_1 ,\bar \bu^\pqq (\btheta_1 , \btheta_2 ) \btheta^\pq_1 \> \\
\< \btheta^\pq_2 ,\bar \bu^\pqq (\btheta_1 , \btheta_2 ) \btheta^\pq_2 \>
\end{bmatrix}\, ,
\label{eq:u_bar_coeff}
\end{align}
where $\bM_{ij}^\pqq$ is given by
\begin{align}
 \begin{bmatrix} 
1 & 2 \tau^\pq_1 \tau^\pq_2  \gamma^\pq_{ij} & (\tau^\pq_1)^2 & (\tau^\pq_2)^2 \\
\tau_1^\pq \tau_2^\pq  \gamma^\pq_{ij} & (\tau_1^\pq)^2 ( \tau^\pq_2)^2 d_q (1 + (\gamma^\pq_{ij})^2) & (\tau_1^\pq)^3 \tau_2^\pq d_q \gamma^\pq_{ij}  &  \tau_1^\pq (\tau_2^\pq)^3 d_q \gamma^\pq_{ij}   \\
( \tau_1^\pq)^2   & 2 (\tau^\pq_1)^3 \tau^\pq_2 d_q \gamma^\pq  & (\tau^\pq_1)^4 d_q  & ( \tau_1^\pq)^2  (\tau^\pq_2)^2 d_q (\gamma^\pq_{ij} )^2  \\
(\tau^\pq_2)^2  &  \tau_1^\pq (\tau_2^\pq)^3 d_q \gamma^\pq_{ij} & (\tau_1^\pq)^2 ( \tau_2^\pq)^2 d_q (\gamma^\pq)^2  & ( \tau^\pq_2)^4 d_q
\end{bmatrix}\, . \label{eq:u_bar_coeff-Mmatrix}
\end{align}

Using the notations of Lemma \ref{lem:prod_gegenbauer_decomposition}, we get
\[
\begin{aligned}
\Trace ( \bU^\pqq_{ij} ) = &  \E_{\bx}[\sigma'(\<\btheta_i, \bx\>/R) \sigma'(\<\btheta_j, \bx\>/R) \|\bx^\pq\|_2^2 ]  \\
=& r_q^2  \sum_{\bk \in \posint^Q} \lambda^{\bd}_{\bk} (\sigma_{\bd,\btau_i} ' )  \lambda^{\bd}_{\bk} (\sigma_{\bd,\btau_j} ' )  B(\bd,\bk) Q^{\bd}_{\bk} \lp \lb \< \obtheta^\pq_i , \obtheta^\pq_j \> \rb_{q \in [Q]} \rp , \\
\< \btheta_i^\pq , \bU^\pqq_{ij} \btheta_j^\pq \> = & \E_{\bx}[\sigma'(\<\btheta_i, \bx\> /R) \<\btheta_i^\pq, \bz^\pq\> \sigma'(\<\btheta_j, \bx\> /R ) \<\btheta_j^\pq, \bx^\pq \> ] \\
=& r_q^2 \tau^\pq_i \tau^\pq_j \sum_{\bk \in \posint^Q}\lambda^{\bd,\bone_{q} }_{\bk} (\sigma_{\bd,\btau_i} ' )  \lambda^{\bd,\bone_q }_{\bk} (\sigma_{\bd,\btau_j} ' )  B(\bd,\bk) Q^{\bd}_{\bk} \lp \lb \< \obtheta^\pq_i , \obtheta^\pq_j \> \rb_{q \in [Q]} \rp ,  \\
\< \btheta_i^\pq , \bU^\pqq_{ij} \btheta_i^\pq \> = & \E_{\bx}[\sigma'(\<\btheta_i, \bx\> / R) \<\btheta_i^\pq, \bx^\pq\>^2 \sigma'(\<\btheta_j, \bx\> /R ) ] \\
=& r_q^2 (\tau^\pq_i)^2  \sum_{\bk \in \posint^Q} \lambda^{\bd,\btwo_{q}}_{\bk} (\sigma_{\bd,\btau_i} ' )  \lambda^{\bd}_{\bk} (\sigma_{\bd,\btau_j} ' )  B(\bd,\bk) Q^{\bd}_{\bk} \lp \lb \< \obtheta^\pq_i , \obtheta^\pq_j \> \rb_{q \in [Q]} \rp, \\
\< \btheta_j^\pq , \bU^\pqq_{ij} \btheta_j^\pq \> = & \E_{\bx}[\sigma'(\<\btheta_i, \bx\> / R) \sigma'(\<\btheta_j, \bx\> /R ) \<\btheta_j^\pq, \bx^\pq \>^2 ] \\
=& r_q^2 ( \tau^\pq_j )^2 \sum_{\bk \in \posint^Q} \lambda^{\bd}_{\bk} (\sigma_{\bd,\btau_i} ' )  \lambda^{\bd,\btwo_{q}}_{\bk} (\sigma_{\bd,\btau_j} ' )  B(\bd,\bk) Q^{\bd}_{\bk} \lp \lb \< \obtheta^\pq_i , \obtheta^\pq_j \> \rb_{q \in [Q]} \rp ,
\end{aligned}
\]
where we denoted $\bone_{q} = ( 0 , \ldots , 0, 1 , 0, \ldots , 0)$ (namely the $q$'th coordinate vector in $\R^Q$) and $\btwo_{q} = ( 0 , \ldots , 0, 2 , 0 , \ldots , 0) = 2 \bone_q$.

We get similar expressions for $\hat \bU_{ij} $ with $\lambda^{\bd}_{\bk} (\sigma_{\bd,\btau} ' ) $ replaced by $\lambda^{\bd}_{\bk} (\hat \sigma_{\bd,\btau} ' ) $. Because we defined $\sigma'$ and $\hat \sigma'$ by only modifying the $\bl_1$-th and $\bl_2$-th coefficients, we get 
\begin{equation}
\begin{aligned}
\Trace ( \bar \bU_{ij}^\pqq ) = & \Trace ( \bU_{ij}^\pqq  - \hat \bU_{ij}^\pqq ) \\
= & r_q^2 \sum_{t = 1, 2}  \delta_t (2 - \delta_t)   \lambda^{\bd}_{\bl_t} ( \sigma_{\bd,\btau_i}')    \lambda^{\bd}_{\bl_t} ( \sigma_{\bd,\btau_j}') B(\bd , \bl_t) Q_{\bl_t}^{\bd} \lp \lb d_q \gamma^\pq_{ij} \rb_{q \in [Q]} \rp   .
\end{aligned}
\label{eq:trace_ubar}
\end{equation}
Recalling that $\lambda^{\bd,\bone_q}_{\bk }$ only depend on $\lambda^{\bd}_{\bk - \bone_q}$ and $\lambda^{\bd}_{\bk + \bone_q}$, and $\lambda^{\bd,\btwo_q}_{\bk }$ on $\lambda^{\bd}_{\bk - \btwo_q}$, $\lambda_{\bk}^\bd$ and $\lambda^{\bd}_{\bk + \btwo_q}$, (Lemma \ref{lem:prod_gegenbauer_decomposition}), we get 
\begin{equation}
\begin{aligned}
 & \< \btheta_i^\pq ,  \bar \bU_{ij}^\pqq  \btheta_j^\pq \> \\
= &  r_q^2 \tau^\pq_i \tau^\pq_j  \sum_{t = \lb 1, 2 \rb, \bk \in \lb \bl_t \pm \bone_q\rb }  \delta_t (2 - \delta_t) \lambda^{\bd,\bone_q}_{\bk } (\sigma_{\bd,\btau_i} ' )  \lambda^{\bd,\bone_q}_{\bk } (\sigma_{\bd,\btau_j} ' )  B(\bd, \bk ) Q^{\bd}_{\bk } \lp \lb d_q \gamma^\pq_{ij} \rb_{q \in [Q]} \rp, \\
  & \< \btheta_i^\pq ,  \bar \bU_{ij}^\pqq  \btheta_i^\pq  \>\\
= &  r_q^2  (\tau^\pq_i )^2 \sum_{t \in \lbrace 1,2\rbrace , \bk \in \lbrace \bl_t , \bl_t \pm \btwo_q \rbrace } \delta_t (2 - \delta_t) \lambda^{\bd,\btwo_q }_{\bk} (\sigma_{\bd,\btau_i} ' )  \lambda^{\bd}_{\bk} (\sigma_{\bd,\btau_j} ' )  B(\bd,\bk)  Q^{\bd}_{\bk} \lp \lb d_q \gamma^\pq_{ij} \rb_{q \in [Q]} \rp, \\
 &  \< \btheta_j^\pq ,  \bar \bU_{ij}^\pqq  \btheta_j^\pq  \>\\
 \label{eq:quadratic_ubar}
= &  r_q^2  (\tau^\pq_j )^2 \sum_{t \in \lbrace 1,2\rbrace,  \bk \in \lbrace \bl_t , \bl_t \pm \btwo_q \rbrace } \delta_t (2 - \delta_t) \lambda^{\bd}_{\bk} (\sigma_{\bd,\btau_i} ' )  \lambda^{\bd,\btwo_q}_{\bk} (\sigma_{\bd,\btau_j} ' )  B(\bd,\bk) Q^{\bd}_{\bk}\lp \lb d_q \gamma^\pq_{ij} \rb_{q \in [Q]} \rp ,
\end{aligned}
\end{equation}
where we used the convention $\lambda^\bd_{\bk} ( \sigma_{\bd,\btau}' )  =0 $ if one of the coordinates verifies $k_q < 0$.

From Lemma \ref{lem:prod_gegenbauer_decomposition}, Lemma \ref{lem:convergence_proba_Gegenbauer_coeff} and Lemma \ref{lem:convergence_Gegenbauer_coeff_0_l}, we get for $t = 1,2$ and $q \neq q_\xi$:
\begin{equation}
\begin{aligned}
\lim_{(d, \btau_i , \btau_j ) \rightarrow (+\infty, \bone , \bone )} \lambda^{\bd}_{\bl_t} ( \sigma_{\bd,\btau_i}') \lambda^{\bd}_{\bl_t} ( \sigma_{\bd,\btau_j}')   B(\bd,\bl_t)   & = \frac{\mu_{l_t} (\sigma')^2}{l_t !}, \\
\label{eq:lambdaone_limit}
\lim_{(d, \btau_i , \btau_j ) \rightarrow (+\infty, \bone , \bone )}  \lambda^{\bd,\bone_q}_{\bl_t + \bone_q} (\sigma_{\bd,\btau_i} ' )  \lambda^{\bd,\bone_q}_{\bl_t + \bone_q} (\sigma_{\bd,\btau_j} ' )  B(\bd,\bl_t + \bone_q) & = \frac{\mu_{l_t} (\sigma')^2}{l_t !}, \\
\lim_{(d, \btau_i , \btau_j ) \rightarrow (+\infty, \bone , \bone )} \lambda^{\bd,\btwo_q}_{\bl_t} (\sigma_{\bd,\btau_i} ' )  \lambda^{\bd}_{\bl_t} (\sigma_{\bd,\btau_j} ' )  B(\bd,\bl_t) & = \frac{\mu_{l_t} (\sigma')^2}{l_t!},\\
\lim_{(d, \btau_i , \btau_j ) \rightarrow (+\infty, \bone , \bone )}  \lambda^{\bd,\btwo_q}_{\bl_t + \btwo_q} (\sigma_{\bd,\btau_i} ' )  \lambda^{\bd}_{\bl_t + \btwo_q} (\sigma_{\bd, \btau_j} ' )  B(\bd,\bl_t + \btwo_q) & =0,
\end{aligned}
\end{equation}
while for $q = q_\xi$ and $u \in \lb -1 , 1 \rb$,
\begin{equation}\label{eq:gegenbauer_conv_qxi_1}
\begin{aligned}
& \lim_{(d, \btau_i , \btau_j ) \rightarrow (+\infty, \bone , \bone )}  \lambda^{\bd,\bone_{q_\xi}}_{\bl_t + u \bone_{q_{\xi}} } (\sigma_{\bd,\btau_i} ' ) [ B ( \bd, \bl_t + u\bone_{q_{\xi}} ) (l_t + u ) !]^{1/2} 
\\
= & \mu_{l_t + u + 1} ( \sigma ') + (l_t + u ) \mu_{l_t + u -1} (\sigma ') , 
\end{aligned}
\end{equation}
and for $v \in \lb -2 , 0 ,2 \rb$,
\begin{equation}\label{eq:gegenbauer_conv_qxi_2}
\begin{aligned}
& \lim_{(d, \btau_i , \btau_j ) \rightarrow (+\infty, \bone , \bone )}  \lambda^{\bd,\btwo_{q_\xi}}_{\bl_t + v  \bone_{q_{\xi}} } (\sigma_{\bd,\btau_i} ' ) [ B ( \bd, \bl_t + v \bone_{q_{\xi}} ) (l_t + v) !]^{1/2} \\
= & \mu_{l_t +v + 2} ( \sigma ') + (2 l_t + 2v +1) \mu_{l_t +v} (\sigma ') + (l_t+v) ( l_t + v - 1) \mu_{l_t + v-2} (\sigma ' ).
\end{aligned}
\end{equation}

From Lemma \eqref{lem:gegenbauer_coefficients}, we recall that the coefficients of the $k$-th Gegenbauer polynomial $Q_k^{(d)} ( x ) = \sum_{s=0}^k p^{(d)}_{k,s} x^s$
satisfy
\begin{align}
p^{(d)}_{k, s} = O_d (d^{-k/2 -s/2})\, .
\end{align}
Furthermore, Lemma \ref{lem:random_matrix_bound} shows that $\max_{i \neq j} |\< \overline \btheta_i^\pq , \overline \btheta_j^\pq \> | = O_{d,\P} (\sqrt{d_q \log d_q} )$. We deduce that
\begin{equation}
\begin{aligned}
\label{eq:sup_gegenbauer_polynomials}
\max_{i \neq j} \vert Q_{k_q}^{(d_q)} ( \< \overline \btheta_i^\pq , \overline \btheta_j^\pq \> ) \vert = & \Tilde O_{d,\P} ( d_q^{-k_q/2})
\end{aligned}
\end{equation}
Plugging the estimates \eqref{eq:lambdaone_limit} and \eqref{eq:sup_gegenbauer_polynomials} into
Eqs.~\eqref{eq:trace_ubar} and \eqref{eq:quadratic_ubar}, we obtain that
\begin{equation}
\begin{aligned}
& \max_{i \neq j} \Big\lbrace \big|\Trace (\bar \bU_{ij}^\pqq  )\big|  , \;\big|\< \btheta_i^\pq ,  \bar \bU_{ij}^\pqq  \btheta_j^\pq \>\big|   ,\;
\big|\< \btheta_i^\pq ,  \bar \bU_{ij}^\pqq \btheta_i^\pq \>,\;
\big|\< \btheta_j^\pq ,  \bar \bU_{ij}^\pqq \btheta_j^\pq \> \big| \Big\rbrace \\
= & \Tilde O_{d,\P} ( d^{2 \xi} d^{ - \eta_q   l_1 /2}).
\label{eq:boundUbar}
\end{aligned} 
\end{equation}
From Eq.~\eqref{eq:u_bar_coeff-Mmatrix}, using the fact that $\max_{i\neq j} |\gamma_{ij}^\pq|  = O_{d,\P} (\sqrt{(\log d_q)/d_q})$ and Cramer's rule for matrix inversion, it 
is easy to see that 
\begin{align}\label{eqn:M_matrix_bound}
\max_{i\neq j}\max_{l,k \in [4]}\big|((\bM_{ij}^\pqq)^{-1})_{lk}\big| = O_{d,\P} (1)\, .
\end{align}
We deduce from \eqref{eq:boundUbar}, \eqref{eq:u_bar_coeff} and \eqref{eqn:M_matrix_bound} that for $a \in [3], b \in [2]$,
\begin{equation}\label{eqn:bound_bar_u}
\max_{i \neq j } \lbrace \vert \bar u_{a,b}^\pqq (\btheta^\pq_i, \btheta^\pq_j ) \vert \rbrace = \Tilde O_{d,\P} ( d^{2 \xi } d^{- \eta_q l_1/2}). 
\end{equation}
As a result, combining Eq. (\ref{eqn:bound_bar_u}) with Eq. (\ref{eqn:bar_U_expression}) in the expression of $\overline u^\pqq$ given in Lemma \ref{lem:decomposition_U_NT_NU}, we get
\[
\begin{aligned}
 & \max_{i\neq j}\| \bar \bU_{ij}^\pqq \|_{F}^2 \\
 =& \max_{i\neq j} \| \bar u^\pqq_1  \id_{d_q} + \bar u^\pqq_2  [ \btheta^\pq_i (\btheta^\pq_j )^\sT +\btheta^\pq_j  (\btheta^\pq_i)^\sT ]+ \bar u^\pqq_{3,1} \btheta^\pq_i (\btheta^\pq_i)^\sT +  \bar u^\pqq_{3,2} \btheta^\pq_j  (\btheta^\pq_j )^\sT  \|_F^2 \\
 \le &  \Tilde O_{d,\P} ( d^{6 \xi } d^{- \eta_q l_1} ).
 \end{aligned}
\]
A similar computation shows that
\[
 \max_{i\neq j}\| \bar \bU_{ij}^\pqqp \|_{F}^2 \le   \Tilde O_{d,\P} ( d^{6\xi} d^{ -\eta_q l_1}).
\]
By the expression of $\bDelta$ given by \eqref{eqn:expression_Delta_matrix}, we conclude that
\[
\| \bDelta \|_{\op}^2 \le \| \bDelta \|^2_{F} = \sum_{q,q' \in [Q]}  \sum_{i,j = 1, i \neq j}^N \| \bar \bU_{ij}^\pqqp   \|_{F}^2 = \Tilde O_{d,\P} ( N^2 d^{6 \xi - \eta_q l_1 }) .
\]
By assumption, $ N = o_d ( d^\gamma ) $. Hence, since by assumption $\eta_q l_1 \ge 2\gamma +7\xi $, we deduce that $ \| \bDelta \|_{\op} = o_{d,\P} (d^{-\xi}) = o_{d,\P} (d^{-\max_{q \in [Q]} \kappa_q } )$.

\noindent
\textbf{Step 5. Checking the properties of matrix $\bD$.}

By Lemma \ref{lem:decomposition_U_NT_NU}, we can express $\bar \bU_{ii}$ as a block matrix with
\[
\bar \bU_{ii}^\pqq = \alpha^\pq \id_{d_q} + \beta^\pq \btheta_i^\pq (\btheta_i^\pq)^\sT, \qquad \bar \bU_{ii}^\pqqp = \beta^\pqqp \btheta_i^\pq (\btheta_i^\pqp)^\sT,
\]
with coefficients given by
\begin{equation}\label{eq:u_bar_cst}
\begin{aligned}
\begin{bmatrix} 
\alpha^\pq \\
\beta^\pq
\end{bmatrix} =&  [d_q(d_q-1) (\tau^\pq_i)^4 ]^{-1} \begin{bmatrix} 
d_q (\tau^\pq_i)^4 & - (\tau^\pq_i)^2 \\
- (\tau^\pq_i)^2 & 1  
\end{bmatrix} \times \begin{bmatrix} 
\Trace ( \bar \bU_{ii}^\pqq) \\
\< \btheta_i^\pq , \bar \bU_{ii}^\pqq \btheta_i^\pq \> 
\end{bmatrix} , \\
\beta^\pqqp = &  ( d_q d_{q'} )^{-1} ( \tau^\pq_i \tau^\pqp_i )^{-2} \< \btheta^\pq_i , \bar \bU^\pqqp_{ii} (\btheta_i , \btheta_i) \btheta^\pqp_i \>.
\end{aligned}
\end{equation}

Let us first focus on the $q = q_\xi$ sphere. Using Eqs.~\eqref{eq:trace_ubar} and \eqref{eq:quadratic_ubar} with the expressions \eqref{eq:gegenbauer_conv_qxi_1} and \eqref{eq:gegenbauer_conv_qxi_2}, we get the following convergence in probability (using that $\lbrace \tau_i^{(q)} \rbrace_{i \in [N]}$ concentrates on $1$),
\begin{equation}
\begin{aligned}
\sup_{i \in [N]} \Big\vert r_{q_\xi}^{-2} \Trace ( \bar \bU_{ii}^{(q_\xi q_\xi)}) - F_1 ( \bdelta ) \Big\vert \overset{\P}{\to} & 0 \, , \\
\label{eq:convergence_to_F_1_F_2_qxi}
 \sup_{i \in [N]} \Big\vert r_{q_\xi}^{-2} \< \btheta_i^{(q_\xi)} , \bar \bU_{ii}^{(q_\xi q_\xi)} \btheta_i^{(q_\xi)} \> - F_2 ( \bdelta ) \Big\vert \overset{\P}{\to} & 0 \, ,
\end{aligned}
\end{equation}
where we denoted $\bdelta=(\delta_1,\delta_2)$ (where $\delta_1, \delta_2$ first appears in the definition of $\hat \sigma$ in Eq. (\ref{eqn:hat_sigma_definition}), and till now $\delta_1, \delta_2$ are still not determined) and, similarly to the proof of \cite[Proposition 5]{ghorbani2019linearized} and letting $\mu_k \equiv \mu_k (\sigma')$,
we have
\begin{align}
F_1(\bdelta) & = \sum_{t \in\{ 1,2\}}\delta_t (2 - \delta_t) \frac{\mu_{l_t}^2 }{l_t !}\, ,
\end{align}
while, for $l_2\neq l_1+2$
\begin{align*}
F_2(\bdelta) & =\sum_{t \in\{ 1,2\}}\left\{\frac{1}{(l_t-1)!} \Big[(\mu_{l_t}+(l_t-1)\mu_{l_t-2})^2-((1-\delta_t)\mu_{l_t}+(l_t-1)\mu_{l_t-2})^2\Big]\right.\\
&\phantom{AAAAA}+\left. \frac{1}{(l_t+1)!}\Big[(\mu_{l_t+2}+(l_t+1)\mu_{l_t})^2-(\mu_{l_t+2}+(1-\delta_t)(l_t+1)\mu_{l_t})^2\Big]
\right\}\, ,
\end{align*}
while, for $l_2= l_1+2$
\begin{align*}
F_2(\bdelta)  = &\frac{1}{(l_1-1)!} \Big[(\mu_{l_1}+(l_1-1)\mu_{l_1-2})^2-((1-\delta_1)\mu_{l_1}+(l_1-1)\mu_{l_1-2})^2\Big]\\
&+\frac{1}{(l_1+1)!} \Big[(\mu_{l_1+2}+(l_1+1)\mu_{l_1})^2-((1-\delta_2)\mu_{l_1+2}+(1-\delta_1)(l_1+1)\mu_{l_1})^2\Big]\\
&+\frac{1}{(l_2+1)!}\Big[(\mu_{l_2+2}+(l_2+1)\mu_{l_2})^2-(\mu_{l_2+2}+(1-\delta_2)(l_2+1)\mu_{l_2})^2\Big]\, .
\end{align*}
We have from Eq.~\eqref{eq:u_bar_cst},
\[
\begin{aligned}
& \lambda_{\min} ( \bar U_{ii}^\pqq ) \\
=& \min \Big\lbrace \alpha^\pq , \alpha^\pq + \beta^\pq d_q ( \tau_i^\pq )^2 \Big\rbrace \\
= & \min \left\{ \frac{1}{d_q-1}\Trace ( \bar \bU_{ii}^\pqq )- \frac{1}{d_q(d_q-1) (\tau_i^\pq )^2 }\< \btheta_i^\pq , \bar \bU_{ii}^\pqq \btheta_i^\pq \> ,  \frac{1}{d_q (\tau_i^\pq )^2}\< \btheta_i^\pq , \bar \bU_{ii}^\pqq \btheta_i^\pq \> \right\}.
\end{aligned}
\]
Hence, using Eq.~\eqref{eq:convergence_to_F_1_F_2_qxi}, we get
\begin{align}
 \sup_{i \in [N]} \Big\vert  \frac{d_{q_\xi}}{r_{q_\xi}^2} \lambda_{\min} (  \bar U_{ii}^{(q_\xi q_\xi)} ) - \min \lb F_1(\bdelta),F_2(\bdelta) \rb \Big\vert \overset{\P}{\to} 0.
\label{eq:lim_U_uu}
\end{align}
Following the same reasoning as in \cite[Proposition 5]{ghorbani2019linearized}, we can verify that under Assumption \ref{ass:activation_lower_upper_NT_aniso}.$(b)$, we have  $\nabla F_1(\bzero), \nabla F_2(\bzero)\neq \bzero$ and 
$\det (\nabla F_1(\bzero) ,\nabla F_2(\bzero)) \neq 0$. We can therefore find $\bdelta = (\delta_1 , \delta_2)$ such that $F_1(\bdelta)>0$, $F_2(\bdelta)>0$.  Furthermore,
\begin{align}
 \sup_{i \in [N]} \Big\vert  \frac{d_{q_\xi}}{r_{q_\xi}^2} \lambda_{\max} (  \bar U_{ii}^{(q_\xi q_\xi)} ) - \max \lb F_1(\bdelta),F_2(\bdelta) \rb \Big\vert \overset{\P}{\to} 0.
\label{eq:lim_U_zz}
\end{align}

Similarly, we get for $q \neq q_\xi$ from Eqs.~\eqref{eq:trace_ubar} and \eqref{eq:quadratic_ubar} with the expressions \eqref{eq:lambdaone_limit} (recalling that $\lbrace \tau_i^{(q)} \rbrace_{i \in [N]}$ concentrates on $1$),
\begin{equation}
\begin{aligned}
 \sup_{i \in [N]} \Big\vert  r_q^{-2} \Trace ( \bar \bU_{ii}^\pqq) - F_1 ( \bdelta ) \Big\vert \overset{\P}{\to} & 0\, , \\
 \sup_{i \in [N]} \Big\vert r_q^{-2} \< \btheta_i^\pq , \bar \bU_{ii}^\pqq \btheta_i^\pq \> - F_1 ( \bdelta ) \Big\vert \overset{\P}{\to} & 0 \, ,  \\
\label{eq:convergence_to_F_1_F_2}
 \sup_{i \in [N]} \Big\vert ( r_q r_{q'} )^{-1} \< \btheta_i^\pq , \bar \bU_{ii}^\pqqp \btheta_i^\pqp \> \Big\vert \overset{\P}{\to} &  0 \,     .
\end{aligned}
\end{equation}
We deduce that for $q \neq q_\xi$ and $q \neq q'$,
\[
\begin{aligned}
 \sup_{i \in [N]} \Big\vert   \frac{d_q}{r_q^2} \lambda_{\min} (  \bar U_{ii}^\pqq ) - F_1(\bdelta) \Big\vert & \overset{\P}{\to}   0, \\ 
  \sup_{i \in [N]} \Big\vert   \frac{d_q}{r_q^2} \lambda_{\max} (  \bar U_{ii}^\pqq ) - F_1(\bdelta) \Big\vert &  \overset{\P}{\to}   0, \\
   \sup_{i \in [N]} \Big\vert   \frac{(d_q d_{q'} )^{1/2} }{r_q r_{q'}} \sigma_{\max} (  \bar U_{ii}^\pqqp )\Big\vert  & \overset{\P}{\to}  0,
\end{aligned}
\]
which finishes to prove properties \eqref{eq:property_D_diagonal}	and \eqref{eq:property_D_non_diagonal}.

\clearpage

\section{Proof of Theorem \ref{thm:NT_lower_upper_bound_aniso}.(b): upper bound for \NT\,model}
\label{sec:proof_NTK_upper_aniso}

\subsection{Preliminaries}

\begin{lemma}\label{lem:lower_bound_A}
Let $\sigma$ be an activation function that satisfies Assumptions \ref{ass:activation_lower_upper_NT_aniso}.$(a)$ and \ref{ass:activation_lower_upper_NT_aniso}.$(c)$ for some level $\gamma >0$. Let $\cQ = \overline{\cQ}_{\NT} (\gamma)$ as defined in Eq.~\eqref{eq:def_overcQ_NT}. Define for integer $\bk \in \posint^Q$ and $\btau,\btau' \in \R_{\ge 0}^Q$, 
\begin{equation} \label{eq:def_A_tau_tauprime}
\begin{aligned}
A^{\pq}_{(\btau,\btau'), \bk }=&  r_q^2 \cdot \Big[ t_{d_q , k_q-1}  \lambda_{\bk_{q-} }^{\bd} ( \sigma_{\bd,\btau}') \lambda_{\bk_{q-} }^{\bd} ( \sigma_{\bd,\btau' }') B(\bd, \bk_{q-}  ) \\
& \phantom{AAAAAAA} +  s_{d_q, k_q+1}  \lambda_{\bk_{q+}}^{\bd} ( \sigma_{\bd,\btau}')\lambda_{\bk_{q+}}^{\bd} ( \sigma_{\bd,\btau'} ')  B(\bd, \bk_{q+} ) \Big].
\end{aligned}
\end{equation}
with $\bk_{q+} = (k_1 , \ldots , k_q+1 , \ldots , k_Q)$ and $\bk_{q-} = (k_1 , \ldots , k_q-1 , \ldots , k_Q)$, and 
\[
s_{d, k} = \frac{k}{2k + d - 2}, \qquad t_{d, k} = \frac{k + d - 2}{2k + d - 2},
\]
with the convention $t_{d,-1} = 0$.

Then there exists constants $\eps_0 >0$ and $C>0$ such that for $d$ large enough, we have for any $\btau,\btau' \in [1 - \eps_0 , 1+ \eps_0]^Q$,
\[
\max_{\bk \in \cQ} \frac{B(\bd,\bk)}{A^{\pq}_{(\btau,\btau'), \bk }} \leq C d^{\gamma - \kappa_q} .
\]
\end{lemma}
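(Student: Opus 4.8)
This is the lower‑bound counterpart of Lemma~\ref{lem:upper_bound_A}: there we bounded $A^{(q)}_{\btau,\bk}/B(\bd,\bk)$ \emph{from above} for $\bk\notin\cQ_\NT(\gamma)$, and now we want to bound it \emph{from below} for $\bk\in\overline\cQ_\NT(\gamma)$, i.e.\ to control $B(\bd,\bk)/A^{(q)}_{(\btau,\btau'),\bk}$. It plays for the \NT\ upper bound exactly the role that Lemma~\ref{lem:bound_gegenbauer_RF_UB} plays for the \RF\ upper bound. The first step is to observe that $\cQ=\overline\cQ_\NT(\gamma)$ is a finite set: since $\xi-\kappa_q\ge\eta_q>0$ for every $q$, the defining inequality of Eq.~\eqref{eq:def_overcQ_NT} forces $\sum_q(\xi-\kappa_q)k_q\le\gamma+\xi$, hence $|\bk|\le\overline K<\infty$. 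Therefore it suffices to produce, for each of the finitely many $\bk\in\cQ$, a bound of the form $A^{(q)}_{(\btau,\btau'),\bk}\ge c\,B(\bd,\bk)\,d^{\kappa_q-\gamma}$.

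Fix such a $\bk$. By definition (Lemma~\ref{lem:formula_NT_Kernel}), $A^{(q)}_{(\btau,\btau'),\bk}$ is a sum of two contributions: one carrying the weight $r_q^2\,t_{d_q,k_q-1}\,B(\bd,\bk_{q-})$ and the factor $\lambda^{\bd}_{\bk_{q-}}(\sigma'_{\bd,\btau})\lambda^{\bd}_{\bk_{q-}}(\sigma'_{\bd,\btau'})$ (present only when $k_q\ge1$), the other carrying $r_q^2\,s_{d_q,k_q+1}\,B(\bd,\bk_{q+})$ and $\lambda^{\bd}_{\bk_{q+}}(\sigma'_{\bd,\btau})\lambda^{\bd}_{\bk_{q+}}(\sigma'_{\bd,\btau'})$. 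The degrees $\bk_{q\pm}$ satisfy $|\bk_{q\pm}|\le\overline K+1$, so by Assumption~\ref{ass:activation_lower_upper_NT_aniso}.$(c)$ the Hermite coefficients $\mu_{\cdot}(\sigma')$ that appear as limits in Lemma~\ref{lem:convergence_proba_Gegenbauer_coeff} and Lemma~\ref{lem:convergence_Gegenbauer_coeff_0_l} are nonzero at all relevant orders. Invoking those two lemmas (as in the proof of Lemma~\ref{lem:bound_gegenbauer_RF_LB}), I would choose $\eps_0>0$ small enough and $d$ large enough so that, for all $\btau,\btau'\in[1-\eps_0,1+\eps_0]^Q$, the coefficients $\lambda^{\bd}_{\bm}(\sigma'_{\bd,\btau})$ and $\lambda^{\bd}_{\bm}(\sigma'_{\bd,\btau'})$ share the sign of the limiting constant for $\bm\in\{\bk_{q-},\bk_{q+}\}$; consequently both contributions to $A^{(q)}_{(\btau,\btau'),\bk}$ are nonnegative, and moreover
\[
B(\bd,\bm)\,\lambda^{\bd}_{\bm}(\sigma'_{\bd,\btau})\,\lambda^{\bd}_{\bm}(\sigma'_{\bd,\btau'})\ \ge\ c\prod_{q'}d^{(\eta_{q'}+\kappa_{q'}-\xi)m_{q'}}.
\]
I would then keep only one contribution: the first one when $k_q\ge1$ (the dominant one, since $t_{d_q,k_q-1}=\Theta(1)$ while $s_{d_q,k_q+1}=\Theta(d^{-\eta_q})$ and $B(\bd,\bk_{q-})/B(\bd,\bk)$ gains the larger power), and the second one when $k_q=0$.

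Finally, substituting the elementary scalings $r_q^2=d^{\eta_q+\kappa_q}$, $t_{d_q,k_q-1}=\Theta(1)$, $s_{d_q,k_q+1}=\Theta(d^{-\eta_q})$, $B(\bd,\bk)=\Theta\!\big(\prod_{q'}d^{\eta_{q'}k_{q'}}\big)$, and the displayed Gegenbauer bound with $\bm=\bk_{q\mp}$, one obtains (for $k_q\ge1$)
\[
\frac{A^{(q)}_{(\btau,\btau'),\bk}}{B(\bd,\bk)}\ \ge\ c\;d^{\,\xi-\sum_{q'}(\xi-\kappa_{q'})k_{q'}},
\]
and an analogous expression when $k_q=0$. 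The definition of $\overline\cQ_\NT(\gamma)$, namely $\sum_{q'}(\xi-\kappa_{q'})k_{q'}\le\gamma+\xi-\min_{q'\in S(\bk)}\kappa_{q'}$, combined with the fact that the indices used here have $q\in S(\bk)$ with $\kappa_q$ realizing $\min_{q'\in S(\bk)}\kappa_{q'}$, then yields $A^{(q)}_{(\btau,\btau'),\bk}/B(\bd,\bk)\ge c\,d^{\kappa_q-\gamma}$, which is the claim after taking the (finite) maximum over $\bk\in\cQ$. The main obstacle is precisely this last bookkeeping: juggling the three interacting scales ($r_q^2$, the Gegenbauer normalization ratios $B(\bd,\bk_{q\pm})/B(\bd,\bk)$, and the coefficient asymptotics), and matching the exponent $\xi-\min_{q\in S(\bk)}\kappa_q$ built into $\cQ$ with the target exponent $\kappa_q$. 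A secondary technical point is selecting $\eps_0$ uniformly over the finitely many $\bk\in\cQ$ so that the sign and the quantitative lower bound on the relevant Gegenbauer coefficients hold simultaneously at $\btau$ and at $\btau'$ — this is exactly what the continuity/convergence statements of Lemma~\ref{lem:convergence_proba_Gegenbauer_coeff} and Lemma~\ref{lem:convergence_Gegenbauer_coeff_0_l} provide.
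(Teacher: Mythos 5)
Your overall strategy---invoking Lemma~\ref{lem:convergence_proba_Gegenbauer_coeff} (together with Assumption~\ref{ass:activation_lower_upper_NT_aniso}.$(c)$) to lower-bound $\lambda^{\bd}_{\bm}(\sigma'_{\bd,\btau})\lambda^{\bd}_{\bm}(\sigma'_{\bd,\btau'})$ uniformly for $\btau,\btau'$ near $\bone$, retaining the dominant summand in $A^{\pq}_{(\btau,\btau'),\bk}$ (the $\bk_{q-}$ term when $k_q\ge 1$, the $\bk_{q+}$ term when $k_q=0$), and then matching exponents using $r_q^2=d^{\eta_q+\kappa_q}$, $t_{d_q,k_q-1}=\Theta(1)$, $s_{d_q,k_q+1}=\Theta(d^{-\eta_q})$ and $B(\bd,\bk)=\Theta\bigl(\prod_{q'}d^{\eta_{q'}k_{q'}}\bigr)$---is exactly the paper's approach. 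The gap is in the final bookkeeping. You assert, as a ``fact'', that for the relevant $\bk$ one has $\kappa_q=\min_{q'\in S(\bk)}\kappa_{q'}$. This does not follow from $\bk\in\overline{\cQ}_\NT(\gamma)$: the defining inequality in Eq.~\eqref{eq:def_overcQ_NT} gives only $\sum_{q'}(\xi-\kappa_{q'})k_{q'}\le\gamma+\xi-\min_{q'\in S(\bk)}\kappa_{q'}$, and whenever $q\in S(\bk)$ is \emph{not} the minimizer (i.e.\ $\kappa_q>\min_{q'\in S(\bk)}\kappa_{q'}$) this is strictly weaker than what you need. Your exponent-chasing then produces $A^{\pq}_{(\btau,\btau'),\bk}/B(\bd,\bk)\gtrsim d^{\min_{q'\in S(\bk)}\kappa_{q'}-\gamma}$, not $d^{\kappa_q-\gamma}$, and the claimed bound $B(\bd,\bk)/A^{\pq}_{(\btau,\btau'),\bk}\le C\,d^{\gamma-\kappa_q}$ genuinely fails for such $\bk$.

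The way to close this is not to postulate that $q$ realizes the minimum, but to recognize that $\cQ$ should be $\overline{\cQ}_{\NT^\pq}(\gamma)$ from Eq.~\eqref{eq:def_overcQ_NT_q}. Its first branch gives $\sum_{q'}(\xi-\kappa_{q'})k_{q'}\le\gamma+\xi-\kappa_q$ directly when $k_q>0$, and its second branch gives $\sum_{q'}(\xi-\kappa_{q'})k_{q'}\le\gamma-(\xi-\kappa_q-\eta_q)$ when $k_q=0$; substituting those constraints into your scalings yields $A^{\pq}_{(\btau,\btau'),\bk}/B(\bd,\bk)\ge c\,d^{\kappa_q-\gamma}$ in both cases, which is the claim. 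This is indeed the set with which the lemma is invoked in the proof of Theorem~\ref{thm:upper_bound_NT_q}. To be fair, the paper's own statement writes $\cQ=\overline{\cQ}_\NT(\gamma)$ and its one-line proof is silent on this point, so you are largely inheriting what appears to be a typo in the source; in a self-contained write-up you should replace $\cQ$ by $\overline{\cQ}_{\NT^\pq}(\gamma)$ rather than inject the unjustified minimizer assumption.
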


\begin{proof}[Proof of Lemma \ref{lem:lower_bound_A}]
From Assumptions \ref{ass:activation_lower_upper_NT_aniso}.$(a)$ and \ref{ass:activation_lower_upper_NT_aniso}.$(c)$ and Lemma \ref{lem:convergence_proba_Gegenbauer_coeff}, there exists $c>0$ and $\eps_0 > 0$ such that for any $\btau , \btau' \in [1 - \eps_0 , 1 + \eps_0 ]^Q$ and $\bk \in \cQ$,
\[
\lambda_{\bk}^\bd ( \sigma_{\bd , \btau} ' ) \lambda_{\bk}^\bd ( \sigma_{\bd , \btau'} ' ) \geq c \prod_{q \in [Q]} d^{k_q ( \kappa_q - \xi ) }.
\]
Hence for $k_q > 0$, we get $\lambda_{\bk}^\bd ( \sigma_{\bd , \btau} ' ) \lambda_{\bk}^\bd ( \sigma_{\bd , \btau'} ' ) \geq c d^{-\gamma - \xi + \kappa_q}$, and for $k_q = 0$, we get $\lambda_{\bk}^\bd ( \sigma_{\bd , \btau} ' ) \lambda_{\bk}^\bd ( \sigma_{\bd , \btau'} ' ) \geq c d^{-\gamma + \xi - \eta_q - \kappa_q}$. Carefully injecting these bounds in Eq.~\eqref{eq:def_A_tau_tauprime} yields the lemma.
\end{proof}

\subsection{Proof of Theorem \ref{thm:NT_lower_upper_bound_aniso}.(b): outline}

In this proof, we will consider $Q$ sub-classes of functions corresponding to the \NT\,model restricted to the $q$-th sphere:
\[
\cF_{\NT^\pq} ( \bW) \equiv  \Big\{  f (\bx ) = \sum_{i=1}^N  \< \ba_i , \bx^\pq \> \sigma ' (\< \bw_i, \bx\> / R)\, : \,\,\, \ba_i \in \R^{d_q}, i \in [N] \Big\}.
\]
We define similarly the risk associated to this sub-model
\[
R_{\NT^\pq} (f_d , \bW ) = \inf_{f \in \cF_{\NT^\pq} ( \bW) } \E [ ( f_d (\bx) - f (\bx ) )^2 ].
\]
and approximation subspace
\begin{equation}\label{eq:def_overcQ_NT_q}
\begin{aligned}
\overline{\cQ}_{\NT^\pq} (\gamma ) = & \Big\{ \bk \in \posint^Q \Big\vert k_q > 0 \text{ and } \sum_{q \in [Q]} k_q ( \xi - \kappa_q ) \leq \gamma + (\xi - \kappa_q ) \Big\} \\
& \cup  \Big\{ \bk \in \posint^Q \Big\vert k_q = 0 \text{ and } \sum_{q \in [Q]} k_q ( \xi - \kappa_q ) \leq \gamma - (\xi - \kappa_q -\eta_q ) \Big\}.
\end{aligned}
\end{equation}

\begin{theorem}\label{thm:upper_bound_NT_q}
Let $\{ f_d \in L^2(\PS^\bd_\bkappa , \mu_{\bd}^\bkappa )\}_{d \ge 1}$ be a sequence of functions. Let $\bW = (\bw_i)_{i \in [N]}$ with $(\bw_i)_{i \in [N]} \sim \Unif(\S^{D-1})$ independently. Assume $N \ge \omega_d(d^{\gamma})$ for some positive constant $\gamma> 0$, and $ \sigma$ satisfy Assumptions \ref{ass:activation_lower_upper_NT_aniso}.(a) and \ref{ass:activation_lower_upper_NT_aniso}.(c) at level $\gamma$. Then for any $\eps > 0$, the following holds with high probability: 
\begin{align}\label{eqn:NT_upper_bound_q}
0 \le R_{\NT^\pq}(\proj_{\cQ} f_d, \bW) \le \eps \| \proj_{\cQ} f_d \|_{L^2}^2 \, ,
\end{align}
where $\cQ \equiv \overline{\cQ}_{\NT^\pq} (\gamma)$ is defined in Equation~\eqref{eq:def_overcQ_NT_q}.
\end{theorem}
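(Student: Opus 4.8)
The plan is to run, for the $\NT^\pq$ function class, the exact analogue of the proof of Theorem~\ref{thm:RF_lower_upper_bound_aniso}.(b) given in Section~\ref{sec:proof_RFK_upper_aniso}, with the scalar random-features kernel replaced by the coordinate-weighted kernel attached to the feature map $\bx\mapsto \bx^\pq\sigma'(\<\btheta,\bx\>/R)$. As there, one first restricts to the high-probability event $\cP_{\eps_0}$ of Eq.~\eqref{eq:def_P_eps} (so $\tau_i^\pq\in[1-\eps_0,1+\eps_0]$ for all $i,q$; recall $\P(\cP_{\eps_0}^c)=o_d(1)$ by Lemma~\ref{lem:bound_proba_cPeps}), uses the reparametrization $\sigma'(\<\btheta,\bx\>/R)=\sigma'_{\bd,\btau}(\{\<\obtheta^\pqp,\obx^\pqp\>/\sqrt{d_{q'}}\}_{q'\in[Q]})$, and assumes without loss of generality $f_d=\proj_\cQ f_d$, so $\of_d\in V^\bd_\cQ:=\bigoplus_{\bk\in\cQ}V^\bd_\bk$ with $\cQ=\overline{\cQ}_{\NT^\pq}(\gamma)$. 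It then suffices to exhibit coefficients $\ba_i^\ast=\ba_i^\ast(\bTheta)\in\R^{d_q}$ with
\[
\E_{\bTheta_{\eps_0}}\Big[\E_\bx\big[\big(f_d(\bx)-\textstyle\sum_{i\le N}\<\ba_i^\ast,\bx^\pq\>\sigma'(\<\btheta_i,\bx\>/R)\big)^2\big]\Big]\le C\,d^{\gamma}N^{-1}\|\proj_\cQ f_d\|_{L^2}^2 ,
\]
since Markov's inequality, $\P(\cP_{\eps_0}^c)=o_d(1)$, and $N=\omega_d(d^\gamma)$ then give the claim.

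For the construction I would introduce, in parallel with $\T_\btau$ of Section~\ref{sec:proof_RFK_upper_aniso}, the operator $\T^\pq_\btau:L^2(\PS^\bd)\to L^2(\PS^\bd;\R^{d_q})$, $\T^\pq_\btau g(\obtheta)=\E_\bx[g(\obx)\,\sigma'_{\bd,\btau}(\{\<\obtheta^\pqp,\obx^\pqp\>/\sqrt{d_{q'}}\})\,\bx^\pq]$, with adjoint $(\T^\pq_\btau)^\ast h(\obx)=\E_\obtheta[\<h(\obtheta),\bx^\pq\>\sigma'_{\bd,\btau}(\cdots)]$, and set $\K^\pq_{\btau,\btau'}:=(\T^\pq_\btau)^\ast\T^\pq_{\btau'}$ on $L^2(\PS^\bd)$. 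By construction its integral kernel is exactly $H^\pq$ of Lemma~\ref{lem:formula_NT_Kernel} evaluated with $\sigma'_{\bd,\btau}$ and $\sigma'_{\bd,\btau'}$, so (by that lemma together with Eq.~\eqref{eq:ProjectorGegenbauer}-type orthogonality of the tensor Gegenbauer) the $\{Y^\bd_{\bk,\bs}\}$ diagonalize $\K^\pq_{\btau,\btau'}$, with eigenvalue on $V^\bd_\bk$ equal to $A^\pq_{(\btau,\btau'),\bk}/B(\bd,\bk)$ in the notation of Eq.~\eqref{eq:def_A_tau_tauprime}. By Lemma~\ref{lem:lower_bound_A} there are $\eps_0,c>0$ so that, for $d$ large and all $\btau,\btau'\in[1-\eps_0,1+\eps_0]^Q$, this eigenvalue is $\ge c\,d^{\kappa_q-\gamma}>0$ for every $\bk\in\cQ$; hence $\K^\pq_{\btau,\btau}$ is invertible on the finite-dimensional space $V^\bd_\cQ$ with $\|(\K^\pq_{\btau,\btau})^{-1}|_{V^\bd_\cQ}\|_{\op}\le C\,d^{\gamma-\kappa_q}$, uniformly in $\btau$.

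I would then set $g_\btau:=(\K^\pq_{\btau,\btau})^{-1}|_{V^\bd_\cQ}\of_d$, $\balpha_\btau:=\T^\pq_\btau g_\btau$, and $\ba_i^\ast:=N^{-1}\balpha_{\btau_i}(\obtheta_i)$. For each $\btau\in[1-\eps_0,1+\eps_0]^Q$ one has $(\T^\pq_\btau)^\ast\balpha_\btau=\K^\pq_{\btau,\btau}g_\btau=\of_d$ on $V^\bd_\cQ$, i.e.\ $\E_\obtheta[\<\balpha_\btau(\obtheta),\bx^\pq\>\sigma'_{\bd,\btau}(\cdots)]=\of_d(\bx)$; expanding the squared loss about $\ba^\ast$ exactly as in Eqs.~\eqref{eq:expansion_squared_loss_RF}--\eqref{eq:upper_bound_RF_NU_term3}, the linear term is $-2\|f_d\|_{L^2}^2$, the off-diagonal term is $(1-N^{-1})\|f_d\|_{L^2}^2$ (Fubini plus independence of the $\btheta_i$), and only the diagonal term $N^{-1}\E_{\btau_{\eps_0},\obtheta}[\E_\bx[\<\balpha_\btau(\obtheta),\bx^\pq\>^2\sigma'_{\bd,\btau}(\cdots)^2]]$ remains. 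Writing $\bx^\pq=(r_q/\sqrt{d_q})\obx^\pq$ and using Cauchy--Schwarz on $L^2(\PS^\bd_\bkappa)$ together with the $O(1)$ bound on the fourth moment of a fixed linear functional of $\obx^\pq$ and on $\E_\bx[\sigma'(\<\btheta,\bx\>/R)^4]$ (the latter from Assumption~\ref{ass:activation_lower_upper_NT_aniso}.(a) and the Gaussian coupling Lemma~\ref{lem:coupling_convergence_Gaussian}), one gets $\E_\bx[\<\bv,\bx^\pq\>^2\sigma'_{\bd,\btau}(\cdots)^2]\le C\,d^{\kappa_q}\|\bv\|_2^2$ uniformly in $\btau,\obtheta$; and $\E_\obtheta[\|\balpha_\btau(\obtheta)\|_2^2]=\<g_\btau,\of_d\>_{L^2}=\<(\K^\pq_{\btau,\btau})^{-1}\of_d,\of_d\>_{L^2}\le C\,d^{\gamma-\kappa_q}\|\proj_\cQ f_d\|_{L^2}^2$. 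Multiplying, the diagonal term is $\le C\,d^{\gamma}N^{-1}\|\proj_\cQ f_d\|_{L^2}^2$, the $d^{\kappa_q}$ from the rescaling cancelling the $d^{-\kappa_q}$ in the inverse-norm bound, which is the displayed estimate.

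The main obstacle is the spectral control of $\K^\pq_{\btau,\btau'}$: identifying the eigenvalues as $A^\pq_{(\btau,\btau'),\bk}/B(\bd,\bk)$ via Lemma~\ref{lem:formula_NT_Kernel} and the recursion of Lemma~\ref{lem:prod_gegenbauer_decomposition}, and, more importantly, obtaining the \emph{uniform} (over $\btau,\btau'\in[1-\eps_0,1+\eps_0]^Q$) lower bound $A^\pq_{(\btau,\btau'),\bk}/B(\bd,\bk)\gtrsim d^{\kappa_q-\gamma}$ on $V^\bd_\cQ$ — this is exactly where the non-degeneracy of Assumption~\ref{ass:activation_lower_upper_NT_aniso}.(c), the Hermite--Gegenbauer coefficient asymptotics (Lemmas~\ref{lem:convergence_proba_Gegenbauer_coeff} and~\ref{lem:convergence_Gegenbauer_coeff_0_l}), and the precise shape of $\overline{\cQ}_{\NT^\pq}(\gamma)$ in~\eqref{eq:def_overcQ_NT_q} all enter, and it is packaged as Lemma~\ref{lem:lower_bound_A}. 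A secondary technical point is making sure the diagonal term collapses to $d^{\gamma}/N$ rather than $d^{\gamma+\eta_q}/N$: after the $\obx^\pq$-rescaling a generic direction of $\obx^\pq$ must be shown to contribute only an $O(1)$, not an $O(d_q)$, factor, which is what the fourth-moment Cauchy--Schwarz estimate above delivers.
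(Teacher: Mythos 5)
Your construction is essentially identical to the paper's: you build the same operator $\T^\pq_\btau$, the same representative $\ba_i^* = N^{-1}\balpha_{\btau_i}(\obtheta_i)$ (you write $\balpha_\btau = \T^\pq_\btau \H_{\btau,\btau}^{-1}\of_d$ where the paper writes $\balpha_\btau = \K_{\btau,\btau}^{-1}\T_\btau\of_d$ with $\K_{\btau,\btau}=\T_\btau\T_\btau^*$; these coincide on $V^\bd_\cQ$ by the usual push--pull identity, and the eigenvalue identification $A^\pq_{(\btau,\btau'),\bk}/B(\bd,\bk)$ is the same), the same loss expansion producing a $-2\|f_d\|^2+(1-N^{-1})\|f_d\|^2$ cancellation and a diagonal term of size $N^{-1}\cdot(\text{sup of the pointwise kernel norm})\cdot(\text{inverse-operator bound})$, the same use of Lemma~\ref{lem:lower_bound_A} for the $d^{\gamma-\kappa_q}$ inverse bound, and the same Markov/$\cP_{\eps_0}$ closing argument.

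The one step that does not go through as stated is your estimate of the diagonal term. You bound $\E_\bx[\<\bv,\bx^\pq\>^2\sigma'(\<\btheta,\bx\>/R)^2]$ by Cauchy--Schwarz into $\E_\bx[\<\bv,\bx^\pq\>^4]^{1/2}\,\E_\bx[\sigma'^4]^{1/2}$. Under Assumption~\ref{ass:activation_lower_upper_NT_aniso}.(a) one only has $\sigma'(u)^2\le c_0 e^{c_1 u^2/2}$ with $c_1<1$, so $\sigma'(u)^4\le c_0^2 e^{c_1 u^2}$, and the coupling Lemma~\ref{lem:coupling_convergence_Gaussian} shows $\E_\bx[\sigma'^4]$ is controlled by $\E_G[e^{c_1 G^2}]$, which is finite only for $c_1<1/2$. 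So with assumptions as written your diagonal bound is not justified for $c_1\in[1/2,1)$. The paper avoids this by never taking a fourth moment: it uses the explicit structure from Lemma~\ref{lem:decomposition_U_NT_NU}, namely that $\K_{\btau,\btau}(\obtheta,\obtheta)=\alpha^\pq\id_{d_q}+\beta^\pq\,\obtheta^\pq(\obtheta^\pq)^\sT$ with coefficients expressed through $\E_\bx[\|\bx^\pq\|_2^2\,\sigma'^2]$ and $\E_\bx[(\ox_1^\pq)^2\sigma'^2]$, which are second moments of $\sigma'$ against polynomially weighted Gaussian-like measures and hence bounded under $c_1<1$; this immediately gives $\|\K_{\btau,\btau}(\obtheta,\obtheta)\|_{\op}=O(r_q^2/d_q)=O(d^{\kappa_q})$ uniformly. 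Your argument can be repaired without changing its spirit by replacing Cauchy--Schwarz with H\"older at exponents $(p,q)$ with $p>1$ chosen so that $p\,c_1<1$ (so $\E_\bx[\sigma'^{2p}]^{1/p}=O(1)$) and using that the $2q$-th moment of a fixed linear functional of $\obx^\pq$ is $O_q(1)$; but as written the proof has a gap at this point.
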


\begin{remark}
From the proof of Theorem \ref{thm:NT_lower_upper_bound_aniso}.$(a)$, we have a matching lower bound for $\cF_{\NT^\pq} $. 
\end{remark}

We recall
\[
\overline{\cQ}_\NT ( \gamma ) = \blb \bk \in \posint^Q \Big\vert \sum_{q = 1}^Q (\xi - \kappa_q) k_q \leq \gamma + \lbp \xi - \min_{q \in S(\bk)} \kappa_q \rbp \brb .
\]
Notice that 
\[
\overline{\cQ}_\NT ( \gamma ) = \bigcup_{q \in Q} \overline{\cQ}_{\NT^\pq} (\gamma )
\]
Denote $q_{\bk} = \arg \min_{q\in S(\bk)} \kappa_q$, such that $\bk \in\overline{\cQ}_{\NT^{(q_{\bk}) }}$ for any $\bk \in \overline{\cQ}_\NT ( \gamma )$. Furthermore, notice that by definition for any $f \in L^2(\PS^\bd_\bkappa , \mu_{\bd}^\bkappa )$ and $q \in [Q]$,
\[
R_{\NT}(f , \bW) \leq R_{\NT^\pq}(f, \bW).
\]

Let us deduce Theorem \ref{thm:NT_lower_upper_bound_aniso}.$(b)$ from Theorem \ref{thm:upper_bound_NT_q}. Denote $\cQ = \overline{\cQ}_{\NT} (\gamma)$. We divide the $N$ neurons in $|\cQ |$ sections of size $N ' =N / |\cQ |$, i.e.\,$\bW = ( \bW_\bk )_{\bk \in \cQ}$ where $\bW_{\bk} \in \R^{ N' \times d}$. For any $\eps > 0$, we get from Theorem \ref{thm:upper_bound_NT_q} that with high probability
\[
R_{\NT}( \proj_{\cQ} f_d , \bW) \leq \sum_{\bk \in \cQ} R_{\NT^{(q_\bk)}}( \proj_{\bk} f , \bW_{\bk}) \leq \sum_{\bk \in \cQ} \eps \| \proj_{\bk} f_d \|_{L^2}^2 = \eps \| \proj_{\cQ} f_d \|_{L^2}^2.
\]

\subsection{Proof of Theorem \ref{thm:upper_bound_NT_q}}

\subsubsection{Properties of the limiting kernel}

Similarly to the proof of Theorem \ref{thm:RF_lower_upper_bound_aniso}.$(b)$, we construct a limiting kernel which is used as a proxy to upper bound the $\NT^\pq$ risk.

We recall the definition of $\PS^{\bd} = \prod_{q \in [Q]} \S^{d_q - 1} ( \sqrt{d_q} )$.  We introduce $\mathcal{L} = L^2 ( \PS^\bd  \rightarrow \R, \mu_{\bd}) $ and $\mathcal{L}_{d_q} = L^2 ( \PS^\bd  \rightarrow \R^{d_q} , \mu_{\bd} )$. For a given $\btheta \in \S^{D-1} (\sqrt{D})$ and associated vector $\btau \in \R_{\geq 0}^Q$, recall the definition of $\sigma_{\bd,\btau} '  \in \mathcal{L} $:
\[
\sigma_{\bd,\btau} '  \lp \lb \< \obtheta^\pq , \obx^\pq  \> / \sqrt{d_q} \rb_{q \in [Q]} \rp = \sigma ' \lp \sum_{q \in [Q]} \tau^\pq \cdot (r_q / R) \cdot \< \obtheta^\pq , \obx^\pq  \> / \sqrt{d_q} \rp
\]

For any $\btau \in \R_{\geq 0}^Q$, define the operator $\T_{\btau} : \cL \to \cL_{d_q}$, such that for any $g \in \cL$,
\[
\T_{\btau} g ( \obtheta ) =  \frac{r_q}{\sqrt{d_q}} \E_{\obx} \Big[  \obx^\pq \sigma_{\bd,\btau} ' \lp \lb \< \obtheta^\pq , \obx^\pq \> / \sqrt{d_q} \rb_{q \in [Q]} \rp g (\obx) \Big].
\]
The adjoint operator $\T^*_{\btau} : \cL_{d_q} \to \cL$ verifies for any $h \in \cL_{d_q}$,
\[
\T_{\btau}^* h ( \obx ) = \frac{r_q}{\sqrt{d_q}} ( \obx^\pq)^\sT \E_{\obtheta} \Big[ \sigma_{\bd,\btau} '  \lp \lb \< \obtheta^\pq , \obx^\pq \> / \sqrt{d_q} \rb_{q \in [Q]} \rp h ( \obtheta ) \Big].
\] 
We define the operator $\K_{\btau,\btau'} : \cL_{d_q} \to \cL_{d_q} $ as $\K_{\btau,\btau'} \equiv \T_{\btau} \T_{\btau'}^*$. For $h \in \cL_{d_q}$, we can write
\[
\K_{\btau,\btau'} h ( \obtheta_1 ) = \E_{\obtheta_2 } [ \K_{\btau,\btau'} ( \obtheta_1 , \obtheta_2 ) h ( \obtheta_2 ) ],
\]
where 
\[
\begin{aligned}
&\K_{\btau_1,\btau_2} ( \obtheta_1 , \obtheta_2 )\\  =&  \frac{r_q^2}{d_q}\E_{\obx} \Big[ \obx^\pq (\obx^\pq)^\sT \sigma_{\bd,\btau_1} '  \lp \lb \< \obtheta_1^\pq , \obx^\pq \> / \sqrt{d_q} \rb_{q \in [Q]} \rp \sigma_{\bd,\btau_2} '   \lp \lb \< \obtheta_2^\pq , \obx^\pq \> / \sqrt{d_q} \rb_{q \in [Q]} \rp \Big] .
\end{aligned}
\]
Define $\H_{\btau,\btau'} : \cL \to \cL$ as $\H_{\btau,\btau'} \equiv \T^*_{\btau} \T_{\btau'}$. For $g \in \cL$, we can write
\[
\H_{\btau,\btau'} g ( \obx_1 ) = \E_{\obx_2 } [ \H_{\btau,\btau'} ( \obx_1 , \obx_2 ) g ( \obx_2 ) ] 
\]
where
\[
\begin{aligned}
& \H_{\btau,\btau'} ( \obx_1 , \obx_2 ) \\ = & \frac{r_q^2}{d_q} \E_{\obtheta} \Big[ \sigma_{\bd,\btau} ' \lp \lb \< \obtheta^\pq , \obx_1^\pq \> / \sqrt{d_q} \rb_{q \in [Q]} \rp  \sigma_{\bd,\btau'} ' \lp \lb \< \obtheta^\pq , \obx_2^\pq \> / \sqrt{d_q} \rb_{q \in [Q]} \rp \Big] \< \obx^\pq_1 , \obx^\pq_2 \>.
\end{aligned}
\]
We recall the decomposition of $\sigma_{\bd,\btau} '$ in terms of tensor product of Gegenbauer polynomials:
\[
\begin{aligned}
 \sigma_{\bd,\btau} '  ( \ox_1^{(1)} , \ldots , \ox_1^{(Q)} ) = &\sum_{\bk \in \posint^Q} \lambda_{\bk}^{\bd} ( \sigma_{\bd,\btau}') B(\bd , \bk) Q^{\bd}_{\bk} \Big( \sqrt{d_1} \ox_1^{(1)} , \ldots, \sqrt{d_Q} \ox_1^{(Q)} \Big), \\
 \lambda_{\bk}^{\bd} ( \sigma_{\bd,\btau}') = & \E_{\obx} \Big[  \sigma_{\bd,\btau} '  ( \ox_1^{(1)} , \ldots , \ox_1^{(Q)} ) Q^{\bd}_{\bk } \Big( \sqrt{d_1} \ox_1^{(1)} , \ldots, \sqrt{d_Q} \ox_1^{(Q)}  \Big) \Big] .
 \end{aligned}
\]
Following the same computations as in Lemma \ref{lem:formula_NT_Kernel}, we get
\[
\begin{aligned}
 \H_{\btau,\btau'} ( \obx_1 , \obx_2 ) = &  \sum_{\bk \in \posint^Q} A^\pq_{(\btau,\btau'), \bk} Q^{\bd}_{\bk} \lp \lb \< \obx^\pq_1 , \obx^\pq_2 \> \rb_{q \in [Q]} \rp,
\end{aligned}
\]
where 
\begin{equation}\label{eq:def_gamma_UB_NT}
\begin{aligned}
A^\pq_{(\btau , \btau ' ) , \bk} =&  r_q^2 \cdot \Big[ t_{d_q, k_q-1}  \lambda_{\bk_{q-}}^{\bd} ( \sigma_{\bd,\btau}') \lambda_{\bk_{q-} }^{\bd} ( \sigma_{\bd,\btau' }') B(\bd, \bk_{q-}) \\
& \phantom{AAAAAAA} +  s_{d_q, k_q+1}  \lambda_{\bk_{q+} }^{\bd} ( \sigma_{\bd,\btau}')\lambda_{\bk_{q+} }^{\bd} ( \sigma_{\bd,\btau'} ')  B(\bd, \bk_{q+} ) \Big],
\end{aligned}
\end{equation}
with $\bk_{q+} = (k_1 , \ldots , k_q+1 , \ldots , k_Q)$ and $\bk_{q-} = (k_1 , \ldots , k_q-1 , \ldots , k_Q)$, and convention $t_{d_q,-1} = 0$,
\[
s_{d_q, k_q} = \frac{k_q}{2k_q + d_q - 2}, \qquad t_{d_q, k_q} = \frac{k_q + d_q - 2}{2k_q + d_q - 2}.
\]

Recall that for $\bk \in \posint^Q$ and $\bs \in [B(\bd,\bk)]$, $Y^{\bd}_{\bk,\bs} = \bigotimes_{q \in [Q]}  Y^{(d_q)}_{k_q s_q} $ forms an orthogonal basis of $\cL$ and that
\[
\begin{aligned}
\E_{\obx_2} \Big[ Q^{\bd}_{\bk} \lp \lb \< \obx_1 , \obx_2 \> \rb_{q \in [Q]} \rp Y^{\bd}_{\bk,\bs} ( \obx_2) \Big] = & \frac{1}{B(\bd, \bk) } Y^{\bd}_{\bk,\bs} (\obx_1 ) \delta_{\bk,\bs} .
\end{aligned}
\]
We deduce that
\[
\H_{\btau,\btau'} Y^{\bd}_{\bk,\bs} ( \obx_1 ) = \sum_{\bk' \in \posint^Q} A^\pq_{(\btau , \btau ' ) , \bk'}  \E_{\obx_2} \Big[ Q^{\bd}_{\bk'} \lp \lb \< \obx_1 , \obx_2 \> \rb_{q \in [Q]} \rp Y^{\bd}_{\bk,\bs} ( \obx_2) \Big] = \frac{A^\pq_{(\btau , \btau ' ) , \bk}}{B( \bd , \bk)} Y^{\bd}_{\bk,\bs} ( \obx_1 ).
\]
Consider $\{ \T_{\btau} Y^{\bd}_{\bk,\bs} \}_{\bk \in \posint^Q, \bs \in [B(\bd,\bk)]}$. We have:
\[
\begin{aligned}
\< \T_{\btau} Y^{\bd}_{\bk,\bs} , \T_{\btau'} Y^{\bd}_{\bk',\bs'} \>_{L^2 } = & \< Y^{\bd}_{\bk,\bs} , \H_{\btau,\btau'} Y^{\bd}_{\bk' ,\bs '}  \>_{L^2} = \frac{A^\pq_{(\btau , \btau ' ) , \bk}}{B( \bd , \bk)}  \delta_{\bk,\bk'} \delta_{\bs,\bs'}, \\
\K_{\btau,\btau'}\T_{\btau ''}  Y^{\bd}_{\bk,\bs} = & \T_{\btau} \H_{\btau',\btau''}  Y^{\bd}_{\bk,\bs} =   \frac{A^\pq_{(\btau' , \btau '' ) , \bk}}{B( \bd , \bk)} \T_{\btau} Y^{\bd}_{\bk,\bs}.
\end{aligned}
\] 
Hence $\{ \T_{\btau''} Y^{(\bd)}_{\bk,\bs} \}$ forms an orthogonal basis that diagonalizes $\K_{\btau',\btau''}$ (notice that $\T_{\btau} Y^{\bd}_{\bk,\bs}$ is parallel to $\T_{\btau'} Y^{\bd}_{\bk,\bs}$ for any $\btau,\btau' \in \R^Q_{\geq 0}$). Let us consider the subspace $\T_{\btau} (V^{\bd}_{\cQ} )$, the image of $V^{\bd}_{\cQ}$ by the operator $\T_{\btau}$. From Assumptions \ref{ass:activation_lower_upper_NT_aniso}.$(a)$ and \ref{ass:activation_lower_upper_NT_aniso}.$(b)$ and Lemma \ref{lem:convergence_proba_Gegenbauer_coeff}, there exists $\eps_0 \in (0 ,1 ) $ and $d_0$ such that for any $\btau,\btau' \in [1- \eps_0 , 1+ \eps_0]^Q$ and $d \ge d_0$, we have $A^\pq_{(\btau , \btau ' ) , \bk} >0 $ for any $\bk \in \cQ$, and therefore the inverse $\K_{\btau,\btau'}^{-1} \vert_{\T_{\btau} (V^{\bd}_{ \cQ} )}$ (restricted to $\T_{\btau} (V^{\bd}_{ \cQ} )$) is well defined.

\subsubsection{Proof of Theorem \ref{thm:upper_bound_NT_q}}

Let us assume that $\lbrace f_d \rbrace$ is contained in $\bigoplus_{\bk \in \cQ} \bV^\bd_{\bk}$, i.e.\,$\of_{d} = \proj_{\cQ} \of_{d}$. 

Consider 
\[
\hat{f} (\bx ; \bTheta, \ba ) = \sum_{i = 1}^N \< \ba_i , \bx^\pq \> \sigma '  (\<\btheta_i , \bx \> / R).
\]
Define $\balpha_{\btau} ( \obtheta) \equiv \K_{\btau,\btau}^{-1} \T_{\btau} \of_{d} (\obtheta)$ and choose $\ba_i^* = N^{-1} \balpha_{\btau_i} ( \obtheta_i) $, where we denoted $\obtheta_i = ( \obtheta^\pq_i )_{q \in [Q]} $ with $\obtheta^\pq_i = \btheta^\pq_i /\tau^\pq_i \in \S^{d_q-1} (\sqrt{d_q})$ independent of $\btau_i$.

Fix $\eps_0 > 0 $ as prescribed in Lemma \ref{lem:lower_bound_A} and consider the expectation over $\cP_{\eps_0}$ of the $\NT^\pq$ risk (in particular, $\ba^* = (\ba_1^* , \ldots , \ba_N^*) \in \R^{Nd_q}$ are well defined):
\[
\begin{aligned}
\E_{\bTheta_{\eps_0}} [ R_{\NT^\pq}(f_d , \bTheta) ] =& \E_{\bTheta_{\eps_0} } \Big[ \inf_{\ba \in \R^{Nd_q}} \E_{\bx} [(f_d (\bx) - \hat f (\bx; \bTheta, \ba))^2] \Big]\\
\leq & \E_{\bTheta_{\eps_0}} \Big[ \E_{\bx} \Big[(f_d (\bx) - \hat f (\bx; \bTheta, \ba^* (\bTheta) ))^2 \Big] \Big].
\end{aligned}
\]
We can expand the squared loss at $\ba$ as
\begin{equation}
\begin{aligned}
\E_\bx [ (f_d (\bx) - \hat{f} (\bx) )^2 ] = & \| f_d \|^2_{L^2} -2 \sum_{i=1}^N \E_{\bx} [ \<\ba_i , \bx^\pq \> \sigma ' ( \< \btheta_i , \bx \> / R  ) f_d (\bx) ] \\
\label{eq:expansion_squared_loss_NT}
& +  \sum_{i,j=1}^N \E_{\bx} [ \<\ba_i , \bx^\pq \> \<\ba_j , \bx^\pq \> \sigma ' ( \< \btheta_i , \bx \> /R  ) \sigma ' ( \< \btheta_j , \bx \> / R  ) ].
\end{aligned}
\end{equation}
The second term of the expansion \eqref{eq:expansion_squared_loss_NT} around $\ba^*$ verifies
\begin{equation}\label{eq:upper_bound_NT_NU_term1}
\begin{aligned}
& \E_{\bTheta_{\eps_0}} \Big[ \sum_{i=1}^N \E_{\bx} [ \< \ba_i^* , \bx^\pq \> \sigma ' ( \< \btheta_i , \bx \> / R  ) f_d (\bx) ] \Big] \\
= & \E_{\btau_{\eps_0}} \Big[ \E_{\obtheta} \Big[  \balpha_{\btau} ( \overline \btheta)^{\sT}  \E_{\obx} \Big[  \bx^\pq \sigma_{\bd,\btau} ' \lp \lb \< \obtheta^\pq , \obx^\pq \>/ \sqrt{d_q} \rb_{q \in [Q]} \rp  \of_{d} (\obx) \Big] \Big] \Big] \\
= &  \E_{\btau_{\eps_0} } \Big[ \< \K_{\btau,\btau}^{-1} \T_{\btau} \of_{d}  , \T_{\btau} \of_{d} \>_{L^2} \Big]\\
= & \| f_d \|^2_{L^2},
\end{aligned}
\end{equation}
where we used that for each $\btau \in [1-\eps_0,1+\eps_0]^Q$, we have $\T_{\btau}^* \K_{\btau,\btau}^{-1} \T_{\btau} = \id \vert_{V^{\bd}_{\cQ}}$.

Let us consider the third term in the expansion \eqref{eq:expansion_squared_loss_NT} around $\ba^*$: the non diagonal term verifies
\[
\begin{aligned}
& \E_{\bTheta_{\eps_0}} \Big[ \sum_{i\neq j} \E_{\bx} [ \< \ba_i^*, \bx^\pq \>\< \ba_j^* , \bx^\pq \> \sigma ' ( \< \btheta_i , \bx \> / R  ) \sigma ' ( \< \btheta_j , \bx \> / R ) ] \Big] \\
= & ( 1 - N^{-1} ) \E_{\btau_{\eps_0}^1 , \btau_{\eps_0}^2,\obtheta_1 , \obtheta_2} \Big[  \balpha_{\btau^1} ( \obtheta_1)^\sT    E_{\overline \bx} \Big[ \sigma_{\bd,\btau}'   \lp \lb \< \obtheta^\pq_1 , \obx^\pq \> /\sqrt{d_q} \rb_{q \in [Q]} \rp  \\
& \phantom{AAAAAA} \times \sigma_{\bd,\btau'}  ' \lp \lb \< \obtheta^\pq_2 , \obx^\pq \> /\sqrt{d_q} \rb_{q \in [Q]} \rp \bx^\pq (\bx^\pq)^\sT \Big] \balpha_{\btau^2} ( \obtheta_2)  \Big]  \\
= & ( 1 - N^{-1} ) \E_{\btau_{\eps_0}^1 , \btau_{\eps_0}^2,\obtheta_1 , \obtheta_2} \Big[  \K^{-1}_{\btau^1,\btau^1} \T_{\btau^1} \of_{d} ( \obtheta_1 )^\sT  \K_{\btau^1 , \btau^2 } ( \obtheta_1 , \obtheta_2 ) \K^{-1}_{\btau^2,\btau^2} \T_{\btau^2} \of_{d} ( \obtheta_2 ) \Big] \\
= &  ( 1 - N^{-1} )  \E_{\btau_{\eps_0}^1, \btau_{\eps_0}^2 } \Big[ \< \K_{\btau^1,\btau^1}^{-1} \T_{\btau^1} \of_{d}  , \K_{\btau^1,\btau^2} \K^{-1}_{\btau^2,\btau^2} \T_{\btau^2} \of_{d} \>_{L^2} \Big].
\end{aligned}
\]
For $\bk \in \cQ$ and $\bs \in [B(\bd , \bk)]$ and $\btau^1,\btau^2 \in [1-\eps_0,1+\eps_0]^Q$, we have
\[
\begin{aligned}
\T_{\btau^1}^* \K^{-1}_{\btau^1,\btau^1} \K_{\btau^1,\btau^2} \K^{-1}_{\btau^2,\btau^2} \T_{\btau^2}  Y^\bd_{\bk,\bs} =&  \Big(  \T_{\btau^1}^* \K^{-1}_{\btau^1,\btau^1} \T_{\btau^1} \Big) \cdot \Big( \T_{\btau^2}^*  \K^{-1}_{\btau^2,\btau^2} \T_{\btau^2} \Big) \cdot Y^\bd_{\bk,\bs} =  Y^\bd_{\bk,\bs}  . 
\end{aligned}
\]
Hence for any $\btau^1,\btau^2 \in [1-\eps_0,1+\eps_0]^Q$, $\T_{\btau^1}^* \K^{-1}_{\btau^1,\btau^1} \K_{\btau^1,\btau^2} \K^{-1}_{\btau^2,\btau^2} \T_{\btau^2}   = \id
\vert_{V^{\bd}_{\cQ}}$. Hence
\begin{equation}\label{eq:upper_bound_NT_NU_term2}
\E_{\bTheta_{\eps_0}} \Big[ \sum_{i\neq j} \E_{\bx} [ \< \ba_i^*, \bx^\pq \>\< \ba_j^* , \bx^\pq \> \sigma ' ( \< \btheta_i , \bx \> / R ) \sigma ' ( \< \btheta_j , \bx \> /R ) ] \Big]  = (1 - N^{-1} ) \| f_d \|^2_{L^2}.
\end{equation}
The diagonal term verifies
\[
\begin{aligned}
 & \E_{\bTheta_{\eps_0}} \Big[ \sum_{i \in [N]} \E_{\bx} [ \< \ba_i^* ,\bx^\pq \>^2  \sigma ' ( \< \btheta_i , \bx \> / R ) \sigma  '( \< \btheta_j , \bx \> / R  ) ] \Big] \\
 = & N^{-1} \E_{\btau_{\eps_0} , \obtheta} \Big[  \balpha_{\btau} ( \obtheta)^\sT \K_{\btau,\btau} ( \obtheta , \obtheta ) \balpha_{\btau} ( \obtheta)  \Big]   \\
\le & N^{-1} \Big[ \max_{\obtheta, \btau \in [1- \eps_0, 1+\eps_0]^Q}  \| \K_{\btau,\btau} ( \obtheta , \obtheta ) \|_{\op}\Big] \cdot \E_{\btau_{\eps_0}} [ \| \K^{-1}_{\btau,\btau} \T_{\btau} \of_{d} \|^2_{L^2} ].
\end{aligned}
\]
We have, from Lemma \ref{lem:decomposition_U_NT_NU},
\[
\K_{\btau , \btau} ( \obtheta , \obtheta ) = \alpha^\pq \id_{d_q} + \beta^\pq \obtheta^\pq (\obtheta^\pq)^2
\]
where
\[
\begin{aligned}
\begin{bmatrix} 
\alpha^\pq \\
\beta^\pq
\end{bmatrix} = &  [d_q(d_q-1) (\tau^\pq)^4 ]^{-1} \begin{bmatrix} 
d_q (\tau^\pq)^4 & - (\tau^\pq)^2 \\
- (\tau^\pq)^2 & 1  
\end{bmatrix} \\
& \phantom{AAAAAAAAAAAAA}\times \begin{bmatrix} 
\E_{\bx} [ \< \bx^\pq , \bx^\pq \> \sigma'_{\bd,\btau} ( \ox_1^{(1)} , \ldots , \ox_1^{(Q)} )^2 ]  \\
\E_{\bx} [(\ox^\pq_1)^2 \sigma'_{\bd,\btau} ( \ox_1^{(1)} , \ldots , \ox_1^{(Q)} )^2 ]
\end{bmatrix}.
\end{aligned}
\]
Hence from Lemma \ref{lem:coupling_convergence_Gaussian} and for $\eps_0$ small enough, there exists $C>0$ such that for $d$ large enough
\[
 \sup_{\btau \in [1- \eps_0,1+\eps_0]^Q} \| \K_{\btau,\btau} ( \obtheta , \obtheta ) \|_{\op}    \leq C \frac{r_q^2}{d_q} = C d^{\kappa_q}.
\]
Furthermore
\[
\begin{aligned}
\| \K^{-1}_{\btau,\btau} \T_{\btau} \of_{d} \|^2_{L^2} = & \sum_{\bk \in \cQ} \frac{B(\bd , \bk) }{A^\pq_{(\btau,\btau),\bk} } \sum_{\bs \in [B(\bd,\bk)] } \lambda^\bd_{\bk,\bs} ( \of_{d})^2 \\
\le & \lbb \max_{k \in \cQ} \frac{B(\bd,\bk) }{A^\pq_{(\btau,\btau),\bk}  }  \rbb \cdot \| \proj_{\cQ} f_d \|^2_{L^2}.
\end{aligned}
\]
From Lemma \ref{lem:lower_bound_A}, we get
\[
\E_{\btau_{\eps_0}} [ \| \K^{-1}_{\btau,\btau} \T_{\btau} \of_{d} \|^2_{L^2} ] \leq C d^{\gamma - \kappa_q} \cdot \| \proj_{\cQ}  f_d \|^2_{L^2}.
\]
Hence,
\begin{equation}\label{eq:upper_bound_NT_NU_term3}
 \E_{\bTheta_{\eps_0}} \Big[ \sum_{i \in [N]} \E_{\bx} [ \< \ba_i^* ,\bx^\pq \>^2  \sigma ' ( \< \btheta_i , \bx \> / R  ) \sigma  '( \< \btheta_j , \bx \> / R ) ] \Big] \leq C  \frac{d^{\gamma}}{N} \| \proj_{\cQ}  f_d \|^2_{L^2}.
\end{equation}

Combining Eq.~\eqref{eq:upper_bound_NT_NU_term1}, Eq.~\eqref{eq:upper_bound_NT_NU_term2} and Eq.~\eqref{eq:upper_bound_NT_NU_term3}, we get
\[
\begin{aligned}
& \E_{\bTheta_{\eps_0}} \Big[ \E_{\bx} \Big[(f_d (\bx) - \hat f (\bx; \bTheta, \ba^* (\bTheta) ))^2 \Big] \Big] \\ 
=& \| f_d \|_{L^2}^2 - 2 \| f_d \|_{L^2}^2 + (1 - N^{-1})  \| f_d \|_{L^2}^2 +N^{-1} \E_{\btau_{\eps_0} , \obtheta} \Big[  \balpha_{\tau} ( \overline \btheta)^\sT \K_{\tau,\tau} ( \obtheta , \obtheta ) \balpha_{\tau} ( \overline \btheta)  \Big]  \\
 \le &C \frac{d^{\gamma}}{ N} \| \proj_{\cQ}  f_d \|^2_{L^2}.
\end{aligned}
\]
By Markov's inequality, we get for any $\eps >0$ and $d$ large enough,
\[
\begin{aligned}
\P ( R_{\NT^\pq}(f_d , \bTheta) > \eps \cdot \| f_d \|_{L^2}^2 ) \leq & \P ( \lbrace R_{\NT^\pq}(f_d , \bTheta) > \eps \cdot  \| f_d \|_{L^2}^2 \rbrace \cap \cP_{\eps_0} )  + \P (\cP_{\eps_0}^c ) \\ \leq & C' \frac{d^{\gamma}}{ N } + \P (\cP_{\eps_0}^c ).
\end{aligned}
\]
The assumption that $ N = \omega_d ( d^\gamma )$ and Lemma \ref{lem:bound_proba_cPeps} conclude the proof.

\clearpage

\section{Proof of Theorem \ref{thm:NN} in the main text}
%

\noindent
{\bf Step 1. Show that $R_{\NN, 2 N}(f_*) \le \inf_{\bW \in \R^{N \times d}} R_{\NT, N}(f_*, \bW)$. }

Define the neural tangent model with $N$ neurons by $\hat f_{\NT, N}(\bx; \bs; \bW) = \sum_{i = 1}^N \< \bs_i, \bx\> \sigma'(\< \bw_i, \bx\>)$ and the neural networks with $N$ neurons by $\hat f_{\NN, N}(\bx; \bW, \bb) = \sum_{i = 1}^N b_i \sigma(\< \bw_i, \bx\>)$. For any $\bW \in \R^{N \times d}$, $\bs \in \R^N$, and $\eps > 0$, we define
\[
\begin{aligned}
\hat g_N(\bx; \bW, \bs, \eps) \equiv&~ \eps^{-1}\Big(\hat f_{\NN, N}(\bx; \bW + \eps \bs, \bfone) - \hat f_{\NN, N}(\bx; \bW, \bfone)\Big),\\
\cE(\bx; \bW, \bs, \eps) =&~ \hat g_N(\bx; \bW, \bs, \eps) - \hat f_{\NT, N}(\bx; \bs; \bW). 
\end{aligned}
\]
Then by Taylor expansion, there exists $(\tilde \bw_i)_{i \in [N]}$ such that
\[
\vert \cE(\bx; \bW, \bs, \eps) \vert = \frac{\eps}{2} \Big\vert \sum_{i = 1}^N \< \bs_i, \bx\>^2 \sigma''(\< \tilde \bw_i, \bx\>) \Big\vert. 
\]
By the boundedness assumption of $\sup_{x \in \R} \vert \sigma''(x) \vert$, we have 
\[
\lim_{\eps \to 0+}\| \cE(\cdot; \bW, \bs, \eps) \|_{L^2}^2 = 0, 
\]
and hence 
\[
\lim_{\eps \to 0+}\| f_* - \hat g_N(\cdot; \bW, \bs, \eps) \|_{L^2}^2 = \| f_* - \hat f_{\NT, N}(\cdot ; \bs; \bW) \|_{L^2}^2. 
\]
Note that $\hat g_N$ can be regarded as a function in $\cF_{\NN}^{2N}$ and $\hat f_{\NT, N} \in \cF_{\NN}^{N}(\bW)$, this implies that 
\begin{equation}\label{eqn:NN_NT_risk_relation}
R_{\NN, 2N}(f_*) \le \inf_{\bW \in \R^{N \times d}} R_{\NT, N}(f_*, \bW). 
\end{equation}

\noindent
{\bf Step 2. Give upper bound of $\inf_{\bW \in \R^{N \times d}} R_{\NT, N}(f_*, \bW)$. } We take $\overline \bW = (\bar \bw_i)_{i \le N}$ with $\bar \bw_i = \bU \bar \bv_i$, where $\bar \bv_i \sim \Unif(\S^{d_0 -1}(r^{-1}))$, and denote $\overline \bV = (\bar \bv_i)_{i \le N}$. Then we have 
\[
\cG_{\NT}^{N}(\overline \bV) \equiv \Big\{ f(\bx) = \bar f(\bU^\sT \bx): \bar f(\bz) = \sum_{i = 1}^N \< \bar \bs_i, \bz\> \sigma'(\< \bar \bv_i, \bz\>), \bar \bs_i \in \R^{d_0}, i \le N \Big\} \subseteq \cF_{\NT}^N(\overline \bW). 
\]
It is easy to see that, when $f_*(\bx) = \vphi(\bU^\sT \bx)$, we have 
\[
\inf_{\hat f \in \cG_{\NT}^N(\overline \bV)} \E[(f_*(\bx) - \hat f(\bx) )^2] = \inf_{\hat f \in \cF_{\NT}^N(\overline \bV)} \E[(\vphi(\bz) - \hat f(\bz))^2],
\]
where $\cF_{\NT}^N(\overline \bV)$ is the class of neural tangent model on $\R^{d_0}$
\[
\cF_{\NT}^N(\overline \bV) = \Big\{ \bar f(\bz) = \sum_{i = 1}^N \< \bar \bs_i, \bz\> \sigma'(\< \bar \bv_i, \bz\>): \bar \bs_i \in \R^{d_0}, i \le N  \Big\}. 
\]
Moreover, by Theorem \ref{thm:NT} in the main text, when $d_0^{\ell + \delta} \le N \le d_0^{\ell + 1 - \delta}$ for some $\delta > 0$ independent of $N, d$, we have 
\[
\inf_{\hat f \in \cF_{\NT}^N(\overline \bV)} \E[(\vphi(\bz) - \hat f(\bz) )^2] = (1 + o_{d, \P}(1)) \cdot \| \proj_{> \ell + 1} \vphi \|_{L^2}^2 = (1 + o_{d, \P}(1)) \cdot \| \proj_{> \ell + 1} f_* \|_{L^2}^2. 
\]
As a consequence, we have 
\[
\begin{aligned}
\inf_{\bW \in \R^{N \times d}} R_{\NT, N}(f_*, \bW) \le&~ \inf_{\hat f \in \cF_{\NT}^N(\overline \bW)} \E[(f_*(\bx) - \hat f(\bx) )^2] \le \inf_{\hat f \in \cG_{\NT}^N(\overline \bV)} \E[(f_*(\bx) - \hat f(\bx))^2]\\
=&~ \inf_{\hat f \in \cF_{\NT}^N(\overline \bV)} \E[(\vphi(\bz) - \hat f(\bz))^2] = (1 + o_{d, \P}(1)) \cdot \| \proj_{> \ell + 1} f_* \|_{L^2}^2. 
\end{aligned}
\]
Combining with Eq. (\ref{eqn:NN_NT_risk_relation}) gives that, when $d_0^{\ell + \delta} \le N \le d_0^{\ell + 1 - \delta}$, we have
\[
R_{\NN, N}(f_*) \le (1 + o_{d}(1)) \cdot \| \proj_{> \ell + 1} f_* \|_{L^2}^2. 
\]

\noindent
{\bf Step 3. Show that $R_{\NN, N}(f_*)$ is independent of $\kappa$. }

We let $\tilde r = d^{\tilde \kappa / 2}$ and $\mathring r = d^{\mathring r / 2}$ for some $\tilde \kappa \neq \mathring \kappa$. Suppose we have $\tilde \bx = \bU \tilde \bz_{1} + \bU^\perp \bz_{2}$ and $\mathring \bx = \bU \mathring \bz_{1} + \bU^\perp \bz_{2}$, where $\tilde \bz_{1} \sim \Unif(\S^{d_0 - 1}(\tilde r \sqrt d_0))$, $\mathring \bz_{1} \sim \Unif(\S^{d_0 - 1}(\mathring r \sqrt d_0))$, and $\bz_{2} \sim \Unif(\S^{d - d_0 - 1}(\sqrt{d - d_0}))$. Moreover, we let $\tilde f_*(\tilde \bx) = \vphi(\bU^\sT \tilde \bx / \tilde r)$ and $\mathring f_*(\mathring \bx) = \vphi(\bU^\sT \mathring \bx / \mathring r)$ for some function  $\vphi: \R^{d_0} \to \R$. 

Then, for any $\tilde \bW = (\tilde \bw_i)_{i \le N} \subseteq \R^{d}$ and $\tilde \bb = (\tilde b_i)_{i \le N} \subseteq \R$, there exists $(\tilde \bv_{1, i})_{i \le N} \subseteq \R^{d_0}$ and $(\tilde \bv_{2, i})_{i \le N} \subseteq \R^{d - d_0}$ such that $\tilde \bw_i = \bU \tilde \bv_{1, i} + \bU^\perp \tilde \bv_{2, i}$. We define $\mathring \bv_{1, i} = \tilde r \cdot \tilde \bv_{1, i} / \mathring r$, $\mathring \bw_i = \bU \mathring \bv_{1, i} + \bU^\perp \tilde \bv_{2, i}$, $\mathring \bW = (\mathring \bw_i)_{i \le N}$, and $\mathring \bb = \tilde \bb$. Then we have 
\[
 \E_{\mathring \bx}[(\mathring f_*(\mathring \bx) - f_{\NN, N}(\mathring \bx; \mathring \bW, \mathring \bb))^2] = \E_{\tilde \bx}[(\tilde f_*(\tilde \bx) - f_{\NN, N}(\tilde \bx; \tilde \bW, \tilde \bb))^2].
\]
On the other hand, for any $\mathring \bW = (\mathring \bw_i)_{i \le N} \subseteq \R^{d}$ and $\mathring \bb = (\mathring b_i)_{i \le N} \subseteq \R$, we can find $\tilde \bW = (\tilde \bw_i)_{i \le N} \subseteq \R^{d}$ and $\tilde \bb = (\tilde b_i)_{i \le N} \subseteq \R$ such that the above equation holds. This proves that $R_{\NN, N}(f_*)$ is independent of $\kappa$.


\clearpage

\section{Convergence of the Gegenbauer coefficients}

In this section, we prove a string of lemmas that are used to show convergence of the Gegenbauer coefficients. 

\subsection{Technical lemmas}

First recall that for $q \in [Q]$ we denote $\tau^{(q)} \equiv \| \btheta^{(q)} \|_2 / \sqrt{d_q} $ where $\btheta^{(q)}$ are the $d_q$ coordinates of $\btheta \sim \Unif ( \S^{D - 1} ( \sqrt{D}))$ associated to the $q$-th sphere of $\PS^\bd$. We show that $\tau^{(q)}$ is $(1/d_q)$-sub-Gaussian.

\begin{lemma}\label{lem:tail_tau}
There exists constants $c,C>0$ such that for any $\eps > 0$,
\[
\P ( | \tau^{(q)} -1| > \eps ) \leq C \exp ( - c d_q  \eps^2  ).
\]
\end{lemma}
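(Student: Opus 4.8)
\textbf{Proof plan for Lemma \ref{lem:tail_tau}.}

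The plan is to recognize that $(\tau^{(q)})^2 = \|\btheta^{(q)}\|_2^2/d_q$ is, up to scaling, a Beta random variable and to apply a standard concentration bound for Beta (equivalently, for the norm of a sub-block of a uniformly random vector on the sphere). Concretely, if $\btheta \sim \Unif(\S^{D-1}(\sqrt D))$ and $\btheta^{(q)}$ collects the $d_q$ coordinates of the $q$-th sphere, then $\|\btheta^{(q)}\|_2^2 / D \sim \mathrm{Beta}(d_q/2,(D-d_q)/2)$, so that $(\tau^{(q)})^2 = (\|\btheta^{(q)}\|_2^2/d_q) = (D/d_q)\cdot\mathrm{Beta}(d_q/2,(D-d_q)/2)$, which has mean $1$. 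I would first record this distributional identity (it already appears in the ``Reparametrization'' subsection of the paper, where $\tau_i^{(q)} \sim d_q^{-1/2}\sqrt{\mathrm{Beta}(d_q/2,(D-d_q)/2)}$), and then invoke a sub-exponential tail bound for the Beta distribution.

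The key steps, in order: (1) Reduce to a two-sided tail bound on $Z \equiv (\tau^{(q)})^2$, using that $|\tau^{(q)}-1|>\eps$ implies $|Z-1| > \eps$ (since $|\tau-1|\le|\tau^2-1|$ when $\tau\ge 0$, as $|\tau-1| = |\tau^2-1|/(\tau+1) \le |\tau^2-1|$ whenever $\tau+1\ge 1$, i.e.\ always for $\tau\ge 0$). Actually for the upper tail one should be slightly more careful for $\eps$ close to or above $1$, but the bound is only nontrivial for $\eps$ bounded, and in the regime of interest ($\eps$ fixed or slowly vanishing) the reduction $|\tau-1|>\eps \Rightarrow |Z-1|>\eps$ is immediate; for large $\eps$ the claimed bound is trivially true after adjusting $C$. (2) Apply a Bernstein-type / Chernoff bound for $Z$: write $\|\btheta^{(q)}\|_2^2 = \sum_{j=1}^{d_q} g_j^2 / (\sum_{k=1}^D g_k^2 / D)$ with $g_k \sim_{iid} \normal(0,1)$ (representation of the uniform sphere via Gaussians), and use that $\chi^2_{d_q}$ concentrates: $\P(|\chi^2_{d_q}/d_q - 1| > t) \le 2e^{-c d_q \min(t^2,t)}$, together with the analogous bound for $\chi^2_D/D$. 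Since $D = d^{\eta_1}+\cdots+d^{\eta_Q} \ge d_q$, the ratio concentrates at rate governed by $d_q$. (3) Combine via a union bound over the numerator and denominator deviations to conclude $\P(|Z-1|>\eps) \le C\exp(-c d_q \min(\eps^2,\eps)) \le C\exp(-c d_q \eps^2)$ for $\eps$ in a bounded range, which is exactly the claimed form (and trivial otherwise).

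I do not expect any serious obstacle here: this is a routine sub-Gaussian/sub-exponential concentration estimate for the length of a coordinate sub-block of a uniform random point on a high-dimensional sphere. The only mild care needed is (a) keeping track that the effective dimension controlling the rate is $d_q$ (the smaller of $d_q$ and $D-d_q$, and here $d_q = \min$ up to constants since $D\ge d_q$ and, by the assumption $\xi$ is attained at a unique coordinate, $D - d_q = \Theta(D)$ when $q\ne q_\xi$ and $D-d_q$ is still at least a constant fraction when $q = q_\xi$ provided $Q\ge 2$; in any case $D - d_q \gtrsim d_q$ is not needed, only $D-d_q \ge 1$), and (b) handling the passage from the $\chi^2$ concentration (which is sub-exponential, i.e.\ rate $\min(t^2,t)$) to the stated sub-Gaussian form $\exp(-cd_q\eps^2)$, which is valid since we only use the bound for $\eps = O(1)$ and absorb the regime $\eps \gtrsim 1$ into the constant $C$. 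This suffices to complete the proof.
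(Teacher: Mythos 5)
Your plan is correct and follows essentially the same route as the paper: represent $\tau^{(q)}$ via the standard Gaussian model of a uniform point on the sphere, apply sub-exponential (Bernstein/Chernoff) concentration to the $\chi^2_{d_q}$ numerator and $\chi^2_D$ denominator, combine by a union bound, and convert the $\min(\eps,\eps^2)$ sub-exponential rate to the stated sub-Gaussian form by observing that $\tau^{(q)}$ is deterministically bounded by $\sqrt{D/d_q}$, so only $\eps$ in a bounded window need be handled. The extra reduction you perform to $Z=(\tau^{(q)})^2$ via $|\tau-1|\le|\tau^2-1|$ is harmless but not needed---the paper decomposes the ratio $\sqrt{N_{d_q}/d_q}\,/\,\sqrt{N_D/D}$ directly using the elementary fact that $|a-1|\le\eps/2$ and $|b-1|\le\eps/(2+2\eps)$ force $|a/b-1|\le\eps$---and after passing to $Z$ you would still need that same type of ratio decomposition, so your step (1) does not actually simplify the argument.
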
 

\begin{proof}[Proof of Lemma \ref{lem:tail_tau}]
Let $\bG \sim \normal ( 0 , \id_{D})$. We consider the random vector $\bU \equiv \bG / \| \bG \|_2 \in \R^D$. We have $\bU \sim \Unif ( \S^{D-1} ( 1))$. We denote $N_{d_q} = G_1^2 + \ldots + G_{d_q}^2$ and $N_{D} = G_1^2 + \ldots + G_{D}^2$. The random variable $\tau^{(q)}$ has the same distribution as
\[
\tau^\pq \equiv \| \btheta^\pq \|_2 / \sqrt{d_q}  {\stackrel{{\rm d}}{=}} \frac{\sqrt{N_{d_q} / d_q}}{\sqrt{N_D / D}}.
\]
Hence,
\begin{equation}\label{eq:decomposition_bound_tau}
\begin{aligned}
\P ( | \tau^\pq -1| > \eps ) =& \P \lp \Bigg\vert \frac{\sqrt{N_{d_q} / d_q}}{\sqrt{N_D / D}} - 1 \Bigg\vert > \eps \rp \\
\leq & \P  \Big( \Big\vert \sqrt{N_{d_q} / d_q} - 1 \Big\vert > \eps /2 \Big) + \P  \Big( \Big\vert \sqrt{N_D /D} - 1 \Big\vert > \eps /(2 +2 \eps) \Big),
\end{aligned}
\end{equation}
where we used the fact that 
\[
|a-1| \leq \frac{\eps}{2} \text{ and } |b-1| \leq \frac{\eps}{2+2\eps} \Rightarrow \Big\vert \frac{a}{b} - 1 \Big\vert \leq \eps.
\]                                                                   
Let us first consider $N_{d_q}$ with $\eps \in (0,2]$. The $G_i^2$ are sub-exponential random variables with
\[
\E \Big[ e^{\lambda(G_i^2 - 1)} \Big] \leq e^{2 \lambda^2}, \qquad \forall | \lambda | < 1/4.
\] 
From standard sub-exponential concentration inequality, we get
\begin{equation}\label{eq:subexponential_concentration}
\P  \Big( \Big\vert N_{d_q} / d_q - 1 \Big\vert > \eps  \Big) \leq 2 \exp \Big(- d_q \eps \min (1,\eps) /8 \Big).
\end{equation}
Hence, for $\eps \in (0,2]$, we have
\[
\P  \Big( \Big\vert \sqrt{N_{d_q} / d_q} - 1 \Big\vert > \eps /2 \Big) \leq  \P  \Big( \Big\vert N_{d_q} / d_q- 1 \Big\vert > \eps /2 \Big) \leq  2 \exp \Big( - d_q \eps^2 /32 \Big),
\]
while for $\eps > 2$, 
\[
\begin{aligned}
\P  \Big( \Big\vert \sqrt{N_{d_q} / d_q} - 1 \Big\vert > \eps /2 \Big) \leq   \P  \Big(  N_{d_q} / d_q  > (\eps/2 + 1 )^2 \Big) \leq & \P  \Big(  N_{d_q} / d_q  - 1 > \eps^2 / 4  \Big)  \\
\leq &  \exp \Big( - d_q \eps^2 /32 \Big).
\end{aligned}
\]
In the case of $N_D$, applying \eqref{eq:subexponential_concentration} with $\eps/(2 + 2 \eps) \leq 1$ shows that
\[
\begin{aligned}
\P  \Big( \Big\vert \sqrt{N_{D} / D} - 1 \Big\vert > \eps/(2 + 2 \eps) \Big) \leq & \P  \Big( \Big\vert N_{D} / D- 1 \Big\vert > \eps/(2 + 2 \eps) \Big) \\
\leq  & 2 \exp \Big( - D \eps^2 /(32 (1+\eps)^2 ) \Big).
\end{aligned}
\]
Combining the above bounds into \eqref{eq:decomposition_bound_tau} yields for $\eps \geq 0$,
\[
\begin{aligned}
\P ( | \tau^\pq -1| > \eps ) \leq & 2 \exp \Big( - d_q \eps^2 /32 \Big) + 2 \exp \Big(  - D \eps^2 /(32 (1+\eps)^2 ) \Big) \\
\leq & 4 \exp \Big(  - \eps^2 \min \Big( d_q , D/(1+\eps)^2  \Big) /32 \Big) .
\end{aligned}
\]
Notice that $| \tau^\pq -1| \leq  \sqrt{D/d_q} -1 $ and we only need to consider $\eps \in [ 0 , \sqrt{D/d_q} -1 ]$. We conclude that for any $\eps \geq 0$, we have
\[
\P ( | \tau^\pq -1| > \eps )  \leq 4 \exp \Big( - d_q \eps^2 /32 \Big).
\]
\end{proof}

We consider an activation function $\sigma : \R \to \R$. Fix $\btheta \in \S^{D-1} (\sqrt{D})$ and recall that $\bx = ( \bx^{(1)} , \ldots \bx^{(Q)} ) \in \PS^\bd_\bkappa$. We recall that $\bx \sim \Unif (\PS^\bd_\bkappa) = \mu_\bd^\bkappa$ while $\obx \sim \Unif (\PS^\bd ) = \mu_\bd$. Therefore, for a given $\obtheta$, $\lb \< \obtheta^\pq , \obx^\pq \> / \sqrt{d_q} \rb_{q \in [Q]} \sim \Tilde \mu_{\bd}^1$ as defined in Eq.~\eqref{eq:def_psd_mu1_d}. Therefore we reformulate $\sigma ( \< \btheta , \cdot \> / R)$ as a function $\sigma_{\bd,\btau}$ from $\ps^\bd$ to $\R$:
\begin{equation}\label{eq:def_sigma_d_tau_appendix}
\begin{aligned}
\sigma ( \< \btheta , \bx \> / R ) = & \sigma \lp \sum_{q \in [Q]}  \tau^\pq \cdot (r_q/R) \cdot  \< \obtheta^\pq , \obx^\pq \> / \sqrt{d_q} \rp \\
\equiv & \sigma_{\bd,\btau} \lp \lb \< \obtheta^\pq , \obx^\pq \> / \sqrt{d_q} \rb_{q \in [Q]} \rp.
\end{aligned}
\end{equation}
We will denote in the rest of this section $\alpha_q = \tau^\pq  r_q / R$ for $q = 1 , \ldots , Q$. Notice in particular that $\alpha_q \propto d^{\eta_q + \kappa_q - \xi} $ where we recall that $\xi = \max_{q \in [Q]} \lb \eta_q + \kappa_q \rb$. Without loss of generality, we will assume that the (unique) maximum is attained on the first sphere, i.e.\,$\xi= \eta_1 + \kappa_1$ and $\xi > \eta_q + \kappa_q$ for $q \ge 2$.

\begin{lemma}\label{lem:coupling_convergence_Gaussian}
Assume $\sigma$ is an activation function with $\sigma (u)^2 \leq c_0 \exp ( c_1 u^2 / 2)$ almost surely, for some constants $c_0 > 1$ and $c_1 < 1$. We consider the function $\sigma_{\bd,\btau} : \ps^\bd \to \R$ associated to $\sigma$, as defined in Eq.~\eqref{eq:def_sigma_d_tau}. 

Then
\begin{enumerate}
\item[$(a)$] $\E_{G \sim \normal(0, 1)}[\sigma(G)^2] < \infty$. 
\item[$(b)$] Let $\bw^\pq$ be unit vectors in $\R^{d_q}$ for $q = 1, \ldots , Q$. There exists $\eps_0 = \eps_0 (c_1)$ and $d_0=d_0(c_1)$ such that, for $\overline \bx = ( \bx^{(1)} , \ldots, \bx^{(Q)} ) \sim \mu_{\bd}^\bkappa$,
\begin{align}
\sup_{d \ge d_0} \, \sup_{\btau \in [1 - \eps_0, 1+ \eps_0]^Q } \E_{\overline \bx} \lsb \sigma_{\bd, \btau} \lp \lb \<\bw^\pq,\obx^\pq\> \rb_{q \in [Q]} \rp^2  \rsb < \infty\, .
\end{align}
\item[$(c)$] Let $\bw^\pq$ be unit vectors in $\R^{d_q}$ for $q = 1, \ldots , Q$. Fix integers $\bk = (k_1 , \ldots , k_Q) \in \posint^Q$. Then for any $\delta > 0$, there exists constants $\eps_0 = \eps_0 (c_1 , \delta)$ and $d_0 = d_0 ( c_1 , \delta)$, and a coupling of $G\sim\normal(0,1)$ and $\overline \bx = ( \bx^{(1)} , \ldots , \bx^{(Q)})  \sim \mu_\bd^\bkappa$ such that for any $d \geq d_0$ and $\btau \in [1- \eps_0 , 1+\eps_0]^Q$
\begin{align}
\begin{aligned}
  \E_{\obx, G} \Big[  \Big( \Big[ \prod_{q \in [Q]} \lp 1 - \< \bw^\pq, \obx^\pq \>^2 /d_q  \rp^{ k_q} \Big]   \sigma_{\bd, \btau} \lp \lb \<\bw^\pq,\obx^\pq\> \rb_{q \in [Q]} \rp - \sigma(G) \Big)^2\Big] <& \delta.
\end{aligned}
\end{align}
\end{enumerate}
\end{lemma}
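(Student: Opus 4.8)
\textbf{Proof plan for Lemma \ref{lem:coupling_convergence_Gaussian}.}
The plan is to treat the three items in increasing order of difficulty, with $(c)$ being the crux. Part $(a)$ is immediate: since $\sigma(u)^2 \le c_0 e^{c_1 u^2/2}$ with $c_1 < 1$, the Gaussian integral $\E[\sigma(G)^2] \le c_0 \E[e^{c_1 G^2/2}] = c_0/\sqrt{1-c_1} < \infty$ converges. For part $(b)$, I would first note that for unit vectors $\bw^{(q)}$, the inner product $\< \bw^{(q)}, \obx^{(q)}\>$ equals $\sqrt{d_q}$ times a coordinate of a uniform point on $\S^{d_q-1}$, hence has the law $\tilde\mu^1_{d_q-1}$, which is sub-Gaussian with variance proxy $O(1)$ uniformly in $d_q$ (this follows from standard concentration on the sphere, or by comparison to $\normal(0,1)$). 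The argument of $\sigma_{\bd,\btau}$ is $\sum_q \alpha_q \< \bw^{(q)}, \obx^{(q)}\>/\sqrt{d_q}$ with $\alpha_q = \tau^{(q)} r_q/R$; on the event $\btau \in [1-\eps_0,1+\eps_0]^Q$ one has $\sum_q \alpha_q^2 \le (1+\eps_0)^2$, so the argument is sub-Gaussian with variance proxy $(1+\eps_0)^2 + o_d(1)$. Choosing $\eps_0$ small enough that $c_1(1+\eps_0)^2 < 1$ and $d_0$ large enough, the bound $\E[\sigma_{\bd,\btau}(\cdots)^2] \le c_0 \E[e^{c_1 (\text{arg})^2/2}]$ is finite and uniform, using the sub-Gaussian moment generating function estimate. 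This mirrors the argument in the proof of Lemma 5 (or its analogue) in \cite{ghorbani2019linearized}.

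The main work is part $(c)$. The plan is to build an explicit coupling in two stages. First, recall that $\xi$ is attained uniquely on sphere $1$, so $\alpha_1 \to 1$ while $\alpha_q = \Theta_d(d^{(\eta_q+\kappa_q-\xi)/2}) \to 0$ for $q \ge 2$ as $d \to \infty$; thus the argument of $\sigma_{\bd,\btau}$ is dominated by the first-sphere term $\alpha_1 \< \bw^{(1)}, \obx^{(1)}\>/\sqrt{d_1}$, and the prefactor $\prod_q (1 - \<\bw^{(q)},\obx^{(q)}\>^2/d_q)^{k_q}$ converges to $1$ because each $\<\bw^{(q)},\obx^{(q)}\>^2/d_q = O_{d,\P}(1/d_q) \to 0$. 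Second, couple $\< \bw^{(1)}, \obx^{(1)}\>$ (whose law is $\tilde\mu^1_{d_1-1}$) with a standard Gaussian $G$ via the monotone (quantile) coupling, so that $\< \bw^{(1)}, \obx^{(1)}\> \to G$ in $L^2$ as $d_1 \to \infty$ (this uses that $\tilde\mu^1_{d_1-1} \Rightarrow \gamma$ together with uniform integrability from the sub-Gaussian tails). Then write the difference
\[
\Big[\prod_q (1 - \tfrac{\<\bw^{(q)},\obx^{(q)}\>^2}{d_q})^{k_q}\Big] \sigma_{\bd,\btau}(\cdots) - \sigma(G)
\]
as a telescoping sum: (i) replace the prefactor by $1$, (ii) replace $\sigma_{\bd,\btau}(\cdots)$ by $\sigma(\alpha_1 \<\bw^{(1)},\obx^{(1)}\>/\sqrt{d_1})$ — wait, more carefully, replace the multi-term argument by $\alpha_1\<\bw^{(1)},\obx^{(1)}\>$ rescaled, then replace $\alpha_1$ by $1$, then replace $\<\bw^{(1)},\obx^{(1)}\>$ by $G$. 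Each replacement error is controlled in $L^2$ using the Cauchy--Schwarz inequality, the exponential bound on $\sigma$ (and on its local behavior via the mean value theorem where $\sigma'$ is needed), and the sub-Gaussian tail estimates of Lemma \ref{lem:tail_tau} and the spherical marginals; all the "small" quantities ($\alpha_q$ for $q\ge2$, $1-\alpha_1$, $\<\bw^{(q)},\obx^{(q)}\>^2/d_q$, $\|\<\bw^{(1)},\obx^{(1)}\> - G\|_{L^2}$) can be made arbitrarily small by taking $\eps_0$ small and $d_0$ large.

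The hard part will be bookkeeping the replacement of the argument when $\sigma$ only satisfies a one-sided exponential growth bound: to bound a term like $\E[(\sigma(A) - \sigma(B))^2]$ where $A,B$ are close sub-Gaussian variables, one either needs a Lipschitz-type control (which $\sigma$ need not have globally) or one splits the expectation on a large ball $\{|A|,|B| \le T\}$ — where one can use local uniform continuity of the smooth $\sigma$ — and its complement, where one uses $\sigma(u)^2 \le c_0 e^{c_1 u^2/2}$ with $c_1(1+\eps_0)^2 < 1$ so that the tail contribution is $o_{T}(1)$ uniformly. Since here $\sigma \in C^\infty$, on compacts $\sigma$ is Lipschitz and this goes through; the exponential bound is only needed to kill the tails and guarantee uniform integrability. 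Once these estimates are assembled with the correct order of quantifiers ($\delta$ given $\Rightarrow$ choose $T$, then $\eps_0$, then $d_0$), the claim follows; this is essentially the generalization of the coupling lemma of \cite{ghorbani2019linearized} from one sphere to a product of spheres, with the single added observation that the non-dominant spheres contribute vanishingly.
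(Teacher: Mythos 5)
Your plan is substantively correct and follows the same architecture as the paper's proof — parts $(a)$ and $(b)$ are essentially identical, and part $(c)$ rests on the same three observations: only the $q_\xi$-th sphere matters as $d\to\infty$, the prefactor $\prod_q (1-\<\bw^{(q)},\obx^{(q)}\>^2/d_q)^{k_q}\to 1$, and the spherical marginal $\tilde\mu^1_{d_1-1}\Rightarrow\gamma$. You diverge from the paper in the mechanics of the coupling and the replacement argument. The paper builds the coupling explicitly: it sets $\ox_1^{(q)} = G_q\sqrt{d_q}/\sqrt{G_q^2+\|\bxi_q\|_2^2}$ with $\bG\sim\normal(0,\id_Q)$ and $\bxi_q\sim\normal(0,\id_{d_q-1})$, so that $\ox_1^{(q)}\to G_q$ almost surely, and then proves the $L^2$ statement by first replacing $\sigma$ with a bounded continuous $\sigma_M$ (an $L^2$-approximation step justified by the uniform second-moment bound of part $(b)$), applying bounded convergence to the truncated difference, and letting $M\to\infty$. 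You instead propose a quantile (monotone) coupling on the first sphere combined with a telescoping chain of replacements, controlling each replacement error on a large compact ball and in the tails separately. Both schemes work; the paper's truncation route buys one thing your plan does not: it never invokes continuity of $\sigma$.

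This last point is the one genuine gap in your plan. You rely at a key step on local Lipschitz control — you write ``Since here $\sigma\in C^\infty$, on compacts $\sigma$ is Lipschitz and this goes through.'' — but Lemma~\ref{lem:coupling_convergence_Gaussian} assumes nothing about $\sigma$ beyond the pointwise growth bound $\sigma(u)^2\le c_0 e^{c_1 u^2/2}$, and the lemma is applied elsewhere in the paper to weak derivatives $\sigma^{(|\bk|)}$, which need not be continuous. To close your argument at the same level of generality you would need to add a preliminary $L^2$-approximation of $\sigma$ by a Lipschitz (or just continuous) function, uniformly over the tight family of measures at hand — exactly the $\sigma_M$ truncation the paper uses to dodge the issue. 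With that extra step inserted before the telescoping, your proof is complete. One small correction in your favor: your condition $c_1(1+\eps_0)^2<1$ is the precise requirement for the quadratic form in part $(b)$ to stay positive-definite in the limit, and is slightly tighter (and strictly correct) compared to the paper's stated $\eps_0<c_1^{-1}-1$.
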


\begin{proof}[Proof of Lemma \ref{lem:coupling_convergence_Gaussian}]

Part $(a)$ is straightforward. 

For part $(b)$, recall that the probability distribution of $\< \bw^\pq , \obx^\pq \>$ when $\obx^\pq \sim \Unif ( \S^{d_q - 1} ( \sqrt{d_q}) )$ is given by
\begin{align}
\tilde\tau^1_{d_q-1}(\de x) &= C_{d_q}\, \left(1-\frac{x^2}{d_q}\right)^{\frac{d_q-3}{2}}\bfone_{x\in [-\sqrt{d_q},\sqrt{d_q}]}\de x\, ,\label{eq:taud-def}\\
C_{d_q} & = \frac{\Gamma(d_q-1)}{2^{d_q-2}\sqrt{d_q} \,\Gamma((d_q-1)/2)^2}\, . \
\end{align}
A simple calculation shows that $C_{n} \to (2\pi)^{-1/2}$ as $n \to\infty$, and hence $\sup_n C_{n} \le \overline{C}<\infty$. 
Therefore for $\btau \in [1 - \eps , 1+ \eps]^Q$, we have
\[
\begin{aligned}
& \E_{\overline \bx}\lbb \sigma_{\bd, \btau} \lp \lb \<\bw^\pq,\obx^\pq\> \rb_{q \in [Q]} \rp^2 \rbb  \\
 = & \int_{\prod_{q \in [Q]} [-\sqrt{d_q} , \sqrt{d_q} ] } \sigma_{\bd, \btau} \lp \ox_1^{(1)} , \ldots , \ox_1^{(Q)}  \rp^2  \prod_{q \in [Q]} \lp C_{d_q}  \left(1-\frac{(\ox_1^\pq)^2}{d_q}\right)^{\frac{d_q-3}{2}}\, \de \ox_1^\pq \rp  \\
\le &  \overline{C}^Q \int_{\R^Q} c_0 \exp \lp  c_1 \lp \sum_{q \in [Q]} \alpha_q \ox_1^\pq \rp^2 /2 \rp  \prod_{q \in [Q]} \lp \exp \Big( - \frac{d_q - 3}{2d_q}(\ox_1^\pq)^2 \Big)  \,\de \ox_1^\pq \rp \\
= & c_0 \overline{C}^Q \int_{\R^Q}\exp \lp  - \obx_1^\sT \bM \obx_1 /2 \rp \lp \prod_{q \in [Q]} \de \ox_1^\pq \rp 
\end{aligned}
\]
where we denoted $\obx_1 = ( \ox_1^{(1)} , \ldots , \ox_1^{(Q)})$ and $\bM \in \R^{Q \times Q}$ with 
\[
M_{qq} = \frac{d_q - 3}{d_q} - c_1^2 \alpha_q^2  , \qquad M_{qq'} = -c_1 \alpha_q \alpha_{q'}, \qquad \text{for $q \neq q' \in [Q]$.}
\]
Recalling the definition of $\alpha_q = \tau^\pq r_q / R$, with $r_q = d^{(\eta_q + \kappa_q)/2}$ and $R = d^{\xi/2} (1+ o_d (1))$. Hence for any $\eps>0$, uniformly on  $\btau \in [1 - \eps, 1+ \eps]^Q$, we have $\alpha_q \to 0$ for $q \geq 2$ and $\lim\sup_{d\to\infty} |\alpha_1 -1 | \leq \eps$. Hence if we choose $\eps_0 < c_1^{-1} - 1$, there exists $c>0$ such that for $d$ sufficiently large $\bM \succeq c \id_Q$ and for any $\btau \in [1-\eps_0 , 1+ \eps_0]^Q$
\[
 \E_{\overline \bx} \lbb \sigma_{\bd, \btau} \lp \lb \<\bw^\pq,\obx^\pq\> \rb_{q \in [Q]} \rp^2 \rbb  \leq c_0 \overline{C}^Q \int_{\R^Q}\exp \lp  - c \| \obx_1 \|^2_2 /2 \rp \lp \prod_{q \in [Q]} \de \ox_1^\pq \rp < \infty. 
\]

Finally, for part $(c)$, without loss of generality we will take $\bw^\pq = \be^\pq_1$ so that $\< \bw^\pq , \obx^\pq \> = \ox^\pq_1$. From part $(b)$, there exists $\eps >0$ and $d_0$ such that
\[
\begin{aligned}
& \sup_{d \ge d_0} \sup_{\tau \in [1- \eps , 1+\eps]}  \E_{\overline \bx} \E_{\obx, G} \left[  \Big[ \prod_{q \in [Q]} \lp 1 - \< \bw^\pq, \obx^\pq \>^2 /d_q  \rp^{ 2 k_q} \Big]   \sigma_{\bd, \btau} \lp \lb \<\bw^\pq,\obx^\pq\> \rb_{q \in [Q]} \rp^2 \right] \\
\le &  \sup_{d \ge d_0} \sup_{\tau \in [1- \eps , 1+\eps]}  \E_{\overline \bx} \Big[ \sigma_{\bd, \btau} \lp \lb \<\bw^\pq,\obx^\pq\> \rb_{q \in [Q]} \rp^2 \Big] < \infty.
\end{aligned}
\]
Consider $\bG \sim \normal ( 0 , \id_Q )$ and an arbitrary coupling between $\obx$ and $\bG$. For any $M>0$ 
we can choose $\sigma_M$ bounded continuous so that
for any $d$ and $\btau \in [1 - \eps , 1 + \eps]^Q$,
\begin{equation} \label{eq:truncation_sigma_weak_cv}
\begin{aligned}
&\E_{\obx, \bG} \lbb \lp  \prod_{q \in [Q]} \lp1 - (\ox_1^\pq)^2 / d_q\rp^{k_q} \cdot \sigma \lp \sum_{q \in [Q]} \alpha_q \ox_1^\pq \rp   -  \prod_{q \in [Q]} \lp1 - G_q^2 / d_q\rp^{k_q} \cdot \sigma \lp \sum_{q \in [Q]} \alpha_q G_q \rp \rp^2 \rbb \\
\le & \E_{\obx, \bG} \lbb \lp  \prod_{q \in [Q]} \lp1 - (\ox_1^\pq)^2 / d_q\rp^{k_q} \cdot \sigma_M \lp \sum_{q \in [Q]} \alpha_q \ox_1^\pq \rp   -  \prod_{q \in [Q]} \lp1 - G_q^2 / d_q\rp^{k_q} \cdot \sigma_M \lp \sum_{q \in [Q]} \alpha_q G_q \rp \rp^2 \rbb +\frac{1}{M}\, .
\end{aligned}
\end{equation}
It is therefore sufficient to prove the claim for $\sigma_M$. Letting $\bxi_q \sim\normal(0,\id_{d_q-1})$ independently for each $q \in [Q]$ and independent of $\bG$, we construct the coupling via
\begin{align}\label{eq:coupling}
\ox^\pq_1 = \frac{G_q\sqrt{d_q}}{\sqrt{G_q^2+\|\bxi_q\|_2^2}}\, ,\;\;\; \obx_{- 1}^\pq = \frac{\bxi_q\sqrt{d_q}}{\sqrt{G_q^2+\|\bxi_q\|_2^2}}\, ,  \quad q \in [Q],
\end{align}
where we set $\obx^\pq = (\ox^\pq_1,\obx_{- 1}^\pq)$ for each $q \in [Q]$. We thus have $(\ox^\pq_1,\obx_{- 1}^\pq) \to \bG$ almost surely, hence the limit superior of Eq.~\eqref{eq:truncation_sigma_weak_cv} is by weak convergence bounded by $1/M$ for any arbitrary $M$. Furthermore, noticing that $\alpha_q \to 0$ uniformly on $\btau \in [1 - \eps , 1 + \eps]^Q$ for $q \ge 2$, we have by bounded convergence
\begin{equation}\label{eq:bound_coupling_2}
\lim_{d \to \infty} \sup_{\btau \in [1 - \eps,1+\eps]^Q} \E_{\obx, \bG} \lbb \lp  \prod_{q \in [Q]} \lp1 - G_q^2 / d_q\rp^{k_q} \cdot \sigma \lp \sum_{q \in [Q]} \alpha_q G_q \rp   -   \sigma \lp \alpha_1 G_1 \rp \rp^2 \rbb = 0.
\end{equation}
We further have $\lim_{(d,\tau^{(1)} ) \to (\infty , 1)} \alpha_1 = 1$. Hence, by bounded convergence,
\begin{equation}\label{eq:bound_coupling_3}
\lim_{(d,\tau^{(1)}) \to (\infty , 1)} \E_{G_1} \lbb \lp    \sigma \lp \alpha_1 G_1 \rp  - \sigma(G_1)\rp^2 \rbb = 0.
\end{equation}
Combining Eq.~\eqref{eq:truncation_sigma_weak_cv} with the coupling \eqref{eq:coupling} and Eqs~\eqref{eq:bound_coupling_2} and \eqref{eq:bound_coupling_3} yields the result.
\end{proof}

Consider the expansion of $\sigma_{\bd,\btau}$ in terms of tensor product of Gegenbauer polynomials. We have
\[
\sigma (\<\btheta , \bx \>/ R ) =  \sum_{\bk \in \posint^Q} \lambda^{\bd}_{\bk} (\sigma_{\bd,\btau}  ) B(\bd,\bk) Q^{\bd}_{\bk} \lp \lb \< \obtheta^\pq , \obx^\pq \> \rb_{q \in [Q]} \rp, 
\]
where
\[
\lambda^{\bd}_{\bk} (\sigma_{\bd,\btau} ) =  \E_{\obx} \Big[ \sigma_{\bd,\btau} \lp \ox_1^{(1)} , \ldots , \ox^{(Q)}_1 \rp  Q^{\bd}_{\bk} \Big( \sqrt{d_1}\ox_1^{(1)} , \ldots , \sqrt{d_Q} \ox_1^{(Q)} \Big)  \Big].
\]
with the expectation taken over $\obx = (\obx^{(1)} , \ldots , \obx^{(Q)} ) \sim \mu_{\bd} \equiv \Unif ( \PS^\bd )$. We will need the following lemma, which is direct consequence of Rodrigues formula, to get the scaling of the Gegenbauer coefficents of $\sigma_{\bd,\btau}$.

\begin{lemma}\label{lem:formula_gegenbauer_prod_gegenbauer}
Let $\bk = (k_1 , \ldots , k_Q) \in \posint^Q$ and denote $|\bk | = k_1 + \ldots + k_Q$. Assume that the activation function $\sigma$ is $|\bk|$-times weakly differentiable and denote $\sigma^{(|\bk|)}$ its $|\bk|$-weak derivative. Let $\alpha_q = \tau^\pq r_q / R$ for $q = 1 , \ldots , Q$. Then
\begin{equation}\label{eq:formula_gegenbauer_coefficients}
\lambda^{\bd}_{\bk} ( \sigma_{\bd,\btau}  ) = \lp \prod_{q \in [Q]} \alpha^{k_q}_q \rp \cdot   R(\bd,\bk) \cdot \E_{\obx } \left[ \lp \prod_{q \in [Q]} \lp 1 - \frac{(\ox_1^\pq)^2}{d_q} \rp^{k_q} \rp \cdot \sigma^{(|\bk|)} \lp \sum_{q \in [Q]}  \alpha_q \ox_1^\pq \rp  \right],
\end{equation}
where $\obx  \sim \Unif( \PS^\bd ) $ and
\[
R(\bd,\bk) = \prod_{q \in [Q]} \frac{d_q^{k_q/2} \Gamma ((d_q - 1)/2)}{2^{k_q} \Gamma ( k_q + (d_q - 1)/2)} .
\]
Furthermore,
\begin{equation} \label{eq:lim_BR}
\lim_{d \to \infty} B (\bd , \bk ) R ( \bd , \bk )^2  = \frac{1}{\bk !},
\end{equation}
where $\bk ! = k_1 ! \ldots k_Q!$.
\end{lemma}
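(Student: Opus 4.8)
The plan is to establish Lemma~\ref{lem:formula_gegenbauer_prod_gegenbauer} by reducing the computation of the tensor-product Gegenbauer coefficient $\lambda^{\bd}_{\bk}(\sigma_{\bd,\btau})$ to a product of one-dimensional computations, one per sphere, and then applying the Rodrigues formula~\eqref{eq:Rogrigues_formula} coordinate-by-coordinate. First I would recall that $\sigma_{\bd,\btau}$ evaluated at $(\ox_1^{(1)},\ldots,\ox_1^{(Q)})$ is $\sigma(\sum_{q}\alpha_q\ox_1^{(q)})$, and that $Q^{\bd}_{\bk}$ factorizes as $\prod_q Q^{(d_q)}_{k_q}$; moreover under $\obx\sim\mu_{\bd}$ the coordinates $(\ox_1^{(1)},\ldots,\ox_1^{(Q)})$ are independent with $\ox_1^{(q)}\sim\tilde\mu^1_{d_q-1}$, so the expectation defining the coefficient factorizes into an iterated one-dimensional integral against the weights $(1-(\ox_1^{(q)})^2/d_q)^{(d_q-3)/2}$.

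The key step is then to apply the Rodrigues formula on each sphere. For a single sphere with weight $(1-t^2/d)^{(d-3)/2}$, the Rodrigues formula~\eqref{eq:Rogrigues_formula} expresses $Q^{(d)}_{k}(t)$ as a constant times $(1-t^2/d)^{(3-d)/2}(\de/\de t)^k(1-t^2/d)^{k+(d-3)/2}$. Integrating $\sigma(\cdots)Q^{(d_q)}_{k_q}(\sqrt{d_q}\,\ox_1^{(q)})$ against the weight and performing $k_q$ integrations by parts in the variable $\ox_1^{(q)}$ transfers the $k_q$ derivatives onto $\sigma$; each derivative of $\sigma(\sum_q\alpha_q\ox_1^{(q)})$ in $\ox_1^{(q)}$ produces a factor $\alpha_q$, giving the $\prod_q\alpha_q^{k_q}$ prefactor and the $|\bk|$-th derivative $\sigma^{(|\bk|)}$. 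The boundary terms vanish because the weight $(1-t^2/d)^{k+(d-3)/2}$ and its first $k-1$ derivatives vanish at $t=\pm\sqrt d$ (assuming $d$ large enough, which holds since we work asymptotically; for small $d_q$ one notes $k_q$ is fixed). Collecting the normalization constants $C_{d_q}$ from the weight and the constants from Rodrigues' formula yields the explicit $R(\bd,\bk)=\prod_q R(d_q,k_q)$ with $R(d,k)=\frac{d^{k/2}\Gamma((d-1)/2)}{2^k\Gamma(k+(d-1)/2)}$. This is essentially the multi-sphere generalization of \cite[Lemma 6]{ghorbani2019linearized}, so I would cite that computation for the single-sphere case and only spell out the tensorization.

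Finally, for the asymptotic~\eqref{eq:lim_BR}, I would use $B(d,k)=\frac{2k+d-2}{k}\binom{k+d-3}{k-1}=\frac{d^k}{k!}(1+o_d(1))$ and the ratio-of-Gamma-functions asymptotic $\Gamma((d-1)/2)/\Gamma(k+(d-1)/2)=(d/2)^{-k}(1+o_d(1))$, which gives $R(d,k)=\frac{d^{k/2}}{2^k}\cdot(d/2)^{-k}(1+o_d(1))=d^{-k/2}(1+o_d(1))$, hence $B(d,k)R(d,k)^2\to 1/k!$; taking the product over $q\in[Q]$ (for which $Q$ is fixed) gives $B(\bd,\bk)R(\bd,\bk)^2\to 1/\bk!$.

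The main obstacle is handling the integration by parts cleanly: one must verify that $\sigma$ being $|\bk|$-times weakly differentiable (rather than classically differentiable) suffices, which requires justifying the integration-by-parts identity in the weak sense and controlling that all intermediate integrals are finite—this is where the growth assumption $\sigma^{(k)}(x)^2\le c_{0,k}e^{c_{1,k}x^2/2}$ and Lemma~\ref{lem:coupling_convergence_Gaussian} enter, to ensure $\sigma^{(|\bk|)}(\sum_q\alpha_q\ox_1^{(q)})$ is integrable against the (super-Gaussian-decaying) weight. Since all of this is a direct adaptation of the single-sphere argument in \cite{ghorbani2019linearized}, I expect no genuinely new difficulty, only bookkeeping of the product structure.
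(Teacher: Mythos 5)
Your proposal matches the paper's proof in essence: factorize the expectation over the $Q$ independent one-dimensional marginals, apply the Rodrigues formula and integration by parts sphere-by-sphere to transfer derivatives onto $\sigma$ (yielding the $\alpha_q^{k_q}$ factors and the $R(\bd,\bk)$ normalization), and then pass to the $d\to\infty$ limit for~\eqref{eq:lim_BR}. The only cosmetic difference is in the last step, where the paper writes an exact product identity for $k_q!\,B(d_q,k_q)\,d_q^{k_q}\Gamma((d_q-1)/2)^2/[2^{2k_q}\Gamma(k_q+(d_q-1)/2)^2]$ and reads off the limit, while you invoke the standard Gamma-ratio and $B(d,k)\sim d^k/k!$ asymptotics directly; both are correct and equivalent in effort.
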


\begin{proof}[Proof of Lemma \ref{lem:formula_gegenbauer_prod_gegenbauer}]
We have
\begin{equation}\label{eq:double_integration_gegenbauer}
\begin{aligned}
& \lambda^{\bd}_{\bk} (\sigma_{\bd,\btau} )\\
 =  &  \E_{\obx} \Big[ \sigma_{\bd,\btau} \lp \ox_1^{(1)} , \ldots , \ox^{(Q)}_1 \rp  Q^{\bd}_{\bk} \Big( \sqrt{d_1}\ox_1^{(1)} , \ldots , \sqrt{d_Q} \ox_1^{(Q)} \Big)  \Big] \\
=& \E_{\obx^{(1)}, \ldots , \obx^{(Q-1)}} \lbb \E_{\obx^{(Q)}} \lbb  \sigma \lp \sum_{q \in [Q-1]} \alpha_q \ox^\pq_1 + \alpha_Q \ox^{(Q)}_1 \rp Q^{(d_Q)}_{k_Q} ( \sqrt{d_Q} \ox^{(Q)}_1 ) \rbb \prod_{q \in [Q-1]}  Q^{(d_q)}_{k_q} (   \sqrt{d_q} \ox^\pq_1)\rbb,
\end{aligned}
\end{equation}
where we used the definition \eqref{def:prod_gegenbauer} of tensor product of Gegenbauer polynomials.

Consider the integration with respect to $\obx^{(Q)}$. Denote for ease of notations $u = \alpha_1 \ox^{(1)}_1 + \ldots + \alpha_{Q-1} \ox^{(Q-1)}_1$. We use the Rodrigues formula for the Gegenbauer polynomials (see Eq.~\eqref{eq:Rogrigues_formula}):
\begin{equation}\label{eq:integration_z_gegenbauer}
\begin{aligned}
& \E_{\obx^{(Q)} \sim \Unif (\S^{d_Q - 1} ( \sqrt{d_Q} ) )}  \Big[ \sigma \lp u + \alpha_Q \ox^{(Q)}_1 \rp  Q^{(d_Q)}_{k_Q} \Big( \sqrt{d_Q} \ox^{(Q)}_1 \Big) \Big] \\
 =& \frac{\omega_{d_Q-2}}{\omega_{d_Q-1}} \int_{[-1, 1]} \sigma \lp \alpha_Q \sqrt{d_Q} t+ u \rp  Q^{(d_Q)}_{k_Q} ( d_Q t )   (1 - t^2)^{(d_Q-3)/2} \de t \\
=& (-1/2)^{k_Q} \frac{\Gamma((d_Q - 1)/2)}{\Gamma(k_Q + (d_Q- 1)/2)} \cdot \frac{\omega_{d_Q-2}}{\omega_{d_Q-1}} \int_{[-1, 1]}\sigma \lp \alpha_Q \sqrt{d_Q} t+ u\rp  \Big( \frac{\de }{\de t}\Big)^{k_Q} (1 - t^2)^{k_Q + (d_Q-3)/2} \de t \\
=& \alpha_Q^{k_Q} 2^{-k_Q} d_Q^{k_Q/2} \frac{\Gamma((d_Q - 1)/2)}{\Gamma(k_Q + (d_Q - 1)/2)}\cdot \frac{\omega_{d_Q-2}}{\omega_{d_Q-1}} \int_{[-1, 1]} (1 - t^2)^{k_Q} \sigma^{(k_Q)} \lp \alpha_Q \sqrt{d_Q} t+ u \rp   (1 - t^2)^{(d_Q-3)/2} \de t \\
 = & \alpha_Q^{k_Q} \frac{d_Q^{k_Q/2} \Gamma ((d_Q - 1)/2)}{2^{k_Q}\Gamma ( k_Q + (d_Q - 1)/2)} \E_{\obx^{(Q)} \sim \Unif (\S^{d_Q- 1} ( \sqrt{d_Q} ) )} \lbb \lp 1 - (\ox_1^{(Q)})^2/d_Q \rp^{k_Q} \sigma^{(k_Q)} \lp \alpha_Q \ox^{(Q)}_1+ u\rp \rbb.
\end{aligned}
\end{equation}
Iterating Eq.~\eqref{eq:integration_z_gegenbauer} over $q \in [Q]$ and Eq.~\eqref{eq:double_integration_gegenbauer} yield the desired formula \eqref{eq:formula_gegenbauer_coefficients}.

Furthermore, for each $q \in [Q]$,
\[
\begin{aligned}
k_q ! B(d_q , k_q ) = & (2k_q + d_q - 2) \prod_{j=0}^{k_q-2} ( j + d_q -1 ), \\
 \frac{\Gamma ((d_q - 1)/2) }{2^{k_q} \Gamma (k_q+ (d_q - 1)/2)} =&  \prod_{j=0}^{k_q-1} \frac{1}{2j + d_q - 1 }.
\end{aligned}
\]
Combining these two equations yields
\begin{equation}\label{eq:bound_B_Rsquared_vphi}
\begin{aligned}
& k_q ! B(d_q , k_q ) \frac{d_q^{k_q} \Gamma ((d_q - 1)/2)^2 }{2^{2k_q} \Gamma (k_q+ (d_q - 1)/2)^2}  \\
= & \frac{2k_q + d_q - 2 }{2k_q + d_q - 3} \cdot  \lp \prod_{j=0}^{k_q-2} \frac{j + d_q - 1 }{2j + d_q - 1} \rp \cdot  \lp \prod_{j=0}^{k_q-1} \frac{d_q }{2j + d_q - 1} \rp,
\end{aligned}
\end{equation}
which converges to $1$ when $d_q \to \infty$. We deduce that
\[
\lim_{d \to \infty} B (\bd ,\bk) R ( \bd ,\bk )^2  = \frac{1}{\bk !}.
\]
\end{proof}

\subsection{Proof of convergence in probability of the Gegenbauer coefficients}

\begin{lemma}\label{lem:convergence_proba_Gegenbauer_coeff}
Let $\bk = (k_1 , \ldots , k_Q) \in \posint^Q$ and denote $| \bk | = k_1 + \ldots + k_Q$. Assume that the activation function $\sigma$ is $|\bk|$-times weakly differentiable and denote $\sigma^{(|\bk|)}$ its $|\bk|$-weak derivative. Assume furthermore that there exist constants $c_0 > 0$ and $c_1 < 1$ such that $\sigma^{(|\bk|)} (u)^2 \leq c_0 \exp ( c_1 u^2 / 2)$ almost surely. 

Then for any $\delta > 0$, there exists $\eps_0 \in ( 0 ,1)$ and $d_0$ such that for any $d \geq d_0$ and $\btau \in [ 1- \eps_0 , 1 + \eps_0]^Q$,
\[
\Bigg\vert \lp \prod_{q \in [Q] } d^{(\xi - \eta_q - \kappa_q)k_q} \rp B(\bd,\bk) \lambda^{\bd}_{\bk} ( \sigma_{\bd,\btau}  )^2 - \frac{\mu_{|\bk|} ( \sigma)^2 }{\bk !} \Bigg\vert \leq \delta.
\]
\end{lemma}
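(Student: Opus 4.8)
The plan is to start from the exact formula for $\lambda^{\bd}_{\bk}(\sigma_{\bd,\btau})$ provided by Lemma \ref{lem:formula_gegenbauer_prod_gegenbauer}, namely
\[
\lambda^{\bd}_{\bk}(\sigma_{\bd,\btau}) = \Big(\prod_{q\in[Q]}\alpha_q^{k_q}\Big)\cdot R(\bd,\bk)\cdot \E_{\obx}\Big[\Big(\prod_{q\in[Q]}\big(1-(\ox_1^\pq)^2/d_q\big)^{k_q}\Big)\,\sigma^{(|\bk|)}\Big(\textstyle\sum_{q\in[Q]}\alpha_q\ox_1^\pq\Big)\Big],
\]
with $\alpha_q = \tau^\pq r_q/R$. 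First I would record the scalings: $r_q = d^{(\eta_q+\kappa_q)/2}$ and $R = d^{\xi/2}(1+o_d(1))$, so $\alpha_q^2 = (\tau^\pq)^2 d^{\eta_q+\kappa_q-\xi}(1+o_d(1))$, which gives $\prod_q\alpha_q^{2k_q} = \big(\prod_q(\tau^\pq)^{2k_q}\big)\big(\prod_q d^{(\eta_q+\kappa_q-\xi)k_q}\big)(1+o_d(1))$. This is exactly the factor that cancels the prefactor $\prod_q d^{(\xi-\eta_q-\kappa_q)k_q}$ in the statement, up to the harmless $\prod_q(\tau^\pq)^{2k_q}$ which lies in $[(1-\eps_0)^{2|\bk|},(1+\eps_0)^{2|\bk|}]$ and can be absorbed by shrinking $\eps_0$. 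Combined with Eq.~\eqref{eq:lim_BR}, $\lim_d B(\bd,\bk)R(\bd,\bk)^2 = 1/\bk!$, we are reduced to showing that
\[
\E_{\obx}\Big[\Big(\prod_{q\in[Q]}\big(1-(\ox_1^\pq)^2/d_q\big)^{k_q}\Big)\,\sigma^{(|\bk|)}\Big(\textstyle\sum_{q\in[Q]}\alpha_q\ox_1^\pq\Big)\Big] \longrightarrow \mu_{|\bk|}(\sigma) = \E_{G\sim\normal(0,1)}[\sigma^{(|\bk|)}(G)]
\]
uniformly over $\btau\in[1-\eps_0,1+\eps_0]^Q$, where I used the identity $\mu_k(\sigma) = \E_G[\sigma^{(k)}(G)]$ from Eq.~\eqref{eq:weak_derivative_hermite_coefficient}.

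The second step is therefore a uniform convergence-in-expectation statement, and this is where Lemma \ref{lem:coupling_convergence_Gaussian}.$(c)$ does the work: applying that lemma with the activation function $\sigma^{(|\bk|)}$ (which satisfies the growth hypothesis $\sigma^{(|\bk|)}(u)^2\le c_0\exp(c_1u^2/2)$ with $c_1<1$ by assumption) and with the multi-index $\bk$, and taking $\bw^\pq = \be_1^\pq$ so that $\<\bw^\pq,\obx^\pq\>=\ox_1^\pq$, yields for every $\delta'>0$ a coupling of $G\sim\normal(0,1)$ and $\obx\sim\mu_\bd^\bkappa$ and constants $\eps_0,d_0$ such that
\[
\E_{\obx,G}\Big[\Big(\big[\textstyle\prod_{q}(1-(\ox_1^\pq)^2/d_q)^{k_q}\big]\sigma^{(|\bk|)}\big(\sum_q\alpha_q\ox_1^\pq\big) - \sigma^{(|\bk|)}(G)\Big)^2\Big] < \delta'
\]
for all $d\ge d_0$ and $\btau\in[1-\eps_0,1+\eps_0]^Q$. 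By Cauchy--Schwarz this controls the difference of expectations: $\big|\E_{\obx}[\cdots] - \E_G[\sigma^{(|\bk|)}(G)]\big| \le \sqrt{\delta'}$, which gives the convergence of the bracketed expectation to $\mu_{|\bk|}(\sigma)$, uniformly in $\btau$.

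The final step is bookkeeping: I would write $\prod_q d^{(\xi-\eta_q-\kappa_q)k_q}B(\bd,\bk)\lambda_{\bk}^{\bd}(\sigma_{\bd,\btau})^2$ as the product of three factors --- $\prod_q(\tau^\pq)^{2k_q}$, $B(\bd,\bk)R(\bd,\bk)^2$, and $\big(\E_{\obx}[\cdots]\big)^2$ --- each of which converges (uniformly over the $\btau$-box after shrinking $\eps_0$) to $1$, $1/\bk!$, and $\mu_{|\bk|}(\sigma)^2$ respectively, then invoke that a product of finitely many uniformly convergent bounded sequences converges uniformly to the product of the limits; choosing $\eps_0$ and $d_0$ small/large enough depending on $\delta$, $c_1$, and the $\delta'$ needed in the Cauchy--Schwarz step concludes. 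The main obstacle is purely the uniformity in $\btau$: one must make sure the constants $\eps_0,d_0$ produced by Lemma \ref{lem:coupling_convergence_Gaussian}.$(c)$ and by the asymptotics $\alpha_q\to 0$ for $q\ne q_\xi$, $\alpha_{q_\xi}\to 1$ hold simultaneously for all $\btau$ in the box; but since the box is closed and the relevant bounds there are monotone/continuous in $\btau$, and since $\alpha_q^2 = (\tau^\pq)^2 d^{\eta_q+\kappa_q-\xi}(1+o_d(1))$ with the $o_d(1)$ independent of $\btau$, this uniformity is automatic --- there is no genuinely hard analytic point beyond carefully tracking these constants.
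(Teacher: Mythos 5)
Your proposal follows essentially the same path as the paper's own proof: expand via Lemma \ref{lem:formula_gegenbauer_prod_gegenbauer}, observe that $\prod_q \alpha_q^{2k_q} d^{(\xi-\eta_q-\kappa_q)k_q} \to 1$ and $B(\bd,\bk)R(\bd,\bk)^2 \to 1/\bk!$, then invoke Lemma \ref{lem:coupling_convergence_Gaussian}.(c) applied to $\sigma^{(|\bk|)}$ together with $\mu_{|\bk|}(\sigma)=\E_G[\sigma^{(|\bk|)}(G)]$. Your only addition is making explicit the (trivial) step that the $L^2$ coupling bound controls the difference of expectations --- which is Jensen's inequality rather than Cauchy--Schwarz, a harmless mislabeling --- and the concluding remarks on uniformity in $\btau$, both of which the paper leaves implicit.
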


\begin{proof}[Proof of Lemma \ref{lem:convergence_proba_Gegenbauer_coeff}]
From Lemma \ref{lem:formula_gegenbauer_prod_gegenbauer}, we have
\begin{equation}\label{eq:formula_conv_proba_gegenbauer}
\begin{aligned}
&\lp \prod_{q \in [Q] } d^{(\xi - \eta_q - \kappa_q)k_q} \rp B(\bd,\bk) \lambda^{\bd}_{\bk} ( \sigma_{\bd,\btau}  )^2 \\
= &\lp \prod_{q \in [Q]} \alpha^{2k_q}_q d^{(\xi - \eta_q - \kappa_q)k_q}  \rp \cdot   [ B(\bd , \bk) R(\bd,\bk)^2] \\
& \phantom{AAAAAAAAA} \times \E_{\obx } \left[\prod_{q \in [Q]} \lp 1 - \frac{(\ox_1^\pq)^2}{d_q} \rp^{k_q}  \cdot \sigma^{(|\bk|)} \lp \sum_{q \in [Q]}  \alpha_q \ox_1^\pq \rp  \right]^2.
\end{aligned}
\end{equation}
Recall $\alpha_q = \tau^\pq r_q /R$ with $r_q = d^{(\kappa_q + \eta_q)/2}$ and $R = d^{\xi/2} (1 + o_d (1) )$. Hence, we have
\begin{equation}\label{eq:conv_gegenbauer_bound_1}
\begin{aligned}
\lim_{(d,\btau) \to ( \infty , \bone )}  \prod_{q \in [Q]} \alpha^{2k_q}_q d^{(\xi - \eta_q - \kappa_q)k_q}  & = 1.
\end{aligned}
\end{equation}
Furthermore, from Lemma \ref{lem:formula_gegenbauer_prod_gegenbauer}, we have
\begin{equation}\label{eq:conv_gegenbauer_bound_1_5}
\begin{aligned}
\lim_{d \to \infty} B (\bd , \bk ) R ( \bd , \bk )^2  & = \frac{1}{\bk !}.
\end{aligned}
\end{equation}
We can apply Lemma \ref{lem:coupling_convergence_Gaussian} to the activation function $\sigma^{(|\bk|)}$. In particular part $(c)$ of the lemma implies that there exists $\eps_0 \in (0,1)$ such that for $d$ sufficiently large, we have for any $\btau \in [1- \eps_0 , 1+\eps_0]^Q$,
\begin{equation}\label{eq:conv_gegenbauer_bound_2}
 \Bigg\vert E_{\obx } \lbb  \lp \prod_{q \in [Q]} \lp 1 - \frac{(\ox_1^\pq)^2}{d_q} \rp^{k_q} \rp \cdot \sigma^{(|\bk|)} \lp \sum_{q \in [Q]}  \alpha_q \ox_1^\pq \rp \rbb -  \E_{G} [ \sigma^{(|\bk|)} ( G) ] \Bigg\vert \leq \delta/2.
\end{equation}
From Eq.~\eqref{eq:weak_derivative_hermite_coefficient}, we have $\E_{G} [ \sigma^{(|\bk|)} ( G) ] = \mu_{|\bk|} ( \sigma)$. Combining Eqs.~\eqref{eq:conv_gegenbauer_bound_1} and \eqref{eq:conv_gegenbauer_bound_2} into Eq.~\eqref{eq:formula_conv_proba_gegenbauer} yields the result.
\end{proof}

\begin{lemma}\label{lem:convergence_Gegenbauer_coeff_0_l}
Let $k$ be a non negative integer and denote $\bk = (k ,0, \ldots , 0) \in \posint^Q$, where we recall that without loss of generality we choose $q = 1$ as the unique $\arg \max_{q \in [Q]} \lb \eta_q + \kappa_q \rb$. Assume that the activation function $\sigma$ verifies $\sigma (u)^2 \leq c_0 \exp ( c_1 u^2 / 2)$ almost surely for some constants $c_0 > 0$ and $c_1 <1$. 

Then for any $\delta > 0$, there exists $\eps_0 = \eps_0 (c_1 , \delta)$ and $d_0 = d_0 ( c_1 , \delta)$ such that for any $d \geq d_0$ and $\btau \in [ 1- \eps_0 , 1 + \eps_0]^Q$,
\[
\Bigg\vert  B(d_{1},k) \lambda^{\bd}_{\bk} ( \sigma_{\bd,\btau}  )^2 - \frac{\mu_{k} ( \sigma)^2 }{ k !} \Bigg\vert \leq \delta.
\]
\end{lemma}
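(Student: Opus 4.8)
The plan is to prove this as the degenerate counterpart of Lemma~\ref{lem:convergence_proba_Gegenbauer_coeff} under a weaker hypothesis — $\sigma$ is no longer assumed differentiable, so the Rodrigues-formula computation behind Lemma~\ref{lem:formula_gegenbauer_prod_gegenbauer} is unavailable. Since $q=1$ is the unique maximizer of $\eta_q+\kappa_q$ we have $\xi=\eta_1+\kappa_1$, so for $\bk=(k,0,\dots,0)$ the prefactor $\prod_{q}d^{(\xi-\eta_q-\kappa_q)k_q}$ of Lemma~\ref{lem:convergence_proba_Gegenbauer_coeff} is $1$, and also $B(\bd,\bk)=B(d_1,k)$ and $Q^{\bd}_{\bk}(\sqrt{d_1}\ox^{(1)}_1,\dots,\sqrt{d_Q}\ox^{(Q)}_1)=Q^{(d_1)}_{k}(\sqrt{d_1}\ox^{(1)}_1)$, since $Q^{(d)}_0\equiv1$. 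First I would write, using the definition of $\lambda^{\bd}_{\bk}$ and \eqref{eq:def_sigma_d_tau_appendix},
\[
B(d_1,k)^{1/2}\lambda^{\bd}_{\bk}(\sigma_{\bd,\btau})=\E_{\obx}\Big[\sigma\Big(\textstyle\sum_{q\in[Q]}\alpha_q\ox^{(q)}_1\Big)\,P_{d_1}(\ox^{(1)}_1)\Big],\qquad P_d(x):=B(d,k)^{1/2}Q^{(d)}_k(\sqrt d\,x),
\]
with $\alpha_q=\tau^{(q)}r_q/R$ and $\obx\sim\Unif(\PS^{\bd})$, and reduce the lemma to showing that the right-hand side tends to $\mu_k(\sigma)/\sqrt{k!}$, uniformly over $\btau\in[1-\eps_0,1+\eps_0]^Q$ for $\eps_0$ small; the claimed bound on the square then follows (the quantity being bounded), and $\eps_0,d_0$ are produced by this uniform convergence.

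Two preliminary observations organize the argument. By \eqref{eq:Gegen-to-Hermite} the coefficient vector of the degree-$k$ polynomial $P_d$ converges to that of $\He_k/\sqrt{k!}$; combined with the fact that $\ox^{(1)}_1$ has all moments bounded uniformly in $d_1$ (its density is dominated by a fixed sub-Gaussian, cf.~\eqref{eq:taud-def}), this gives $\E_{\obx}[(P_{d_1}-\He_k/\sqrt{k!})(\ox^{(1)}_1)^2]\to0$. Also, $\E_{\obx}[P_{d_1}(\ox^{(1)}_1)^2]=B(d_1,k)\,\E[Q^{(d_1)}_k(\sqrt{d_1}\ox^{(1)}_1)^2]=1$ exactly, by the orthonormality \eqref{eq:GegenbauerNormalization} (recall $\sqrt{d_1}\ox^{(1)}_1$ has law $\tilde\mu^1_{d_1-1}$). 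Finally, for $\btau$ in any fixed compact subset of $(0,\infty)^Q$ one has $R=\tau^{(1)}r_1(1+o_d(1))$ uniformly, hence $\alpha_1\to1$ and $\alpha_q\to0$ for $q\ge2$, uniformly in $\btau$; so the joint law of $(\sum_q\alpha_q\ox^{(q)}_1,\ox^{(1)}_1)$ converges weakly to that of $(G_1,G_1)$ with $G_1\sim\normal(0,1)$, uniformly over the $\btau$-box, which can be made precise with a coupling of the type constructed in the proof of Lemma~\ref{lem:coupling_convergence_Gaussian} (applied to $\sigma$ rather than to one of its derivatives).

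With these in hand, I would first replace $P_{d_1}$ by $\He_k/\sqrt{k!}$: by Cauchy--Schwarz, the uniform $L^2$-bound on $\sigma(\sum_q\alpha_q\ox^{(q)}_1)$ from Lemma~\ref{lem:coupling_convergence_Gaussian}.(b), and $\E_{\obx}[(P_{d_1}-\He_k/\sqrt{k!})(\ox^{(1)}_1)^2]\to0$, the error $\E_{\obx}[\sigma(\sum_q\alpha_q\ox^{(q)}_1)(P_{d_1}-\He_k/\sqrt{k!})(\ox^{(1)}_1)]$ is $o_d(1)$ uniformly in $\btau$. It then remains to show $\E_{\obx}[\sigma(\sum_q\alpha_q\ox^{(q)}_1)\He_k(\ox^{(1)}_1)]\to\E_G[\sigma(G)\He_k(G)]=\mu_k(\sigma)$, uniformly in $\btau$. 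For $M>0$ choose a bounded continuous $\sigma_M$ with $|\sigma_M|\le|\sigma|$ and $\|\sigma-\sigma_M\|_{L^2(\gamma)}\to0$ as $M\to\infty$. For the bounded part, $\sigma_M(\sum_q\alpha_q\ox^{(q)}_1)\He_k(\ox^{(1)}_1)$ is dominated by $C_M(1+|\ox^{(1)}_1|^k)$, whose square has moments bounded uniformly in $d$, so Vitali's theorem gives $\E_{\obx}[\sigma_M(\sum_q\alpha_q\ox^{(q)}_1)\He_k(\ox^{(1)}_1)]\to\mu_k(\sigma_M)$, uniformly in $\btau$. For the remainder, Cauchy--Schwarz and the uniform bound on $\E_{\obx}[\He_k(\ox^{(1)}_1)^2]$ bound it by $C\,\E_{\obx}[(\sigma-\sigma_M)(\sum_q\alpha_q\ox^{(q)}_1)^2]^{1/2}$, and using $(\sigma-\sigma_M)^2\le\sigma^2\le c_0e^{c_1u^2/2}$ with $c_1<1$ — together with the Gaussian-domination estimate in the proof of Lemma~\ref{lem:coupling_convergence_Gaussian}.(b), valid once $\eps_0<c_1^{-1}-1$ — the family $\{(\sigma-\sigma_M)(\sum_q\alpha_q\ox^{(q)}_1)^2\}$ is uniformly integrable over $d$, $\btau\in[1-\eps_0,1+\eps_0]^Q$ and $M$, so $\limsup_d\E_{\obx}[(\sigma-\sigma_M)(\sum_q\alpha_q\ox^{(q)}_1)^2]\le\E_G[(\sigma-\sigma_M)(G)^2]\to0$. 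Since $\mu_k(\sigma_M)\to\mu_k(\sigma)$, letting first $d\to\infty$ then $M\to\infty$ gives the desired convergence, and squaring concludes.

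The main obstacle is precisely this truncation/uniform-integrability bookkeeping in the last step: because $\sigma$ is controlled only through the exponential growth bound and not through a bounded derivative, one cannot reduce to the dominated-convergence computation of Lemma~\ref{lem:formula_gegenbauer_prod_gegenbauer}; instead the growth bound must be fed through the same Gaussian-domination mechanism as in Lemma~\ref{lem:coupling_convergence_Gaussian}.(b), with care that every error estimate is uniform over the small $\btau$-box. Everything else — the coefficient convergence \eqref{eq:Gegen-to-Hermite}, the exact identity $\E_{\obx}[P_{d_1}(\ox^{(1)}_1)^2]=1$, and the weak convergence of the sphere marginals — is routine.
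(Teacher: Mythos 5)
Your proof is correct, and it reaches the conclusion by a genuinely different route than the paper's, even though both ultimately rest on the same two ingredients: the coefficient convergence of Gegenbauer polynomials to Hermite polynomials in \eqref{eq:Gegen-to-Hermite}, and a coupling/weak-convergence argument of the type in Lemma~\ref{lem:coupling_convergence_Gaussian}. The paper takes a short cut: it applies the internal argument of Lemma~\ref{lem:coupling_convergence_Gaussian}.(c) directly to the \emph{product} $m_k(\ox_1^{(q_\xi)})\sigma_{\bd,\btau}$, observing that $u\mapsto u^k\sigma(u)$ still satisfies the sub-Gaussian growth bound with a slightly larger $c_1<1$, and therefore gets $L^2$ convergence of $m_k(\ox_1^{(q_\xi)})\sigma_{\bd,\btau}(\ldots)$ to $m_k(G)\sigma(G)$ in a single stroke; the Hermite coefficient expansion then finishes the job. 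You instead split the problem: first replace $Q_k^{(d_1)}\sqrt{B(d_1,k)}$ by $\He_k/\sqrt{k!}$ via Cauchy--Schwarz plus the vanishing $L^2$-norm of the coefficient difference (exploiting the uniform moment bound on $\ox_1^{(1)}$), and then show $\E_{\obx}[\sigma(\sum_q\alpha_q\ox_1^{(q)})\He_k(\ox_1^{(1)})]\to\mu_k(\sigma)$ by truncating $\sigma$ to a bounded $\sigma_M$, using Vitali for the bounded piece, and controlling the tail via the Gaussian-domination argument that also underlies Lemma~\ref{lem:coupling_convergence_Gaussian}.(b). That costs you an extra truncation and uniform-integrability layer that the paper avoids by absorbing the monomial into the ``activation function'' before invoking the coupling. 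One small imprecision worth flagging: you state that $\alpha_1\to1$ uniformly over $\btau\in[1-\eps_0,1+\eps_0]^Q$, but in fact $\alpha_1=\tau^{(1)}(1+o_d(1))$, so $\alpha_1\to\tau^{(1)}$, not $1$; the correct argument, as in the paper's Eq.~\eqref{eq:bound_coupling_3}, is a double limit $(d,\tau^{(1)})\to(\infty,1)$, with $\eps_0$ chosen small at the end to make the $\tau^{(1)}$-dependent limit within $\delta$ of $\mu_k(\sigma)$. This is a presentation issue rather than a gap; the same device is implicit in your appeal to the Lemma~\ref{lem:coupling_convergence_Gaussian}.(c) coupling.
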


\begin{proof}[Proof of Lemma \ref{lem:convergence_Gegenbauer_coeff_0_l}]
Recall the correspondence \eqref{eq:Gegen-to-Hermite} between Gegenbauer and Hermite polynomials. Note for any monomial $m_l (x) = x^k$, we can apply Lemma \ref{lem:coupling_convergence_Gaussian}.$(c)$ to $m_l ( \ox_1^{(q_\xi)} ) \sigma$ and find a coupling such that for any $\eta >0$, there exists $\eps_0 >0$ and
\begin{equation}\label{eq:cv_gegenbauer_monomials}
\lim_{d \to \infty} \sup_{\btau \in [1- \eps_0 , 1+\eps_0]^Q}  \E_{\obx, G} \Big[\Big(m_k ( \ox_1^{(q_\xi)} ) \sigma_{\bd, \btau} ( \ox^{(1)}_1 , \ldots , \ox^{(Q)}_1) - m_k(G) \sigma(G) \Big)^2\Big] \leq \eta.
\end{equation}
We have 
\[
 [B(d_1 , k) k !]^{1/2} \lambda^{\bd}_{\bk} ( \sigma_{\bd,\btau}  ) = \E_{\obx} [ \sigma_{\bd,\btau} ( \ox_1^{(1)} , \ldots, \ox_1^{(Q)} ) Q^{(d_1)}_k ( \sqrt{d_1} \ox^{(1)}_1 )  [B(d_1 , k) k !]^{1/2} ].
\]
Using the asymptotic correspondence between Gegenbauer polynomials and Hermite polynomials \eqref{eq:Gegen-to-Hermite} 
\[
\lim_{d \to \infty} \Coeff\{ Q_k^{(d)}( \sqrt d x) \, B(d, k)^{1/2} \} = \Coeff\left\{ \frac{1}{(k!)^{1/2}}\,\He_k(x) \right\}\, ,
\]
and Eq.~\eqref{eq:cv_gegenbauer_monomials}, we get for any $\delta >0$, there exists $\eps_0 >0$ such that for $d$ sufficiently large, we have for any $\btau \in [1- \eps_0 , 1+\eps_0]^Q$,
\[
\begin{aligned}
\Big\vert E_{\obx} \lbb \sigma_{\bd,\btau} ( \ox^{(1)}_1 , \ldots , \ox^{(Q)}_1)  Q^{(d_1)}_k ( \sqrt{d_1} \ox^{(1)}_1 )  [B(d_1 , k) k !]^{1/2} \rbb - \E_{G} [ \sigma(G) \He_k (G)] \Big\vert \leq \delta ,
\end{aligned}
\]
which concludes the proof.
\end{proof}

\clearpage

\section{Bound on the operator norm of Gegenbauer polynomials}

\begin{proposition}[Bound on the Gram matrix]\label{prop:Delta_bound_aniso}
Let $\bk \in \posint^Q$ and denote $\gamma = \sum_{q \in [Q]} \eta_q k_q$. Let $n \le d^{\gamma}/e^{A_d\sqrt{\log d}}$ for any $A_d\to\infty$. Let $(\obx_i)_{i \in [n]}$ with $\obx_i = ( \lb \obx^\pq_i \rb_{q \in [Q]} ) \sim \Unif ( \PS^\bd )  $ independently, and $Q_{k_q}^{(d_q)}$ be the $k_q$'th Gegenbauer polynomial
with domain $[-d_q, d_q]$. Consider the random matrix $\bW =
(\bW_{ij})_{i, j \in [n]} \in \R^{n \times n}$, with 
\[
\bW_{ij} =
Q^\bd_{\bk} ( \lb \< \obx_i^\pq , \obx_j^\pq \> \rb_{q \in [Q] } ) = \prod_{q\in[Q]} Q^{(d_q)}_{k_q} (  \< \obx^\pq_i , \obx^\pq_j \> ).
\]
Then we have 
\[
\lim_{d, n \to \infty} \E[\| \bW - \id_n \|_{\op}] = 0. 
\]
\end{proposition}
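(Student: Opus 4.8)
The plan is to bound $\E[\|\bW - \id_n\|_{\op}]$ by controlling the entries of $\bW$ off the diagonal and then invoking a moment or norm estimate. Since $\bW_{ii} = \prod_{q} Q^{(d_q)}_{k_q}(\|\obx^{(q)}_i\|_2^2) = \prod_q Q^{(d_q)}_{k_q}(d_q) = 1$ by the normalization $Q^{(d_q)}_{k_q}(d_q) = 1$, the diagonal of $\bW$ is exactly $\id_n$, so $\bW - \id_n$ is the hollow matrix with entries $\bW_{ij}$ for $i \neq j$. First I would invoke the key concentration fact (from Lemma~\ref{lem:random_matrix_bound}, or an analogue) that $\max_{i \neq j} |\< \obx^{(q)}_i , \obx^{(q)}_j \>| = O_{d,\P}(\sqrt{d_q \log d_q})$, together with the estimate on Gegenbauer coefficients $p^{(d)}_{k,s} = O_d(d^{-k/2 - s/2})$ (Lemma~\ref{lem:gegenbauer_coefficients}), to deduce that for $i \neq j$ each factor satisfies $|Q^{(d_q)}_{k_q}(\<\obx^{(q)}_i, \obx^{(q)}_j\>)| = \tilde O_{d,\P}(d_q^{-k_q/2})$, hence $|\bW_{ij}| = \tilde O_{d,\P}(\prod_q d_q^{-k_q/2}) = \tilde O_{d,\P}(d^{-\gamma/2})$.

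\textbf{Key steps.} The main work is to bound the expectation $\E[\|\bW - \id_n\|_{\op}]$ rather than merely a high-probability statement. I would use the bound $\|\bW - \id_n\|_{\op} \le \|\bW - \id_n\|_{\op}$ controlled via $\E[\|\bW - \id_n\|_{\op}^p]^{1/p} \le \E[\Trace((\bW - \id_n)^p)]^{1/p}$ for $p$ even, i.e. the standard moment method: expand $\Trace((\bW - \id_n)^p) = \sum_{i_1, \dots, i_p} \bW_{i_1 i_2} \bW_{i_2 i_3} \cdots \bW_{i_p i_1}$ over closed walks with no self-loops, and use the orthogonality relations of Gegenbauer polynomials (property~\eqref{eq:ProductGegenbauer}, i.e. $\E_{\obx}[Q^{\bd}_{\bk}(\<\obx^{(q)}, \obx'^{(q)}\>_q) \cdot (\text{lower order})] = 0$) to show that a closed walk contributes zero in expectation unless every edge is traversed at least twice. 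Combined with the entrywise variance bound $\E[\bW_{ij}^2] = B(\bd,\bk)^{-1} = \Theta(d^{-\gamma})$ (from \eqref{eq:normalization_prod_gegenbauer}), a standard graph-combinatorial count shows $\E[\Trace((\bW - \id_n)^p)] \le (C n)^{p/2} (n/B(\bd,\bk))^{p/2} \cdot \text{poly}$, giving $\E[\|\bW - \id_n\|_{\op}] \lesssim \sqrt{n/B(\bd,\bk)} \cdot \text{poly}(\log)$ up to the subtlety of the logarithmic factors in the Gegenbauer tail bound. Since $n \le d^{\gamma}/e^{A_d \sqrt{\log d}}$ and $B(\bd,\bk) = \Theta(d^\gamma)$, this ratio tends to zero.

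\textbf{Main obstacle.} The delicate part is handling the interplay between the moment method and the non-uniform tail behavior of Gegenbauer polynomials: the clean orthogonality argument gives $\E[\bW_{ij}^2] \asymp d^{-\gamma}$, but higher moments $\E[\bW_{ij}^{2m}]$ can be larger than $(d^{-\gamma})^m$ because $Q^{(d)}_k$ is not bounded — it blows up near the endpoints of $[-d,d]$. The resolution, following the approach in \cite{ghorbani2019linearized} (the analogue of their Proposition on $\|\bW - \id\|_{\op}$ in the single-sphere case), is to split: on the high-probability event where all pairwise inner products are $O(\sqrt{d_q \log d_q})$, use the polynomial growth bound $|\bW_{ij}| = \tilde O(d^{-\gamma/2})$ deterministically; off this event, crude worst-case bounds on $\|\bW\|_{\op} \le n$ together with the exponentially small probability suffice. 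Concretely, I would write $\bW = \bW' + \bW''$ where $\bW'$ truncates entries with $|\bW_{ij}| > d^{-\gamma/2}(\log d)^C$, apply the moment method (or even just $\|\bW'\|_{\op} \le \|\bW'\|_F \le n \cdot d^{-\gamma/2}(\log d)^C$, but this is too lossy; the moment method with the truncated entries is needed to get $\sqrt{n}$ instead of $n$) to the well-behaved part, and bound $\E[\|\bW''\|_{\op}] \le \E[\|\bW''\|_F] \le n^2 \cdot \P(\text{bad})^{1/2} \cdot (\text{moment bound})$ using $n = o(d^\gamma)$ and the sub-Gaussian/sub-exponential tails from Lemma~\ref{lem:tail_tau} and concentration of $\<\obx^{(q)}_i, \obx^{(q)}_j\>$. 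The extra factor $e^{A_d \sqrt{\log d}}$ in the hypothesis on $n$ is precisely what absorbs the polylogarithmic losses from the Gegenbauer tail estimates, so tracking these factors carefully is the crux of the argument.
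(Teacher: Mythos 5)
Your high-level plan --- bound $\E[\|\bW - \id_n\|_{\op}]$ by $\E[\Trace((\bW-\id_n)^{2p})]^{1/(2p)}$ and run a walk expansion, with $p\sim\sqrt{\log d}$ absorbing the overhead against the $e^{A_d\sqrt{\log d}}$ slack in $n$ --- is exactly the paper's route, and your computation of the entrywise second moment $\E[\bW_{ij}^2] = B(\bd,\bk)^{-1}$ is correct and does play the central role. However, three of your specific steps are wrong or won't close.

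First, the statement that ``a closed walk contributes zero in expectation unless every edge is traversed at least twice'' does not hold here. That edgewise-factorization works for Wigner-type matrices with independent entries; here all $n^2$ entries are functions of only $n$ i.i.d.\ samples $\obx_i$, so $\bW_{i_1 i_2}, \bW_{i_2 i_3}, \ldots$ are correlated and you cannot factor the expectation across edges. What the paper actually uses is the reproducing-kernel contraction $\E_{\obx_j}\bigl[\Delta^{(q)}_{ij}\Delta^{(q)}_{jk}\bigr] = B(d_q,k_q)^{-1}\Delta^{(q)}_{ik}$ for distinct $i,j,k$, which drives a \emph{skeletonization}: degree-$2$ vertices are integrated out one at a time, each removal contributing a factor of $B(\bd,\bk)^{-1}$, until the remaining multigraph is either a single vertex (Type~1) or has all vertices of degree $\ge 4$ (Type~2). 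This identity is the algebraic heart of the argument and is missing from your proposal.

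Second, your diagnosis ``$Q^{(d)}_k$ is not bounded --- it blows up near the endpoints of $[-d,d]$'' is incorrect. With the normalization $Q^{(d)}_k(d)=1$ used here, one has $|Q^{(d)}_k(x)|\le 1$ for all $x\in[-d,d]$; the endpoints are where the maximum $1$ is attained, not where it blows up. The real higher-moment issue is subtler: the $L^\infty$ bound is $\Theta(1)$ while the $L^2$ norm is $B^{-1/2}$, so $\E[\bW_{ij}^{2m}]$ can be far larger than $(B^{-1})^m$ for large $m$.

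Third, your proposed resolution --- split $\bW = \bW'+\bW''$ by truncating large entries and run a moment method on $\bW'$ --- is a dead end. Truncation destroys the reproducing property (the only cancellation mechanism available), introduces a nonzero mean in $\bW'_{ij}$, and offers no route to the $\sqrt{n}$ gain you correctly observe you need; the Frobenius bound on $\bW'$ only gives $n\,d^{-\gamma/2}\log^{C}d$, and without independence there is no way to make walks with singly-traversed edges vanish. The paper never truncates: it controls the Type~2 (high-degree skeleton) contributions directly via a dedicated moment estimate on skeletons (Lemma~\ref{lem:M_estimate}, which bounds $|M^{(q)}_\bi| \le (Cm^{k_q}d_q^{-k_q})^{m/2}$ for skeleton sequences of length $m$), while the Type~1 (tree-like) contributions give the $n^{p+1}/B(\bd,\bk)^p$ main term that dominates. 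You should replace the truncation step with this skeletonization-plus-moment-estimate machinery.
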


\begin{corollary}[Uniform bound on the Gram matrix]\label{coro:Delta_bound_aniso_unif}
Let $n \le d^{\gamma}/e^{A_d\sqrt{\log d}}$ for some $\gamma > 0$ and any $A_d\to\infty$. Let $(\obx_i)_{i \in [N]}$ with $\obx_i = ( \lb \obx^\pq_i \rb_{q \in [Q]} ) \sim \Unif ( \PS^\bd )  $ independently. Consider for any $\bk \in \posint^Q$, the random matrix $\bW_\bk =
((\bW_{\bk})_{ij})_{i, j \in [n]} \in \R^{n \times n}$ as defined in Proposition \ref{prop:Delta_bound_aniso}. Denote:
\[
\cQ = \blb \bk \in \posint^Q \Big\vert \sum_{q \in [Q]} \eta_q k_q < \gamma \brb.
\]
Then we have 
\[
\sup_{\bk \in \cQ^c} \E[\| \bW_\bk - \id_n \|_{\op}] = o_{d,\P} (1). 
\]
\end{corollary}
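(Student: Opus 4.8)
The plan is to partition $\cQ^c$ into a finite ``boundary'' part, handled by Proposition~\ref{prop:Delta_bound_aniso} directly, and an infinite ``tail'' part, handled by an elementary second-moment estimate. First note that for every $\bk\in\cQ^c$ we have $\gamma_{\bk}:=\sum_{q\in[Q]}\eta_q k_q\ge\gamma$, hence $d^{\gamma}\le d^{\gamma_{\bk}}$, and the hypothesis $n\le d^{\gamma}e^{-A_d\sqrt{\log d}}$ forces $n\le d^{\gamma_{\bk}}e^{-A_d\sqrt{\log d}}$. Consequently Proposition~\ref{prop:Delta_bound_aniso} applies to each individual $\bW_{\bk}$ with $\bk\in\cQ^c$ (with the same sequence $A_d$), giving $\E[\|\bW_{\bk}-\id_n\|_{\op}]\to0$. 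The entire content of the corollary is upgrading this pointwise convergence to convergence uniform over the infinite family.

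For the tail, fix an integer $M$ to be chosen, and restrict to $\{\bk\in\cQ^c:\ |\bk|>M\}$. Since the $Q$ spheres are independent and, by~\eqref{eq:ProductProdGegenbauer} together with the normalization $Q^{\bd}_{\bk}(d_1,\dots,d_Q)=\prod_q Q^{(d_q)}_{k_q}(d_q)=1$, one has $\E_{\obz}\big[Q^{\bd}_{\bk}(\{\<\obx^{(q)},\obz^{(q)}\>\})^{2}\big]=1/B(\bd,\bk)$, it follows that $\E[(\bW_{\bk})_{ij}^{2}]=1/B(\bd,\bk)$ for $i\ne j$. Hence, using $\|\,\cdot\,\|_{\op}\le\|\,\cdot\,\|_{F}$ and Cauchy--Schwarz,
\[
\E\big[\|\bW_{\bk}-\id_n\|_{\op}\big]\ \le\ \E\big[\|\bW_{\bk}-\id_n\|_{F}\big]\ \le\ \Big(\frac{n(n-1)}{B(\bd,\bk)}\Big)^{1/2}\ \le\ \frac{n}{\sqrt{B(\bd,\bk)}}\,.
\]
A routine lower bound on $B(\bd,\bk)=\prod_q B(d_q,k_q)$ --- by pigeonhole some coordinate satisfies $k_{q^{\ast}}\ge|\bk|/Q$, while $B(d,\ell)\ge (d-2)^{\ell}/\ell!$ exceeds any fixed power of $d$ once $\ell$ is a large enough constant, and $B(d,\ell)$ is super-polynomially large in $d$ when $\ell$ is itself a positive power of $d$ --- produces a threshold $M=M(\gamma,Q,(\eta_q)_q)$ such that $B(\bd,\bk)\ge d^{3\gamma}$ for all $\bk\in\cQ^c$ with $|\bk|>M$ and $d$ large. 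Together with $n\le d^{\gamma}$ this yields $\sup_{\bk\in\cQ^c,\,|\bk|>M}\E[\|\bW_{\bk}-\id_n\|_{\op}]\le d^{-\gamma/2}=o_d(1)$.

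With $M$ now fixed, the set $\{\bk\in\cQ^c:\ |\bk|\le M\}$ is finite and independent of $d$ (the exponents $\eta_q$ and the level $\gamma$ are fixed), so Proposition~\ref{prop:Delta_bound_aniso} applied to each of these finitely many indices and a finite maximum give $\max_{\bk\in\cQ^c,\,|\bk|\le M}\E[\|\bW_{\bk}-\id_n\|_{\op}]=o_d(1)$. Taking the maximum over the two regimes proves $\sup_{\bk\in\cQ^c}\E[\|\bW_{\bk}-\id_n\|_{\op}]=o_d(1)$, which is the claim.

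The main obstacle is not in this splitting argument but in Proposition~\ref{prop:Delta_bound_aniso} itself, which is assumed here: the Frobenius bound above is exactly critical when $n\asymp d^{\gamma_{\bk}}$ and yields no decay there, so the boundary indices genuinely require the sharper threshold $n\le d^{\gamma_{\bk}}e^{-A_d\sqrt{\log d}}$, established by a higher-moment (trace) argument that exploits the $e^{-A_d\sqrt{\log d}}$ slack. Within the corollary, the one point requiring genuine care is that the lower bound on $B(\bd,\bk)$ used for the tail must be uniform in $\bk$ --- covering both the regime of bounded $|\bk|$, where $B(\bd,\bk)\asymp d^{\gamma_{\bk}}/\bk!$ and one only needs $\gamma_{\bk}$ sufficiently large, and the regime where some $k_q$ grows with $d$, where $B(\bd,\bk)$ is enormous --- since it is precisely this uniformity that allows the threshold $M$ to be chosen independently of $d$.
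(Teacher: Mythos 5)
Your decomposition of $\cQ^c$ (a ``tail'' $|\bk|>M$ handled by a second-moment Frobenius bound, plus finitely many boundary indices handled by applying Proposition~\ref{prop:Delta_bound_aniso} directly) is a genuinely different route from the paper's. The paper instead splits $\posint^Q$ by the box $\cI=\prod_q[0,2\gamma/\eta_q+3)$ and, for $\bk\notin\cI$, uses the Hadamard-product inequality $\|\bW_\bk-\id_n\|_{\op}\le\|\bW^{(q)}_{k_q}-\id_n\|_{\op}$ to reduce the whole tail to $Q$ \emph{single-sphere} suprema, each controlled by $\E\big[\sup_{k\ge m}\|\bW_k^{(q)}-\id_n\|_{\op}^2\big]\le n(n-1)\sum_{k\ge m}1/B(d_q,k)$. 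Your approach avoids the Hadamard step entirely by working with $B(\bd,\bk)$ for the product Gegenbauer directly, which is cleaner in principle.

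The genuine gap is a quantifier reversal. You control $\sup_{\bk\in\cQ^c}\E[\|\bW_\bk-\id_n\|_{\op}]$, which is a deterministic number, while the corollary's actual content --- signalled by the $o_{d,\P}(1)$ on the right (meaningless for a deterministic quantity) and made explicit by its invocation in Lemma~\ref{lem:control_cQc_terms} as $\sup_{\bk\in\cS}\|\bY_\bk\bY_\bk^\sT/B(\bd,\bk)-\id_n\|_{\op}=o_{d,\P}(1)$ --- is uniform high-probability control of the \emph{random} supremum $\sup_{\bk\in\cQ^c}\|\bW_\bk-\id_n\|_{\op}$; the displayed $\E$ in the statement is a typo in the paper. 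One cannot pass from $\sup_\bk\E[X_\bk]\to 0$ to $\sup_\bk X_\bk=o_\P(1)$ over an infinite index set, so your tail argument as written does not yield the conclusion the corollary is used for. The paper's Hadamard trick exists precisely to place the supremum inside the expectation.

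The fix sits comfortably inside your own framework: replace the supremum over the Frobenius bounds by a sum over the tail. With the $M$ you chose, the pigeonhole and the monotonicity of $\ell\mapsto B(d,\ell)$, together with the geometric decay $\sum_{k\ge m}B(d_q,k)^{-1}\le C\,B(d_q,m)^{-1}$ (the estimate the paper already uses in its own Step on Eq.~\eqref{eqn:W_after_2l_terms}) and the boundedness of $\sum_k B(d_q,k)^{-1}$, give
\[
\sum_{\bk\,:\,|\bk|>M}\frac{1}{B(\bd,\bk)}
\;\le\; \sum_{q\in[Q]}\Big(\sum_{k\ge\lceil M/Q\rceil}\frac{1}{B(d_q,k)}\Big)\prod_{q'\ne q}\Big(\sum_{k}\frac{1}{B(d_{q'},k)}\Big)
\;\le\; C\,Q\,d^{-3\gamma}.
\]
Hence $\E\big[\sup_{|\bk|>M}\|\bW_\bk-\id_n\|_{\op}^2\big]\le\sum_{|\bk|>M}\E\|\bW_\bk-\id_n\|_F^2\le n(n-1)\cdot CQ\,d^{-3\gamma}=o_d(1)$, and Markov yields $\sup_{|\bk|>M}\|\bW_\bk-\id_n\|_{\op}=o_{d,\P}(1)$. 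The finitely many indices with $|\bk|\le M$ and $\bk\in\cQ^c$ are then handled exactly as you say, and the two regimes together give the intended random-supremum statement.
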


\begin{proof}[Proof of Corollary \ref{coro:Delta_bound_aniso_unif}]
For each $q \in [Q]$, we consider $\bDelta^\pq = \bW^\pq_k - \id_n$ where $\bW^\pq_k = ( (\bW^\pq_k)_{ij} )_{i,j \in [n]}$ with 
\[
(\bW^\pq_{k})_{ij} = Q^{(d_q)}_k ( \< \obx_i^\pq , \obx^\pq_j  \> ).
\]
Then, defining $\gamma_q \equiv \gamma/ \eta_q$, we have
\[
\begin{aligned}
&\E\Big[\sup_{k \ge 2 \gamma_q + 3} \| \bW_k^\pq - \id_n \|_{\op}^2 \Big] \le   \E\Big[\sum_{k \ge 2 \gamma_q + 3} \| \bW_k^\pq - \id_n \|_F^2 \Big] \\
 =&  n(n-1) \sum_{k \ge 2 \gamma_q + 3} \E [ Q_k^{(d_q)} ( \< \obx^\pq , \oby^\pq \> )^2] = n (n-1) \sum_{k \ge 2 \gamma_q + 3} B(d_q, k)^{-1}.
\end{aligned}
\]
For $d$ sufficiently large, there exists $C>0$  such that for any $ p \geq m \equiv \lceil 2 \gamma_q+3 \rceil$:
\[
\begin{aligned}
\frac{B(d_q,m)}{B(d_q,p)} = \prod_{k = m}^{p-1} \frac{(2k+d_q-2)}{(2k +d_q)}\cdot \frac{(k+1)}{(k+d_q-2)}  \leq  &  \prod_{k = m}^{p-1} \frac{1}{1 + (d_q-3)/(k+1)} \\
 \leq & \prod_{k = m}^{p-1} e^{- \frac{m+1}{d_q-2 + m}\cdot \frac{d_q - 2 }{k+1}} \leq \frac{C}{p^2}\, . 
\end{aligned}
\]
Hence, there exists constant $C'$, such that for large $d$, we have
\[
\sum_{k \ge 2 \gamma_q+ 3} B(d_q, k)^{-1} \le C' \cdot B(d_q, m)^{-1}. 
\]
Recalling that $B(d_q, m) = \Theta_d(d^{\eta_q m}) = \omega_d(d^{2\gamma})$, and $n = o_d(d^{\gamma})$, we deduce
\begin{equation}\label{eqn:W_after_2l_terms}
\E\Big[\sup_{k \ge 2 \gamma_q + 3} \| \bW_k^\pq - \id_n \|_{\op}^2 \Big] = o_{d}(1).
\end{equation}

Let us now consider $\bDelta = \bW_{\bk} - \id_n$. We will denote $\bDelta^\pq = \bW^\pq_{k_q} - \id_n$. Then it is easy to check (recall the diagonal elements of $\bW^{(d_q)}_{k_q} $ are equal to one) that for any $q \in [Q]$
\[
\begin{aligned}
\bDelta = \Big(\bigodot_{q' \neq q} \bW^\pqp_{k_{q'}} \Big) \odot \bDelta^\pq
\end{aligned}
\]
where $\bA \odot \bB$ denotes the Hadamard product, or entrywise product, $( \bA \odot \bB )_{i,j \in [n]} = ( A_{ij} B_{ij} )_{i,j \in [n]}$. We recall the following inequality on the operator norm of Hadamard product of two matrices, with $\bA$ positive definite:
\[
\| \bA \odot \bB \|_{\op} \leq \Big( \max_{ij} \bA_{ij} \Big) \| \bB \|_{\op}.
\]
Hence, in particular
\[
\| \bDelta \|_{\op} \leq  \Big(\prod_{q' \neq q} \max_{ij} [ (\bW^\pqp_{k_{q'}} )_{ij} ] \Big) \| \bDelta^\pq\|_{\op}
\]
Consider $\cI = [0 , 2\gamma_1 + 3 [ \times \ldots \times [0 , 2\gamma_Q + 3[ \cap \posint^Q$. Then, from Eq.~\eqref{eqn:W_after_2l_terms}, we get directly
\begin{equation}\label{eq:uniform_bound_first}
\sup_{\bk \in \cI^c} \| \bW_{\bk} - \id_n \|_{\op} = o_{d,\P}(1).
\end{equation}
Furthermore, $\cI \cap \cQ$ is finite and from Proposition \ref{prop:Delta_bound_aniso}, we directly get
\begin{equation}\label{eq:uniform_bound_second}
\sup_{\bk \in \cI \cap \cQ } \| \bW_{\bk} - \id_n \|_{\op} = o_{d,\P}(1).
\end{equation}
Combining bounds \eqref{eq:uniform_bound_first} and \eqref{eq:uniform_bound_second} yields the result.

\end{proof}

\subsection{Proof of Proposition \ref{prop:Delta_bound_aniso}}

The proof follows closely the proof of the uniform case presented in \cite{ghorbani2019linearized}. For completeness, we copy here the relevant lemmas.

\noindent
\textbf{Step 1. Bounding operator norm by moments.}

Denote $\bDelta = \bW - \id_n$. We define for each $q \in [Q]$, $\bW^{(d_q)}_{k_q} = (Q_{k_q}^{(d_q)}(\< \obx_i^\pq, \obx_j^\pq \>) )_{ij\in[n]}$ and $\bDelta^\pq =\bW^{(d_q)}_{k_q} - \id_n$. Then it is easy to check (recall the diagonal elements of $\bW^{(d_q)}_{k_q} $ are equal to one)
\[
\begin{aligned}
\bDelta = \bDelta^{(1)} \odot \ldots \odot  \bDelta^{(Q)},
\end{aligned}
\]
where $\bA \odot \bB$ denotes the Hadamard product, or entrywise product, $( \bA \odot \bB )_{i,j \in [n]} = ( A_{ij} B_{ij} )_{i,j \in [n]}$.
For any sequence of integers $p=p(d)$, we have 
\begin{align}
\E[\|\bDelta\|_{\op}]\le \E[\Trace(\bDelta^{2p})^{1/(2p)}]\le \E[\Trace(\bDelta^{2p})]^{1/(2p)}\label{eq:MomentBound}
\end{align}
To prove the proposition, it suffices to show that for any sequence $A_d \to \infty$, we have
\begin{equation}
\lim_{d, n \to \infty, n = O_d(d^{\gamma}  e^{- A_d \sqrt{ \log d} } )} \E[\Trace( \bDelta^{2p} ) ]^{1/(2p)} = 0. 
\end{equation}
In the following, we calculate $\E[\Trace(\bDelta^{2p})]$. 
We have
\[
\begin{aligned}
\E[\Trace(\bDelta^{2p})] = & \sum_{\bi = (i_1, \ldots, i_{2p}) \in [n]^{2p}} \E[\Delta_{i_1 i_2} \Delta_{i_2 i_3} \ldots \Delta_{i_{2p} i_1}]\\
 =&  \sum_{\bi = (i_1, \ldots, i_{2p}) \in [n]^{2p}} \prod_{q \in [Q]} \E[ \Delta^\pq_{i_1 i_2} \Delta^\pq_{i_2 i_3} \ldots \Delta^\pq_{i_{2p} i_1}] ,
\end{aligned}
\]
where we used that $\obx^\pq$ and $\obx^{(q')}$ are independent for $q \neq q'$. 

We will denote for any $\bi = (i_1, \ldots, i_k) \in [n]^k$, define for each $q \in [Q]$
\[
M^\pq_{\bi} = \begin{cases} \E[\Delta^\pq_{i_1 i_2} \cdots \Delta^\pq_{i_k i_1}] & k \ge 2, \\
1&  k =1\, .
\end{cases}
\]
Similarly, we define $M_{\bi}$ associated to $\bDelta$,
\[
M_{\bi} = \prod_{q \in [Q]} M^\pq_{\bi}.
\]

To calculate these quantities, we will apply repeatedly the following identity, which is an immediate consequence of  Eq.~\eqref{eq:ProductGegenbauer}.
For any $i_1, i_2, i_3$ distinct, we have 
\[
\E_{\btheta_{i_2}}[\Delta_{i_1 i_2}^\pq \Delta_{i_2 i_3}^\pq] = \frac{1}{B(d_q,k_q)}\Delta_{i_1 i_3}^\pq. 
\]
Throughout the proof, we will denote by $C,C',C''$ constants that may depend on $k$ but not on $p,d,n$. The value of these constants is allowed to change from line to line.

\noindent
\textbf{Step 2. The induced graph and equivalence of index sequences.}

For any index sequence $\bi = (i_1, i_2, \ldots, i_{2p}) \in [n]^{2p}$, we defined an undirected multigraph $G_\bi = (V_\bi, E_\bi)$ associated to index sequence $\bi$. The vertex set $V_\bi$ is the set of distinct elements in $i_1, \ldots, i_{2p}$. The edge set $E_{\bi}$ is formed as follows: for any $j \in [2 p]$  we add an edge between $i_j$ and $i_{j+1}$ (with convention $2 p + 1 \equiv 1$). Notice that this could be a self-edge, or a repeated edge:  $G_\bi = (V_\bi, E_\bi)$  will be --in general-- a multigraph. We denote $v(\bi) = \vert V_\bi \vert$ to be the number of vertices of $G_\bi$, and $e(\bi) = \vert E_\bi \vert$ to be the number of edges (counting  multiplicities). In particular,  $e(\bi) = k$ for $\bi \in [n]^k$. We define
\[
\cT_\star(p) = \{\bi \in [n]^{2p}: G_\bi \text{ does not have self edge}\}. 
\]

For any two index sequences $\bi_1, \bi_2$, we say they are equivalent $\bi_1 \asymp \bi_2$, if  the two graphs $G_{\bi_1}$ and $G_{\bi_2}$
are isomorphic, i.e. there exists an edge-preserving bijection of their vertices (ignoring vertex labels). We denote the equivalent class of $\bi$ to be 
\[
\cC(\bi) = \{ \bj: \bj \asymp \bi\}.
\]
We define the quotient set $\cQ(p)$ by
\[
\cQ(p) = \{ \cC(\bi): \bi \in [n]^{2p} \}. 
\]

The following Lemma was proved in \cite[Proposition 3]{ghorbani2019linearized}

\begin{lemma}\label{lem:equivalent_class}
The following properties holds for all sufficiently large $n$ and $d$:
\begin{itemize}
\item[$(a)$] For any equivalent index sequences $\bi = (i_1, \ldots, i_{2p}) \asymp \bj = (j_1, \ldots, j_{2p})$, we have $M_{\bi}^\pq = M_{\bj}^\pq$. 
\item[$(b)$] For any index sequence $\bi \in [n]^{2p} \setminus \cT_\star(p)$, we have $M_{\bi} = 0$. 
\item[$(c)$] For any index sequence $\bi \in \cT_\star(p)$, the degree of any vertex in $G_\bi$ must be even. 
\item[$(d)$] The number of equivalent classes $\vert \cQ(p) \vert \le (2p)^{2p}$. 
\item[$(e)$] Recall that $v(\bi) = \vert V_\bi\vert$ denotes the number  of distinct elements in $\bi$. Then, for any 
$\bi\in [n]^{2p}$, the number of elements in the corresponding equivalence class satisfies $\vert \cC(\bi)\vert \le v(\bi)^{v(\bi)} \cdot n^{v(\bi)}\le p^p n^{v(\bi)}$. 
\end{itemize}
\end{lemma}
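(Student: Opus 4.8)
The plan is to prove Lemma~\ref{lem:equivalent_class} as in \cite[Proposition 3]{ghorbani2019linearized}, exploiting throughout the product form $M_{\bi}=\prod_{q\in[Q]}M_{\bi}^\pq$, so that parts $(a)$ and $(b)$ reduce to the corresponding statement for a single factor $M_{\bi}^\pq$. For $(a)$, I would note that for fixed $q$ the quantity $M_{\bi}^\pq=\E[\Delta^\pq_{i_1 i_2}\cdots\Delta^\pq_{i_{2p}i_1}]$ depends on the indices $i_1,\dots,i_{2p}$ only through which of them coincide---equivalently, through the unlabelled multigraph $G_{\bi}$---because the random points $(\obx_l^\pq)_{l\in[n]}$ are i.i.d., hence exchangeable; so $\bi\asymp\bj$ implies $M_{\bi}^\pq=M_{\bj}^\pq$ for every $q$, and taking the product over $q$ gives $(a)$. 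For $(b)$, a self-edge means $i_j=i_{j+1}$ for some $j$, and then each factor $M_{\bi}^\pq$ contains $\Delta^\pq_{i_j i_j}=Q_{k_q}^{(d_q)}(d_q)-1=0$, using the normalization $Q_k^{(d)}(d)=1$ recalled in Section~\ref{sec:Gegenbauer}; hence $M_{\bi}=0$.

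For $(c)$, no probability is needed: the index sequence $\bi=(i_1,\dots,i_{2p})$ describes the closed walk $i_1\to i_2\to\cdots\to i_{2p}\to i_1$, and a vertex occurring $t$ times in this cyclic sequence contributes exactly $2t$ to its degree in $G_{\bi}$ (each occurrence comes with one incoming and one outgoing edge, a self-loop being counted with multiplicity two), so all vertex degrees are even; the restriction to $\bi\in\cT_\star(p)$ in the statement is only because that is where $(c)$ is later invoked. For $(d)$, I would observe that the isomorphism type of $G_{\bi}$---and hence the equivalence class---is determined by the set partition of $\{1,\dots,2p\}$ whose blocks collect the positions carrying equal index values, of which there are at most $(2p)^{2p}$, so $|\cQ(p)|\le(2p)^{2p}$. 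For $(e)$, a sequence $\bj\asymp\bi$ is obtained by injectively assigning the $v(\bi)$ vertices of the fixed type to elements of $[n]$ (at most $n^{v(\bi)}$ choices) and then labelling $G_{\bi}$ (at most $v(\bi)^{v(\bi)}$ choices), giving $|\cC(\bi)|\le v(\bi)^{v(\bi)}\,n^{v(\bi)}$, with the simplified bound $p^{p}\,n^{v(\bi)}$ valid in the relevant range $v(\bi)\le p$.

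The whole lemma is combinatorial bookkeeping, so I do not expect a real obstacle; the one point I would verify carefully is that the one-sphere inputs---the normalization $Q_k^{(d)}(d)=1$, exchangeability of the $\obx_i^\pq$, and the exactness of the Hadamard decomposition $\bDelta=\bigodot_{q}\bDelta^\pq$ (which on the diagonal rests on $\prod_q(Q_{k_q}^{(d_q)}(d_q)-1)=0=\Delta_{ii}$)---all hold uniformly over the sphere dimensions $d_q=d^{\eta_q}$. Since they plainly do, it is legitimate for the paper simply to cite \cite[Proposition 3]{ghorbani2019linearized}, and the only role of $(a)$--$(e)$ here is to feed the moment computation $\E[\Trace(\bDelta^{2p})]=\sum_{\bi\in\cT_\star(p)}M_{\bi}$ carried out in the proof of Proposition~\ref{prop:Delta_bound_aniso}.
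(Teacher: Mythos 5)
Your reconstruction is essentially the argument underlying the paper's one-line citation to \cite[Proposition 3]{ghorbani2019linearized}; the paper provides no in-text proof, so there is no ``paper's own proof'' to compare against beyond that citation. Parts $(a)$--$(d)$ are airtight. For $(a)$, one small wrinkle: ``depends only on which indices coincide'' (the position partition) and ``depends only on the unlabelled multigraph $G_{\bi}$'' are \emph{not} equivalent statements---distinct coincidence patterns can yield isomorphic multigraphs (e.g.\ $(1,1,2,2)$ vs.\ $(1,2,2,1)$)---but the conclusion stands because the product $\prod_j\Delta^\pq_{i_j i_{j+1}}$ collects into a function of the edge multiset of $G_{\bi}$ alone, and exchangeability of the $(\obx_l^{\pq})_l$ then makes the expectation invariant under any vertex relabelling, which is exactly graph isomorphism.

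The one place you should be more careful is $(e)$. The factor $v(\bi)^{v(\bi)}$ is not transparently justified by ``labelling $G_\bi$'': for a fixed graph-isomorphism type the set of position partitions yielding that type corresponds (up to automorphism) to Eulerian circuits of $G_{\bi}$, and bounding that count by $v^v$ requires an argument you have not supplied. More concretely, your parenthetical that the simplification $v(\bi)^{v(\bi)}\le p^p$ ``is valid in the relevant range $v(\bi)\le p$'' is actually off: in Step~4 of the proof of Proposition~\ref{prop:Delta_bound_aniso}, the type-1 sequences have $v(\bi)=p+1$, so $v(\bi)>p$ precisely where the bound $p^p n^{v(\bi)}$ is invoked; and in Step~5 the bound is applied in the form $p^{v(\bi)}n^{v(\bi)}$ for $v(\bi)$ possibly as large as $2p$. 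These discrepancies live in the paper's statement of $(e)$ as much as in your sketch, and they are harmless because $p=\Theta(\sqrt{\log d})$ and any such polynomial-in-$p$ slack is absorbed into the $(Cp)^{3p}$ factor in \eqref{eqn:bound_T1}--\eqref{eqn:bound_T2}; but you should say explicitly that the $v^v\le p^p$ step is a crude bound valid only up to a $p^{O(p)}$ factor, rather than asserting $v(\bi)\le p$.
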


In view of property $(a)$ in the last lemma, given an equivalence class $\cC=\cC(\bi)$, we will write $M_{\cC} = M_{\bi}$ for the corresponding value.

\noindent
\textbf{Step 3. The skeletonization process.} 

For  multi-graph $G$, we say that one of its vertices is \emph{redundant}, if it has degree 2. 
For any index sequence $\bi \in \cT_\star(p) \subset [n]^{2p}$ (i.e. such that $G_\bi$ does not have self-edges), we denote by $r(\bi) \in \N_+$ to be the redundancy of $\bi$, and by
 $\sk(\bi)$ to be the skeleton of $\bi$, both defined  by the following skeletonization process. Let $\bi_0 = \bi \in [n]^{2p}$. 
For any integer $s\ge 0$, if $G_{\bi_s}$ has no redundant vertices then stop and set $\sk(\bi)= \bi_s$.
Otherwise, select a redundant vertex $\bi_s(\ell)$ arbitrarily (the $\ell$-th element of $\bi_s$). If 
$\bi_s(\ell-1) \neq \bi_s(\ell+1)$, then remove $\bi_s(\ell)$ from the graph (and from the sequence), together with its adjacent edges, and connect $\bi_s(\ell-1)$ and
$\bi_s(\ell+1)$ with an edge, and denote $\bi_{s+1}$ to be the resulting index sequence, i.e., $\bi_{s+1} = (\bi_s(1), \ldots, \bi_s(\ell - 1), \bi_s(\ell + 2), \ldots, \bi_s(\endd))$. If $\bi_s(\ell-1) = \bi_s(\ell+1)$, then remove $\bi_s(\ell)$ from the graph (and from the sequence), together with its adjacent edges, and denote $\bi_{s+1}$ to be the resulting index sequence, i.e., $\bi_{s+1} = (\bi_s(1), \ldots, \bi_s(\ell - 1), \bi_s(\ell + 1), \bi_s(\ell + 2), \ldots, \bi_s(\endd))$. (Here $\ell+1$, and $\ell-1$ have to be interpreted modulo $\vert \bi_s \vert$, the length of $\bi_s$.) The redundancy of  $\bi$, denoted by $r(\bi)$,  is the number of vertices removed during the skeletonization process. 

It is easy to see that the outcome of this process is independent of the order in which we select vertices.

\begin{lemma}\label{lem:skeleton}
For the above skeletonization process, the following properties hold
\begin{itemize}
\item[$(a)$] If $\bi \asymp \bj \in [n]^p$, then $\sk(\bi) \asymp \sk(\bj)$. That is, the skeletons of equivalent index sequences are equivalent. 
\item[$(b)$] For any $\bi = (i_1, \ldots, i_k) \in [n]^k$, and $q\in [Q]$, we have
\begin{align*}
M_{\bi}^\pq = \frac{M_{\sk(\bi)}^\pq }{ B(d_q, k_q)^{r(\bi)}}. 
\end{align*}
\item[$(c)$] For any $\bi \in \cT_\star(p) \subset [n]^{2p}$, its skeleton is either formed by a single element, or an index sequence whose graph has the property that 
every vertex has degree greater or equal to $4$. 
\end{itemize}
\end{lemma}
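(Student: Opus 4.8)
The plan is to prove the three parts in order, using throughout that both the skeletonization process and the quantities $M^\pq_\bi$ are governed entirely by the (multi)graph $G_\bi$, so that everything reduces to graph combinatorics plus a single analytic input, the Gegenbauer reproducing identity \eqref{eq:ProductGegenbauer}.

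For part $(a)$: an isomorphism $\phi\colon G_\bi\to G_\bj$ carries redundant (degree-$2$) vertices to redundant vertices, and the basic skeletonization move --- deleting a redundant vertex together with its two incident edges and joining its two neighbours --- commutes with $\phi$: if $v$ is redundant in $G_\bi$, then $\phi$ restricts to an isomorphism between $G_\bi$ with $v$ removed and $G_\bj$ with $\phi(v)$ removed. Iterating, a sequence of removals that skeletonizes $\bi$ is carried by $\phi$ to a sequence of removals that skeletonizes $\bj$, and the outputs stay isomorphic at every step. Since (as recorded just above the lemma) the outcome of the process does not depend on the order in which redundant vertices are eliminated, this gives $\sk(\bi)\asymp\sk(\bj)$.

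For part $(b)$: induct on the redundancy $r(\bi)$. If $r(\bi)=0$ then $\sk(\bi)=\bi$ and there is nothing to prove. If $r(\bi)\ge 1$, fix a redundant vertex $i_\ell$ and let $\bi'$ be the result of one skeletonization step at $i_\ell$, so $r(\bi')=r(\bi)-1$ and, by order-independence, $\sk(\bi')=\sk(\bi)$. Since $\bi\in\cT_\star$ has no self-edges, the degree of $i_\ell$ equals twice its number of occurrences in the cyclic sequence, so a degree-$2$ vertex occurs exactly once; hence the random vector $\obx^\pq_{i_\ell}$ enters the product defining $M^\pq_\bi$ only through the two factors $\Delta^\pq_{i_{\ell-1}i_\ell}\,\Delta^\pq_{i_\ell i_{\ell+1}}$ and is independent of all other factors. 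Integrating it out first (Fubini) and applying \eqref{eq:ProductGegenbauer} with $j=k=k_q$ gives
\[
\E_{\obx^\pq_{i_\ell}}\big[\Delta^\pq_{i_{\ell-1}i_\ell}\Delta^\pq_{i_\ell i_{\ell+1}}\big]=\frac{1}{B(d_q,k_q)}\,\Delta^\pq_{i_{\ell-1}i_{\ell+1}}
\]
when $i_{\ell-1}\ne i_{\ell+1}$, and $\E_{\obx^\pq_{i_\ell}}[(\Delta^\pq_{i_{\ell-1}i_\ell})^2]=B(d_q,k_q)^{-1}$ when $i_{\ell-1}=i_{\ell+1}$ (using $Q^{(d_q)}_{k_q}(d_q)=1$). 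In either case the remaining product is exactly the one defining $M^\pq_{\bi'}$, so $M^\pq_\bi=B(d_q,k_q)^{-1}M^\pq_{\bi'}$, and the inductive hypothesis closes the argument.

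For part $(c)$: the graph of a closed walk is connected, and one skeletonization step preserves both connectivity and the parity of the degree of every surviving vertex (each neighbour of the removed vertex loses $2$, or gains/loses $2$ in the coincident-neighbour case). By Lemma~\ref{lem:equivalent_class}$(c)$ every vertex of $G_\bi$ has even degree, hence so does every vertex of $G_{\sk(\bi)}$. If $\sk(\bi)$ has at least two vertices it is connected with no isolated vertex, so all degrees are $\ge 1$, hence $\ge 2$ by parity; and since the process terminates, no vertex has degree exactly $2$, so every degree is $\ge 4$. Otherwise $\sk(\bi)$ is a single element, the remaining alternative. \textbf{The main obstacle} is the bookkeeping in part $(b)$: in the degenerate case $i_{\ell-1}=i_{\ell+1}$ one must carefully match the post-integration product with $M^\pq_{\bi'}$, interpreting the skeletonization step as contracting the sequence rather than creating a spurious self-edge; parts $(a)$ and $(c)$ are routine graph arguments once order-independence of skeletonization is in hand.
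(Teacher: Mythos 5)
Your proof is correct. The paper states Lemma~\ref{lem:skeleton} without proof (it is carried over from the constructions in \cite{ghorbani2019linearized}), so there is no author argument to compare against; but the route you take — graph-isomorphism equivariance for $(a)$, induction on $r(\bi)$ with a single application of the reproducing identity~\eqref{eq:ProductGegenbauer} per removed vertex for $(b)$, and connectivity plus degree-parity invariance for $(c)$ — is the standard one and all the steps close. Your key observation for $(b)$, that in $\cT_\star$ a vertex of degree $2$ occurs exactly once in the cyclic sequence so that $\obx^\pq_{i_\ell}$ can be isolated and integrated out, is exactly what makes the reduction go through, and you handle the coincident-neighbour case $i_{\ell-1}=i_{\ell+1}$ correctly via $Q^{(d_q)}_{k_q}(d_q)=1$. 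One small phrasing slip in $(c)$: when $i_{\ell-1}\neq i_{\ell+1}$ the neighbours of the removed vertex do not ``lose $2$,'' their degree is unchanged (they each lose one edge and gain one), but what matters — that every surviving vertex's degree changes by $0$ or $\pm 2$, hence parity is preserved — is what you actually use and it is right.
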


Given an index sequence $\bi \in \cT_\star(p)  \subset [n]^{2p}$, we say
$\bi$ is of type 1, if $\sk(\bi)$ contains only one index. We say  $\bi$ is of type 2 if $\sk(\bi)$ is not empty (so that by Lemma 
\ref{lem:skeleton}, $G_{\sk(\bi)}$ can only contain vertices with degree greater or equal to $4$). Denote the class of type 1 index sequence (respectively type 2 index sequence) 
by $\cT_1(p)$ (respectively $\cT_2(p)$). 
We also denote by $\tcT_a(p)$, $a\in\{1,2\}$ the set of equivalence
classes of sequences in $\cT_a(p)$. This definition makes sense since the equivalence class of the skeleton of a sequence only depends on the equivalence class of the sequence itself.

\noindent
\textbf{ Step 4. Type 1 index sequences.} 

Recall that $v(\bi)$ is the number of vertices in $G_\bi$, and  $e(\bi)$ is the number of edges in $G_\bi$ (which coincides with the length of $\bi$). 
We consider $\bi \in \cT_1(p)$. Since for $\bi \in \cT_1(p)$, every edge of $G_\bi$ must be at most a double edge. Indeed, if $(u_1,u_2)$ had multiplicity larger than $2$ in $G_{\bi}$,
neither $u_1$ nor $u_2$ could be deleted during the skeletonization process, contradicting the assumption that $\sk(\bi)$ contains a single vertex. 
Therefore, we must have $\min_{\bi \in \cT_1} v(\bi) = p + 1$. According the Lemma \ref{lem:skeleton}.$(b)$, for every $\bi \in \cT_1(p)$, we have 
\[
M_\bi =  \prod_{q \in [Q]} M_{\bi}^\pq = \prod_{q \in [Q]} 1/B(d_q, k_q)^{v(\bi)-1} = \frac{1}{B(\bd, \bk)^{v (\bi ) - 1} }.
\]
Note by Lemma \ref{lem:equivalent_class}.$(e)$, the number of elements in the equivalence class of $\bi$ is $\vert \cC(\bi) \vert \le p^p \cdot n^{v(\bi)}$. Hence we get 
\begin{equation}
\max_{\bi \in \cT_1(p)} \big[\vert \cC(\bi) \vert \vert M_{\bi}\vert \big] \le \sup_{\bi \in \cT_1(p)}  \big[ p^p n^{v(\bi)} / B(\bd, \bk)^{v(\bi) - 1}\big] = p^p n^{p + 1} / B(\bd, \bk)^{p}\, .
\end{equation}
Therefore, denoting $K = \sum_{q \in [Q]} \eta_q k_q$,
\begin{align}
& \sum_{\bi\in\cT_1(p)} M_{\bi} = \sum_{\cC \in \tcT_1(p)} \vert\cC \vert\,  \vert M_{\cC}\vert\\
\le & |\cQ(p)|p^p \frac{n^{p + 1}}{ B(\bd, \bk)^{p}} \le (Cp)^{3p}n^{p+1} d^{-K p}\, . \label{eqn:bound_T1}
\end{align}
where in the last step we used Lemma \ref{lem:equivalent_class} and the fact that for $q \in [Q]$, $B(d_q,k_q)\ge C_0 d_q^{k_q}$ for some $C_0>0$.

\noindent
\textbf{ Step 5. Type 2 index sequences.} 

We have the following simple lemma bounding $M_\bi$, copied from \cite[Proposition 3]{ghorbani2019linearized}. This bound is useful when  $\bi$ is a skeleton.
\begin{lemma}\label{lem:M_estimate}
For any $q \in [Q]$, there exists constants $C$ and $d_0$ depending uniquely on $k_q$ such that, for any $d \ge d_0(k_q)$, and any index sequence $\bi \in [n]^m$ with  
$2\le m\le d_q/(4k_q)$, we have 
\[
\vert M_\bi^\pq \vert \le \left(C m^{k_q} \cdot d_q^{-  k_q }\right)^{m/2}\, .
\]
\end{lemma}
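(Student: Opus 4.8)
The plan is to reproduce the argument of \cite[Proposition 3]{ghorbani2019linearized}, carried out on the single sphere $\S^{d_q-1}(\sqrt{d_q})$; the extension to the product-of-spheres setting costs nothing here because the coordinate blocks $\obx^{(q)}$ attached to different spheres are independent, so one really only needs to estimate a single factor $M_\bi^{(q)} = \E\big[\prod_{s=1}^m Q^{(d_q)}_{k_q}(\<\obx^{(q)}_{i_s},\obx^{(q)}_{i_{s+1}}\>)\big]$ (indices mod $m$). First I would make the standard reductions. If some $i_s=i_{s+1}$ then the closed walk has a self-edge, and since $Q^{(d_q)}_{k_q}(d_q)-1=0$ the factor $\Delta^{(q)}_{i_s i_s}$ vanishes, so $M_\bi^{(q)}=0$ and there is nothing to prove; hence assume $G_\bi$ has no self-edge. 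As for any closed walk, every vertex of $G_\bi$ has even degree (cf.\ Lemma~\ref{lem:equivalent_class}$(c)$). I will also use the elementary lower bound $B(d_q,k_q)\ge c_{k_q}d_q^{k_q}$, valid for $d_q\ge d_0(k_q)$, so that $B(d_q,k_q)^{-1}\le C d_q^{-k_q}$ for a constant $C=C(k_q)$, and the fact that $|Q^{(d_q)}_{k_q}(u)|\le 1$ on $[-d_q,d_q]$ (ultraspherical polynomials are maximized at the endpoint), which gives the trivial bound $|M_\bi^{(q)}|\le 1$.

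Second, I would skeletonize. Integrating out a degree-$2$ vertex using the Gegenbauer product identity~\eqref{eq:ProductGegenbauer} multiplies $M$ by $1/B(d_q,k_q)$ and either shortens the walk or merges two edges (possibly creating a self-edge, in which case $M=0$). By Lemma~\ref{lem:skeleton}$(b)$, after $r(\bi)$ such removals one has $|M_\bi^{(q)}|=|M_{\sk(\bi)}^{(q)}|\,B(d_q,k_q)^{-r(\bi)}$, and by Lemma~\ref{lem:skeleton}$(c)$ the skeleton $\sk(\bi)$ is either a single vertex or a closed walk all of whose vertices have degree $\ge 4$. In the single-vertex case $M^{(q)}_{\sk(\bi)}=1$, and since each elementary removal deletes at most two edges of the graph we get $r(\bi)\ge m/2$, whence $|M_\bi^{(q)}|=B(d_q,k_q)^{-r(\bi)}\le (Cd_q^{-k_q})^{m/2}\le (Cm^{k_q}d_q^{-k_q})^{m/2}$, which is the claim. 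Writing $m'=|E_{\sk(\bi)}|$ and $v'=|V_{\sk(\bi)}|$ in the remaining case, the same counting gives $m\le m'+2r(\bi)$, hence $m'/2+r(\bi)\ge m/2$, so it suffices to show $|M^{(q)}_{\sk(\bi)}|\le (Cm^{k_q}d_q^{-k_q})^{m'/2}$ and then combine: $|M_\bi^{(q)}|\le (Cm^{k_q}d_q^{-k_q})^{m'/2+r(\bi)}\le (Cm^{k_q}d_q^{-k_q})^{m/2}$ (when the base exceeds $1$ the bound is trivial since $|M_\bi^{(q)}|\le1$).

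Third — and this is where I expect the main work — I would bound $|M^{(q)}_{\sk(\bi)}|$ for a closed walk with minimum degree $\ge 4$. Here $2m'=\sum_v\deg(v)\ge 4v'$, so $v'\le m'/2$. I would expand each factor through $Q^{(d_q)}_{k_q}(t)=\sum_{s=0}^{k_q}p^{(d_q)}_{k_q,s}t^s$ with $|p^{(d_q)}_{k_q,s}|=O_{d_q}(d_q^{-k_q/2-s/2})$ (Lemma~\ref{lem:gegenbauer_coefficients}), and each $\<\obx^{(q)}_i,\obx^{(q)}_j\>^s$ into a sum of degree-$2s$ monomials in the coordinates. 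A product of such monomials has nonzero expectation under $\bigotimes\Unif(\S^{d_q-1}(\sqrt{d_q}))$ only if at every vertex each coordinate appears an even number of times among the incident edges; bounding the number of admissible coordinate assignments by a product over vertices of (number of distinct coordinates used there) times a suitable power of $d_q$, and bounding each surviving spherical moment by an absolute constant raised to a power controlled by the total edge-degree, yields $|M^{(q)}_{\sk(\bi)}|\le (Cm^{k_q}d_q^{-k_q})^{m'/2}$ for $d_q\ge d_0(k_q)$. The hypothesis $m\le d_q/(4k_q)$ enters precisely at this point: it forces the "collision" contributions (assignments in which two different edges reuse the same coordinate at a vertex) to be of lower order, so that the estimate is governed by the assignments with minimal $s$-exponents, which is exactly what produces the power $m'/2$. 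The rest of the proof is bookkeeping on top of the already-stated Lemmas~\ref{lem:equivalent_class} and~\ref{lem:skeleton}; the combinatorial moment count of this third step is the only genuinely delicate ingredient.
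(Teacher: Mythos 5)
Note first that the paper itself does not prove this lemma: the surrounding text says it is \emph{copied from} \cite[Proposition 3]{ghorbani2019linearized}, so there is no in-paper argument for you to match, and your proposal has to stand on its own. Steps 1 and 2 of what you wrote are sound and correctly calibrated: the self-edge vanishing, the trivial bound $|M_\bi^{(q)}|\le 1$ coming from $|Q^{(d_q)}_{k_q}|\le 1$ on $[-d_q,d_q]$, the skeletonization bookkeeping $m\le m'+2r(\bi)$, the single-vertex case $r(\bi)\ge m/2$, and the reduction to skeletons of minimum degree at least $4$. Two small remarks there: Lemma~\ref{lem:equivalent_class}(c) is stated for even-length walks $\bi\in\cT_\star(p)\subset[n]^{2p}$, whereas here $m$ is arbitrary, so for the even-degree property you should just invoke the elementary fact that a closed walk has even degree at every visited vertex rather than that item; and since the surrounding proof of Proposition~\ref{prop:Delta_bound_aniso} already skeletonizes before calling this lemma, your internal skeletonization duplicates that step --- harmless, but redundant, since the lemma is in practice only applied to skeletons.

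The genuine gap is Step 3, which is where the entire content of the lemma lives, and it is a plan rather than a proof. You have correctly identified the ingredients --- Lemma~\ref{lem:gegenbauer_coefficients}, the expansion of each $\<\obx_i^{(q)},\obx_j^{(q)}\>^{s_j}$ into coordinate monomials, the even-multiplicity constraint at each vertex, the spherical moment bounds, and that $m\le d_q/(4k_q)$ must enter to control the moments --- but the key estimate is not carried out. The sentence ``bounding the number of admissible coordinate assignments by a product over vertices of (number of distinct coordinates used there) times a suitable power of $d_q$'' cannot be checked as written: admissibility is a \emph{global} constraint (each coordinate label is shared by the two endpoints of its edge and must pair up at every vertex simultaneously), so it is not evident that a vertex-by-vertex product captures it; it is not said which ``suitable power'' of $d_q$ is meant; the factor $m^{k_q}$ in the base of the target bound is not produced by any explicit count; and the sum over exponent vectors $\bs\in\{0,\ldots,k_q\}^{m'}$ is not organized --- edges with $s_j=0$ carry no coordinate labels at all, so which edges are ``active'' couples to the combinatorics in a way the sketch ignores, yet these terms have the largest Gegenbauer coefficients $p^{(d_q)}_{k_q,0}\sim d_q^{-k_q/2}$ and so cannot be dismissed. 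That one sentence is the whole lemma in the degree-$\ge 4$ case; as it stands you have a roadmap, not an argument.
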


Suppose $\bi \in \cT_2(p)$, and denote $v(\bi)$ to be the number of vertices in $G_\bi$. We have, for a sequence $p = o_d (d) $, and each $q \in [Q]$
\begin{align*}
|M_\bi^\pq | &\stackrel{(1)}{=} \frac{|M_{\sk(\bi)}^\pq|}{B(d_q, k_q)^{r(\bi)}}\\
&\stackrel{(2)}{\le}  \left(\frac{Ce(\sk(\bi))}{d_q}\right)^{k_q \cdot e( \sk(\bi))/2} (C'd_q)^{- r(\bi) k_q} \\
& \stackrel{(3)}{\le}   \left(\frac{Cp}{d_q}\right)^{k_q \cdot e(\sk(\bi))/2} (C'd_q)^{- r(\bi) k_q}   \\
& \stackrel{(4)}{\le}   \left(\frac{Cp}{d_q}\right)^{k_q \cdot v(\sk(\bi))} (C'd_q)^{- r(\bi) k_q}   \\
&\stackrel{(5)}{\le} C^{v(\bi)}p^{k_q \cdot v(\sk(\bi))}  d_q^{- (v(\sk(\bi)) + r(\bi)) \cdot k_q}\\
&\stackrel{(6)}{\le}(Cp)^{k_q\cdot v(\bi)}  d_q^{- v(\bi) k_q}\, .
\end{align*}
Here $(1)$ holds by Lemma \ref{lem:skeleton}.$(b)$; $(2)$ by Lemma \ref{lem:M_estimate}, and the fact that $\sk(\bi) \in [n]^{e(\sk(\bi))}$, together by $B(d_q,k_q)\ge C_0 d_q^{k_q}$;
$(3)$ because $e(\sk(\bi))\le 2p$; $(4)$ by Lemma \ref{lem:skeleton}.$(c)$, implying  that for $\bi \in \cT_2(p)$,  each vertex of $G_{\sk(\bi)}$ has degree greater or equal to $4$, so that $v(\sk(\bi)) \le e(\sk(\bi))/2$ (notice that for $d\ge d_0(k_q)$ we can assume $Cp/d_q<1$). 
Finally, $(5)$ follows since $r(\bi), v(\sk(\bi))\le v(\bi)$, and $(6)$   the definition of $r(\bi)$ implying $r(\bi) = v(\bi) - v(\sk(\bi))$. 

Hence we get 
\[
|M_\bi |  \le \prod_{q \in [Q]} (Cp)^{k_q\cdot v(\bi)}  d_q^{- v(\bi) k_q}
\]

Note by Lemma \ref{lem:equivalent_class}.$(e)$, the number of elements in equivalent class $\vert \cC(\bi) \vert \le p^{v(\bi)} \cdot n^{v(\bi)}$. 
Since $v(\bi)$ depends only on the equivalence class of $\bi$, we will write, with a slight abuse of notation $v(\bi) = v(\cC(\bi))$. 
Notice that the number of equivalence classes with $v(\cC) = v$ is upper bounded by the number multi-graphs with $v$ vertices and  $2p$ edges, which is
at most $v^{4p}$.
Denoting $\alpha = \max_{q \in [Q]} \lb 1 / \eta_q \rb $, we have
\begin{align}
\sum_{\bi\in\cT_2(p)} M_{\bi} &\le\sum_{\cC\in\tcT_2(p)}\vert \cC \vert \vert M_{\cC}\vert \\
&\le \sum_{\cC\in\tcT_2(p)}  (Cp^\alpha)^{(K+1) v(\cC)} \left(\frac{n}{d^K}\right)^{v(\cC)} \\
& \le \sum_{v=2}^{2p} v^{4p} \left(\frac{Cnp^{\alpha (K+1)}}{d^K}\right)^{v}. 
\end{align}
Define $\eps = Cnp^{\alpha (K+1)} /d^K$. We will assume hereafter that $p$ is selected such that
\begin{align}
2p\le -\log\left(\frac{Cnp^{\alpha (K+1)} }{d^K}\right)\,. \label{eq:p-condition}
\end{align}
By calculus and condition (\ref{eq:p-condition}), the function $F(v) = v^{4p}\eps^v$ is maximized over $v\in [2,2p]$ at $v=2$, whence
\begin{align}
\sum_{\bi\in\cT_2(p)} M_{\bi} &\le 2p\, F(2) \le C^p \left(\frac{n}{d^K}\right)^{2}\, .\label{eqn:bound_T2}
\end{align}

\noindent
\textbf{Step 6. Concluding the proof. }

Using  Eqs. (\ref{eqn:bound_T1}) and (\ref{eqn:bound_T2}), we have, for any $p = o_d (d)$ satisfying Eq.~\eqref{eq:p-condition}, we have
\begin{align}
\E[\Trace(\bDelta^{2p})] & = \sum_{\bi = (i_1, \ldots, i_{2p}) \in [N]^{2p}} M_\bi = \sum_{\bi \in \cT_1(p)} M_{\bi}+\sum_{\bi\in\cT_2(p)}  M_{\bi}\\
& \le  (Cp)^{3p}\frac{n^{p+1}}{d^{Kp}} +  C^p \left(\frac{n}{d^K}\right)^{2}\, .
\end{align}
Form Eq.~\eqref{eq:MomentBound}, we obtain
\begin{align}
\E[\|\bDelta\|_{\op}] \le C \left\{ p^{3/2} n^{1/(2p)}\sqrt{\frac{n}{d^{K}}}  + \left(\frac{n}{d^{K}}\right)^{1/p} \right\}. 
\end{align}
Finally setting $n = d^K e^{-2A\sqrt{\log d}}$ and $p = (K/A)\sqrt{\log d}$, this yields
\begin{align}
\E[\|\bDelta\|_{\op}] \le C \left\{ e^{-\frac{A}{4}\sqrt{\log d}} +e^{-2A^2/K}\right\}\,.
\end{align}
Therefore, as long as $A\to \infty$, we have $\E[\|\bDelta\|_{\op}]\to 0$. It is immediate to check that the above choice of $p$
satisfies the required conditions $p = o_d (d)$ and Eq.~\eqref{eq:p-condition} for all $d$ large enough.

\clearpage

\section{Technical lemmas}

We put here one technical lemma that is used in the proof of Theorem \ref{thm:NT_lower_upper_bound_aniso}.(a).

\begin{lemma}\label{lem:bound_inv_block_matrix}
Let $\bD = ( \bD^{qq'} )_{q,q' \in [Q]} \in \R^{DN \times DN}$ be a symmetric $Q$ by $Q$ block matrix with $\bD^{qq'} \in \R^{d_q N\times d_{q'} N}$. Denote $\bB = \bD^{-1}$. Assume that $\bD$ satisfies the following properties:
\begin{enumerate}
\item For any $q \in [Q]$, there exists $c_q,C_q >0$ such that we have with high probability 
\[
 0 < \frac{r_q^2}{d_q}	c_q = d^{\kappa_q} c_q \leq \lambda_{\min} (\bD^{qq}) \leq \lambda_{\max} (\bD^{qq}) \leq \frac{r_q^2}{d_q} C_q = d^{\kappa_q} C_q < \infty,
\]
as $d \to \infty$.
\item For any $q \neq q' \in [Q]$, we have $\sigma_{\max} ( \bD^{qq'} ) = o_{d,\P} (r_q r_{q'} / \sqrt{d_q d_{q'}}) = o_{d,\P}(d^{(\kappa_q + \kappa_{q'} )/2})$. 
\end{enumerate}
Then for any $q\neq q' \in [Q]$, we have 
\begin{equation}\label{eq:bound_inv_block_matrix}
\| \bB^{qq} \|_{\op} = O_{d,\P} \lp \frac{d_q}{ r_q^2} \rp = O_{d,\P} ( d^{-\kappa_q}) , \qquad \| \bB^{qq'} \|_{\op} = o_{d,\P} \lp  \frac{\sqrt{d_q d_{q'}}}{r_q r_{q'}} \rp = o_{d,\P} (d^{-(\kappa_q + \kappa_{q'} )/2} ).
\end{equation}
\end{lemma}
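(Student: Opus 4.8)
\textbf{Proof proposal for Lemma \ref{lem:bound_inv_block_matrix}.}

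The plan is to rescale the block structure so that all diagonal blocks become $\Theta_{d,\P}(1)$, apply a standard perturbation argument to show the rescaled matrix is close to its block-diagonal part, and then invert. Concretely, first I would introduce the diagonal scaling matrix $\bS = \diag( (\sqrt{d_q}/r_q) \id_{d_q N} )_{q \in [Q]}$, so that $\bS^{-1} = \diag( (r_q/\sqrt{d_q}) \id_{d_q N})_{q\in[Q]}$, and set $\Tilde \bD = \bS \bD \bS$. Then $\Tilde \bD^{qq} = (d_q/r_q^2) \bD^{qq}$, which by Property 1 satisfies $c_q \id \preceq \Tilde \bD^{qq} \preceq C_q \id$ with high probability; and $\Tilde \bD^{qq'} = (\sqrt{d_q d_{q'}}/(r_q r_{q'})) \bD^{qq'}$, which by Property 2 has $\sigma_{\max}(\Tilde \bD^{qq'}) = o_{d,\P}(1)$. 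So $\Tilde \bD = \Tilde \bD_0 + \Tilde \bDelta$, where $\Tilde \bD_0 = \diag(\Tilde \bD^{qq})$ is block-diagonal with spectrum bounded above and below by positive constants, and $\Tilde \bDelta$ collects the off-diagonal blocks with $\| \Tilde \bDelta \|_{\op} \le \sum_{q \neq q'} \sigma_{\max}(\Tilde \bD^{qq'}) = o_{d,\P}(1)$ (here $Q$ is a fixed constant, so the finite sum is harmless).

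Next, since $\lambda_{\min}(\Tilde \bD_0) \ge \min_q c_q =: c > 0$ with high probability and $\|\Tilde\bDelta\|_{\op} = o_{d,\P}(1)$, on the intersection of these high-probability events we have $\lambda_{\min}(\Tilde \bD) \ge c/2$ for $d$ large, hence $\Tilde \bD$ is invertible and $\| \Tilde \bD^{-1} \|_{\op} \le 2/c = O_{d,\P}(1)$. Moreover, writing $\Tilde \bD^{-1} = \Tilde \bD_0^{-1} - \Tilde \bD_0^{-1} \Tilde \bDelta \Tilde \bD^{-1}$ (the resolvent identity), the first term $\Tilde \bD_0^{-1}$ is block-diagonal, so its $(q,q')$-block vanishes for $q \neq q'$; hence for $q \neq q'$ the off-diagonal block of $\Tilde \bD^{-1}$ is $- (\Tilde \bD_0^{-1} \Tilde \bDelta \Tilde \bD^{-1})^{qq'}$, whose operator norm is bounded by $\| \Tilde \bD_0^{-1} \|_{\op} \| \Tilde \bDelta \|_{\op} \| \Tilde \bD^{-1} \|_{\op} = O_{d,\P}(1) \cdot o_{d,\P}(1) \cdot O_{d,\P}(1) = o_{d,\P}(1)$. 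Thus $\| (\Tilde \bD^{-1})^{qq} \|_{\op} = O_{d,\P}(1)$ and $\| (\Tilde \bD^{-1})^{qq'} \|_{\op} = o_{d,\P}(1)$ for $q \neq q'$.

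Finally I would translate back: $\bB = \bD^{-1} = \bS \Tilde \bD^{-1} \bS$, so $\bB^{qq'} = (\sqrt{d_q d_{q'}}/(r_q r_{q'})) (\Tilde \bD^{-1})^{qq'}$. Taking $q' = q$ gives $\| \bB^{qq} \|_{\op} = (d_q/r_q^2) \| (\Tilde \bD^{-1})^{qq} \|_{\op} = O_{d,\P}(d_q/r_q^2) = O_{d,\P}(d^{-\kappa_q})$, and for $q \neq q'$, $\| \bB^{qq'} \|_{\op} = (\sqrt{d_q d_{q'}}/(r_q r_{q'})) \cdot o_{d,\P}(1) = o_{d,\P}(\sqrt{d_q d_{q'}}/(r_q r_{q'})) = o_{d,\P}(d^{-(\kappa_q+\kappa_{q'})/2})$, which is exactly \eqref{eq:bound_inv_block_matrix}. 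I do not expect a serious obstacle here; the only point requiring a little care is bookkeeping the high-probability events (Properties 1 and 2 hold with high probability, and $Q$ is fixed so their intersection still does), and making sure the resolvent identity is applied on the event where $\Tilde \bD$ is actually invertible. Everything else is the routine estimate $\|(\bA+\bE)^{-1}\|_{\op} \le (\lambda_{\min}(\bA) - \|\bE\|_{\op})^{-1}$ together with the Neumann-type expansion for the off-diagonal blocks.
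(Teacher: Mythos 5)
Your argument is correct, and it takes a genuinely different route from the paper's. You symmetrize by conjugating with the deterministic scaling $\bS = \diag((\sqrt{d_q}/r_q)\,\id_{d_qN})_{q\in[Q]}$ so that $\Tilde\bD = \bS\bD\bS$ has $\Theta_{d,\P}(1)$ diagonal blocks and $o_{d,\P}(1)$ off-diagonal blocks, then deduce everything at once from $\lambda_{\min}(\Tilde\bD)\ge c/2$ w.h.p.\ and the resolvent identity $\Tilde\bD^{-1}=\Tilde\bD_0^{-1}-\Tilde\bD_0^{-1}\Tilde\bDelta\Tilde\bD^{-1}$ (which kills the diagonal block of $\Tilde\bD_0^{-1}$ in the off-diagonal positions), before undoing the scaling via $\bB=\bS\Tilde\bD^{-1}\bS$. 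The paper instead proceeds by induction on $Q$: it splits $\bD$ into a $2\times 2$ block form (the first $Q-1$ spheres vs.\ the $Q$-th), applies the explicit Schur-complement formula $\bB^{QQ}=(\bD^{QQ}-\bC^\sT\bA^{-1}\bC)^{-1}$, $\bT=-\bA^{-1}\bC\bB^{QQ}$, and combines the inductive hypothesis on $\bA^{-1}$ with the smallness of the cross terms $\bC$. Both proofs rest on the same qualitative observation (diagonal blocks dominate off-diagonal blocks after appropriate normalization), but your rescaling argument is noticeably more direct and avoids the inductive bookkeeping: a single perturbation bound replaces $Q-1$ Schur-complement steps. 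The only small thing to say out loud, which you already acknowledge implicitly, is that $\|\Tilde\bDelta\|_{\op}\le\sum_{q\neq q'}\|\Tilde\bD^{qq'}\|_{\op}$ by the triangle inequality applied blockwise, and this is $o_{d,\P}(1)$ because $Q$ is a fixed constant; that and the intersection of the finitely many high-probability events are the only points needing care, exactly as you noted.
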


\begin{proof}[Proof of Lemma \ref{lem:bound_inv_block_matrix}]
Let us show the result recursively on the integer $Q$. Note that the case $Q = 1$ is direct.

Consider $\bD = ( \bD^{qq'} )_{q,q' \in [Q] }$. Denote $\Tilde D = D- d_Q$, $\bA = ( \bD^{qq'} )_{q,q' \in [Q-1]} \in \R^{\Tilde D  N \times \Tilde D  N}$ and $\bC = [ (\bD^{1Q})^\sT, \ldots , (\bD^{(Q-1)Q})^\sT ]^\sT \in \R^{d_Q N \times \Tilde D  N}$ such that
\[
\bD = \begin{bmatrix} 
\bA & \bC \\
\bC^\sT & \bD^{QQ}
\end{bmatrix}.
\]
Assume that $\bA^{-1}$ verifies Eq.~\eqref{eq:bound_inv_block_matrix}. Denote
\[
\bB =  \begin{bmatrix} 
\bR & \bT \\
\bT^\sT & \bB^{QQ}
\end{bmatrix}.
\]
From the two by two blockmatrix inversion, we have:
\[
\begin{aligned}
\bB^{QQ} = & ( \bD^{QQ} - \bC^\sT \bA^{-1} \bC )^{-1},\\
\bT = & - \bA^{-1} \bC \bB^{QQ}.
\end{aligned}
\]
We have
\[
\begin{aligned}
\Big\Vert \bC^\sT \bA^{-1} \bC \Big\Vert_{\op} \leq & \sum_{q,q'\in [Q-1]} \Big\Vert (\bD^{qQ})^\sT (\bA^{-1})_{qq'} \bD^{q'Q} \Big\Vert_{\op} \\
 =&    \sum_{q,q'\in [Q-1]} o_{d,\P} \lp \frac{ r_q r_{Q} }{ \sqrt{d_q d_{Q}}} \rp \cdot O_{d,\P} \lp  \frac{\sqrt{d_q d_{q'}}}{r_q r_{q'}} \rp \cdot o_{d,\P} \lp \frac{ r_{q'} r_{Q} }{ \sqrt{d_{q'} d_{Q}}} \rp \\
 = & o_{d, \P} (r_Q^2 /d_Q ),
\end{aligned}
\]
where we used in the second line the properties on $\bD$ and our assumption on $\bA^{-1}$. Hence $  \bD^{QQ} - \bC^\sT \bA^{-1} \bC   \preceq (r_q^2/d_q) ( c_q - o_{d,\P} (1) ) \id$ and $\| \bB^{QQ} \|_{\op} = O_{d,\P} ( d_q / r_q^2)$. 

Furthermore, for $q < Q$,
\[
\begin{aligned}
 \bB^{qQ}  = - \sum_{q' \in [Q-1]} (\bA^{-1})_{qq'} \bC_{q'} \bB^{QQ} 
\end{aligned}
\]
Hence
\[
\begin{aligned}
\Big\Vert \bB^{qQ}  \Big\Vert_{\op} \leq  & \sum_{q' \in [Q-1]} \Big\Vert (\bA^{-1})_{qq'} \bD^{q'Q} \bB^{QQ} \Big\Vert_{\op} \\
= &  \sum_{q,q'\in [Q-1]} O_{d,\P} \lp \frac{\sqrt{d_q d_{q'}}  }{ r_q r_{q'} } \rp \cdot o_{d,\P} \lp  \frac{r_q r_{Q} }{\sqrt{d_{q'} d_{Q}} } \rp \cdot O_{d,\P} \lp \frac{ d_Q }{ r_Q^2 } \rp = o_{d, \P} \lp \frac{\sqrt{d_q d_{Q}}  }{ r_q r_{Q} } \rp,
\end{aligned}
\]
which finishes the proof.
\end{proof}

\subsection{Useful lemmas from \cite{ghorbani2019linearized}}

For completeness, we reproduce in this section lemmas proven in \cite{ghorbani2019linearized}. 

\begin{lemma}\label{lem:non_decreasing_N}
The number $B(d, k)$ of independent degree-$k$ spherical harmonics on $\S^{d-1}$ is non-decreasing in $k$ for any fixed $d \ge 2$. 
\end{lemma}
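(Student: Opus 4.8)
The statement to prove is Lemma~\ref{lem:non_decreasing_N}: the number $B(d,k)$ of linearly independent degree-$k$ spherical harmonics on $\S^{d-1}$ is non-decreasing in $k$ for each fixed $d \ge 2$, where
\[
B(d,k) = \frac{2k+d-2}{k}\binom{k+d-3}{k-1}.
\]

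\textbf{Approach.} The plan is to work directly with the closed-form expression and show $B(d,k+1) \ge B(d,k)$ for all $k \ge 1$ (the base cases $k=0,1$ being checked by hand, since $B(d,0)=1$ and $B(d,1)=d$). First I would rewrite $B(d,k)$ in a form amenable to taking ratios. Using $\binom{k+d-3}{k-1} = \frac{(k+d-3)!}{(k-1)!(d-2)!}$, one gets
\[
B(d,k) = \frac{(2k+d-2)}{k}\cdot\frac{(k+d-3)!}{(k-1)!\,(d-2)!} = \frac{(2k+d-2)\,(k+d-3)!}{k!\,(d-2)!}.
\]
Then I would form the ratio
\[
\frac{B(d,k+1)}{B(d,k)} = \frac{(2k+d)\,(k+d-2)!}{(k+1)!\,(d-2)!}\cdot\frac{k!\,(d-2)!}{(2k+d-2)\,(k+d-3)!} = \frac{(2k+d)(k+d-2)}{(k+1)(2k+d-2)}.
\]
So it suffices to show $(2k+d)(k+d-2) \ge (k+1)(2k+d-2)$ for all integers $k \ge 1$, $d \ge 2$.

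\textbf{Key step.} Expanding both sides: the left side is $2k^2 + 2kd - 4k + kd + d^2 - 2d = 2k^2 + 3kd - 4k + d^2 - 2d$, and the right side is $2k^2 + kd - 2k + 2k + d - 2 = 2k^2 + kd + d - 2$. The difference (left minus right) is
\[
(2kd - 4k) + (d^2 - 2d) - (d - 2) = 2k(d-2) + d^2 - 3d + 2 = 2k(d-2) + (d-1)(d-2) = (d-2)(2k + d - 1).
\]
Since $d \ge 2$ and $k \ge 1$, both factors $(d-2) \ge 0$ and $(2k+d-1) > 0$ are nonnegative, so the difference is $\ge 0$, giving $B(d,k+1) \ge B(d,k)$. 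Combined with the elementary check that $B(d,1) = d \ge 1 = B(d,0)$, this establishes monotonicity for all $k \ge 0$.

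\textbf{Main obstacle.} There is essentially no obstacle here — the result is a short computation once the binomial coefficients are written out. The only mild care needed is handling the small-$k$ edge cases correctly (the formula as written has a $\frac{1}{k}$ and a $\binom{\cdot}{k-1}$, so $k=0$ should be treated separately via the convention $B(d,0)=1$), and noting that when $d=2$ the difference $(d-2)(2k+d-1)$ vanishes, i.e.\ $B(2,k)$ is constant equal to $2$ for $k \ge 1$ (and $B(2,0)=1$), which is still consistent with "non-decreasing." I would state these edge cases explicitly to make the argument airtight.
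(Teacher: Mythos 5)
Your proof is correct, and the direct ratio computation with the factorization $(d-2)(2k+d-1)\ge 0$ is exactly the natural elementary argument for this claim. The paper itself merely reproduces this lemma from an earlier reference without giving a proof, so there is nothing to compare against; your write-up, including the $k=0$ base case and the remark that $B(2,k)$ is constant for $k\ge 1$, is complete and airtight.
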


\begin{lemma}\label{lem:gegenbauer_coefficients}
For any fixed $k$, let $Q_k^{(d)}(x)$ be the $k$-th Gegenbauer polynomial. We expand 
\[
Q_k^{(d)}(x) = \sum_{s = 0}^k p_{k, s}^{(d)} x^s. 
\]
Then we have 
\[
p_{k, s}^{(d)} = O_d(d^{-k/2 - s/2}). 
\]
\end{lemma}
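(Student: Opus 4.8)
The final statement is Lemma~\ref{lem:gegenbauer_coefficients}, which asserts that the coefficients $p_{k,s}^{(d)}$ in the monomial expansion $Q_k^{(d)}(x) = \sum_{s=0}^k p_{k,s}^{(d)} x^s$ satisfy $p_{k,s}^{(d)} = O_d(d^{-k/2-s/2})$ for fixed $k$.

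\textbf{Plan of proof.} The natural starting point is the Rodrigues formula~\eqref{eq:Rogrigues_formula}, which expresses $Q_k^{(d)}(t)$ in terms of $k$ derivatives of $(1-t^2/d^2)^{k+(d-3)/2}$, together with the normalization $Q_k^{(d)}(d)=1$ and the relation to $B(d,k)$ via the normalization condition~\eqref{eq:GegenbauerNormalization}. Alternatively --- and this is the route I would actually take --- one can use the three-term recurrence~\eqref{eq:RecursionG}, namely $\tfrac{t}{d}Q_k^{(d)}(t) = \tfrac{k}{2k+d-2}Q_{k-1}^{(d)}(t) + \tfrac{k+d-2}{2k+d-2}Q_{k+1}^{(d)}(t)$, rewritten as
\[
Q_{k+1}^{(d)}(t) = \frac{2k+d-2}{k+d-2}\cdot\frac{t}{d}\,Q_k^{(d)}(t) - \frac{k}{k+d-2}\,Q_{k-1}^{(d)}(t),
\]
and induct on $k$. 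The base cases are explicit: $Q_0^{(d)}(x)=1$, so $p_{0,0}^{(d)}=1=O_d(1)$, and $Q_1^{(d)}(x)=x/d$, so $p_{1,0}^{(d)}=0$ and $p_{1,1}^{(d)}=1/d = O_d(d^{-1})$, both consistent with the claimed bound $O_d(d^{-k/2-s/2})$.

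\textbf{Key steps.} First I would set up the induction hypothesis: for all $j\le k$ and all $s$, $p_{j,s}^{(d)} = O_d(d^{-j/2-s/2})$ (with $k$ fixed, so the implied constants may depend on $k$ but not $d$). Then, matching coefficients of $x^s$ in the recurrence above gives
\[
p_{k+1,s}^{(d)} = \frac{2k+d-2}{d(k+d-2)}\,p_{k,s-1}^{(d)} - \frac{k}{k+d-2}\,p_{k-1,s}^{(d)}.
\]
Now the prefactor $\tfrac{2k+d-2}{d(k+d-2)}$ is $\Theta_d(d^{-1})$ and the prefactor $\tfrac{k}{k+d-2}$ is $\Theta_d(d^{-1})$. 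Plugging in the inductive bounds, the first term is $O_d(d^{-1})\cdot O_d(d^{-k/2-(s-1)/2}) = O_d(d^{-k/2-s/2-1/2}) = O_d(d^{-(k+1)/2-s/2})$, and the second term is $O_d(d^{-1})\cdot O_d(d^{-(k-1)/2-s/2}) = O_d(d^{-(k+1)/2-s/2})$. Adding the two (finitely many, $k$ fixed) terms preserves the $O_d$ bound, closing the induction. One should also note that the monomial degrees have the right parity --- $Q_k^{(d)}$ is even/odd according as $k$ is even/odd --- so $p_{k,s}^{(d)}=0$ unless $s\equiv k\pmod 2$; this is automatic from the recurrence and causes no difficulty.

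\textbf{Main obstacle.} There is no serious obstacle here: the result is a routine bookkeeping induction once the correctly normalized recurrence is in hand. The only point requiring care is tracking that the two prefactors in the recurrence are each $\Theta_d(d^{-1})$ rather than $\Theta_d(1)$ --- this is precisely what produces the extra $d^{-1/2}$ per unit increase in $k$ --- and making sure the parity/degree bookkeeping ($s$ ranging only over values of the same parity as $k$, with $p_{k,s}^{(d)}=0$ otherwise and $p_{k,k}^{(d)}=d^{-k}$ at the top) is consistent with the claimed exponent $-k/2-s/2$, which at $s=k$ gives $d^{-k}$, matching the leading coefficient exactly. Since $k$ is fixed throughout, all constants absorbed into $O_d(\cdot)$ are allowed to depend on $k$, so there is no issue with the finite sums in the induction step blowing up.
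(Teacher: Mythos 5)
Your inductive argument is correct. The paper does not actually prove this lemma inline --- it is stated in the section ``Useful lemmas from \cite{ghorbani2019linearized}'' as reproduced from that reference --- so there is nothing in this document to compare against directly.

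To confirm the substance: from the three-term recurrence~\eqref{eq:RecursionG} one indeed gets
\[
p_{k+1,s}^{(d)} = \frac{2k+d-2}{d(k+d-2)}\,p_{k,s-1}^{(d)} - \frac{k}{k+d-2}\,p_{k-1,s}^{(d)},
\]
both prefactors are $\Theta_d(d^{-1})$ for fixed $k\ge 1$, the base cases $Q_0^{(d)}=1$ and $Q_1^{(d)}=x/d$ satisfy the bound, and the induction on $k$ (with implied constants allowed to depend on $k$) closes cleanly. One cosmetic inaccuracy: the leading coefficient is not exactly $p_{k,k}^{(d)}=d^{-k}$ as you state in passing (e.g.\ $p_{2,2}^{(d)}=1/(d(d-1))$), but it is $\Theta_d(d^{-k})$, which is all the exponent check requires. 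That slip does not affect the argument. The alternative you mention --- reading off the coefficient bounds directly from the Rodrigues formula~\eqref{eq:Rogrigues_formula} --- also works and is essentially what the external reference does in spirit, but the recurrence-based induction you give is cleaner and more self-contained.
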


\begin{lemma}\label{lem:random_matrix_bound}
Let $N = o_d(d^{\ell+1})$ for a fixed integer $\ell$. Let $(\bw_i)_{i \in [N]} \sim \Unif(\S^{d-1})$ independently. Then as $d \to \infty$, we have
\[
\max_{i \neq j} | \< \bw_i , \bw_j \> | = O_{d,\P} ((\log d)^{k/2} d^{ - k/2}). 
\]
\end{lemma}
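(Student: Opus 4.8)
The plan is to reduce the claim to a one-dimensional tail estimate for a single inner product and then take a union bound over pairs. First I would fix $j$ and condition on $\bw_j$: by rotational invariance of the uniform measure on $\S^{d-1}$, the law of $\langle\bw_i,\bw_j\rangle$ is that of the first coordinate of a uniform point on $\S^{d-1}(1)$, whose density on $[-1,1]$ is $C_d\,(1-x^2)^{(d-3)/2}$ with $C_d\to(2\pi)^{-1/2}$ (this is Eq.~\eqref{eq:taud-def} taken at radius $1$). Since $(1-x^2)^{(d-3)/2}\le e^{-(d-3)x^2/2}$ and $\sup_d C_d<\infty$, one obtains a sub-Gaussian tail: there are absolute constants $c,C>0$ with $\P(|\langle\bw_i,\bw_j\rangle|>t)\le C e^{-c d t^2}$ for all $t\ge 0$. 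Equivalently, this follows from the Gaussian representation $\bw_i=\bG_i/\|\bG_i\|_2$, $\bG_i\sim\normal(0,\id_d)$, together with the sub-Gaussian concentration of $\|\bG_i\|_2/\sqrt d$ around $1$ used in the proof of Lemma~\ref{lem:tail_tau}.

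Next I would union bound over the at most $N^2$ ordered pairs $(i,j)$ with $i\ne j$: for any threshold $t>0$,
\[
\P\Big(\max_{i\ne j}|\langle\bw_i,\bw_j\rangle|>t\Big)\le N^2\, C\, e^{-c d t^2}.
\]
Taking $t=t_d:=A\sqrt{(\log d)/d}$ and using the hypothesis $N=o_d(d^{\ell+1})$, so that $N^2\le d^{2(\ell+1)}$ for all large $d$, the right-hand side is bounded by $C\, d^{2(\ell+1)-cA^2}$, which tends to $0$ as soon as $A$ is chosen large enough depending only on $\ell$ and $c$. This gives $\max_{i\ne j}|\langle\bw_i,\bw_j\rangle|=O_{d,\P}(\sqrt{(\log d)/d})$, and the stated bound $O_{d,\P}((\log d)^{k/2}d^{-k/2})$ follows by raising this to the $k$-th power, i.e.\ it is the corresponding bound on $\max_{i\ne j}|\langle\bw_i,\bw_j\rangle|^k$.

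The argument is essentially routine, so there is no serious obstacle; the only steps requiring a little care are (i) the sub-Gaussian estimate for the single-coordinate marginal, which I would carry out exactly as in the proof of Lemma~\ref{lem:tail_tau} by bounding the density (or via the Gaussian coupling), and (ii) checking that the union bound actually closes — that the exponent $cA^2$ coming from the Gaussian tail dominates the polynomial factor $N^2\le d^{2(\ell+1)}$ — which is precisely the reason the threshold $t_d$ must carry the extra $\sqrt{\log d}$ factor.
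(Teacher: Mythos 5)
Your argument is correct and is the standard one (a union bound over the $\le N^2$ pairs together with a one-dimensional sub-Gaussian tail for a single inner product); the paper does not give its own proof of this lemma and refers to \cite{ghorbani2019linearized}, so there is no alternative route to compare against. Two small points. First, be careful with the normalization constant: the $C_d$ in Eq.~\eqref{eq:taud-def} normalizes the density of $\langle\bw,\obx\rangle$ on $[-\sqrt d,\sqrt d]$ with $\obx\sim\Unif(\S^{d-1}(\sqrt d))$; rescaling to the unit sphere picks up a Jacobian, so the density of $\langle\bw_i,\bw_j\rangle$ on $[-1,1]$ is $\sqrt d\,C_d(1-x^2)^{(d-3)/2}$, whose normalization is \emph{not} bounded in $d$. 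Your tail bound $\P(|\langle\bw_i,\bw_j\rangle|>t)\le Ce^{-cdt^2}$ is nevertheless correct, because the Gaussian tail integral $\int_t^{\infty}e^{-(d-3)x^2/2}\,\de x = O(d^{-1/2})\,e^{-(d-3)t^2/2}$ supplies a compensating $d^{-1/2}$; alternatively the route you mention via the representation $\bw_i=\bG_i/\|\bG_i\|_2$ and Lemma~\ref{lem:tail_tau} avoids this bookkeeping entirely. Second, you correctly identify that the lemma statement as printed (with $\ell$ in the hypothesis and $k$ in the conclusion) is only a literal bound on $\max|\langle\bw_i,\bw_j\rangle|$ when $k\le 1$; reading it as a bound on $\max|\langle\bw_i,\bw_j\rangle|^k$, as you do, is the interpretation consistent with its use in Eq.~\eqref{eq:sup_gegenbauer_polynomials} (where $k=1$ is applied on the $\sqrt d$-radius sphere), and the $k$-th-power bound does follow from the $k=1$ case by monotonicity.
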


\begin{proposition}[Bound on the Gram matrix]\label{prop:Delta_bound}
Let $N \le d^{k}/e^{A_d\sqrt{\log d}}$ for a fixed integer $k$ and any $A_d\to\infty$. Let $(\btheta_i)_{i \in [N]} \sim \Unif(\S^{d-1}(\sqrt d))$
independently, and $Q_k^{(d)}$ be the $k$'th Gegenbauer polynomial
with domain $[-d, d]$. Consider the random matrix $\bW =
(\bW_{ij})_{i, j \in [N]} \in \R^{N \times N}$, with $\bW_{ij} =
Q_k^{(d)}(\< \btheta_i, \btheta_j\>)$. Then we have 
\[
\lim_{d, N \to \infty} \E[\| \bW - \id_d \|_{\op}] = 0. 
\]
\end{proposition}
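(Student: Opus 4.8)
This statement is exactly the single‑sphere case of Proposition~\ref{prop:Delta_bound_aniso}: taking $Q=1$, $\bd=(d)$ (so $\eta_1=1$), $\bk=(k)$, and $\gamma=k$, one has $Q^{\bd}_{\bk}(\langle\obx^{(1)}_i,\obx^{(1)}_j\rangle)=Q^{(d)}_k(\langle\btheta_i,\btheta_j\rangle)$, the hypothesis $N\le d^{\gamma}/e^{A_d\sqrt{\log d}}$ is the required one, and the conclusion $\E[\|\bW-\id_N\|_{\op}]\to 0$ follows at once. If one prefers a self‑contained argument (the one in \cite{ghorbani2019linearized}), the plan is the trace (moment) method, sketched below.

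First I would reduce the operator norm to a trace moment: writing $\bDelta:=\bW-\id_N$, for any positive integer $p$,
\[
\E[\|\bDelta\|_{\op}]\le \E[\Trace(\bDelta^{2p})^{1/(2p)}]\le \E[\Trace(\bDelta^{2p})]^{1/(2p)},
\]
so it suffices to make the right‑hand side tend to $0$ for a well‑chosen $p=p(d)\to\infty$. Expanding, $\Trace(\bDelta^{2p})=\sum_{\bi\in[N]^{2p}}M_{\bi}$ with $M_{\bi}:=\E[\Delta_{i_1i_2}\Delta_{i_2i_3}\cdots\Delta_{i_{2p}i_1}]$. The algebraic heart is the reproducing identity for Gegenbauer polynomials, Eq.~\eqref{eq:ProductGegenbauer}: for distinct $i_1,i_2,i_3$ one has $\E_{\btheta_{i_2}}[\Delta_{i_1i_2}\Delta_{i_2i_3}]=B(d,k)^{-1}\Delta_{i_1i_3}$, together with $\E_{\btheta}[\Delta_{i_1\btheta}]=0$ and $\Delta_{ii}=0$; the first vanishing forces $M_{\bi}=0$ whenever the induced multigraph $G_{\bi}$ has an odd‑degree vertex, in particular whenever $\bi$ has a self‑edge.

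Next I would organize the surviving $\bi$ by the isomorphism class of $G_{\bi}$ and run the skeletonization process of Lemmas~\ref{lem:equivalent_class} and \ref{lem:skeleton}: iteratively contracting degree‑$2$ vertices yields $M_{\bi}=M_{\sk(\bi)}/B(d,k)^{r(\bi)}$, where $r(\bi)$ is the number of contracted vertices and $\sk(\bi)$ is either a single vertex (type~1) or has all vertices of degree $\ge 4$ (type~2). For type~1 sequences every edge is at most double, so $v(\bi)\le p+1$ and $M_{\bi}=B(d,k)^{-(v(\bi)-1)}$; counting equivalence classes ($|\cQ(p)|\le(2p)^{2p}$) and class sizes ($|\cC(\bi)|\le p^p N^{v(\bi)}$) and using $B(d,k)\ge C_0 d^{k}$ gives $\sum_{\bi\in\cT_1(p)}M_{\bi}\le (Cp)^{3p}N^{p+1}d^{-kp}$. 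For type~2 sequences, Lemma~\ref{lem:M_estimate} applied to the skeleton, combined with Lemma~\ref{lem:gegenbauer_coefficients}, gives $|M_{\bi}|\le (Cp)^{k\,v(\bi)}d^{-k\,v(\bi)}$; summing over the (at most $v^{4p}$) classes with $v$ vertices and optimizing the elementary function $v\mapsto v^{4p}\eps^{v}$ over $2\le v\le 2p$ — maximized at $v=2$ provided $2p\le-\log\!\bigl(C N p^{\,k+1}/d^{k}\bigr)$ — yields $\sum_{\bi\in\cT_2(p)}M_{\bi}\le C^{p}(N/d^{k})^{2}$.

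Finally, combining the two bounds and taking $2p$‑th roots,
\[
\E[\|\bDelta\|_{\op}]\le C\Bigl\{p^{3/2}N^{1/(2p)}\sqrt{N/d^{k}}+(N/d^{k})^{1/p}\Bigr\},
\]
and with $N=d^{k}e^{-2A\sqrt{\log d}}$, $p=(k/A)\sqrt{\log d}$ this is $\le C\bigl(e^{-(A/4)\sqrt{\log d}}+e^{-2A^{2}/k}\bigr)\to 0$ as $A\to\infty$; one checks this $p$ obeys $p=o_d(d)$ and the displayed condition on $p$. The main obstacle is the type‑2 bookkeeping — obtaining a bound on $|M_{\bi}|$ uniform over skeletons via Lemma~\ref{lem:M_estimate} and then summing over equivalence classes with exactly the right power of $p$ — since this is precisely where the subexponential cushion $e^{A_d\sqrt{\log d}}$ in the hypothesis on $N$ is consumed; the reduction and type‑1 steps are routine once the Gegenbauer reproducing identity is in hand.
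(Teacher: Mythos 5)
Your reduction to the $Q=1$ case of Proposition~\ref{prop:Delta_bound_aniso} (with $\eta_1=1$, $d_1=d$, $\bk=(k)$, $\gamma=k$) is exactly right, and your self-contained sketch reproduces step by step the trace-moment argument the paper uses to prove that anisotropic generalization: the moment bound $\E[\|\bDelta\|_{\op}]\le\E[\Trace(\bDelta^{2p})]^{1/(2p)}$, the graph encoding via the Gegenbauer reproducing identity, skeletonization, the type~1/type~2 split with the bounds $(Cp)^{3p}N^{p+1}d^{-kp}$ and $C^p(N/d^k)^2$, and the choice $p\asymp (k/A)\sqrt{\log d}$. This is essentially the same approach as the paper.
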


\end{document}